\def\eqref#1{equation~\ref{#1}}
\def\ceil#1{\lceil #1 \rceil}
\def\floor#1{\lfloor #1 \rfloor}
\def\1{\bm{1}}
\DeclareMathAlphabet{\mathsfit}{\encodingdefault}{\sfdefault}{m}{sl}
\SetMathAlphabet{\mathsfit}{bold}{\encodingdefault}{\sfdefault}{bx}{n}
\newcommand{\E}{\mathbb{E}}
\newcommand{\R}{\mathbb{R}}
\def\delequal{\mathrel{\ensurestackMath{\stackon[1pt]{=}{\scriptscriptstyle\Delta}}}}
\def\distequal{\mathrel{\ensurestackMath{\stackon[1pt]{=}{\scriptstyle\mathcal{D}}}}}
\newtheorem{theorem}{Theorem}
\newtheorem{corollary}[theorem]{Corollary}
\newtheorem{definition}{Definition}
\newtheorem{assumption}{Assumption}
\newtheorem{lemma}[theorem]{Lemma}
\newtheorem{proposition}[theorem]{Proposition}
\renewcommand{\P}{\mathbb{P}}
\newcommand{\RV}{\mathcal{RV}}
\def\delequal{\mathrel{\ensurestackMath{\stackon[1pt]{=}{\scriptscriptstyle\Delta}}}}
\def\distequal{\mathrel{\ensurestackMath{\stackon[1pt]{=}{\scriptstyle\mathcal{D}}}}}
\def\delequal{\mathrel{\ensurestackMath{\stackon[1pt]{=}{\scriptscriptstyle\Delta}}}}
\def\distequal{\mathrel{\ensurestackMath{\stackon[1pt]{=}{\scriptstyle d}}}}
\newcommand{\norm}[1]{\left\lVert#1\right\rVert}
\title{Eliminating Sharp Minima from SGD with Truncated Heavy-tailed Noise}
\author{ Xingyu Wang$^{1}$\ \ \ \ \ \ \ Sewoong Oh$^{2}$\ \ \ \ \ \ \ Chang-Han Rhee$^{1}$\\
${ }^{1}$Northwestern University,\ \ \ \ \ \ \ ${ }^{2}$University of Washington\\
\texttt{xingyuwang2017@u.northwestern.edu}
}
\begin{document}

\maketitle

\begin{abstract}
The empirical success of deep learning is often attributed to SGD’s mysterious ability to avoid sharp local minima in the loss landscape, as sharp minima are  known to lead to poor generalization. 
Recently, evidence of {\em heavy-tailed} gradient noise was reported in many deep learning tasks, and it was shown in \citep{csimcsekli2019heavy, simsekli2019tail} that SGD can {\em escape} sharp local minima under the presence of such heavy-tailed gradient noise, providing a partial solution to the mystery. 
In this work, we analyze a popular variant of SGD where gradients are truncated above a fixed threshold. We show that it achieves a stronger notion of avoiding sharp minima: it can effectively {\em eliminate} sharp local minima entirely from its training trajectory. 
We characterize the dynamics of truncated SGD driven by heavy-tailed noises. First, we show that the truncation threshold and width of the attraction field dictate the order of the first exit time from the associated local minimum. 
Moreover, when the objective function satisfies appropriate structural conditions, we prove that as the learning rate decreases the dynamics of the heavy-tailed truncated SGD closely resemble those of a continuous-time Markov chain that never visits any sharp minima. 
Real data experiments on deep learning confirm our theoretical prediction that heavy-tailed SGD with gradient clipping finds a "flatter" local minima and achieves better generalization.  
\end{abstract}

\section{Introduction}
\begin{figure}[ht]
\vskip 0.2in

\begin{center}
\begin{tabular}{ c c c}
\includegraphics[width=0.28\textwidth]{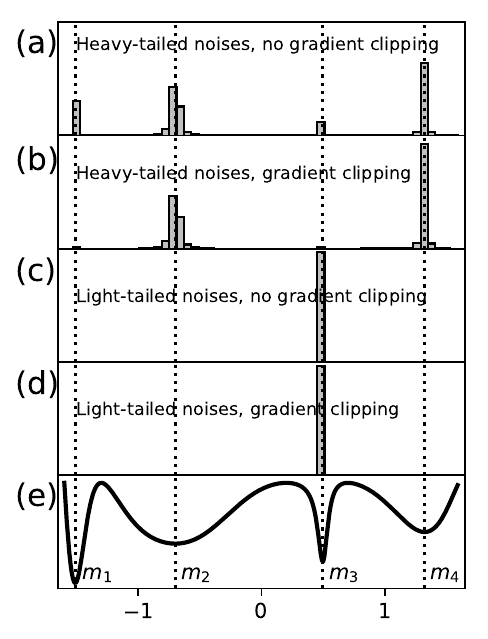}  & 
\includegraphics[width=0.335\textwidth]{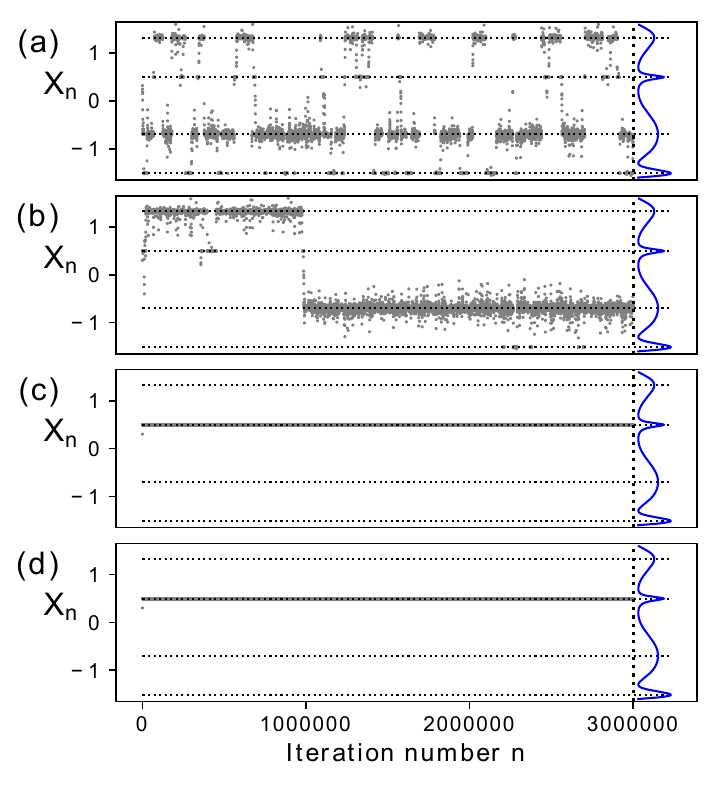} &
\includegraphics[width=0.30\textwidth]{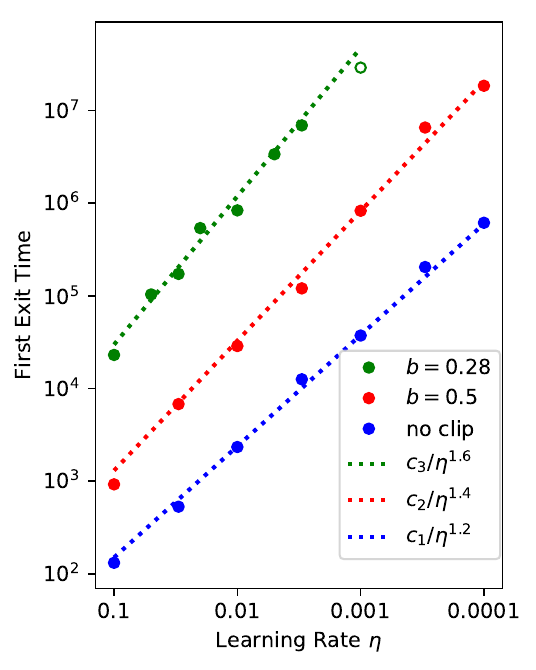}  \\
\end{tabular}
\caption{ \textbf{(Left)} Histograms of the locations visited by SGD. With truncated heavy-tailed noises, SGD hardly ever visits the two sharp minima $m_1$ and $m_3$. 
The objective function $f$ is plotted at the bottom, and dashed lines are added as references for the locations of local minima. \textbf{(Middle)} Typical trajectories of SGD in different cases: (a) Heavy-tailed noises, no gradient clipping; (b) Heavy-tailed noises, gradient clipping at $b = 0.5$; (c) Light-tailed noises, no gradient clipping; (d) Light-tailed noises, gradient clipping at $b = 0.5$. 
The objective function $f$ is plotted at the right of each figure, and dashed lines are added as references for locations of the local minima. \textbf{(Right)} First Exit Time from $\Omega_2 = (-1.3,0.2)$. Each dot represents the average of 20 samples of first exit time. Each dahsed line shows a polynomial function $c_i/\eta^\beta$ where $\beta$ is predicted by Theorem \ref{theorem first exit time} and $c_i$ is chosen to fit the dots. The non-solid green dot indicates that for some of the 20 samples of the termination threshold $5\times 10^7$ was reached, and hence, it is an underestimation. Results in \textbf{(Left)} and \textbf{(Middle)} are obtained under learning rate $\eta = 0.001$ and initial condition $X_0 = 0.3$.}
\label{fig histograms}
\end{center}
\vskip -0.2in
\end{figure}

Stochastic gradient descent (SGD) and its variants have seen unprecedented empirical successes in training deep neural networks. 
The training of deep neural networks is typically posed as a non-convex optimization problem, and even without explicit regularization the solutions obtained by SGD often perform surprisingly well on test data. 
Such an unexpected generalization performance of SGD in deep neural networks are often attributed to SGD’s ability to avoid \emph{sharp local minima}\footnote{We use the terminology sharpness in a broad sense; we refer to Appendix \ref{sec:appendix-sharpness} for a detailed discussion.}
in the loss landscape, which tends to lead to poor generalization \citep{hochreiter1997flat,keskar2016large,li2018visualizing, jiang2019fantastic}; see Appendix~\ref{sec:appendix-sharpness-generalization} for more details.
Despite significant efforts to explain such phenomena theoretically, understanding how SGD manages to avoid sharp local minima and end up with flat local minima within a realistic training time still remains as a central mystery of deep learning.
\footnote{To see a detailed discussion on existing results on selection of local minima from the stability perspective and the novelty of our analysis, see Appendix \ref{sec:appendix-stability-literature}.}
Recently, the heavy-tailed dynamics of SGD received significant attention, and it was suggested that 
the heavy tails in the stochastic gradients may be a key ingredient that facilitates SGD's escape from sharp local minima: 
for example, \cite{csimcsekli2019heavy,simsekli2019tail} report the empirical evidence of heavy-tails in stochastic gradient noise in popular deep learning architectures (see also \citep{hodgkinson2020multiplicative,srinivasan2021efficient,garg2021on}) and show that SGD can escape sharp local minima in polynomial time under the presence of the heavy-tailed gradient noise. 
More specifically, they view heavy-tailed SGDs as discrete approximations of L\'evy driven Langevin equations and argue that the amount of time SGD trajectory spends in each local minimum is proportional to the width of the associated minimum according to the metastability theory \citep{pavlyukevich2007cooling,imkeller2010first,imkeller2010hierarchy} for such heavy-tailed processes. 

In this paper, we study the global dynamics and long-run behavior of heavy-tailed SGD and its practical variant in depth. 
While in full generality the structure of gradient noises in SGD is state-dependent, in this work we focus on the role of noise magnitude and analyze the setting where each SGD update is perturbed by iid heavy-tailed noise.
In particular, we consider an adaptive version of SGD, where the stochastic gradient is truncated above a fixed threshold. Such truncation scheme is often called \emph{gradient clipping} and employed as default in various contexts \citep{Engstrom2020Implementation, merity2018regularizing, graves2013generating, pascanu2013difficulty,Zhang2020Why,NEURIPS2020_abd1c782}. 
We uncover a rich mathematical structure in the global dynamics of SGD under this scheme and prove that the asymptotic behavior of such SGD is fundamentally different from that of the pure form of SGD: in particular, 
under a suitable structural condition on the geometry of the loss landscape, gradient clipping \emph{completely eliminates sharp minima from the trajectory of SGDs}.
This  provides a critical  insight  into  how heavy-tailed dynamics of SGD can be utilized to find a local minimum that generalizes better.

Figure~\ref{fig histograms} (Left, Middle) clearly illustrates these points with the histograms of the sample trajectories of SGDs. 
Note first that SGDs with light-tailed gradient noise---(c) and (d) of Figure~\ref{fig histograms} (Left, Middle)---never manages to escape a (sharp) minimum regardless of gradient clipping.
In contrast, SGDs with heavy-tailed gradient noise---(a) and (b) of Figure~\ref{fig histograms} (Left, Middle)---easily escapes from local minima.
Moreover, there is a clear difference between SGDs with gradient clipping and without gradient clipping.
In (a) of Figure~\ref{fig histograms}~(Left), SGD without gradient clipping spends a significant amount of time at each of all four local minima ($\{m_1, m_2, m_3, m_4\}$), although it spends more time around the wide ones ($\{m_2, m_4\}$) than the sharp ones ($\{m_1, m_3\}$). 
On the other hand, in (b) of Figure~\ref{fig histograms}~(Left), SGD with gradient clipping not only escapes from local minima but also avoids sharp minima ($\{m_1, m_3\}$) almost completely. 
This means that after we run SGD for long enough (more precisely, the required run length $t/\eta^\beta$ is of polynomial order; see Theorem \ref{thm main paper eliminiate sharp minima from trajectory}), it is almost guaranteed that it won't be at a sharp minimum, effectively eliminating sharp minima from its training trajectories.

We also propose a novel computational strategy that takes advantage of our newly discovered global dynamics of the heavy-tailed SGD.
While the evidence of heavy tails were reported in many deep learning tasks \citep{simsekli2019tail,csimcsekli2019heavy, garg2021on,gurbuzbalaban2020heavy,hodgkinson2020multiplicative,nguyen2019non,mahoney2019traditional,srinivasan2021efficient,Zhang2020Why}, there seem to be plenty of deep learning contexts where the stochastic gradient noises are light-tailed \citep{panigrahi2019non} as well.
\footnote{For a detailed comparison to existing works on heavy-tailed phenomena is SGD, see Appendix \ref{sec:appendix heavy tailed phenomena in SGD}.}
Guided by our new theory, we propose an algorithm that injects heavy-tails to SGD by inflating the tail distribution of the gradient noise and facilitating the discovery of a local minimum that generalizes better. 
Our experiments with image classification tasks, reported in Tables~\ref{table ablation study} and \ref{table data augmentation training}, illustrate that the tail-inflation strategy we propose here can indeed improve the generalization performance of the SGD as predicted by our theory. 


The rest of the paper is organized as follows. Section~\ref{sec:theory} formulates the problem setting and characterizes the global dynamics of the SGD driven by heavy-tailed noises. Section~\ref{sec:numerical-experiments} presents numerical experiments that confirm our theory.
Section~\ref{sec:deeplearning} proposes a new algorithm that artificially injects heavy tailed gradient noise in actual deep learning tasks and demonstrate the improved performance. 

{\bf Technical Contributions:} 1) We rigorously characterize the global behavior of the heavy-tailed SGD with gradient clipping. 
We first focus on the case where the loss function is in $\R^1$ with some simplifying assumptions on its geometry. Even with such assumptions, our theorem involves substantial technical challenges since the traditional tools for analyzing SGD fail in our context due to the adaptive nature of its dynamics and non-Gaussian distributional assumptions. 
For example, while the unclipped pure SGD can be analyzed by partitioning its trajectory at arrival times of large noises (as in \cite{pavlyukevich2005metastable} and \cite{imkeller2010first}), such an approach falls short in our context.
Instead,
we developed a set of delicate arguments for dealing with SGD's (near) regeneration structure and the return times to the local minima, as well as controlling the probability of atypical scenarios that would not arise in the unclipped case.
Moreover, as evidenced by our $\R^d$ results in Appendix~\ref{sec: appendix Rd result},
the approach developed here is critical in extending the analysis to general loss landscapes.


2) We propose a novel computational strategy for improving the generalization performance of SGD by carefully injecting heavy-tailed noise. We test the proposed algorithm with deep learning tasks and demonstrate its superiority with an ablation study. This also suggests that the key phenomenon we characterize in our theory--- elimination of sharp local minima---manifests in real-world tasks.

\section{Theoretical results}\label{sec:theory}

This section characterizes the global dynamics of SGD with gradient clipping when applied to a non-convex objective function $f$. 
In Section~\ref{subsec:first-exit-time}~and~\ref{subset:scaling-limit}, we make the following assumptions for the sake of the simplicity of analysis. 
However, as our multidimensional result in Section~\ref{subsec:multidimensional-fet} and the experiments in Section~\ref{sec:numerical-experiments}~and~\ref{sec:deeplearning} suggest, the gist of the phenomena we analyze---elimination of sharp local minima---persists in general contexts where the domain of $f$ is multi-dimensional, and the stationary points are not necessarily strict local optima separated from one another. 
\begin{assumption}\label{assumption: function f in R 1}
Let $f:\R \to \R$ be a $\mathcal{C}^2$ function. There exist a positive real $L > 0$, a positive integer $n_{\text{min}}$ and an ordered sequence of real numbers  $m_1, s_1 , m_2, s_2, \cdots, s_{n_{\text{min}} - 1}, m_{n_{\text{min}}} $ such that (1) $-L< m_1 < s_1 < m_2 <s_2< \cdots < s_{n_{\text{min}} - 1}< m_{n_{\text{min}}} < L$; (2) $f^\prime(x) = 0$ iff $x \in \{m_1,s_1,\cdots,s_{n_{\text{min}} - 1} , m_{n_{\text{min}}}\}$; (3) For any $x \in \{m_1,m_2,\cdots,m_{n_{\text{min}}} \}$, $f^{\prime\prime}(x) > 0$; (4) For any $x \in \{s_1,s_2,\cdots,s_{n_{\text{min}} - 1}\}$, $f^{\prime\prime}(x) < 0$.
\end{assumption}

As illustrated in Figure \ref{fig typical transition graph} (Left), the assumption above requires that $f$ has finitely many local minima (to be specific, the count is $n_{\text{min}}$), all of which contained in some compact domain $[-L,L]$. Moreover, the points $s_1,\cdots,s_{ n_\text{min} - 1 }$ naturally partition the entire real line into different regions $\Omega_i = (s_{i-1},s_i)$ (here we adopt the convention that $s_0 = -\infty, s_{n_\text{min}} = +\infty$). We call each region $\Omega_i$ the \textbf{attraction field} of the local minimum $m_i$, as the gradient flow in $\Omega_i$ always points to $m_i$.
%
%
%

Throughout the optimization procedure, given any location $x \in \mathbb{R}$ we assume that we  have access to the noisy estimator $ f^\prime(x) - Z_n $ of the true gradient $f'(x)$, and $f'(x)$ itself is difficult to evaluate. 
Specifically, in this work we are interested in the case where the iid sequence of noises $(Z_n)_{n \geq 1}$ are heavy-tailed. Typically, the heavy-tailed phenomena are captured by the concept of regular variation:
for a measurable function $\phi:\mathbb{R}_+ \mapsto \mathbb{R}_+$, we say that $\phi$ is regularly varying at $+\infty$ with index $\beta$ (denoted as $\phi \in \RV_\beta$) if $\lim_{x \rightarrow \infty}\phi(tx)/\phi(x) = t^\beta$ for all $t>0$. 
For details on the definition and properties of regularly varying functions, see, for example, chapter 2 of \cite{resnick2007heavy}. 
In this paper, we work with the following distributional assumption on the gradient noise. Let
\begin{align*}
    H_+(x) & \triangleq \mathbb{P}(Z_1 > x),\ \ \ H_-(x) \triangleq \mathbb{P}(Z_1 < -x), \ \ \ H(x)  \triangleq H_+(x) + H_-(x) = \mathbb{P}(|Z_1| > x).
\end{align*}

\begin{assumption}\label{assumption gradient noise heavy-tailed}
$\mathbb{E}Z_1 = 0$. Furthermore, there exists some $\alpha \in (1,\infty)$ such that function $H(x)$ is regularly varying (at $+\infty$) with index $-\alpha$. Besides, regarding the positive and negative tail for distribution of the noises, we have
    \begin{align*}
        \lim_{x \rightarrow \infty}\frac{ H_+(x) }{H(x)} = p_+,\ \lim_{x \rightarrow \infty}\frac{ H_-(x) }{H(x)} = p_- = 1 - p_+
    \end{align*}
    where $p_+$ and $p_-$ are constants in interval $(0,1)$. 
\end{assumption}
Roughly speaking, Assumption \ref{assumption gradient noise heavy-tailed} means that the shape of the tail for the distribution of noises $Z_n$ resembles a polynomial function $x^{-\alpha}$, which is much heavier than the exponential tail of Gaussian distributions. Therefore, 
large values of $Z_n$ are much more likely to be observed under Assumption \ref{assumption gradient noise heavy-tailed} compared to the typical Gaussian assumption.
The index $\alpha$ of regular variation encodes the heaviness of the tail---the smaller the heavier---and we are assuming that the left and right tails share the same index $\alpha$. 
The purpose of this simplifying assumption is clarity of presentation, 
but our $\R^d$ results in Appendix~\ref{sec: appendix Rd result} relax such a condition and allow different regular variation indices in different directions.

Our work concerns a popular variant of SGD where the stochastic gradient is truncated. Specifically, when updating the SGD iterates with a learning rate $\eta>0$, rather than using the original noisy gradient descent step $\eta(f^\prime(X_n) - Z_n)$, we will truncate it at a threshold $b>0$ and use $\varphi_b\big(\eta(f^\prime(X_n) - Z_n)\big)$ instead. Here the truncation operator $\varphi_\cdot(\cdot)$ is defined as 
\begin{align}
    \varphi_c(w) \triangleq w \cdot \min \{1, c/|w|\}\ \ \ \forall w \in \mathbb{R}, c>0. 
\end{align}
Besides truncating the stochastic gradient, we also project the SGD into $[-L, L]$ at each iteration; recall that $L$ is the constant in Assumption \ref{assumption: function f in R 1}.
That is, the main object of our study is the stochastic process $\{X^\eta_j\}_{j \geq 0}$ driven by the following recursion
\begin{align}\label{def SGD step 2}
     X^{\eta}_{j} & \delequal{} \varphi_L\Big( X^{\eta}_{j-1}- \varphi_b\big(\eta (f^\prime(X^{\eta}_{j-1}) - Z_j)\big) \Big).
\end{align}
The projection $\varphi_L$ and truncation $\varphi_b$ here are common practices in many learning tasks for the purpose of ensuring that the SGD does not explode and drift to infinity. Besides, 
the projection also allows us to drop the sophisticated assumptions on the tail behaviors of $f$ that are commonly seen in previous works (see, for instance, the dissipativity conditions in \cite{nguyen2019non}).
For technical reasons, we make the following assumption about the truncation threshold $b>0$. 
Note that this assumption is a very mild one, as it is obviously satisfied by (Lebesgue) almost every $b>0$.
\begin{assumption}\label{assumption clipping threshold b}
For each $i = 1,2,\cdots,n_{\text{min}}$, $ \min\{ |s_i - m_i|, |s_{i-1} - m_i| \}/b $ is \emph{not} an integer.
\end{assumption}


\subsection{First exit times}\label{subsec:first-exit-time}

Denote the SGD's first exit time from the attraction field $\Omega_i$ with
$\sigma_i(\eta) \delequal{} \min\{ n \geq 0: X^\eta_n \notin \Omega_i \}.$
In this section, we prove that $\sigma_i(\eta,x)$ converges to an exponential distribution when scaled properly. 
To characterize such a scaling, we first introduce a few concepts. 
For each attraction field $\Omega_i$, define (note that $\ceil{x} = \min\{ n \in \mathbb{Z}: n \geq x \}$, $\floor{x} = \max\{n \in \mathbb{Z}: n \leq x\}$ )
\begin{align}
     r_i \delequal{}\min\{ |m_i - s_{i-1}|, |s_i - m_i| \}, \ \ \ \ l^*_i & \delequal{} \ceil{ r_i/b }. \label{def main paper minimum jump l star i}
\end{align}
Note that $l^*_i$'s in fact depend on the the value of gradient clipping threshold $b$ even though this dependency is not highlighted by the notation. 
Here $r_i$ can be interpreted as the radius or the effective \emph{width} of the attraction field, and $l^*_i$ is the \emph{minimum number of jumps} required to escape $\Omega_i$ when starting from $m_i$. 
Indeed, the gradient clipping threshold $b$ dictates that no single SGD step can travel more than $b$, and to exit $\Omega_i$ when starting from $m_i$ 
we can see that at least $\ceil{r_i/b}$ steps are required. 
We can interpret $l^*_i$ as the minimum effort required to exit $\Omega_i$.
In this sense, 
$l^*_i$ is an indicator of the width of the attraction field $\Omega_i$. 
Theorem~\ref{theorem first exit time} states that $l^*_i$ dictates the order of magnitude of the first exit time as well as where the iterates $X^\eta_n$ land on at the first exit time. 
For each $\Omega_i$, define a scaling function $\lambda_i(\eta) \delequal{} H(1/\eta)\left( (1/\eta) H(1/\eta) \right)^{l^*_i - 1}.$
To stress the initial condition, we write $\mathbb{P}_x$ for the probability law when conditioning on $X^\eta_0 = x$, or simply write $X_n^\eta(x)$.

\begin{theorem} \label{theorem first exit time}
Under Assumptions \ref{assumption: function f in R 1}-\ref{assumption clipping threshold b},
there exist constants $q_i > 0\ \forall i\in \{1,2,\cdots,n_\text{min}\}$ and $q_{i,j} \geq 0\ \forall j \in \{1,2,\cdots,n_\text{min}\} \setminus\{ i\}$ such that
\begin{enumerate}[label = \arabic*)]
    \item[(i)] Suppose that $x\in \Omega_k$ for some $k\in \{1,2,\cdots,n_\text{min}\}$. Under $\mathbb{P}_x$, the scaled first exit time $q_k\lambda_k(\eta)\sigma_k(\eta)$ converges in distribution to $Exp(1)$ as $\eta \downarrow 0$.
    \item[(ii)] For $k,l \in \{ 1,2,\cdots,n_\text{min}\}$ such that $k \neq l$, we have $ \lim_{\eta \to 0} \mathbb{P}_x( X^\eta_{ \sigma_k(\eta) } \in \Omega_l ) = q_{k,l}/q_{k}.$
\end{enumerate}
\end{theorem}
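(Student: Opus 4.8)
# Proof Proposal for Theorem~\ref{theorem first exit time}

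\textbf{Overall strategy.} The plan is to analyze the first exit time via a careful study of the ``excursions'' of the truncated SGD, decomposing each excursion into the typical small-noise behavior (which drives $X^\eta_n$ toward the local minimum $m_i$ along the gradient flow) and the rare large-noise events (single jumps of size $\Theta(b)$ caused by an innovation $Z_n$ of order $1/\eta$). Because a single clipped step can move the iterate by at most $b$, escaping $\Omega_i$ from a neighborhood of $m_i$ requires at least $l^*_i$ such large jumps to line up within a short time window; the probability that this happens within any given ``attempt'' is of order $\lambda_i(\eta) = H(1/\eta)\big(H(1/\eta)/\eta\big)^{l^*_i-1}$, which explains the scaling. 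The exponential limit then follows from a standard regeneration/renewal argument: between successive exit attempts the process returns to a small neighborhood of $m_i$, these return cycles are asymptotically i.i.d., and each cycle independently succeeds in exiting with a probability that is asymptotically $q_i \lambda_i(\eta) \times (\text{cycle length})$, so the geometric number of cycles until success, suitably scaled, converges to an exponential.

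\textbf{Key steps, in order.} First I would establish \emph{deterministic contraction estimates}: starting from any point in $\Omega_i$ and applying the noiseless clipped-and-projected map $x \mapsto \varphi_L(x - \varphi_b(\eta f'(x)))$, the iterate reaches a prescribed small neighborhood $(m_i - \delta, m_i + \delta)$ of $m_i$ in $O(1/\eta)$ steps, using Assumption~\ref{assumption: function f in R 1}(3) (strict convexity near $m_i$) and the sign of $f'$ on $\Omega_i$. Second, I would derive \emph{one-step and bounded-horizon tail estimates} for the noisy dynamics: on a time window of length $T/\eta$, with probability $1 - o(\lambda_i(\eta))$ either fewer than $l^*_i$ ``big jumps'' (those triggered by $|Z_n| \gtrsim c/\eta$) occur—in which case the process stays in $\Omega_i$ and is driven back near $m_i$—or exactly $l^*_i$ big jumps occur in a pattern that can push the iterate out of $\Omega_i$; the probability of $\geq l^*_i + 1$ big jumps or of anomalous medium-sized deviations is negligible on this scale. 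This is where Assumption~\ref{assumption gradient noise heavy-tailed} (regular variation, so $H(1/\eta) \in \RV_\alpha$ in $\eta$) and Assumption~\ref{assumption clipping threshold b} (so that the boundary $s_i$ is not hit exactly by an integer number of maximal steps, ensuring the ``effort'' $l^*_i$ is sharp and the exit is non-degenerate) enter. Third, I would set up the \emph{regeneration structure}: define return times to $(m_i-\delta, m_i+\delta)$, show the process restricted between returns forms an asymptotically i.i.d. sequence, compute the asymptotic per-cycle exit probability as $\lambda_i(\eta) \cdot (\text{const})$ via the tail estimates of step two, and identify the constant $q_i$ as (essentially) the expected number of successful length-weighted exit configurations per cycle. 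Fourth, for part (ii), I would refine the big-jump analysis to track \emph{which} attraction field $\Omega_l$ the iterate lands in at the exit instant: conditional on exiting, the landing location is determined by the combinatorics of which $l^*_i$ jump sizes occurred and the deterministic drift between them, and the regular-variation tail gives, for each feasible target $\Omega_l$, a limiting conditional probability $q_{i,l}/q_i$ where $q_{i,l}$ is the corresponding weighted configuration count (and $q_{i,l}=0$ if $\Omega_l$ is unreachable with exactly $l^*_i$ clipped jumps).

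\textbf{Main obstacle.} The hardest part will be the \emph{sharp large-deviations bookkeeping} in step two: I must show that, on the $T/\eta$ time scale, the contribution of configurations with exactly $l^*_i$ big jumps dominates, that those with more big jumps or with accumulated medium deviations are asymptotically negligible at order $\lambda_i(\eta)$, and—crucially—that the deterministic drift occurring \emph{between} consecutive big jumps does not prevent the iterate from accumulating enough net displacement to cross $s_{i-1}$ or $s_i$. Because the dynamics are adaptive (the clipping operator $\varphi_b$ and projection $\varphi_L$ make the increments state-dependent and non-Gaussian), standard SGD or Langevin-approximation tools do not apply directly, and one needs a bespoke argument controlling how a burst of $l^*_i$ near-maximal steps interleaved with gradient drift can or cannot reach the boundary; Assumption~\ref{assumption clipping threshold b} is exactly what rules out the knife-edge case where the boundary is reached only in the limit. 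Making the error terms uniform enough to pass to the exponential limit (rather than merely identifying the exponent) is the technical core of the proof.
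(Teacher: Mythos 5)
Your proposal takes essentially the same route as the paper's own proof: decompose each excursion into a return phase driven by the gradient flow (the paper's Lemma~\ref{lemma first exit key lemma 1}, built from the ODE/GD/SGD comparison lemmas) and an escape-attempt phase where exactly $l^*_i$ large jumps aligned closely in time push the iterate across $s_{i-1}$ or $s_i$ (the paper's Lemma~\ref{lemma first exit key lemma 2}, built from the ``overflow'' analysis culminating in Proposition~\ref{proposition first exit time Gradient Clipping}); then stitch the two phases by the strong Markov property and recover the exponential limit from the geometric number of i.i.d.\ attempts. You also correctly identify the role of Assumption~\ref{assumption clipping threshold b} in ruling out the knife-edge boundary case, the identification of $q_{i,j}$ via the measure of jump/time configurations that land in $\Omega_j$, and the fact that controlling drift interleaved with clipped near-maximal jumps (rather than invoking off-the-shelf Langevin or Gaussian-SGD tools) is the technical core.
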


The proof and discussion are deferred to Appendix \ref{section proof thm 1}. 
The constants $q_i,q_{i,j}$ are explicitly identified in terms of the gradient flows perturbed by Pareto jumps in Section C of Appendix. 
We note here that Theorem \ref{theorem first exit time} implies (i) for $X^\eta_n$ to escape the current attraction field, say $\Omega_i$, it takes $O\big(1/\lambda_i(\eta)\big)$ time, and (ii) the destination is most likely to be reachable within $l^*_i$ jumps from $m_i$.

\subsection{Elimination of Small Attraction Fields} \label{subset:scaling-limit}

\begin{figure*}
\centering
\includegraphics[width=1\textwidth]{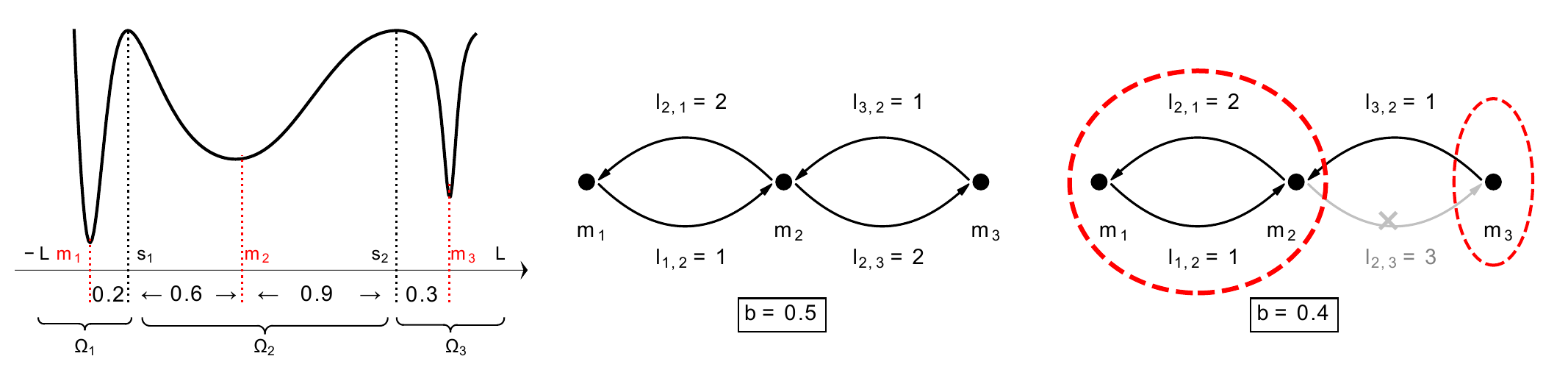}
\caption{\label{fig typical transition graph} Typical transition graphs $\mathcal{G}$ under different gradient clipping thresholds $b$. (Left) The function $f$ illustrated here has 3 attraction fields. For the second one $\Omega_2 = (s_1,s_2)$, we have $s_2 - m_2 = 0.9, m_2 - s_1 = 0.6$. (Middle) The typical transition graph induced by $b = 0.5$. The entire graph $\mathcal{G}$ is irreducible since all nodes communicate with each other. 
(Right) The typical transition graph induced by $b = 0.4$. When $b = 0.4$, since $0.6 < 2b$ and $0.9 > 2b$, the SGD can only exit $\Omega_2$ from the left with only 2 jumps if started from $m_2$. Therefore, on the graph $\mathcal{G}$ there are two communication classes: $G_1 = \{m_1,m_2\}, G_2 = \{m_3\}$; $G_1$ is absorbing while $G_2$ is transient. }
\end{figure*} 

Under proper structural assumptions on the geometry of $f$, the sharp minima of $f$ can be effectively eliminated from the trajectory of heavy-tailed SGD, facilitating the discovery of flat minima.
This is somewhat surprising given that gradient clipping mechanism makes the SGD iterates move \textit{slower}.
The intuition behind this is that for narrow basins, applying gradient clipping has virtually no effect on the order of exit time; whereas for a wide basin that requires multiple jumps to escape under the clipping scheme, the clipping of gradients significantly slows down the escape and makes SGD stay longer. In other words, gradient clipping only makes SGDs stay longer in the wider (better) basins.



Now, we introduce a few new concepts. 
Similar to the the minimum number of jumps $l^*_i$ defined in (\ref{def main paper minimum jump l star i}), we  define the following  as the  \emph{minimum number of jumps to reach $\Omega_j$ from $m_i$} for any $j \neq i$:
\begin{align}
l_{i,j} = \begin{cases} 
\ceil{ (s_{j-1} - m_i)/b  } & \text{if }\ j > i,\\
\ceil{ (m_i - s_j)/b } & \text{if }\ j < i.
\end{cases} \label{def jump number l i j}
\end{align}
Recall that Theorem \ref{theorem first exit time} dictates that $X^\eta_n$ is most likely to move out of the current attraction field, say $\Omega_i$, to somewhere else after $O\big(1/\lambda_i(\eta)\big)$ time steps, and the destination is most likely to be reachable within $l^*_i$ jumps from $m_i$. 
Therefore, the transitions from $\Omega_i$ to $\Omega_j$ can be considered typical if $\Omega_j$ can be reached from $m_i$ with $l^*_i$ jumps---that is, $l_{i,j} = l^*_i$. Now we define the following directed graph that only includes these typical transitions.
\begin{definition}[Typical Transition Graph] \label{def main paper typical transition graph}
Given a function $f$ satisfying Assumption \ref{assumption: function f in R 1} and gradient clipping threshold $b>0$ satisfying Assumption \ref{assumption clipping threshold b}, a directed graph $\mathcal{G} = (V,E)$ is the corresponding \emph{typical transition graph} if (1) $V = \{m_1,\cdots,m_{ n_\text{min} }\}$; (2) An edge $(m_i\rightarrow m_j)$ is in $E$ iff $l_{i,j} = l^*_i$.
\end{definition}

Naturally, the typical transition graph $\mathcal{G}$ can be decomposed into different communication classes $G_1,\cdots,G_K$ that are mutually exclusive by considering the equivalence relation associated with the existence of the (two-way) paths between $i$ and $j$. More specifically, for $i \neq j$, we say that  $i$ and $j$ communicate if and only if there exists a path $(m_i,m_{k_1},\cdots,m_{k_n},m_j)$ as well as a path $(m_j,m_{k^\prime_1},\cdots,m_{k^\prime_{n^\prime}},m_i)$ in $\mathcal{G}$; in other words, by travelling through edges on $\mathcal{G}$, $m_i$ can be reached from $m_j$ and $m_j$ can be reached from $m_i$.
 
We say that a communication class $G$ is \emph{absorbing} if there does not exist any edge $(m_i \rightarrow m_j) \in E$ such that $m_i \in G$ and $m_j \notin G$. 
Otherwise, we say that $G$ is \emph{transient}. 
In the case that all $m_i$'s communicate with each other on graph $\mathcal{G}$, we say $\mathcal{G}$ is \emph{irreducible}. 
See Figure \ref{fig typical transition graph} (Middle) for the illustration of an irreducible case. 
When $\mathcal{G}$ is irreducible, we define the set of \emph{largest} attraction fields  $M^\text{large}\delequal{}\{ m_i:\ i = 1,2,\cdots,n_\text{min},\ l^*_i = l^\text{large} \}$ where $l^\text{large} = \max_j l^*_j$;  
recall that $l^*_i$ characterizes the width of $\Omega_i$.
Define the longest time scale 
$\lambda^\text{large}(\eta) = H(1/\eta) ( (H(1/\eta)/\eta) )^{ l^\text{large} - 1 }.$
Note that this corresponds exactly to the order of the first exit time of the largest attraction fields; see Theorem~\ref{theorem first exit time}. 
The following theorem is the main result of this paper. 

\begin{theorem} \label{thm main paper eliminiate sharp minima from trajectory}
Let Assumptions \ref{assumption: function f in R 1}-\ref{assumption clipping threshold b} hold and assume that the graph $\mathcal{G}$ is \textbf{irreducible}. 
For any $t > 0$, $\beta >1+ (\alpha - 1)l^\text{large}$ and $x \in [-L,L]$,
\begin{align}
    \frac{1}{\floor{t / \eta^\beta }}\int_0^{\floor{t / \eta^\beta } }\mathbbm{1}\Big\{ X^\eta_{ \floor{u} }(x) \in \bigcup_{j: m_j \notin M^\text{large}}\Omega_j \Big\}du \to 0
    \label{def main paper RV proportion of time spent at small basin}
\end{align}
in probability as $\eta\to0$.
\end{theorem}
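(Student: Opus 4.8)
The plan is a renewal--reward estimate built on Theorem~\ref{theorem first exit time}. Since the time average in \eqref{def main paper RV proportion of time spent at small basin} is $[0,1]$-valued, convergence in probability is equivalent to convergence of its expectation, and since there are only finitely many $j$ with $m_j\notin M^\text{large}$, it suffices to fix one such $j$ and prove
\[
\frac{1}{T_\eta}\,\mathbb{E}_x\Big[\sum_{n=0}^{T_\eta-1}\mathbbm{1}\{X^\eta_n(x)\in\Omega_j\}\Big]\;\longrightarrow\;0,\qquad T_\eta:=\floor{t/\eta^\beta}.
\]
Let $0=\theta_0<\theta_1<\theta_2<\cdots$ be the successive times at which $X^\eta_n$ leaves an attraction field, let $J_k$ be the index of the field occupied on $[\theta_k,\theta_{k+1})$, and set $N_\eta:=\#\{k\ge 0:\theta_k<T_\eta\}$ (note $N_\eta\le T_\eta$). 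Decomposing the occupation sum over excursions and applying the strong Markov property at each $\theta_k$,
\[
\mathbb{E}_x\Big[\sum_{n=0}^{T_\eta-1}\mathbbm{1}\{X^\eta_n\in\Omega_j\}\Big]\;\le\;\sum_{k\ge 0}\mathbb{E}_x\Big[\mathbbm{1}\{\theta_k<T_\eta,\,J_k=j\}\,\mathbb{E}_{X^\eta_{\theta_k}}[\sigma_j(\eta)]\Big]\;\le\;\bar\sigma_j(\eta)\,\mathbb{E}_x[N_\eta],
\]
where $\bar\sigma_j(\eta):=\sup_{y\in\Omega_j}\mathbb{E}_y[\sigma_j(\eta)]$. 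It then remains to bound $\bar\sigma_j(\eta)$ and $\mathbb{E}_x[N_\eta]$ from above.

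First I would bound $\bar\sigma_j(\eta)\le C_j/\lambda_j(\eta)$ for all small $\eta$. I expect the proof of Theorem~\ref{theorem first exit time} to furnish a uniform exponential tail estimate $\sup_{y\in\Omega_j}\mathbb{P}_y(\lambda_j(\eta)\sigma_j(\eta)>u)\le Ce^{-cu}$ for all $u\ge 0$ (the exponential limit there is obtained precisely by controlling the per-period escape probability). Uniformity down to $\partial\Omega_j$ should follow by combining the bound at $m_j$ with a relaxation estimate: started at any point of $\Omega_j$, the clipped gradient descent drives $X^\eta_n$ into a fixed small neighbourhood of $m_j$ within $O(\eta^{-1}\log(1/\eta))$ steps with probability $1-o(1)$, a horizon negligible compared with $\lambda_j(\eta)^{-1}$. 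Integrating the tail yields the claimed bound. Next I would bound $\mathbb{E}_x[N_\eta]\le CB\big(T_\eta\lambda^\text{large}(\eta)+1\big)$. Group the excursions into blocks, each block ending at the first index $k$ within it with $J_k\in M^\text{large}$. Since $\mathcal{G}$ is irreducible and, by Theorem~\ref{theorem first exit time}(ii), the transition probabilities of $(J_k)$ converge to $q_{i,l}/q_i>0$ exactly along the edges of $\mathcal{G}$, the expected number of excursions per block is bounded by some $B<\infty$, uniformly in $\eta$ small (using relaxation again to reduce each step to starting near the relevant $m_i$). By Theorem~\ref{theorem first exit time}(i) together with the same relaxation estimate, a completed block contains a full large-field excursion and hence has expected duration at least $c'/\lambda^\text{large}(\eta)$; a Wald-type estimate for the block durations (their increments having uniformly finite moments by the tail bound above) then gives $\mathbb{E}_x[\#\text{completed blocks before }T_\eta]\le C(T_\eta\lambda^\text{large}(\eta)+1)$, whence the stated bound on $\mathbb{E}_x[N_\eta]$.

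Combining the two bounds,
\[
\frac{1}{T_\eta}\,\mathbb{E}_x\Big[\sum_{n=0}^{T_\eta-1}\mathbbm{1}\{X^\eta_n\in\Omega_j\}\Big]\;\le\;C_jCB\,\Big(\frac{\lambda^\text{large}(\eta)}{\lambda_j(\eta)}+\frac{1}{\lambda_j(\eta)\,T_\eta}\Big),
\]
and to finish I would check that both terms vanish via regular variation. From the definitions, $\lambda^\text{large}(\eta)/\lambda_j(\eta)=\big(H(1/\eta)/\eta\big)^{\,l^\text{large}-l^*_j}$; since $H(x)\sim x^{-\alpha}\ell(x)$ for a slowly varying $\ell$ and $\alpha>1$, one has $H(1/\eta)/\eta\sim\eta^{\alpha-1}\ell(1/\eta)\to0$, and $l^\text{large}-l^*_j\ge 1$ because $m_j\notin M^\text{large}$, so $\lambda^\text{large}(\eta)/\lambda_j(\eta)\to0$. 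Likewise $\lambda^\text{large}(\eta)$, as a function of $1/\eta$, is regularly varying with index $-(1+(\alpha-1)l^\text{large})$, so $T_\eta\lambda^\text{large}(\eta)=\floor{t/\eta^\beta}\lambda^\text{large}(\eta)$ is regularly varying in $1/\eta$ with index $\beta-(1+(\alpha-1)l^\text{large})>0$ by hypothesis; hence $T_\eta\lambda^\text{large}(\eta)\to\infty$ and, since $\lambda_j(\eta)\ge\lambda^\text{large}(\eta)$ for small $\eta$, also $1/(\lambda_j(\eta)T_\eta)\le 1/(\lambda^\text{large}(\eta)T_\eta)\to0$. Summing over the finitely many $j$ with $m_j\notin M^\text{large}$ completes the argument. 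The main obstacle is the uniform-in-initial-condition control of the exit time used above --- both the upper bound $\bar\sigma_j(\eta)\lesssim 1/\lambda_j(\eta)$ and the matching lower bound on large-field excursion durations in the $N_\eta$ estimate --- which upgrades the fixed-starting-point weak limit of Theorem~\ref{theorem first exit time} to uniform tail and moment estimates and leans on the gradient-descent relaxation bound; I anticipate extracting these ingredients from the proof of Theorem~\ref{theorem first exit time} rather than re-deriving them, but that is where the genuine work lies.
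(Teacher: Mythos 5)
Your renewal--reward plan is a genuinely different route from the paper's. The paper partitions the horizon $\floor{t/\eta^\beta}$ into $N(\eta)=\ceil{\floor{t/\eta^\beta}/\floor{t_0/\lambda^\text{large}(\eta)}}$ windows of length $\floor{t_0/\lambda^\text{large}(\eta)}$, proves a high-probability single-window estimate (Lemma~\ref{lemma eliminiate sharp minima from trajectory}: uniformly over initial conditions in $[-L,L]$, the occupation proportion in small fields over one window is $\le\epsilon$ with probability $\ge1-5\delta$), stochastically dominates the number of bad windows by a $\mathrm{Binomial}(N(\eta),\delta)$, and closes with the strong law of large numbers. You instead bound the expectation of the $[0,1]$-valued occupation fraction, reduce it to $\bar\sigma_j(\eta)\cdot\mathbb{E}[N_\eta]/T_\eta$, and finish with a regular-variation computation. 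The final asymptotic arithmetic ($\lambda^\text{large}/\lambda_j\to0$, $T_\eta\lambda^\text{large}\to\infty$ since $\beta>1+(\alpha-1)l^\text{large}$) is correct. The expectation route avoids the $\epsilon/\delta$ bookkeeping and the LLN step, but it asks for strictly stronger ingredients than the paper does --- moment bounds that are uniform in the starting point, rather than convergence in distribution and high-probability estimates.

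That uniformity is where the proposal has a real (though repairable) gap, beyond what you describe as ``extraction.'' The step ``a completed block contains a full large-field excursion and hence has expected duration at least $c'/\lambda^\text{large}(\eta)$'' is not an almost-sure lower bound on the conditional expected block duration, because the large-field sojourn begins wherever $X^\eta$ happens to land, and on a low- (but not zero-) probability event the landing point is within $o(1)$ of a saddle, from which the sojourn can be arbitrarily short. So the Wald identity you invoke (which requires a.s.\ bounded-below conditional increments) does not apply as stated. The fix is to replace the conditional-moment lower bound by a conditional-probability lower bound (``each block lasts $\ge c'/\lambda^\text{large}(\eta)$ with conditional probability $\ge p>0$,'' obtained by combining the control of the exit-time landing distribution in Lemma~\ref{lemma atypical 3 } with Lemmas~\ref{lemma return to local minimum quickly rescaled version} and~\ref{lemma exit smaller neighborhood of s} for starting points near saddles) and then run a Binomial/overshoot argument --- at which point you are essentially re-deriving the paper's stochastic-dominance step. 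The companion claim $\bar\sigma_j(\eta)\lesssim1/\lambda_j(\eta)$ uniformly over $\Omega_j$ has the same wrinkle: Lemma~\ref{lemma first exit key lemma 1} gives relaxation only from points $\epsilon$-away from the saddles, so the supremum over all of $\Omega_j$ needs the $O(1/H(1/\eta))$ escape-from-saddle estimate of Lemma~\ref{lemma exit smaller neighborhood of s} in addition; this piece is fine since $1/H(1/\eta)=o(1/\lambda_j(\eta))$ for $l^*_j\ge1$. Neither issue sinks the strategy, but the block-duration bound in particular needs more than a look-up from the proof of Theorem~\ref{theorem first exit time}.
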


The proof is deferred to Appendix \ref{section C appendix}. Here we briefly discuss the implication of the result.
Suppose that we terminate the training after a reasonably long time, say, $\floor{t / \eta^\beta }$ iterations. 
Then the random variable that converges to zero in \cref{def main paper RV proportion of time spent at small basin} is exactly the proportion of time that $X^\eta_n$ spent in the attraction fields that are not wide. 
Therefore, by truncating the gradient noise of the heavy-tailed SGD, we can effectively eliminate small attraction fields from its training trajectory.
In other words, it is almost guaranteed that SGD is in one of the widest attraction fields after sufficiently long  training.
Meanwhile, despite the asymptotic nature of Theorem \ref{thm main paper eliminiate sharp minima from trajectory},
it has been confirmed in our simulation and deep learning experiments that the elimination effect can be observed under typical choices of $\eta$.

Theorem \ref{thm main paper eliminiate sharp minima from trajectory} is merely a manifestation of the global dynamics of heavy-tailed SGD. The main messages of the next theorem are: 
(a) under clipped heavy-tailed noises, the dynamics of $X^\eta_n$ for small $\eta$ closely resemble that of a continuous-time Markov chain; (b) this chain \emph{only} visits local minima of the largest attraction fields of $f$, thus minima in small attraction fields are completely avoided.

\begin{theorem} \label{corollary irreducible case}
Let $x \in \Omega_i$ for some $i = 1,2,\cdots,n_{min}$. If Assumptions \ref{assumption: function f in R 1}-\ref{assumption clipping threshold b} hold and $\mathcal{G}$ is irreducible, then there exist a continuous-time Markov chain $Y$ on $M^\text{large}$ and  a random mapping $\pi$ satisfying 
\begin{itemize}
    \item $\pi(m) \equiv m$ if $m \in M^{\text{large}}$;
    \item $\pi(m)$ is a random variable that only takes value in $M^{\text{large}}$ if $m \notin M^{\text{large}}$.
\end{itemize}
such that the scaled process $\{X^\eta_{\floor{ t/\lambda^\text{large}(\eta) }}(x):\ t \geq 0\}$ converges to process $\{Y_t(\pi(m_i)): t \geq 0\}$ 
in the sense of finite-dimensional distributions: for any positive integer $k$ and any $0<t_1<t_2<\cdots<t_k$, the random vector $\Big( X^\eta_{\floor{ t_1/\lambda^\text{large}(\eta) }}(x) , \cdots, X^\eta_{\floor{ t_k/\lambda^\text{large}(\eta) }}(x) \Big)$ converges in distribution to $\Big( Y_{t_1}(\pi(m_i)), \cdots, Y_{t_k}(\pi(m_i)) \Big)$ as $\eta \downarrow 0$.
\end{theorem}
In section D of Appendix, we detail the proof,
the exact parametrization of the generator matrix of process $Y$, and the distribution of random mapping $\pi(\cdot)$. 
Here we add some remarks. Intuitively speaking, this result tells us that,
regardless of where we initialize the SGD iterates, the dynamics of the clipped heavy-tailed SGD converge to a continuous-time Markov chain avoiding any local minima that is not in the largest attraction fields.
Second, under small learning rate $\eta>0$, if $X^\eta_n(x)$ is initialized at $x \in \Omega_i$ where $\Omega_i$ is NOT a largest attraction field, then SGD will quickly escape $\Omega_i$ and arrive at some $\Omega_j$ that is indeed a largest one---i.e., $m_j \in M^{\text{large}}$; 
such a transition is so quick that, under time scaling $\lambda^\text{large}(\eta)$, it is almost instantaneous as if $X^\eta_n(x)$ is actually initialized randomly at some of the largest attraction fields. This randomness is compressed in the random mapping $\pi$.
In Section \ref{sec:discussions}, we discuss how our characterization in Theorem \ref{corollary irreducible case} are more general and applicable in the machine learning context than metastability results cited in \citep{simsekli2019tail};
in Section \ref{section C appendix} we see that the regularization effect of truncated heavy-tailed noises described in Theorem \ref{corollary irreducible case} is of great generality and can still be observed locally when the irreducibility condition is removed. 

\subsection{$\mathbb{R}^d$ Extensions of the Theoretical Results}\label{subsec:multidimensional-fet}

We focused on the $\mathbb{R}^1$ case so far, for the clarity of the exposition.
In this section, we informally reiterate that the same  effect under truncated heavy-tailed noises persists in high-dimensions. 
Rigorous statements are provided in Appendix~\ref{sec: appendix Rd result}.
We consider a setting similar to those in \cite{imkeller2010first} and analyze the first exit time $\sigma(\eta,x) = \min\{k \geq 0: X^\eta_k(x) \notin \mathcal{G}\}$ from an open, bounded domain $\mathcal{G}$ with smooth boundary.
For some $f: \R^d \mapsto \R$,
the SGD iterates $X^\eta_{k}(x) = X^\eta_{k-1}(x) - \varphi_b\big(\nabla f(X^\eta_{k-1}(x)) + \eta Z_k\big)$ are subject to the standard $L_2$ norm clipping with threshold $b > 0$,
iid noises $Z_n$ with heavy tails that resemble $1/x^\alpha$ with $\alpha > 1$,
and initial condition $X^\eta_{0}(x) = x$.
Assume that the origin $\bm{0}$ is the only attractor in $\mathcal{G}$ for the ODE $\dot{\bm{x}}(t) = -\nabla f(\bm{x}(t))$. 
Also, let $l^*_\mathcal{G}$ be the minimum number of jumps required for the ODE $\bm{x}(t)$ to escape from $\mathcal{G}$ provided that $\bm{x}(0) = \bm{0}$ and the $L_2$ norm of all jumps are less than $b$.
The following informal version of Theorem \ref{thm main result Rd simplified 1}   states that, under the presence of heavy-tailed noises and gradient clipping,
the first exit time from a region in $\mathbb{R}^d$ is of order $O(1/\eta^{1 + (\alpha-1) l^*_\mathcal{G}})$.
Therefore, the order of first exit times from different regions in $\mathbb{R}^d$ are still dictated by the geometric characterization $l^*_\mathcal{G}$, i.e. the minimum number of jumps required for escape,
and those with largest $l^*_\mathcal{G}$ may dominate the SGD trajectory as $\eta \downarrow 0$.
Furthermore, we extend the analysis to the generalized case where the distribution of noises $Z_n$ exhibits strong preference of certain directions and has  different heavy-tailed indices $\alpha$ along different directions.
We give the details of the $\mathbb{R}^d$ results in Section~\ref{sec: appendix Rd result} and provide a proof in Section~\ref{sec: proof Rd result}.
\begin{theorem}[Informal]
Under certain regularity conditions, for Lebesgue almost every $b > 0$,  there exist $ q > 0$ and  $\lambda(\eta)$ that is regularly varying w.r.t $\eta$ with index $1 + (\alpha-1) l^*_\mathcal{G}$ such that 
$$\lambda(\eta)\sigma(\eta,x) \Rightarrow Exp(q)\ \text{as }\eta \downarrow 0\ \ \forall x \in \mathcal{G}.$$
\end{theorem}

\section{Simulation Experiments}\label{sec:numerical-experiments}



We empirically demonstrate that, ($a$) as indicated by Theorem \ref{theorem first exit time}, the minimum jump number defined in (\ref{def main paper minimum jump l star i}) accurately characterizes the first exit times of the SGDs with clipped heavy-tailed gradient noises; ($b$) sharp minima can be effectively eliminated from such SGD; and ($c$) these properties are exclusive to heavy-tails. Under light-tailed noises, SGDs are trapped in sharp minima for a long time. 
The test function $f \in \mathcal{C}^2(\mathbb{R})$ is the same as in Fig.~\ref{fig histograms} (Left,e).
$m_1$ and $m_3$ are sharp minima in narrow attraction fields, while $m_2$ and $m_4$ are flatter and located in larger attraction fields. 
Heavy-tailed noises have tail index $\alpha = 1.2$, and \emph{light-tailed} noises are $\mathcal{N}(0,1)$. See Appendix \ref{sec: experiment details appendix} for details. 

First, we compare the first exit time of heavy-tailed SGD (when initialized at -0.7) from $\Omega_2 = (-1.3,0.2)$ under 3 different clipping mechanism: (1) $b = 0.28$, where the minimum jump number required to escape is $l^* = 3$; (2) $b = 0.5$, where $l^* = 2$; (3) no gradient clipping, where $l^* = 1$ obviously. 
According to Theorem \ref{theorem first exit time}, the first exit times for the aforementioned 3 clipping mechanism are of order $(1/\eta)^{1.6},(1/\eta)^{1.4}$ and $(1/\eta)^{1.2}$ respectively.
These theoretical predictions are  accurate 
as demonstrated in Figure \ref{fig histograms} (Right). 
Next, we investigate the global dynamics of heavy-tailed SGD. 
We compared the clipped case (with $b = 0.5$) against the case without clipping. 
Figure \ref{fig histograms} (Left, a, b) show the histograms of the empirical distributions of SGD, and Figure \ref{fig histograms}(Middle, a,b) plots the SGD trajectories. 
 Without gradient clipping, $X_n$ still visits the two sharp minima $m_1,m_3$. Under gradient clipping, the time spent at $m_1,m_3$ is almost completely eliminated and is negligible compared to the time $X_n$ spent at $m_2,m_4$  in larger attraction fields. 
This matches the predictions of Theorems \ref{thm main paper eliminiate sharp minima from trajectory}-\ref{corollary irreducible case}: the elimination of sharp minima with truncated heavy-tailed noises.
We stress that the said properties are exclusive to heavy-tailed SGD. As shown in Figure \ref{fig histograms}(Left,c,d) and Figure \ref{fig histograms}(Middle, c,d), light-tailed SGD are easily trapped at sharp minima for extremely long time.

Figure~\ref{fig Rd simulation} illustrates the same phenomena in $\R^2$, 
where $f$  has several saddle points and infinitely many local minima---the local minima of $\Omega_2$ form a line segment, which is an uncountably infinite set. 
Under clipping threshold $b$, attraction fields $\Omega_1$ and $\Omega_2$ are the \emph{larger} ones since the escape from them requires at least two jumps.
This suggests that the theoretical results from  Section~\ref{sec:theory} hold under more general contexts than Assumptions~\ref{assumption: function f in R 1}-\ref{assumption clipping threshold b}. In the next section, we provide experimental evidence that suggests that  truncated heavy-tailed noise improves the generalization  of SGD in deep learning.

\begin{figure}[ht]
\vskip 0.2in
\begin{center}
\centerline{\includegraphics[width=1.0\textwidth]{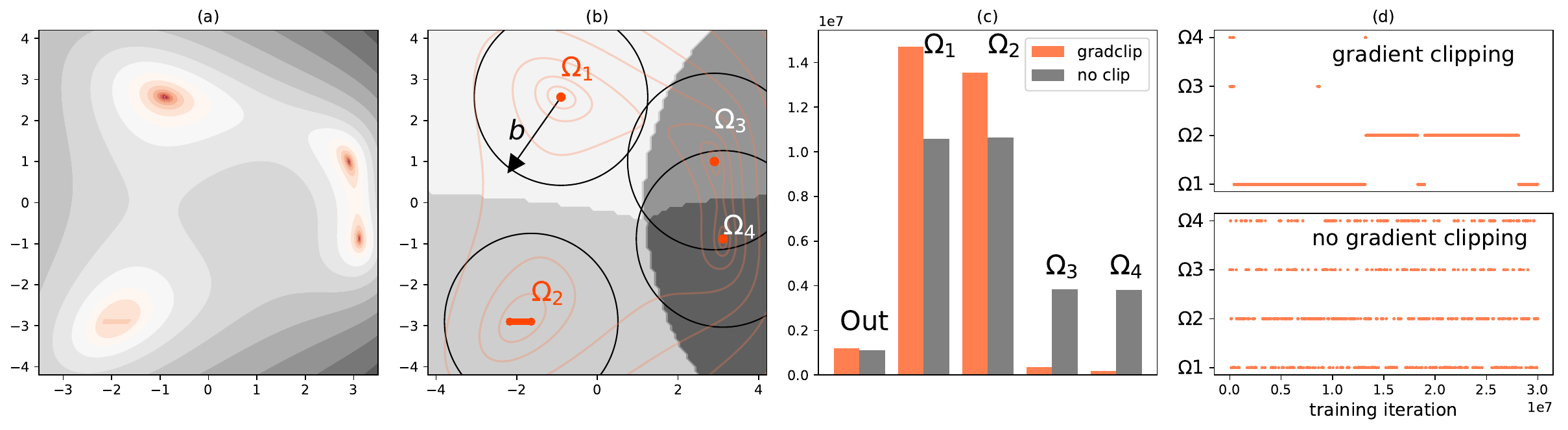} }
\caption{Experiment result of heavy-tailed SGD when optimizing the modified Himmelblau function. \textbf{(a)} Contour plot of the test function $f$. \textbf{(b)} Different shades of gray are used to indicate the area of the four different attraction fields $\Omega_1,\Omega_2,\Omega_3,\Omega_4$ of $f$. We say that a point belongs to an attraction field $\Omega_i$ if, when initializing at this point, the gradient descent iterates converge to the local minima in $\Omega_i$, which are indicated by the colored dots. The circles are added to imply whether the SGD iterates can escape from each $\Omega_i$ with one large jump or not under clipping threshold $b$. \textbf{(c)} The time heavy-tailed SGD spent at different region. An iterate $X_k$ is considered ``visiting'' $\Omega_i$ if its distance to the local minimizer of $\Omega_i$ is less than 0.5; otherwise we label $X_k$ as ``out''. \textbf{(d)} The transition trajectories of heavy-tailed SGD. The dots represent the last ``visited'' attraction field at each iteration.  }
\label{fig Rd simulation}
\end{center}
\vskip -0.2in
\end{figure}

\section{Heavy-tailed SGD in Deep Learning: An Ablation Study} 
\label{sec:deeplearning} 

In this section, we verify our theoretical results and demonstrate the effectiveness of clipped heavy-tailed noise in training deep neural networks. 
Contrary to the report in \citep{csimcsekli2019heavy}, heavy-tailed noise may not be ubiquitous in image classification tasks. For instance, the non-Gaussianity assumption on SGD noise is disputed by experiments in \citep{panigrahi2019non} for ResNet (see \citep{he2016deep}). For tasks considered in this section, the gradient noise is not heavy-tailed when models are randomly initialized (see  Appendix \ref{sec: experiment details appendix}). Motivated by the absence of heavy-tailed noises in image classification, 
 we \textit{make the SGD noise heavy-tailed}. 
 Let $\theta$ be the current model weight during training, $g_{SB}(\theta)$ be the typical small-batch gradient, and $g_{GD}(\theta)$ be the true (deterministic) gradient evaluated on the entire training dataset. Then by evaluating $g_{SB}(\theta) - g_{GD}(\theta)$ we obtain a sample of the gradient noise. Due to the prohibitive cost of evaluating $ g_{GD}(\theta)$,
 we instead use $g_{SB}(\theta) - g_{LB}(\theta)$ as its approximation where $g_{LB}$ denotes the gradient evaluated on a larger batch. 
 This  is justified by the unbiasedness in $\mathbb{E}_{LB}[g_{LB}(\theta)] = g_{GD}(\theta)$.  For some heavy-tailed random variable $Z$, by multiplying $Z$ with SGD noise, we obtain the following perturbed gradient: 
\begin{align}
    g_{heavy}(\theta) \;=\; g_{SB}(\theta) + Z\big( g_{SB*}(\theta) - g_{LB}(\theta) \big) \label{update our heavy tailed method}
\end{align}
where $SB$ and $SB*$ are two mini batches that may or may not be identical. We use the following update recursion under gradient clipping threshold $b$: $X^{\eta}_{k+1} = X^{\eta}_k - \varphi_b(\eta g_{heavy}(X^{\eta}_k))$ where $\varphi_b$ is the truncation operator. 
We consider two different implementations: in \emph{our method 1} (labeled as ``our 1'' in Table \ref{table ablation study}), $SB$ and $SB*$ are chosen independently, while in \emph{our method 2}  (labeled as ``our 2'' in Table \ref{table ablation study}), we use the same batch for $SB$ and $SB*$. In summary, by simply multiplying gradient noise with heavy-tailed random variables, we inject heavy-tailed noise into the optimization procedure.

We conduct an ablation study and benchmark the proposed clipped heavy-tailed methods against the following optimization methods. \emph{LB}: large-batch SGD with $X^{\eta}_{k+1} = X^{\eta}_k - \eta g_{LB}(X^{\eta}_k)$; \emph{SB}: small-batch SGD with $X^{\eta}_{k+1} = X^{\eta}_k - \eta g_{SB}(X^{\eta}_k)$; \emph{SB + Clip}: the update recursion is $X^{\eta}_{k+1} = X^{\eta}_k - \varphi_b(\eta g_{SB}(X^{\eta}_k))$; \emph{SB + Noise}: Our method 2 WITHOUT the gradient clipping mechanism, leading to the update recursion $X^{\eta}_{k+1} = X^{\eta}_k - \eta g_{heavy}(X^{\eta}_k)$. 

The experiment setting and choice of hyperparameters are   adapted from  \citep{pmlr-v97-zhu19e}. We consider three different tasks: (1) LeNet \citep{lecun1990handwritten} on corrupted FashionMNIST \citep{xiao2017/online}, (2) VGG11 \citep{simonyan2014very} on SVHN \citep{netzer2011reading}, (3) VGG11 on CIFAR10 \citep{krizhevsky2009learning} (see Appendix~\ref{sec: experiment details appendix} for details). 
Here we highlight a few points: First, within the same task, for all the 6 candidate methods will use the same $\eta$, batch size, training iteration, and (when needed) the same clipping threshold $b$ and heavy-tailed RV $Z$ for a fair comparison; the training duration is long enough so that \emph{LB} and \emph{SB} have attained 100\% training accuracy and close-to-0 training loss long before the end of training (the exception here is ``\emph{SB + Noise}'' method; see Appendix \ref{sec: experiment details appendix} for the details); Second, to facilitate convergence to local minima for \emph{our methods 1 and 2}, we remove heavy-tailed noise for last final 5,000 iterations and run \emph{LB} instead\footnote{The proposed method can be 
interpreted as a simplified version of GD + annealed heavy-tailed perturbation, where a detailed annealing is substituted by a two-phase training schedule. In the first \emph{exploration} phase the clipped heavy-tailed noises drive the iterates to explore the loss landscape and identify ``wide'' attraction fields. In the second \emph{exploitation} phase, removing the artificial perturbation accelerates  convergence to local minima.}.

\begin{table}
  \caption{Test accuracy and expected sharpness of different methods across different tasks. The reported numbers are the averages over 5 replications. For 95\% CI, see Appendix \ref{sec: experiment details appendix}. }
  \label{table ablation study}
  \centering
  \begin{tabular}{lllllll}
    \hline
    Test accuracy     & LB     & SB   & SB + Clip & SB + Noise & Our 1 & Our 2 \\
    \hline
    
    FashionMNIST, LeNet & 68.66\% & 69.20\% & 68.77\% & 64.43\% & 69.47\% & \textbf{70.06\%} \\ 
    SVHN, VGG11         & 82.87\% & 85.92\% & 85.95\% & 38.85\% & \textbf{88.42\%} & 88.37\% \\
    CIFAR10, VGG11      & 69.39\% & 74.42\% & 74.38\% & 40.50\% & 75.69\% & \textbf{75.87\%} \\

    \hline
    
    Expected Sharpness      & LB     & SB   & SB + Clip & SB + Noise & Our 1 & Our 2 \\
    \hline
    FashionMNIST, LeNet & 0.032 & 0.008 & 0.009 & 0.047 & 0.003 & \textbf{0.002} \\ 
    SVHN, VGG11         & 0.694 & 0.037 & 0.041 & 0.012 & \textbf{0.002} & 0.005 \\
    CIFAR10, VGG11      & 2.043 & 0.050 & 0.039 & 2.046 & \textbf{0.024} & 0.037 \\

    \hline
    
    
    
  \end{tabular}
\end{table}

Table \ref{table ablation study} shows that 
in all 3 tasks both \emph{our method 1} and \emph{our method 2} attain better test accuracy than the other candidate methods. Meanwhile, both methods exhibit similar test performance, implying that the implementation of the heavy-tailed method may not be a the deciding factor. 
We also report  the \emph{expected sharpness} metric $\E_{\nu \sim \mathcal{N}(\textbf{0},\delta^2 \textbf{I})}|L(\theta^* + \nu) - L(\theta^*)|$ used in \cite{pmlr-v97-zhu19e, NIPS2017_10ce03a1} where $\mathcal{N}(\textbf{0},\delta^2 \textbf{I})$ is a Gaussian distribution, $\theta^*$ is the trained model weight and $L$ is training loss. 
In our experiment, we use $\delta = 0.01$ and the expectation is evaluated by averaging over 100 samples. 
We conduct 5 replications for each experiment scenario and report the averaged performance in Table \ref{table ablation study}. 
Smaller sharpness of our methods 1 and 2  confirms that they encourage  minimizers with  a ``flatter'' geometry,  
thus attaining better test performances.

The ablation study in Table \ref{table ablation study}  shows that both heavy-tailed noise and  gradient clipping are necessary to find a flat minima and hence achieve better generalization, which is predicted by our analyses. 
\emph{SB} and \emph{SB + Clip} achieve similar inferior performances, confirming that  clipping does not help when noise is  light-tailed.  
\emph{SB + Noise} injects heavy-tailed noise without gradient clipping, which achieves an  inferior performance. 
This poor performance---even after  extensive parameter tuning and engineering (see Appendix \ref{sec: experiment details appendix} for more details)---demonstrates the difficulty on the optimization front, especially when heavy-tailed noise is present yet little effort is put into controlling the highly volatile gradient noises. 
This is aligned with the observations in  \cite{Zhang2020Why,NEURIPS2020_abd1c782} where adaptive gradient clipping methods are proposed to improve convergence of SGD in the presence of heavy-tailed noises. This confirms that 
gradient clipping is crucial for heavy-tailed SGD.

Lastly, Table \ref{table data augmentation training}
shows that even in the more sophisticated settings with training techniques such as data augmentations and scheduled learning rates,
truncated heavy-tailed SGD still manages to consistently find solutions with better test performance.
For experiment details, see Appendix \ref{sec: experiment details appendix}.
In Table \ref{table appendix data augmentation training} we also report the sharpness of the obtained solutions.
\begin{table}
  \caption{Our method's gain on test accuracy persists even when applied with techniques such as data augmentation and scheduled learning rates. For 95\% CI, see Appendix \ref{sec: experiment details appendix}.}
  \label{table data augmentation training}
  \centering
  \begin{tabular}{lllllll}
    \hline
    CIFAR10-VGG11    & Rep 1     & Rep 2   & Rep 3 & Rep 4 & Rep 5 & Average \\
    \hline
    SB+Clip       & 89.40\%   & 89.41\% & 89.89\% & 89.52\% & 89.47\% & 89.54\% \\
    Our 1 & 90.76\%   & 90.57\% & 90.49\% & 90.85\% & 90.79\% & \textbf{90.67\%} \\
    Our 2 & 90.67\% & 90.23\% & 90.52\% & 90.13\% & 90.70\% & 90.45\% \\
   \hline
    CIFAR100-VGG16    & Rep 1     & Rep 2   & Rep 3 & Rep 4 & Rep 5 & Average \\
   \hline
    SB+Clip       & 55.76\%   & 56.8\% & 56.38\% & 56.35\% & 56.32\% & 56.32\% \\
    Our 1& 67.43\%   & 65.12\% & 65.14\% & 65.96\% & 63.57\% & \textbf{65.44\%} \\
    Our 2& 67.19\% & 61.17\% & 60.97\% & 64.75\% & 60.90\% &   62.99\%            \\
    \hline
    
 \end{tabular}
\end{table}

\subsubsection*{Acknowledgement} 

This work is partially supported by NSF awards DMS-2134012 and CCF-2019844 as a part of NSF Institute for Foundations of Machine Learning (IFML). 

\bibliography{bib_appendix}

\begin{thebibliography}{53}
\providecommand{\natexlab}[1]{#1}
\providecommand{\url}[1]{\texttt{#1}}
\expandafter\ifx\csname urlstyle\endcsname\relax
  \providecommand{\doi}[1]{doi: #1}\else
  \providecommand{\doi}{doi: \begingroup \urlstyle{rm}\Url}\fi

\bibitem[Alstott et~al.(2014)Alstott, Bullmore, and Plenz]{alstott2014powerlaw}
Jeff Alstott, Ed~Bullmore, and Dietmar Plenz.
\newblock powerlaw: a python package for analysis of heavy-tailed
  distributions.
\newblock \emph{PloS one}, 9\penalty0 (1):\penalty0 e85777, 2014.

\bibitem[Billingsley(2013)]{billingsley2013convergence}
Patrick Billingsley.
\newblock \emph{Convergence of probability measures}.
\newblock John Wiley \& Sons, 2013.

\bibitem[Clauset et~al.(2009)Clauset, Shalizi, and Newman]{clauset2009power}
Aaron Clauset, Cosma~Rohilla Shalizi, and Mark~EJ Newman.
\newblock Power-law distributions in empirical data.
\newblock \emph{SIAM review}, 51\penalty0 (4):\penalty0 661--703, 2009.

\bibitem[Cohen et~al.(2021)Cohen, Kaur, Li, Kolter, and
  Talwalkar]{cohen2021gradient}
Jeremy~M Cohen, Simran Kaur, Yuanzhi Li, J~Zico Kolter, and Ameet Talwalkar.
\newblock Gradient descent on neural networks typically occurs at the edge of
  stability.
\newblock \emph{arXiv preprint arXiv:2103.00065}, 2021.

\bibitem[Dinh et~al.(2017)Dinh, Pascanu, Bengio, and Bengio]{dinh2017sharp}
Laurent Dinh, Razvan Pascanu, Samy Bengio, and Yoshua Bengio.
\newblock Sharp minima can generalize for deep nets.
\newblock In \emph{International Conference on Machine Learning}, pp.\
  1019--1028. PMLR, 2017.

\bibitem[Engstrom et~al.(2020)Engstrom, Ilyas, Santurkar, Tsipras, Janoos,
  Rudolph, and Madry]{Engstrom2020Implementation}
Logan Engstrom, Andrew Ilyas, Shibani Santurkar, Dimitris Tsipras, Firdaus
  Janoos, Larry Rudolph, and Aleksander Madry.
\newblock Implementation matters in deep rl: A case study on ppo and trpo.
\newblock In \emph{International Conference on Learning Representations}, 2020.
\newblock URL \url{https://openreview.net/forum?id=r1etN1rtPB}.

\bibitem[Foret et~al.(2020)Foret, Kleiner, Mobahi, and
  Neyshabur]{foret2020sharpness}
Pierre Foret, Ariel Kleiner, Hossein Mobahi, and Behnam Neyshabur.
\newblock Sharpness-aware minimization for efficiently improving
  generalization.
\newblock \emph{arXiv preprint arXiv:2010.01412}, 2020.

\bibitem[Garg et~al.(2021)Garg, Zhanson, Parisotto, Prasad, Kolter, Lipton,
  Balakrishnan, Salakhutdinov, and Ravikumar]{garg2021on}
Saurabh Garg, Joshua Zhanson, Emilio Parisotto, Adarsh Prasad, J~Zico Kolter,
  Zachary~Chase Lipton, Sivaraman Balakrishnan, Ruslan Salakhutdinov, and
  Pradeep~Kumar Ravikumar.
\newblock On proximal policy optimization's heavy-tailed gradients, 2021.
\newblock URL \url{https://openreview.net/forum?id=cYek5NoXNiX}.

\bibitem[Gorbunov et~al.(2020)Gorbunov, Danilova, and
  Gasnikov]{NEURIPS2020_abd1c782}
Eduard Gorbunov, Marina Danilova, and Alexander Gasnikov.
\newblock Stochastic optimization with heavy-tailed noise via accelerated
  gradient clipping.
\newblock In H.~Larochelle, M.~Ranzato, R.~Hadsell, M.~F. Balcan, and H.~Lin
  (eds.), \emph{Advances in Neural Information Processing Systems}, volume~33,
  pp.\  15042--15053. Curran Associates, Inc., 2020.
\newblock URL
  \url{https://proceedings.neurips.cc/paper/2020/file/abd1c782880cc59759f4112fda0b8f98-Paper.pdf}.

\bibitem[Graves(2013)]{graves2013generating}
Alex Graves.
\newblock Generating sequences with recurrent neural networks.
\newblock \emph{arXiv preprint arXiv:1308.0850}, 2013.

\bibitem[Gurbuzbalaban et~al.(2020)Gurbuzbalaban, Simsekli, and
  Zhu]{gurbuzbalaban2020heavy}
Mert Gurbuzbalaban, Umut Simsekli, and Lingjiong Zhu.
\newblock The heavy-tail phenomenon in sgd.
\newblock \emph{arXiv preprint arXiv:2006.04740}, 2020.

\bibitem[Gurbuzbalaban et~al.(2021)Gurbuzbalaban, Simsekli, and
  Zhu]{gurbuzbalaban2021heavy}
Mert Gurbuzbalaban, Umut Simsekli, and Lingjiong Zhu.
\newblock The heavy-tail phenomenon in sgd.
\newblock In \emph{International Conference on Machine Learning}, pp.\
  3964--3975. PMLR, 2021.

\bibitem[He et~al.(2019)He, Huang, and Yuan]{he2019asymmetric}
Haowei He, Gao Huang, and Yang Yuan.
\newblock Asymmetric valleys: Beyond sharp and flat local minima.
\newblock \emph{arXiv preprint arXiv:1902.00744}, 2019.

\bibitem[He et~al.(2016)He, Zhang, Ren, and Sun]{he2016deep}
Kaiming He, Xiangyu Zhang, Shaoqing Ren, and Jian Sun.
\newblock Deep residual learning for image recognition.
\newblock In \emph{Proceedings of the IEEE conference on computer vision and
  pattern recognition}, pp.\  770--778, 2016.

\bibitem[Hochreiter \& Schmidhuber(1997)Hochreiter and
  Schmidhuber]{hochreiter1997flat}
Sepp Hochreiter and J{\"u}rgen Schmidhuber.
\newblock Flat minima.
\newblock \emph{Neural computation}, 9\penalty0 (1):\penalty0 1--42, 1997.

\bibitem[Hodgkinson \& Mahoney(2020)Hodgkinson and
  Mahoney]{hodgkinson2020multiplicative}
Liam Hodgkinson and Michael~W Mahoney.
\newblock Multiplicative noise and heavy tails in stochastic optimization.
\newblock \emph{arXiv preprint arXiv:2006.06293}, 2020.

\bibitem[Imkeller et~al.(2010{\natexlab{a}})Imkeller, Pavlyukevich, and
  Stauch]{imkeller2010first}
Peter Imkeller, Ilya Pavlyukevich, and Michael Stauch.
\newblock First exit times of non-linear dynamical systems in {$\mathbb R^d$}
  perturbed by multifractal {L}{\'e}vy noise.
\newblock \emph{Journal of Statistical Physics}, 141\penalty0 (1):\penalty0
  94--119, 2010{\natexlab{a}}.

\bibitem[Imkeller et~al.(2010{\natexlab{b}})Imkeller, Pavlyukevich, and
  Wetzel]{imkeller2010hierarchy}
Peter Imkeller, Ilya Pavlyukevich, and Torsten Wetzel.
\newblock The hierarchy of exit times of {L}{\'e}vy-driven {L}angevin
  equations.
\newblock \emph{The European Physical Journal Special Topics}, 191\penalty0
  (1):\penalty0 211--222, 2010{\natexlab{b}}.

\bibitem[Immler \& Traut(2019)Immler and Traut]{immler2019flow}
Fabian Immler and Christoph Traut.
\newblock The flow of odes: Formalization of variational equation and
  poincar{\'e} map.
\newblock \emph{Journal of Automated Reasoning}, 62\penalty0 (2):\penalty0
  215--236, 2019.

\bibitem[Jastrzebski et~al.(2020)Jastrzebski, Szymczak, Fort, Arpit, Tabor,
  Cho, and Geras]{jastrzebski2020break}
Stanislaw Jastrzebski, Maciej Szymczak, Stanislav Fort, Devansh Arpit, Jacek
  Tabor, Kyunghyun Cho, and Krzysztof Geras.
\newblock The break-even point on optimization trajectories of deep neural
  networks.
\newblock \emph{arXiv preprint arXiv:2002.09572}, 2020.

\bibitem[Jiang et~al.(2019)Jiang, Neyshabur, Mobahi, Krishnan, and
  Bengio]{jiang2019fantastic}
Yiding Jiang, Behnam Neyshabur, Hossein Mobahi, Dilip Krishnan, and Samy
  Bengio.
\newblock Fantastic generalization measures and where to find them.
\newblock \emph{arXiv preprint arXiv:1912.02178}, 2019.

\bibitem[Keskar et~al.(2016)Keskar, Mudigere, Nocedal, Smelyanskiy, and
  Tang]{keskar2016large}
Nitish~Shirish Keskar, Dheevatsa Mudigere, Jorge Nocedal, Mikhail Smelyanskiy,
  and Ping Tak~Peter Tang.
\newblock On large-batch training for deep learning: Generalization gap and
  sharp minima.
\newblock \emph{arXiv preprint arXiv:1609.04836}, 2016.

\bibitem[Krizhevsky et~al.(2009)Krizhevsky, Hinton,
  et~al.]{krizhevsky2009learning}
Alex Krizhevsky, Geoffrey Hinton, et~al.
\newblock Learning multiple layers of features from tiny images.
\newblock 2009.

\bibitem[Kwon et~al.(2021)Kwon, Kim, Park, and Choi]{kwon2021asam}
Jungmin Kwon, Jeongseop Kim, Hyunseo Park, and In~Kwon Choi.
\newblock Asam: Adaptive sharpness-aware minimization for scale-invariant
  learning of deep neural networks.
\newblock \emph{arXiv preprint arXiv:2102.11600}, 2021.

\bibitem[LeCun et~al.(1990)LeCun, Boser, Denker, Henderson, Howard, Hubbard,
  and Jackel]{lecun1990handwritten}
Yann LeCun, Bernhard~E Boser, John~S Denker, Donnie Henderson, Richard~E
  Howard, Wayne~E Hubbard, and Lawrence~D Jackel.
\newblock Handwritten digit recognition with a back-propagation network.
\newblock In \emph{Advances in neural information processing systems}, pp.\
  396--404, 1990.

\bibitem[Li et~al.(2018{\natexlab{a}})Li, Ding, and Sun]{li2018over}
Dawei Li, Tian Ding, and Ruoyu Sun.
\newblock Over-parameterized deep neural networks have no strict local minima
  for any continuous activations.
\newblock \emph{arXiv preprint arXiv:1812.11039}, 2018{\natexlab{a}}.

\bibitem[Li et~al.(2018{\natexlab{b}})Li, Xu, Taylor, Studer, and
  Goldstein]{li2018visualizing}
Hao Li, Zheng Xu, Gavin Taylor, Christoph Studer, and Tom Goldstein.
\newblock Visualizing the loss landscape of neural nets.
\newblock In \emph{Proceedings of the 32nd International Conference on Neural
  Information Processing Systems}, pp.\  6391--6401, 2018{\natexlab{b}}.

\bibitem[Lindskog et~al.(2014)Lindskog, Resnick, Roy,
  et~al.]{lindskog2014regularly}
Filip Lindskog, Sidney~I Resnick, Joyjit Roy, et~al.
\newblock Regularly varying measures on metric spaces: Hidden regular variation
  and hidden jumps.
\newblock \emph{Probability Surveys}, 11:\penalty0 270--314, 2014.

\bibitem[Mahoney \& Martin(2019)Mahoney and Martin]{mahoney2019traditional}
Michael Mahoney and Charles Martin.
\newblock Traditional and heavy tailed self regularization in neural network
  models.
\newblock In \emph{International Conference on Machine Learning}, pp.\
  4284--4293. PMLR, 2019.

\bibitem[Merity et~al.(2018)Merity, Keskar, and Socher]{merity2018regularizing}
Stephen Merity, Nitish~Shirish Keskar, and Richard Socher.
\newblock Regularizing and optimizing {LSTM} language models.
\newblock In \emph{International Conference on Learning Representations}, 2018.
\newblock URL \url{https://openreview.net/forum?id=SyyGPP0TZ}.

\bibitem[Mori et~al.(2021)Mori, Ziyin, Liu, and Ueda]{mori2021logarithmic}
Takashi Mori, Liu Ziyin, Kangqiao Liu, and Masahito Ueda.
\newblock Logarithmic landscape and power-law escape rate of sgd.
\newblock \emph{arXiv preprint arXiv:2105.09557}, 2021.

\bibitem[Netzer et~al.(2011)Netzer, Wang, Coates, Bissacco, Wu, and
  Ng]{netzer2011reading}
Yuval Netzer, Tao Wang, Adam Coates, Alessandro Bissacco, Bo~Wu, and Andrew~Y
  Ng.
\newblock Reading digits in natural images with unsupervised feature learning.
\newblock 2011.

\bibitem[Neyshabur et~al.(2017{\natexlab{a}})Neyshabur, Bhojanapalli,
  McAllester, and Srebro]{neyshabur2017exploring}
Behnam Neyshabur, Srinadh Bhojanapalli, David McAllester, and Nathan Srebro.
\newblock Exploring generalization in deep learning.
\newblock \emph{arXiv preprint arXiv:1706.08947}, 2017{\natexlab{a}}.

\bibitem[Neyshabur et~al.(2017{\natexlab{b}})Neyshabur, Bhojanapalli,
  Mcallester, and Srebro]{NIPS2017_10ce03a1}
Behnam Neyshabur, Srinadh Bhojanapalli, David Mcallester, and Nati Srebro.
\newblock Exploring generalization in deep learning.
\newblock In I.~Guyon, U.~V. Luxburg, S.~Bengio, H.~Wallach, R.~Fergus,
  S.~Vishwanathan, and R.~Garnett (eds.), \emph{Advances in Neural Information
  Processing Systems}, volume~30. Curran Associates, Inc., 2017{\natexlab{b}}.
\newblock URL
  \url{https://proceedings.neurips.cc/paper/2017/file/10ce03a1ed01077e3e289f3e53c72813-Paper.pdf}.

\bibitem[Nguyen et~al.(2019)Nguyen, Simsekli, and Richard]{nguyen2019non}
Than~Huy Nguyen, Umut Simsekli, and Ga{\"e}l Richard.
\newblock Non-asymptotic analysis of fractional langevin monte carlo for
  non-convex optimization.
\newblock In \emph{International Conference on Machine Learning}, pp.\
  4810--4819. PMLR, 2019.

\bibitem[Panigrahi et~al.(2019)Panigrahi, Somani, Goyal, and
  Netrapalli]{panigrahi2019non}
Abhishek Panigrahi, Raghav Somani, Navin Goyal, and Praneeth Netrapalli.
\newblock Non-gaussianity of stochastic gradient noise.
\newblock \emph{arXiv preprint arXiv:1910.09626}, 2019.

\bibitem[Pascanu et~al.(2013)Pascanu, Mikolov, and
  Bengio]{pascanu2013difficulty}
Razvan Pascanu, Tomas Mikolov, and Yoshua Bengio.
\newblock On the difficulty of training recurrent neural networks.
\newblock In \emph{International conference on machine learning}, pp.\
  1310--1318. PMLR, 2013.

\bibitem[Pavlyukevich(2005)]{pavlyukevich2005metastable}
Ilya Pavlyukevich.
\newblock Metastable behaviour of small noise l{\'e}vy-driven diffusion.
\newblock \emph{arXiv preprint math/0601771}, 2005.

\bibitem[Pavlyukevich(2007)]{pavlyukevich2007cooling}
Ilya Pavlyukevich.
\newblock Cooling down l{\'e}vy flights.
\newblock \emph{Journal of Physics A: Mathematical and Theoretical},
  40\penalty0 (41):\penalty0 12299, 2007.

\bibitem[Protter(2005)]{protter2005stochastic}
Philip~E Protter.
\newblock \emph{Stochastic integration and differential equations}.
\newblock Springer, 2005.

\bibitem[Resnick(2007)]{resnick2007heavy}
Sidney~I Resnick.
\newblock \emph{Heavy-tail phenomena: probabilistic and statistical modeling}.
\newblock Springer Science \& Business Media, 2007.

\bibitem[Rhee et~al.(2019)Rhee, Blanchet, Zwart, et~al.]{rhee2019sample}
Chang-Han Rhee, Jose Blanchet, Bert Zwart, et~al.
\newblock Sample path large deviations for l{\'e}vy processes and random walks
  with regularly varying increments.
\newblock \emph{The Annals of Probability}, 47\penalty0 (6):\penalty0
  3551--3605, 2019.

\bibitem[Simonyan \& Zisserman(2014)Simonyan and Zisserman]{simonyan2014very}
Karen Simonyan and Andrew Zisserman.
\newblock Very deep convolutional networks for large-scale image recognition.
\newblock \emph{arXiv preprint arXiv:1409.1556}, 2014.

\bibitem[{\c{S}}im{\c{s}}ekli et~al.(2019{\natexlab{a}}){\c{S}}im{\c{s}}ekli,
  G{\"u}rb{\"u}zbalaban, Nguyen, Richard, and Sagun]{csimcsekli2019heavy}
Umut {\c{S}}im{\c{s}}ekli, Mert G{\"u}rb{\"u}zbalaban, Thanh~Huy Nguyen,
  Ga{\"e}l Richard, and Levent Sagun.
\newblock On the heavy-tailed theory of stochastic gradient descent for deep
  neural networks.
\newblock \emph{arXiv preprint arXiv:1912.00018}, 2019{\natexlab{a}}.

\bibitem[{\c{S}}im{\c{s}}ekli et~al.(2019{\natexlab{b}}){\c{S}}im{\c{s}}ekli,
  Sagun, and Gurbuzbalaban]{simsekli2019tail}
Umut {\c{S}}im{\c{s}}ekli, Levent Sagun, and Mert Gurbuzbalaban.
\newblock A tail-index analysis of stochastic gradient noise in deep neural
  networks.
\newblock In \emph{International Conference on Machine Learning}, pp.\
  5827--5837. PMLR, 2019{\natexlab{b}}.

\bibitem[Srinivasan et~al.(2021)Srinivasan, Prasad, Balakrishnan, and
  Ravikumar]{srinivasan2021efficient}
Vishwak Srinivasan, Adarsh Prasad, Sivaraman Balakrishnan, and Pradeep~Kumar
  Ravikumar.
\newblock Efficient estimators for heavy-tailed machine learning, 2021.
\newblock URL \url{https://openreview.net/forum?id=5K8ZG9twKY}.

\bibitem[Wojtowytsch(2021)]{wojtowytsch2021stochastic}
Stephan Wojtowytsch.
\newblock Stochastic gradient descent with noise of machine learning type. part
  ii: Continuous time analysis.
\newblock \emph{arXiv preprint arXiv:2106.02588}, 2021.

\bibitem[Wu et~al.(2018)Wu, Ma, et~al.]{wu2018sgd}
Lei Wu, Chao Ma, et~al.
\newblock How sgd selects the global minima in over-parameterized learning: A
  dynamical stability perspective.
\newblock \emph{Advances in Neural Information Processing Systems},
  31:\penalty0 8279--8288, 2018.

\bibitem[Xiao et~al.(2017)Xiao, Rasul, and Vollgraf]{xiao2017/online}
Han Xiao, Kashif Rasul, and Roland Vollgraf.
\newblock Fashion-mnist: a novel image dataset for benchmarking machine
  learning algorithms, 2017.

\bibitem[Xie et~al.(2020)Xie, Sato, and Sugiyama]{xie2020diffusion}
Zeke Xie, Issei Sato, and Masashi Sugiyama.
\newblock A diffusion theory for deep learning dynamics: Stochastic gradient
  descent exponentially favors flat minima.
\newblock \emph{arXiv preprint arXiv:2002.03495}, 2020.

\bibitem[Zhang et~al.(2020)Zhang, He, Sra, and Jadbabaie]{Zhang2020Why}
Jingzhao Zhang, Tianxing He, Suvrit Sra, and Ali Jadbabaie.
\newblock Why gradient clipping accelerates training: A theoretical
  justification for adaptivity.
\newblock In \emph{International Conference on Learning Representations}, 2020.
\newblock URL \url{https://openreview.net/forum?id=BJgnXpVYwS}.

\bibitem[Zhou et~al.(2020)Zhou, Feng, Ma, Xiong, Hoi, et~al.]{zhou2020towards}
Pan Zhou, Jiashi Feng, Chao Ma, Caiming Xiong, Steven Hoi, et~al.
\newblock Towards theoretically understanding why sgd generalizes better than
  adam in deep learning.
\newblock \emph{arXiv preprint arXiv:2010.05627}, 2020.

\bibitem[Zhu et~al.(2019)Zhu, Wu, Yu, Wu, and Ma]{pmlr-v97-zhu19e}
Zhanxing Zhu, Jingfeng Wu, Bing Yu, Lei Wu, and Jinwen Ma.
\newblock The anisotropic noise in stochastic gradient descent: Its behavior of
  escaping from sharp minima and regularization effects.
\newblock In Kamalika Chaudhuri and Ruslan Salakhutdinov (eds.),
  \emph{Proceedings of the 36th International Conference on Machine Learning},
  volume~97 of \emph{Proceedings of Machine Learning Research}, pp.\
  7654--7663. PMLR, 09--15 Jun 2019.
\newblock URL \url{http://proceedings.mlr.press/v97/zhu19e.html}.

\end{thebibliography}
\bibliographystyle{iclr2022_conference}

\appendix
\counterwithin{equation}{section}
\counterwithin{theorem}{section}
\counterwithin{definition}{section}
\counterwithin{figure}{section}
\counterwithin{table}{section}
\counterwithin{assumption}{section}
\section{Details of Numerical Experiments}
\label{sec: experiment details appendix}

\subsection{Details of the $\R^1$ simulation experiment}

The function $f$ used in the experiments is
\begin{equation}\label{aeq:f}
\begin{aligned}
    & f(x)= (x+1.6)(x+1.3)^2(x-0.2)^2(x-0.7)^2(x-1.6)\big(0.05|1.65-x|\big)^{0.6} \\
    & \ \ \cdot \Big( 1 + \frac{1}{ 0.01 + 4(x-0.5)^2  } \Big)\Big( 1 + \frac{1}{0.1 + 4(x+1.5)^2} \Big)\Big( 1 - \frac{1}{4}\exp( -5(x + 0.8)(x + 0.8)  ) \Big).
\end{aligned}
\end{equation}

\begin{figure}[ht]
\vskip 0.2in
\begin{center}
\centerline{\includegraphics[width=0.5\textwidth]{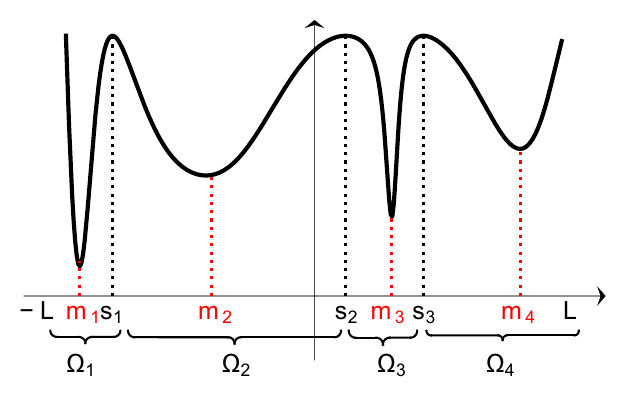} }
\caption{Illustration of the test function $f$ used in the $\R^1$ experiment.}
\label{fig illustration appendix}
\end{center}
\vskip -0.2in
\end{figure}

As shown in Figure \ref{fig illustration appendix},  the four isolated local minimizers of $f$ are $m_1 = -1.51, s_1 = -1.3, m_2 = -0.66, s_2 = 0.2, m_3 = 0.49, s_3 = 0.7, m_4 = 1.32$, and in our experiment we restrict the iterates on $[-L,L]$ with $L = 1.6$. The heavy-tailed noises we used in the experiment were $Z_n = 0.1U_n W_n$ where $W_n$ were sampled from Pareto Type II distribution (aka Lomax distribution) with shape parameter $\alpha = 1.2$, and the signs $U_n$ were iid RVs such that $\P(U_n = 1) = \P(U_n = -1) = 1/2$. 

In the first exit time experiment, we tested three different settings: (a) $b = 0.28$ (so that $l^*=3$); (b) $b = 0.5$ (so that $l^* = 2$); (c) no gradient clipping (so that $l^*_2 = 1$). 
For the first case, we tested learning rates $\{0.1,0.05,0.03,0.02,0.01,0.005,0.003,0.001\}$, while for the other two cases, we tested learning rates $\{0.1,0.03,0.01,0.003,0.001,0.0003,0.0001\}$. For each case, we ran the simulation 20 times and plotted the average of the 20 exit times. Lastly, to prevent excessively long running time of the experiment, the simulation was terminated when the iteration number reached $5\times 10^7$. This threshold was reached only in the setting with $\eta = 0.001, b = 0.28$.

\begin{figure}[ht]
\vskip 0.2in
\begin{center}
\centerline{
\includegraphics[width=0.3\textwidth]{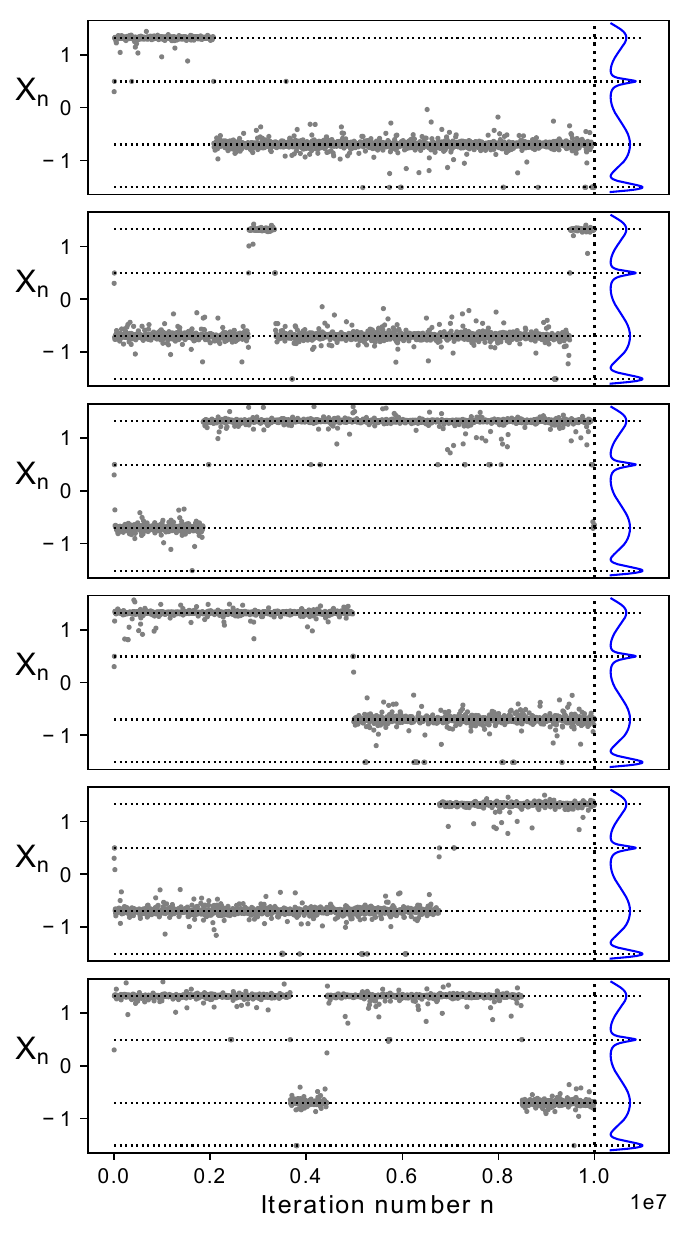} 
\includegraphics[width=0.3\textwidth]{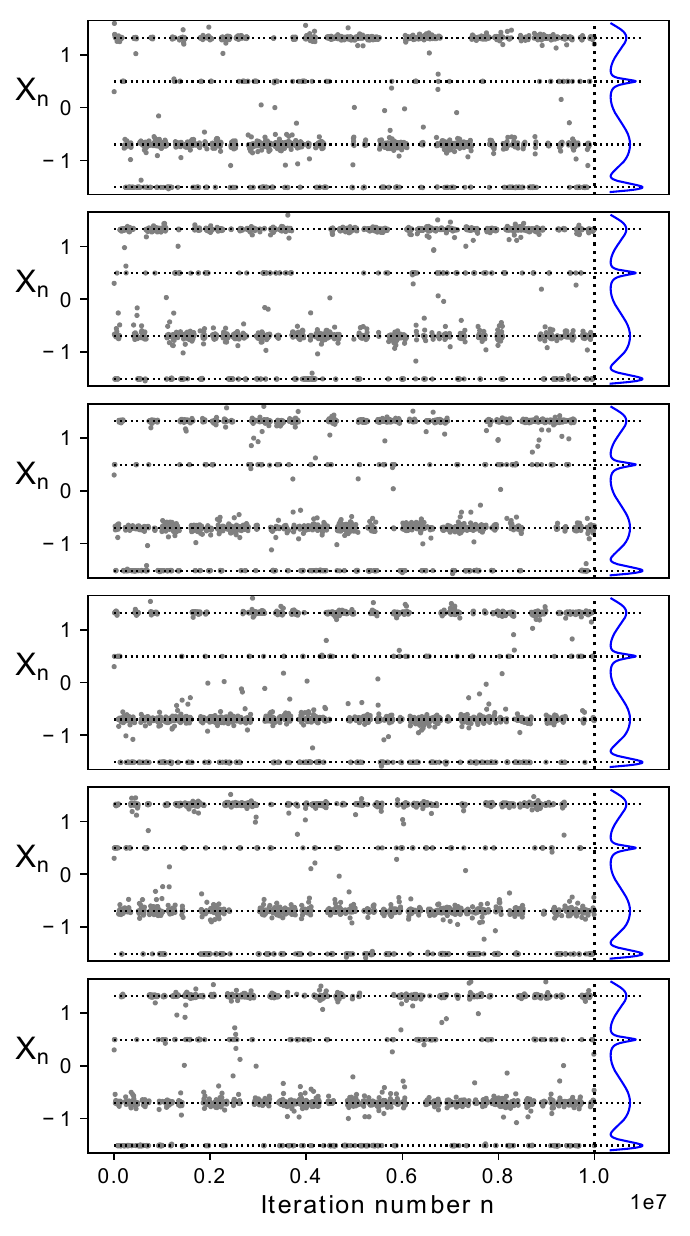}
}
\caption{Five sample paths of SGD under heavy-tailed noises with gradient clipping (left) and without gradient clipping (right). Note that in each case, SGD sample paths exhibit similar patters: SGD almost completely avoided sharp minima with gradient clipping, whereas SGD spent significant amount of time at the sharp minima without gradient clipping.}
\label{fig appendix sample path heavy}
\end{center}
\vskip -0.2in
\end{figure}

\begin{figure}[ht]
\vskip 0.2in
\begin{center}
\centerline{
\includegraphics[width=0.3\textwidth]{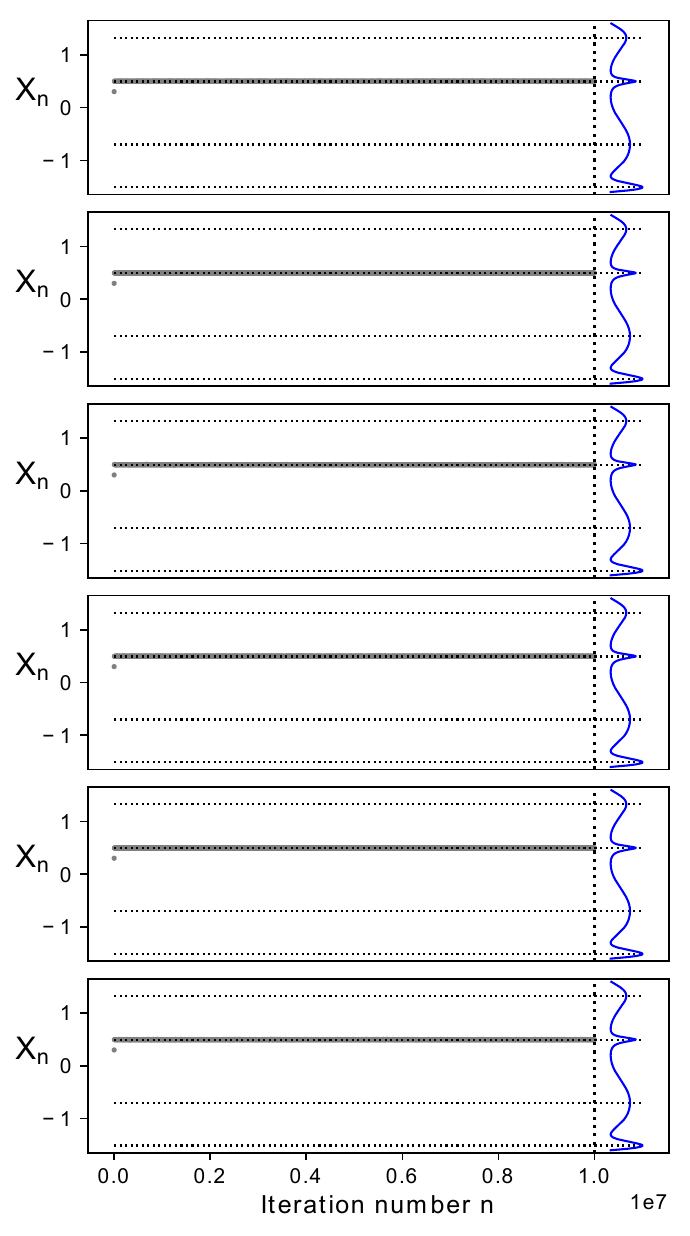} 
\includegraphics[width=0.3\textwidth]{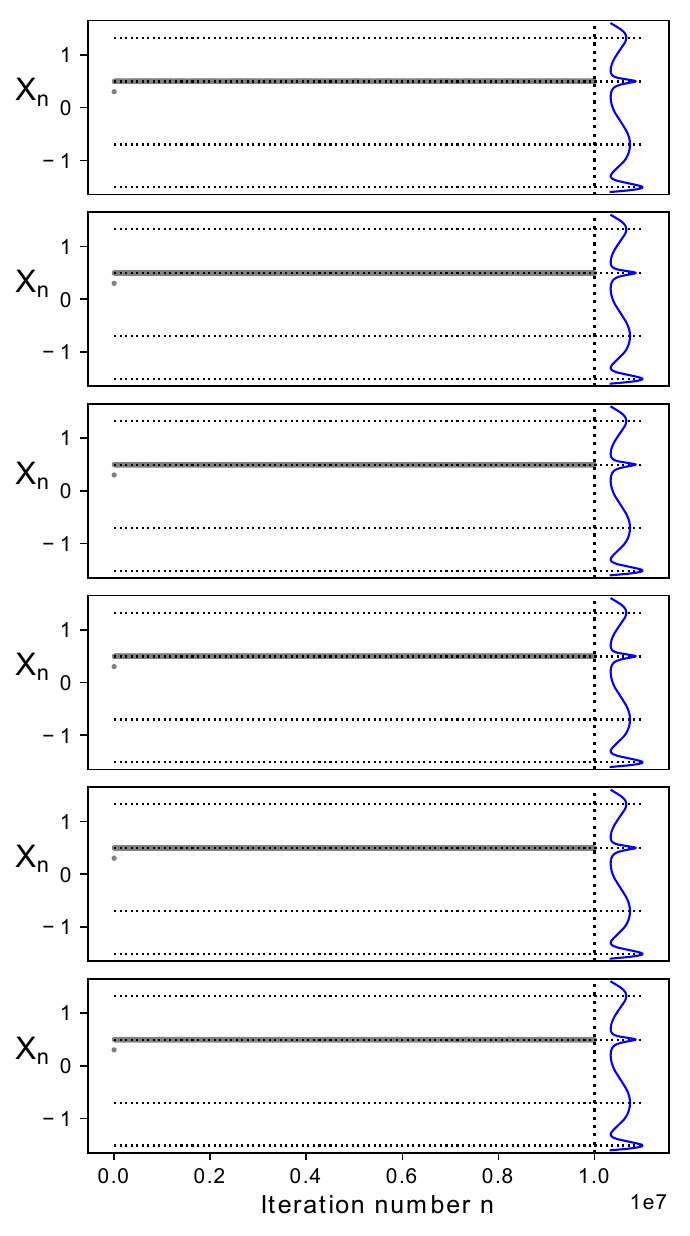}
}
\caption{Five sample paths of SGD under light-tailed noises with gradient clipping (left) and without gradient clipping (right). Note that regardless of the use of gradient clipping, SGD never manages to escape the local minimum that it started from.}
\label{fig appendix sample path light}
\end{center}
\vskip -0.2in
\end{figure}

Next, we present extra sample paths of SGD when applied to function $f$ in \cref{aeq:f} in Figure~\ref{fig appendix sample path heavy} and \ref{fig appendix sample path light}. 
The blue curve on the right side of each plot shows $f$ rotated by 90 degrees, and the dashed lines indicate the locations of local minima. For better readability of the figures, we plotted $X_n$ for every 5,000 iterations.
To generate these plots, we initialized the SGD iterates at $0.3$ (so that it is in $\Omega_3 = (0.2,0.7)$) and fixed the learning rate as $\eta = 0.001$. 
Again, we tested both with gradient clipping (with $b = 0.5$) and without gradient clipping. 
Moreover, we also tested \textbf{light-tailed} noises where we use $N(0,1)$ as the distribution for noises $Z_n$. 
For each sample path of $X_n$, we run $10,000,000$ iterations. 
In the left plots of Figure~\ref{fig appendix sample path heavy}, one can see that with clipped heavy-tailed stochastic gradients, the SGD iterates almost always stay around the wide attraction fields, and the sharp minima are almost completely eliminated from the trajectories of SGD. 
In comparison, in the right plots of Figure~\ref{fig appendix sample path heavy} one can see that without gradient clipping, the heavy-tailed noises will drive SGD to spend substantial amount of time in all the different local minima, including the sharp ones. 
Lastly, from Figures \ref{fig appendix sample path light}, one can see that under light-tailed noises and small learning rates, SGD cannot escape a sharp minima once trapped there. 

\subsection{Details of the $\R^d$ simulation experiment}

As illustrated in the contour plot in Figure \ref{fig Rd simulation} (a), the function $f$ in this experiment is a modified version of Himmelblau function, a commonly used test function for optimization algorithm. The modifications serve two purposes. First, as shown in Figure \ref{fig Rd simulation} (b), for the modified function the four attraction fields $\Omega_1,\Omega_2,\Omega_3,\Omega_4$ have different sizes; in particular, under gradient clipping threshold $b=2.15$, from the local minimizers of $\Omega_1$ and $\Omega_2$ (indicated by red dots in the corresponding area) at least two jumps are required to escape from the attraction field, while from the local minimizer in $\Omega_3$ or $\Omega_4$ it is possible to escape with one jump. Therefore, for the minimum jump number required to escape, we have $l^*_1 = l^*_2 = 2 > l^*_3 = l^*_4 = 1$ in this case. Second, for the modified test function $f$, the local minimizer in $\Omega_2$ is not a single point but a connected line segment, which is indicated by the dark line in bottom-left region in Figure \ref{fig Rd simulation} (a) and the red line segment in in Figure \ref{fig Rd simulation} (b). Therefore, the modification allows us to test the heavy-tailed SGD methods on a more general loss landscape.

Now we describe the construction of the test function $f$. Let $h$ be the Himmelblau function with expression $h(x,y) = (x^2 + y - 11)^2 + (x + y^2 -7 )^2$. Next, define the following transformation for coordinates: $\phi(x,y) = \Big(x(\exp( c_0(x - c_x) + 1 ) ) , y(\exp( c_0(x - c_x) + 1 ) \Big)$. Let the composition be $h_\phi(x,y) = h\Big( \phi(x - a_x, y  ) \Big)$. To create the connected region of local minimizers, define the following locally ``cut'' version of $h_\phi$:
\begin{align*}
i(x,y) & = \mathbbm{1}\{ x \in [b_l,b_r], \ |y-a_y|<b_y  \}, \\
    h^*(x,y) & = (1 - i(x,y))h_\phi(x,y) + i(x,y)\min\{ h_\phi(x,y),\ c_1|y - a_y|^{1.1} \}.
\end{align*}
In other words, by taking minimum of the original $h_\phi$ and a polynomial function w.r.t. $y$ around the original local minimizer of $\Omega_2$, we obtain a function $h^*$ that attains local minimum on an entire line segment with $y = a_y$. Lastly, the test function we use in the experiment is $f = 0.1h^*$, with $a_x = 1.5, a_y = -2.9, b_l = -5.5, b_r = -0.5, b_y = 2.0, c_0 = 0.4, c_1 = 12.$

In the experiment, we initialize the SGD iterates $X_k$ at $X_0 = (2.9, 1.0)$, which is very close to the local minimizer in the small attraction field $\Omega_3$. For both the clipped and unclipped SGD, we perform updates for $3\times 10^7$ steps, under learning rate $5\times 10^{-4}$ and heavy-tailed noise $Z_k = 0.75 W_k$ where the iid samples $W_k$ are isotropic and the law of $\norm{W_k}$, the size of the noise, is Pareto(1.2). For clipped SGD, we use threshold $b = 2.15$. To prevent the iterates from drifting to infinity, after each update $X_k$ is projected back to the $L_2$ ball centered at origin with radius $4.2$ whenever $X_k$ leaves this ball.

\begin{figure}[ht]
\vskip 0.2in
\begin{center}
\centerline{\includegraphics[width=0.8\textwidth]{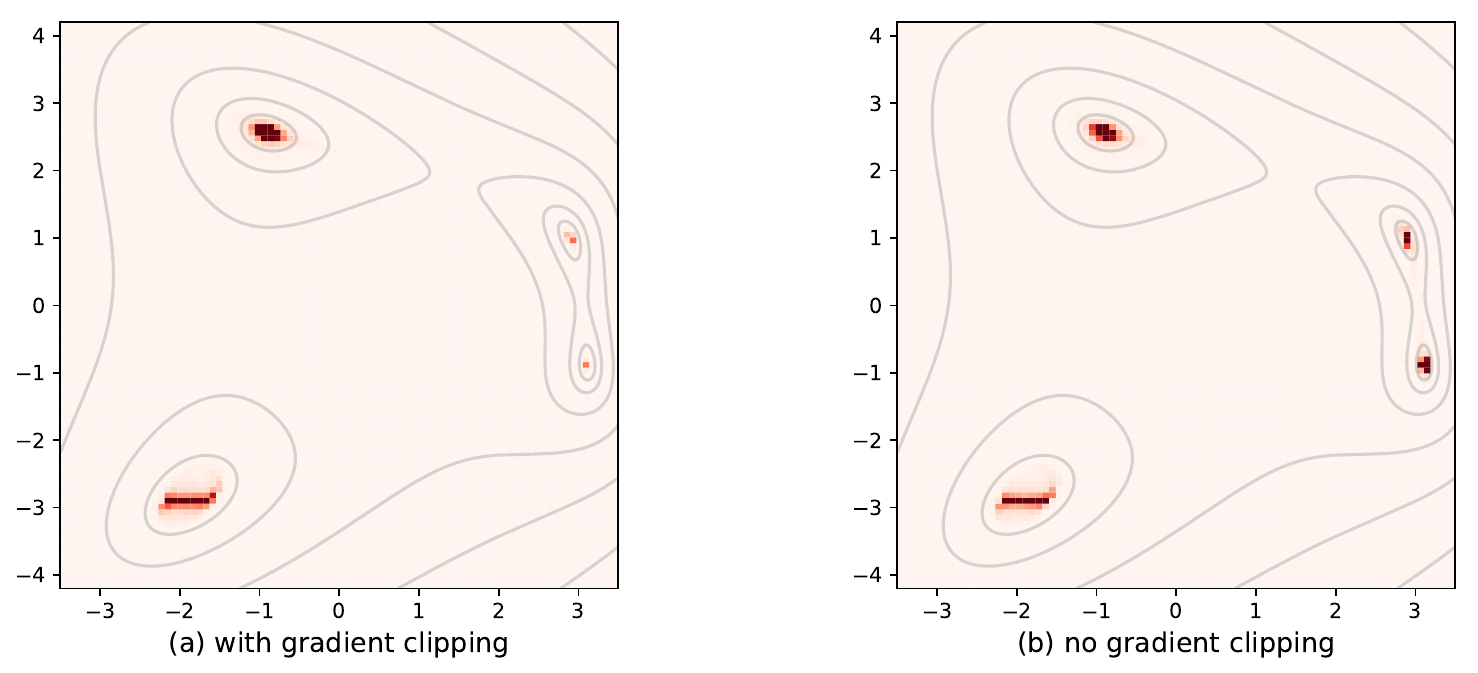} }
\caption{Heat map of SGD iterates when optimizing the modified Himmelblau function.} 
\label{fig Rd simulation heatmap}
\end{center}
\vskip -0.2in
\end{figure}

In Figure \ref{fig Rd simulation heatmap}, we use the $3\times 10^7$ steps of SGD iterates (for both the clipped and unclipped case) to create heat maps showing locations of SGD iterates. From this figure, two points can be made clear: first, the heavy-tailed SGD does spend much less time at the two small attraction fields when gradient clipping is applied; second, in $\Omega_2$ (the bottom-left attraction field) the SGD iterates frequent the entire connected region of local minima instead of a certain point on this line segment.

\subsection{Details of the ablation study}

\begin{figure}[ht]
\vskip 0.2in
\begin{center}
\centerline{\includegraphics[width=0.9\textwidth]{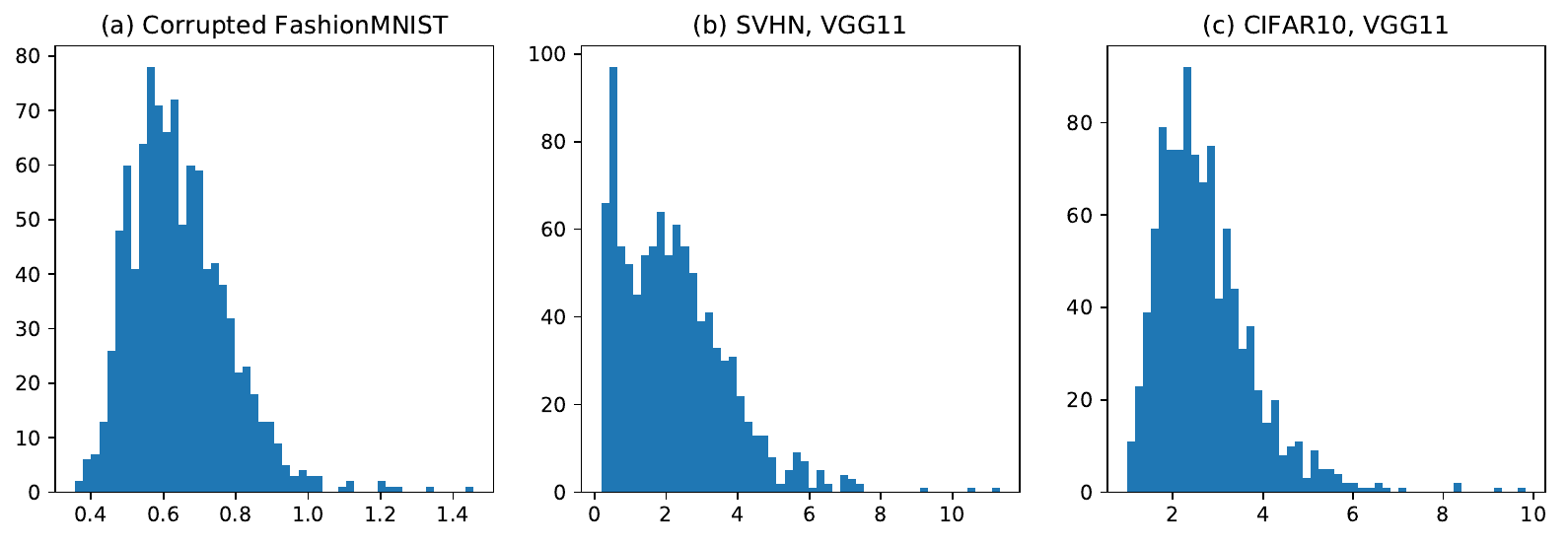} }
\caption{Distribution of gradient noise in different tasks of the ablation study.}
\label{fig grad noise dist ablation study}
\end{center}
\vskip -0.2in
\end{figure}

\begin{figure}[ht]
\vskip 0.2in
\begin{center}
\centerline{\includegraphics[width=0.9\textwidth]{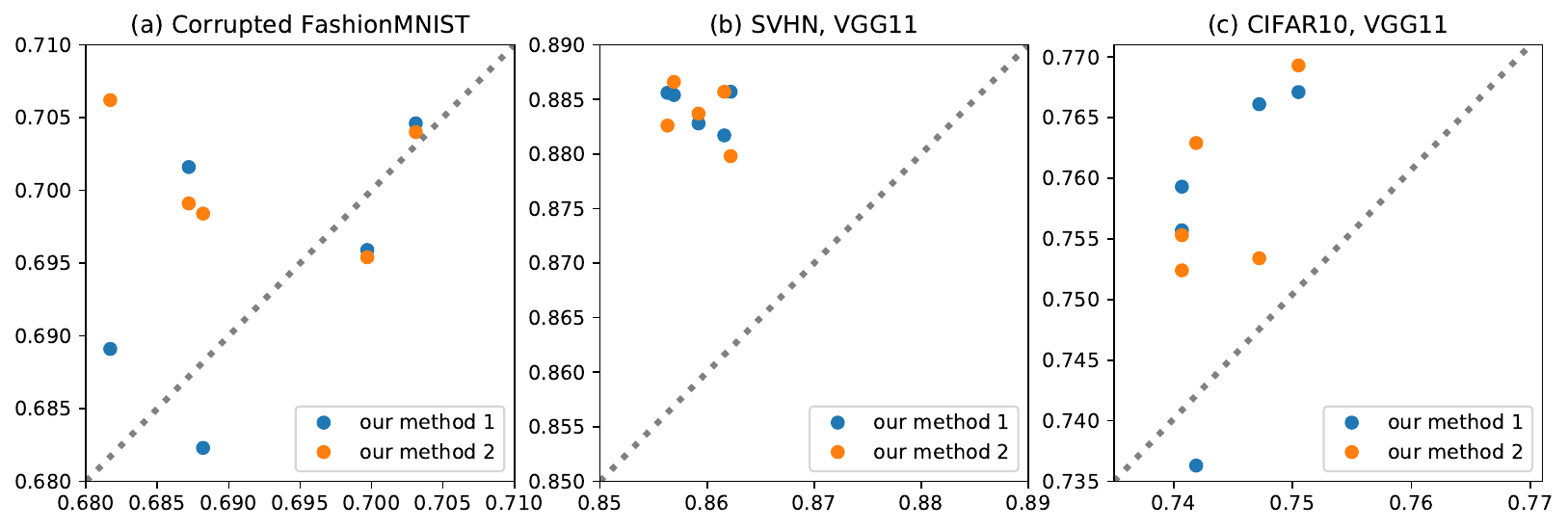} }
\caption{Test accuracy of the proposed clipped heavy-tailed methods vs. test accuracy of vanilla SGD in the ablation study.}
\label{fig accuracy ablation study}
\end{center}
\vskip -0.2in
\end{figure}

We first mention that the all experiments using neural networks are conducted on Nvidia GeForce GTX 1080 Ti. For the ablation study, the experiments and scripts are adapted from the ones in \cite{pmlr-v97-zhu19e}.\footnote{https://github.com/uuujf/SGDNoise}.

In Figure \ref{fig grad noise dist ablation study}, we display the gradient noise distribution in the three tasks of the ablation study after the model is randomly initialized.

The experiment setting and choice of hyperparameters are mostly adapted from the experiment in \cite{pmlr-v97-zhu19e}. We consider three different tasks: (1) training LeNet on corrupted FashionMNIST dataset; specifically, we use a 1200-sample subset of the original FashionMNIST training dataset, and for 200 samples points in the training set we randomly assign a label instead of using the correct ones; (2) VGG11 on SVHN dataset, where we use a 25000-sample subset of the training dataset; (3) VGG11 on CIFAR10, where we use the entire training set. For all tasks we use the entire test dataset when evaluating test accuracy.

\begin{table}
  \caption{Test accuracy (percentage) and expected sharpness of different methods across different tasks. The reported numbers are the averages and 95\%CI over 5 replications. }
  \label{table appendix ablation study with CI}
  \centering
  
  \begin{tabular}{llll}
  \hline 
  Test Accuracy & Corrupted FMNIST, LeNet & SVHN, VGG11  & CIFAR10, VGG11 \\
  \hline 
  LB & 68.7$\pm$0.4 & 82.9$\pm$0.4 & 69.4$\pm$0.5 \\
  SB & 69.2$\pm$0.8 & 85.9$\pm$0.2 & 74.4$\pm$0.4 \\
  SB + Clip & 68.8$\pm$0.6 & 85.9$\pm$0.2 & 74.4$\pm$0.8 \\
  SB + Noise & 64.4$\pm$3.4 & 38.9$\pm$24.1 & 40.5$\pm$25.1 \\
  Our 1 & 69.5$\pm$0.8 & \textbf{88.4}$\pm$\textbf{0.2} & 75.7$\pm$1.1 \\
  Our 2 & \textbf{70.1}$\pm$\textbf{0.4} & \textbf{88.4}$\pm$\textbf{0.2} & \textbf{75.9}$\pm$\textbf{0.7} \\
  \hline
  Expected Sharpness & Corrupted FMNIST, LeNet & SVHN, VGG11  & CIFAR10, VGG11 \\
  \hline 
  LB & 0.032$\pm$0.006 & 0.694$\pm$0.048 & 2.043$\pm$0.083 \\
  SB & 0.008$\pm$0.001 & 0.037$\pm$0.007 & 0.050$\pm$0.013 \\
  SB + Clip & 0.009$\pm$0.001 & 0.041$\pm$0.006 & 0.039$\pm$0.019 \\
  SB + Noise & 0.047$\pm$0.02 & 0.012$\pm$0.009 & 2.046$\pm$2.4 \\
  Our 1 & 0.003$\pm$0.0003 & \textbf{0.002}$\pm$\textbf{0.0007} & \textbf{0.024}$\pm$\textbf{0.005} \\
  Our 2 & \textbf{0.002}$\pm$\textbf{0.0002} & 0.005$\pm$0.004 & 0.037$\pm$0.007 \\
  \hline 
  \end{tabular}
\end{table}

\begin{table}
  \caption{Hyperparameters for training in the ablation study}
  \label{table appendix parameters ablation study}
  \centering
  \begin{tabular}{llll}
    \toprule
    Hyperparameters    &  FashionMNIST, LeNet &  SVHN, VGG11  & CIFAR10, VGG11   \\
    \midrule
    learning rate & 0.05 & 0.05 & 0.05 \\
    batch size for $g_{SB}$ & 100 & 100 & 100 \\
    training iterations & 10,000 & 30,000 & 30,000 \\
    gradient clipping threshold & 5 & 20 & 20 \\
    $c$ & 0.5 & 0.5 & 0.5 \\
    $\alpha$ & 1.4 & 1.4 & 1.4 \\
    \bottomrule
  \end{tabular}
\end{table}

\begin{table}
  \caption{Sharpness of different methods across different tasks. The reported numbers are the averages over 5 replications. }
  \label{table ablation study more sharpness metrics}
  \centering
  \begin{tabular}{llll}
  \hline 
  PAC-Bayes Sharpness & Corrupted FMNIST, LeNet & SVHN, VGG11  & CIFAR10, VGG11 \\
  \hline 
  LB & $5.9\times 10^3$ & $2.97\times 10^4$ & $4.87 \times 10^4$ \\
  SB & $3 \times 10^3$ & $6.9 \times 10^3$ & $7.2 \times 10^3$ \\
  SB + Clip & $3.3 \times 10^3$ & $7.3\times 10^3$ & $6.8 \times 10^3$ \\
  SB + Noise & $3.1 \times 10^3$ & $7.76 \times 10^4$ & $6.74 \times 10^4$ \\
  Our 1 & $1.9 \times 10^3$ & $\bm{2.1 \times 10^3}$ & $\bm{4.8 \times 10^3}$ \\
  Our 2 & $\bm{1.6 \times 10^3}$ & $2.3 \times 10^3$ & $5.8 \times 10^3$ \\
  \hline
  Maximal Sharpness & Corrupted FMNIST, LeNet & SVHN, VGG11  & CIFAR10, VGG11 \\
  \hline 
  LB & $1.01 \times 10^4$ & $3.78 \times 10^4$ & $5.46 \times 10^4$ \\
  SB & $4.9 \times 10^3$ & $9.1 \times 10^3$ & $8.5 \times 10^3$ \\
  SB + Clip & $5.4 \times 10^3$ & $9.3 \times 10^3$ & $8 \times 10^3$ \\
  SB + Noise & $5.4 \times 10^3$ & $1.19 \times 10^5$ & $1.18 \times 10^5$ \\
  Our 1 & $3.2 \times 10^3$ & $\bm{2.5 \times 10^3}$ & $\bm{5.8 \times 10^3}$ \\
  Our 2 & $\bm{2.5 \times 10^3}$ & $2.8 \times 10^3$ & $6.5 \times 10^3$ \\
  \hline 
  \end{tabular}
\end{table}

The heavy-tailed multipliers $Z_n$ used in this experiment, whenever heavy-tailed noise is needed, are $Z_n = cW_n$ where $W_n$ are iid Pareto($\alpha$) RVs. For each task, we first randomly initialize each model, and then run the 6 candidate methods in parallel starting from the same randomly initialized model weights for a fair comparison.

The hyperparameters in training for each task are listed in Table \ref{table appendix parameters ablation study}. The same set of hyperparameters is used for all methods in the same task. Whenever gradient clipping scheme is applied, we clip the gradient if its $L_2$ norm exceeds the threshold given in Table \ref{table appendix parameters ablation study}. The exception here is the ``\emph{SB + Noise}'' method: we use learning rate $\eta = 0.005$; for FashionMNIST task we train for 100,000 iterations and the heavy-tailed noise is removed for the final 50,000 iterations; for SVHN and CIFAR10 tasks, we train for 150,000 iterations and heavy-tailed noise is removed for the last 70,000 iterations. Besides, for this method we always clip the model weights if its $L_\infty$ norm exceeds 1. The reason for the extra tuning and extended training in ``\emph{SB + Noise}'' method is that, without the said modifications, in all three tasks we observed that the model weights quickly drift to infinity and explodes; even with the weight clipping implemented, the model performance stays at random level with no signs of improvements if we do not tune down learning rate. 

In Table \ref{table ablation study more sharpness metrics}, we also report the sharpness of solutions under different shaprness metrics.
First, the \emph{PAC-Bayes Sharpness} metric (see equation (53) in \cite{jiang2019fantastic}) is defined as $1/\sigma^2$ where $\sigma$ is equal to the smallest $\delta$ that induces a 0.1 expected sharpness, and reflects the sharpness/flatness parameter used in studies on generalization gaps under the PAC-Bayes framework (see \cite{neyshabur2017exploring}). 
Besides, the \emph{Maximal Sharpness} metric (see equation (54) in \cite{jiang2019fantastic}) is defined as $1/\sigma^2$ where $\sigma$ is equal to the smallest radius $\delta$ that makes $\max_{ \norm{\nu}_{\infty} \leq \delta }| L(\theta^* + \nu) - L(\theta^*) | \geq 0.1$, and metrics of form $\max_{ \norm{\nu} \leq \delta }| L(\theta^* + \nu) - L(\theta^*) |$ can be considered as a proxy for the spectral norm of the Hessian at the solution (see \cite{dinh2017sharp}). 
It worth noticing that, for all three sharpness metrics, the smaller the value is the "flatter" the loss landscape is around the solution.
Lastly, for evaluation of the PAC-Bayes Sharpness and Maximal Sharpness metrics, we conduct binary search as in Algorithm 2 of \cite{jiang2019fantastic} with $\epsilon_d = 0.01, \epsilon_{\sigma} = 0, M_1 = 10$ and $M_2 = 100$; in our setting we always evaluate the training loss using one sweep of the entire training set, so $M_3$ is a case-specific and is equal to the number of batches of the training set under the batch size for the task at hand.

In Figure \ref{fig accuracy ablation study}, we plot the test accuracy of our method against that of the SGD for all 5 replications and 3 tasks.

\subsection{Details of CIFAR10/100 experiments with data augmentation}

\begin{table}
  \caption{Results and 95\% CI in the experiments with data augmentation.}
  \label{table appendix data augmentation training}
  \centering
  \begin{tabular}{llll}
    \hline
    Test Accuracy    & SB + Clip & Our 1 & Our 2 \\
    \hline
    CIFAR10, VGG11       & 89.5$\pm$0.2 & \textbf{90.7}$\pm$\textbf{0.1} & 90.5$\pm$0.2 \\
    CIFAR100, VGG16      & 56.3$\pm$0.3 & \textbf{65.4}$\pm$\textbf{1.2} & 63.0$\pm$2.5 \\
   \hline
    Expected Sharpness   & SB + Clip & Our 1 & Our 2 \\
   \hline
    CIFAR10, VGG11             & 0.17$\pm$0.005 & \textbf{0.09}$\pm$\textbf{0.004} & 0.10$\pm$0.003 \\ 
    CIFAR100, VGG16     & 0.86$\pm$0.02 & \textbf{0.44}$\pm$\textbf{0.05} & 0.48$\pm$0.07 \\
    \hline
    
 \end{tabular}
\end{table}

\begin{table}
  \caption{Sharpness of solutions obtained by different methods in CIFAR10/100 tasks with data augmentation. Numbers reported here are the average of 5 replications.}
  \label{table appendix data augmentation training}
  \centering
 \begin{tabular}{llll}
    \hline 
    CIFAR10-VGG11 & SB + Clip & Our 1 & Our 2 \\
    \hline
    Expected Sharpness & 0.167 & \textbf{0.085} & 0.096 \\
    PAC-Bayes Sharpness & $1.31 \times 10 ^4$ & $\bm{9 \times 10^3}$ & $10^4$ \\
    Maximal Sharpness & $1.66 \times 10^4$ & $1.29 \times 10^4$ & $\bm{1.22 \times 10^4}$ \\
    \hline
    CIFAR100-VGG16 & SB + Clip & Our 1 & Our 2 \\
    \hline
    Expected Sharpness & 0.857 & \textbf{0.441} & 0.479 \\
    PAC-Bayes Sharpness & $2.49 \times 10 ^4$ & $\bm{1.9 \times 10^4}$ & $1.98 \times 10^4$ \\
    Maximal Sharpness & $2.75 \times 10^4$ & $\bm{2.12 \times 10^4}$ & $2.16 \times 10^4$ \\
    \hline
    
 \end{tabular}
\end{table}

 For both methods, we train the model for 300 epochs and set the initial learning rate as 0.1. In our method, the training can be partitioned into two phases. In the first phase (the first 200 epochs), the learning rate is kept at a constant. In the second phase, for every 30 epoch we reduce the learning rate by half. Also, an $L_2$ weight decaying with coefficient $5\times 10^{-4}$ is enforced. As for parameters for heavy-tailed noises in \cref{update our heavy tailed method}, we use $c = 0.5$ and $\alpha = 1.4$ in the first phase, and in the second phase we remove heavy-tailed noise and use \emph{SB} to update weights. In both methods for the small-batch direction $g_{SB}$ the batch size is 128, while for $g_{LB}$ we evaluate the gradient on a large sample batch of size 1,024. Under the epoch number 300 and batch size 128, the count of total iterations performed during training is $1.17\times 10^5$. To augment the dataset, random horizontal flipping and cropping with padding size 4 is applied for each training batch. Lastly, gradient clipping scheme is applied for both methods, and we fix $b = 0.5$. In other words, when the learning rate is $\eta$ (note that due to the scheduling of learning rates, $\eta$ will be changing throughout the training), the gradient is clipped if its $L_2$ norm is larger than $b/\eta$. The scripts are adapted from the ones in \href{https://github.com/chengyangfu/pytorch-vgg-cifar10}{https://github.com/chengyangfu/pytorch-vgg-cifar10}.
 
These results are presented in Table~\ref{table data augmentation training}.
Furthermore, in Table~\ref{table appendix data augmentation training} we see that our truncated heavy-tailed method also manages to find solutions with a flatter geometry.

\subsection{Discussion on Gradient Noise Distributions in the Experiments}
In this subsection, we (i) present the empirical evidence that supports our characterization of the baseline model—i.e., the absence of heavy tails in the gradient noise—and (ii) clarify that the emergence of heavy-tails in the *stationary distribution* of SGD (argued in \cite{hodgkinson2020multiplicative,gurbuzbalaban2020heavy}) does not contradict the observed absence of heavy tails in the *gradient noise* of SGD. This allows us to study the impact of truncated heavy-tails in the gradient noise separately from the choice of hyper-parameters. 

We start with our observations on the noise distributions.
In view of the well-established wisdom in heavy-tail literature that there is no single perfect tail estimator, we analyzed the stochastic gradient noise with four different methods: QQ plot, empirical mean residual life (EMRL), Hill plot, and PLFIT \citep{clauset2009power}. We applied these estimation/diagnostic tools (i) at the beginning of the training, (ii) halfway through the training, (iii) at the end of the training. Throughout all our experiments, we consistently observe strong evidence that the gradient noises are light-tailed, and even if (against all odds) the noises were from a heavy-tailed distribution, the tail index should be far greater (hence, the resulting tail is much lighter) than the heavy-tails we inject (or the popular alpha stable assumption), and hence, the point we make with our tail-inflation experiment is still valid. We summarize the results as follows.

First, QQ plots below (Fig. \ref{fig: appendix QQ start}-\ref{fig: appendix QQ end}) clearly show that the tails in noise distribution are always much lighter than the Pareto distributions with $\alpha = 2$ or even 10. Therefore, the typical power-law assumption, especially the alpha-stable distributions in \cite{simsekli2019tail} (with $\alpha \in (0,2)$), seems far from the distribution of the actual data we obtained in the image classification tasks. In fact, the tail of the noise distributions seems to be between that of lognormal and normal distributions, implying that it is lighter than any power-law distribution.

\begin{figure}[h!]
\begin{center}
\begin{tabular}{c c c c c}
Pareto, $\alpha = 2$ & Pareto, $\alpha = 5$ & Pareto, $\alpha = 10$ & Lognormal & Normal \\
\includegraphics[width=0.15\textwidth]{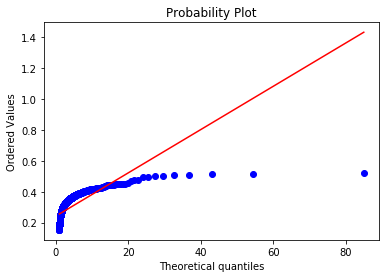}  &
\includegraphics[width=0.15\textwidth]{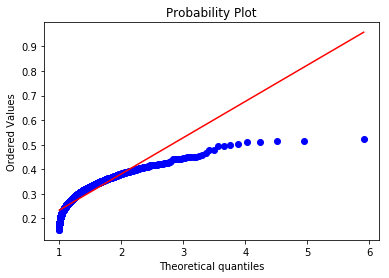}  &
\includegraphics[width=0.15\textwidth]{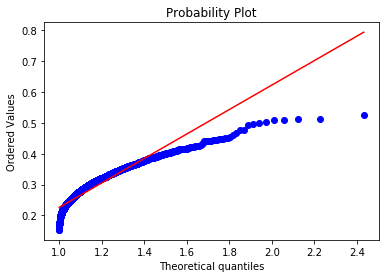}  &
\includegraphics[width=0.15\textwidth]{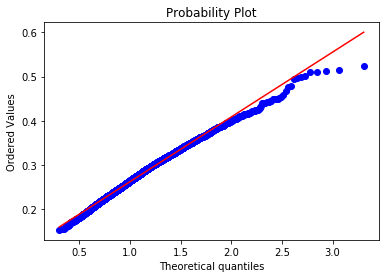}  &
\includegraphics[width=0.15\textwidth]{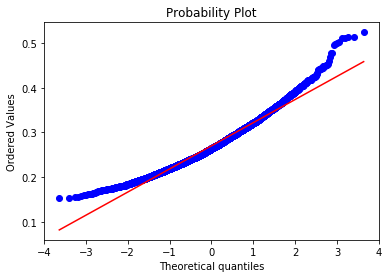}
\\
\end{tabular}
\caption{Ablation Study, Corrupted FMNIST \& LeNet: At the beginning}
\label{fig: appendix QQ start}
\end{center}
\end{figure}

\begin{figure}[h!]
\begin{center}
\begin{tabular}{c c c c c}
Pareto, $\alpha = 2$ & Pareto, $\alpha = 5$ & Pareto, $\alpha = 10$ & Lognormal & Normal \\
\includegraphics[width=0.15\textwidth]{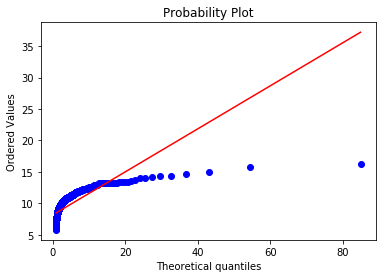}  &
\includegraphics[width=0.15\textwidth]{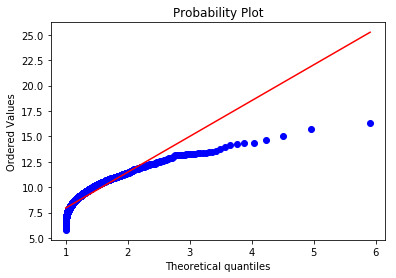}  &
\includegraphics[width=0.15\textwidth]{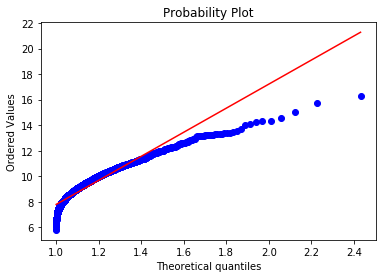}  &
\includegraphics[width=0.15\textwidth]{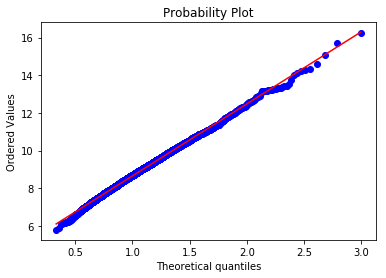}  &
\includegraphics[width=0.15\textwidth]{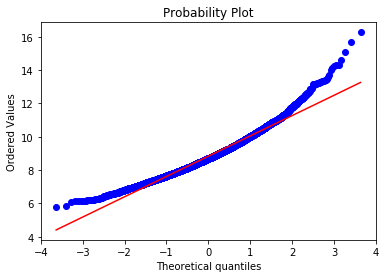}
\\
\end{tabular}
\caption{Ablation Study, Corrupted FMNIST \& LeNet: Half way through the training}
\end{center}
\end{figure}

\begin{figure}[h!]
\begin{center}
\begin{tabular}{c c c c c}
Pareto, $\alpha = 2$ & Pareto, $\alpha = 5$ & Pareto, $\alpha = 10$ & Lognormal & Normal \\
\includegraphics[width=0.15\textwidth]{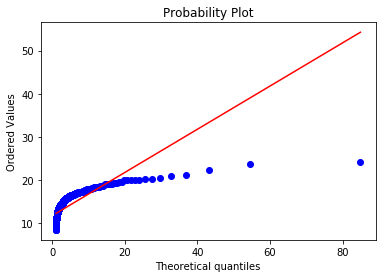}  &
\includegraphics[width=0.15\textwidth]{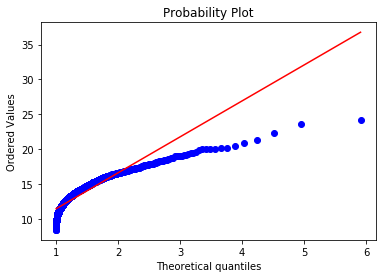}  &
\includegraphics[width=0.15\textwidth]{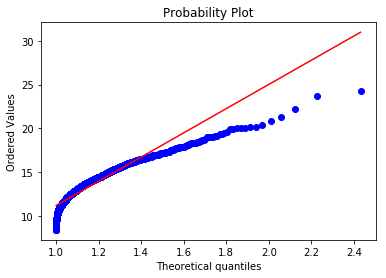}  &
\includegraphics[width=0.15\textwidth]{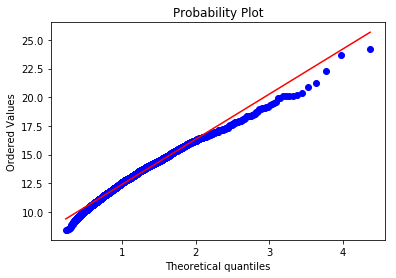}  &
\includegraphics[width=0.15\textwidth]{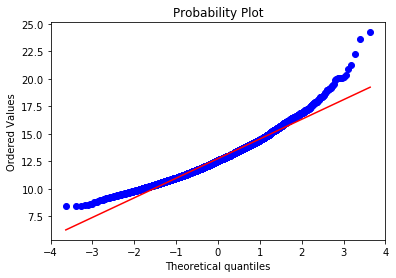}
\\
\end{tabular}
\caption{Ablation Study, Corrupted FMNIST \& LeNet: At the end of training}
\end{center}
\end{figure}


\begin{figure}[h!]
\begin{center}
\begin{tabular}{c c c c c}
Pareto, $\alpha = 2$ & Pareto, $\alpha = 5$ & Pareto, $\alpha = 10$ & Lognormal & Normal \\
\includegraphics[width=0.15\textwidth]{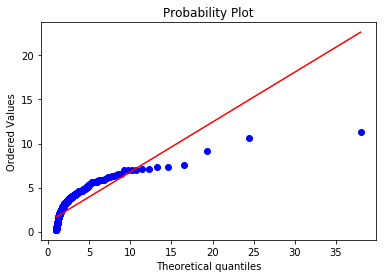}  &
\includegraphics[width=0.15\textwidth]{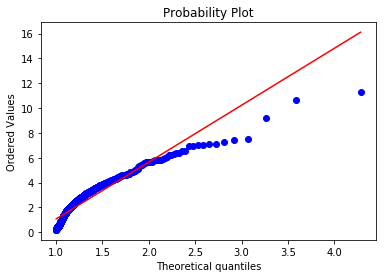}  &
\includegraphics[width=0.15\textwidth]{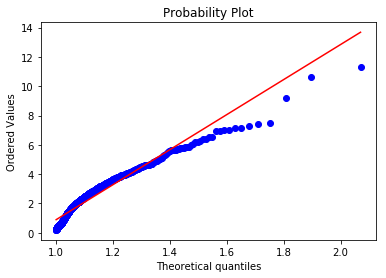}  &
\includegraphics[width=0.15\textwidth]{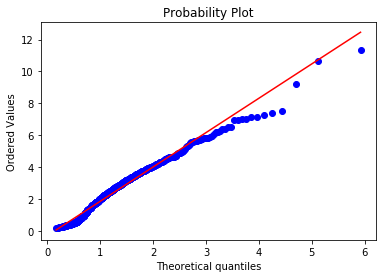}  &
\includegraphics[width=0.15\textwidth]{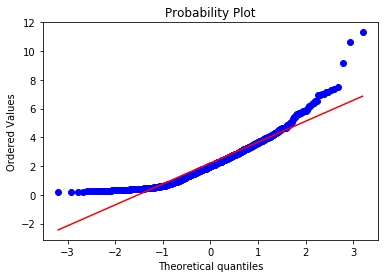}
\\
\end{tabular}
\caption{Ablation Study, SVHN \& VGG11: At the beginning}
\end{center}
\end{figure}

\begin{figure}[h!]
\begin{center}
\begin{tabular}{c c c c c}
Pareto, $\alpha = 2$ & Pareto, $\alpha = 5$ & Pareto, $\alpha = 10$ & Lognormal & Normal \\
\includegraphics[width=0.15\textwidth]{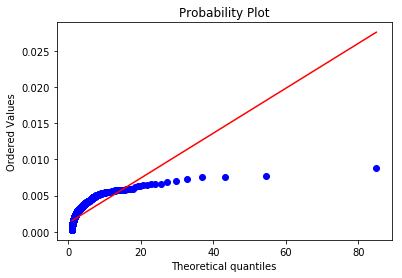}  &
\includegraphics[width=0.15\textwidth]{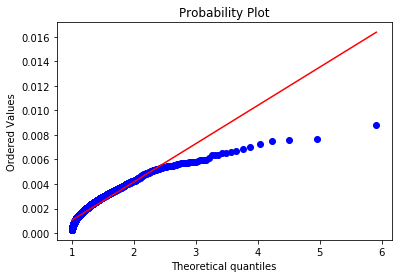}  &
\includegraphics[width=0.15\textwidth]{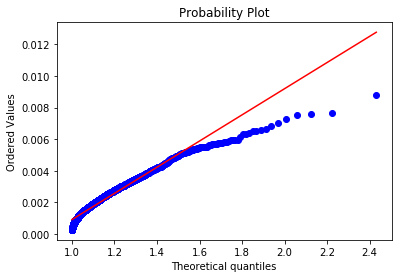}  &
\includegraphics[width=0.15\textwidth]{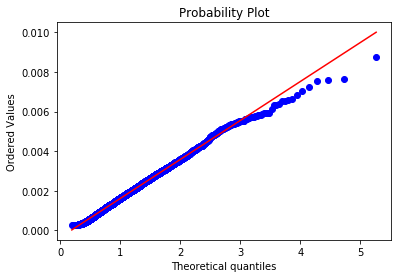}  &
\includegraphics[width=0.15\textwidth]{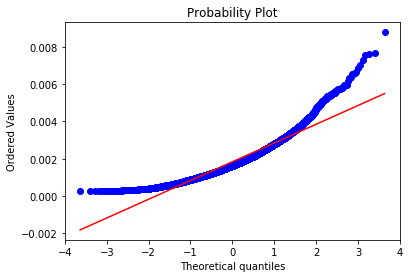}
\\
\end{tabular}
\caption{Ablation Study, SVHN \& VGG11: Half way through the training}
\end{center}
\end{figure}

\begin{figure}[h!]
\begin{center}
\begin{tabular}{c c c c c}
Pareto, $\alpha = 2$ & Pareto, $\alpha = 5$ & Pareto, $\alpha = 10$ & Lognormal & Normal \\
\includegraphics[width=0.15\textwidth]{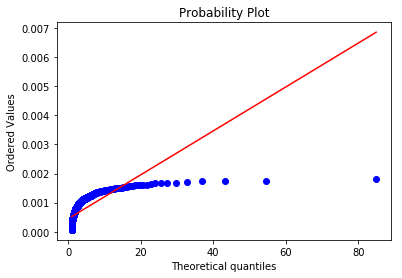}  &
\includegraphics[width=0.15\textwidth]{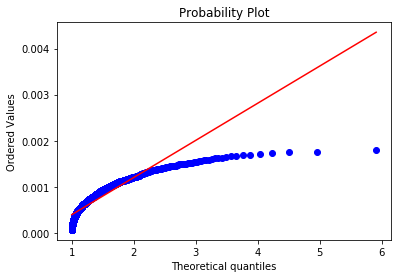}  &
\includegraphics[width=0.15\textwidth]{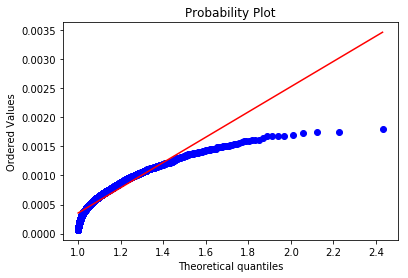}  &
\includegraphics[width=0.15\textwidth]{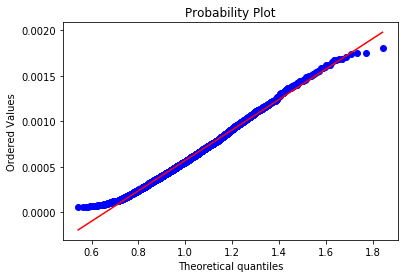}  &
\includegraphics[width=0.15\textwidth]{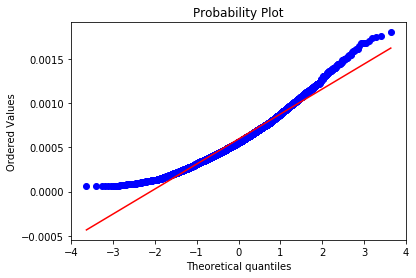}
\\
\end{tabular}
\caption{Ablation Study, SVHN \& VGG11: At the end of training}
\end{center}
\end{figure}


\begin{figure}[h!]
\begin{center}
\begin{tabular}{c c c c c}
Pareto, $\alpha = 2$ & Pareto, $\alpha = 5$ & Pareto, $\alpha = 10$ & Lognormal & Normal \\
\includegraphics[width=0.15\textwidth]{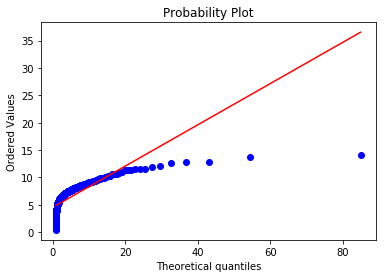}  &
\includegraphics[width=0.15\textwidth]{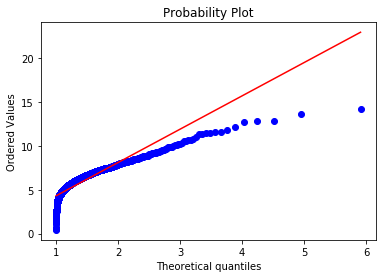}  &
\includegraphics[width=0.15\textwidth]{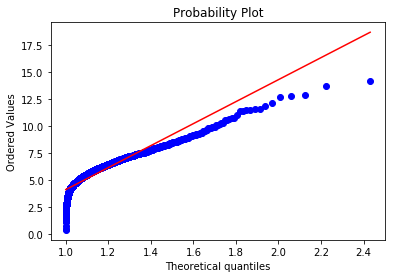}  &
\includegraphics[width=0.15\textwidth]{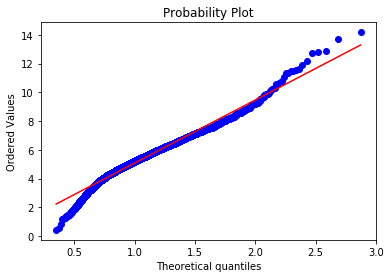}  &
\includegraphics[width=0.15\textwidth]{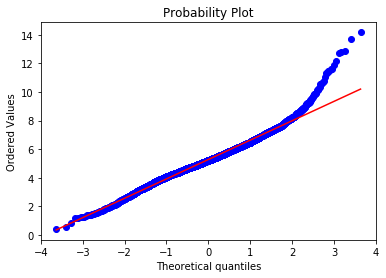}
\\
\end{tabular}
\caption{Ablation Study, CIFAR10 \& VGG11: At the beginning}
\end{center}
\end{figure}

\begin{figure}[h!]
\begin{center}
\begin{tabular}{c c c c c}
Pareto, $\alpha = 2$ & Pareto, $\alpha = 5$ & Pareto, $\alpha = 10$ & Lognormal & Normal \\
\includegraphics[width=0.15\textwidth]{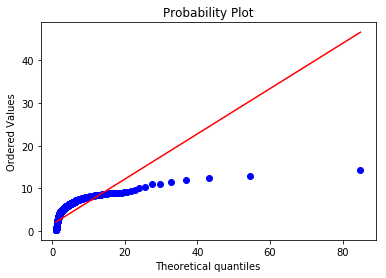}  &
\includegraphics[width=0.15\textwidth]{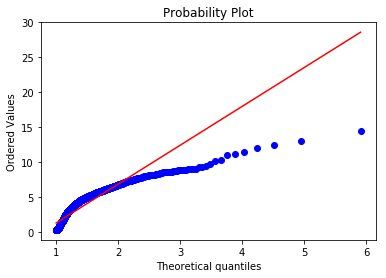}  &
\includegraphics[width=0.15\textwidth]{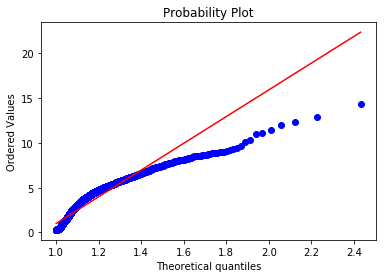}  &
\includegraphics[width=0.15\textwidth]{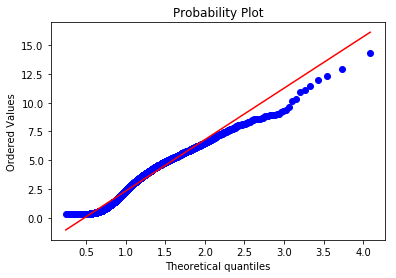}  &
\includegraphics[width=0.15\textwidth]{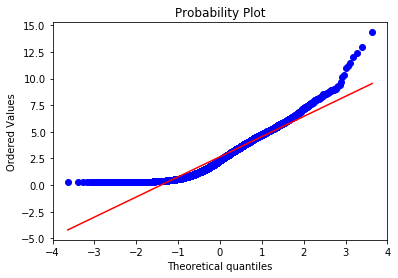}
\\
\end{tabular}
\caption{Ablation Study, CIFAR10 \& VGG11: Half way through the training}
\end{center}
\end{figure}

\begin{figure}[h!]
\begin{center}
\begin{tabular}{c c c c c}
Pareto, $\alpha = 2$ & Pareto, $\alpha = 5$ & Pareto, $\alpha = 10$ & Lognormal & Normal \\
\includegraphics[width=0.15\textwidth]{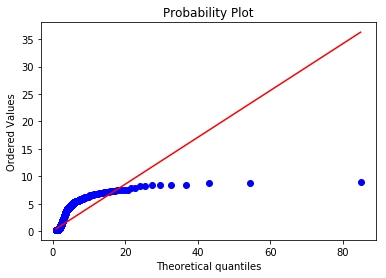}  &
\includegraphics[width=0.15\textwidth]{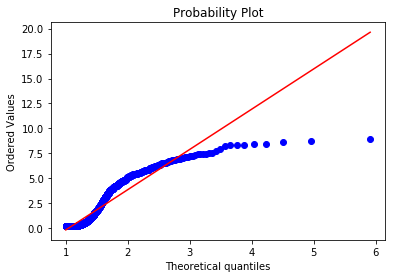}  &
\includegraphics[width=0.15\textwidth]{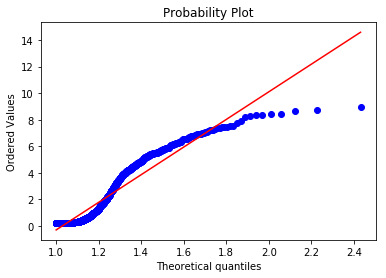}  &
\includegraphics[width=0.15\textwidth]{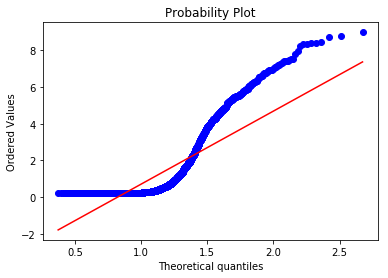}  &
\includegraphics[width=0.15\textwidth]{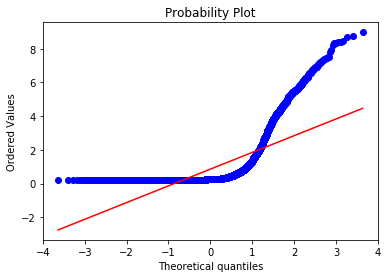}
\\
\end{tabular}
\caption{Ablation Study, CIFAR10 \& VGG11: At the end of training}
\end{center}
\end{figure}


\begin{figure}[h!]
\begin{center}
\begin{tabular}{c c c c c}
Pareto, $\alpha = 2$ & Pareto, $\alpha = 5$ & Pareto, $\alpha = 10$ & Lognormal & Normal \\
\includegraphics[width=0.15\textwidth]{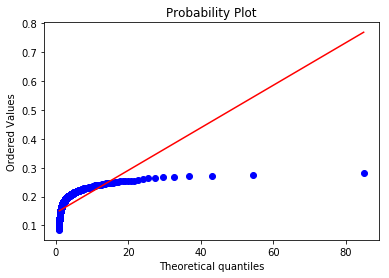}  &
\includegraphics[width=0.15\textwidth]{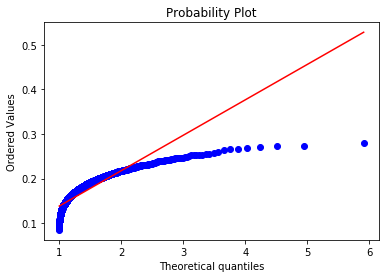}  &
\includegraphics[width=0.15\textwidth]{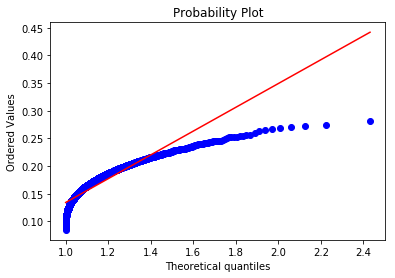}  &
\includegraphics[width=0.15\textwidth]{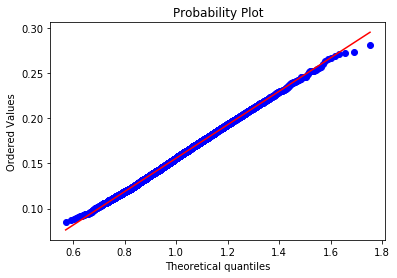}  &
\includegraphics[width=0.15\textwidth]{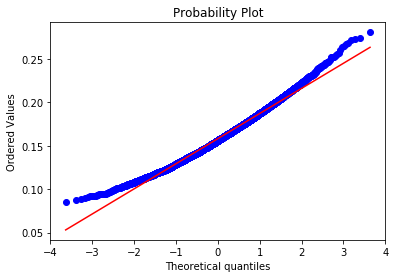}
\\
\end{tabular}
\caption{Data Augmentation, CIFAR10 \& VGG11: At the beginning}
\end{center}
\end{figure}

\begin{figure}[h!]
\begin{center}
\begin{tabular}{c c c c c}
Pareto, $\alpha = 2$ & Pareto, $\alpha = 5$ & Pareto, $\alpha = 10$ & Lognormal & Normal \\
\includegraphics[width=0.15\textwidth]{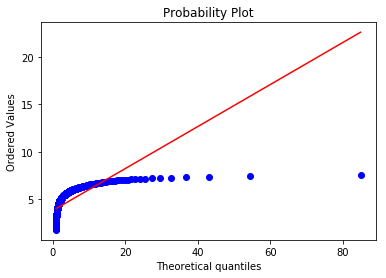}  &
\includegraphics[width=0.15\textwidth]{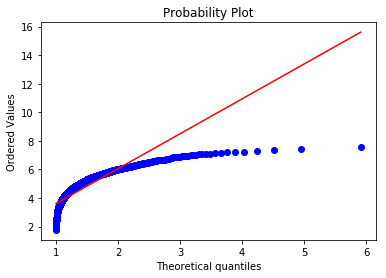}  &
\includegraphics[width=0.15\textwidth]{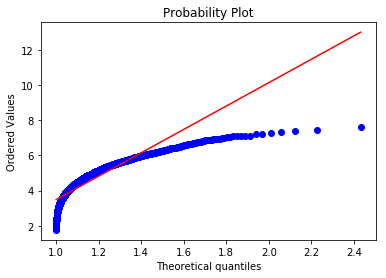}  &
\includegraphics[width=0.15\textwidth]{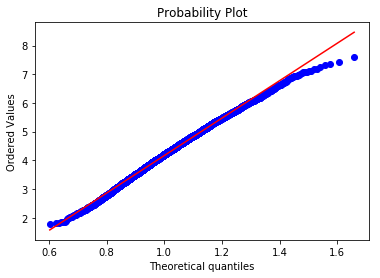}  &
\includegraphics[width=0.15\textwidth]{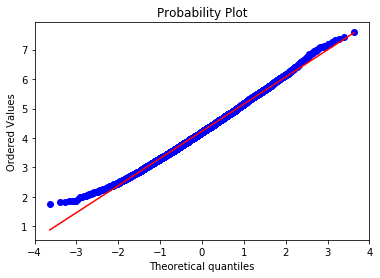}
\\
\end{tabular}
\caption{Data Augmentation, CIFAR10 \& VGG11: Half way through the training}
\end{center}
\end{figure}

\begin{figure}[h!]
\begin{center}
\begin{tabular}{c c c c c}
Pareto, $\alpha = 2$ & Pareto, $\alpha = 5$ & Pareto, $\alpha = 10$ & Lognormal & Normal \\
\includegraphics[width=0.15\textwidth]{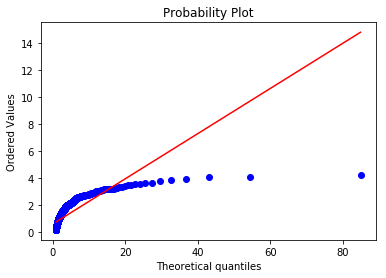}  &
\includegraphics[width=0.15\textwidth]{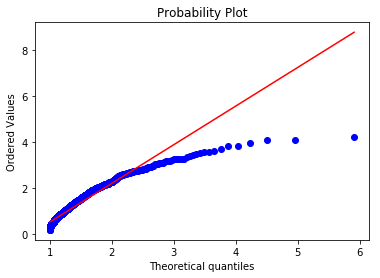}  &
\includegraphics[width=0.15\textwidth]{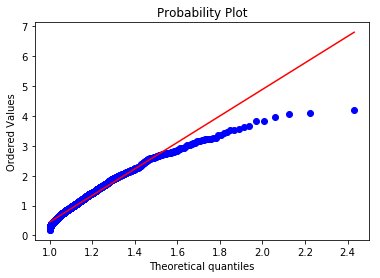}  &
\includegraphics[width=0.15\textwidth]{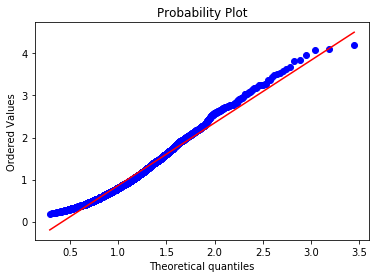}  &
\includegraphics[width=0.15\textwidth]{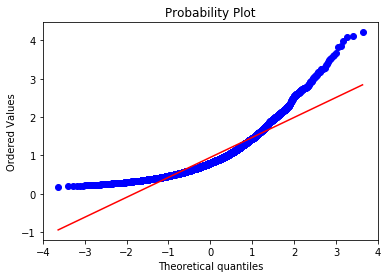}
\\
\end{tabular}
\caption{Data Augmentation, CIFAR10 \& VGG11: At the end of training}
\end{center}
\end{figure}


\begin{figure}[h!]
\begin{center}
\begin{tabular}{c c c c c}
Pareto, $\alpha = 2$ & Pareto, $\alpha = 5$ & Pareto, $\alpha = 10$ & Lognormal & Normal \\
\includegraphics[width=0.15\textwidth]{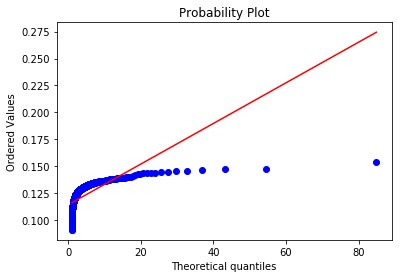}  &
\includegraphics[width=0.15\textwidth]{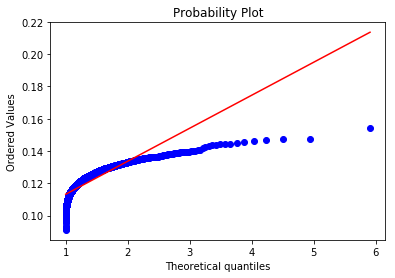}  &
\includegraphics[width=0.15\textwidth]{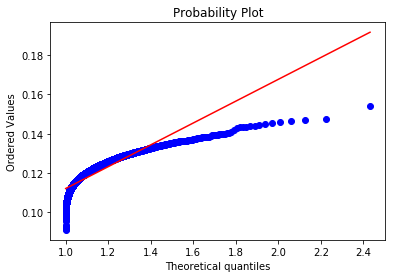}  &
\includegraphics[width=0.15\textwidth]{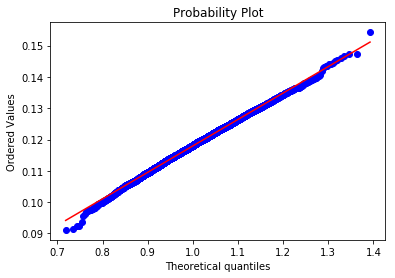}  &
\includegraphics[width=0.15\textwidth]{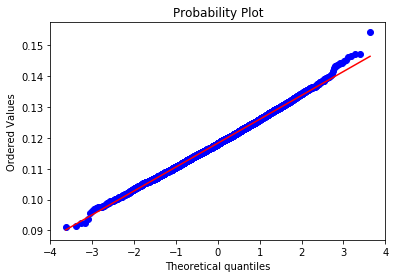}
\\
\end{tabular}
\caption{Data Augmentation, CIFAR100 \& VGG16: At the beginning}
\end{center}
\end{figure}

\begin{figure}[h!]
\begin{center}
\begin{tabular}{c c c c c}
Pareto, $\alpha = 2$ & Pareto, $\alpha = 5$ & Pareto, $\alpha = 10$ & Lognormal & Normal \\
\includegraphics[width=0.15\textwidth]{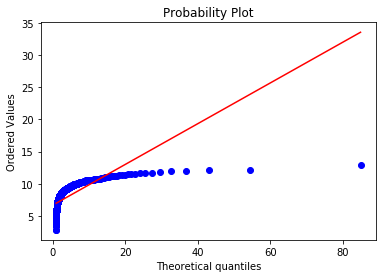}  &
\includegraphics[width=0.15\textwidth]{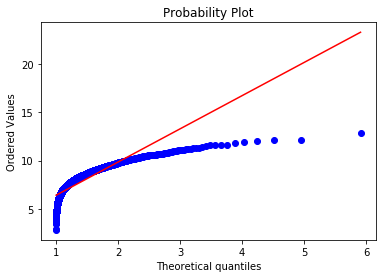}  &
\includegraphics[width=0.15\textwidth]{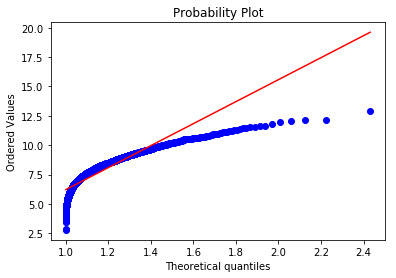}  &
\includegraphics[width=0.15\textwidth]{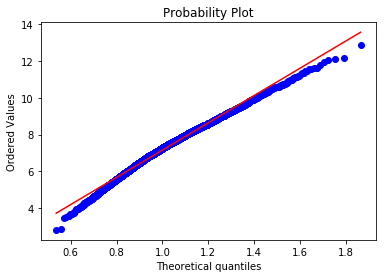}  &
\includegraphics[width=0.15\textwidth]{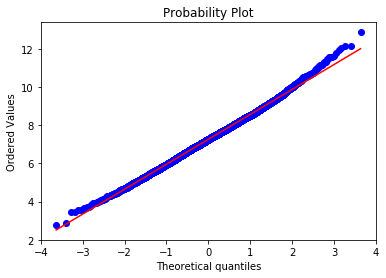}
\\
\end{tabular}
\caption{Data Augmentation, CIFAR100 \& VGG16: Halfway through the training}
\end{center}
\end{figure}

\begin{figure}[h!]
\begin{center}
\begin{tabular}{c c c c c}
Pareto, $\alpha = 2$ & Pareto, $\alpha = 5$ & Pareto, $\alpha = 10$ & Lognormal & Normal \\
\includegraphics[width=0.15\textwidth]{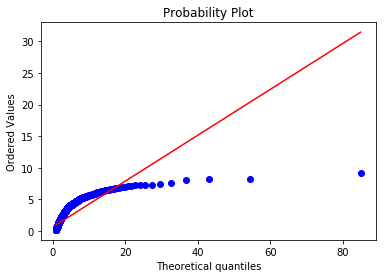}  &
\includegraphics[width=0.15\textwidth]{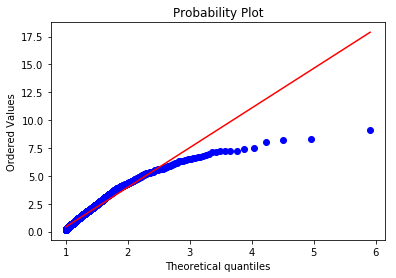}  &
\includegraphics[width=0.15\textwidth]{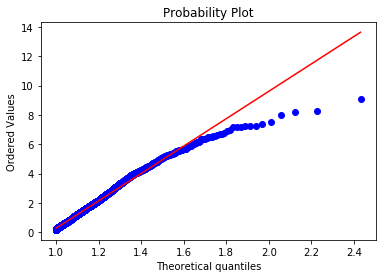}  &
\includegraphics[width=0.15\textwidth]{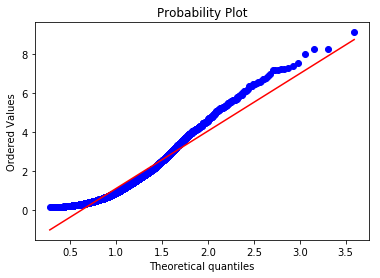}  &
\includegraphics[width=0.15\textwidth]{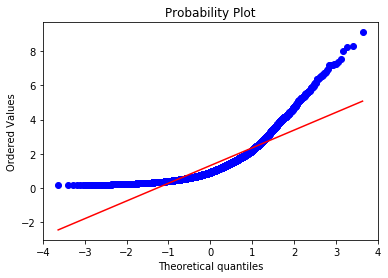}
\\
\end{tabular}
\caption{Data Augmentation, CIFAR100 \& VGG16: At the end of training}
\label{fig: appendix QQ end}
\end{center}
\end{figure}

Next, we plotted the empirical mean residual life (EMRL) of the gradient noise distributions in Fig. \ref{fig: appendix EMRL start}-\ref{fig: appendix EMRL end}. It is well known that the mean residual life blows up to infinity if and only if the distribution is heavy-tailed (more precisely, long-tailed). However, from the figures, one can see that none of the EMRL exhibits such a pattern in any case tested in our experiments. Instead, we see clear downward trends, which strongly suggest light tails, in all the tested cases.

\begin{figure}[h!]
\begin{center}
\begin{tabular}{c c c}
Beginning & Middle & End \\
\includegraphics[width=0.25\textwidth]{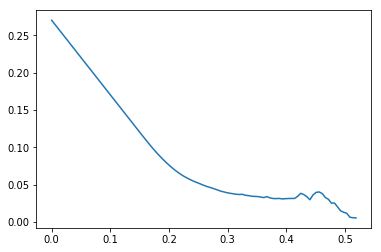}  &
\includegraphics[width=0.25\textwidth]{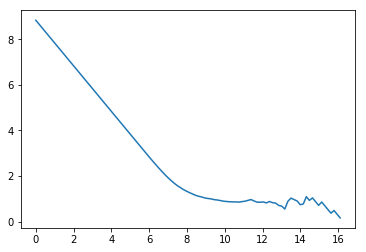}  &
\includegraphics[width=0.25\textwidth]{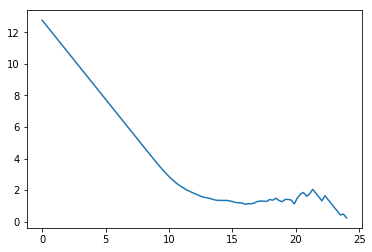}
\\
\end{tabular}
\caption{Plots of empirical mean residual life for noises in FMNIST\&LeNet Task throughout training}
\label{fig: appendix EMRL start}
\end{center}
\end{figure}

\begin{figure}[h!]
\begin{center}
\begin{tabular}{c c c}
Beginning & Middle & End \\
\includegraphics[width=0.25\textwidth]{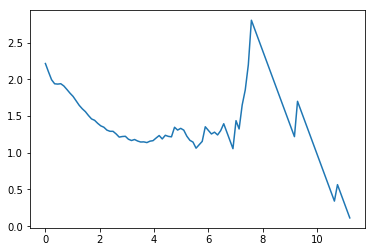}  &
\includegraphics[width=0.25\textwidth]{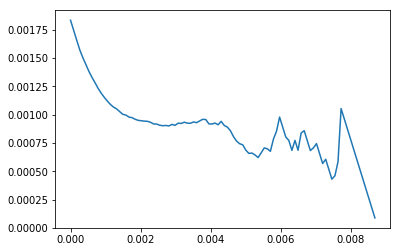}  &
\includegraphics[width=0.25\textwidth]{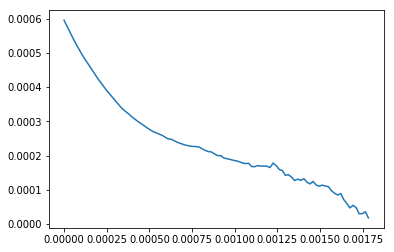}
\\
\end{tabular}
\caption{Plots of empirical mean residual life for noises in SVHN\&VGG 11 Task throughout training}
\end{center}
\end{figure}

\begin{figure}[h!]
\begin{center}
\begin{tabular}{c c c}
Beginning & Middle & End \\
\includegraphics[width=0.25\textwidth]{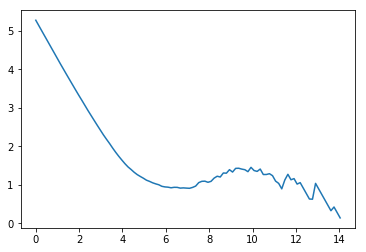}  &
\includegraphics[width=0.25\textwidth]{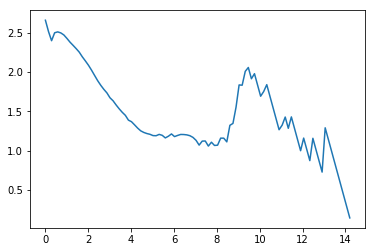}  &
\includegraphics[width=0.25\textwidth]{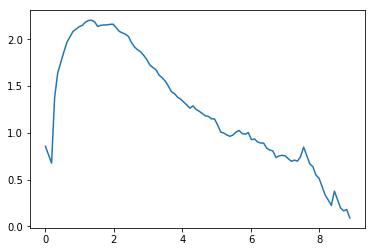}
\\
\end{tabular}
\caption{Plots of empirical mean residual life for noises in CIFAR 10\&VGG 11 Task throughout training}
\end{center}
\end{figure}

\begin{figure}[h!]
\begin{center}
\begin{tabular}{c c c}
Beginning & Middle & End \\
\includegraphics[width=0.25\textwidth]{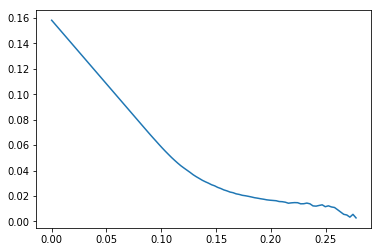}  &
\includegraphics[width=0.25\textwidth]{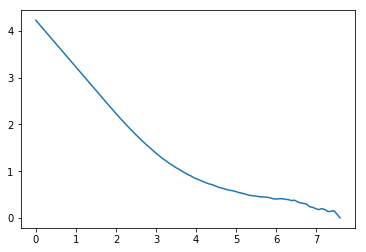}  &
\includegraphics[width=0.25\textwidth]{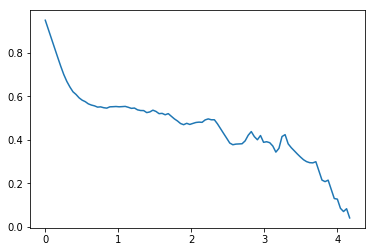}
\\
\end{tabular}
\caption{Plots of empirical mean residual life for noises in dataAug, CIFAR 10\&VGG 11 Task throughout training}
\end{center}
\end{figure}

\begin{figure}[h!]
\begin{center}
\begin{tabular}{c c c}
Beginning & Middle & End \\
\includegraphics[width=0.25\textwidth]{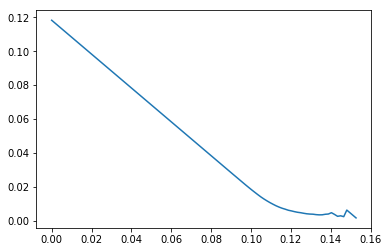}  &
\includegraphics[width=0.25\textwidth]{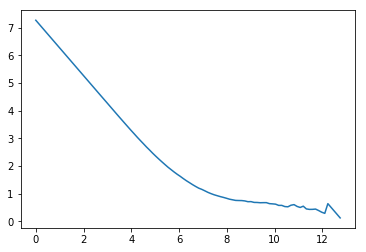}  &
\includegraphics[width=0.25\textwidth]{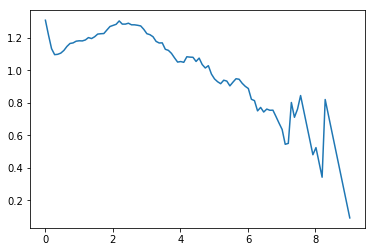}
\\
\end{tabular}
\caption{Plots of empirical mean residual life for noises in dataAug, CIFAR 100\&VGG 11 Task throughout training}
\label{fig: appendix EMRL end}
\end{center}
\end{figure}

If we assume a power-law tail, the Hill estimator is a classical tail index estimator with a long history in extreme value theory literature. A critical algorithmic parameter of the Hill estimator is the number of order statistics used in the estimation, and the Hill plot is a popular exploratory tool for investigating the Hill estimators with different numbers of order statistics. Although it is well known that Hill estimators and Hill plots are fallible if the power-law assumption is not satisfied, (and hence, it is not suited for deciding whether a given set of samples are from light-tailed distribution or heavy-tailed distribution; in particular, the method will return some power-law tail index $\alpha$ whether or not the samples are from a heavy-tailed distribution or a light-tailed one) we present the Hill plot to see what would be the estimated tail indices if the gradient noises hypothetically followed a power law. In the Hill plots shown in Fig. \ref{fig: appendix Hill start}-\ref{fig: appendix Hill end}, we presented the rescaled version (altHill) of the Hill plots (see Chapter 4.4 in \citep{resnick2007heavy}) for the following reason. Hill estimator is a consistent estimator of the power-law index when the proportion of the samples used approaches 0, and altHill plots allow us to scrutinize the estimated indices under a small proportion of samples. In particular, the points around the red dashed lines correspond to estimation using the top 1\% of the samples. We can see that most Hill plots stay well above 10 for the most part and almost never drop below 2. This strongly suggests that even if the gradient noises are from a heavy-tailed distribution, it is likely to have a very high power-law index (implying relatively lighter tails), and hence, we cannot expect to observe a prominent heavy-tailed behavior from them.

\begin{figure}[h!]
\begin{center}
\begin{tabular}{c c c}
Beginning & Middle & End \\
\includegraphics[width=0.25\textwidth]{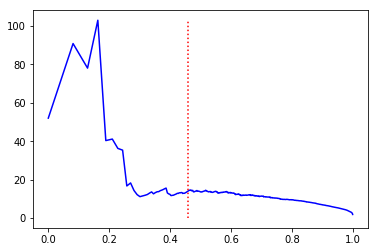}  &
\includegraphics[width=0.25\textwidth]{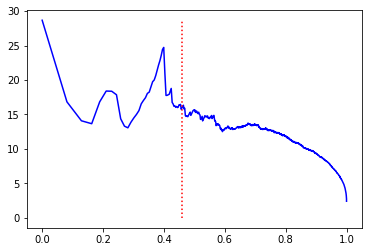}  &
\includegraphics[width=0.25\textwidth]{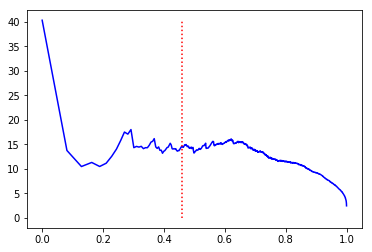}
\\
\end{tabular}
\caption{altHill Plots for noises in FMNIST\&LeNet Task throughout training. Dashed Red Line: Estimation based on the largest $1\%$ data}
\label{fig: appendix Hill start}
\end{center}
\end{figure}

\begin{figure}[h!]
\begin{center}
\begin{tabular}{c c c}
Beginning & Middle & End \\
\includegraphics[width=0.25\textwidth]{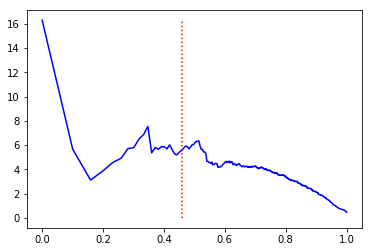}  &
\includegraphics[width=0.25\textwidth]{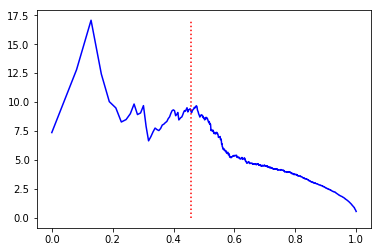}  &
\includegraphics[width=0.25\textwidth]{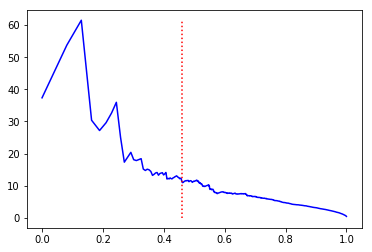}
\\
\end{tabular}
\caption{altHill Plots for noises in SVHN\&VGG 11 Task throughout training. Dashed Red Line: Estimation based on the largest $1\%$ data}
\end{center}
\end{figure}

\begin{figure}[h!]
\begin{center}
\begin{tabular}{c c c}
Beginning & Middle & End \\
\includegraphics[width=0.25\textwidth]{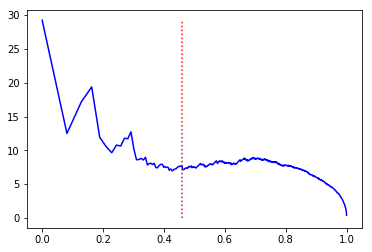}  &
\includegraphics[width=0.25\textwidth]{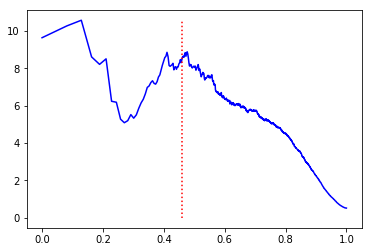}  &
\includegraphics[width=0.25\textwidth]{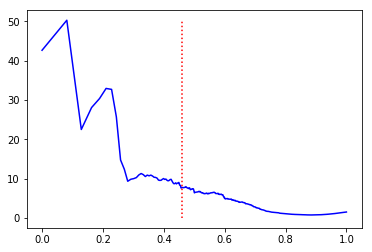}
\\
\end{tabular}
\caption{altHill Plots for noises in CIFAR 10\&VGG 11 Task throughout training. Dashed Red Line: Estimation based on the largest $1\%$ data}
\end{center}
\end{figure}

\begin{figure}[h!]
\begin{center}
\begin{tabular}{c c c}
Beginning & Middle & End \\
\includegraphics[width=0.25\textwidth]{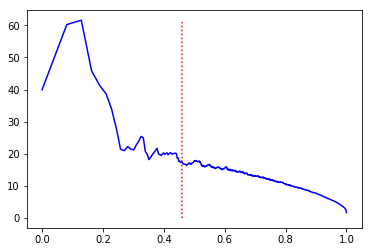}  &
\includegraphics[width=0.25\textwidth]{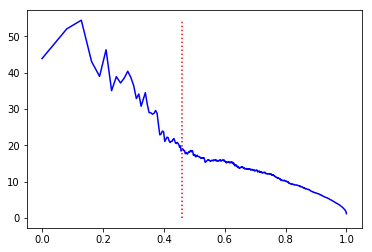}  &
\includegraphics[width=0.25\textwidth]{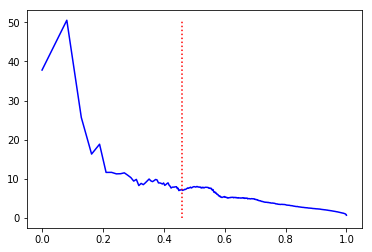}
\\
\end{tabular}
\caption{altHill Plots for noises in Data Augmentation, CIFAR 10\&VGG 11 Task throughout training. Dashed Red Line: Estimation based on the largest $1\%$ data}
\end{center}
\end{figure}

\begin{figure}[h!]
\begin{center}
\begin{tabular}{c c c}
Beginning & Middle & End \\
\includegraphics[width=0.25\textwidth]{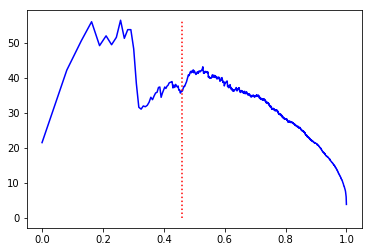}  &
\includegraphics[width=0.25\textwidth]{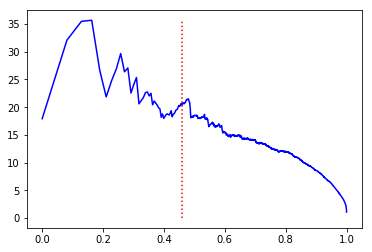}  &
\includegraphics[width=0.25\textwidth]{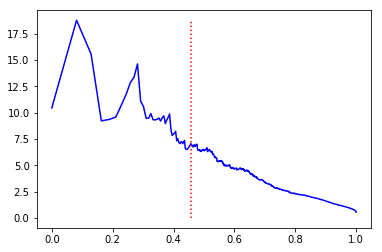}
\\
\end{tabular}
\caption{altHill Plots for noises in Data Augmentation, CIFAR 100\&VGG 16 Task throughout training. Dashed Red Line: Estimation based on the largest $1\%$ data}
\label{fig: appendix Hill end}
\end{center}
\end{figure}

A popular data-driven approach with statistical guarantees (again, under the assumption that the samples are indeed from a heavy-tailed distribution) to select the number of order statistics in the Hill plot is PLFIT \citep{clauset2009power}. We estimated the power-law indices using the python implementation \citep{alstott2014powerlaw} of PLFIT. The numbers are presented in Table \ref{table: appendix PLFIT}. All the estimations are at least 5 for all cases tested in our experiments, and most of the time, the estimation is above 10. Again, this means that even under the hypothetical assumption (against what the QQ plots and EMRLs suggest) that the gradient noises were from a heavy-tailed distribution, the tail indices of the gradient noises should be large, and hence, the gradient noises in our experiments have much lighter tails than any $\alpha$-stable distribution (which requires $\alpha<2$) or the heavy-tailed noises we injected during tail inflation experiments ($\alpha = 1.4$). 
In summary, extensive statistical analyses above suggest the absence of heavy tails in the gradient noises in our experiments. Therefore, the characterization of the vanilla SGD as the light-tailed (or at least lighter than the inflated tails) benchmark in our experiments is valid, and our ablation study is well grounded.

\begin{table}[h!]
  \caption{Power-law Indices Estimation throughout the Training, using PLFIT. All the estimations are at least 5 for all cases tested in our experiments, and most of the times the estimation is above 10. This means that even under the assumption that the gradient noises were from a heavy-tailed distribution, they should have much lighter tails than any $\alpha$-stable distribution (which requires $\alpha<2$) or the heavy-tailed noises we injected during tail inflation experiments ($\alpha = 1.4$). }
  \label{table: appendix PLFIT}
  \centering
  \begin{tabular}{llll}
    \toprule
   Task & Beginning & Middle & End   \\
    \midrule
    FMNIST, LeNet & 14.3 & 14.2 & 16.5 \\
    SVHN, VGG11 & 5.0 & 5.2 & 12.5 \\
    CIFAR10, VGG11 & 9.2 & 6.6 & 7.0 \\
    dataAug, CIFAR10, VGG11 & 16.2 & 16.2 & 8.5 \\
    dataAug, CIFAR100, VGG16 & 35.1 & 14.4 & 5.35 \\
    \bottomrule
  \end{tabular}
\end{table}

Next, we compare our work to recent literature on heavy-tailed phenomena in stationary distribution of SGD; see, for instance, \cite{hodgkinson2020multiplicative,gurbuzbalaban2020heavy}.
To be specific, \cite{hodgkinson2020multiplicative,gurbuzbalaban2020heavy} show that heavy tails can arise in the stationary distribution of SGD through multiplicative dynamics, and the tail index of the resulting stationary distribution can be characterized by the learning rates and the magnitude of noises (through batch size). However, the results in \cite{hodgkinson2020multiplicative,gurbuzbalaban2020heavy} do not imply the existence of heavy tails in the gradient noises. In both papers (as well as other works in the literature), the “heaviness” of the tail of the gradient noise (e.g., power-law index $\alpha$ in heavy-tailed cases) is fixed in the model (same as in our setting) and not entangled with the learning rate or batch size. For example, $B_k$ in (5) of \cite{hodgkinson2020multiplicative} corresponds to the gradient noise, and its tail index doesn’t depend on the learning rate or the batch size. In particular, the change of learning rate does not induce heavy tails in $B_k$.
On the other hand, we focus on the impact of the heavy-tails *in the gradient noise* and the truncation scheme for (any) fixed batch size and small learning rates on the global dynamics of SGD. In view of this, it should be clear that our analysis can be decoupled from the choice of batch size or the impact of the learning rate on SGD’s stationary distribution. Therefore, our observation and the design of the experiments are compatible with the aforementioned references.

 \counterwithin{equation}{section}
\counterwithin{theorem}{section}
\counterwithin{definition}{section}
\counterwithin{figure}{section}
\counterwithin{table}{section}
 \section{Implications of the theoretical results}\label{sec:discussions}

\textbf{Systematic control of the exit times from attraction fields:} In light of the wide minima folklore, one may want to find techniques to modify the sojourn time of SGD at each attraction field. 
Theorem \ref{theorem first exit time} suggests that the order of the first exit time (w.r.t. learning rate $\eta$) is directly controlled by the gradient clipping threshold $b$. 
Recall that for an attraction field with minimum jump number $l^*$, Theorem \ref{theorem first exit time} tells us the exit time from this attraction field is roughly of order $ (1/\eta)^{ 1+(\alpha-1)l^* }$. Given the width of the attraction field, its minimum jump number $l^*$ is dictated by gradient clipping threshold $b$. 
Therefore, gradient clipping provides us with a very systematic method to control the exit time of each attraction field. 
For instance, given clipping threshold $b$, the exit time from an attraction field with width less than $b$ is of order $(1/\eta)^\alpha$, while the exit time from one larger than $b$ is at least $(1/\eta)^{2\alpha - 1}$, which dominates the exit time from smaller ones.


\textbf{The role of structural properties of $\mathcal{G}$ and $f$:} Recall that in order for Theorem~\ref{corollary irreducible case} to apply, the irreducibility of $\mathcal{G}$ is required. 
Along with the choice of $b$, the geometry of function $f$ is a deciding factor of the irreducibility. 
For instance, we say that $\mathcal{G}$ is \emph{symmetric} if for any attraction field $\Omega_i$ such that $i = 2,3,\cdots,n_\text{min} - 1$ (so that $\Omega_i$ is not the leftmost or rightmost one at the boundary), we have $q_{i,i-1}>0, q_{i,i+1} > 0$. 
One can see that $\mathcal{G}$ is symmetric if and only if, for any $i = 2,3,\cdots,n_\text{min} - 1$, $|s_i - m_i| \vee |m_i - s_{i-1}| <l^*_i b$, and symmetry is a sufficient condition for the irreducibility of $\mathcal{G}$. 
The graph illustrated in Figure~\ref{fig typical transition graph}(Middle) is symmetric, while the one in Figure~\ref{fig typical transition graph}(Right) is not. 
As the name suggests, in the $\mathbb{R}^1$ case the symmetry of $\mathcal{G}$ is more likely to hold if the shape of attraction fields in $f$ is also nearly symmetric around its local minimum. 
If not, the symmetry (as well as irreducibility) of $\mathcal{G}$ can be violated as illustrated in Figure~\ref{fig typical transition graph}, especially when a small gradient clipping threshold $b$ is used. 

Generally speaking, our results imply that, even with the truncated heavy-tailed noises, the function $f$ needs to satisfy certain regularity conditions to ensure that SGD iterates avoid undesirable minima. 
This is consistent with the observations in \cite{li2018visualizing}: the deep neural nets that are more trainable with SGD tend to have a much more regular structure in terms of the number and shape of local minima.

\textbf{Heavy-tailed SGD without gradient clipping:}
It is worth mentioning that our results also characterize the dynamics of heavy-tailed SGDs without gradient clipping. For instance, since the reflection operation at $\pm L$ restricts the iterates on the compact set $[-L,L]$, if we use a truncation threshold $b$ that is large than $2L$, then any SGD update that moves larger than $b$ will definitely be reflected at $\pm L$. Therefore, the dynamics are identical to that of the following iterates without gradient clipping:
\begin{align}
    X^{\eta, \text{unclipped} }_n = \varphi_L\Big( X^{\eta, \text{unclipped} }_{n-1} - \eta f^\prime(X^{\eta, \text{unclipped} }_{n-1}) + \eta Z_n\Big). \label{def SGD unclipped}
\end{align}
The next result follows immediately from Theorem \ref{theorem first exit time} and \ref{thm main paper MC GP absorbing case}.

\begin{corollary} \label{corollary unclipped case}
There exist constants $q_i > 0\ \forall i$, $q_{i,j}>0\ \forall j \neq i$ such that the following claims hold for any $i$ and any $x \in \Omega_i$.
\begin{enumerate}[label = \arabic*)]
    \item Under $\mathbb{P}_x$, $q_i H(1/\eta) \sigma_i(\eta)$ converges in distribution to an Exponential random variable with rate 1 as $\eta \downarrow 0$;
    \item For any  $j = 1,2,\cdots,n_\text{min}$ such that $j \neq i$, $$ \lim_{\eta \downarrow 0} \mathbb{P}_x( X^\eta_{ \sigma_i(\eta) } \in \Omega_j ) = q_{i,j}/q_{i}.$$
    \item Let $Y$ be a continuous-time Markov chain on $\{m_1,\cdots,m_{n_\text{min}}\}$ with generator matrix $Q$ parametrized by $Q_{i,i} = -q_i, Q_{i,j} = q_{i,j}$. Then
    \begin{align*}
    X^{\eta, \text{unclipped} }_{\floor{ t/H(1/\eta) }}(x) \rightarrow Y_t(m_i)\ \ \ \text{as }\eta \downarrow 0
\end{align*}
in the sense of finite-dimensional distributions.
\end{enumerate}
\end{corollary}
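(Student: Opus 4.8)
The plan is to derive this corollary as the degenerate $l^*_i\equiv 1$ case of Theorems \ref{theorem first exit time} and \ref{thm main paper MC GP absorbing case}, after first checking that the choice $b>2L$ makes the clipped recursion \eqref{def SGD step 2} and the unclipped recursion \eqref{def SGD unclipped} produce the same process. For the latter I would fix $y\in[-L,L]$ and $w\in\R$ and compare $\varphi_L(y-\varphi_b(w))$ with $\varphi_L(y-w)$: when $|w|\le b$ they coincide trivially, and when $w>b$ (resp. $w<-b$) the inequality $b>2L$ forces $y-b<-L$ (resp. $y+b>L$), so both sides equal $-L$ (resp. $L$). A one-line induction on $n$ then gives $X^\eta_n=X^{\eta,\text{unclipped}}_n$ almost surely when both start from the same $x\in[-L,L]$, so $\sigma_i(\eta)$ and the scaled trajectories agree in the two models. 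Note also that $b>2L$ makes Assumption \ref{assumption clipping threshold b} automatic, since every ratio $\min\{|s_i-m_i|,|s_{i-1}-m_i|\}/b$ lies in $(0,1)$.

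Next, since $r_i=\min\{|m_i-s_{i-1}|,|s_i-m_i|\}\le 2L<b$, the minimum jump number is $l^*_i=\ceil{r_i/b}=1$ for every $i$, so the scaling function becomes $\lambda_i(\eta)=H(1/\eta)$. Claims 1 and 2 are then immediate from parts (i) and (ii) of Theorem \ref{theorem first exit time}, except that $q_{i,j}\ge 0$ must be upgraded to $q_{i,j}>0$. For this I would use that with $l^*_i=1$ the map $h_i$ collapses to $h_i(w_1)=\varphi_L(m_i+\varphi_b(w_1))$ and $\mu_i=\nu_\alpha$ on $\R$: since $\Omega_j\cap(-L,L)$ is a nonempty open interval and $m_i\in(-L,L)$, there is an open interval of $w_1$ with $m_i+w_1\in\Omega_j\cap(-L,L)$, and automatically $|w_1|<2L<b$ so that $\varphi_b$ is inactive there; hence this interval lies in $E_{i,j}$, and since $\nu_\alpha$ has a strictly positive density off the origin, $q_{i,j}=\nu_\alpha(E_{i,j})>0$.

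For claim 3, the same estimate $|s_{j-1}-m_i|\vee|m_i-s_j|\le 2L<b$ gives $l_{i,j}=1=l^*_i$ for every $j\neq i$ in \eqref{def jump number l i j}, so the typical transition graph $\mathcal{G}$ contains all ordered edges; hence $\mathcal{G}$ is irreducible, forming a single absorbing class $G=\{m_1,\dots,m_{n_{\text{min}}}\}$ with $l^*_G=1$, $M^{\text{large}}$ equal to all of $\{m_1,\dots,m_{n_{\text{min}}}\}$ (no small attraction fields), $\lambda_G=\lambda^{\text{large}}=H(1/\eta)$, and $\pi_G$ the identity. Applying Theorem \ref{thm main paper MC GP absorbing case} (equivalently Theorem \ref{corollary irreducible case}) yields a continuous-time Markov chain $Y$ on $\{m_1,\dots,m_{n_{\text{min}}}\}$ with $X^\eta_{\floor{t/H(1/\eta)}}(x)\to Y_t(m_i)$ in finite-dimensional distributions, which by the first paragraph is the assertion for $X^{\eta,\text{unclipped}}$. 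The last point is to identify $Y$'s generator: combining claims 1 and 2 with the construction of $Y$ in the proof of Theorem \ref{thm main paper MC GP absorbing case}, the holding rate at $m_i$ is $q_i$ and the embedded chain jumps to $m_j$ with probability $q_{i,j}/q_i$, so the $i\to j$ rate is $q_{i,j}$; the consistency $\sum_{j\neq i}q_{i,j}=q_i$ follows because $\{E_{i,j}\}_{j\neq i}$ partitions $E_i$ up to the $\nu_\alpha$-null set where $h_i(w_1)$ hits one of the $s_k$'s. This is exactly the parametrization $Q_{i,i}=-q_i$, $Q_{i,j}=q_{i,j}$.

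I expect the main obstacle to be bookkeeping rather than conceptual: the delicate points are (a) verifying rigorously that clipping at threshold $b$ followed by projection onto $[-L,L]$ is indistinguishable from projection alone once $b>2L$ — the short case analysis above — and (b) reconciling the constants produced by Theorem \ref{theorem first exit time} with the generator asserted in part 3, i.e. checking $\sum_{j\neq i}q_{i,j}=q_i$ and that the chain constructed inside the proof of Theorem \ref{thm main paper MC GP absorbing case} has precisely these off-diagonal rates. Everything else is a direct specialization of the general theorems to the case $l^*_i\equiv 1$.
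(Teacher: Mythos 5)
Your proposal is correct and takes essentially the same route as the paper, which simply asserts that once $b>2L$ the clipping operator is superfluous and the result ``follows immediately from Theorem~\ref{theorem first exit time} and \ref{thm main paper MC GP absorbing case}''; you are filling in exactly the routine specializations ($l^*_i\equiv 1$, $\lambda_i=\lambda^{\text{large}}=H(1/\eta)$, $\mathcal{G}$ irreducible and absorbing with $G^\text{small}=\emptyset$, $\pi_G$ the identity) that the authors leave implicit. The two details you elaborate beyond the paper's one-liner — strict positivity $q_{i,j}>0$ via the positive $\nu_\alpha$-density on the nonempty open interval $(\Omega_j\cap(-L,L))-m_i$, and the consistency $\sum_{j\neq i}q_{i,j}=q_i$ via the $\nu_\alpha$-null boundary set $\{w_1:h_i(w_1)\in\{s_k\}\}$ — are both correct and are needed to justify the generator parametrization stated in claim 3.
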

At first glance, Corollary \ref{corollary unclipped case} may seem similar to the results in \cite{csimcsekli2019heavy} and \cite{pavlyukevich2007cooling}. 
However, the object studied in \cite{csimcsekli2019heavy,pavlyukevich2007cooling} is different: they study the following Langevin-type stochastic differential equation (SDE) driven by an $\alpha$-stable L\'evy process $L_t$ with scaling factor $\eta > 0$:
      $$dY^{\eta}_t = -f^\prime(Y^{\eta}_{t-}) dt + \eta dL_t.$$
In particular, \cite{pavlyukevich2007cooling} studies the metastability of $Y_t^\eta$ and concludes that as $\eta \downarrow 0$, the first exit time and global dynamics of $Y_t^\eta$ admit a similar characterization as described in our Theorem \ref{corollary unclipped case}.
Then Theorem 4 in \cite{csimcsekli2019heavy} argues that when the learning rate $\eta$ is sufficiently small, the distribution of the first exit time of the SGD $X_n^\eta$ and that of the L\'evy-driven Langevin SDE $Y^\eta_t$ are similar. However, the analysis of \cite{csimcsekli2019heavy}  hinges critically on the assumption that $L^\alpha_t$ is symmetric and $\alpha$-stable. 
While such an assumption is convenient for their analysis, it is a strong assumption. 
It implies that the gradient noise distribution belongs to a very specific parametric family and excludes all the other heavy-tailed distributions. In particular, the assumption precludes analysis of any heavy tails with finite variance.
On the contrary, our work directly analyzes the SGD $X_n^\eta$ and reveals the heavy-tailed SGD dynamics at a much greater level of generality. Specifically, we allow the noise to have general regularly varying distributions with arbitrary tail index---which includes $\alpha$-stable distributions as a (very) special case---and extend the characterization of global dynamics of heavy-tailed SGD to the adaptive versions of SGD where gradient clipping is applied.


\counterwithin{equation}{section}

\section{Geometric Characterization $l^*$ and Existing Sharpness Metrics}\label{sec:appendix-sharpness}
Throughout the paper, we have been using the term \textit{sharp minima} when describing the elimination effect of truncated heavy-tailed SGD,
while appealing to $l^*$, the minimum number of jumps requires for escape defined in \cref{def main paper minimum jump l star i},
when rigorously presenting our theoretical results about dynamics of truncated heavy-tailed SGD.
Despite the possible ambiguity of its use in the main paper, the terminology \textit{sharpness} is meant to familiarize our rather technically involved notion of how heavy-tailed SGD behave.
To resolve the potential confusions, in this section we provide a detailed discussion on the relationship between our geometric characterization $l^*$ and the existing sharpness metrics.

In light of recent discovery that all local minima might be global minima for over-parametrized deep neural nets (see \cite{li2018over}),
one popular explanation for the generalization gap between different local minima (for instance, between solutions found by GD and SGD) is that the geometry around the solution is closely related to its performance in the test setting.
After the seminal work by \cite{keskar2016large},
a myriad of attempts have been made to provide empirical or theoretical evidences for the link between the sharpness of a local minimum and its generalization performance;
See, for instance, \cite{xie2020diffusion,jiang2019fantastic,he2019asymmetric,zhou2020towards}.
In summary, there exist at least four different classes of sharpness metrics among the current literature:
\begin{enumerate}[label=(\alph*)]
    \item spectral norm of the Hessian at the local minimum, one surrogates of which is the \textit{maximal sharpness} type of metrics; a variant used in \cite{xie2020diffusion} based on corresponding large deviation theory is the eigenvalue of the Hessian along certain escape directions; 
    \item \textit{expected sharpness} type of metrics that evaluate the general fluctuation of the loss function within a $L_p$ ball of centered at the local minimum (see \cite{pmlr-v97-zhu19e});
    \item complexity metrics based on PAC-Bayes theory (see, for example, \cite{neyshabur2017exploring});
    \item  geometric property of a domain or the entire attraction field instead of the local minimum itself; see the mass of a Radon measure $m(W)$ over the domain $W$ in Theorem 1 in \cite{zhou2020towards}.
\end{enumerate}
Under such taxonomy,
the $l^*$ characterization proposed in this paper falls into category (d).
Indeed, it describes the minimum effort required for escaping the domain rather than the geometry merely around a certain neighborhood of the local minimum.
Here we have two remarks on the definition of $l^*$.
First, for a given loss landscape, this quantity $l^*$ is dictated by the gradient clipping threshold $b > 0$.
In fact, this concept is tailored for the gradient clipping case, as in the unclipped case with heavy-tailed SGD, for any attraction field $\Omega_i$ we always have $l^*_i = 1$.
Second, it is analogous to the term $m(W)$ in \cite{zhou2020towards} in the sense that
it reflects the \textit{volume} of the attraction field (when compared to a given threshold $b$).
Therefore, we stress that a more precise interpretation of $l^*$, as well as other metrics in class (d),
is that they characterize how \emph{wide} or \emph{narrow} each minimum (attraction field) is,
and a more a clear description of our theoretical results Theorem \ref{theorem first exit time}-\ref{corollary irreducible case}
is that truncated heavy-tailed SGD effectively avoids all the \emph{narrow} minima when learning rate is small enough.

While one can intuitively see that wide minima are more likely to be flat ones, we acknowledge the existence of counterexamples where a sharp minimum lies in a wide attraction field.
Nevertheless, as the variety of definitions for sharpness keeps growing in existing literature,
it becomes rather unlikely for one geometric characterization to always agree with (or be equivalent to) other existing approaches.
More importantly, the aim of this paper is not to establish $l^*$ as the orthodox geometric property when studying generalization gap.
Instead, as demonstrated in Table \ref{table ablation study}, \ref{table ablation study more sharpness metrics} and \ref{table appendix data augmentation training},
in modern deep learning tasks the truncated heavy-tailed method (which prefers solutions with high $l^*$ as indicated by our theoretical results) obtains solutions that generalize better and exhibit \textit{flatter} geometry when evaluated under different sharpness metrics.
This observation is well aligned with the recent large-scale empirical study in \cite{jiang2019fantastic},
suggesting that in typical training setting it is beneficial to find wider minima (characterized by $l^*$) in order to achieve a flatter geometry and better generalization performance. 

\section{On the Relationship Between Sharpness and Generalization}\label{sec:appendix-sharpness-generalization}

In principle, a sharp minimum does not necessarily lead to poor generalization in the sense that it is possible to construct pathological counterexamples in theory or in specific experiment settings;
see, for example, \cite{dinh2017sharp,neyshabur2017exploring}.
However, the loss geometry that arises in practice seems to exhibit a strong correlation between the sharpness and the test error.
For instance, sharpness-aware optimization methods \citep{foret2020sharpness,kwon2021asam} improve generalization across various tasks and achieve state-of-the-art performance on the CIFAR dataset.
Also, one of the authors of the aforementioned paper \cite{neyshabur2017exploring} continued investigating different complexity measures and conducted much larger scale experiments in \cite{jiang2019fantastic}.
Among more than 40 complexity measures from theoretical and empirical studies in the literature tested in \cite{jiang2019fantastic},
the sharpness metrics are the top ones for predicting the generalization performance.
In our own experiments, we also show that sharpness and test accuracy are highly correlated. 

\section{Stability-driven Analyses on SGD}\label{sec:appendix-stability-literature}

\cite{wu2018sgd} takes the perspective of linear stability and establishes conditions for SGD to be attracted to or avoid certain solutions based on learning rate, batch size, and the concept of non-uniformity of local minima. Inspired by \citep{wu2018sgd}, \citep{jastrzebski2020break} analyzes the trajectory-wise stability of SGD and found that a “break-even point” partitions the training procedure into two phases: the implicit regularization effects in SGD due to a larger learning rate becomes visible in the second phase after this “break-even point”. Similarly, \citep{cohen2021gradient} reports that typical GD trajectories in standard image classification tasks can be partitioned into two phases: in the first “progressive sharpening” phase the sharpness of the Hessian monotonically increases, while in the second phase we observe the “edge of stability” regime where the sharpness of Hessian hovers slightly above the critical value $2/\eta$ and the training loss slowly decrease in an oscillating fashion.

Compared to the aforementioned stability analyses, one major difference of our heavy-tailed regime is that, even under gradient clipping, the trajectory cannot be partitioned into such “phases” and the traditional sense of stability around certain local minima is nullified by the constant basin hopping and exploration behaviors under heavy-tailed noises. In particular, the polynomial order of the exit time in heavy-tailed SGD dictates that, when compared to exponentially long exit time in vanilla SGD (see \cite{xie2020diffusion}), the iterates won’t become “stabilized” and keep staying around a certain region in a typical training procedure. Instead, our work characterizes the metastability of the truncated heavy-tailed SGD (i.e. constantly transitioning between different wide minima) that cannot be observed in GD or light-tailed SGD.
More importantly, compared to the aforementioned stability related analyses,
our work provides a very tight characterization of the global dynamics and distributions of the entire sample path.

\section{Existing Analyses on Heavy-tailed Phenomena in SGD}
\label{sec:appendix heavy tailed phenomena in SGD}

We start with one clarification: 
what has been established in \cite{hodgkinson2020multiplicative,gurbuzbalaban2021heavy,wojtowytsch2021stochastic,mori2021logarithmic} is that the stationary distribution of the SGD (or the corresponding continuous-time SDE) can be heavy-tailed, and the heavy tailes therein can depend on various training hyperparameters. 
This, however, does not imply that the gradient noise is heavy-tailed. In particular, the heavy-tail index (or heavy-tailedness itself) of the gradient noise will not be affected by hyperparameters such as the learning rate.

To be specific, \cite{hodgkinson2020multiplicative,gurbuzbalaban2021heavy} show that heavy tails can arise in the stationary distribution of SGD through multiplicative dynamics, and the tail index of the resulting stationary distribution can be characterized by the learning rates and the magnitude of noises (through batch size).
Similarly, the SDEs studied in \cite{wojtowytsch2021stochastic,mori2021logarithmic} are both driven by light-tailed (Gaussian) perturbations, and the authors show that even under light-tailed perturbations, heavy tails can arise in the stationary distribution through multiplicative dynamics, and the tail index of the resulting stationary distribution can be characterized by the learning rates. 
In comparison, we focus on the impact of the heavy-tails in the gradient noise and the truncation scheme on the global dynamics of SGD. 
Our results show that, under heavy-tailed noises, the power-law index $\alpha$
 for the tail in noises also characterizes the first exit time and global dynamics of SGD with no dependence on other training hyperparameters (at least in the asymptotic scheme).
In other words, when the driving force of the dynamics is heavy-tailed, 
its ``heavy-tailedness'' (i.e., the same index $\alpha$) characterizes the ``heavy-tailedness'' of the entire SGD without dependency on the other hyperparameters,
and its effect on the entire SGD path (rather than just the stationary distributions) admits a very clear and tight expression, as characterized in theoretical results in this work.

\counterwithin{equation}{section}
\section{Proof of Theorem \ref{theorem first exit time}} \label{section proof thm 1}
This section proves Theorem~\ref{theorem first exit time}. We first start with providing the definitions of $q_i$ and $q_{i,j}$ that appear in the statement of the theorem. 
Let $\textbf{Leb}_+$ denote the Lebesgue measure restricted on $[0,\infty)$, and define a (Borel) measure $\nu_\alpha$ on $\mathbb{R}\symbol{92}\{0\}$ as follows:
$$\nu_\alpha(dx) = \mathbbm{1}\{x > 0\}\frac{\alpha p_+}{x^{\alpha + 1}} + \mathbbm{1}\{x <0\}\frac{\alpha p_-}{|x|^{\alpha + 1}} $$
where $\alpha$, $p_-$, and $p_+$ are constants in Assumption 2. 
Define a Borel measure $\mu_i$ on $\mathbb{R}^{l^*_i}\times \Big(\mathbb{R}_{+}\Big)^{l^*_i - 1}$ as the product measure
\begin{align}
    \mu_i= (\nu_\alpha)^{l^*_i}\times(\textbf{Leb}_+)^{l^*_i - 1}. \label{defMuMeasure i}
\end{align}
We also define mappings $h_i$ as follows. For a real sequence $\textbf{w} = ( w_1,w_2,\cdots,w_{l^*_i})$ and a positive real number sequence $\textbf{t} = (t^\prime_j)_{j = 2}^{l^*_i}$, define $t_1 = t'_1 = 0$ and $t_j = t_1^\prime + t_2^\prime + \cdots + t_j^\prime$ for $j = 2,\cdots,l^*_i$. 
Now we define a path $\hat{\textbf{x}}:[0,\infty)\mapsto \mathbb{R}$ as the solution to the following ODE with jumps:
\begin{align}
    \hat{\textbf{x}}(0) & = \varphi_L\big(m_i +\varphi_b( w_1)\big); \label{defOdeJumpClipping1} \\
    \frac{ d \hat{\textbf{x}}(t) }{ d t } & = -f^\prime(\hat{\textbf{x}}(t)  ), \ \ \forall t\in[t_{j-1},t_j),\ \ \forall j = 2,\cdots,l^*_i;\label{defOdeJumpClipping2} \\
    \hat{\textbf{x}}(t_j) & = \varphi_L\big( \hat{\textbf{x}}(t_j - ) + \varphi_b(w_j)\big), \ \ \forall j = 2,\cdots,l^*_i.\label{defOdeJumpClipping3}
\end{align}
Now we define the mappings $h_i:\mathbb{R}^{l^*_i}\times \Big(\mathbb{R}_+\Big)^{l^*_i - 1}\mapsto \mathbb{R}$ as
$$h_i( \textbf{w},\textbf{t} ) = \hat{\textbf{x}}(t_{l^*_i}).$$
It is easy to see that $h_i$'s are continuous mappings. With these mappings, we define the following sets:
\begin{align}
    E_i \delequal{}  \{(\textbf{w},\textbf{t}) \subseteq \mathbb{R}^{l^*_i}\times \Big(\mathbb{R}_{+}\Big)^{l^*_i - 1}: h_i(\textbf{w},\textbf{t}) \notin \Omega_i\};\label{def set E_i}\\
    E_{i,j} \delequal{}  \{(\textbf{w},\textbf{t}) \subseteq \mathbb{R}^{l^*_i}\times \Big(\mathbb{R}_{+}\Big)^{l^*_i - 1}: h_i(\textbf{w},\textbf{t}) \in \Omega_j\}.\label{def set E_ij}
\end{align}
Lastly, the constant $q_i$ and $q_{i,j}$ are defined as follows:
\begin{align}
    q_i & = \mu_i(E_i),\ \  q_{i,j} = \mu_i(E_{i,j})\ \ \forall i \neq j. \label{def q i q ij}
\end{align}

Before we move on to the proof of Theorem~\ref{theorem first exit time}, we add a few remarks regarding the intuition behind it.
\begin{itemize}
\item 

Suppose that $X_n^\eta$ is started at the $i$\textsuperscript{th} local minimum $m_i$ of $f$, and consider the behavior of $X_n^\eta$ over the time period $H_1 \triangleq \{1,\ldots, \lceil t/\eta \rceil \}$ for a sufficiently large $t$.
The heavy-tailed large deviations theory \cite{rhee2019sample} and a heuristic application of the contraction principle implies that the path of $X_{\lceil n /\eta\rceil}^\eta$ over this period will converge to the gradient flow of $f$, and the event that $X_{\lceil n /\eta\rceil}^\eta$ escapes $\Omega_i$ within this period is a (heavy-tailed) rare event. 
This means that the probability of such an event is of order $(1/\eta)^{(\alpha-1)l_i^*}$. 
Moreover, whenever it happens, it is almost always because $X^\eta_n$ is shaken by \emph{exactly $l^*_i$ large gradient noises of size $\mathcal O(1/\eta)$}, which translates to $l^*_i$ jumps in $X_{\lceil n /\eta\rceil}^\eta$'s path, while the rest of its path closely resemble the deterministic gradient flow. 
Moreover, conditional on the event that $X_n^\eta$ fails to escape from $\Omega_i$ within this period, the endpoint of the path is most likely to be close to the local minima, i.e.,  $X_{\lceil t/\eta \rceil}^\eta \approx m_i$. 
This suggests that over the next time period $H_2 \triangleq \{\lceil t/\eta \rceil+1, \lceil t/\eta \rceil+2,\ldots, 2\lceil t/\eta \rceil \}$ of length $\lceil t/\eta \rceil$, $X_n^\eta$ will behave similarly to its behavior over the first period $H_1$. 
The same argument applies to the subsequent periods $H_3,H_4,\ldots$ as well. 
Therefore, over each time period of length $\lceil t/\eta \rceil$, there is $(1/\eta)^{(\alpha-1)l_i^*}$ probability of exit.  
In view of this, the exit time should be of order $(1/\eta)^{ 1+(\alpha - 1)l^*_i }$ and resemble an exponential distribution when scaled properly.

\item Part (i) of Theorem \ref{theorem first exit time} builds on this intuition and rigorously prove that the first exit time is indeed roughly of order $1/\lambda_i(\eta) \approx (1/\eta)^{ 1+(\alpha - 1)l^*_i }$ and resembles an exponential distribution. 
\item Given this, one would expect that $X^\eta_{\sigma_i(\eta)}$, the location of SGD right at the time of exit, will hardly ever be farther than $l^*_i b$ away from $m_i$: the length of each update is clipped by $b$, and there will most likely be only $l^*_i$ large SGD steps during this successful attempt. 
Indeed, from the definition of $q_{i,j}$'s above, one can see that $q_{i,j} > 0$ if and only if $\inf_{y \in \Omega_j} |y - m_i| < l^*_i b$.

\end{itemize}
Summarizing the three bullet points here, we see that the minimum number of jumps $l^*_i$ dictates \emph{how} heavy-tailed SGD escapes an attraction field, \emph{where} the SGD lands on upon its exit, and \emph{when} the exit occurs.\\

Now we are ready to start proving Theorem~\ref{theorem first exit time}. First, note that Assumption 1 implies the following:
\begin{itemize}
    \item There exist $c_0 > 0, \epsilon_0 \in (0,1)$ such that for any $x \in \{m_1,s_1,\cdots,s_{n_{\text{min}} - 1} , m_{n_{\text{min}}}\}, |y - x| < \epsilon_0$,
    \begin{align}
        |f^\prime(y)|>c_0|y-x|, \label{assumption detailed function f at critical point}
    \end{align}
    and for any $y \in [-L,L]$ such that $|y-x| \geq \epsilon_0$ for all $x \in \{m_1,s_1,\cdots,s_{n_{\text{min}} - 1} , m_{n_{\text{min}}}\}$, we have 
    \begin{align}
        |f^\prime(y)|>c_0; \label{assumption detailed function f at critical point 2} 
    \end{align}
    \item There exist constants $L \in (0,\infty), M \in (0,\infty)$ such that $|m_0| < L, |m_{n_{\text{min}}}|<L$, and (for any $x \in [-L,L]$)
    \begin{align}
        |f^\prime(x)| \leq M,\ |f^{\prime\prime}(x)|\leq M. \label{assumption detailed function f at critical point 3} 
    \end{align}
\end{itemize}
Recall that for $c>0$, the truncation operator was defined as 
\begin{align}
    \varphi_c(w) \triangleq \varphi(w,c) \delequal{} (w\wedge c)\vee(-c),\ \ \ \forall w \in \mathbb{R}, \label{defTruncationClippingOperator}
\end{align}
and the SGD iterates were defined as
\begin{align}
    X^{\eta}_n = \varphi_L\Big(X^{\eta}_n - \varphi_b\big(\eta (f^\prime(X^{\eta}_n) - Z_{n+1})\big)\Big).
    \label{def SGD iterates with gradient clipping}
\end{align}
Here $\eta > 0$ is the learning rate (step length) and $b>0$ is the gradient clipping threshold. 
Also, recall that for any $k \in [n_\text{min}]$, we let
\begin{align*}
    \sigma_k(\eta) \delequal{} \min\{n \geq 0:\ X^\eta_n \notin \Omega_k \}
\end{align*}
to be the time that $X^\eta_n$ exists from the $k-$th attraction field $\Omega_j$. Meanwhile, given the gradient clipping threshold $b$, recall that
\begin{align}
    r_i & \delequal{}\min\{ m_i - s_{i-1}, s_i - m_i \}, \label{def distance attraction field i}\\
    l^*_i & \delequal{} \ceil{ r_i/b }. \label{def minimum jump l star i}
\end{align}
Intuitively speaking, $l^*_i$ tells us the minimum number of jumps with size no larger than $b$ required in order to escape the attraction field if we start from the local minimum of this attraction field $\Omega_i$. Lastly, recall the definition of $H(\cdot) = \mathbb{P}(|Z_1| > \cdot)$ and
\begin{align*}
    \lambda_i(\eta) \delequal{} H(1/\eta)\Big( \frac{H(1/\eta)}{\eta} \Big)^{l^*_i - 1}.
\end{align*}



The proof of Theorem \ref{theorem first exit time} hinges on the following two lemmas that characterize the behavior of $X^\eta_n$ in two different phases respectively. Let $k \in [n_\text{min}]$ and $x \in \Omega_k$. 
We consider the SGD iterates initialized at $X^\eta_0 = x$. In the first phase, the SGD iterates return to $[m_k-2\epsilon,m_k + 2\epsilon]$, a small neighborhood of the local minimizer in attraction field $\Omega_k$; in other words, it ends at
\begin{align}
    T_\text{return}^{(k)}(\eta,\epsilon) \delequal{} \min\{ n \geq 0:\ X^\eta_n \in [m_k - 2\epsilon,m_k + 2\epsilon] \}. \label{def T return k}
\end{align}
During this phase, we show that for all learning rate $\eta$ that is sufficiently small, it is almost always the case that $X^\eta_n$ would quickly return to $[m_k - 2\epsilon,m_k + 2\epsilon]$, and it never leaves $\Omega_k$ before $T_\text{return}^{(k)}$.

\begin{lemma} \label{lemma first exit key lemma 1}
There exists some $c \in (0,\infty)$ such that for any $k \in [n_\text{min}]$, the following claim holds for all $\epsilon > 0$ small enough:
\begin{align*}
    \lim_{\eta \downarrow 0}\inf_{y\in[-L,L]:\ y \in (s_{k-1} + \epsilon,s_k - \epsilon) }\mathbb{P}_y\Big( X^\eta_n \in \Omega_k\ \forall n \in \big[T^{(k)}_\text{return}(\eta,\epsilon)\big],\  T^{(k)}_\text{return}(\eta,\epsilon) \leq \frac{c\log(1/\epsilon)}{\eta} \Big) = 1.
\end{align*}
\end{lemma}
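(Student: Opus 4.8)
The plan is to control the two ingredients of the event separately: first, that the SGD iterates starting from $y$ remain in $\Omega_k$ until they return to the $2\epsilon$-neighborhood of $m_k$, and second, that this return happens within $c\log(1/\epsilon)/\eta$ steps. The starting observation is that, away from the $\epsilon$-neighborhood of the endpoints $s_{k-1},s_k$ and away from the $2\epsilon$-neighborhood of $m_k$, the deterministic gradient drift $-f'(x)$ is bounded away from zero and always points toward $m_k$ (by Assumption~\ref{assumption: function f in R 1}, since $f'$ vanishes only at the critical points and has the correct sign in between). So along the ``clean'' portion of its trajectory---the steps on which the noise $Z_n$ is of moderate size, say $|Z_n| \le \delta/\eta$ for a small fixed $\delta$, equivalently the clipped increment is essentially $-\eta f'(X_n^\eta)$ plus a small perturbation---the iterate is pushed monotonically toward $m_k$ at a rate proportional to $\eta$, so it covers the $O(1)$ distance to the neighborhood in $O(1/\eta)$ steps. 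The logarithmic factor $\log(1/\epsilon)$ appears because near $m_k$ the drift degenerates linearly ($f''(m_k)>0$), so contracting the last portion of the distance from $O(\epsilon^{0})$ down to $O(\epsilon)$ takes $O(\log(1/\epsilon)/\eta)$ steps; this is the standard ``linearized drift near a stable fixed point'' estimate.

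Concretely, I would first fix $\epsilon$ small enough that the intervals $[s_{k-1},s_{k-1}+\epsilon]$, $[s_k-\epsilon,s_k]$ and $[m_k-2\epsilon,m_k+2\epsilon]$ are disjoint and the drift $-f'$ has a uniform sign and magnitude lower bound on $[s_{k-1}+\epsilon, m_k-2\epsilon]$ and on $[m_k+2\epsilon, s_k-\epsilon]$. Then I would set up a Lyapunov/potential argument using $f$ itself (or $|x-m_k|$): show that on any step with $|Z_n|$ not too large, $f(X_{n+1}^\eta) \le f(X_n^\eta) - c_1 \eta$ as long as $X_n^\eta$ is in the ``bulk'' region, and $f(X_{n+1}^\eta) \le f(X_n^\eta)(1-c_2\eta)$ (after recentering) once it is within a small but $O(1)$-sized neighborhood of $m_k$; crucially one must also check that such a clipped step cannot jump over $s_{k-1}$ or $s_k$ or overshoot $m_k$ by more than $2\epsilon$ --- here the monotone/contractive structure of one clipped gradient step together with the fact that $\eta f'$ is $O(\eta)$ and hence tiny does the job, and Assumption~\ref{assumption clipping threshold b} is not even needed for this lemma beyond making $b$ generic. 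Iterating these one-step estimates gives a deterministic bound of the form: if no ``bad'' noise ($|Z_n|>\delta/\eta$) occurs within the first $N(\epsilon,\eta)\triangleq c\log(1/\epsilon)/\eta$ steps, then $T_\text{return}^{(k)} \le N(\epsilon,\eta)$ and the path stays in $\Omega_k$ throughout. Finally, I would bound the probability of \emph{some} bad noise in the first $N$ steps by a union bound: it is at most $N(\epsilon,\eta)\cdot H(\delta/\eta)$, and since $H \in \RV_{-\alpha}$ with $\alpha>1$, $H(\delta/\eta) = o(\eta)$, so $N(\epsilon,\eta)H(\delta/\eta) \to 0$ as $\eta\downarrow 0$; this vanishing is uniform in the starting point $y \in (s_{k-1}+\epsilon, s_k-\epsilon)$, which gives the claimed uniform limit.

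The main obstacle I anticipate is the careful bookkeeping of the one-step estimates near $m_k$: one needs the linearized-contraction bound to hold with constants \emph{uniform} over $\epsilon$ (so that the constant $c$ in $c\log(1/\epsilon)/\eta$ does not itself blow up as $\epsilon\to 0$), and one must ensure that a moderate-sized noise step, while small, does not accidentally kick the iterate out of the contracting region or across $m_k$ in a way that resets progress --- handling this cleanly likely requires choosing the ``moderate noise'' threshold $\delta$ and the neighborhood sizes in the right order ($\epsilon \ll \delta \ll 1$, with $\eta$ sent to $0$ last) and possibly tracking $|X_n^\eta - m_k|$ rather than $f$ near the minimum to get clean geometric decay. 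Everything else --- the sign of the drift in the bulk, the no-overshoot-across-$s_i$ claim, and the union bound over bad noises --- is routine given Assumptions~\ref{assumption: function f in R 1} and \ref{assumption gradient noise heavy-tailed}.
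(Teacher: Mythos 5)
Your overall decomposition --- condition on ``no single large noise in the first $N \approx c\log(1/\epsilon)/\eta$ steps'' and then argue the SGD deterministically tracks a return to $m_k$ --- is the same high-level plan as the paper's, and your union bound $N\cdot H(\delta/\eta)\to 0$ is the right control for the large-noise piece. The gap is in the deterministic tracking step. Your proposed per-step Lyapunov decrease, ``on any step with $|Z_n|$ not too large, $f(X_{n+1}^\eta)\le f(X_n^\eta)-c_1\eta$,'' does not hold under the natural moderate-noise threshold $\eta|Z_n|\le\delta$: a single such step moves $X_n^\eta$ by $\varphi_b\big(\eta f'(X_n^\eta)-\eta Z_n\big)=O(\delta)$, and the $O(\delta)$ noise term swamps the $O(\eta)$ gradient term, so a ``clean'' step can perfectly well go uphill or move the iterate an $O(\delta)$ distance in the wrong direction. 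The same objection defeats your claim that one clipped step ``cannot jump over $s_{k-1}$ or $s_k$'' merely because $\eta f'$ is tiny. What actually rescues the argument is that the moderate-noise increments are (nearly) centered so their \emph{cumulative sums} $\eta\sum_{j\le n}Z_j\mathbbm{1}\{\eta|Z_j|\le\delta\}$ concentrate uniformly over $n\le N$; this is a maximal inequality for partial sums of truncated heavy-tailed variables, not a union bound over individual large noises. The paper supplies exactly this via the Bernstein-type bound (Lemma~\ref{lemma bernstein bound small deviation for small jumps} and Lemma~\ref{lemma prob event A}), showing the maximal deviation exceeds $\epsilon$ with probability $o(\eta^N)$, and then --- rather than a one-step potential --- compares the SGD path to the ODE / deterministic gradient-descent path on that good event via Gronwall (Lemmas~\ref{lemmaBasicGronwall}, \ref{lemma Ode Gd Gap}, \ref{lemma SGD GD gap}). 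Your plan needs this concentration ingredient inserted to close.

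A secondary omission: the iterates are projected onto $[-L,L]$, and when $\Omega_k$ is the leftmost or rightmost attraction field the starting point $y$ can lie within $\epsilon$ of $\pm L$, so the reflection genuinely engages; the paper treats this case separately in Lemma~\ref{lemma no large noise escape the boundary}, showing the iterate quickly moves $\bar\epsilon$-away from the boundary while tracking the gradient flow. You are right that Assumption~\ref{assumption clipping threshold b} plays no role in this particular lemma.
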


During the second phase, $X^\eta_n$ starts from somewhere in $[m_k - 2\epsilon,m_k + 2\epsilon]$ and tries to escape from $\Omega_k$, meaning that the phase ends at $\sigma_k(\eta)$. During this phase, we show that the distributions of the first exit time $\sigma_k(\eta)$ and the location $X^\eta_{ \sigma_k(\eta) }$ do converge to the ones described in Theorem \ref{theorem first exit time} as learning rate $\eta$ tends to $0$.

\begin{lemma}\label{lemma first exit key lemma 2}
There exist constants $q_i > 0\ \forall i\in [n_\text{min}]$ and $q_{i,j} \geq 0\ \forall j \in [n_\text{min}]$ with $i,j\neq i$ such that the following claim holds: given any $C>0, u>0$ and any $k,l \in [n_\text{min}]$ with $k \neq l$, we have
\begin{align}
    \limsup_{\eta \downarrow 0}\sup_{x \in [-L,L],\ x \in (m_k - 2\epsilon,m_k + 2\epsilon)}\mathbb{P}_x\Big( q_k\lambda_k(\eta)\sigma_k(\eta) > u \Big) & \leq C + \exp\big( -(1-C)u \big) \label{goal 1 key lemma first exit 2}
    \\
    \liminf_{\eta \downarrow 0}\inf_{x \in [-L,L],\ x \in (m_k - 2\epsilon,m_k + 2\epsilon)}\mathbb{P}_x\Big( q_k\lambda_k(\eta)\sigma_k(\eta) > u \Big) & \geq -C + \exp\big( -(1+C)u \big)\label{goal 2 key lemma first exit 2}
    \\
    \limsup_{\eta \downarrow 0}\sup_{x \in [-L,L],\ x \in (m_k - 2\epsilon,m_k + 2\epsilon)}\mathbb{P}_x\Big( X^\eta_{\sigma_k(\eta)} \in \Omega_l \Big) & \leq \frac{q_{k,l} + C }{ q_k}\label{goal 3 key lemma first exit 2}
    \\
    \liminf_{\eta \downarrow 0}\inf_{x \in [-L,L],\ x \in (m_k - 2\epsilon,m_k + 2\epsilon)}\mathbb{P}_x\Big( X^\eta_{\sigma_k(\eta)} \in \Omega_l \Big) & \geq \frac{q_{k,l} - C }{ q_k}\label{goal 4 key lemma first exit 2}
\end{align}
for all $\epsilon>0$ that are sufficiently small.
\end{lemma}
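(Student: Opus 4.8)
The plan is a regeneration argument over time windows of macroscopic length $T$, i.e.\ of $N_\eta \delequal{} \ceil{T/\eta}$ iterations, reduced to a sharp one-window estimate that itself rests on heavy-tailed sample-path large deviations. I would fix the given $C>0,u>0$, then choose the horizon $T=T(C)$ large, then $\epsilon=\epsilon(C,T)$ small, and finally send $\eta\downarrow0$; the slack $C$ on the right-hand sides of \eqref{goal 1 key lemma first exit 2}--\eqref{goal 4 key lemma first exit 2} is precisely what absorbs the errors from using a finite $T$, a non-degenerate starting neighborhood of radius $2\epsilon$, and an only-approximate regeneration.

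\textbf{Step 1 (one-window estimate).} First I would prove that, uniformly over $x$ with $|x|\leq L$ and $x\in(m_k-2\epsilon,m_k+2\epsilon)$,
\begin{align*}
\mathbb{P}_x\big(\sigma_k(\eta)\leq N_\eta,\ X^\eta_{\sigma_k(\eta)}\in\Omega_l\big) &= \big(q_{k,l}+\mathrm{err}_{k,l}(\epsilon,T)\big)\,\lambda_k(\eta)\,N_\eta\,\big(1+o(1)\big),\\
\mathbb{P}_x\big(\sigma_k(\eta)\leq N_\eta\big) &= \big(q_k+\mathrm{err}_{k}(\epsilon,T)\big)\,\lambda_k(\eta)\,N_\eta\,\big(1+o(1)\big)
\end{align*}
as $\eta\downarrow0$, where $\mathrm{err}(\epsilon,T)\to0$ on sending first $\epsilon\downarrow0$ then $T\to\infty$, and that on $\{\sigma_k(\eta)>N_\eta\}$ the iterate $X^\eta_{N_\eta}$ lies in $[m_k-2\epsilon,m_k+2\epsilon]$ with probability $1-o(1)$. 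This is the technical core. The increment $X^\eta_{n+1}-X^\eta_n=-\varphi_b\big(\eta(f^\prime(X^\eta_n)-Z_{n+1})\big)$ agrees with the gradient-flow increment $-\eta f^\prime(X^\eta_n)$ up to $O(\eta)$ unless $|Z_{n+1}|\gtrsim b/\eta$, an event of probability $\asymp H(1/\eta)$ per step by Assumption~\ref{assumption gradient noise heavy-tailed}. By the heavy-tailed large-deviations framework of \cite{rhee2019sample} (an $\mathbb{M}$-convergence computation plus a contraction argument---the rigorous form of the heuristic following Theorem~\ref{theorem first exit time}), the leading-order way for $X^\eta$ to exit $\Omega_k$ is to take \emph{exactly} $l^*_k$ such large---hence clipped, of size $\approx b$---jumps, grouped into a single cluster of $O(1)$ macroscopic duration sitting somewhere in $[0,T]$, while the path shadows the gradient flow of $f$ in between. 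Fewer than $l^*_k$ jumps cannot leave $\Omega_k$ from near $m_k$ since each clipped step moves the iterate at most $b$ and $(l^*_k-1)b<r_k$; configurations with more jumps, or with a time-spread cluster (allowing the flow to pull the iterate back to $m_k$), are of strictly smaller order. Parametrizing a cluster by its jump sizes $\mathbf w\in\R^{l^*_k}$, inter-jump times $\mathbf t\in\R_+^{l^*_k-1}$, and its endpoint $h_k(\mathbf w,\mathbf t)$ reproduces exactly the measure $\mu_k$ and the sets $E_k,E_{k,l}$ of \eqref{def set E_i}--\eqref{def set E_ij}: the exit event is invariant under translating the cluster within $[0,T]$, which produces the factor $T$ and hence the prefactor $\lambda_k(\eta)N_\eta\asymp T(H(1/\eta)/\eta)^{l^*_k}$, while the remaining degrees of freedom are integrated against $(\nu_\alpha)^{l^*_k}\times(\textbf{Leb}_+)^{l^*_k-1}$, giving $q_k=\mu_k(E_k)$; this is finite precisely because exit forces all inter-jump times bounded and all jump sizes bounded away from $0$. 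Assumption~\ref{assumption clipping threshold b} rules out the degenerate cluster of $l^*_k$ maximal jumps landing the iterate exactly on $s_{k-1}$ or $s_k$, ensuring $\mu_k(\partial E_k)=\mu_k(\partial E_{k,j})=0$ so the $\mathbb{M}$-convergence passes to probabilities; continuity of $h_k$ and $\mu_k(E_k)<\infty$ then control $\mathrm{err}(\epsilon,T)$ for starting within $2\epsilon$ of $m_k$ and restricting the cluster to $[0,T]$. The return of $X^\eta_{N_\eta}$ to $[m_k-2\epsilon,m_k+2\epsilon]$ on the non-exit event is Lemma~\ref{lemma first exit key lemma 1} applied after the last big jump, using $\log(1/\epsilon)/\eta\ll N_\eta$.

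\textbf{Step 2 (regeneration).} Concatenating each window with the short return phase of Lemma~\ref{lemma first exit key lemma 1} produces ``super-windows'' of length $N_\eta(1+o(1))$ from which, started in $[m_k-2\epsilon,m_k+2\epsilon]$, the process either exits $\Omega_k$ or returns to $[m_k-2\epsilon,m_k+2\epsilon]$---the latter with probability $1-\delta_\eta$, $\delta_\eta\to0$. By the strong Markov property the number $K_\eta$ of super-windows until exit is sandwiched between geometric variables with parameters $p^\pm_\eta=\big(q_k+O(\mathrm{err}(\epsilon,T))+O(\delta_\eta)\big)\lambda_k(\eta)N_\eta(1+o(1))$, uniformly over admissible $x$; since $\sigma_k(\eta)=K_\eta N_\eta(1+o(1))$ and $p^\pm_\eta\downarrow0$, the geometric tails give $\mathbb{P}_x(q_k\lambda_k(\eta)\sigma_k(\eta)>u)\leq\exp(-(1-\mathrm{err})u)+o(1)$ and $\geq\exp(-(1+\mathrm{err})u)-o(1)$ with $\mathrm{err}=\mathrm{err}(\epsilon,T)$; taking $T$ large then $\epsilon$ small forces $\mathrm{err}<C$ and the residual $o(1)$ is swallowed by the $\limsup/\liminf$, which is \eqref{goal 1 key lemma first exit 2}--\eqref{goal 2 key lemma first exit 2}. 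For \eqref{goal 3 key lemma first exit 2}--\eqref{goal 4 key lemma first exit 2} I would condition on which super-window contains the exit: the regeneration structure makes the exit location independent of that index up to $o(1)$ and distributed as in Step~1 conditioned on exit, so $\mathbb{P}_x(X^\eta_{\sigma_k(\eta)}\in\Omega_l)=q_{k,l}/q_k+O(\mathrm{err})+o(1)$. All bounds are uniform over $x\in[-L,L]\cap(m_k-2\epsilon,m_k+2\epsilon)$ because Step~1 and Lemma~\ref{lemma first exit key lemma 1} are.

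\textbf{Main obstacle.} The crux is the sharp one-window estimate with the \emph{correct} constants. The real work is in: (i) a rigorous contraction/$\mathbb{M}$-convergence statement for the clipped-and-projected recursion \eqref{def SGD step 2}, whose limiting jumps saturate at $\pm b$ and whose projection $\varphi_L$ adds a reflection, so \cite{rhee2019sample} does not apply verbatim; (ii) a functional law of large numbers for the accumulated small clipped noise between big jumps, so the stochastic trajectory genuinely shadows the ODE \eqref{defOdeJumpClipping1}--\eqref{defOdeJumpClipping3}; (iii) the null-boundary identities $\mu_k(\partial E_k)=\mu_k(\partial E_{k,j})=0$, exactly the content of Assumption~\ref{assumption clipping threshold b}; and (iv) managing the order of limits ($\eta\downarrow0$, then $\epsilon\downarrow0$, then $T\to\infty$) while keeping every estimate uniform over the initial $2\epsilon$-neighborhood of $m_k$. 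These are carried out in Appendix~\ref{section proof thm 1}.
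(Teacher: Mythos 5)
Your proposal reaches the right conclusion by a sound line of reasoning, but it is organized differently from the paper and rests on a different intermediate estimate; the differences are worth spelling out.

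The paper does \emph{not} prove this lemma directly by a windowed regeneration. It first establishes Proposition~\ref{proposition first exit time Gradient Clipping} (restated as Proposition~\ref{prop first exit and return time i}), which gives joint upper and lower bounds on $\mathbb{P}_x\big(q_k\lambda_k(\eta)\sigma_k(\eta)>u,\ X^\eta_{\sigma_k(\eta)}\in\Omega_l\big)$ uniformly over $u>C$ and over starting points near $m_k$. Lemma~\ref{lemma first exit key lemma 2} is then a short arithmetic wrapper: \eqref{goal 1 key lemma first exit 2}--\eqref{goal 2 key lemma first exit 2} come from summing the joint bound over $l$, \eqref{goal 4 key lemma first exit 2} from monotonicity in $u$, and \eqref{goal 3 key lemma first exit 2} by splitting on $\{q_k\lambda_k(\eta)\sigma_k(\eta)\leq u_2\}$ versus its complement for a small $u_2$. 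Your sketch collapses this factorization: what you call the ``one-window estimate'' is the analogue of Proposition~\ref{proposition first exit time Gradient Clipping}, and you prove the lemma and that proposition in one pass.

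The more substantive difference is the regeneration structure. You partition time into fixed-length windows of $N_\eta=\lceil T/\eta\rceil$ iterations, compute a per-window escape probability $\approx q_k\lambda_k(\eta)N_\eta$ (translation invariance over the window giving the factor $T$), and then sandwich the number of windows until exit between geometrics. The paper instead uses \emph{attempt}-based regeneration tied to the process itself: an attempt begins, runs until the first post-return large noise arrives, and ends when $X^\eta_n$ re-enters $[-2\epsilon,2\epsilon]$ (the stopping times $\tau_k,\widetilde{\tau}_k$ in the proof of Proposition~\ref{proposition first exit time Gradient Clipping}). The per-attempt escape probability is computed via the ``overflow'' probability (Lemma~\ref{lemmaOverflowProb}), the number of attempts is geometric, and the total elapsed time is the random sum of attempt durations, controlled by $I_\text{before}$ and $I_\text{after}$. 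Each approach has trade-offs: fixed windows give a cleaner conceptual picture (a Poisson clumping heuristic), but you must separately control clusters that straddle a window boundary (an $O(1/T)$ effect, so harmless but requiring the extra limit $T\to\infty$) and verify regeneration at the end of each non-exit window; the paper's attempt decomposition localizes the cluster within an attempt by construction and avoids the extra $T$-limit, at the cost of heavier bookkeeping over the atypical events $\textbf{A}^\times_{k,i},\textbf{B}^\times_{k}$ in Lemma~\ref{lemma bound on event A cross final}. Your deferral of the crux---a rigorous, uniform one-window estimate with the exact constant $q_k=\mu_k(E_k)$ for the clipped-and-projected recursion, which you rightly note does not follow verbatim from \cite{rhee2019sample}---is precisely the content the paper develops from scratch via Lemmas~\ref{lemma SGD GD gap}--\ref{lemmaOverflowProb}, so there is no gap in the reasoning, but this step remains a sketch rather than a proof.
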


Now we are ready to show Theorem \ref{theorem first exit time}.

\begin{proof}[Proof of Theorem \ref{theorem first exit time}]
Fix some $k \in [n_\text{min}]$ and $x \in \Omega_k \cap [-L,L]$. Let $q_k$ and $q_{k,l}$ be the constants in Lemma \ref{lemma first exit key lemma 2}.

We first prove the weak convergence claim in Theorem \ref{theorem first exit time}(i). Arbitrarily choose some $u > 0$ and $C \in (0,1)$. It suffices to show that
\begin{align*}
    \limsup_{\eta \downarrow 0}\mathbb{P}_x( q_k\lambda_k(\eta)\sigma_k(\eta) > u ) & \leq 2C + \exp\big( -(1-C)u \big),
    \\
    \liminf_{\eta \downarrow 0}\mathbb{P}_x( q_k\lambda_k(\eta)\sigma_k(\eta) > u ) & \geq (1-C)\Big(-C + \exp\big( -(1+C)u \big)\Big).
\end{align*}
Recall the definition of the stopping time $T^{(k)}_\text{return}$ in \cref{def T return k}. Define event
\begin{align*}
    A_k(\eta,\epsilon) \delequal{} \Big\{ X^\eta_n \in \Omega_k\ \forall n \in \big[T^{(k)}_\text{return}(\eta,\epsilon)\big],\  T^{(k)}_\text{return}(\eta,\epsilon) \leq \frac{c\log(1/\epsilon)}{\eta} \Big\}
\end{align*}
where $c < \infty$ is the constant in Lemma \ref{lemma first exit key lemma 1}. First, since $x \in \Omega_k = (s_{k-1},s_k)$, it holds for all $\epsilon>0$ small enough that $x \in (s_{k-1} + \epsilon, s_k - \epsilon)$. Next, one can find some $\epsilon > 0$ such that
\begin{itemize}
    \item (Due to Lemma \ref{lemma first exit key lemma 1})
    \begin{align*}
        \mathbb{P}_x\big( (A_k(\eta,\epsilon))^c \big) \leq C\ \ \forall \eta \text{ sufficiently small};
    \end{align*}
    
    \item (Due to \cref{goal 1 key lemma first exit 2}\cref{goal 2 key lemma first exit 2} and strong Markov property) For all $\eta$ sufficiently small,
    \begin{align*}
        \mathbb{P}_x\Big( q_k\lambda_k(\eta)\big(\sigma(\eta) - T^{(k)}_\text{return}(\eta,\epsilon)\big) > (1-C)u \ \Big|\ A_k(\eta,\epsilon)  \Big) & \leq C + \exp\big( -(1-C)u \big),
        \\
         \mathbb{P}_x\Big( q_k\lambda_k(\eta)\big(\sigma(\eta) - T^{(k)}_\text{return}(\eta,\epsilon)\big) > u \ \Big|\ A_k(\eta,\epsilon) \Big) & \geq -C + \exp\big( -(1+C)u \big).
    \end{align*}
\end{itemize}
Fix such $\epsilon$. Lastly, for this fixed $\epsilon$, due to $\lambda_k \in \RV_{ -1 - l^*_k(\alpha - 1) }(\eta)$ and $\alpha > 1$, we have $q_k\lambda_k(\eta)\cdot \frac{c \log(1/\epsilon)}{\eta} < Cu$ for all $\eta$ sufficiently small. In summary, for all $\eta$ sufficiently small, we have
\begin{align*}
    & \mathbb{P}_x( q_k\lambda_k(\eta)\sigma_k(\eta) > u ) \\
    \leq & \mathbb{P}_x\big( (A_k(\eta,\epsilon))^c\big) + \mathbb{P}_x\Big( \big\{ q_k\lambda_k(\eta)\sigma_k(\eta) > u \big\} \cap  A_k(\eta,\epsilon) \Big)
    \\
    \leq & C + \mathbb{P}_x\Big( \big\{ q_k\lambda_k(\eta)\sigma_k(\eta) > u \big\} \cap  A_k(\eta,\epsilon) \Big)
    \\
    = & C +\mathbb{P}_x\Big( q_k\lambda_k(\eta)\big(\sigma(\eta) - T^{(k)}_\text{return}(\eta,\epsilon)\big) > u - q_k\lambda_k(\eta)T^{(k)}_\text{return}(\eta,\epsilon)\ \Big|\ A_k(\eta,\epsilon)  \Big)\cdot\mathbb{P}_x( A_k(\eta,\epsilon)  )
    \\
    \leq & C + \mathbb{P}_x\Big( q_k\lambda_k(\eta)\big(\sigma(\eta) - T^{(k)}_\text{return}(\eta,\epsilon)\big) > (1-C)u\ \Big|\ A_k(\eta,\epsilon)  \Big)
    \\
    \leq & 2C + \exp\big(-(1-C)u\big)
\end{align*}
and
\begin{align*}
    & \mathbb{P}_x( q_k\lambda_k(\eta)\sigma_k(\eta) > u ) \\
    \geq & \mathbb{P}_x\Big( \big\{ q_k\lambda_k(\eta)\sigma_k(\eta) > u \big\} \cap  A_k(\eta,\epsilon) \Big)
    \\
    = & \mathbb{P}_x\Big( q_k\lambda_k(\eta)\big(\sigma(\eta) - T^{(k)}_\text{return}(\eta,\epsilon)\big) > u - q_k\lambda_k(\eta)T^{(k)}_\text{return}(\eta,\epsilon)\ \Big|\ A_k(\eta,\epsilon)  \Big)\cdot\mathbb{P}_x( A_k(\eta,\epsilon)  )
    \\
    \geq & \mathbb{P}_x\Big( q_k\lambda_k(\eta)\big(\sigma(\eta) - T^{(k)}_\text{return}(\eta,\epsilon)\big) > u - q_k\lambda_k(\eta)T^{(k)}_\text{return}(\eta,\epsilon)\ \Big|\ A_k(\eta,\epsilon)  \Big)\cdot(1-C)
    \\
    \geq & \mathbb{P}_x\Big( q_k\lambda_k(\eta)\big(\sigma(\eta) - T^{(k)}_\text{return}(\eta,\epsilon)\big) > u \ \Big|\ A_k(\eta,\epsilon)  \Big)\cdot(1-C)
    \\
    \geq & (1-C)\Big( -C + \exp\big(-(1+C)u\big) \Big)
\end{align*}
so this concludes the proof for Theorem \ref{theorem first exit time}(i).

In order to prove claims in Theorem \ref{theorem first exit time}(ii), we first observe that on event $A_k(\eta,\epsilon)$ we must have $\sigma(\eta) > T^{(k)}_\text{return}(\eta,\epsilon)$. Next, arbitrarily choose some $C\in (0,1)$, and note that it suffices to show that $ \mathbb{P}_x( X^\eta_{ \sigma(\eta)} \in \Omega_l  ) \in \Big( (1-C)\frac{q_{k,l} -C}{q_k}, C+\frac{q_{k,l} +C}{q_k}\Big)$ holds for all $\eta$ sufficiently small. Again, we can find $\epsilon>0$ such that
\begin{itemize}
    \item (Due to Lemma \ref{lemma first exit key lemma 1})
    \begin{align*}
        \mathbb{P}_x\big( (A_k(\eta,\epsilon))^c \big) \leq C\ \ \forall \eta \text{ sufficiently small};
    \end{align*}
    
    \item (Due to \cref{goal 3 key lemma first exit 2}\cref{goal 4 key lemma first exit 2} and strong Markov property) For all $\eta$ sufficiently small,
    \begin{align*}
        \frac{q_{k,l} - C }{ q_k } \leq \mathbb{P}_x\Big( X^\eta_{\sigma_k(\eta)} \in \Omega_l \ \Big|\ A_k(\eta,\epsilon) \Big) \leq \frac{q_{k,l} + C }{ q_k }.
    \end{align*}
\end{itemize}
In summary, for all $\eta$ sufficiently small, we have
\begin{align*}
    \mathbb{P}_x\Big( X^\eta_{ \sigma_k(\eta) } \in \Omega_l \Big) & \leq \mathbb{P}_x\big( (A_k(\eta,\epsilon))^c \big) + \mathbb{P}_x\Big( \big\{X^\eta_{ \sigma_k(\eta) } \in \Omega_l \big\} \cap A_k(\eta,\epsilon) \Big) 
    \\
    & \leq C + \mathbb{P}_x\Big( X^\eta_{ \sigma_k(\eta) } \in \Omega_l \ \Big|\  A_k(\eta,\epsilon) \Big)\mathbb{P}_x( A_k(\eta,\epsilon)  ) 
    \\
    & \leq C + \frac{q_{k,l} + C }{ q_k }
\end{align*}
and
\begin{align*}
    \mathbb{P}_x\Big( X^\eta_{ \sigma_k(\eta) } \in \Omega_l \Big) & \geq \mathbb{P}_x\Big( \big\{X^\eta_{ \sigma_k(\eta) } \in \Omega_l \big\} \cap A_k(\eta,\epsilon) \Big)
    \\
    & = \mathbb{P}_x\Big( X^\eta_{ \sigma_k(\eta) } \in \Omega_l \ \Big|\  A_k(\eta,\epsilon) \Big)\mathbb{P}_x( A_k(\eta,\epsilon)  ) 
    \\
    & \geq (1-C)\frac{ q_{k,l} - C }{ q_{k} }
\end{align*}
and this concludes the proof.
\end{proof}


The rest of this section is devoted to the proofs of Lemma~\ref{lemma first exit key lemma 1} and Lemma~\ref{lemma first exit key lemma 2}. Specifically, Lemma \ref{lemma first exit key lemma 1} is an immediate Corollary of Lemma \ref{lemma return to local minimum quickly}, the proof of which will be provided below. The proof of Lemma \ref{lemma first exit key lemma 2} can be found at the end of this section.

\subsection{Proofs of Lemma \ref{lemma first exit key lemma 1}, \ref{lemma first exit key lemma 2}}
The following three lemmas will be applied repeatedly throughout this section. The proofs are straightforward but provided in Section~\ref{section: proof of technical lemmas} for the sake of completeness.

\begin{lemma} \label{lemmaGeomDistTail}
Given two real functions $a:\mathbb{R}_+ \mapsto \mathbb{R}_+$, $b:\mathbb{R}_+ \mapsto \mathbb{R}_+$ such that $a(\epsilon) \downarrow 0, b(\epsilon) \downarrow 0$ as $\epsilon \downarrow 0$, and a family of geometric RVs $\{U(\epsilon): \epsilon > 0\}$ with success rate $a(\epsilon)$ (namely, $\mathbb{P}(U(\epsilon) > k) = (1 - a(\epsilon))^k$ for $k \in \mathbb{N}$), for any $c > 1$, there exists $\epsilon_0 > 0$ such that for any $\epsilon \in (0,\epsilon_0)$,
$$\exp\Big(-\frac{c\cdot a(\epsilon)}{b(\epsilon)}\Big) \leq \mathbb{P}\Big( U(\epsilon) > \frac{1}{b(\epsilon)} \Big) \leq \exp\Big(-\frac{a(\epsilon)}{c\cdot b(\epsilon)}\Big).$$
\end{lemma}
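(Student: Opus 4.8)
The argument is elementary; the plan is to reduce the probability to a closed form and then sandwich it with standard bounds on $\log(1-a)$.

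\emph{Step 1 (exact form).} Since $U(\epsilon)$ takes values in $\mathbb{N}$, for any real $x$ we have $\mathbb{P}(U(\epsilon) > x) = \mathbb{P}(U(\epsilon) > \floor{x})$, so $\mathbb{P}(U(\epsilon) > 1/b(\epsilon)) = (1 - a(\epsilon))^{\floor{1/b(\epsilon)}} = \exp\!\big(\floor{1/b(\epsilon)}\,\log(1 - a(\epsilon))\big)$. It then remains only to estimate the (negative) exponent $\floor{1/b(\epsilon)}\,\log(1-a(\epsilon))$ from above and below.

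\emph{Step 2 (logarithm bounds).} I would use that for $a \in (0,1)$ one has $-\frac{a}{1-a} \le \log(1-a) \le -a$ --- the left inequality from $\log(1-a) = -\sum_{k\ge 1} a^k/k \ge -\sum_{k\ge 1} a^k$, the right from concavity of $\log$ --- together with the trivial $\frac1b - 1 \le \floor{1/b} \le \frac1b$. Because $\log(1-a(\epsilon)) < 0$, multiplying through and pairing the upper floor-bound with the upper log-bound, and the lower floor-bound with the lower log-bound, yields
\[
-\,a(\epsilon)\Big(\tfrac{1}{b(\epsilon)}-1\Big) \;\ge\; \floor{1/b(\epsilon)}\,\log(1-a(\epsilon)) \;\ge\; -\,\frac{a(\epsilon)}{b(\epsilon)\big(1-a(\epsilon)\big)}.
\]

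\emph{Step 3 (choice of $\epsilon_0$).} Since $a(\epsilon)\downarrow 0$, $b(\epsilon)\downarrow 0$ and $c>1$, I would choose $\epsilon_0>0$ so small that $a(\epsilon)\le 1-\frac1c$ and $b(\epsilon)\le 1-\frac1c$ for all $\epsilon\in(0,\epsilon_0)$. Then $b(\epsilon)\le 1-\frac1c$ gives $\frac{1}{b(\epsilon)}-1\ge \frac{1}{c\,b(\epsilon)}$, so the upper estimate in Step 2 is at most $-\frac{a(\epsilon)}{c\,b(\epsilon)}$, proving $\mathbb{P}(U(\epsilon)>1/b(\epsilon))\le \exp\!\big(-a(\epsilon)/(c\,b(\epsilon))\big)$; and $a(\epsilon)\le 1-\frac1c$ gives $\frac{1}{1-a(\epsilon)}\le c$, so the lower estimate in Step 2 is at least $-\frac{c\,a(\epsilon)}{b(\epsilon)}$, proving $\mathbb{P}(U(\epsilon)>1/b(\epsilon))\ge \exp\!\big(-c\,a(\epsilon)/b(\epsilon)\big)$.

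There is no genuine obstacle here; the only thing requiring care is the bookkeeping: since $\log(1-a)<0$, multiplying by it reverses inequalities, so one must combine the two \emph{upper} bounds (on $\floor{1/b}$ and on $\log(1-a)$) to obtain the outer upper estimate and the two \emph{lower} bounds to obtain the outer lower estimate, and the substitution $\mathbb{P}(U>x)=\mathbb{P}(U>\floor{x})$ must be made before any further manipulation.
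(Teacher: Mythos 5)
Your proof is correct and takes essentially the same route as the paper: both start from the exact identity $\mathbb{P}(U(\epsilon)>1/b(\epsilon)) = (1-a(\epsilon))^{\floor{1/b(\epsilon)}}$, take logarithms, and control the exponent $\floor{1/b(\epsilon)}\log(1-a(\epsilon))$. The only cosmetic difference is that the paper factors the exponent as $\frac{\floor{1/b}}{1/b}\cdot\frac{\ln(1-a)}{-a}\cdot\frac{-a}{b}$ and invokes the limits $\frac{\floor{1/b}}{1/b}\to 1$, $\frac{\ln(1-a)}{-a}\to 1$ to squeeze the product of the first two factors into $[1/c,c]$, whereas you supply the explicit elementary bounds $-\frac{a}{1-a}\le\log(1-a)\le -a$ and $\frac1b-1\le\floor{1/b}\le\frac1b$ and then read off a concrete $\epsilon_0$ ($a(\epsilon),b(\epsilon)\le 1-1/c$). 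Your version is slightly more quantitative, the paper's slightly shorter; they are the same proof.

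One small bookkeeping slip in your narration (the displayed inequalities are correct, the words beside them are not): to upper-bound $\floor{1/b}\log(1-a)$ you actually pair the \emph{lower} floor-bound $\floor{1/b}\ge\frac1b-1$ with the \emph{upper} log-bound $\log(1-a)\le -a$ (the negative sign of the log flips the first inequality, and then $\frac1b-1>0$ preserves the second), and symmetrically the lower estimate of the product uses the \emph{upper} floor-bound with the \emph{lower} log-bound. Your text says the opposite pairing in Step 2 and again in the closing remark, but since the displayed chain and the Step-3 substitutions are the ones you actually use, the argument goes through.
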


\begin{lemma}\label{lemmaGeomFront}
Given two real functions $a:\mathbb{R}_+ \mapsto \mathbb{R}_+$, $b:\mathbb{R}_+ \mapsto \mathbb{R}_+$ such that $a(\epsilon) \downarrow 0, b(\epsilon) \downarrow 0$ and $$a(\epsilon)/b(\epsilon)\rightarrow 0 $$ as $\epsilon \downarrow 0$, and a family of geometric RVs $\{U(\epsilon): \epsilon > 0\}$ with success rate $a(\epsilon)$ (namely, $\mathbb{P}(U(\epsilon) > k) = (1 - a(\epsilon))^k$ for $k \in \mathbb{N}$), for any $c > 1$ there exists some $\epsilon_0 > 0$ such that for any $\epsilon \in (0,\epsilon_0)$,
\begin{align*}
 a(\epsilon)/(c\cdot b(\epsilon))\leq \mathbb{P}( U(\epsilon) \leq 1/b(\epsilon) ) \leq c \cdot a(\epsilon)/b(\epsilon)
\end{align*}
\end{lemma}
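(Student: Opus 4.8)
The plan is to reduce the statement to two elementary inequalities for $1-(1-a)^n$ and then use the two asymptotic hypotheses $a(\epsilon)/b(\epsilon)\to 0$ and $b(\epsilon)\to 0$ to absorb all error terms into the constant $c$. First I would note that $U(\epsilon)$ is $\mathbb{N}$-valued, so $\{U(\epsilon)\le 1/b(\epsilon)\}=\{U(\epsilon)\le n(\epsilon)\}$ with $n(\epsilon)\triangleq\floor{1/b(\epsilon)}$, and therefore
\[
\mathbb{P}\big(U(\epsilon)\le 1/b(\epsilon)\big)=1-\big(1-a(\epsilon)\big)^{n(\epsilon)}.
\]
For the upper bound, Bernoulli's inequality $(1-a(\epsilon))^{n(\epsilon)}\ge 1-n(\epsilon)a(\epsilon)$ immediately gives $\mathbb{P}(U(\epsilon)\le 1/b(\epsilon))\le n(\epsilon)a(\epsilon)\le a(\epsilon)/b(\epsilon)<c\cdot a(\epsilon)/b(\epsilon)$, which holds for every $\epsilon>0$ without any smallness requirement.

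For the lower bound I would use $(1-a)^{n}\le e^{-na}$ together with the elementary inequality $1-e^{-x}\ge x-x^2/2$ for $x\ge 0$ (whose proof is a one-line convexity argument). Setting $x(\epsilon)=n(\epsilon)a(\epsilon)$, this yields $\mathbb{P}(U(\epsilon)\le 1/b(\epsilon))\ge x(\epsilon)\big(1-x(\epsilon)/2\big)$. Since $n(\epsilon)\ge 1/b(\epsilon)-1$ we have $x(\epsilon)\ge (1-b(\epsilon))\,a(\epsilon)/b(\epsilon)$, while at the same time $x(\epsilon)\le a(\epsilon)/b(\epsilon)\to 0$ by hypothesis. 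Combining,
\[
\mathbb{P}\big(U(\epsilon)\le 1/b(\epsilon)\big)\ \ge\ \frac{a(\epsilon)}{b(\epsilon)}\,\big(1-b(\epsilon)\big)\Big(1-\tfrac{x(\epsilon)}{2}\Big),
\]
and because both $b(\epsilon)\to 0$ and $x(\epsilon)\to 0$ as $\epsilon\downarrow 0$, the factor $(1-b(\epsilon))(1-x(\epsilon)/2)$ exceeds $1/c$ for all $\epsilon$ below some threshold $\epsilon_0>0$. Taking this $\epsilon_0$ (shrinking if necessary so that the trivial upper bound is also simultaneously valid) completes the argument.

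There is essentially no obstacle here; the only thing to watch is that the floor in $n(\epsilon)=\floor{1/b(\epsilon)}$ and the quadratic correction $x(\epsilon)^2/2$ are both asymptotically negligible relative to $a(\epsilon)/b(\epsilon)$, which is exactly what the assumptions $b(\epsilon)\to 0$ and $a(\epsilon)/b(\epsilon)\to 0$ provide. As an alternative to the exponential bound, the lower estimate can also be obtained purely combinatorially from the alternating binomial expansion: when $n(\epsilon)a(\epsilon)<1$ the series $1-(1-a)^n=na-\binom{n}{2}a^2+\cdots$ has decreasing terms, so $1-(1-a)^n\ge na-\binom{n}{2}a^2\ge na(1-na/2)$, giving the same conclusion.
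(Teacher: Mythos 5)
Your proof is correct. It reaches the same conclusion as the paper's but by a somewhat different (and in fact more self-contained) route: the paper first appeals to Lemma~\ref{lemmaGeomDistTail} to sandwich the tail $\mathbb{P}(U(\epsilon)>1/b(\epsilon))$ between exponentials of $\pm c\,a(\epsilon)/b(\epsilon)$ and then invokes $1-e^{-x}\le x$ (upper bound) and $1-e^{-x}\ge x/\sqrt{c}$ for small $x$ (lower bound); you instead work directly with the closed form $1-(1-a(\epsilon))^{\floor{1/b(\epsilon)}}$, proving the upper bound via Bernoulli's inequality $(1-a)^n\ge 1-na$ and the lower bound via $(1-a)^n\le e^{-na}$ together with the quadratic estimate $1-e^{-x}\ge x-x^2/2$. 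What your approach buys is that it does not depend on the companion Lemma~\ref{lemmaGeomDistTail}, it controls the floor error $\floor{1/b}\ge 1/b-1$ explicitly rather than hiding it in that lemma, and it shows the upper bound actually holds for all $\epsilon>0$ (no smallness needed), with $b(\epsilon)\to 0$ and $a(\epsilon)/b(\epsilon)\to 0$ only required for the lower bound; the paper's route is shorter because the exponential tail bound had to be established anyway. Your alternating-series alternative for the lower bound is also fine, with the needed observation that the terms of $na-\binom{n}{2}a^2+\cdots$ are eventually decreasing once $na<1$.
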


\begin{lemma} \label{lemmaBasicGronwall}  Suppose that a function $g:E \mapsto \mathbb{R}$ (where $E$ is an open set in $\mathbb{R}^d$) is $g \in \mathcal{C}^2(E)$ and $\norm{ \nabla^2 g(\cdot) }\leq C$ on its domain $E$ for some constant $C<\infty$. 
For a finite integer $n$, a sequence of vectors $\{z_1,\cdots,z_n\}$ in $\mathbb{R}^d$, and vectors $x,\widetilde{x} \in E, \eta > 0$, consider two sequences $\{x_k\}_{k=0,\ldots, n}$ and $\{\widetilde{x}_k\}_{k=0,\ldots, n}$ constructed as follows: 
\begin{align*}
    x_0 & = x\\ 
    x_k & = x_{k - 1} + \eta \nabla g(x_{k - 1}) + \eta z_k \ \ \ \forall k = 1,2,\cdots,n \\
    \widetilde{x}_0 &= \widetilde{x} \\
    \widetilde{x}_k &= \widetilde{x}_{k - 1} + \eta \nabla g(\widetilde{x}_{k - 1}) \ \ \ \forall k = 1,2,\cdots,n
\end{align*}
If we have that the line segment from $x_k$ to $\widetilde{x}_k$ is contained in $E$ and $\eta\norm{ z_1 + \cdots + z_k} + \norm{x - \widetilde{x}}\leq \widetilde{c}$ for all $k = 1,2,\cdots,n$ for some $\widetilde{c} > 0$, then
\begin{align*}
    \norm{ x_k - \widetilde{x}_k} \leq \widetilde{c}\cdot\exp(\eta C k)\ \ \ \forall k = 1,2,\cdots,n.
\end{align*}

\end{lemma}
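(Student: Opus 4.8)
The plan is to convert the two recursions into a single \emph{summed} (telescoped) inequality for the error $a_k \triangleq x_k - \widetilde{x}_k$ and then apply a discrete Gr\"onwall estimate. Write $S_0 = 0$ and $S_k = z_1 + \cdots + z_k$, so that $a_0 = x - \widetilde{x}$ and the hypothesis reads $\eta|S_k| + |a_0| \le \widetilde{c}$ for $1 \le k \le n$ (in particular $|a_0| \le \widetilde{c}$). Subtracting the recursion for $\widetilde{x}_k$ from that for $x_k$ gives
\[
a_i - a_{i-1} = -\eta\big(g^\prime(x_{i-1}) - g^\prime(\widetilde{x}_{i-1})\big) + \eta z_i, \qquad i = 1, \ldots, n,
\]
and summing over $i$ from $1$ to $k$ yields the identity
\[
a_k = a_0 + \eta S_k - \eta\sum_{j=0}^{k-1}\big(g^\prime(x_j) - g^\prime(\widetilde{x}_j)\big).
\]

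Next I would bound the two contributions on the right-hand side. Since $I_g$ is an interval, hence convex, and by hypothesis $x_j, \widetilde{x}_j \in I_g$ for every $j$, the segment joining $x_j$ and $\widetilde{x}_j$ lies in $I_g$; the mean value theorem together with $|g^{\prime\prime}| \le C$ on $I_g$ then gives $|g^\prime(x_j) - g^\prime(\widetilde{x}_j)| \le C|x_j - \widetilde{x}_j| = C|a_j|$. Combining this with $|a_0| + \eta|S_k| \le \widetilde{c}$ and substituting into the telescoped identity produces
\[
|a_k| \le \widetilde{c} + \eta C \sum_{j=0}^{k-1}|a_j|, \qquad k = 1, \ldots, n.
\]
I would then close the argument with the standard discrete Gr\"onwall lemma: setting $b_k \triangleq \widetilde{c} + \eta C\sum_{j=0}^{k-1}|a_j|$, we have $b_0 = \widetilde{c}$, $|a_k| \le b_k$, and $b_k - b_{k-1} = \eta C|a_{k-1}| \le \eta C\, b_{k-1}$, whence $b_k \le (1 + \eta C)^k\, \widetilde{c} \le \widetilde{c}\exp(\eta C k)$; therefore $|x_k - \widetilde{x}_k| = |a_k| \le \widetilde{c}\exp(\eta C k)$ for all $k = 1, \ldots, n$, as claimed.

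The argument is elementary and I do not anticipate a genuine obstacle. The two points that require care are: (i) the Lipschitz estimate on $g^\prime$, which relies on the convexity of the interval $I_g$ together with the standing assumption that both iterate sequences remain in $I_g$; and (ii) working with the \emph{summed} form of the recursion rather than iterating the one-step bound $|a_k| \le (1 + \eta C)|a_{k-1}| + \eta|z_k|$ directly---the latter would produce a term proportional to $\sum_j \eta|z_j|$, which the hypothesis does not control, whereas the telescoped identity only ever involves $\eta|S_k| = \eta|z_1 + \cdots + z_k|$, which it does.
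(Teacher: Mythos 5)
Your proof is correct and follows essentially the same path as the paper's: define $a_k = x_k - \widetilde{x}_k$, telescope to the summed identity so that only $\eta|z_1+\cdots+z_k|$ appears (rather than $\sum_j \eta|z_j|$), use $|g''|\le C$ on the convex interval $I_g$ to Lipschitz-bound $g'$, and arrive at $|a_k| \le \widetilde c + \eta C\sum_{j<k}|a_j|$. The only difference is that you close the argument by an elementary discrete Gr\"onwall computation $(1+\eta C)^k\widetilde c \le \widetilde c\,e^{\eta Ck}$, whereas the paper defers to the continuous Gr\"onwall inequality in Protter applied to $\alpha(t)=|a_{\lfloor t\rfloor}|$; both yield the identical bound.
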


To facilitate the analysis below, we introduce some additional notations. 
First, we will group the noises $Z_n$ based on a threshold level $\delta > 0$: let us define
\begin{align}
    Z_{n}^{\leq \delta,\eta} \delequal{} Z_n\mathbbm{1}\{\eta|Z_n|\leq \delta\}, \label{defSmallJump_GradientClipping} \\
     Z_{n}^{> \delta,\eta} \delequal{} Z_n\mathbbm{1}\{\eta|Z_n|> \delta\}.\label{defLargeJump_GradientClipping}
\end{align}
The former are viewed as \textit{small noises} while the latter will be referred to as \textit{large noises} or \textit{large jumps}. 
Furthermore, for any $j \geq 1$, define the $j$\textsuperscript{th} arrival time and size of large jumps as 
\begin{align}
    T^\eta_j(\delta) & \delequal{} \min\{ n > T^\eta_{j-1}(\delta):\ \eta|Z_n| > \delta  \},\quad T_0^\eta(\delta) = 0 \label{defArrivalTime large jump} \\
    W^\eta_j(\delta) & \delequal{} Z_{T^\eta_j(\delta)}. \label{defSize large jump}
\end{align}
Next, for any $\epsilon > 0$, let $\Omega_i(\epsilon) = [m_i - \epsilon,m_i + \epsilon]$ be an $\epsilon-$neighborhood of the local minimum $m_i$, and $S_i(\epsilon) = [s_i-\epsilon,s_i + \epsilon]$ be an $\epsilon-$neighborhood of the local maximum $s_i$. 

For most part of this section, we will zoom in on one of the local minima $m_i$ and its attraction field $\Omega_i = (s_{i-1},s_i)$. Without loss of generality, we assume $m_i = 0$, and denote the attraction field as $\Omega = (s_-,s_+)$. (If $m_i$ happens to be the local minimum at the left or right boundary, then the attraction field is $[-L,s_+)$ or $(s_-,L]$ where the SGD iterates will be reflected at $\pm L$.) Henceforth we will drop the dependency on notation $i$ when referring to this specific attraction field until the very end of this section. Throughout the proof, the following (deterministic) dynamic systems will be used frequently as benchmark processes to indicate the most likely location of the SGD iterates. Specifically, given any $x \in \Omega$, we use $X_n(x)$ to indicate that the starting point is $x$, namely $X_0(x) = x$. Similarly, consider the following ODE $\textbf{x}^\eta(t;x)$ as
\begin{align}
    \textbf{x}^\eta(0;x) & = x;\label{def GD with rate eta 1} \\
    \frac{ d  \textbf{x}^\eta(t;x) }{dt} & = -\eta f^\prime\Big( \textbf{x}^\eta(t;x) \Big).\label{def GD with rate eta 2}
\end{align}
When we use update rate $\eta = 1$, we will drop the dependency of $\eta$ and simply use $\textbf{x}(t;x)$ to denote the process. 

Based on Assumption 3, we know the existence of some constant $\Bar{\epsilon} \in (0,\epsilon_0)$ (note that $\epsilon_0$ is the constant in \cref{assumption detailed function f at critical point}) such that
\begin{align}
    r & \delequal{}\min\{ - s_{-}, s_+ \}, \label{def distance attraction field} \\
    l^* & \delequal{} \ceil{r/b}, \label{def minimum jump l star} 
    \\
    (l^* - 1)b + 100l^* \Bar{\epsilon} &< r - 100l^* \Bar{\epsilon} \label{assumption multiple jump epsilon 0 constant 1} \\
    r + 100l^*\Bar{\epsilon} &< l^* b - 100l^* \Bar{\epsilon}. \label{assumption multiple jump epsilon 0 constant 2}
\end{align}
Here $r$ can be understood as the effective \textit{radius} of the said attraction field. Also, we fix such $\Bar{\epsilon}$ small enough so that (let $c_{-}^L = -f^\prime(-L), c_{+}^L = -f^\prime(-L)$), we have
\begin{align}
    0.9c_{-}^L\leq -f^\prime(x)\leq 1.1c_{-}^L\ \ &\forall x \in [-L,-L + 100\Bar{\epsilon}], \\
    0.9c_{+}^L\geq -f^\prime(x)\geq 1.1c_{+}^L\ \ &\forall x \in [L - 100\Bar{\epsilon},L]. \label{property at boundary neighborhood L}
\end{align}
Similar to the definition of ODE $\textbf{x}^\eta$, let us consider the following construction of ODE $\Tilde{\textbf{x}}^\eta$ that can be understood as $\textbf{x}^\eta$ perturbed by $l^*$ shocks. Specifically, consider a sequence or real numbers $0 = t_1 < t_2 < t_3 < \cdots < t_{l^*} $ and real numbers $w_1,\cdots,w_{l^*}$ where $|w_j| \leq b$ for each $j$. Let $\textbf{t} = (t_1,\cdots,t_{l^*}), \textbf{w} = (w_1,\cdots,w_{l^*})$. Based on these two sequences and rate $\eta > 0$, define $\Tilde{\textbf{x}}^\eta(t;x)$ as 
\begin{align}
    \Tilde{\textbf{x}}^\eta(0,x;\textbf{t},\textbf{w}) & = \varphi_L(x + \varphi_b(w_1) ); \label{def perturbed ODE 1} \\
    \frac{d \Tilde{\textbf{x}}^\eta(t,x;\textbf{t},\textbf{w}) }{ dt } & = -\eta f^\prime\big( \Tilde{\textbf{x}}^\eta(t,x;\textbf{t},\textbf{w}) \big)\ \ \ \forall t \notin \{t_1,t_2,\cdots,t_{l^*}\} \label{def perturbed ODE 2} \\
    \Tilde{\textbf{x}}^\eta(t_j,x;\textbf{t},\textbf{w}) & = \varphi_L\big( \Tilde{\textbf{x}}^\eta(t_j-,x;\textbf{t},\textbf{w}) + \varphi_b(w_j)\big) \ \ \ \forall j = 2,\cdots,l^* \label{def perturbed ODE 3}
\end{align}
Again, when $\eta = 1$ we drop the notational dependency on $\eta$ and use $\Tilde{\textbf{x}}$ to denote the process. Now from \cref{assumption multiple jump epsilon 0 constant 1}\cref{assumption multiple jump epsilon 0 constant 2} one can easily see the following fact: there exist constants $\Bar{t}, \Bar{\delta} > 0$ such that $\Tilde{\textbf{x}}(t_{l^*},0;\textbf{t},\textbf{w}) \notin \Omega$ (note that the starting point is $0$, the local minimum) only if (under the condition that $|w_j| \leq b \ \ \forall j$)
\begin{align}
    t_j - t_{j-1} & \leq \Bar{t}\ \ \forall j = 2,3,\cdots,l^* \label{def bar t} \\
    |w_j| & > \Bar{\delta} \ \ \forall j =1,2,\cdots,l^*. \label{def bar delta}
\end{align}
The intuition is as follows: if the inter-arrival time between any of the $l^*$ jumps is too long, then the path of $\Tilde{\textbf{x}}^\eta(t;x)$ will drift back to the local minimum $m_i$ so that the remaining $l^* - 1$ shocks (whose sizes are bounded by $b$) cannot overcome the radius $r$ which is strictly larger than $(l^* - 1)b$; similarly, if size of any of the shocks is too small, then since all other jumps have sizes bounded by $b$, the shock created by the $l^*_i$ jumps will be smaller than $(l^* - 1)b + 100\Bar{\epsilon}$, which is strictly less than $r$.
We fix these constants $\Bar{t},\Bar{\delta}$ throughout the analysis, and stress again that their values are dictated by the geometry of the function $f$, thus \emph{do not} vary with the accuracy parameters $\epsilon$ and $\delta$ mentioned earlier. In particular, choose $\bar{\delta}$ such that $\bar{\delta} < \bar{\epsilon}$.

In our analysis below, $\epsilon > 0$ will be a variable representing the level of \textit{accuracy} in our analysis. For instance, for small $\epsilon$, the chance that SGD iterates will visit somewhere that is $\epsilon-$close to $s_-$ or $s_+$ (namely, the boundary of the attraction filed) should be small. Consider some $\epsilon \in (0,\epsilon_0)$ where $\epsilon_0$ is the constant in Assumption 1. Due to \cref{assumption detailed function f at critical point}\cref{assumption detailed function f at critical point 2}, one can see the existence of some $g_0 > 0, c_1 < \infty$ such that
\begin{itemize}
    \item $|f^\prime(x)| \geq g_0$ for any $x \in \Omega$ such that $|x - s_-| > \epsilon_0, ||x - s_+| > \epsilon_0$;
    \item Let $\hat{t}_\text{ODE}(x,\eta) \delequal{} \min\{ t \geq 0:\ \textbf{x}^\eta(t,x) \in [-\epsilon,\epsilon] \}$ be the time that the ODE returns to a $\epsilon-$neighborhood of local minimum of $\Omega$ when starting from $x$. As proved in Lemma 3.5 of \cite{pavlyukevich2005metastable}, for any $x \in \Omega$ such that $|x - s_-| > \epsilon, |x - s_+| > \epsilon$, we have
    \begin{align}
        \hat{t}_\text{ODE}(x,\eta)\leq c_1 \frac{\log(1/\epsilon)}{\eta} \label{ineq prior to function hat t}
    \end{align}
    and we define the function
    \begin{align}
        \hat{t}(\epsilon) \delequal{} c_1\log(1/\epsilon). \label{def function hat t}
    \end{align}
\end{itemize}
In short, given any accuracy level $\epsilon$, the results above give us an upper bound for how fast the ODE would return to a neighborhood of the local minimum, if the starting point is not too close to the boundary of this attraction field $\Omega$.

For the first few technical results established below, we show that, without large jumps, the SGD iterates $X^\eta_n(x)$ are unlikely to show significant deviation from the deterministic gradient descent process $\textbf{y}^\eta_n(x)$ defined as
\begin{align}
    \textbf{y}^\eta_0(x) & = x, \label{def GD process 1} \\
    \textbf{y}^\eta_n(x) & = \textbf{y}^\eta_{n-1}(x) - \eta f^\prime\Big( \textbf{y}^\eta_{n-1}(x) \Big).\label{def GD process 2}
\end{align}
We are ready to state the first lemma, where we bound the distance between the gradient descent iterates $\textbf{y}^\eta_n(y)$ and the ODE $\textbf{x}^\eta(t,x)$ when the initial conditions $x,y$ are close enough.

\begin{lemma} \label{lemma Ode Gd Gap}
The following claim holds for all $\eta > 0$: for any $t > 0$, we have
\begin{align*}
    \sup_{s \in [0,t]}| \textbf{x}^\eta(s,x) - \textbf{y}^\eta_{\floor{s}}(y) | \leq (2\eta M + |x-y|)\exp(\eta M t)
\end{align*}
where $M \in (0,\infty)$ is the constant in \cref{assumption detailed function f at critical point 3}.
\end{lemma}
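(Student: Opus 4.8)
The plan is to recognize that $\textbf{y}^\eta_n(\cdot)$ is precisely the explicit Euler scheme with unit step size for the ODE $\frac{d}{dt}\textbf{x}^\eta(t,x) = -\eta f'(\textbf{x}^\eta(t,x))$, so the lemma is a standard ``ODE versus its Euler discretization'' estimate; the only thing to be careful about is that the discretization error lands exactly on the advertised constant $2\eta M$. First I would split the error at a generic time $s\in[0,t]$ through the integer time $\floor{s}$, writing $|\textbf{x}^\eta(s,x) - \textbf{y}^\eta_{\floor{s}}(y)| \le |\textbf{x}^\eta(s,x) - \textbf{x}^\eta(\floor{s},x)| + |\textbf{x}^\eta(\floor{s},x) - \textbf{y}^\eta_{\floor{s}}(y)|$. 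The first term is the ``within-step wobble'' of the ODE: by \eqref{def GD with rate eta 2} and $\sup_{[-L,L]}|f'|\le M$, the speed of $\textbf{x}^\eta$ is at most $\eta M$, and since $s - \floor{s} < 1$ this term is bounded by $\eta M$.

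Next I would handle the error at integer times. Set $e_n \delequal{} |\textbf{x}^\eta(n,x) - \textbf{y}^\eta_n(y)|$ and derive a one-step recursion: using $\textbf{x}^\eta(n+1,x) - \textbf{x}^\eta(n,x) = -\eta\int_n^{n+1} f'(\textbf{x}^\eta(u,x))\,du$ together with the Euler step $\textbf{y}^\eta_{n+1}(y) - \textbf{y}^\eta_n(y) = -\eta f'(\textbf{y}^\eta_n(y))$, and bounding $|f'(\textbf{x}^\eta(u,x)) - f'(\textbf{y}^\eta_n(y))| \le M\big(|\textbf{x}^\eta(u,x)-\textbf{x}^\eta(n,x)| + e_n\big) \le M(\eta M + e_n)$ via \eqref{assumption detailed function f at critical point 3} and the wobble bound, one obtains $e_{n+1} \le (1+\eta M)e_n + \eta^2 M^2$ with $e_0 = |x-y|$. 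Solving this linear recursion (the homogeneous factor is $(1+\eta M)^n \le e^{\eta M n}$, and the inhomogeneous part telescopes to $\eta M\big((1+\eta M)^n - 1\big)$) yields $e_n \le (|x-y| + \eta M)\,e^{\eta M n}$.

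Finally I would combine the two pieces: for $s\in[0,t]$, $|\textbf{x}^\eta(s,x) - \textbf{y}^\eta_{\floor{s}}(y)| \le \eta M + (|x-y|+\eta M)e^{\eta M \floor{s}} \le (2\eta M + |x-y|)\,e^{\eta M t}$, using $\floor{s}\le t$ and $e^{\eta M t}\ge 1$ to absorb the bare $\eta M$. (Alternatively, the integer-time estimate could be extracted from Lemma~\ref{lemmaBasicGronwall}, but the direct recursion is cleaner and produces the sharp constant immediately.) There is no genuine obstacle here: the only care needed is in tracking the within-step drift of $\textbf{x}^\eta$ so that the additive $\eta M$ term appears and is absorbed into $2\eta M$, and in tacitly assuming, as elsewhere in the paper, that $\textbf{x}^\eta(\cdot,x)$ and $\textbf{y}^\eta_\cdot(y)$ stay in the region where \eqref{assumption detailed function f at critical point 3} holds---which is the case since the gradient flow points inward at $\pm L$ and each Euler step has length at most $\eta M$.
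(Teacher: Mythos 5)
Your proposal is correct and follows essentially the same route as the paper: both are Gronwall-type arguments that bound the within-step wobble of the ODE by $\eta M$ and then control the integer-time error via the Lipschitz property of $f'$, the only cosmetic difference being that you run a discrete one-step recursion $e_{n+1}\le(1+\eta M)e_n+\eta^2M^2$ and solve it explicitly, whereas the paper sets $b(u)=\textbf{x}^\eta(u,x)-\textbf{y}^\eta_{\floor{u}}(y)$ and invokes a continuous-time Gronwall inequality after an initial bound $|b(1)|\le 2\eta M+|x-y|$. Your explicit solution of the recursion and the final absorption step $\eta M \le \eta M e^{\eta M t}$ are both correct, so the argument is complete.
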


\begin{proof}

Define a continuous-time process $\textbf{y}^\eta(s;y) \delequal{} \textbf{y}^\eta_{\floor{s}}(y)$, and note that
\begin{align*}
    \textbf{x}^\eta(s,x) & = \textbf{x}^\eta(\floor{s},x) - \eta\int_{\floor{s}}^s f^\prime(\textbf{x}^\eta(u,x) )du \\
    \textbf{x}^\eta(\floor{s},x) & = x - \eta\int_0^{\floor{s} }f^{\prime}(\textbf{x}^\eta(u,x) )du \\
    \textbf{y}^\eta_{\floor{s}}(y) = \textbf{y}^\eta(\floor{s} ,y) & = y - \eta \int_0^{ \floor{s} }f^\prime( \textbf{y}^\eta(u,y) )du.
\end{align*}

Therefore, if we define function
$$b(u) = \textbf{x}^\eta(u,x) - \textbf{y}^\eta(u,y),$$
from the fact $|f^{\prime}(\cdot)|\leq M$, one can see that $|b(u)| \leq \eta M + |x - y|$ for any $u \in [0,1)$ and $|b(1)| \leq 2\eta M + |x - y|$. In case that $s > 1$, from the display above and the fact $|f^{\prime\prime}(\cdot)|\leq M$, we now have
\begin{align*}
    |\textbf{y}^\eta_{\floor{s}}(x) - \textbf{x}^\eta(s,x)| & \leq |b({\floor{s}})| + \eta M; \\
    |b({\floor{s}})| & \leq \eta M \int_1^{\floor{s}}|b(u)|du.
\end{align*}
From Gronwall's inequality (see Theorem 68, Chapter V of \cite{protter2005stochastic}, where we let function $\alpha(u)$ be $\alpha(u) = |b(u+1)|$), we have
$$|\textbf{y}^\eta_{\floor{s}}(x) - \textbf{x}^\eta(s,x)|\leq (2\eta M + |x-y|)\exp(\eta M t).$$
This concludes the proof.
\end{proof}

Now we consider an extension of the previous Lemma in the following sense: we add perturbations to the gradient descent process and ODE, and show that, when both perturbed by $l^*$ similar perturbations, the ODE and gradient descent process should still stay close enough. Analogous to the definition of the perturbed ODE $\widetilde{x}^\eta$ in \cref{def perturbed ODE 1}-\cref{def perturbed ODE 3}, we can construct a process $\widetilde{Y}^\eta$ as a perturbed gradient descent process as follows. For a sequence of integers $0 = t_1 < t_2 < \cdots < t_{l^*}$ (let $\textbf{t}=(t_j)_{j \geq 1}$) and a sequence of real numbers $\widetilde{w}_1,\cdots,\widetilde{w}_{l^*}$ (let $\widetilde{\textbf{w}} = (\widetilde{w}_j)_{j \geq 1}$) and $y \in \mathbb{R}$, define (for all $n =1,2,\cdots,t_{l^*}$) the perturbed gradient descent iterates with gradient clipping at $b$ and reflection at $\pm L$ as
\begin{align}
     \widetilde{ \textbf{y} }^{\eta}_n(y;\textbf{t},\widetilde{\textbf{w}}) & = \varphi_L\Big(\widetilde{ \textbf{y} }^{\eta}_{n-1}(y;\textbf{t},\widetilde{\textbf{w}}) + \varphi_b\big(- \eta f^\prime( \widetilde{ \textbf{y} }^{\eta}_{n-1}(y;\textbf{t},\widetilde{\textbf{w}})) + \sum_{j = 2}^{l^*}\mathbbm{1}\{ n = t_j \}\widetilde{w}_j\big) \Big)\label{def perturbed GD}
\end{align}
with initial condition $\widetilde{ \textbf{y} }^{\eta}_0(y;\textbf{t},\widetilde{\textbf{w}}) = \varphi_L\big( y + \varphi_b(\widetilde{w}_1) \big)$.

\begin{corollary} \label{corollary ODE GD gap}
Given any $\epsilon > 0$, the following claim holds for all sufficiently small $\eta > 0$: for any $x,y \in \Omega$, and sequence of integers $\textbf{t} = (t_j)_{j = 1}^{l^*}$ and any two sequence of real numbers $\textbf{w} = (w_j)_{j = 1}^{l^*}, \widetilde{\textbf{w}} = (\widetilde{w}_j)_{j \geq 1}^{l^*}$ such that
\begin{itemize}
    \item $|x-y| < \epsilon$;
    \item $t_1 = 0$, and $t_j - t_{j - 1} \leq 2\bar{t}/\eta$ for all $j \geq 1$ where $\bar{t}$ is the constant in \cref{def bar t};
    \item $|w_j - \widetilde{w}_{j}| < \epsilon$ for all $j \geq 1$;
\end{itemize}
then we have 
\begin{align*}
    \sup_{t \in [0, t_{l^*} ]}| \widetilde{\textbf{x}}^\eta(t,x;\textbf{t},\textbf{w}) - \widetilde{ \textbf{y} }^{\eta}_{ \floor{t} }(y;\textbf{t},\widetilde{\textbf{w}}) | \leq \bar{\rho}\epsilon
\end{align*}
where the constant $\bar{\rho} = (3 \exp(\eta M \bar{t}) + 3)^{l^*}$.
\end{corollary}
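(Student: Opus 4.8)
The plan is to prove Corollary~\ref{corollary ODE GD gap} by inducting on the number of jumps $j = 1, 2, \ldots, l^*$, applying Lemma~\ref{lemma Ode Gd Gap} on each inter-jump interval $[t_{j-1}, t_j)$ and tracking how the gap between $\widetilde{\textbf{x}}^\eta$ and $\widetilde{\textbf{y}}^\eta$ accumulates across jumps. First I would set up the induction hypothesis: for each $j \in \{1, \ldots, l^*\}$, the gap $|\widetilde{\textbf{x}}^\eta(t, x; \textbf{t}, \textbf{w}) - \widetilde{\textbf{y}}^\eta_{\floor{t}}(y; \textbf{t}, \widetilde{\textbf{w}})|$ on the interval $t \in [t_{j-1}, t_j]$ is bounded by $\rho_j \epsilon$ for a constant $\rho_j$ to be determined, with $\rho_{l^*} \le \bar\rho$. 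The base of the induction handles the initial jump at $t_1 = 0$: both processes start at $\varphi_L(\cdot + \varphi_b(\cdot))$ applied to points within $\epsilon$ of each other with jump sizes within $\epsilon$, and since $\varphi_L$ and $\varphi_b$ are both $1$-Lipschitz, the initial gap $|\widetilde{\textbf{x}}^\eta(0) - \widetilde{\textbf{y}}^\eta_0|$ is at most $2\epsilon$.

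The inductive step is the core. Suppose at time $t_{j-1}$ the two processes are within $\rho_{j-1}\epsilon$ of each other. On the open interval $(t_{j-1}, t_j)$ both processes evolve by pure (clipped, reflected) gradient flow/descent with no perturbation, over a time span $t_j - t_{j-1} \le 2\bar t/\eta$. I would invoke Lemma~\ref{lemma Ode Gd Gap} — or rather a version of it that accounts for the $\varphi_L$ reflection, noting that the reflection map is $1$-Lipschitz and does not increase distances, so the conclusion of Lemma~\ref{lemma Ode Gd Gap} carries over to the reflected dynamics. Over a time horizon of length $t = 2\bar t/\eta$, Lemma~\ref{lemma Ode Gd Gap} gives a bound of the form $(2\eta M + \text{initial gap})\exp(\eta M \cdot 2\bar t/\eta) = (2\eta M + \rho_{j-1}\epsilon)\exp(2M\bar t)$. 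For $\eta$ small enough that $2\eta M < \epsilon$, this is at most $(1 + \rho_{j-1})\epsilon \exp(2M\bar t)$, and using $\exp(\eta M \bar t) \ge 1$ one can absorb this into $(2\exp(\eta M \bar t) + \rho_{j-1})\epsilon\exp(\eta M \bar t) \le (3\exp(\eta M\bar t) + \rho_{j-1})\exp(\eta M\bar t)\epsilon$ — whatever precise bookkeeping matches the stated constant. Then at time $t_j$ the jump is applied: both processes undergo $\varphi_L(\cdot + \varphi_b(w_j))$ resp.\ $\varphi_L(\cdot + \varphi_b(\widetilde w_j))$, and by $1$-Lipschitzness of $\varphi_L \circ (\cdot + \varphi_b(\cdot))$ in both arguments, the gap after the jump is at most the gap just before plus $|w_j - \widetilde w_j| < \epsilon$. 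This yields the recursion $\rho_j \le (3\exp(\eta M\bar t) + \rho_{j-1})\exp(\eta M \bar t) + 1$, which unrolls to a bound of the claimed order $(3\exp(\eta M\bar t) + 3)^{l^*}$; one checks the constants line up by a direct (tedious but routine) estimate of the recursion, using $l^* \ge 1$ and $\exp(\eta M\bar t) \ge 1$ to fold the additive $+1$ and the extra constants into the base.

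The main obstacle — and the point requiring care rather than cleverness — is twofold. First, Lemma~\ref{lemma Ode Gd Gap} as stated is for the \emph{unclipped, unreflected} dynamics $\textbf{x}^\eta$ and $\textbf{y}^\eta_n$, whereas here we need it for the versions with $\varphi_L$ reflection and $\varphi_b$ clipping on the gradient step. I would argue that clipping the step at $b$ is irrelevant inside $\Omega \subseteq [-L, L]$ once $\eta$ is small, since $|\eta f'(\cdot)| \le \eta M < b$ for small $\eta$, so $\varphi_b$ acts as the identity on the gradient-descent increments (the perturbations $\widetilde w_j$ appear only at the jump times $t_j$, which are handled separately); and the reflection $\varphi_L$, being a $1$-Lipschitz projection onto the convex set $[-L, L]$ to which the ODE flow is already confined, can only shrink the gap, so Gronwall still applies. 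Making this rigorous requires a short lemma that the $1$-Lipschitz projection commutes appropriately with the Gronwall estimate — or one cites a standard nonexpansivity-of-projection argument. Second, the precise constant $\bar\rho = (3\exp(\eta M\bar t) + 3)^{l^*}$ must be extracted from the recursion; the factor of $3$ (rather than $2$) and the extra $\exp(\eta M\bar t)$ per level leave enough slack to absorb both the $2\eta M$ term and the $+1$ from the jump mismatch, so the induction closes, but verifying the inequality $\rho_j \le \bar\rho$ at each level is the one genuinely computational step.
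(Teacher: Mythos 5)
Your proof follows the same inductive scheme as the paper's: apply Lemma~\ref{lemma Ode Gd Gap} on each inter-jump interval, absorb the jump at $t_j$ via the $1$-Lipschitz property of $\varphi_L$ and $\varphi_b$, and unwind the resulting recursion to obtain the stated constant. You additionally flag and resolve a subtlety the paper's proof leaves implicit---that Lemma~\ref{lemma Ode Gd Gap} is stated for the unclipped, unreflected processes---via nonexpansiveness of the projection $\varphi_L$ onto $[-L,L]$ (to which the perturbed ODE is confined by the inward-pointing gradient near $\pm L$) together with the observation that $\varphi_b$ is inactive on pure gradient steps once $\eta M < b$.
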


\begin{proof} Throughout this proof, fix some $\eta \in (0,\epsilon/2M)$. We will show that for any $\eta$ in the range the claim would hold.

First, on interval $[0,t_2)$, from Lemma \ref{lemma Ode Gd Gap}, one can see that (since $2M\eta < \epsilon$)
\begin{align*}
    \sup_{t \in [0, t_2) }| \widetilde{\textbf{x}}^\eta(t,x;\textbf{t},\textbf{w}) - \widetilde{ \textbf{y} }^{\eta}_{ \floor{t} }(y;\textbf{t},\widetilde{\textbf{w}}) | \leq 3 \exp(\eta M \bar{t})\cdot \epsilon.
\end{align*}
The at $t = t_2$, by considering the difference between $w_2$ and $\widetilde{w}_2$, and the possible change due to one more gradient descent step (which is bounded by $\eta M < \epsilon$), we have
\begin{align*}
    \sup_{t \in [0, t_2] }| \widetilde{\textbf{x}}^\eta(t,x;\textbf{t},\textbf{w}) - \widetilde{ \textbf{y} }^{\eta}_{ \floor{t} }(y;\textbf{t},\widetilde{\textbf{w}}) | \leq (3 \exp(\eta M \bar{t}) + 2)\cdot \epsilon.
\end{align*}
Now we proceed inductively. For any $j = 2,3,\cdots, l^* - 1$, assume that
\begin{align*}
    \sup_{t \in [0, t_j] }| \widetilde{\textbf{x}}^\eta(t,x;\textbf{t},\textbf{w}) - \widetilde{ \textbf{y} }^{\eta}_{ \floor{t} }(y;\textbf{t},\widetilde{\textbf{w}}) | \leq (3 \exp(\eta M \bar{t}) + 3)^{j - 1}\cdot \epsilon.
\end{align*}
Then by focusing on interval $[t_j, t_{j+1}]$ and using Lemma \ref{lemma Ode Gd Gap} again, one can show that
\begin{align*}
    \sup_{t \in [t_j, t_{j-1}] }| \widetilde{\textbf{x}}^\eta(t,x;\textbf{t},\textbf{w}) - \widetilde{ \textbf{y} }^{\eta}_{ \floor{t} }(y;\textbf{t},\widetilde{\textbf{w}}) | & \leq 2\epsilon + \big( (3 \exp(\eta M \bar{t}) + 3)^{j - 1} + 1 \big)\exp(\eta M \bar{t}) \epsilon \\
    & \leq (3 \exp(\eta M \bar{t}) + 3)^{j}\cdot \epsilon.
\end{align*}
This concludes the proof.
\end{proof}

In the next few results, we show that the same can be said for gradient descent iterates $\widetilde{\textbf{y} }_n$ and the SGD iterates $X_n$. Specifically, our first goal is to show that before any \textit{large} jump (see the definition in \ref{defLargeJump_GradientClipping}), it is unlikely that the gradient descent process $\textbf{y}^\eta_n$ would deviate too far from $X^\eta_n$. Define the event
\begin{align}
    A(n,\eta,\epsilon,\delta) & = \Big\{ \max_{k = 1,2,\cdots, n \wedge (T^\eta_1(\delta) - 1)  }\eta|Z_1 + \cdots + Z_k| \leq \epsilon \Big\} \label{def event A small noise large deviation}
\end{align}
and recall that arrival times $T^\eta_j(\delta)$ are defined in \cref{defArrivalTime large jump}. 

As a building block, we first study the case when the starting point $x$ is close to the reflection boundary $-L$. The takeaway from the next result is that the reflection operator hardly comes into play, since the SGD iterates would most likely quickly move to somewhere far enough from $\pm L$; besides, throughout this procedure the SGD iterates would most likely stay pretty close to the corresponding deterministic gradient descent process.
\begin{lemma} \label{lemma no large noise escape the boundary}  Given $\epsilon \in (0,\bar{\epsilon}/9)$, it holds for any sufficiently small $\epsilon, \delta, \eta > 0$ that, if $x \in [-L,-L+\bar{\epsilon}]$ and $\rho_0(|x - y| + 9\epsilon)  < \Bar{\epsilon}$, then on event $A(n,\eta,\epsilon,\delta)$ we have 
\begin{align*}
    | X^\eta_k(x) - \textbf{y}^\eta_k(y)  | \leq \rho_0 \cdot (|x-y| + 9\epsilon) \ \ \forall k = 1,2,\cdots,n\wedge (T^\eta_1(\delta) - 1 )\wedge \widetilde{T}_\text{escape}^\eta(x)
\end{align*}
where $\widetilde{T}_\text{escape}^\eta(x) \delequal{} \min\{n \geq 0:\ X^\eta_n(x) > -L + \bar{\epsilon}\} $ and $\rho_0 \delequal{} \exp\Big( \frac{2M\bar{\epsilon}}{0.9c^L_{-}} \Big)$ is a constant that does not vary with our choice of $\epsilon,\delta,\eta$.
\end{lemma}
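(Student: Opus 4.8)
The plan is to work on the event $A(n,\eta,\epsilon,\delta)$ and compare the SGD iterates $X^\eta_k(x)$ with the deterministic gradient descent iterates $\mathbf{y}^\eta_k(y)$, up until the first time the SGD escapes the boundary neighborhood $[-L,-L+\bar\epsilon]$. The key structural fact to exploit is \eqref{property at boundary neighborhood L}: throughout the strip $[-L,-L+100\bar\epsilon]$ the drift $-f'$ is bounded below by $0.9c^L_-$ and above by $1.1c^L_-$, so it points strictly to the right (away from the reflecting wall). First I would argue that, on $A(n,\eta,\epsilon,\delta)$, as long as $X^\eta_k(x)$ is still in $[-L,-L+\bar\epsilon]$ and no large jump has occurred, the gradient clipping operator $\varphi_b$ is inactive (the step $\eta(f'(X^\eta_k)-Z_{k+1})$ has magnitude at most $\eta M + \epsilon < b$ for small $\eta,\epsilon$) and the reflection operator $\varphi_L$ is essentially inactive too — or at worst only pushes the iterate back toward the interior, which can only help. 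So on this time window the recursion reduces to the plain perturbed gradient descent step $X^\eta_{k+1}=X^\eta_k-\eta f'(X^\eta_k)+\eta Z^{\le\delta,\eta}_{k+1}$, possibly truncated at $-L$ from below.

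Next I would control the escape time. Because the drift is at least $0.9 c^L_-$ throughout the strip, in each step $X^\eta_k$ advances to the right by at least $0.9 c^L_- \eta$ minus the cumulative small-noise fluctuation, which on $A(n,\eta,\epsilon,\delta)$ is bounded by $\epsilon$; hence the iterate must leave $[-L,-L+\bar\epsilon]$ after at most roughly $\bar\epsilon/(0.9 c^L_- \eta)$ steps. This is the $\widetilde T^\eta_\text{escape}(x)$ in the statement, and it is $O(1/\eta)$. Then I would invoke the Grönwall-type comparison (Lemma \ref{lemmaBasicGronwall}, with $g=f$ on an interval containing the strip, $C=M$, and the perturbation bound $\eta|Z_1+\cdots+Z_k|\le\epsilon$ plus an extra $\epsilon$ slack for the one-step discretization gap coming from Lemma \ref{lemma Ode Gd Gap}) over this window of length at most $\bar\epsilon/(0.9 c^L_- \eta)$. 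Grönwall gives $|X^\eta_k(x)-\mathbf{y}^\eta_k(y)|\le (|x-y|+9\epsilon)\exp(M\eta\cdot \bar\epsilon/(0.9c^L_-\eta)) = (|x-y|+9\epsilon)\exp(2M\bar\epsilon/(0.9c^L_-))\cdot(\text{const})$, which is exactly $\rho_0\,(|x-y|+9\epsilon)$ with $\rho_0=\exp(2M\bar\epsilon/(0.9c^L_-))$ (absorbing the harmless factors into the choice of $9\epsilon$ vs a smaller multiple of $\epsilon$). The condition $\rho_0(|x-y|+9\epsilon)<\bar\epsilon$ guarantees the comparison stays valid: the deterministic trajectory $\mathbf{y}^\eta_k(y)$ itself never leaves a slightly larger neighborhood where \eqref{property at boundary neighborhood L} still holds, so the bounds used in the induction never break down.

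The main obstacle I expect is handling the reflection operator $\varphi_L$ cleanly inside the Grönwall estimate: a naive application of Grönwall assumes the recursion is exactly $x_{k+1}=x_k-\eta f'(x_k)+\eta z_k$, but $\varphi_L$ can truncate. The fix is to observe that since the drift at $-L$ points inward (to the right), the unclipped update from any point in $[-L,-L+\bar\epsilon]$ already moves rightward on $A(n,\eta,\epsilon,\delta)$, so $\varphi_L$ is actually never triggered after the first step, or if it is triggered at the first step it only brings $x$ closer to $y$'s trajectory — one needs a short separate argument that $|x-y|$ does not increase under the simultaneous action of $\varphi_L$ when $x$ is clipped but $y$ is not. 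Once that monotonicity/non-expansiveness of $\varphi_L$ observation is in place, the rest is a routine deterministic estimate combining the escape-time bound and Lemma \ref{lemmaBasicGronwall}.
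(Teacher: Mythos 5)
Your proposal correctly identifies the two structural ingredients -- a deterministic escape-time bound of order $O(\bar\epsilon/(c^L_-\eta))$ and a Grönwall comparison over that window -- and it also correctly locates the main obstacle: Lemma~\ref{lemmaBasicGronwall} as stated does not tolerate the reflection operator $\varphi_L$. However, the fix you propose for that obstacle is wrong, and it is exactly the point where the paper's argument and yours part ways.

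You claim that, on $A(n,\eta,\epsilon,\delta)$, ``the unclipped update from any point in $[-L,-L+\bar\epsilon]$ already moves rightward, so $\varphi_L$ is actually never triggered after the first step.'' This is false. The event $A(n,\eta,\epsilon,\delta)$ only controls the \emph{cumulative} sum $\eta|Z_1+\cdots+Z_k|\le\epsilon$; it says nothing about a single step. For $k<T^\eta_1(\delta)$ an individual noise only satisfies $\eta|Z_k|\le\delta$, and $\delta$ is a fixed positive constant while $\eta\to 0$. So for small $\eta$ a single step can move the iterate leftward by as much as $\delta - 1.1 c^L_-\eta$, which is positive and much larger than $\eta$; if $X^\eta_k(x)$ is within $O(\delta)$ of $-L$ the reflection can therefore be triggered arbitrarily many times before the iterate escapes. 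Your fall-back claim that reflection at $-L$ ``only brings $x$ closer to $y$'s trajectory'' is also unjustified: the deterministic iterate $\textbf{y}^\eta_k(y)$ drifts rightward at rate $\ge 0.9 c^L_-\eta$ per step, so if $X^\eta_k(x)$ is pinned near $-L$ while $\textbf{y}^\eta_k(y)$ advances, the gap $|\textbf{y}^\eta_k(y)-X^\eta_k(x)|$ \emph{grows}, and no nonexpansiveness property of $\varphi_L$ can rescue a one-shot Grönwall estimate.

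The paper's proof handles this by an explicit two-phase decomposition that you skip. Phase one: until the stopping time $\widetilde t_0(x,\epsilon)=\min\{n: X^\eta_n(x)\ge -L+2\epsilon\}$, do \emph{not} use Grönwall. Instead sandwich both $X^\eta_k(x)$ (from below, using the drift lower bound $0.9 c^L_-$ and the cumulative-noise bound $\epsilon$) and $\textbf{y}^\eta_k(y)$ (from both sides, using $0.9 c^L_-$ and $1.1 c^L_-$) to obtain the linear-in-$k$ bound $|X^\eta_k(x)-\textbf{y}^\eta_k(y)|\le |x-y| + 0.2 c^L_-\eta k + 2.1\epsilon$. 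Since $\widetilde t_0 \le 3\epsilon/(0.9 c^L_-\eta)$, this phase contributes only an $O(\epsilon)$ additive error, giving $|X^\eta_{\widetilde t_0}-\textbf{y}^\eta_{\widetilde t_0}(y)|\le |x-y|+7\epsilon$. Phase two: once $X^\eta_n(x)\ge -L+2\epsilon$, the cumulative-noise bound from $A$ guarantees the iterate never returns to $-L$, so $\varphi_L$ is inactive and Lemma~\ref{lemmaBasicGronwall} now legitimately applies over the remaining window (of length at most $2\bar\epsilon/(0.9 c^L_-\eta)$ steps), giving the factor $\rho_0=\exp(2M\bar\epsilon/(0.9 c^L_-))$. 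Without the phase-one direct estimate your argument has a genuine hole at the boundary.
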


\begin{proof}
For any $k < T^\eta_1(\delta)$, we know that $Z_k = Z^{\leq \delta}_k$ (thus $\eta|Z_k| < \delta$). Also, recall that $|f^\prime(x)| \leq M$ for any $x \in \Omega_i$. Therefore, as long as $\eta$ and $\delta$ are small enough, we will have that
\begin{align}
    |\eta(-f^\prime(X^\eta_n(x)) + Z^{\leq \delta}_k)| \leq b \label{property small eta delta no gradient clipping}
\end{align}
so the gradient clipping operator in \cref{def SGD iterates with gradient clipping} has no effect when $k < T^\eta_1(\delta)$, and in fact the only possible time for the gradient clipping trick to work is at $T^\eta_j(\delta)$. Therefore, we can safely rewrite the SGD update as
\begin{align*}
    X^\eta_k(x) = X^\eta_{k-1}(x) - \eta f^{\prime}(X^\eta_{k-1}(x)) + \eta Z_k + R_k\ \ \ \forall k < T^\eta_1(\delta)
\end{align*}
where each $R_k \geq 0$ and it represents the push caused by reflection at $-L$.

First, choose $\epsilon$ small enough so that $9\epsilon < \Bar{\epsilon}$. Next, based on \cref{property at boundary neighborhood L} we have the following lower bound:
\begin{align*}
     X^\eta_k(x) \geq x + 0.9c^L_{-}\eta k  - \epsilon\ \ \ \forall k < T^\eta_1(\delta).
\end{align*}
Let $\widetilde{t}_0(x,\epsilon) \delequal{} \min\{ n \geq 0: X^\eta_n(x) \geq -L + 2\epsilon \}$. Due to the inequality above, we know that
\begin{align}
    \widetilde{t}_0(x,\epsilon) \leq \frac{ 3\epsilon }{ 0.9c^L_{-}\eta }. \label{proof escape boundary bound 1}
\end{align}
One the other hand, given the current choice of $\epsilon$, if we choose $\eta$ and $\delta$ small enough, then using the same argument leading to \cref{property small eta delta no gradient clipping}, we will have
\begin{align*}
    X^\eta_{\widetilde{t}_0(x,\epsilon)}(x) \leq -L + 2.1 \epsilon \leq x + 1.1c^L_{-}\eta k + 2.1 \epsilon
\end{align*}
if $\widetilde{t}_0(x,\epsilon) \geq 1$ (namely $x < -L + 2\epsilon$). 

Let us inspect the two scenarios separately. First, assume $\widetilde{t}_0(x,\epsilon) \geq 1$. For the deterministic gradient descent process $\textbf{y}^\eta_n(y)$, we have the following bounds:
\begin{align*}
    y + 0.9c^L_{-}\eta k \leq \textbf{y}^\eta_k(y) \leq  y + 1.1 c^L_{-}\eta k \ \ \forall k \leq \widetilde{t}_0(x,\epsilon)\wedge( T^\eta_1(\delta) - 1 ).
\end{align*}
This gives us
\begin{align*}
    |X^\eta_k(x) - \textbf{y}^\eta_k(y)| \leq |x-y| + 0.2 c^L_{-}\eta k + 2.1\epsilon\ \ \forall k \leq \widetilde{t}_0(x,\epsilon)\wedge( T^\eta_1(\delta) - 1 ).
\end{align*}
At time $ k = \widetilde{t}_0(x,\epsilon) $, due to previous bound on $\widetilde{t}_0(x,\epsilon)$, we know that $|X^\eta_{\widetilde{t}_0(x,\epsilon)}(x) - \textbf{y}^\eta_{\widetilde{t}_0(x,\epsilon)}(y)| \leq |x-y| + 7\epsilon$. If $n\wedge (T^\eta_1(\delta) - 1 ) \leq \widetilde{t}_0(x,\epsilon)$ then we have already shown the desired claim. Otherwise, starting from time $\widetilde{t}_0(x,\epsilon)$, due to the definition of event $A(n,\eta,\epsilon,\delta)$ in \cref{def event A small noise large deviation}, we know that the SGD iterates $X^\eta_n(x)$ will not touch the boundary $-L$ afterwards. Therefore, by directly applying Lemma \ref{lemmaBasicGronwall}, and notice that $|f^{\prime\prime}(x)| \leq M$ for any $x \in [-L,L]$ and, we have
\begin{align*}
    |X^\eta_k(x) - \textbf{y}^\eta_k(y)| \leq (|x-y| + 9\epsilon)\cdot\exp\Big( \frac{2M\bar{\epsilon}}{0.9c^L_{-}} \Big)\ \ \ \forall k \leq n\wedge (T^\eta_1(\delta) - 1 )\wedge \widetilde{T}_\text{escape}^\eta(x).
\end{align*}
Indeed, it suffices to use Lemma \ref{lemmaBasicGronwall} for the next $\ceil{2\bar{\epsilon}/(0.9\eta c^L_{-})}$ steps to show that $|X^\eta_k(x) - \textbf{y}^\eta_k(y)|$ should be smaller than $\epsilon$ for the next $\ceil{2\bar{\epsilon}/(0.9\eta c^L_{-})}$ steps, while $\textbf{y}^\eta_k(y)$ will reach some where in $(-L+2\bar{\epsilon},-L+3\bar{\epsilon})$ within $\ceil{2\bar{\epsilon}/(0.9\eta c^L_{-})}$ steps so we must have
\begin{align}
    n\wedge (T^\eta_1(\delta) - 1 )\wedge \widetilde{T}_\text{escape}^\eta(x) \wedge \widetilde{t}_0(x,\epsilon) - \widetilde{t}_0(x,\epsilon) \leq 2\bar{\epsilon}/(0.9\eta c^L_{-}) \label{proof escape boundary bound 2}
\end{align}

Lastly, in the case $\widetilde{t}_0(x,\epsilon) = 0$ (which means $x \geq -L + \epsilon$), we can use Lemma \ref{lemmaBasicGronwall} directly as we did above and establish the same bound. This concludes the proof.
\end{proof}

Obviously, a similar result can be shown if $x$ is in the rightmost attraction field $(s_{n_{\text{min} - 1}},L]$ and the approach is identical. We omit the details here. In the next Lemma, we consider the scenario where the starting point $x$ is far enough from the boundaries.

\begin{lemma} \label{lemma SGD GD gap}
Given any $\epsilon > 0$, the following holds for all sufficiently small $\eta > 0$: for any $x,y \in \Omega$ and positive integer $n$ such that $|x - L| >2\epsilon,|x+L|>2\epsilon,|x - s_-| > 2\epsilon, |x - s_+| > 2\epsilon$ and $|x-y|<\frac{  \epsilon}{2\exp(\eta M n)  }$, on event
\begin{align*}
A(n,\eta,\frac{  \epsilon}{2\exp(\eta M n)},\delta  ) \cap \Big\{ |\textbf{y}^\eta_j(y)|\in \Omega,|X^\eta_j(x)|\in \Omega \ \ \forall j = 1,2,\cdots,n \wedge (T^\eta_1(\delta) - 1 ) \Big\}    
\end{align*}
we have 
$$| \widetilde{X}^\eta_m(y) - X^\eta_m(x) | \leq \epsilon\ \ \forall m = 1,2,\cdots,n\wedge(T^\eta_1(\delta) - 1 ) .$$
\end{lemma}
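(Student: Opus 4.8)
The statement compares the clipped-and-reflected SGD iterates $X^\eta_m(x)$ with the pure (unreflected) gradient-descent iterates $\mathbf{y}^\eta_m(y)$, on the event where no large noise has yet arrived and both processes stay inside $\Omega$ away from the boundary. Since $x$ is at distance more than $2\epsilon$ from $\pm L$ and from $s_\pm$, and since on the relevant event $\mathbf{y}^\eta_j(y)$ and $X^\eta_j(x)$ stay in $\Omega$, the plan is to argue that the reflection operator $\varphi_L$ and the clipping operator $\varphi_b$ are both \emph{inactive} throughout the time window $\{1,\dots,n\wedge(T^\eta_1(\delta)-1)\}$, so that on this event $X^\eta_m(x)$ obeys the unconstrained recursion $X^\eta_k = X^\eta_{k-1}-\eta f'(X^\eta_{k-1})+\eta Z_k$. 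Once this is established, the difference $X^\eta_m(x)-\mathbf{y}^\eta_m(y)$ is controlled by a direct application of Lemma~\ref{lemmaBasicGronwall}.

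First I would make the clipping operator disappear: for $k<T^\eta_1(\delta)$ we have $\eta|Z_k|\le\delta$ by definition of the large-jump arrival time, and $|f'(\cdot)|\le M$ on $[-L,L]$ by \eqref{assumption detailed function f at critical point 3}, so $|\eta(-f'(X^\eta_{k-1})+Z_k)|\le \eta M+\delta < b$ once $\eta,\delta$ are small enough; hence $\varphi_b$ acts as the identity, exactly as in \eqref{property small eta delta no gradient clipping}. Next I would make the reflection $\varphi_L$ disappear: on the event in question $X^\eta_j(x)\in\Omega\subseteq(-L,L)$ for every $j\le n\wedge(T^\eta_1(\delta)-1)$, and since each increment has size at most $\eta M+\delta$, one further step cannot push the iterate past $\pm L$ for $\eta,\delta$ small; so $\varphi_L$ is also the identity and $X^\eta_k(x)$ genuinely solves the unconstrained recursion on this window. (I would be slightly careful that the statement writes $\widetilde X^\eta_m(y)$ on the left-hand side; reading this in context it should be $X^\eta_m(y)$, the unconstrained SGD started at $y$, or equivalently the claim is about $\mathbf y$ versus $X$ — I would phrase the argument so that it compares the two recursions that are actually in play, namely the noisy SGD from $x$ and the noiseless GD from $y$, both of which now run unconstrained.)

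With both operators inactive, apply Lemma~\ref{lemmaBasicGronwall} with $g=f$ on the open interval $I_g$ containing $\Omega$, with $\{z_k\}=\{Z_k\}_{k\le n\wedge(T^\eta_1(\delta)-1)}$, and with $\widetilde c=\epsilon/(2\exp(\eta M n))$: the hypothesis $\eta|Z_1+\cdots+Z_k|+|x-y|\le\widetilde c$ is exactly guaranteed by the event $A(n,\eta,\epsilon/(2\exp(\eta M n)),\delta)$ together with $|x-y|<\epsilon/(2\exp(\eta M n))$ — here one uses that $\epsilon/(2\exp(\eta Mn))\le\epsilon$ so the iterates indeed stay in $I_g$, which is also part of the assumed event. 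The lemma then yields
\[
|X^\eta_m(x)-\mathbf{y}^\eta_m(y)|\le \widetilde c\cdot\exp(\eta M m)\le \frac{\epsilon}{2\exp(\eta M n)}\cdot\exp(\eta M n)=\frac{\epsilon}{2}\le\epsilon
\]
for all $m\le n\wedge(T^\eta_1(\delta)-1)$, which is the desired bound.

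\textbf{Main obstacle.} The substantive point is not the Gronwall estimate but the bookkeeping that the two truncation operators are genuinely inert on the stated event: one must verify that, for $\eta$ and $\delta$ small, a single SGD step has length at most $\eta M+\delta<b$ (killing $\varphi_b$) and that this bound, combined with $X^\eta_j(x)\in\Omega$, prevents the iterate from ever reaching $\pm L$ within the window (killing $\varphi_L$). Getting the quantifier order right — ``for all sufficiently small $\eta>0$'', with the smallness threshold allowed to depend on $\epsilon$ but the resulting constant in the bound not blowing up — is the one place where care is needed; everything else is a routine consequence of Lemma~\ref{lemmaBasicGronwall}.
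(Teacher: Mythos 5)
Your approach is the same as the paper's — apply the Gronwall estimate of Lemma~\ref{lemmaBasicGronwall}, after first neutralizing $\varphi_b$ and $\varphi_L$ — and your argument for $\varphi_b$ is exactly the paper's (single-step increment bounded by $\eta M + \delta < b$). But your argument for $\varphi_L$ has a genuine gap, and the paper's inductive structure is there precisely to close it.

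You write ``on the event in question $X^\eta_j(x)\in\Omega\subseteq(-L,L)$'', and from that conclude the reflection is inert. That inclusion is false in general: with the paper's convention $s_0=-\infty$, $s_{n_{\min}}=+\infty$, the leftmost and rightmost attraction fields are $\Omega_1=(-\infty,s_1)$ and $\Omega_{n_{\min}}=(s_{n_{\min}-1},+\infty)$, which after restriction to $[-L,L]$ become half-open intervals \emph{containing} $-L$ or $L$. (The paper flags this explicitly in Appendix~B: ``If $m_i$ happens to be the local minimum at the left or right boundary, then the attraction field is $[-L,s_+)$ or $(s_-,L]$ where the SGD iterates will be reflected at $\pm L$.'') So $X^\eta_j\in\Omega$ does not rule out $X^\eta_j=-L$, at which point a single small noise pointing outward makes $\varphi_L$ active and breaks the recursion that Lemma~\ref{lemmaBasicGronwall} needs. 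The paper avoids this by applying Lemma~\ref{lemmaBasicGronwall} \emph{inductively}: with $m_i=0$ (WLOG) the GD magnitude $|\mathbf{y}^\eta_n|$ is monotone decreasing, so from $|x\mp L|>2\epsilon$ and $|x-y|$ small the GD path stays at least $\sim\!1.5\epsilon$ away from $\pm L$; the Gronwall bound established through step $m-1$ then forces $X^\eta_{m-1}$ to be within $\epsilon$ of $\mathbf{y}^\eta_{m-1}$ and hence at least $\sim\!0.5\epsilon$ from $\pm L$, which is what licenses one more unconstrained step. You need this step-by-step bootstrapping; the assumption $X^\eta_j\in\Omega$ is not a substitute for it.

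A trivial slip besides: each of $\eta|Z_1+\cdots+Z_k|$ and $|x-y|$ is separately bounded by $\epsilon/(2\exp(\eta Mn))$, so the $\widetilde c$ you feed to Lemma~\ref{lemmaBasicGronwall} should be their sum, $\epsilon/\exp(\eta Mn)$, not $\epsilon/(2\exp(\eta Mn))$; the final bound is then $\epsilon$, not $\epsilon/2$, which still meets the target.
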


\begin{proof} For sufficiently small $\eta$, we will have that the (deterministic) gradient descent iterates $|\textbf{y}^\eta_n|$ is monotonically decreasing in $n$, which ensures that $\textbf{y}^\eta_n$ always stays in the range that are at least $\epsilon-$away from $\pm L$ or $s_-, s_+$. We now show that the claim holds for any such $\eta$.

On event $\Big\{ |\textbf{y}^\eta_j(y)|\in \Omega,|X^\eta_j(x)|\in \Omega \ \ \forall j = 1,2,\cdots,n \wedge (T^\eta_1(\delta) - 1 ) \Big\} $, we are able to apply Lemma \ref{lemmaBasicGronwall} inductively for any $m \in [n]$ and obtain that
\begin{align*}
    | \textbf{y}^\eta_j(y) - X^\eta_j(x) | \leq (|x - y| + \frac{\epsilon}{2\exp(\eta M n)} )\exp(\eta M j) < \epsilon \ \ \ \forall j = 1,2,\cdots,m
\end{align*}
and conclude the proof. The reason to apply the Lemma inductively for $m = 1,2,\cdots,n$, instead of directly at step $n$, is to ensure that SGD iterates $X^\eta_n$ would not hit the boundary $\pm L$ (so the reflection operator would not come into play on the time interval we are currently interested in), thus ensuring that Lemma \ref{lemmaBasicGronwall} is applicable.
\end{proof}

Similar to the extension from Lemma \ref{lemma Ode Gd Gap} to Corollary \ref{corollary ODE GD gap}, we can extend Lemma \ref{lemma SGD GD gap} to show that, if we consider the a gradient descent process that is only perturbed by large noises, then it should stay pretty close to the SGD iterates $X^\eta_n$. To be specific, let
\begin{align}
    Y^\eta_0(x) & = x \label{def x tilde large 1} \\
     Y^\eta_n(x) & =\varphi_L\Big(Y^\eta_{n - 1}(x) - \varphi_b\big( - \eta f^{\prime}\big( Y^\eta_{n - 1}(x) \big) + \sum_{j \geq 1}\mathbbm{1}\{n = T^\eta_j(\delta)\}\eta Z_{n}\big)\Big). \label{def x tilde large 2} 
\end{align}
be a gradient descent process (with gradient clipping at threshold $b$) that is only shocked by large noises in $(Z_n)_{n \geq 1}$. The next corollary can be shown by an approach that is identical to Corollary \ref{corollary ODE GD gap} (namely, inductively repeating Lemma \ref{lemma SGD GD gap} at each jump time) so we omit the details here.
\begin{corollary} \label{corollary sgd gd gap}
Given any $\epsilon > 0$, the following holds for any sufficiently small $\eta > 0$: For any $|x| < 2\epsilon$, on event $ A_0(\epsilon,\eta,\delta) \cap B_0(\epsilon,\eta,\delta)$, we have
\begin{align*}
    |Y^\eta_n(x) - X^\eta_n(x)| < \widetilde{\rho}\epsilon\ \ \forall n =1,2,\cdots,T^\eta_{l^*}(\delta)
\end{align*}
where
\begin{align*}
    A_0(\epsilon,\eta,\delta) & \delequal{} \Big\{\forall i = 1,\cdots,l^{*},\ \max_{j = T^\eta_{i-1}(\delta) + 1, \cdots, T^\eta_{i}(\delta) - 1} \eta|Z_{ T^\eta_{i-1}(\delta) + 1} + \cdots + Z_j| \leq \frac{ \epsilon }{2\exp(2\bar{t}M) }  \Big\}; \\
    B_0(\epsilon,\eta,\delta) & \delequal{} \Big\{\forall j = 2,\cdots,l^{*}, T^\eta_j(\delta) - T^\eta_{j - 1}(\delta) \leq 2\bar{t}/\eta  \Big\}
\end{align*}
and $\widetilde{\rho} \in (0,\infty)$ is a constant that does not vary with $\eta,\delta,\epsilon$.

\end{corollary}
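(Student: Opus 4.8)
The plan is to run the proof exactly as for Corollary~\ref{corollary ODE GD gap}, substituting the stochastic ingredients: an induction over the $l^*$ inter-arrival windows $[T^\eta_{j-1}(\delta),T^\eta_j(\delta))$ (with $T^\eta_0(\delta)=0$), in which at each step I track an upper bound $c_j\epsilon$ on $\sup_{n\le T^\eta_j(\delta)}|Y^\eta_n(x)-X^\eta_n(x)|$. Two structural facts drive everything. First, on $B_0(\epsilon,\eta,\delta)$ each window has at most $2\bar{t}/\eta$ steps, so any Grönwall factor $\exp(\eta M\cdot\#\text{steps})$ is bounded by the \emph{constant} $\exp(2\bar{t}M)$, uniformly in $\eta$. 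Second, on $A_0(\epsilon,\eta,\delta)$ the cumulative small-noise displacement $\eta|Z_{T^\eta_{j-1}(\delta)+1}+\cdots+Z_k|$ inside a window never exceeds $\epsilon/(2\exp(2\bar{t}M))$. Finally, for $\eta,\delta$ small one has $\eta M+\delta<b$, so the clipping operator $\varphi_b$ in \eqref{def SGD iterates with gradient clipping} and in \eqref{def x tilde large 1}--\eqref{def x tilde large 2} is inactive except at the large-jump times; between jumps $X^\eta_n$ is thus reflected gradient descent perturbed by the small noises, and $Y^\eta_n$ follows the plain reflected gradient-descent recursion for $f$ driven by $-\eta f'$.

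For the one-window estimate I would split according to the location of $X^\eta_{T^\eta_{j-1}(\delta)}$. If it (hence, up to the running gap, also $Y^\eta_{T^\eta_{j-1}(\delta)}$) sits at distance at least $\bar{\epsilon}$ from $\pm L$, then reflection does not engage over the window and Lemma~\ref{lemmaBasicGronwall} (with $g=f$, $C=M$ on $[-L,L]$) applies verbatim, bounding the gap at the end of the window by $\big(c_{j-1}\epsilon+\epsilon/(2\exp(2\bar{t}M))\big)\exp(2\bar{t}M)$; it is precisely here that looking only at $O(1/\eta)$ steps matters, so that even when the trajectory sweeps past the \emph{repelling} endpoints $s_-$ or $s_+$ the crude exponential bound is still a harmless constant (no contraction is used). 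If instead $X^\eta_{T^\eta_{j-1}(\delta)}$ lies within $\bar{\epsilon}$ of $-L$ (symmetrically of $+L$), then since there is no critical point between $\pm L$ and the neighbouring local minimum the drift points into the interior, and Lemma~\ref{lemma no large noise escape the boundary} shows that within $O(1/\eta)$ steps the iterate leaves the $\bar{\epsilon}$-neighbourhood of the boundary while staying within $\rho_0(c_{j-1}\epsilon+9\epsilon)$ of $Y^\eta_n$; from then on we are back in the first case for the rest of the window. Either way the gap at the window's end is at most $K_0(c_{j-1}+1)\epsilon$ for a fixed $K_0$ depending only on $\bar{t},M,\rho_0$.

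It remains to cross the jump time $T^\eta_j(\delta)$, where both processes apply $u\mapsto\varphi_L\big(u-\varphi_b(\eta f'(u)-\eta Z_{T^\eta_j(\delta)})\big)$; since $\varphi_L,\varphi_b$ are $1$-Lipschitz and $f'$ is $M$-Lipschitz on $[-L,L]$, this map is $(1+\eta M)$-Lipschitz, so the gap is multiplied by at most $1+\eta M\le 2$ at the jump (the common jump size cancels inside $\varphi_b$ up to the drift term). Combining with the previous paragraph gives a recursion $c_j\le K(c_{j-1}+1)$ with a fixed constant $K$ and $c_0=0$; iterating $l^*$ times yields $c_{l^*}\le\widetilde{\rho}$, e.g.\ with $\widetilde{\rho}\delequal{}(3\exp(2\bar{t}M)+3)^{l^*}$, the same form as the $\bar{\rho}$ of Corollary~\ref{corollary ODE GD gap} and depending only on $f$ and $b$ (through $l^*$ and $\bar{t}$). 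Since the one-window estimate controls the gap uniformly over the window, not just at its endpoints, the bound $|Y^\eta_n(x)-X^\eta_n(x)|<\widetilde{\rho}\epsilon$ holds for every $n\le T^\eta_{l^*}(\delta)$, which is the claim.

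The step I expect to be the main obstacle --- and the only genuinely new point relative to Corollary~\ref{corollary ODE GD gap} --- is the bookkeeping around reflection at $\pm L$: one must verify that the Grönwall comparison of Lemma~\ref{lemmaBasicGronwall} is never invoked across a step at which $\varphi_L$ truly activates, and that the hand-off to Lemma~\ref{lemma no large noise escape the boundary} near a boundary can be made with the running gap still small enough, i.e.\ $\rho_0(c_{j-1}+9)\epsilon<\bar{\epsilon}$, for that lemma's hypotheses to hold. Since all the $c_j$ are bounded a priori by $\widetilde{\rho}$, this is guaranteed once $\epsilon$ (and then $\eta,\delta$) is taken small; everything else is the same ``repeat the one-window estimate at each jump time'' scheme already carried out for Corollary~\ref{corollary ODE GD gap}.
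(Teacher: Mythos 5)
Your scheme --- Gronwall within each inter-arrival window, a Lipschitz factor at each jump time, and special handling near the reflecting boundary $\pm L$ --- is the one the paper intends (``inductively repeating Lemma~\ref{lemma SGD GD gap} at each jump time''), and for windows $j=2,\dots,l^*$ your estimates are sound. The genuine gap is the first window. The event $B_0$ only constrains $T^\eta_j(\delta)-T^\eta_{j-1}(\delta)$ for $j\ge 2$; the length $T^\eta_1(\delta)$ of the first window $[0,T^\eta_1(\delta))$ is \emph{not} controlled on $A_0\cap B_0$ and can be arbitrarily long. Your assertion that ``each window has at most $2\bar{t}/\eta$ steps'' therefore fails there, the Gronwall factor $\exp(\eta M\, T^\eta_1(\delta))$ is not a harmless constant, and your comparison does not close. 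This is a genuine structural difference from Corollary~\ref{corollary ODE GD gap}, whose hypothesis $t_1=0$ places the first shock at time zero so that every inter-shock interval is bounded; the paper's one-sentence ``identical approach'' sketch glosses over exactly this distinction, so the slip is understandable, but as written your proposal does not establish the claim over the first window.

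The missing ingredient is precisely the attracting dynamics near $m_i$ that you explicitly say you are avoiding (``no contraction is used''). During $[0,T^\eta_1(\delta))$ both $Y^\eta_n(x)$ (pure gradient descent from $|x|<2\epsilon$) and $X^\eta_n(x)$ (gradient descent plus small noise with cumulative displacement bounded by $A_0$) remain in an $O(\epsilon)$ neighbourhood of $m_i=0$ uniformly over the window, however long it is. Establishing this uses $f''(m_i)>0$: for instance, $W_n = X^\eta_n-\eta(Z_1+\cdots+Z_n)$ satisfies a contracting recursion at rate $1-\eta c_0$ whose fixed point is $O(\epsilon)$ --- the same mechanism behind Lemma~\ref{lemma stuck at local minimum before large jump}. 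Once both processes are confined to an $O(\epsilon)$ ball around $m_i$ their gap is $O(\epsilon)$ by the triangle inequality, with no Gronwall comparison over the first window needed at all, and from the first large jump onward your Gronwall-plus-Lipschitz scheme takes over unchanged.
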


The next two results shows that the type of events $A(n,\eta,\epsilon,\delta)$ defined in \cref{def event A small noise large deviation} is indeed very likely to occur, especially for small $\epsilon$. For clarity of the presentation, we introduce the following definitions that are slightly more general than the \textit{small} and \textit{large} jumps defined in \cref{defSmallJump_GradientClipping}\cref{defLargeJump_GradientClipping} (for any $c > 0$)
\begin{align*}
    Z^{\leq c}_n & \delequal{} Z_n\mathbbm{1}\{ |Z_n| \leq c \}, \\
    Z^{> c}_n & \delequal{} Z_n\mathbbm{1}\{ |Z_n| > c \}.
\end{align*}

\begin{lemma} \label{lemma bernstein bound small deviation for small jumps}
Define functions $u(\eta) = \delta/\eta^{1 - \Delta},\ v(\eta) = \epsilon\eta^{\widetilde{\Delta}}$ with $\epsilon,\delta > 0$. If real numbers $\Delta,\widetilde{\Delta},\beta,\epsilon,\delta$ and positive integers $j,N$ are such that the following conditions hold:
\begin{align}
    \Delta & \in \big[0,(1-\frac{1}{\alpha})\wedge \frac{1}{2} \big), \label{proof lemma bernstein parameter 1} \\
    \beta & \in \big( 1,(2-2\Delta) \wedge \alpha(1-\Delta) \big), \label{proof lemma bernstein parameter 2} \\
    \widetilde{\Delta} & \in [0,\frac{\Delta}{2}],\
    \widetilde{\Delta} < \alpha(1-\Delta) - \beta , \label{proof lemma bernstein parameter 3} \\
    N & < \big(\alpha(1-\Delta) - \beta\big)j,\label{proof lemma bernstein parameter 4}  \\
    v(\eta) - j\eta u(\eta) & \geq v(\eta)/2\ \ \ \text{for all $\eta > 0$ sufficiently small}, \label{proof lemma bernstein parameter 5}
\end{align}
then
\begin{align*}
    \mathbb{P}\Big( \max_{k = 1,2,\cdots,\ceil{1/\eta^\beta}} \eta \big| Z^{\leq u(\eta)}_1 + \cdots + Z^{\leq u(\eta)}_k \big| > 3v(\eta)  \Big) = o(\eta^N)
\end{align*}
as $\eta \downarrow 0$.
\end{lemma}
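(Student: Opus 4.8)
The plan is to control the maximal partial sum $\max_{k \le \lceil 1/\eta^\beta\rceil}\eta\,|Z_1^{\le u(\eta)}+\cdots+Z_k^{\le u(\eta)}|$ by a union of three standard ingredients: (i) a truncation/centering step to handle the fact that $Z_n^{\le u(\eta)}$ is not centered (its mean is a small perturbation of $\E Z_1 = 0$ coming from the discarded tail), (ii) Doob's maximal inequality to pass from the running maximum to the endpoint, and (iii) a Bernstein/Bennett exponential inequality together with a Markov bound on the $N$-th moment (or a direct exponential bound optimized in the Bernstein parameter) to get the $o(\eta^N)$ rate. First I would write $Z_n^{\le u(\eta)} = \bar Z_n + \E Z_1^{\le u(\eta)}$ where $\bar Z_n$ is centered; since $\E Z_1 = 0$, $|\E Z_1^{\le u(\eta)}| = |\E Z_1^{> u(\eta)}| \le \E |Z_1|\mathbbm{1}\{|Z_1|>u(\eta)\}$, which by Karamata's theorem and $H \in \mathcal{RV}_{-\alpha}$ with $\alpha>1$ is of order $u(\eta)\,H(u(\eta)) = o(u(\eta)^{1-(\alpha-1)+o(1)})$, so over $k \le \lceil 1/\eta^\beta\rceil$ steps the accumulated bias is $\eta\cdot\lceil 1/\eta^\beta\rceil\cdot|\E Z_1^{\le u(\eta)}|$; condition~\eqref{proof lemma bernstein parameter 5} (or rather its spirit, that $v(\eta)$ dominates $j\eta u(\eta)$, combined with $\beta < \alpha(1-\Delta)$) is exactly what guarantees this bias is negligible compared to $v(\eta)$, so it suffices to bound the centered maximal sum by, say, $2v(\eta)$.

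Next I would apply Doob's (or Kolmogorov's / Etemadi's) maximal inequality to the martingale of centered partial sums, reducing to a tail bound for a single sum $S_n := \bar Z_1 + \cdots + \bar Z_n$ with $n = \lceil 1/\eta^\beta\rceil$. For this I would use the Bernstein inequality for bounded independent centered summands: $|\bar Z_n| \le 2u(\eta) = 2\delta/\eta^{1-\Delta}$ a.s., and the variance satisfies $\mathrm{Var}(\bar Z_1) \le \E (Z_1^{\le u(\eta)})^2 = \int_0^{u(\eta)} 2x\,H(x)\,dx$, which by Karamata is $\asymp u(\eta)^2 H(u(\eta))$ if $\alpha < 2$ and is $O(1)$ (finite variance) if $\alpha > 2$; in either case it is $O(u(\eta)^{(2-\alpha)_+ })$ up to slowly varying factors, certainly $O(\eta^{-(2-2\Delta)+\text{something positive}})$ given $\Delta < 1/2 \wedge (1-1/\alpha)$. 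Then Bernstein gives
\begin{align*}
\P\big( |S_n| > v(\eta)/\eta \big) \le 2\exp\!\Big( -\frac{ (v(\eta)/\eta)^2/2 }{ n\,\mathrm{Var}(\bar Z_1) + \tfrac13 u(\eta)\, v(\eta)/\eta } \Big).
\end{align*}
I would check that with $v(\eta) = \epsilon\eta^{\tilde\Delta}$, $n \asymp \eta^{-\beta}$, $u(\eta) = \delta\eta^{-(1-\Delta)}$, the exponent is $\ge C\,\eta^{-\gamma}$ for some $\gamma > 0$ — this is where the arithmetic among $\Delta, \tilde\Delta, \beta, \alpha$ is used: the conditions~\eqref{proof lemma bernstein parameter 1}--\eqref{proof lemma bernstein parameter 3} force $\beta < 2-2\Delta$ and $\beta < \alpha(1-\Delta)$ so that the "variance term" $n\,\mathrm{Var}$ and the "range term" $u(\eta)v(\eta)/\eta$ are both $o((v(\eta)/\eta)^2)$, making the exponent blow up polynomially in $1/\eta$. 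Since a stretched-exponential $\exp(-C\eta^{-\gamma})$ is $o(\eta^N)$ for every fixed $N$, condition~\eqref{proof lemma bernstein parameter 4} would at first glance seem superfluous — but I expect it is needed in the alternative regime where the summands are not light enough for a pure exponential bound; in that case one instead uses a moment bound $\E |S_n|^{2p} \lesssim (n\,\mathrm{Var})^p + n\,\E|\bar Z_1|^{2p}$ (Rosenthal's inequality), Markov's inequality at an appropriate integer order $p \sim N$, and condition~\eqref{proof lemma bernstein parameter 4} $N < (\alpha(1-\Delta)-\beta)j$ is precisely the bookkeeping ensuring the resulting power of $\eta$ beats $\eta^N$. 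I would present the argument so that it covers both $\alpha \in (1,2)$ and $\alpha \ge 2$ uniformly by always working with the truncated second moment and Karamata estimates.

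The main obstacle I anticipate is not any single inequality but the careful verification that the interplay of the five parameter conditions~\eqref{proof lemma bernstein parameter 1}--\eqref{proof lemma bernstein parameter 5} really does make every error term subdominant simultaneously — in particular, tracking the slowly varying factors $H(u(\eta))$ through Karamata's theorem (using Potter's bounds to absorb them into arbitrarily small powers $\eta^{o(1)}$) without losing the strict inequalities, and making sure the "$\times 3$" slack in $3v(\eta)$ absorbs the centering bias, the Doob constant, and rounding from the ceiling functions. A secondary but routine point is that the maximal inequality requires a martingale structure, which holds because the $\bar Z_n$ are i.i.d.\ centered and bounded, so the partial sums form an $L^2$ martingale and Doob applies directly. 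Once the parameter arithmetic is organized — ideally by first showing $n\,\mathrm{Var}(\bar Z_1) = o(\eta^{-2(1-\Delta)+\zeta})$ and $u(\eta)v(\eta)/\eta = o(\eta^{-2(1-\Delta)+\zeta})$ for a suitable $\zeta>0$, both of which follow from $\beta < (2-2\Delta)\wedge\alpha(1-\Delta)$ and $\tilde\Delta \le \Delta/2$ — the $o(\eta^N)$ conclusion drops out of either the Bernstein exponent or the Rosenthal moment bound with room to spare.
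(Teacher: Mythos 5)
Your plan breaks down at the core step, and the breakdown is visible in precisely the case the lemma is actually used for elsewhere in the paper (namely $\Delta = \widetilde\Delta = 0$, which yields Lemma~\ref{lemma prob event A}). Applying Bernstein directly to the sum truncated at $u(\eta)=\delta/\eta^{1-\Delta}$, the ``range'' term in the denominator is $u(\eta)\,v(\eta)/\eta \asymp \eta^{-2+\Delta+\widetilde\Delta}$ while the numerator $(v(\eta)/\eta)^2 \asymp \eta^{-2+2\widetilde\Delta}$, so if the range term dominates, the Bernstein exponent is only of order $\eta^{\widetilde\Delta - \Delta}$. When $\Delta = \widetilde\Delta = 0$ this is $\Theta(1)$: Bernstein returns a fixed constant, not a stretched exponential, and the conclusion $o(\eta^N)$ fails. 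Your fallback via Rosenthal fares no better: at $\Delta = \widetilde\Delta = 0$ a Rosenthal/Markov bound at order $2p$ gives a heavy-tail contribution $n\,\E|\bar Z_1|^{2p}/(v(\eta)/\eta)^{2p} \asymp \eta^{\alpha-\beta}$ \emph{independently of $p$}, so no choice of moment order recovers the factor $j$ in condition~\eqref{proof lemma bernstein parameter 4}. You correctly noticed that condition~\eqref{proof lemma bernstein parameter 4} ``seems superfluous'' for a pure Bernstein argument---that observation was the signal that the approach is wrong, not that the hypothesis is slack.

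The missing idea is a two-level truncation. The paper picks a small $\gamma>0$ and counts $I = \#\{i \le n(\eta): |Z_i^{\le u(\eta)}| > u(\eta)^{1-\gamma}\}$. On $\{I < j\}$, the at-most-$j$ moderately large summands contribute at most $j\eta u(\eta)\le v(\eta)/2$ by condition~\eqref{proof lemma bernstein parameter 5}, and the remaining summands are truncated at the \emph{lower} level $u(\eta)^{1-\gamma}$; with this smaller truncation the Bernstein range term becomes $u(\eta)^{1-\gamma}v(\eta)/\eta \asymp \eta^{-(1-\Delta)(1-\gamma) - 1 + \widetilde\Delta}$, and the exponent grows like $\eta^{(1-\Delta)(1-\gamma) - 1 + \widetilde\Delta} \to \infty$ (e.g.\ like $\eta^{-\gamma}$ at $\Delta=\widetilde\Delta=0$), which \emph{does} yield a stretched exponential $=o(\eta^N)$. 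On $\{I\ge j\}$, a union bound gives $n(\eta)^j\,\P(|Z_1|>u(\eta)^{1-\gamma})^j$, regularly varying of index $\big(\alpha(1-\gamma)(1-\Delta)-\beta\big)j$; condition~\eqref{proof lemma bernstein parameter 4} together with a sufficiently small $\gamma$ makes this $o(\eta^N)$, and this is exactly where the factor $j$ is earned. Your centering/Karamata bookkeeping and the final maximal-inequality step (the paper uses Etemadi where you suggest Doob, an inconsequential difference) are fine; what's absent is the moderate-jump splitting, and without it the claimed $o(\eta^N)$ is unreachable in the regime the paper needs.
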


\begin{proof}

From the stated range of the parameters, we know that
\begin{align*}
   \alpha(1-\Delta)& > \beta, \\
    \big(\alpha(1 -\Delta) - \beta \big)j & > N,
\end{align*}
so we are able to find $\gamma \in (0,1)$ small enough such that
\begin{align}
    \alpha(1-\Delta)(1 - 2\gamma) & > \beta, \label{proof lemma bernstein choose gamma 1} \\
    \big(\alpha(1 -\Delta)(1-2\gamma) - \beta \big)j & > N.\label{proof lemma bernstein choose gamma 2}
\end{align}

Fix such $\gamma \in (0,1)$ for the rest of the proof, and let $n(\eta) \triangleq \ceil{ (1/\eta)^\beta }$, $I\triangleq \#\Big\{i \in [n(\eta)]: |Z_i^{\leq u(\eta)}| > u(\eta)^{1-\gamma}\Big\}$.  Then 
\begin{align*}
&\P\Big(\big| Z^{\leq u(\eta) }_1 + \cdots + Z^{\leq u(\eta)}_{n(\eta)} \big| > v(\eta)\Big)
\\&= \sum_{i=0}^{j-1}\underbrace{\P\Big(\big| Z^{\leq u(\eta) }_1 + \cdots + Z^{\leq u(\eta)}_{n(\eta)} \big| > v(\eta),\ I = i\Big)}_{\triangleq \text{(I)}} + \underbrace{\P\Big(\big| Z^{\leq u(\eta)}_1 + \cdots + Z^{\leq u(\eta)}_{n(\eta)} \big| > v(\eta),\ I \geq j\Big)}_{\triangleq \text{(II)}}
\end{align*}

Note that since $|Z_i^{\leq u(\eta)}|<u(\eta)$, 
\begin{align}
\text{(I)}
&\leq 
{n(\eta) \choose i} \cdot\P\Big(\big| Z^{\leq u(\eta)}_1 + \cdots + Z^{\leq u(\eta)}_{n(\eta)-i} \big| > \frac{v(\eta) - i\eta u(\eta)}{\eta}, |Z_i^{\leq u(\eta)}| \leq u(\eta)^{1-\gamma} \ \forall i \in [n(\eta)-i]\Big)
\nonumber
\\&\leq 
n(\eta)^i\cdot \P\Big(\big| Z^{\leq u(\eta)^{1 - \gamma}}_1 + \cdots + Z^{\leq u(\eta)^{1 - \gamma}}_{n(\eta)-i} \big| > \frac{v(\eta) - i\eta u(\eta)}{\eta}\Big) \nonumber \\
& \leq n(\eta)^i\cdot \P\Big(\big| Z^{\leq u(\eta)^{1 - \gamma}}_1 + \cdots + Z^{\leq u(\eta)^{1 - \gamma}}_{n(\eta)-i} \big| > \frac{v(\eta) }{2\eta}\Big) \label{proof lemma bernstein prior to bernstein bound}
\end{align}
where the last inequality follows from \cref{proof lemma bernstein parameter 5}. First, since $\mathbb{E}Z_1 = 0$, we have
\begin{align*}
    \big|\mathbb{E}Z^{\leq u(\eta)^{1 - \gamma}}_1\big| & = \big|\mathbb{E}Z^{> u(\eta)^{1 - \gamma}}_1\big| \\
    & = \int_{ u(\eta)^{1 - \gamma} }^\infty \mathbb{P}(|Z_1|>x)dx \in \mathcal{RV}_{ (\alpha-1)(1-\gamma)(1 - \Delta) }(\eta).
\end{align*}
Therefore, for all $\eta > 0$ that are sufficiently small,
\begin{align*}
    & |\mathbb{E}Z^{\leq u(\eta)^{1 - \gamma}}_{1} + \cdots + \mathbb{E}Z^{\leq u(\eta)^{1 - \gamma}}_{n(\eta)-i}  | \\
    \leq & n(\eta)\cdot \eta^{ (\alpha-1)(1-\Delta)(1-2\gamma) } \leq 2 \eta^{ (\alpha-1)(1-\Delta)(1-2\gamma) - \beta } \\
    \leq &  (1/\eta)^{ (1-\Delta)(1-2\gamma)  }\ \ \ \ \text{due to \cref{proof lemma bernstein choose gamma 1}} \\
    \leq & \frac{v(\eta)}{4\eta} \ \ \ \ \text{due to $\widetilde{\Delta}/2 \leq \Delta$ in \cref{proof lemma bernstein parameter 3} and $1 - \gamma < 1$}.
\end{align*}
If we let $Y_n =  Z^{\leq u(\eta)^{1 - \gamma}}_{n} - \mathbb{E}Z^{\leq u(\eta)^{1 - \gamma}}_{n}$ and plug the bound above back into \cref{proof lemma bernstein prior to bernstein bound}, then (for all $\eta > 0$ that are sufficiently small)
\begin{align}
    \text{(I)}
&\leq n(\eta)^i\cdot\mathbb{P}( |Y_1 + \cdots + Y_{ n(\eta) - i }| > \frac{v(\eta)}{4\eta} ) \nonumber \\
& \leq n(\eta)^i\exp\left(-\frac{ \frac{\epsilon^2}{16}\cdot 1/\eta^{ 2 - 2\widetilde{\Delta}}  }{2\big(n(\eta)-i\big)\E |Y_1|^2 + \frac23 \delta^{1-\gamma}\cdot (1/\eta)^{ (1-\Delta)(1-\gamma) }\cdot \frac{\epsilon}{4}/\eta^{1 - \widetilde{\Delta}} }\right) \label{proof lemma bernstein bound}
\end{align}
where the last inequality is obtained from Bernstein's inequality.
Note that from Karamata's theorem, 
\begin{align*}
    \mathbb{E}|Y_1|^2 & = var( Z^{\leq u(\eta)^{1 - \gamma}}_{1}  ) \leq \mathbb{E}|Z^{\leq u(\eta)^{1 - \gamma}}_{1}|^2 \nonumber \\
    & \leq \int_0^{ u(\eta)^{1 - \gamma} } 2x\mathbb{P}(|Z_1| > x)dx  \in \RV_{-(1-\Delta)(1-\gamma)(2-\alpha)}(\eta).
 \end{align*}
Now note that
\begin{itemize}
    \item In case that $\alpha < 2$, for all $\eta>0$ that are sufficiently small, we have (using \cref{proof lemma bernstein parameter 3})
    \begin{align*}
        & 2\big(n(\eta)-i\big)\E |Y_1|^2 \leq (1/\eta)^{\beta +  (2-\alpha)(1-\Delta) } < (1/\eta)^{2(1-\Delta)} \\
        \Rightarrow & \frac{ 1/\eta^{ 2 - 2\widetilde{\Delta} } }{ 2\big(n(\eta)-i\big)\E |Y_1|^2 } \geq 1/\eta^\Delta;
    \end{align*}
    \item In case that $\alpha \geq 2$, for all $\eta>0$ that are sufficiently small,
    \begin{align*}
         & 2\big(n(\eta)-i\big)\E |Y_1|^2 < 1/\eta^{ \beta + \frac{\Delta}{2} }
    \end{align*}
    and we know that $\beta + \frac{\Delta}{2} < 2 - 2\widetilde{\Delta}$ due to $2 - \beta > 2\Delta$ and $2\widetilde{\Delta} \leq \Delta$ (see \cref{proof lemma bernstein parameter 1}-\cref{proof lemma bernstein parameter 3});
    \item Since $\gamma > 0$ and $2\widetilde{\Delta} \leq \Delta$, we know that
    \begin{align*}
        (1 - \Delta)(1-\gamma) + (1 - \widetilde{\Delta}) < 2 - 2\widetilde{\Delta}.
    \end{align*}
\end{itemize}
Therefore, it is easy to see that the R.H.S. of \cref{proof lemma bernstein bound} decays at a geometric rate as $\eta$ tends to zero, hence $o(\eta^N)$. On the other hand,
\begin{align*}
\text{(II)}&
\leq 
\P( I \geq j) 
\leq 
{n(\eta) \choose j}\cdot \P\Big(|Z_i^{ \leq u(\eta) }| > u(\eta)^{1-\gamma} \ \forall i=1,\ldots,j\Big) \\
&\leq 
n(\eta)^j\cdot \P\Big(|Z_1^{ \leq u(\eta) }| > u(\eta)^{1-\gamma}\Big)^j,
\end{align*}
which is regularly varying w.r.t.\ $\eta$ with index $\big(\alpha(1-\gamma)(1-\Delta)-\beta\big)j$.
Therefore, for all $\eta > 0$ sufficiently small,
\begin{align*}
    \text{(II)} \leq \eta^{ \big(\alpha(1-2\gamma)(1-\Delta)-\beta\big)j } < \eta^N\ \ \ \ \text{due to \cref{proof lemma bernstein choose gamma 2}}.
\end{align*}
Collecting results above, we have established that
$$\P\Big(\eta\big| Z^{\leq u(\eta) }_1 + \cdots + Z^{\leq u(\eta)}_{n(\eta)} \big| > v(\eta)\Big) = o(\eta^N).$$
The conclusion of the lemma now follows from Etemadi's theorem.
\end{proof}

Now consider the following setting. Let us fix some positive integer $N$ and $\beta \in (1,2\wedge \alpha)$. Then we can find some positive integer $j$ such that $(\alpha - \beta)j > N$. Meanwhile, given any $\epsilon > 0$, we will have $\epsilon - j\delta \geq \epsilon/2$ for all $\delta > 0$ sufficiently small. Therefore, by applying Lemma \ref{lemma bernstein bound small deviation for small jumps} with $\Delta = \widetilde{\Delta} = 0$ (hence $u(\eta) = \delta/\eta,\ v(\eta) = \epsilon$) and $\beta,j,N,\epsilon,\delta$ as described here, we immediately get the following result.

\begin{lemma} \label{lemma prob event A}
Given any $\beta \in (1,\alpha\wedge 2)$, $\epsilon > 0$, and $N > 0$, the following holds for any sufficiently small $\delta > 0$:
\begin{align*}
    \mathbb{P}\Big( \max_{ j = 1,2,\cdots,\ceil{ (1/\eta)^\beta } } \eta| Z^{\leq\delta/\eta}_1 + \cdots + Z^{\leq\delta/\eta}_j | > \epsilon  \Big) = o(\eta^N)
\end{align*}
as $\eta \downarrow 0$.
\end{lemma}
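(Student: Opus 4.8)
The plan is to deduce this as an immediate specialization of Lemma~\ref{lemma bernstein bound small deviation for small jumps}, taking the degenerate choice $\Delta=\widetilde\Delta=0$. First I would fix $\beta\in(1,\alpha\wedge2)$, $\epsilon>0$ and $N>0$ as in the statement. Since $\beta<\alpha$ we have $\alpha-\beta>0$, so we may choose a positive integer $j$ with $(\alpha-\beta)j>N$. With $\Delta=\widetilde\Delta=0$ the auxiliary functions appearing in Lemma~\ref{lemma bernstein bound small deviation for small jumps} become $u(\eta)=\delta/\eta$ and $v(\eta)\equiv\epsilon/3$ (I invoke the lemma with $\epsilon/3$ playing the role of its parameter $\epsilon$, so that its conclusion, which is phrased for the threshold $3v(\eta)$, reads exactly ``$>\epsilon$''); moreover the truncation level $u(\eta)=\delta/\eta$ is precisely the one defining $Z_n^{\le\delta/\eta}$, so the partial sum in that lemma coincides with $Z_1^{\le\delta/\eta}+\cdots+Z_k^{\le\delta/\eta}$ here.

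The remaining step is to check the five parameter constraints \eqref{proof lemma bernstein parameter 1}--\eqref{proof lemma bernstein parameter 5} for this choice. Constraint~\eqref{proof lemma bernstein parameter 1} holds because $\Delta=0\in[0,(1-\tfrac1\alpha)\wedge\tfrac12)$ using $\alpha>1$; \eqref{proof lemma bernstein parameter 2} is exactly the standing hypothesis $\beta\in(1,2\wedge\alpha)=(1,(2-2\Delta)\wedge\alpha(1-\Delta))$; \eqref{proof lemma bernstein parameter 3} holds since $\widetilde\Delta=0\in[0,\Delta/2]$ and $0<\alpha-\beta=\alpha(1-\Delta)-\beta$; \eqref{proof lemma bernstein parameter 4} is our choice of $j$; and \eqref{proof lemma bernstein parameter 5}, which here reads $v(\eta)-j\eta u(\eta)=\epsilon/3-j\delta\ge\epsilon/6=v(\eta)/2$, holds \emph{uniformly} in $\eta$ as soon as $\delta\le\epsilon/(6j)$. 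Hence Lemma~\ref{lemma bernstein bound small deviation for small jumps} applies for every $\delta>0$ below this explicit threshold, and its conclusion is precisely $\mathbb{P}\big(\max_{j\le\ceil{(1/\eta)^\beta}}\eta|Z_1^{\le\delta/\eta}+\cdots+Z_j^{\le\delta/\eta}|>\epsilon\big)=o(\eta^N)$, as claimed.

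I do not anticipate any real obstacle here: the entire analytic content---the Bernstein-type estimate on the truncated partial sums together with the combinatorial split according to the number of ``medium'' summands---has already been carried out in Lemma~\ref{lemma bernstein bound small deviation for small jumps}, which I am free to invoke. The only mild point worth flagging is that when $\Delta=0$ the constraint $\widetilde\Delta\le\Delta/2$ forces $\widetilde\Delta=0$, so that $v(\eta)$ is constant in $\eta$; this is exactly what is needed, since the deviation threshold $\epsilon$ in the present lemma does not shrink with $\eta$, and it is also why condition~\eqref{proof lemma bernstein parameter 5} collapses to the $\eta$-free requirement $j\delta\le\epsilon/6$ on $\delta$ alone.
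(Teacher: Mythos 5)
Your proposal is correct and follows exactly the paper's own derivation: the authors likewise obtain Lemma~\ref{lemma prob event A} by applying Lemma~\ref{lemma bernstein bound small deviation for small jumps} with $\Delta=\widetilde\Delta=0$, picking $j$ with $(\alpha-\beta)j>N$, and taking $\delta$ small enough so that \eqref{proof lemma bernstein parameter 5} holds. If anything, you are slightly more careful than the paper's terse remark, since you explicitly pass $\epsilon/3$ as the parameter so that the threshold $3v(\eta)$ in the conclusion of Lemma~\ref{lemma bernstein bound small deviation for small jumps} matches the $\epsilon$ appearing in the present statement.
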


Using results and arguments above, we are able to illustrate the typical behavior of the SGD iterates $X^\eta_n$ in the following two scenarios. First, we show that, when starting from most parts in the attraction field $\Omega$, the SGD iterates $X^\eta_n$
 will most likely return to the neighborhood of the local minimum within a short period of time without exiting $\Omega$. Given that there are only finitely many attraction fields on $f$, it is easy to see that the key technical tool Lemma \ref{lemma first exit key lemma 1} follows immediately from the next result.
\begin{lemma} \label{lemma return to local minimum quickly}
For sufficiently small $\epsilon>0$, the following claim holds:
\begin{align*}
   \lim_{\eta \downarrow 0} \sup_{ x \in \Omega: |x - s_-|\wedge |x - s_+|>\epsilon } \mathbb{P}_x\Big( X^\eta_n \in \Omega \ \forall n \leq T_\text{return}(\eta,\epsilon), \text{and }T_\mathrm{return}(\eta,\epsilon) \leq \rho(\epsilon)/\eta  \Big) = 1
\end{align*}
where the stopping time involved is defined as
\begin{align*}
    T_\text{return}(\eta,\epsilon) \delequal{} \min\{ n \geq 0: X^\eta_n(x) \in [-2\epsilon,2\epsilon] \}
\end{align*}
the function $\hat{t}(\epsilon)$ is defined in \cref{def function hat t}, and the function $\rho(\cdot)$ is defined as $\rho(\epsilon) = \frac{3\bar{\epsilon}}{0.9c^L_-\wedge c^L_+} + 2\hat{t}(\epsilon)$
\end{lemma}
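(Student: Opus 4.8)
The plan is to intersect the trajectory with a high-probability \emph{good event} on which (i) no \emph{large} noise (in the sense of \eqref{defLargeJump_GradientClipping}) arrives during the first $\lceil\rho(\epsilon)/\eta\rceil$ steps, and (ii) every partial sum $\eta|Z^{\leq\delta/\eta}_1+\cdots+Z^{\leq\delta/\eta}_k|$ over that window stays below a small tolerance $\kappa=\kappa(\epsilon)$; on this event the iterates deterministically shadow the gradient flow $\textbf{x}^\eta(\cdot,x)$, which by \eqref{ineq prior to function hat t} returns to $[-\epsilon,\epsilon]$ within $\hat{t}(\epsilon)/\eta$ steps while staying inside $\Omega$. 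I would fix $\epsilon>0$ small (so that $[-2\epsilon,2\epsilon]\subset\Omega$, $\epsilon<\bar\epsilon$, and $m_i=0$ is well inside $\Omega$), fix $\beta\in(1,\alpha\wedge2)$, then pick $\kappa$ small enough that the Gronwall-type estimates of Lemmas~\ref{lemma Ode Gd Gap}, \ref{lemma SGD GD gap} and \ref{lemma no large noise escape the boundary}---whose multiplicative constants are $\exp(M\cdot O(\rho(\epsilon)))$, i.e.\ polynomial in $1/\epsilon$---keep the SGD path within $\epsilon$ of the flow over a window of length $\lceil\rho(\epsilon)/\eta\rceil$; finally pick $\delta>0$ small enough that Lemma~\ref{lemma prob event A} applies with threshold $\kappa$ and that $\eta M+\delta<b$, so the clipping in \eqref{def SGD iterates with gradient clipping} is inactive before the first large noise (cf.\ \eqref{property small eta delta no gradient clipping}).

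First I would bound the two sources of randomness, uniformly in the starting point (the noise law does not depend on $x$). A union bound gives $\P(\exists\,n\le\lceil\rho(\epsilon)/\eta\rceil:\ \eta|Z_n|>\delta)\le\lceil\rho(\epsilon)/\eta\rceil\,H(\delta/\eta)$, and since $H\in\mathcal{RV}_{-\alpha}$ with $\alpha>1$ we have $\tfrac1\eta H(\delta/\eta)\to0$, so this vanishes as $\eta\downarrow0$. On the complement, every noise in the window is a small noise, so---using $\lceil\rho(\epsilon)/\eta\rceil\le\lceil(1/\eta)^\beta\rceil$ for small $\eta$---Lemma~\ref{lemma prob event A} shows that the event $A(\lceil\rho(\epsilon)/\eta\rceil,\eta,\kappa,\delta)$ of \eqref{def event A small noise large deviation} has probability $1-o(\eta)$. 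Hence the good event has probability tending to $1$, and it remains to show that, for all small $\eta$, it forces $X^\eta_n\in\Omega$ for $n\le T_\text{return}(\eta,\epsilon)$ together with $T_\text{return}(\eta,\epsilon)\le\rho(\epsilon)/\eta$.

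On the good event I would split on the distance of $x$ to the reflecting set $\{\pm L\}$. If $x$ is bounded away from $\pm L$---automatic for interior fields, and true in the leftmost (resp.\ rightmost) field when $x\notin[-L,-L+\bar\epsilon]$ (resp.\ $x\notin[L-\bar\epsilon,L]$)---then, since the clipping is inactive, the recursion \eqref{def SGD iterates with gradient clipping} coincides with the unreflected/unclipped one as long as the iterate stays away from $\pm L$; I apply Lemma~\ref{lemma SGD GD gap} / Lemma~\ref{lemmaBasicGronwall} inductively in $n$ together with Lemma~\ref{lemma Ode Gd Gap} to get $|X^\eta_n(x)-\textbf{x}^\eta(n,x)|\le\epsilon$ for every $n$ before the iterate enters $[-2\epsilon,2\epsilon]$; since by Assumption~\ref{assumption: function f in R 1} the flow decreases $|x|$ monotonically toward $m_i=0$ and stays strictly inside $\Omega$ (hence bounded away from $s_{i-1},s_i$ and, here, from $\pm L$), the iterate stays in $\Omega$, and by \eqref{ineq prior to function hat t} the flow enters $[-\epsilon,\epsilon]$---whence $X^\eta_n$ enters $[-2\epsilon,2\epsilon]$---by step $\hat{t}(\epsilon)/\eta<\rho(\epsilon)/\eta$. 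If instead $x\in[-L,-L+\bar\epsilon]$, I first invoke Lemma~\ref{lemma no large noise escape the boundary}: it keeps $X^\eta$ within $O(\kappa)$ of the deterministic descent, shows the iterate never touches $-L$, and by \eqref{proof escape boundary bound 1}--\eqref{proof escape boundary bound 2} reaches $(-L+\bar\epsilon,-L+2\bar\epsilon)$ by step $\widetilde T^\eta_\text{escape}(x)\le 3\bar\epsilon/(0.9\,c^L_-\,\eta)$; from that point the iterate is bounded away from $\pm L$ and from $s_\pm$, so the previous argument adds at most $\hat{t}(\epsilon)/\eta$ further steps (the shifted partial sums are still $\le2\kappa$), giving $T_\text{return}(\eta,\epsilon)\le\tfrac{3\bar\epsilon}{0.9(c^L_-\wedge c^L_+)}\tfrac1\eta+\tfrac{2\hat{t}(\epsilon)}{\eta}=\rho(\epsilon)/\eta$; the rightmost field is symmetric. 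Letting $\eta\downarrow0$ and using that probabilities are $\le1$ yields the limit exactly $1$.

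The main obstacle is the boundary-layer case: fitting everything inside the budget $\rho(\epsilon)/\eta$ demands a careful, uniform-in-$x$ accounting of the number of steps needed to escape $[-L,-L+\bar\epsilon]$ and of the subsequent flow-return time, while keeping the shadowing tolerance $\kappa$ small relative to the (polynomially large in $1/\epsilon$) Gronwall constant $\exp(M\cdot O(\rho(\epsilon)))$ yet still a fixed positive number so that Lemma~\ref{lemma prob event A} remains applicable; a secondary technical point is the mild circularity---the comparison Lemmas~\ref{lemma SGD GD gap} and \ref{lemma no large noise escape the boundary} presuppose the iterates have not yet left $\Omega$ nor touched $\pm L$---which is dispatched by running the comparison inductively in $n$ and terminating the induction at $T_\text{return}$.
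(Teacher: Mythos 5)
Your proposal is correct and takes essentially the same route as the paper: split on whether $x$ lies in the boundary layer $[-L,-L+\bar\epsilon]$, use Lemma~\ref{lemma no large noise escape the boundary} with \eqref{proof escape boundary bound 1}--\eqref{proof escape boundary bound 2} for the escape phase, then Lemmas~\ref{lemma Ode Gd Gap}, \ref{lemma SGD GD gap} and \ref{lemma prob event A} for the flow-shadowing and noise control, and finally combine with a bound on the probability of an early large jump. Two cosmetic differences: the paper invokes Lemma~\ref{lemmaGeomFront} rather than your raw union bound for the large-jump probability, and it glues the two phases via the strong Markov property at $\widetilde T^\eta_\text{escape}$ whereas you use the triangle inequality on shifted partial sums (both work); also, the phrase ``adds at most $\hat{t}(\epsilon)/\eta$ further steps'' should read $2\hat{t}(\epsilon)/\eta$ to match the GD-vs.-ODE margin that the paper explicitly derives and that your final budget $\rho(\epsilon)/\eta$ already (correctly) accounts for.
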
 

\begin{proof}
Throughout, we only consider $\epsilon$ small enough so that Lemma \ref{lemma no large noise escape the boundary} could hold. Also, fix some $N > 0, \Delta_\alpha \in (0,\alpha - 1)$ and $\beta \in (1,\alpha)$. Let $\sigma(x,\eta) \delequal{} \min\{n \geq 0: X^\eta_n \notin \Omega\}$.

Without loss of generality, we assume $\Omega = [-L,s_+)$ and $x < 0$ (so reflection at $-L$) is a possibility. Any other case can be addressed similarly as shown below.

From Lemma \ref{lemmaGeomFront} and the regular varying nature of $H(\cdot)$, we have, for any $\epsilon,\delta > 0$,
\begin{align}
    \mathbb{P}(T^\eta_1(\delta) \leq \rho(\epsilon)/\eta) \leq \eta^{ \alpha - 1 - \Delta_\alpha } \label{proof return to local minimum ineq 1}
\end{align}
for any sufficiently small $\eta$.

Let $\widetilde{T}^\eta_\text{escape}(x)$ be the stopping time defined in Lemma \ref{lemma no large noise escape the boundary}. From \cref{proof escape boundary bound 1},\cref{proof escape boundary bound 2},\cref{proof return to local minimum ineq 1} and Lemma \ref{lemma prob event A}, we know that
\begin{align}
    & \sup_{x \in [-L,-L + \bar{\epsilon}]}\mathbb{P}\Big(\widetilde{T}^\eta_\text{escape}(x) < \sigma(x,\eta),\widetilde{T}^\eta_\text{escape}(x) \leq \frac{3\bar{\epsilon}}{0.9c^L_-\eta} \text{ and } X^\eta_{ \widetilde{T}^\eta_\text{escape} }(x) \in [-L +\bar{\epsilon}, -L + 2\bar{\epsilon}] \Big) \nonumber
    \\ 
    \geq& 1 - \eta^N - \eta^{ \alpha - 1 -\Delta_\alpha } \label{proof return result 1}
\end{align}
for all sufficiently small $\eta$.

Next, we focus on $x \in \Omega$ such that $|x-s_-|\wedge|x-s_+| > \epsilon$ and $x \geq -L + \bar{\epsilon}$. We start by considering the time it took for the (deterministic) gradient descent process $\textbf{y}^\eta_n(x)$ to return to $[-1.5\epsilon,1.5\epsilon]$. From the definition of $\hat{t}(\epsilon)$ in \cref{def function hat t} and Lemma \ref{lemma Ode Gd Gap}, we know that for $\eta$ small enough such that $\eta\exp(2M\hat{t}(\epsilon)) < 0.5 \epsilon$, we have
$$ \min\{ n \geq 0: \textbf{y}^\eta_n(x) \in [-1.5\epsilon,1.5\epsilon] \} \leq 2\hat{t}(\epsilon)/\eta.$$
Now consider event $A(\ceil{ (1/\eta)^\beta },\eta,\frac{ \epsilon }{4\exp( 2M\hat{t}(\epsilon) ) },\delta)$ (see definition in \cref{def event A small noise large deviation}). From Lemma \ref{lemma prob event A}, we know that for any sufficiently small $\delta$, we have
\begin{align}
    \mathbb{P}\Big( \big( A( \ceil{ (1/\eta)^\beta } ,\eta,\frac{ \epsilon }{4\exp( 2M\hat{t}(\epsilon) ) },\delta) \big)^c \Big) = o(\eta^N). \label{proof return to local minimum ineq 2}
\end{align}
Combining this result with \cref{proof return to local minimum ineq 1}\cref{proof return to local minimum ineq 2} and Lemma \ref{lemma SGD GD gap}, we get
\begin{align}
    & \sup_{x \in \Omega: |x-s_-|\wedge|x-s_+| > \epsilon, x \geq -L + \bar{\epsilon} }\mathbb{P}_x\Big( T_\text{return}(\eta,\epsilon) < \sigma(x,\eta), T_\text{return}(\eta,\epsilon) \leq 2\hat{t}(\epsilon)/\eta  \Big)
    \\
    & \geq 1 - \eta^N - \eta^{ \alpha - 1 -\Delta_\alpha } \label{proof return result 2}
\end{align}
for any sufficiently small $\eta$. To conclude the proof, we only to combine strong Markov property (at $\widetilde{T}^\eta_\text{escape}$) with bounds in \cref{proof return result 1}\cref{proof return result 2}.
\end{proof}

In the next result, we show that, once entering a $\epsilon-$small neighborhood of the local minimum, the SGD iterates will most likely stay there until the next large jump.

\begin{lemma} \label{lemma stuck at local minimum before large jump}
Given $N_0 > 0$, the following claim holds for any $\epsilon,\delta>0$ that are sufficiently small:
\begin{align*}
    \sup_{x \in [-2\epsilon,2\epsilon]}\mathbb{P}\Big( \exists n < T^\eta_1(\delta)\ s.t.\ |X^\eta_n(x)| > 3\epsilon \Big) = o(\eta^{N_0})
\end{align*}
as $\eta \downarrow 0$.
\end{lemma}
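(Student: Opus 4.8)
The plan is to decompose the event $\{\exists n < T^\eta_1(\delta)\ \text{s.t.}\ |X^\eta_n(x)|>3\epsilon\}$ by looking at what happens between the first time the iterate (started in $[-2\epsilon,2\epsilon]$) reaches the level $3\epsilon$ in absolute value and the preceding small-noise accumulation. The key observation is that before time $T^\eta_1(\delta)$ all noises are \emph{small} ($\eta|Z_n|\le\delta$), so there is no gradient clipping (cf.\ \eqref{property small eta delta no gradient clipping}) and, for $x$ in the bulk of the attraction field, no reflection; hence on this time window $X^\eta_n$ evolves exactly as $X^\eta_{n-1}-\eta f'(X^\eta_{n-1})+\eta Z_n$. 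Because $m_i=0$ is a local minimum with $f'$ pointing toward $0$ inside a small neighborhood (Assumption \ref{assumption: function f in R 1} together with \eqref{assumption detailed function f at critical point}, which gives $|f'(y)|>c_0|y|$ near $0$, so $-f'(y)$ has the sign of $-y$), the deterministic part of the drift is contractive toward $0$ in the region $|y|\le\bar\epsilon$. Thus for $X^\eta_n$ to move from within $2\epsilon$ of $0$ out past $3\epsilon$, the accumulated small-noise term $\eta|Z_1+\cdots+Z_k|$ must, at some $k<T^\eta_1(\delta)$, exceed a constant multiple of $\epsilon$ (roughly $\epsilon$ itself, since the drift only helps).

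Concretely, I would first fix $\epsilon$ small enough that $[-3\epsilon,3\epsilon]\subseteq(-\bar\epsilon,\bar\epsilon)$ and that the contraction estimate $|f'(y)|>c_0|y|$ holds there; I would also restrict attention to $\epsilon$ small enough that the starting set $[-2\epsilon,2\epsilon]$ is bounded away from $\pm L$ and from $s_\pm$ (the boundary cases $m_i$ at $\pm L$ are handled via Lemma \ref{lemma no large noise escape the boundary}, but generically we are in the interior). Then pick $\beta\in(1,\alpha\wedge 2)$. The event $\{T^\eta_1(\delta)\le \lceil(1/\eta)^\beta\rceil\}$ has probability $o(\eta^{N_0})$ by Lemma \ref{lemmaGeomFront} and regular variation of $H$ — choosing $\beta$ close enough to $\alpha$ (or rather choosing $\beta$ and then noting the exponent $\alpha-1-\Delta_\alpha$ from the geometric tail can be made larger than $N_0$ by taking $\delta$ small and $\beta$ suitably): actually the clean route is that $\mathbb{P}(T^\eta_1(\delta)\le(1/\eta)^\beta)$ is regularly varying with index $\beta-\alpha\cdot(\text{something})$; more precisely $H(\delta/\eta)\in\RV_{\alpha}$ in $\eta$, so $\mathbb{P}(T^\eta_1(\delta)\le(1/\eta)^\beta)\asymp (1/\eta)^\beta H(\delta/\eta)$ which is $\RV_{\beta-\alpha}$, and $\beta-\alpha<-N_0$ fails in general, so instead I restrict the window to $(1/\eta)^\beta$ only after absorbing the complementary event, and bound the in-window behaviour directly.

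So the main decomposition is:
\begin{align*}
\mathbb{P}_x\big(\exists n<T^\eta_1(\delta):\ |X^\eta_n(x)|>3\epsilon\big)
&\le \mathbb{P}_x\big(\exists n\le \lceil(1/\eta)^\beta\rceil:\ |X^\eta_n(x)|>3\epsilon,\ n<T^\eta_1(\delta)\big)\\
&\quad + \mathbb{P}_x\big(T^\eta_1(\delta)>\lceil(1/\eta)^\beta\rceil,\ \exists\, \lceil(1/\eta)^\beta\rceil<n<T^\eta_1(\delta):\ |X^\eta_n(x)|>3\epsilon\big).
\end{align*}
For the first term, on $\{n<T^\eta_1(\delta)\}$ there is no clipping, and an inductive argument (or Lemma \ref{lemmaBasicGronwall}-type Gr\"onwall bound, but here crucially using the contractivity of $-f'$ near $0$ rather than just $|f''|\le M$) shows that if $|X^\eta_k(x)|\le 2\epsilon$ for all $k<n$ and the noise partial sums stay below $\epsilon$, then $|X^\eta_n(x)|\le 3\epsilon$; hence the first term is bounded by $\mathbb{P}(\max_{k\le\lceil(1/\eta)^\beta\rceil}\eta|Z^{\le\delta/\eta}_1+\cdots+Z^{\le\delta/\eta}_k|>\epsilon)$, which is $o(\eta^{N_0})$ by Lemma \ref{lemma prob event A} (applied with $N=N_0$, for $\delta$ small). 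For the second term, I would iterate: condition on $X^\eta_{\lceil(1/\eta)^\beta\rceil}(x)$, which on the good event from the first step lies in $[-3\epsilon,3\epsilon]\subseteq[-\bar\epsilon,\bar\epsilon]$ — in fact by the contraction it has by then been pulled back into $[-2\epsilon,2\epsilon]$ (the deterministic flow relaxes to $0$ exponentially fast on a time scale $O(\log(1/\epsilon)/\eta)\ll (1/\eta)^\beta$, while the noise contribution is $\le\epsilon$) — and apply the strong Markov property together with a union bound over $O(H(\delta/\eta)^{-1})$ blocks of length $\lceil(1/\eta)^\beta\rceil$ until the first large jump arrives. Each block contributes $o(\eta^{N_0})$ of escaping via small noise, and the number of blocks before $T^\eta_1(\delta)$ is tight (geometric with mean $\asymp 1/H(\delta/\eta)$, regularly varying), so a Markov-inequality / union bound argument gives the whole probability as $o(\eta^{N_0})$ after possibly enlarging $N_0$ by a fixed amount absorbed in the choice of $\beta,\delta$.

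The main obstacle I anticipate is the bookkeeping in the second term: one must combine a union bound over a \emph{random} number of length-$(1/\eta)^\beta$ blocks (whose count is itself a regularly-varying random variable, not a fixed polynomial in $1/\eta$) with the per-block escape estimate, while ensuring the ``return to $[-2\epsilon,2\epsilon]$'' resetting actually holds at the start of each block with overwhelming probability — i.e.\ chaining Lemma \ref{lemma prob event A} (no large small-noise excursion within a block), the contraction of $-f'$ near the minimum, and the fact that within a single block the exit $\{|X^\eta_n|>3\epsilon\}$ requires a small-noise partial sum exceeding $\epsilon$. The exponent matching — showing the final bound is $o(\eta^{N_0})$ for the originally given $N_0$ — should be handled by applying Lemma \ref{lemma prob event A} with a sufficiently large target exponent $N$ (say $N=N_0+\alpha+1$) and observing that the number of blocks inflates the bound by at most a $\RV$ factor of index $\alpha$ in $\eta^{-1}$, which $\eta^{N}$ still dominates.
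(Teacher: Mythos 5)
Your plan is essentially the paper's proof: partition the time horizon before $T^\eta_1(\delta)$ into blocks of length $\lceil(1/\eta)^\beta\rceil$, use Lemma~\ref{lemma prob event A} (with an inflated target exponent $N$) together with the comparison to the deterministic gradient descent iterates $\textbf{y}^\eta_n$ (Lemma~\ref{lemma SGD GD gap}) to show that within each block the iterate both stays in $[-3\epsilon,3\epsilon]$ and is pulled back into $[-2\epsilon,2\epsilon]$ by the block's end, and then take a union bound over the blocks. The one place your sketch is a bit vague — the union bound over a random number of blocks — is handled cleanly in the paper by first invoking Lemma~\ref{lemmaGeomDistTail} to get $\mathbb{P}(T^\eta_1(\delta) > 1/\eta^{\alpha+\Delta_\alpha}) = o(\exp(-1/\eta^\theta))$, so that one may union bound over the \emph{deterministic} count $\lceil 1/\eta^{\alpha+\Delta_\alpha-\beta}\rceil$ of blocks and choose $N > \alpha+\Delta_\alpha-\beta+N_0$, exactly the exponent-matching you proposed.
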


\begin{proof}

Fix $\epsilon$ small enough such that $3\epsilon < \epsilon_0$ (see Assumption 1 for the constant $\epsilon_0$). Also, fix some $\Delta \alpha \in (0,1), \beta \in (1,\alpha)$, $N > 
\alpha +\Delta \alpha - \beta + N_0$. Due to Lemma \ref{lemma prob event A}, for any $\delta$ sufficiently small, we will have
\begin{align}
    \mathbb{P}\Big( \max_{ j = 1,2,\cdots,\ceil{ (1/\eta)^\beta } } \eta| Z^{\leq,\delta}_1 + \cdots + Z^{\leq,\delta}_j | > \frac{\epsilon}{ \exp(2M) }  \Big) = o(\eta^N). \label{proof stuck at local minimum ineq 1}
\end{align}
Fix such $\delta > 0$. We now show that the desired claim is true for the chosen $\epsilon,\delta$.

First of all, from Lemma \ref{lemmaGeomDistTail}, we know the existence of some $\theta > 0$ such that
\begin{align}
    \mathbb{P}( T^\eta_1(\delta) > 1/\eta^{\alpha + \Delta \alpha} ) = o(\exp(-1/\eta^\theta)). \label{proof stuck at local minimum ineq 2}
\end{align}
Next, let us zoom in on the first $\ceil{(1/\eta)^\beta}$ SGD iterates. For any $\eta$ small enough, we will have $\textbf{y}^\eta_n(x) \in [-2\epsilon,2\epsilon]$ for any $n \geq 1$ and $\textbf{y}^\eta_{ \ceil{ (1/\eta)^\beta } }(x) \in [-\epsilon,\epsilon]$ given $x \in [-2\epsilon,2\epsilon]$. From now on we only consider such $\eta$. Due to Lemma \ref{lemma SGD GD gap}, we know that on event $\big\{ \max_{ j = 1,2,\cdots,\ceil{ (1/\eta) } } \eta| Z^{\leq,\delta}_1 + \cdots + Z^{\leq,\delta}_j | > \frac{\epsilon}{ \exp(2M) }  \big\}$, we have 
\begin{align*}
    |X^\eta_n(x)| \leq 3\epsilon\ \ \forall n \leq \ceil{1/\eta^\beta}\wedge (T^\eta_1(\delta) - 1)
\end{align*}
and on event $\big\{ \max_{ j = 1,2,\cdots,\ceil{ (1/\eta) } } \eta| Z^{\leq,\delta}_1 + \cdots + Z^{\leq,\delta}_j | > \frac{\epsilon}{ \exp(2M) }  \big\} \cap \{ T^\eta_1(\delta) > \ceil{ (1/\eta)^\beta } \}$, we have
$X^\eta_{ T^\eta_1(\delta) }(x) \in [-2\epsilon,2\epsilon]$. Now by repeating the same argument inductively for $\ceil{ 1/\eta^{ \alpha + \Delta \alpha - \beta } }$ times, we can show that on event
\begin{align*}
    \{\forall i = 1,2,\cdots,\ceil{ \frac{1}{\eta^{ \alpha + \Delta \alpha - \beta} } },\ \max_{ j =  1,\cdots,\ceil{ (1/\eta)^\beta } } \eta| Z^{\leq,\delta}_{ i\ceil{ (1/\eta)^\beta } + 1 } + \cdots + Z^{\leq,\delta}_{ i\ceil{ (1/\eta)^\beta } + j} | > \frac{\epsilon}{ \exp(2M) }   \},
\end{align*}
we have $ |X^\eta_n(x)| \leq 3\epsilon \ \forall n \leq 1/\eta^{\alpha + \Delta \alpha}\wedge (T^\eta_1(\delta) - 1) $. To conclude the proof, we only need to combine this fact with \cref{proof stuck at local minimum ineq 1}.
\end{proof}



We introduce a few concepts that will be crucial in the analysis below. Recall the definition of perturbed ODE $\widetilde{\textbf{x}}^\eta$ in \cref{def perturbed ODE 1}-\cref{def perturbed ODE 3} 
(note that we will drop the notational dependency on learning rate $\eta$ when we choose $\eta = 1$). Consider the definition of the following two mappings from where $\textbf{w} = (w_1,\cdots,w_{l^*})$ is a sequence of real numbers and $\textbf{t} = (t_1,t_2,\cdots,t_{l^*})$ with $0 = t_1 < t_2 < t_3 < \cdots$ as
\begin{align*}
    h(\textbf{w},\textbf{t}) & = \widetilde{\textbf{x}}(t_{l^*},0;\textbf{t},\textbf{w}).
\end{align*}
Next, define sets (for any $\epsilon \in (-\bar{\epsilon},\bar{\epsilon})$)
\begin{align}
    E(\epsilon) & = \{(\textbf{w},\textbf{t}) \subseteq \mathbb{R}^{l^*}\times \Big(\mathbb{R}_{+}\Big)^{l^* - 1}: h(\textbf{w},\textbf{t}) \notin [ s_- -\epsilon,s_+ + \epsilon  ] \}. \label{defSetE}
\end{align}
We add a few remarks about the two types of sets defined above.
\begin{itemize}
    \item Intuitively speaking, $E(\epsilon)$ 
    contains all the perturbations (with times and sizes) that can send the ODE 
    out of the current attraction field (allowing for some error with size $\epsilon$);
    \item From the definition of $\bar{t},\bar{\delta}$ in \cref{def bar t}\cref{def bar delta} and Corollary \ref{corollary ODE GD gap}, one can easily see that for a fixed $\epsilon \in (-\bar{\epsilon},\bar{\epsilon})$,
    \begin{align*}
        (\textbf{w},\textbf{t}) \in E(\epsilon) \Rightarrow  |w_j|>\bar{\delta}, t_j - t_{j-1} \leq \bar{t} \ \ \forall j;
    \end{align*}
    \item Lastly, $E(\epsilon)$ 
    are open sets due to $f \in C^2$.
\end{itemize}

Use $\textbf{Leb}_+$ to denote the Lebesgue measure restricted on $[0,\infty)$, and define (Borel) measure $\nu_\alpha$ with density on $\mathbb{R}\symbol{92}\{0\}$:
$$\nu_\alpha(dx) = \mathbbm{1}\{x > 0\}\frac{\alpha p_+}{x^{\alpha + 1}} + \mathbbm{1}\{x <0\}\frac{\alpha p_-}{|x|^{\alpha + 1}} $$
where $\alpha > 1$ is the regular variation index for the distribution of $Z_1$ and $p_-,p_+ \in (0,1)$ are constants in Assumption 2. Now we can define a Borel measure $\mu$ on $\mathbb{R}^{l^*}\times \Big(\mathbb{R}_{+}\Big)^{l^* - 1}$ as product measure
\begin{align}
    \mu= (\nu_\alpha)^{l^*}\times(\textbf{Leb}_+)^{l^* - 1}. \label{defMuMeasure}
\end{align}
Due to remarks above, one can see that for $\epsilon \in (-\bar{\epsilon},\bar{\epsilon})$, we have $\mu( E(\epsilon) ) < \infty$. We are now ready to analyze a specific type of noise $Z_n$.

\begin{definition}\label{definitionOverflow}
For any $n \geq 1$ and any $\epsilon \in (-\bar{\epsilon},\bar{\epsilon}),\delta \in (0,b\wedge \bar{\delta}),\eta> 0$, we say that the jump $Z_n$ has \textbf{$(\epsilon,\delta,\eta)$-overflow} if
\begin{itemize}
    \item $\eta|Z_n| > \delta$;
    \item In the set $\{n + 1, \cdots, n+2\ceil{l^*\bar{t}/\eta}\}$, there are at least $(l^* - 1)$ elements (ordered as $n < t_2 < t_3 < \cdots < t_{l^*}$) such that $\eta|Z_{t_i}| > \delta$ for any $i = 2,\cdots,l^*$;
    \item Let $t_1 = n$ and $t^\prime_i = t_i - t_{i - 1}$ for any $i = 2,\cdots,l^*$, $w_i = \eta Z_i$ for any $i = 1,\cdots,l^*$, for real sequence $\textbf{w} = ( w_1,w_2,\cdots,w_{l^*})$ and a sequence of positive number $\textbf{t} = ( \eta(t_i - n) )_{i = 2}^{l^*}$, we have
    $$(\textbf{w},\textbf{t}) \in E(\epsilon).$$
\end{itemize}
Moreover, if $Z_n$ has $(\epsilon,\delta,\eta)-$overflow, then we call $h( \textbf{w}, \textbf{t} )$ as its $(\epsilon,\delta,\eta)-$\textbf{overflow endpoint}.
\end{definition}

Due to the iid nature of $(Z_j)_{j \geq 1}$, let us consider an iid sequence $(V_j)_{j \geq 0}$ where the sequence has the same law of $Z_1$. Note that for any fixed $n \geq 1$, the probability that $Z_n$ has $(\epsilon,\delta,\eta)$-overflow is equal to the probability that $V_0$ has$(\epsilon,\delta,\eta)$-overflow. More specifically, we know that $\mathbb{P}(\eta|V_0| > \delta) = H(\delta/\eta)$, and now we focus on conditional probability admitting the following form:
\begin{align}
    p(\epsilon,\delta,\eta) = \mathbb{P}\Big(V_0\ \text{has $(\epsilon,\delta,\eta)$-overflow}\ \Big|\ \eta|V_0| > \delta \Big). \label{def overflow conditional probability}
\end{align}
For any open interval $A = (a_1,a_2)$ such that $A \cap [s_- + \bar{\epsilon},s_+ - \bar{\epsilon}] = \emptyset$, we also define
\begin{align}
    p(\epsilon,\delta,\eta; A) = \mathbb{P}\Big(V_0\ \text{has $(\epsilon,\delta,\eta)$-overflow and the endpoint is in }A\ \Big|\ \eta|V_0| > \delta \Big). \label{def overflow conditional probability with endpoint}
\end{align}

\begin{lemma} \label{lemmaOverflowProb}
For any $\epsilon \in (-\bar{\epsilon},\bar{\epsilon}),\delta \in (0,b\wedge \bar{\delta})$, and any open interval $A = (a_1,a_2)$ such that $|a_1|\wedge|a_2|>r - \bar{\epsilon}$ and $|a_1| \neq L, |a_2| \neq L$, we have
\begin{align*}
  \lim_{\eta \downarrow  0} \frac{ p(\epsilon,\delta,\eta;A) }{\delta^\alpha \Big(\frac{H(1/\eta)}{\eta} \Big)^{l^* - 1} } = \mu\big(E(\epsilon) \cap h^{-1}(A) \big)
\end{align*}
where $\mu$ is the measure defined in \cref{defMuMeasure}, and $p(\cdot,\cdot,\cdot; A)$ is the conditional probability defined in \cref{def overflow conditional probability with endpoint}.
\end{lemma}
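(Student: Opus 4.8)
The plan is to exploit that, by Definition~\ref{definitionOverflow}, whether $V_0$ has $(\epsilon,\delta,\eta)$-overflow and the location of its overflow endpoint depend only on the positions and the $\eta$-rescaled sizes of the \emph{large} jumps among $V_0,V_1,\dots,V_N$ (with $N=2\ceil{l^*\bar{t}/\eta}$), through the fixed continuous map $h$; the small jumps play no role. Thus I would expand the probability as a sum over the possible positions of those large jumps, pass to the limit term by term using regular variation of the jump sizes, and recognize the resulting sum over positions as a Riemann sum for an integral against Lebesgue measure. Work with an iid copy $(V_j)_{j\ge 0}$ of $Z_1$, everything conditioned on $\eta|V_0|>\delta$. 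Since $H\in\mathcal{RV}_{-\alpha}$ with $\alpha>1$ gives $H(1/\eta)/\eta\to 0$, hence $NH(\delta/\eta)\sim(2l^*\bar{t})\,\delta^{-\alpha}\,H(1/\eta)/\eta\to 0$, a union bound yields $\mathbb{P}(\text{at least }l^*\text{ of }V_1,\dots,V_N\text{ are large})=O\big((NH(\delta/\eta))^{l^*}\big)=o\big((H(1/\eta)/\eta)^{l^*-1}\big)$, which is negligible against the target order $\delta^\alpha(H(1/\eta)/\eta)^{l^*-1}$. So it suffices to evaluate the probability of the event that \emph{exactly} $l^*-1$ of $V_1,\dots,V_N$ are large, at positions $0<j_2<\dots<j_{l^*}\le N$, and that $(\textbf{w},\textbf{t})\in E(\epsilon)\cap h^{-1}(A)$ with $\textbf{w}=(\eta V_0,\eta V_{j_2},\dots,\eta V_{j_{l^*}})$ and $\textbf{t}=(\eta j_2,\dots,\eta j_{l^*})$.

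For each position vector $\vec{j}$, independence factors the corresponding probability into $\mathbb{P}(\text{the remaining }V_j,\ j\le N,\text{ are small})\cdot\mathbb{P}(\eta\textbf{V}\in F_{\vec{j}})$, where $\eta\textbf{V}=(\eta V_0,\eta V_{j_2},\dots,\eta V_{j_{l^*}})$ and $F_{\vec{j}}\triangleq\{\textbf{w}:(\textbf{w},(\eta j_2,\dots,\eta j_{l^*}))\in E(\epsilon)\cap h^{-1}(A)\}$; the first factor equals $(1-H(\delta/\eta))^{N-l^*+1}=1+o(1)$ uniformly in $\vec{j}$. By Assumption~\ref{assumption gradient noise heavy-tailed}, $\mathbb{P}(\eta V_i>x)=H_+(x/\eta)\sim p_+x^{-\alpha}H(1/\eta)$ for fixed $x>0$ and likewise on the left tail, so by independence $\mathbb{P}(\eta\textbf{V}\in\,\cdot\,)/H(1/\eta)^{l^*}\to\nu_\alpha^{\otimes l^*}(\,\cdot\,)$ on all $\nu_\alpha^{\otimes l^*}$-continuity sets bounded away from the coordinate axes. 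The slice $F_{\vec{j}}$ is open (since $E(\epsilon)\cap h^{-1}(A)$ is open and $h$ continuous) and, by the remarks following \eqref{defSetE}, is bounded away from the axes because $E(\epsilon)$ forces $|w_k|>\bar{\delta}$ for every $k$; moreover its boundary is $\nu_\alpha^{\otimes l^*}$-null for Lebesgue-a.e.\ choice of the time vector, since $h(\cdot,\textbf{t})$ is a composition of translations and gradient flows, hence has level sets of measure zero on the region reaching $A$, once the hypotheses $|a_1|\wedge|a_2|>r-\bar{\epsilon}$ and $|a_1|,|a_2|\ne L$ exclude the flat pieces that the reflection at $\pm L$ would otherwise produce. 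Therefore $\mathbb{P}(\eta\textbf{V}\in F_{\vec{j}})=H(1/\eta)^{l^*}\,\nu_\alpha^{\otimes l^*}(F_{\vec{j}})(1+o(1))$, uniformly over $\vec{j}$.

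Summing the disjoint events over $\vec{j}$ and writing $g(t_2,\dots,t_{l^*})\triangleq\nu_\alpha^{\otimes l^*}\big(\{\textbf{w}:(\textbf{w},(t_2,\dots,t_{l^*}))\in E(\epsilon)\cap h^{-1}(A)\}\big)$, we obtain $p(\epsilon,\delta,\eta;A)=\frac{H(1/\eta)^{l^*}}{H(\delta/\eta)}\sum_{0<j_2<\dots<j_{l^*}\le N}g(\eta j_2,\dots,\eta j_{l^*})(1+o(1))+o\big(\delta^\alpha(H(1/\eta)/\eta)^{l^*-1}\big)$. Because $E(\epsilon)$ confines the time vector to $\{t_k-t_{k-1}\le\bar{t}\}\subseteq[0,(l^*-1)\bar{t}]^{l^*-1}\subseteq[0,\eta N]^{l^*-1}$, the observation window is wide enough, and $\eta^{l^*-1}\sum_{\vec{j}}g(\eta j_2,\dots,\eta j_{l^*})$ is a Riemann sum for $\int g\,d\textbf{s}=\mu(E(\epsilon)\cap h^{-1}(A))$ (Fubini for $\mu=(\nu_\alpha)^{l^*}\times(\textbf{Leb}_+)^{l^*-1}$); since $g$ is bounded by $\bar{\delta}^{-\alpha l^*}$, compactly supported, and Lebesgue-a.e.\ continuous, this Riemann sum converges to the integral. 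Combining with $H(1/\eta)/H(\delta/\eta)\to\delta^\alpha$ yields $p(\epsilon,\delta,\eta;A)=\delta^\alpha\big(H(1/\eta)/\eta\big)^{l^*-1}\,\mu(E(\epsilon)\cap h^{-1}(A))(1+o(1))$, which is the assertion.

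The two genuine difficulties are: (i) showing that $F_{\vec{j}}$ and its closure carry the same $\nu_\alpha^{\otimes l^*}$-mass for a.e.\ time vector --- precisely where the hypotheses on $A$ and the non-atomicity of $h(\cdot,\textbf{t})$ away from $\pm L$ are used --- and (ii) making all the $o(1)$ errors uniform over the growing family of position vectors $\vec{j}$ so that the Riemann-sum limit is legitimate. If one prefers, (i) and (ii) can be absorbed into an upper/lower sandwich by replacing $\epsilon$ with slightly larger/smaller parameters $\epsilon^{\pm}$ and invoking continuity of $\epsilon\mapsto\mu(E(\epsilon)\cap h^{-1}(A))$. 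The regular-variation inputs and the union bound of the first step are routine by comparison.
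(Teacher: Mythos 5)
Your outline is correct and, modulo the two points you flag at the end, would yield a complete proof. But it is a genuinely different decomposition from the paper's, and the difference is worth naming.

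The paper does not expand over explicit position vectors $\vec{j}$. Instead it factorizes directly in terms of the arrival process: writing $T'_j$ for the inter-arrival times of large jumps and $W_j$ for the jump sizes, it notes that the pairs $(T'_j,W_j)$ are i.i.d., so
\[
  p(\epsilon,\delta,\eta;A)=\big(\mathbb{P}(T'_1\le 2\bar{t}/\eta)\big)^{l^*-1}\,\mathbb{Q}_{\eta,\delta}\big(E(\epsilon)\cap h^{-1}(A)\big),
\]
where $\mathbb{Q}_{\eta,\delta}$ is the \emph{probability} measure induced by the conditional laws of $\eta W_j$ given $\eta|W_j|>\delta$ and of $\eta T'_j$ given $\eta T'_j\le 2\bar{t}$. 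It then proves a single weak-convergence statement $\mathbb{Q}_{\eta,\delta}\Rightarrow\mu_{\delta,2\bar{t}}$ (scaled-Pareto marginals for the $w$'s, uniform on $[0,2\bar{t}]$ for the $t$'s), verifies $\mu_{\delta,2\bar{t}}(\partial E(\epsilon,A))=0$ via the same strict-monotonicity-in-time observation you allude to, and applies the Portmanteau theorem, and finally rescales using $\mathbb{P}(T'_1\le 2\bar{t}/\eta)\sim 2\bar{t}\,H(\delta/\eta)/\eta$. Your approach instead sums over position vectors, factors each term by independence, and recognizes $\eta^{l^*-1}\sum_{\vec{j}}g(\eta\vec{j})$ as a Riemann sum. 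The two routes compute the same thing: your Riemann sum is precisely what the paper's "inter-arrival times become uniform" weak-convergence statement encodes. What the paper's packaging buys is that the Portmanteau theorem for a single sequence of probability measures automatically handles what you flag as difficulty (ii) — the uniformity of the $o(1)$ error across the $\binom{N}{l^*-1}$ position vectors — without any extra argument, whereas in your formulation this uniformity has to be established separately (it is true, but it is real work). The null-boundary verification (your difficulty (i)) is needed in both approaches and both use the same underlying fact: with all but the last time coordinate fixed, $t\mapsto h(\textbf{w},\cdot,t)$ is strictly monotone over the range relevant to $A$ because $|f'|\ge c_0$ there, so level sets are singletons, and the hypotheses $|a_1|\wedge|a_2|>r-\bar\epsilon$, $|a_1|,|a_2|\ne L$ keep the endpoints of $A$ off the flats created by the reflection at $\pm L$. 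Your proposal correctly identifies where the work lies; the paper's route is organizationally cleaner but not fundamentally different.
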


\begin{proof}
Let us start by fixing some notations. Let $T_1 = 0$, and define stopping times $T_j = \min\{ n > T_{j - 1}: \eta|V_n| > \delta \}$ and inter-arrival times $T^\prime_j = T_j - T_{j - 1}$ for any $j \geq 1$, and large jump $W_j = V_{T_j}$ for any $j \geq 0$. Note that: first of all, the pair $(T^\prime_i,W_i)$ is independent of $(T^\prime_j,W_j)$ whenever $i \neq j$; besides, $W_j$ and $T^\prime_j$ are independent for all $j \geq 1$.

Define the following sequence (of random elements) $\textbf{w} = (w_1,\cdots,w_{l^*})$ and $\textbf{t} = (t_1,\cdots,t_{l^*})$ by
\begin{align*}
    w_j = \eta W_j,\ \ t_j = \eta T_j.
\end{align*}
If $V_0$ has $(\epsilon,\delta,\eta)$-overflow, then the following two events must occur:
\begin{itemize}
    \item $T^\prime_j \leq 2\bar{t}/\eta$ for any $j = 2,\cdots,l^*$;
    \item $\eta|W_j|>\bar{\delta}$ for any $j=1,2,\cdots,l^*$;
    \item $(\textbf{w},\textbf{t})  \in E(\epsilon)$
\end{itemize}
Therefore, for sufficiently small $\eta$, we now have 
\begin{align}
    & p(\epsilon,\delta,\eta) \nonumber \\
    = &\Big( \mathbb{P}(T^\prime_1 \leq 2\bar{t}/\eta ) \Big)^{l^* - 1} \cdot \int \mathbbm{1}\Big\{ ( \textbf{w},\textbf{t} ) \in E(\epsilon) \Big\} \nonumber \\
    &\ \ \ \ \ \ \ \ \ \ \ \ \ \ \ \cdot \mathbb{P}( \eta W_1 = dw_1 | \eta|W_1| > \delta  )\cdots \mathbb{P}( \eta W_{l^*} = dw_{l^*} |\ \eta|W_{l^*}| > \delta )\nonumber \\
    &\ \ \ \ \ \ \ \ \ \ \ \ \ \ \ \cdot \mathbb{P}( \eta T^\prime_2 = dt^\prime_2 |\eta T^\prime_2 \leq 2\bar{t}  )\cdots \mathbb{P}( \eta T^\prime_{l^*} = dt^\prime_{l^*} |\eta T^\prime_{l^*} \leq 2\bar{t} ) \nonumber \\
    = & \Big( \mathbb{P}(T^\prime_1 \leq 2\bar{t}/\eta ) \Big)^{l^* - 1}\cdot  \mathbb{Q}_{\eta,\delta}\big( E(\epsilon)\cap h^{-1}(A) \big)\label{proofCalculatePDeltaEpsilon}
\end{align}
where $\mathbb{Q}_{\eta,\delta}$ is the Borel-measurable probability measure on $\mathbb{R}^{l^*}\times \Big(\mathbb{R}_{+}\Big)^{l^* - 1}$ induced by a sequence of independent random variables $(W^{\uparrow}_1(\eta,\delta),\cdots,W^{\uparrow}_{l^*}(\eta,\delta),T^{\uparrow}_2(\eta,\delta),\cdots,T^{\uparrow}_{l^*}(\eta,\delta) )$ such that
\begin{itemize}
    \item For any $i = 1,\cdots,l^*$, the distribution of $W^{\uparrow}_i(\eta,\delta)$ follows from $\mathbb{P}\Big( \eta W_1 \in \cdot\ \Big|\ \eta|W_{l^*}| > \delta \Big)$;
    \item For any $i = 2,\cdots,l^*$, the distribution of $T^{\uparrow}_i(\eta,\delta)$ follows from $\mathbb{P}\Big( \eta T_1 \in \cdot\ \Big|\ \eta T_1 \leq 2\bar{t} \Big)$;
    \item $ \mathbb{Q}_{\eta,\delta}(\cdot) = \mathbb{P}\Big(  (\eta W^{\uparrow}_1(\eta,\delta),\cdots,\eta W^{\uparrow}_{l^*}(\eta,\delta),\eta T^{\uparrow}_2(\eta,\delta),\cdots,\eta \sum_{j = 2}^{l^*} T^{\uparrow}_{j}(\eta,\delta) )   \in \cdot  \Big)$.
\end{itemize}
Now we study the weak convergence of $W^{\uparrow}_1,T^{\uparrow}_1$:
\begin{itemize}
    \item Due to the regularly varying nature of distribution of $Z_1$ (hence for $W_1$), we know that: for any $x > \delta$,
    $$\lim_{\eta \downarrow 0} \mathbb{P}\Big( \eta W_1 > x \ \Big|\ \eta|W_{l^*}| > \delta \Big) =  p_+\frac{ \delta^\alpha}{x^\alpha}, \ \ \lim_{\eta \downarrow 0} \mathbb{P}\Big( \eta W_1 < -x \ \Big|\ \eta|W_{l^*}| > \delta \Big) = p_-\frac{ \delta^\alpha}{x^\alpha};$$
    therefore, $W^{\uparrow}_1(\eta,\delta)$ weakly converges to a (randomly signed) Pareto RV that admits the density
    $$\nu_{\alpha,\delta}(dx) = \mathbbm{1}\{x > 0\}p_+\frac{\alpha\delta^\alpha}{x^{\alpha + 1}} +  \mathbbm{1}\{x < 0\}p_-\frac{\alpha\delta^\alpha}{|x|^{\alpha + 1}}$$
    as $\eta \downarrow 0$;
    \item For any $x \in [0,2\bar{t}]$, since $\lim_{\eta \downarrow 0}\floor{x/\eta}H(\delta/\eta) = 0$, it is easy to show that
    $$\lim_{\eta \downarrow 0}\frac{1 - (1 - H(\delta/\eta))^{\floor{x/\eta}} }{ \floor{x/\eta}H(\delta/\eta)  } = 1;$$
    therefore, we have (for any $x \in (0,2\bar{t}]$)
    \begin{align*}
        \mathbb{P}(\eta T_1 \leq x\ |\ \eta T_1 \leq 2\bar{t}) & = \frac{ 1 - (1 - H(\delta/\eta))^{\floor{x/\eta}}  }{1 - (1 - H(\delta/\eta))^{\floor{2\bar{t}/\eta}}  } \rightarrow \frac{x}{2\bar{t}}
    \end{align*}
    as $\eta\downarrow 0$, which implies that $T^{\uparrow}_1$ converges weakly to a uniform RV on $[0,2\bar{t}]$.
\end{itemize}
Let us denote the weak limit of measure $\mathbb{Q}_{\eta,\delta}$ as $\mu_{\delta,2\bar{t}}$. In the discussion before the Lemma we have shown that, for any $(\textbf{w},\textbf{t}) \in E(\epsilon)$ (with $\delta \in (0,\bar{\delta})$), we have $|w_i| \geq \bar{\delta}$ and $|t^\prime_i|\leq 2\bar{t}$; since we require $\delta < \bar{\delta}$, by definition of measures $\mu$ and $\mu_{\delta,2\bar{t}}$ we have
$$ \mu_{\delta,2\bar{t}}\big(E(\epsilon)\cap h^{-1}(A) \big) =\frac{ \delta^{\alpha l^*} }{ (2\bar{t})^{l^* - 1} }\cdot\mu\big(E(\epsilon)\cap h^{-1}(A) \big).$$ 

For simplicity of notations, we let $E(\epsilon,A)\delequal{} E(\epsilon)\cap h^{-1}(A)$. By definition of the set $E(\epsilon)$, we have (recall that $A$ is an open interval $(a_1,a_2)$ that does not overlap with $[s_- + \bar{\epsilon},s_+ - \bar{\epsilon}]$)
\begin{align*}
    E(\epsilon,A) & = h^{-1}\big( (-\infty,s_- - \epsilon) \cup (s_+ + \epsilon,\infty) \big) \cap h^{-1}\big( (a_1,a_2) \big)
    \\
    & = h^{-1}\Big( \big( (-\infty,s_- - \epsilon) \cup (s_+ + \epsilon,\infty) \big)\cap(a_1,a_2)  \Big)
    \\
    & = h^{-1}\big( F(\epsilon,a_1,a_2) \big)
\end{align*}
where $F(\epsilon,a_1,a_2) \delequal{} \big( (-\infty,s_- - \epsilon) \cup (s_+ + \epsilon,\infty) \big)\cap(a_1,a_2).$ Meanwhile, it is easy to see that $h$ is a continuous mapping, hence
\begin{align*}
    (\textbf{w},\textbf{t}) \in \partial E(\epsilon,A) \Rightarrow h(\textbf{w},\textbf{t}) \in \{ s_- +\epsilon, s_+ - \epsilon, a_1, a_2\}.
\end{align*}
Fix some $s$ with $s \neq \pm L, |s| > (l^*-1)b + \bar{\epsilon}$. For any fixed real numbers $t_2,\cdots,t_{l^* - 1}$, $w_1,\cdots,w_{l^*}$, if $ h( w_1,\cdots,w_{l^*},t_2,\cdots,t_{l^*-1},t) = s$, then since $\widetilde{\textbf{x}}( t_{l^*}-1,0; w_1,\cdots,w_{l^*},t_2,\cdots,t_{l^*-1},t ) \in [s - b, s + b]$, due to Assumption 1 (in particular, there is no point $x$ on this interval with $|f^\prime(x)| \leq c_0$ ), there exists at most one possible $t$ that makes $ h( w_1,\cdots,w_{l^*},t_2,\cdots,t_{l^*-1},t) = s$. Therefore, let $W^*_j$ be iid RVs from law $\nu_{\alpha,\delta}$ defined above, and $(T^{*,\prime}_j)_{j \geq 2}$ be iid RVs from Unif$[0,\bar{2t}]$, $T^*_0 = 0, T^*_k = \sum_{j = 2}^k T^{*,\prime}_j$. By conditioning on all $W^*_j$ and all $T^{*,\prime}_2,\cdots,T^{*,\prime}_{l^* - 1}$, we must have
\begin{align}
    &\mathbb{P}\Big( h( W^*_1,\cdots,W^*_j,T^*_2,\cdots,T^*_{l^*}  ) = s\ \Big| \ W^*_1 = dw_1,\cdots,W^*_{l^*} = dw_{l^*}, \nonumber
    \\
    &\ \ \ \ \ \ \ \ \ \ \ \ \ \ \ \ \ T^{*,\prime}_2 = dt_2,\cdots, T^{*,\prime}_{l^*-1} = dt_{l^*-1}\Big) = 0 \label{proof overflow conditional prob argument}
\end{align}
which implies
\begin{align*}
     \mathbb{P}\Big( h( W^*_1,\cdots,W^*_j,T^*_2,\cdots,T^*_{l^*}  ) = s\Big) = 0
\end{align*}
hence
\begin{align*}
    \mu\Big( \partial E(\epsilon,A) \Big) = 0.
\end{align*}
By Portmanteau theorem (see Theorem 2.1 of \cite{billingsley2013convergence}) we have
$$\lim_{\eta \downarrow 0} \mathbb{Q}_{\eta,\delta}(E(\epsilon,A)) = \mu_{\delta,2\bar{t}}(E(\epsilon,A)).$$

Collecting the results we have and using \cref{proofCalculatePDeltaEpsilon}, we can see that
\begin{align*}
 & \limsup_{\eta \downarrow 0}\frac{p(\epsilon,\delta,\eta;A)}{ \Big( \frac{H(1/\eta)}{\eta} \Big)^{l^* - 1}\delta^\alpha  } 
 \\
 = & \limsup_{\eta \downarrow 0}\frac{(2\bar{t})^{l^* - 1}\cdot p(\epsilon,\delta,\eta;A)}{\delta^{\alpha l^*} \cdot \Big( \mathbb{P}(T^\prime_1 \leq 2\bar{t}/\eta ) \Big)^{l^* - 1}  }\cdot \Big( \frac{\delta^\alpha}{2\bar{t}}\cdot \frac{\mathbb{P}(T^\prime_1 \leq 2\bar{t}/\eta )}{H(1/\eta)/\eta}  \Big)^{l^* - 1} 
 \\
 \leq & \limsup_{\eta \downarrow 0}\frac{(2\bar{t})^{l^* - 1}\cdot p(\epsilon,\delta,\eta;A)}{\delta^{\alpha l^*} \cdot \Big( \mathbb{P}(T^\prime_1 \leq 2\bar{t}/\eta) \Big)^{l^* - 1}  } \cdot \limsup_{\eta \downarrow 0}\Big( \frac{\delta^\alpha}{2\bar{t}}\cdot \frac{\mathbb{P}(T^\prime_1 \leq 2\bar{t}/\eta )}{H(1/\eta)/\eta}  \Big)^{l^* - 1} 
 \\
 \leq & \mu(E(\epsilon,A))\cdot  \limsup_{\eta \downarrow 0}\Big( \frac{\delta^\alpha}{2\bar{t}}\cdot \frac{\mathbb{P}(T^\prime_1 \leq 2\bar{t}/\eta )}{H(1/\eta)/\eta}  \Big)^{l^* - 1} .
\end{align*}
Fix some $\kappa > 1$. From Lemma \ref{lemmaGeomFront} and the regularly varying nature of function $H$, we get
\begin{align*}
    \limsup_{\eta \downarrow 0}\Big( \frac{\delta^\alpha}{2\bar{t}}\cdot \frac{\mathbb{P}(T^\prime_1 \leq 2\bar{t}/\eta )}{H(1/\eta)/\eta}  \Big)^{l^* - 1} & \leq \kappa^{l^* - 1}\limsup_{\eta \downarrow 0}\Big( \frac{\delta^\alpha}{2\bar{t}}\cdot \frac{2\bar{t}H(\delta/\eta)/\eta}{H(1/\eta)/\eta} \Big)^{l^* - 1} = \kappa^{l^* - 1}.
\end{align*}
Due to the arbitrariness of $\kappa > 1$, we have established that
$$ \limsup_{\eta \downarrow 0}\frac{p(\epsilon,\delta,\eta;A)}{ \Big( \frac{H(1/\eta)}{\eta} \Big)^{l^* - 1}\delta^\alpha  } \leq \mu(E(\epsilon)).$$
The lower bound can be shown by an argument symmetric to the one for upper bound.
\end{proof}

The following result is an immediate corollary of Lemma \ref{lemmaOverflowProb}.

\begin{corollary} \label{corollary overflow transition prob}
For any $\epsilon \in (-\bar{\epsilon},\bar{\epsilon}),\delta \in (0,b\wedge \bar{\delta})$, we have
\begin{align*}
  \lim_{\eta \downarrow  0} \frac{ p(\epsilon,\delta,\eta) }{\delta^\alpha \Big(\frac{H(1/\eta)}{\eta} \Big)^{l^* - 1} } = \mu\big(E(\epsilon)\big)
\end{align*}
where $\mu$ is the measure defined in \cref{defMuMeasure}, and $p(\cdot,\cdot,\cdot)$ is the conditional probability defined in \cref{def overflow conditional probability}.
\end{corollary}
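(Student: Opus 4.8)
### Proof proposal for Corollary~\ref{corollary overflow transition prob}

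The plan is to reduce the unconditional overflow probability $p(\epsilon,\delta,\eta)$ to the endpoint-conditioned probabilities $p(\epsilon,\delta,\eta;A)$ already controlled by Lemma~\ref{lemmaOverflowProb}, and then take a limit. First I would observe that, by the very definition of $(\epsilon,\delta,\eta)$-overflow (Definition~\ref{definitionOverflow}) and the remarks following \eqref{defSetE}, whenever $Z_n$ has $(\epsilon,\delta,\eta)$-overflow its overflow endpoint $h(\mathbf w,\mathbf t)$ must lie outside $[s_--\epsilon, s_++\epsilon]$, and moreover it cannot be more than $l^* b$ away from the local minimum $0$; since the iterates are confined to $[-L,L]$ by the projection $\varphi_L$, the endpoint in fact lies in the bounded set $[-L,s_--\epsilon]\cup[s_++\epsilon,L]$. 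Hence $p(\epsilon,\delta,\eta)=p(\epsilon,\delta,\eta;A^-)+p(\epsilon,\delta,\eta;A^+)$ where $A^-$ and $A^+$ are the two open intervals whose union covers this set up to the two endpoints $\pm L$ and the two interior boundary points $s_\pm\mp\epsilon$ (which form a $\mu$-null set of endpoints, as shown inside the proof of Lemma~\ref{lemmaOverflowProb}).

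The next step is purely a matter of applying Lemma~\ref{lemmaOverflowProb} to each of $A^-$ and $A^+$: one needs to check the hypotheses $|a_1|\wedge|a_2|>r-\bar\epsilon$ and $|a_1|\neq L$, $|a_2|\neq L$. Choosing $A^\pm$ with endpoints at $s_\pm\mp\epsilon'$ for a slightly larger $\epsilon'$ (or arguing directly that the finitely many ``bad'' endpoint values $\pm L$ carry no overflow mass, by the same one-dimensional argument as in \eqref{proof overflow conditional prob argument} that $h(\cdot)$ hits a fixed point with probability zero) handles the boundary constraint; the condition $|s_\pm|\ge r>r-\bar\epsilon$ is immediate from the definition of $r$ in \eqref{def distance attraction field}. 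Then
\begin{align*}
\lim_{\eta\downarrow0}\frac{p(\epsilon,\delta,\eta)}{\delta^\alpha\big(H(1/\eta)/\eta\big)^{l^*-1}}
=\mu\big(E(\epsilon)\cap h^{-1}(A^-)\big)+\mu\big(E(\epsilon)\cap h^{-1}(A^+)\big)
=\mu\big(E(\epsilon)\big),
\end{align*}
where the last equality uses that $E(\epsilon)=h^{-1}\big((-\infty,s_--\epsilon)\cup(s_++\epsilon,\infty)\big)$ by \eqref{defSetE}, that on $E(\epsilon)$ the endpoint is automatically within $[-L,L]$ (so $h^{-1}(A^-)\cup h^{-1}(A^+)\supseteq E(\epsilon)$), and that the overlap $h^{-1}(\{a_1,a_2,\pm L\})$ is $\mu$-null.

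I expect no genuine obstacle here — the corollary really is a bookkeeping consequence of Lemma~\ref{lemmaOverflowProb}. The only point requiring a little care is the decomposition of the (unbounded-looking) event $\{V_0$ has overflow$\}$ into endpoint-conditioned pieces over \emph{bounded} intervals: this is where one must invoke the clipping/projection structure (each of the $l^*$ jumps contributes at most $b$ to the displacement, and the running path is reflected at $\pm L$) to guarantee the overflow endpoint is trapped in a compact set, so that finitely many intervals $A^\pm$ suffice and $\mu(E(\epsilon))<\infty$. Once that is in place, the limit is additive over the finitely many intervals and the identity $\mu(E(\epsilon)\cap h^{-1}(A^-))+\mu(E(\epsilon)\cap h^{-1}(A^+))=\mu(E(\epsilon))$ closes the argument.
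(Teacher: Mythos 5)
Your overall strategy — split the unconditional overflow probability into two endpoint-conditioned pieces and apply Lemma~\ref{lemmaOverflowProb} to each — is the intended reading of the paper's ``immediate corollary,'' and the arithmetic $\mu\big(E(\epsilon)\cap h^{-1}(A^-)\big)+\mu\big(E(\epsilon)\cap h^{-1}(A^+)\big)=\mu\big(E(\epsilon)\big)$ is fine once the intervals are chosen correctly. But the treatment of the boundary points $\pm L$ contains a real error. You assert that the ``bad'' endpoint values $\pm L$ carry no overflow mass ``by the same one-dimensional argument as in \eqref{proof overflow conditional prob argument}.'' That argument is explicitly stated for $s\neq\pm L$, and for good reason: the clipping operator $\varphi_L$ in \eqref{defOdeJumpClipping3} produces genuine atoms at $\pm L$. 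For any fixed $(w_1,\ldots,w_{l^*-1},\mathbf t)$, the set of $w_{l^*}$ forcing $\hat{\mathbf x}(t_{l^*}^-)+\varphi_b(w_{l^*})\le -L$ is a half-line of positive $\nu_\alpha$-measure, so $\mu\big(h^{-1}(-L)\big)>0$ in general. Consequently, with intervals of the form $A^-=(-L,\,s_--\epsilon)$ and $A^+=(s_++\epsilon,\,L)$, the equality $p(\epsilon,\delta,\eta)=p(\epsilon,\delta,\eta;A^-)+p(\epsilon,\delta,\eta;A^+)$ can fail (the overflow endpoint $h=\pm L$ is missed), and moreover the hypothesis $|a_1|\neq L,\ |a_2|\neq L$ of Lemma~\ref{lemmaOverflowProb} is violated. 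Your proposed remedy of shifting the \emph{interior} endpoints to $s_\pm\mp\epsilon'$ does not touch either problem.

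The correct and simple fix is to push the \emph{outer} endpoints strictly outside $[-L,L]$: take, say, $A^-=(-(L+1),\,s_--\epsilon)$ and $A^+=(s_++\epsilon,\,L+1)$. Then $\pm L$ lie in the \emph{interior} of the intervals, so the atoms at $\pm L$ are included and the decomposition $p(\epsilon,\delta,\eta)=p(\epsilon,\delta,\eta;A^-)+p(\epsilon,\delta,\eta;A^+)$ is exact (because $h$ takes values in $[-L,L]$ and overflow means $h\notin[s_--\epsilon,\,s_++\epsilon]$); the hypotheses $|a_1|\wedge|a_2|>r-\bar\epsilon$ and $|a_1|,|a_2|\neq L$ are met because $|{\pm(L+1)}|=L+1$ and $|s_\pm\mp\epsilon|\ge r+\epsilon>r-\bar\epsilon$. (Equally, one may note that the proof of Lemma~\ref{lemmaOverflowProb} already records the computation \eqref{proofCalculatePDeltaEpsilon} for $p(\epsilon,\delta,\eta)$ itself, and the Portmanteau step only requires $\mu\big(\partial E(\epsilon)\big)=0$; since $\partial E(\epsilon)\subseteq h^{-1}\big(\{s_--\epsilon,\,s_++\epsilon\}\big)$ — these are the \emph{only} points of $\partial F$ — the $\pm L$ atoms never enter the boundary estimate and no decomposition is needed at all.)
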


Define the following stopping times:
\begin{align}
    \sigma(\eta) & = \min\{n \geq 0: X^\eta_n \notin \Omega\}; \label{def first exit time sigma} \\
    R(\epsilon,\delta,\eta) & = \min\{n \geq T^\eta_1(\delta): X^\eta_n \in [-2\epsilon,2\epsilon] \}. \label{def first return time R}
\end{align}
$\sigma$ indicate the time that the iterates escape the current attraction field, while $R$ denotes the time the SGD iterates return to a small neighborhood of the local minimum after first exit from this small neighborhood. In the next few results, we study the probability of several atypical scenarios when SGD iterates make attempts to escape $\Omega$ or return to local minimum after the attempt fails. First, we show that, when starting from the local minimum, it is very unlikely to escape with less than $l^*$ big jumps.
\begin{lemma} \label{lemma atypical 1 exit before l star jumps}
Given $\epsilon \in (0,\bar{\epsilon}), N > 0$, the following claim holds for any sufficiently small $\delta > 0$:
\begin{align*}
    \sup_{ x \in [-2\epsilon,2\epsilon]}\mathbb{P}_x\Big( \sigma(\eta) < R(\epsilon,\eta),\ \sigma(\eta)<T^\eta_{l^*}(\delta) \Big) = o(\eta^N)
\end{align*}
as $\eta \downarrow 0$.
\end{lemma}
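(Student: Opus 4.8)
The plan is to decompose the event $\{\sigma(\eta) < R(\epsilon,\eta),\ \sigma(\eta) < T^\eta_{l^*}(\delta)\}$ according to how many large jumps have occurred before the escape, and to show that for each possible count $0,1,\dots,l^*-1$ the corresponding sub-event is $o(\eta^N)$. The key geometric fact, already recorded in \eqref{assumption multiple jump epsilon 0 constant 1}--\eqref{def bar delta} and the remarks around \eqref{defSetE}, is that with fewer than $l^*$ shocks of size at most $b$ (the gradient clipping truncates every step to magnitude $\le b$) the SGD iterates simply cannot cover the radius $r = \min\{-s_-,s_+\}$ to exit $\Omega$, provided the small-noise fluctuations stay below a fixed tolerance. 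So the escape-with-$<l^*$-jumps event forces one of the small-noise large-deviation events to fail.

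Concretely, first I would fix $\beta \in (1,\alpha)$ and apply Lemma~\ref{lemma prob event A} (together with Corollary~\ref{corollary no LD before large jump}) to get that, for $\delta$ small enough, the ``small-noise aggregate'' stays below a prescribed $\widetilde\epsilon$ on each inter-arrival block between consecutive large jumps, except on an event of probability $o(\eta^N)$; here one also uses Lemma~\ref{lemmaGeomDistTail} to control $T^\eta_{l^*}(\delta)$ by $1/\eta^{\alpha + \Delta\alpha \cdot l^*}$ up to $o(\exp(-1/\eta^\theta))$ error, so only finitely many blocks of length $O(1/\eta^\beta)$ need to be handled and the union bound over them is still $o(\eta^N)$. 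Next, on the complement of that negligible event, I would invoke the gradient-descent comparison lemmas: Corollary~\ref{corollary sgd gd gap} shows $X^\eta_n$ stays $O(\widetilde\rho\,\epsilon)$-close to the process $Y^\eta_n$ perturbed only by the (at most $l^*-1$) large jumps, and Lemma~\ref{lemmaBasicGronwall}/Lemma~\ref{lemma Ode Gd Gap} relate $Y^\eta_n$ to the perturbed ODE $\widetilde{\textbf{x}}^\eta$ with the same jumps. By \eqref{assumption multiple jump epsilon 0 constant 1} the ODE started at $m_i$ with $\le l^*-1$ shocks of size $\le b$ never leaves $[s_- - \bar\epsilon, s_+ + \bar\epsilon]$ (indeed it stays $100 l^*\bar\epsilon$ away from the boundary), so for $\epsilon$ small enough the $O(\widetilde\rho\,\epsilon)$-perturbed SGD cannot be outside $\Omega$; hence on this good event $\sigma(\eta) \ge T^\eta_{l^*}(\delta)$, contradicting the event whose probability we are bounding. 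Assembling, the probability in question is bounded by the $o(\eta^N)$ probability of the bad small-noise event plus the $o(\exp(-1/\eta^\theta))$ tail of $T^\eta_{l^*}(\delta)$, which gives the claim uniformly over $x \in [-2\epsilon, 2\epsilon]$.

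The main obstacle I anticipate is bookkeeping near the reflecting boundary $\pm L$: if the true minimum $m_i$ sits in the leftmost or rightmost attraction field, the comparison with the unreflected ODE must be replaced by Lemma~\ref{lemma no large noise escape the boundary} and its right-boundary analogue, and one has to check that the reflection operator $\varphi_L$ only helps (pushes back into the field) rather than enabling a spurious exit. The other delicate point is choosing the accuracy parameter $\epsilon$ small relative to the geometric constants $\bar\epsilon,\bar\delta,\bar t$ and the amplification factors $\bar\rho,\widetilde\rho$ \emph{before} sending $\eta,\delta\to0$, so that the $O(\widetilde\rho\,\epsilon)$ slack never reaches the boundary; this is exactly the order of quantifiers in the statement (``for any sufficiently small $\delta$'', with $\epsilon$ fixed first), and it must be respected throughout the inductive block-by-block argument.
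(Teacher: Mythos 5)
Your overall structure is the right one, and the geometric ingredient you identify (the constraint $(l^*-1)b + 100l^*\bar\epsilon < r - 100l^*\bar\epsilon$ from \eqref{assumption multiple jump epsilon 0 constant 1}) is exactly what makes the argument work. However, there is a genuine gap in how you invoke the comparison lemmas, and it concerns precisely the point you flag only in passing.

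Corollary~\ref{corollary sgd gd gap}, which you use to keep $X^\eta_n$ close to the process $Y^\eta_n$ perturbed only by the large jumps, is stated \emph{conditional on the event $A_0 \cap B_0$}, where $B_0$ requires the inter-arrival times $T^\eta_j(\delta)-T^\eta_{j-1}(\delta)$ to be bounded by $2\bar t/\eta$. You never explain why $B_0$ should hold on the event $\{\sigma(\eta) < R(\epsilon,\eta),\ \sigma(\eta) < T^\eta_{l^*}(\delta)\}$, and it is not automatic: the large-jump inter-arrivals are geometrically distributed with mean $\approx 1/H(\delta/\eta) \gg 1/\eta$, so long gaps are \emph{typical}, not rare. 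Nor can one simply discard the long-gap case as a tail event, because the exponential blow-up $\exp(\eta M n)$ in Lemma~\ref{lemmaBasicGronwall}/Lemma~\ref{lemma Ode Gd Gap} makes the SGD-vs-ODE tolerance degrade faster than the small-noise bound from Lemma~\ref{lemma prob event A} improves. The paper's proof handles this by a different and more delicate argument: on the event $\{\sigma(\eta)<R(\epsilon,\eta)\}$, a long inter-arrival cannot occur, because if $T^\eta_{j+1}(\delta) - T^\eta_j(\delta) > 2\hat t(\epsilon)/\eta$ then the gradient flow (followed closely by $X^\eta_n$ under the small-noise event) has time to pull the iterate all the way back into $[-2\epsilon,2\epsilon]$, forcing $R(\epsilon,\eta) \le T^\eta_{j+1}(\delta) < \sigma(\eta)$, a contradiction. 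This inductive ``short inter-arrival or early return'' dichotomy (leading to bound~\eqref{proof atypical 1 inductive bound}) is what is missing from your plan; it is the pivot of the proof, not a bookkeeping afterthought. Once it is in place, the paper does not actually need Corollary~\ref{corollary sgd gd gap} or the perturbed ODE at all — it runs a direct per-jump displacement recursion $|X^\eta_{T^\eta_j}| \le |X^\eta_{T^\eta_{j-1}}| + b + \epsilon + \bar\epsilon$ using only Lemma~\ref{lemma Ode Gd Gap} and Lemma~\ref{lemma SGD GD gap}, which is cleaner than routing through $Y^\eta_n$ and $\widetilde{\mathbf x}^\eta$. You should also use Lemma~\ref{lemma stuck at local minimum before large jump} (not just Lemma~\ref{lemma prob event A}) to control the pre-first-jump phase, since that lemma already packages the blockwise iteration you sketch; Lemma~\ref{lemmaGeomDistTail} is not needed for this particular lemma.
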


\begin{proof}
Based on the given $\epsilon > 0$, fix some $\widetilde{\epsilon} = \frac{ \epsilon }{4 \exp(2M\hat{t}(\epsilon)) }$. 
Recall the definition of $\hat{t}(\epsilon)$ in \cref{def function hat t}.

First, using Lemma \ref{lemma stuck at local minimum before large jump}, we know that for sufficiently small $\delta$, we have
\begin{align}
    \sup_{x \in [-2\epsilon,2\epsilon]}\mathbb{P}\Big( A^\times_1(\epsilon,\delta,\eta) \Big) = o(\eta^{N}) \label{proof atypical 1 bound rare event 1}
\end{align}
where
\begin{align*}
    A^\times_1(\epsilon,\delta,\eta) = \Big\{ \exists n < T^\eta_1(\delta)\ s.t.\ |X^\eta_n(x)| > 3\epsilon \Big\}.
\end{align*}

Define event $$A^\times_2(\widetilde{\epsilon},\delta,\eta) \delequal{} \Big\{ \exists j = 2,\cdots,l^*\ s.t.\ \max_{k = 1,2,\cdots,T^\eta_j(\delta) -T^\eta_{j-1}(\delta) - 1 } \eta|Z_{ T^\eta_{j-1}(\delta) + 1 }^{\leq \delta,\eta} + \cdots + Z^{\leq \delta,\eta}_{ T^\eta_{j-1}(\delta) + k }| > \widetilde{\epsilon} \Big\}.$$

From Lemma \ref{lemma prob event A}, we know that for sufficiently small $\delta > 0$,
\begin{align}
    \mathbb{P}\Big( A^\times_2(\widetilde{\epsilon},\delta,\eta)  \Big) = o(\eta^N). \label{proof atypical 1 bound rare event 2}
\end{align}
From now on, we only consider such $\delta$ that \cref{proof atypical 1 bound rare event 1}\cref{proof atypical 1 bound rare event 2} hold.

On event $\Big( A^\times_1 \cup A^\times_2 \Big)^c \cap \{ \sigma(\eta) < R(\epsilon,\eta) \} \cap \{ \sigma(\eta) > T_1^\eta(\delta) \}$, we must have $\sigma(\eta) > T^\eta_1(\delta)$ and 
\begin{align*}
    T^\eta_2(\delta)\wedge \sigma(\eta) - T_1^\eta(\delta) < 2\hat{t}(\epsilon)/\eta.
\end{align*}
Otherwise, due to Lemma \ref{lemma Ode Gd Gap} and \ref{lemma SGD GD gap}, we know that at step $\widetilde{t} = T_1^\eta(\delta) + \floor{\hat{t}(\epsilon)/\eta}$, we have
\begin{align*}
    |X^\eta_{ \widetilde{t} } | < 2\epsilon, \text{ and } |X^\eta_{ n} | \leq \bar{\epsilon} \ \ \forall n \leq \widetilde{t}
\end{align*}
for any sufficiently small $\eta$.
By repeating this argument inductively, we obtain the following result: define
\begin{align*}
    J = \min\{ j = 1,2,\cdots: \sigma(\eta) \in [T^\eta_j(\delta),T^\eta_{j+1}(\delta) ) \},
\end{align*}
then on event $\Big( A^\times_1 \cup A^\times_2 \Big)^c \cap \{ \sigma(\eta) < R(\epsilon,\eta),\ \sigma(\eta)<T^\eta_{l^*}(\delta) \}$, we must have
\begin{align}
    T^\eta_{j}(\delta)\wedge \sigma(\eta) - T_{j-1}^\eta(\delta)\wedge \sigma(\eta) < 2\hat{t}(\epsilon)/\eta \ \ \forall j=2,3,\cdots,J. \label{proof atypical 1 inductive bound}
\end{align}
Furthermore, using this bound and Lemma \ref{lemma SGD GD gap}, we know that on event $\Big( A^\times_1 \cup A^\times_2 \Big)^c \cap \{ \sigma(\eta) < R(\epsilon,\eta),\ \sigma(\eta)<T^\eta_{l^*}(\delta) \}$,
\begin{itemize}
    \item $ |X^\eta_{T^\eta_j(\delta)}| \leq |X^\eta_{T^\eta_{j-1}(\delta)}| + b + \epsilon + \bar{\epsilon} $ for all $j=2,3,J-1$,
    \item $ |X^\eta_{\sigma(\eta)}| \leq |X^\eta_{T^\eta_{J-1}(\delta)}| + \epsilon + \bar{\epsilon} $
\end{itemize}
However, this implies 
\begin{align*}
    |X^\eta_{\sigma(\eta)}| \leq l^*(\bar{\epsilon} + \epsilon) + (l^*-1)b < r
\end{align*}
and contradicts the definition of $\sigma(\eta)$. In summary, 
\begin{align*}
    \sup_{ x \in [-2\epsilon,2\epsilon]}\mathbb{P}_x\Big( \sigma(\eta) < R(\epsilon,\eta),\ \sigma(\eta)<T^\eta_{l^*}(\delta) \Big) \leq \mathbb{P}\Big( A^\times_1(\epsilon,\delta,\eta) \cup A^\times_2(\widetilde{\epsilon},\delta,\eta) \Big) = o(\eta^N).
\end{align*}
\end{proof}

The following two results follow immediately from the proof above, especially the inductive argument leading to bound \cref{proof atypical 1 inductive bound}, and we state them without repeating the deatils of the proof.
\begin{corollary} \label{corollary atypical 1}
Given $\epsilon \in (0,\bar{\epsilon}), N > 0$, the following claim holds for any sufficiently small $\delta > 0$:
\begin{align*}
    & \sup_{ x \in [-2\epsilon,2\epsilon]}\mathbb{P}_x\Big( T^\eta_{l^*}(\delta) \leq \sigma(\eta)\wedge R(\epsilon,\eta),\ \text{and } \exists j = 2,3,\cdots,l^*\ s.t.\ T^\eta_j(\delta) - T^\eta_{j-1}(\delta) > 2\hat{t}(\epsilon)/\eta \Big)
    \\
    = & o(\eta^N)\ \ \ \ \ \text{as }\eta \downarrow 0.
\end{align*}
\end{corollary}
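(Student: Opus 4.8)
The plan is to observe that Corollary \ref{corollary atypical 1} is extracted directly from the internal machinery of the proof of Lemma \ref{lemma atypical 1 exit before l star jumps}, so almost no new work is required --- one simply re-reads that proof and isolates a slightly different conclusion from the same chain of estimates. First I would recall the two "bad" events introduced there: $A^\times_1(\epsilon,\delta,\eta)$, on which the SGD iterate leaves $[-3\epsilon,3\epsilon]$ before the first large jump $T^\eta_1(\delta)$, and $A^\times_2(\widetilde\epsilon,\delta,\eta)$ with $\widetilde\epsilon = \epsilon/(4\exp(2M\hat t(\epsilon)))$, on which the accumulated small-noise displacement between some pair of consecutive large jumps among the first $l^*$ of them exceeds $\widetilde\epsilon$. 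By Lemma \ref{lemma stuck at local minimum before large jump} and Corollary \ref{corollary no LD before large jump} respectively, both have probability $o(\eta^N)$ for all sufficiently small $\delta>0$, uniformly in the starting point $x\in[-2\epsilon,2\epsilon]$.

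The core step is the deterministic (pathwise) claim: \emph{on the complement} $\big(A^\times_1\cup A^\times_2\big)^c$, if additionally $T^\eta_{l^*}(\delta)\le \sigma(\eta)\wedge R(\epsilon,\eta)$, then every inter-arrival gap satisfies $T^\eta_j(\delta)-T^\eta_{j-1}(\delta)\le 2\hat t(\epsilon)/\eta$ for $j=2,\dots,l^*$. This is exactly the inductive argument that produces \eqref{proof atypical 1 inductive bound} in the proof of Lemma \ref{lemma atypical 1 exit before l star jumps}: between two large jumps the iterate obeys (up to the $\widetilde\epsilon$ error controlled off $A^\times_2$) the gradient flow, which by the definition of $\hat t(\epsilon)$ in \eqref{def function hat t} and Lemma \ref{lemma Ode Gd Gap}, Lemma \ref{lemma SGD GD gap} returns to a $2\epsilon$-neighborhood of the local minimum within $\hat t(\epsilon)/\eta$ steps; but as long as $n<R(\epsilon,\eta)$ and $n<\sigma(\eta)$, the iterate has not yet re-entered $[-2\epsilon,2\epsilon]$ and has not exited $\Omega$, so a gap exceeding $2\hat t(\epsilon)/\eta$ would force a return to $[-2\epsilon,2\epsilon]$ strictly before $R(\epsilon,\eta)$ --- a contradiction with $T^\eta_{l^*}(\delta)\le R(\epsilon,\eta)$ once one also uses that off $A^\times_1$ the iterate stays in $[-3\epsilon,3\epsilon]$ up to $T^\eta_1(\delta)$. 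Contrapositively, the event in the statement forces $\big(A^\times_1\cup A^\times_2\big)$ to occur (given $T^\eta_{l^*}(\delta)\le\sigma(\eta)\wedge R(\epsilon,\eta)$), so
\begin{align*}
\sup_{x\in[-2\epsilon,2\epsilon]}\mathbb{P}_x\Big(&T^\eta_{l^*}(\delta)\le\sigma(\eta)\wedge R(\epsilon,\eta),\ \exists j\in\{2,\dots,l^*\}:\ T^\eta_j(\delta)-T^\eta_{j-1}(\delta)>2\hat t(\epsilon)/\eta\Big)\\
&\le \sup_{x\in[-2\epsilon,2\epsilon]}\mathbb{P}_x\big(A^\times_1(\epsilon,\delta,\eta)\cup A^\times_2(\widetilde\epsilon,\delta,\eta)\big)=o(\eta^N),
\end{align*}
which is the assertion.

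The only point needing care --- and the step I expect to be the genuine (if modest) obstacle --- is making the pathwise contradiction airtight with the correct $\epsilon$-bookkeeping: one must verify that the error terms $\epsilon$, $\bar\epsilon$, $\widetilde\epsilon$ compound over at most $l^*$ excursions without swamping the margin, and that the comparison windows in Lemma \ref{lemma SGD GD gap} (which hold only for $\eta$ small and require the iterate to stay $2\epsilon$-away from $s_\pm$ and from $\pm L$) genuinely cover each segment $[T^\eta_{j-1}(\delta),T^\eta_j(\delta))$. All of this is already present in the proof of Lemma \ref{lemma atypical 1 exit before l star jumps}; since that lemma is assumed available, I would simply cite "the inductive argument leading to \eqref{proof atypical 1 inductive bound}" rather than reproduce it, which is exactly the shortcut the paper takes.
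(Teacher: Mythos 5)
Your proposal is correct and matches the paper's intent exactly: the paper itself states Corollary~\ref{corollary atypical 1} (and Corollary~\ref{corollary atypical 2}) ``follow immediately from the proof above, especially the inductive argument leading to bound~\eqref{proof atypical 1 inductive bound}'' and omits a standalone proof, which is precisely the derivation you have spelled out. The inclusion of the target event inside $A^\times_1(\epsilon,\delta,\eta)\cup A^\times_2(\widetilde\epsilon,\delta,\eta)$ (with the same $\widetilde\epsilon$), controlled by Lemma~\ref{lemma stuck at local minimum before large jump} and Corollary~\ref{corollary no LD before large jump}, is exactly the mechanism the paper relies on.
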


\begin{corollary} \label{corollary atypical 2}
Given $\epsilon \in (0,\bar{\epsilon}), N > 0$, the following claim holds for any sufficiently small $\delta > 0$:
\begin{align*}
    \sup_{ x \in [-2\epsilon,2\epsilon]}\mathbb{P}_x\Big( R(\epsilon,\eta) < T^\eta_{l^*}(\delta)\wedge \sigma(\eta),\ R(\epsilon,\eta) - T^\eta_1(\delta) > 2l^*\hat{t}(\epsilon)/\eta \Big) = o(\eta^N)
\end{align*}
as $\eta \downarrow 0$.
\end{corollary}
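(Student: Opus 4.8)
The plan is to reproduce the proof of Lemma~\ref{lemma atypical 1 exit before l star jumps} almost verbatim, with the first-exit time $\sigma(\eta)$ replaced throughout by the return time $R(\epsilon,\eta)$, and with the extra observation that the constraint $R(\epsilon,\eta)<T^\eta_{l^*}(\delta)$ bounds the number of large jumps occurring in $[T^\eta_1(\delta),R(\epsilon,\eta)]$ by $l^*-1$. Fix $\epsilon\in(0,\bar\epsilon)$ and $N>0$, put $\widetilde\epsilon=\epsilon/\big(4\exp(2M\hat t(\epsilon))\big)$, and work on the complement of the two atypical events used in that proof: $A^\times_1(\epsilon,\delta,\eta)=\{\exists\,n<T^\eta_1(\delta):\ |X^\eta_n(x)|>3\epsilon\}$ and $A^\times_2(\widetilde\epsilon,\delta,\eta)$, the event that some small-noise partial sum over one of the first $l^*$ inter-arrival windows of large jumps exceeds $\widetilde\epsilon$. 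By Lemma~\ref{lemma stuck at local minimum before large jump} and Corollary~\ref{corollary no LD before large jump}, for all sufficiently small $\delta$ both events have probability $o(\eta^N)$ uniformly over $x\in[-2\epsilon,2\epsilon]$, so it suffices to show that on $(A^\times_1\cup A^\times_2)^c$ the event $\{R(\epsilon,\eta)<T^\eta_{l^*}(\delta)\wedge\sigma(\eta),\ R(\epsilon,\eta)-T^\eta_1(\delta)>2l^*\hat t(\epsilon)/\eta\}$ cannot occur.

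On $(A^\times_1\cup A^\times_2)^c$, suppose $R(\epsilon,\eta)<T^\eta_{l^*}(\delta)\wedge\sigma(\eta)$ and let $k$ be the index with $T^\eta_k(\delta)\le R(\epsilon,\eta)<T^\eta_{k+1}(\delta)$, so $k\le l^*-1$; since $R(\epsilon,\eta)<\sigma(\eta)$, all iterates up to time $R(\epsilon,\eta)$ stay in $\Omega$. Running exactly the induction that yields \eqref{proof atypical 1 inductive bound}: on $(A^\times_1)^c$ we have $|X^\eta_{T^\eta_1(\delta)}|\le 3\epsilon+b$, and using Lemmas~\ref{lemma Ode Gd Gap} and~\ref{lemma SGD GD gap} (with $(A^\times_2)^c$ controlling the small-noise fluctuations) to compare $X^\eta_n$ on each inter-arrival window with the monotone gradient flow started at the preceding large-jump iterate, one gets $|X^\eta_{T^\eta_j(\delta)}|\le |X^\eta_{T^\eta_{j-1}(\delta)}|+\epsilon+b$, hence $|X^\eta_{T^\eta_j(\delta)}|<r-2\epsilon$ for all $j\le l^*-1$ once $\epsilon$ is small, because $(l^*-1)b<r$. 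This keeps both Lemma~\ref{lemma SGD GD gap} and the ODE return-time bound \eqref{ineq prior to function hat t} applicable at every $T^\eta_j(\delta)$ with $j\le k$: if any window $T^\eta_j(\delta)-T^\eta_{j-1}(\delta)$, $j\le k$, or the final stretch $R(\epsilon,\eta)-T^\eta_k(\delta)$, exceeded $2\hat t(\epsilon)/\eta$, then $X^\eta_n$ would enter $[-2\epsilon,2\epsilon]$ at step $T^\eta_{j-1}(\delta)+\floor{\hat t(\epsilon)/\eta}$ (resp.\ $T^\eta_k(\delta)+\floor{\hat t(\epsilon)/\eta}$), a step lying strictly before the next large jump and hence strictly before $R(\epsilon,\eta)$ --- contradicting the minimality of $R(\epsilon,\eta)$. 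Therefore $R(\epsilon,\eta)-T^\eta_1(\delta)\le 2k\,\hat t(\epsilon)/\eta\le 2(l^*-1)\hat t(\epsilon)/\eta<2l^*\hat t(\epsilon)/\eta$, which contradicts the assumed lower bound; hence the event is empty and the corollary follows.

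The one delicate step --- already carried out inside the proof of Lemma~\ref{lemma atypical 1 exit before l star jumps} --- is verifying that the inductive distance estimates keep every large-jump iterate $X^\eta_{T^\eta_j(\delta)}$, $j\le l^*-1$, more than $2\epsilon$ away from the endpoints $s_-,s_+$, so that Lemma~\ref{lemma SGD GD gap} and the return-time bound \eqref{ineq prior to function hat t} can be invoked on every inter-arrival window; this is exactly where the strict inequality $(l^*-1)b<r$ (automatic from $l^*=\ceil{r/b}$) enters. No estimate beyond those already established for Lemma~\ref{lemma atypical 1 exit before l star jumps} is required, and the case $l^*=1$ is vacuous since then $R(\epsilon,\eta)\ge T^\eta_1(\delta)=T^\eta_{l^*}(\delta)$.
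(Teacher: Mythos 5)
Your proof is correct and follows exactly the route the paper intends: the authors state that Corollary~\ref{corollary atypical 2} ``follows immediately from the proof above, especially the inductive argument leading to bound \eqref{proof atypical 1 inductive bound},'' and you carry out precisely that reuse, with the same events $A^\times_1,A^\times_2$, the same appeal to Lemma~\ref{lemma stuck at local minimum before large jump} and Corollary~\ref{corollary no LD before large jump}, and the same per-window comparison with the gradient flow, substituting $R(\epsilon,\eta)$ for $\sigma(\eta)$ and noting that $R<T^\eta_{l^*}(\delta)$ caps the number of windows at $l^*-1$. The only cosmetic difference is that your per-window increment is $\epsilon+b$ rather than the paper's looser $\epsilon+b+\bar\epsilon$; both are dominated by the slack in $(l^*-1)b<r$ guaranteed by \eqref{assumption multiple jump epsilon 0 constant 1}, so this is immaterial.
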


In the next result, we show that, if the inter-arrival time between some large jumps are too long, or some large jumps are still not \textit{large enough}, then it is very unlikely that the SGD iterates could escape at the time of $l^*-$th large jump (or even get close enough to the boundary of the attraction field).

\begin{lemma} \label{lemma atypical 2 }
Given $\epsilon \in (0,\bar{\epsilon})$ and any $N>0$, the following holds for all $\delta>0$ that are sufficiently small:
\begin{align*}
    \sup_{ x \in [-2\epsilon,2\epsilon] }\mathbb{P}_x(B^\times_2( \epsilon,\delta,\eta ) ) = o(\eta^N).
\end{align*}
where 
\begin{align*}
    B^\times_2( \epsilon,\delta,\eta ) & = \{ T^\eta_{ l^* }(\delta) \leq \sigma(\eta) \wedge R(\epsilon,\eta)\} \cap \Big\{ \exists j = 2,3,\cdots,l^*\ s.t.\ T^\eta_{j}(\delta) - T^\eta_{j - 1}(\delta) > \bar{t}/\eta \\
    & \ \ \ \ \ \ \ \ \ \ \ \ \ \ \ \ \ \  \text{or }\exists j = 1,2,\cdots,l^*\ s.t.\ \eta|W^\eta_j(\delta)|\leq \bar{\delta} \Big\} \cap \{ |X^\eta_{ T^\eta_{l^*} }| \geq r - \bar{\epsilon} \}.
\end{align*}
\end{lemma}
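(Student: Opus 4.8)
\textbf{Proof plan for Lemma \ref{lemma atypical 2 }.}

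The plan is to decompose the bad event $B^\times_2(\epsilon,\delta,\eta)$ according to \emph{which} of the two anomalies occurs---either some inter-arrival gap $T^\eta_j(\delta) - T^\eta_{j-1}(\delta)$ exceeds $\bar t/\eta$, or some large-jump size $\eta|W^\eta_j(\delta)|$ fails to exceed $\bar\delta$---and to argue that under either anomaly, conditioned also on the event that no \emph{small}-noise large deviation occurs between consecutive large jumps (which is $o(\eta^N)$ by Corollary \ref{corollary no LD before large jump}) and that the inter-arrival times are not absurdly long (controlled via Corollary \ref{corollary atypical 1}), the SGD iterates at step $T^\eta_{l^*}(\delta)$ must remain strictly inside $[s_- - \bar\epsilon, s_+ + \bar\epsilon]$, contradicting $\{|X^\eta_{T^\eta_{l^*}}| \ge r - \bar\epsilon\}$. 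The key comparison tool is Corollary \ref{corollary sgd gd gap}: on the complement of the small-noise-large-deviation event, $X^\eta_n$ stays within $\widetilde\rho\epsilon$ of the gradient-descent-with-large-jumps process $Y^\eta_n$, which in turn (by Corollary \ref{corollary ODE GD gap}) stays close to the perturbed ODE $\widetilde{\textbf{x}}^\eta$ driven by the $l^*$ large jumps. So it suffices to show that the perturbed ODE started at $m_i=0$, when subjected to $l^*$ shocks of size $\le b$ at least one of which either arrives too late or is too small, cannot reach distance $r - \bar\epsilon$ from $0$.

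The second step is exactly this deterministic statement, which is essentially the content of \eqref{def bar t}--\eqref{def bar delta} and the geometric inequalities \eqref{assumption multiple jump epsilon 0 constant 1}--\eqref{assumption multiple jump epsilon 0 constant 2}: if some $t_j - t_{j-1} > \bar t$, the drift pulls the trajectory back toward $0$ enough that the remaining $\le l^*-1$ shocks of size $\le b$ cannot cover the radius $r > (l^*-1)b + 100l^*\bar\epsilon$; and if some $|w_j| \le \bar\delta < \bar\epsilon$, then the total displacement achievable is at most $(l^*-1)b + \bar\delta + 100l^*\bar\epsilon < r - 100l^*\bar\epsilon$, again short of $r - \bar\epsilon$. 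Here I would quantify the slack: pick $\bar\delta$ and the error budget $\epsilon$ (recall $\epsilon < \bar\epsilon$, and $\widetilde\rho\epsilon$, $\bar\rho\epsilon$ are the accumulated comparison errors) small enough that $(l^*-1)b + \bar\delta + C l^*\bar\epsilon + (\widetilde\rho + \bar\rho + 1)\epsilon < r - \bar\epsilon$, which is possible because all the constants $\bar t, \bar\delta, \bar\rho, \widetilde\rho, r, b, l^*$ depend only on $f$ and $b$, not on $\epsilon, \delta, \eta$.

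The third step is to assemble the probability bound. On $B^\times_2$ together with $\{T^\eta_{l^*}(\delta) \le \sigma(\eta)\wedge R(\epsilon,\eta)\}$, the iterates have not yet exited $\Omega$ nor returned to $[-2\epsilon,2\epsilon]$, so they are in the regime where Corollaries \ref{corollary sgd gd gap} and \ref{corollary ODE GD gap} apply (after intersecting with the $o(\eta^N)$-complement events $A_0, B_0$ from Corollary \ref{corollary sgd gd gap} and the event from Corollary \ref{corollary atypical 1} bounding inter-arrival times by $2\hat t(\epsilon)/\eta$). On the intersection of all these high-probability events, the deterministic analysis of step two forces $|X^\eta_{T^\eta_{l^*}}| < r - \bar\epsilon$, so $B^\times_2$ can only occur on the union of the exceptional events, each of which is $o(\eta^N)$ by the cited corollaries. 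A union bound over finitely many ($\le l^*$) indices and the finitely many exceptional events gives $\sup_{x\in[-2\epsilon,2\epsilon]}\mathbb{P}_x(B^\times_2(\epsilon,\delta,\eta)) = o(\eta^N)$, for all $\delta$ small enough.

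The main obstacle I anticipate is the careful bookkeeping of the comparison errors and the order in which the good events are intersected: one must ensure that the event on which Corollary \ref{corollary sgd gd gap} holds ($A_0 \cap B_0$, with its requirement $B_0$ that $T^\eta_j(\delta) - T^\eta_{j-1}(\delta) \le 2\bar t/\eta$) is itself reconciled with the \emph{anomalous} case where some gap exceeds $\bar t/\eta$ but is still $\le 2\bar t/\eta$; in that sub-case Corollary \ref{corollary sgd gd gap} still applies and the deterministic ODE argument (using $\bar t < 2\bar t$ and the slack in \eqref{assumption multiple jump epsilon 0 constant 1}) does the work, whereas in the sub-case where a gap exceeds $2\bar t/\eta$ one falls back on Corollary \ref{corollary atypical 1}. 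Handling the transition between "slightly too long" and "way too long" cleanly---and making sure the accumulated error budget $(\widetilde\rho + \bar\rho + 1)\epsilon$ is reserved before $\epsilon$ is fixed---is the delicate part; everything else is a routine application of the already-established lemmas.
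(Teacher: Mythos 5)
Your approach is essentially the paper's: decompose $B^\times_2$ by anomaly type (long inter-arrival gap versus small jump size), intersect with a high-probability event controlling small-noise cumulative fluctuations between consecutive large jumps, compare SGD to a deterministic perturbed gradient-descent/ODE process, and invoke the geometric slack encoded in $\bar t$, $\bar\delta$ and \eqref{assumption multiple jump epsilon 0 constant 1}--\eqref{assumption multiple jump epsilon 0 constant 2} to conclude the deterministic process stays short of $r-\bar\epsilon$. The paper reaches the same conclusion by applying Lemmas \ref{lemma Ode Gd Gap} and \ref{lemma SGD GD gap} segment-by-segment (yielding an iterative bound $|X^\eta_{T^\eta_j}| \le |X^\eta_{T^\eta_{j-1}}| + b + \epsilon + \bar\epsilon$, with a reset to a $O(\bar\epsilon)$-neighborhood after a gap $> \bar t/\eta$), which sidesteps the issue you flagged, whereas you package the comparison through Corollaries \ref{corollary sgd gd gap} and \ref{corollary ODE GD gap}.

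The gap in your plan is exactly where you anticipated trouble, but your proposed patch does not close it. You suggest that when some gap $T^\eta_j - T^\eta_{j-1}$ exceeds $2\bar t/\eta$ (so that $B_0$ fails and Corollary \ref{corollary sgd gd gap} is unavailable), one can fall back on Corollary \ref{corollary atypical 1}. However, Corollary \ref{corollary atypical 1} only controls gaps exceeding $2\hat t(\epsilon)/\eta$ with $\hat t(\epsilon)=c_1\log(1/\epsilon)$, and for small $\epsilon$ this threshold vastly exceeds $2\bar t/\eta$. The intermediate regime of gaps in $(2\bar t/\eta,\,2\hat t(\epsilon)/\eta]$ is not rare (the iterate may be far from $m_i$ and a gap of length $O(\bar t/\eta)$ need not bring it back to $[-2\epsilon,2\epsilon]$, so $R(\epsilon,\eta)$ is not triggered), yet neither corollary applies to it. What is true, and what the paper exploits, is that on such a gap the drift pulls the process back into a small fixed neighborhood of $m_i$ (precisely what $\bar t$ was defined to guarantee), so the remaining jumps cannot reach $r-\bar\epsilon$; but this must be argued directly by re-applying Lemma \ref{lemma SGD GD gap} over that long segment (with the small-noise control supplied by the $A^\times_2$-type event) rather than by declaring the gap rare. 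Replacing your probabilistic fallback with this deterministic return argument would make the proposal complete.
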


\begin{proof}

Let $A^\times_1,A^\times_2$ be the events defined in the proof of Lemma \ref{lemma atypical 1 exit before l star jumps}. Based on the given $\epsilon > 0$, fix some $\widetilde{\epsilon} = \frac{ \epsilon }{4 \exp(2M\hat{t}(\epsilon)) }$.

Let $J = \min\{ j = 2,3,\cdots: T^\eta_j(\delta) - T^\eta_{j-1}(\delta) > \bar{t}/\eta\}$. On event $\Big(  A^\times_1(\epsilon,\delta,\eta) \cup A^\times_2(\widetilde{\epsilon},\delta,\eta) \Big)^c \cap \{ J \leq l^*\}$, from Lemma \ref{lemma Ode Gd Gap} and \ref{lemma SGD GD gap} and the definition of constant $\bar{t}$, we know that
\begin{itemize}
    \item $|X^\eta_{T^\eta_j(\delta)}| \leq |X^\eta_{T^\eta_{j-1}(\delta)}| + b + \epsilon + \bar{\epsilon} $ for all $j=2,3,J-1$;
    \item $|X^\eta_{T^\eta_J(\delta)}| \leq 2\bar{\epsilon}$
    \item $|X^\eta_n| < s - \bar{\epsilon}\ \ \forall n \leq T^\eta_J(\delta)$
\end{itemize}
Now starting from step $T^\eta_J(\delta)$, by using Lemma \ref{lemma Ode Gd Gap} and \ref{lemma SGD GD gap} again one can see that
\begin{itemize}
    \item $|X^\eta_{T^\eta_j(\delta)}| \leq |X^\eta_{T^\eta_{j-1}(\delta)}| + b + \epsilon + \bar{\epsilon} $ for all $j= J+1,\cdots,l^*$.
\end{itemize}
Combining these results, we have that $|X^\eta_{ T^\eta_{l^*}  }| < r - \bar{\epsilon}$ on event $\Big(  A^\times_1(\epsilon,\delta,\eta) \cup A^\times_2(\widetilde{\epsilon},\delta,\eta) \Big)^c \cap \{ J \leq l^*\}$.

Next, define $J^\prime = \min\{ j = 1,2,\cdots; \eta|W^\eta_j(\delta)| \leq \bar{\delta} \}$. Similarly, on event $\Big(  A^\times_1(\epsilon,\delta,\eta) \cup A^\times_2(\widetilde{\epsilon},\delta,\eta) \Big)^c \cap \{ J > l^*\} \cap \{ J^\prime \leq l^* \}$, using Lemma \ref{lemma Ode Gd Gap} and \ref{lemma SGD GD gap} again one can see that
\begin{itemize}
    \item $|X^\eta_{T^\eta_j(\delta)}| \leq |X^\eta_{T^\eta_{j-1}(\delta)}| + b + \epsilon + \bar{\epsilon} $ for all $j=1,2,\cdots,l^*, j \neq J^\prime$;
    \item $|X^\eta_{T^\eta_J(\delta)}| \leq |X^\eta_{T^\eta_{J-1}(\delta)}| + \bar{\delta} + \epsilon + \bar{\epsilon} $ for all $j=1,2,\cdots,l^*, j \neq J^\prime$.
\end{itemize}
Since $\bar{\delta}\in (0,\bar{\epsilon})$, we have $ |X^\eta_{ T^\eta_{l^*}  }| < r - \bar{\epsilon}$ on this event.

In summary, the following bound
\begin{align*}
    \sup_{ x\in[-2\epsilon,2\epsilon] } \mathbb{P}_x(B^\times_2) \leq \mathbb{P}( A^\times_1(\epsilon,\delta,\eta) \cup A^\times_2( \widetilde{\epsilon},\delta,\eta ) ) = o(\eta^N)
\end{align*}
holds for any $\delta$ that is sufficiently small, which is established in Lemma \ref{lemma atypical 1 exit before l star jumps}. This conclude the proof.
\end{proof}

In the next lemma, we show that, starting from the local minimum, it is unlikely that the SGD iterates will be right at the boundary of the attraction field after $l^*$ large jumps. Recall that there are $n_\text{min}$ attraction fields on $f$, and excluding $s_0 = -\infty,s_{n_\text{min}} = \infty$ the remaining points $s_1,\cdots,s_{n_\text{min}-1}$ are the boundaries of the attraction fields.

\begin{lemma} \label{lemma atypical 3 }
There exists a function $\Psi(\cdot):\mathbb{R}^+ \mapsto \mathbb{R}^+$ satisfying $\lim_{\epsilon \downarrow 0}\Psi(\epsilon) = 0$ such that the following claim folds. Given $\epsilon \in \big( 0, \bar{\epsilon}/( 3\bar{\rho} + 3\widetilde{\rho} + 9) \big)$, we have
\begin{align*}
    \limsup_{\eta \downarrow 0} \frac{ \sup_{ x \in [-2\epsilon,2\epsilon] }\mathbb{P}_x( B^\times_3(\epsilon,\delta,\eta) ) }{ \Big( H(1/\eta)/\eta \Big)^{l^* - 1}  } \leq \delta^\alpha\Psi(\epsilon) 
\end{align*}
for all $\delta$ sufficiently small, where $\bar{\rho}$ and $\widetilde{\rho}$ are the constants defined in Corollary \ref{corollary ODE GD gap} and \ref{corollary sgd gd gap},  and the event is defined as
\begin{align*}
    & B^\times_3(\epsilon,\delta,\eta)
    \\
    = & \Big\{  T^\eta_{l^*}(\delta) \leq \sigma(\eta)\wedge R(\epsilon,\eta)\Big\}\cap\Big\{\exists k \in [n_\text{min} - 1]\ \text{such that }  X^\eta_{T^\eta_{l^*}(\delta)}\in [s_k - \epsilon,s_k + \epsilon] \Big\}.
\end{align*}
\end{lemma}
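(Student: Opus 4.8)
The plan is to reduce the event $B^\times_3$ to a purely deterministic statement about the ``$l^*$-jump ODE'' $h(\mathbf{w},\mathbf{t})=\widetilde{\mathbf{x}}(t_{l^*},0;\mathbf{t},\mathbf{w})$ (the perturbed-ODE construction of \eqref{def perturbed ODE 3}), and then to read off the probability of that deterministic event from the regular-variation estimate already established in Lemma~\ref{lemmaOverflowProb}. The key point is that $B^\times_3$ forces $X^\eta_{T^\eta_{l^*}(\delta)}$ into $[s_k-\epsilon,s_k+\epsilon]$ for some boundary point $s_k$ and hence forces $|X^\eta_{T^\eta_{l^*}(\delta)}|\ge r-\epsilon$, so we are automatically in the ``genuine escape attempt'' regime: all $l^*$ large jumps are large on the natural scale ($\eta|W^\eta_j(\delta)|>\bar\delta$) and bunched ($T^\eta_j(\delta)-T^\eta_{j-1}(\delta)\le\bar t/\eta$), and on that regime $X^\eta_{T^\eta_{l^*}(\delta)}$ is $\mathcal O(\epsilon)$-close to $h(\mathbf{w},\mathbf{t})$, where $\mathbf{w}$ collects the rescaled jump sizes and $\mathbf{t}$ the rescaled inter-arrival times. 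Landing in an $\epsilon$-ball around $s_k$ is then a codimension-one condition on $h$, so its probability is $\big(H(1/\eta)/\eta\big)^{l^*-1}$ times a quantity that vanishes as $\epsilon\downarrow0$; that quantity is what I will call $\delta^\alpha\Psi(\epsilon)$.

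\emph{Step 1: reduction to the deterministic event.} I would fix $N>(\alpha-1)(l^*-1)$ so that $o(\eta^N)=o\big((H(1/\eta)/\eta)^{l^*-1}\big)$, and peel off the atypical scenarios one by one. On $B^\times_3$ one has $T^\eta_{l^*}(\delta)\le\sigma(\eta)\wedge R(\epsilon,\eta)$ and $|X^\eta_{T^\eta_{l^*}(\delta)}|\ge r-\bar\epsilon$ (using $\epsilon<\bar\epsilon$ and $|s_k|\ge r$ for every boundary point), so $B^\times_3\cap\{\exists j\le l^*:\ \eta|W^\eta_j(\delta)|\le\bar\delta\text{ or }T^\eta_j(\delta)-T^\eta_{j-1}(\delta)>\bar t/\eta\}$ is contained in the event $B^\times_2$ of Lemma~\ref{lemma atypical 2 }; hence for $\delta$ small $\sup_x\mathbb P_x(B^\times_3)\le\sup_x\mathbb P_x\big(B^\times_3\cap G(\eta,\delta)\big)+o(\eta^N)$, where $G(\eta,\delta)$ is the event that all of the first $l^*$ large jumps exceed $\bar\delta$ and all their inter-arrival times are $\le\bar t/\eta$. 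Intersecting further with the complements of the atypical events of Lemma~\ref{lemma stuck at local minimum before large jump} (the iterate stays in $[-3\epsilon,3\epsilon]$ until $T^\eta_1(\delta)$) and of Corollary~\ref{corollary no LD before large jump} (the small-noise partial sums between consecutive large jumps stay $o(\epsilon)$) — again $o(\eta^N)$ — places us on the event $A_0\cap B_0$ of Corollary~\ref{corollary sgd gd gap}. There Corollary~\ref{corollary sgd gd gap}, Corollary~\ref{corollary ODE GD gap}, and one application of Lemma~\ref{lemmaBasicGronwall} (to absorb the $\le2\epsilon$ gap between the actual start $x$ and $m_i=0$) combine to give, for all small $\eta$,
\[
\big|X^\eta_{T^\eta_{l^*}(\delta)}(x)-h(\mathbf{w},\mathbf{t})\big|\le\kappa\epsilon,\qquad \mathbf{w}=\big(\eta W^\eta_j(\delta)\big)_{j=1}^{l^*},\quad \mathbf{t}=\big(\eta(T^\eta_j(\delta)-T^\eta_1(\delta))\big)_{j=2}^{l^*},
\]
with $\kappa$ a geometry-dependent constant; the hypothesis $\epsilon<\bar\epsilon/(3\bar\rho+3\widetilde\rho+9)$ is precisely what makes $\kappa\epsilon<\bar\epsilon$. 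Consequently, up to an $o(\eta^N)$ event, $B^\times_3\subseteq\bigcup_{k=1}^{n_\text{min}-1}\{(\mathbf{w},\mathbf{t})\in G^*_k(\epsilon)\}$, where $G^*_k(\epsilon):=h^{-1}\big([s_k-\kappa\epsilon,s_k+\kappa\epsilon]\big)\cap\{|w_j|>\bar\delta,\ t_j-t_{j-1}\le\bar t\ \forall j\}$.

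\emph{Step 2: the regular-variation estimate and the limit $\epsilon\downarrow0$.} The vector $\mathbf{w}$ consists of $l^*$ i.i.d.\ copies of $\eta Z_1$ conditioned on $\eta|Z_1|>\delta$, and $\mathbf{t}$ of $l^*-1$ i.i.d.\ geometric inter-arrival times independent of $\mathbf{w}$; restricted to the event that all inter-arrivals are $\le\bar t/\eta$, the law of $(\mathbf{w},\mathbf{t})$ is, up to the factor $\mathbb P(T'_1\le\bar t/\eta)^{l^*-1}$, exactly the measure $\mathbb Q_{\eta,\delta}$ analyzed in the proof of Lemma~\ref{lemmaOverflowProb}, which converges weakly to $\tfrac{\delta^{\alpha l^*}}{(\bar t)^{\,l^*-1}}\,\mu$ on the region $\{|w_j|\ge\bar\delta,\ t_j-t_{j-1}\le\bar t\}$ (with $\mu$ the measure of \eqref{defMuMeasure}). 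Bounding $\mathbb Q_{\eta,\delta}(G^*_k(\epsilon))$ above by $\mathbb Q_{\eta,\delta}$ of the closed set $h^{-1}([s_k-\kappa\epsilon,s_k+\kappa\epsilon])\cap\{|w_j|\ge\bar\delta,\ t_j-t_{j-1}\le\bar t\}$ via the Portmanteau inequality, and using Lemma~\ref{lemmaGeomFront} ($\mathbb P(T'_1\le\bar t/\eta)\sim\bar t H(\delta/\eta)/\eta$) together with $H(\delta/\eta)\sim\delta^{-\alpha}H(1/\eta)$, the $\bar t$'s and all but one power of $\delta^{-\alpha}$ cancel, and summing over $k$ yields the claim with
\[
\Psi(\epsilon):=\sum_{k=1}^{n_\text{min}-1}\mu\Big(h^{-1}\big([s_k-\kappa\epsilon,s_k+\kappa\epsilon]\big)\cap\{|w_j|\ge\bar\delta,\ t_j-t_{j-1}\le\bar t\ \forall j\}\Big),
\]
which does not depend on $\delta$. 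It remains to check $\Psi(\epsilon)\to0$: the restricting region has finite $\mu$-measure, $h^{-1}([s_k-\kappa\epsilon,s_k+\kappa\epsilon])$ decreases to $h^{-1}(\{s_k\})$ as $\epsilon\downarrow0$, so by continuity of measure it suffices that $\mu\big(h^{-1}(\{s_k\})\cap\{|w_j|\ge\bar\delta,\ t_j-t_{j-1}\le\bar t\ \forall j\}\big)=0$. This is the same measure-zero-slice argument used inside the proof of Lemma~\ref{lemmaOverflowProb} to show $\mu(\partial E(\epsilon,A))=0$ for the sets $E(\epsilon)$ of \eqref{defSetE}: holding all coordinates but the last jump size $w_{l^*}$ fixed, $h$ is strictly monotone in $w_{l^*}$ off the saturation ranges of $\varphi_b$ and $\varphi_L$ (and on those ranges $h$ takes a value that depends non-degenerately on the remaining coordinates), so the $w_{l^*}$-slice of $h^{-1}(\{s_k\})$ is $\nu_\alpha$-null and Fubini finishes it. The main obstacle I expect is the careful bookkeeping in Step~1 — keeping every error bound uniform over $x\in[-2\epsilon,2\epsilon]$ and correctly stacking the ODE-approximation corollaries through all $l^*$ jumps and the re-indexing at $T^\eta_1(\delta)$ — together with this slicing fact; the regular-variation computation in Step~2 is then essentially a rerun of Lemma~\ref{lemmaOverflowProb}.
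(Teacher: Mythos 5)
Your proposal is correct and follows essentially the same route as the paper: reduce $B^\times_3$ via Lemma~\ref{lemma atypical 2 } and the ODE--GD--SGD gap corollaries to an overflow event for $h(\mathbf{w},\mathbf{t})$ landing in an $\mathcal O(\epsilon)$-tube around a saddle, read off the $\delta^\alpha\big(H(1/\eta)/\eta\big)^{l^*-1}$ scaling from Lemma~\ref{lemmaOverflowProb}, set $\Psi(\epsilon)$ to the corresponding sum of $\mu$-measures, and let $\Psi\to 0$ by continuity of the finite measure plus the null-slice observation. The only substantive deviation is that you slice $h^{-1}(\{s_k\})$ in the last jump size $w_{l^*}$ rather than in the last inter-arrival time as the paper does; your choice requires an extra Fubini step to handle saturation of $\varphi_b$ (which you flag but leave informal), whereas the paper's time-slicing avoids that case.
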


\begin{proof}
Let $A^\times_1,A^\times_2$ be the events defined in the proof of Lemma \ref{lemma atypical 1 exit before l star jumps}. Based on the given $\epsilon > 0$, fix some $\widetilde{\epsilon} = \frac{ \epsilon }{4 \exp(2M\hat{t}(\epsilon)) }$. Fix some $N > \alpha l^*$.

Choose $\delta$ small enough so that claim in Lemma \ref{lemma atypical 2 } holds for the $\epsilon$ prescribed. Using the same arguments in Lemma \ref{lemma atypical 2 }, we have the following inclusion of events:
\begin{align*}
    & B^\times_3(\epsilon,\delta,\eta) \cap \Big(  A^\times_1(\epsilon,\delta,\eta) \cup A^\times_2(\widetilde{\epsilon},\delta,\eta) \Big)^c \\
    \subseteq & \Big\{ \forall j = 2,3,\cdots,l^*,\  T^\eta_{j}(\delta) - T^\eta_{j - 1}(\delta) \leq \bar{t}/\eta \Big\} \cap \Big\{\forall j = 1,2,3,\cdots,l^*,\ \eta|W^\eta_1(\delta)|>\bar{\delta} \Big\}.
\end{align*}
Therefore, on event $B^\times_3(\epsilon,\delta,\eta) \cap \Big(  A^\times_1(\epsilon,\delta,\eta) \cup A^\times_2(\widetilde{\epsilon},\delta,\eta) \Big)^c$, we can apply Corollary \ref{corollary ODE GD gap} and \ref{corollary sgd gd gap} and conclude that $Z_{T^\eta_1(\delta)}$ has $(-\bar{\epsilon},\delta,\eta)-$overflow, and its $(-\bar{\epsilon},\delta,\eta)-$overflow endpoint lies $$(s_k - 3( \bar{\rho} + \widetilde{\rho} + 3)\epsilon, s_k + 3( \bar{\rho} + \widetilde{\rho} + 3)\epsilon)$$
for some $k \in [n_\text{min} - 1]$.
Using Lemma \ref{lemmaOverflowProb} and Corollary \ref{corollary overflow transition prob}, we have that (for any sufficiently small $\eta$)
\begin{align*}
    & \mathbb{P}\Big( B^\times_3(\epsilon,\delta,\eta) \cap \Big(  A^\times_1(\epsilon,\delta,\eta) \cup A^\times_2(\widetilde{\epsilon},\delta,\eta) \Big)^c  \Big) \\
    \leq & \delta^\alpha \Big( \frac{H(1/\eta)}{\eta} \Big)^{ l^* - 1 }\cdot\sum_{k = 1}^{ n_\text{min} - 1 }\mu\big( E(-\bar{\epsilon}) \cap h^{-1}\Big(( s_k - \hat{\epsilon},s_k + \hat{\epsilon} )\big)  \Big)
\end{align*}
where $\hat{\epsilon} = 3( \bar{\rho} + \widetilde{\rho} + 3)\epsilon$. Besides, as established in the proof of Lemma \ref{lemma atypical 1 exit before l star jumps}, we have
\begin{align*}
    \mathbb{P}\Big(  A^\times_1(\epsilon,\delta,\eta) \cup A^\times_2(\widetilde{\epsilon},\delta,\eta)  \Big) = o(\eta^N)
\end{align*}
for all sufficiently small $\delta$. In conclusion, we only need to choose
\begin{align*}
    \Psi(\epsilon) & = \sum_{k = 1}^{ n_\text{min} - 1 }\mu\Big( E(-\bar{\epsilon}) \cap h^{-1}\big(( s_k - 3( \bar{\rho} + \widetilde{\rho} + 3)\epsilon,s_k + 3( \bar{\rho} + \widetilde{\rho} + 3)\epsilon )\big)  \Big).
\end{align*}
To conclude the proof, just note that by combining the continuity of measure with the conditional probability argument leading to \cref{proof overflow conditional prob argument}, we can show that $\lim_{\epsilon \downarrow 0}\Psi(\epsilon) = 0$.
\end{proof}

Lastly, we establish the lower bound for the probability of the \textit{most likely} way for SGD iterates to exit the current attraction field: making $l^*$ large jumps in a relatively short period of time. Recall that $\bar{\epsilon}$ is the fixed constant in \cref{def distance attraction field}-\cref{assumption multiple jump epsilon 0 constant 2}.

\begin{lemma}\label{lemma lower bound typical exit} 
Given $\epsilon \in (0,\bar{\epsilon}/3)$, it holds for any sufficiently small $\delta > 0$ such that
\begin{align*}
\liminf_{ \eta \downarrow 0 }\frac{ \inf_{|x| \leq 2\epsilon} \mathbb{P}_x( A^\circ(\epsilon,\delta,\eta) ) }{ \big(H(1/\eta)/\eta\big)^{l^*-1}  } \geq c_*\delta^\alpha
\end{align*}
where the event is defined as
\begin{align*}
    A^\circ(\epsilon,\delta,\eta) & \delequal{}\Big\{ \sigma(\eta) < R(\epsilon,\eta),\ \sigma(\eta) = T^\eta_{l^*}(\delta),\ X^\eta_{T^\eta_{l^*}} \notin [s_- - \epsilon, s_+ +\epsilon]\Big\}
    \\
    &\ \ \ \ \ \ \ \ \ \ \ \ \ \ \ \ \ \ \ \ \ \ \ \cap \Big\{T^\eta_j(\delta) - T^\eta_{j-1}(\delta) \leq \frac{\bar{\epsilon}}{ 2M }\ceil{1/\eta}\ \forall j =2,3,\cdots,l^*   \Big\}
\end{align*}
and the constant
$$c_* = \frac{1}{2}(\frac{1}{2b})^{l^*\alpha}( \frac{\bar{\epsilon}}{4M} )^{l^* - 1}$$
is strictly positive and does not vary with $\epsilon,\delta$.
\end{lemma}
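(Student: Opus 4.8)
The plan is to lower bound $\mathbb{P}_x(A^\circ)$ by the probability of an explicit sub-event $G(\eta)$ on which the SGD trajectory closely follows the gradient flow perturbed by exactly $l^*$ clipped jumps of size $b$, which forces the first exit from $\Omega$ to occur precisely at the $l^*$-th large jump and to land outside $[s_- - \epsilon, s_+ + \epsilon]$. Throughout I will use the two quantitative gaps $b > 200 l^*\bar\epsilon$ and $l^*b - r > 200 l^*\bar\epsilon$ built into $\bar\epsilon$ via \eqref{assumption multiple jump epsilon 0 constant 1}--\eqref{assumption multiple jump epsilon 0 constant 2} (together with $r \le l^* b$). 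Without loss of generality I assume the side of $\Omega$ of width $r$ is the right side (so $r = s_+$) and construct the escape to the right; the case where the narrow side is on the left, and the case where $\Omega$ is a boundary field so that only one direction is escapable, are identical after reflection. I set the inter-jump window $\tau^*(\eta) \delequal{} \tfrac{\bar\epsilon}{4M}\ceil{1/\eta}$, which for small $\eta$ lies below both the threshold $\tfrac{\bar\epsilon}{2M}\ceil{1/\eta}$ appearing in $A^\circ$ and the bound $2\bar t/\eta$ required in Corollary~\ref{corollary sgd gd gap}.

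I would take $G(\eta)$ to be the intersection of: (i) $X^\eta_n(x)\in[-3\epsilon,3\epsilon]$ for all $n<T^\eta_1(\delta)$; (ii) $\eta Z_{T^\eta_j(\delta)}>2b$ for $j=1,\dots,l^*$; (iii) $T^\eta_j(\delta)-T^\eta_{j-1}(\delta)\le\tau^*(\eta)$ for $j=2,\dots,l^*$; and (iv) the small-noise fluctuation event $A_0$ of Corollary~\ref{corollary sgd gd gap} (with its accuracy parameter set to a small multiple of $\epsilon$). On (ii), since $|\eta f'|\le\eta M<b$ for small $\eta$, the clipped update at each $T^\eta_j(\delta)$ is exactly $+b$, so on $G(\eta)$ the auxiliary process $Y^\eta$ of Corollary~\ref{corollary sgd gd gap} is the discrete gradient-descent flow started within $2\epsilon$ of $m_i=0$ with $l^*$ shocks of $+b$, and $\sup_{n\le T^\eta_{l^*}(\delta)}|X^\eta_n-Y^\eta_n|\le\widetilde\rho\epsilon$. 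Using that this flow contracts toward $0$ inside $\Omega$ and moves by at most $\eta M\tau^*(\eta)\le\bar\epsilon/2$ over each inter-jump window, one checks on $G(\eta)$: after any $k\le l^*-1$ jumps the position is at most $kb+3\epsilon\le(l^*-1)b+\bar\epsilon<r=s_+$, so $X^\eta$ stays in $\Omega$ and does not exit before $T^\eta_{l^*}(\delta)$; after the first jump the position exceeds $b-\bar\epsilon/2-3\epsilon>2\epsilon$ and never returns to $2\epsilon$, so $\sigma(\eta)<R(\epsilon,\eta)$; and after the $l^*$-th jump it is at least $l^*b-l^*\bar\epsilon/2-3\epsilon-\widetilde\rho\epsilon>r+\epsilon=s_++\epsilon$ (shrinking $\epsilon$ inside $(0,\bar\epsilon/3)$ to absorb the $\widetilde\rho\epsilon$ term if necessary), so $\sigma(\eta)=T^\eta_{l^*}(\delta)$ and $X^\eta_{T^\eta_{l^*}(\delta)}\notin[s_--\epsilon,s_++\epsilon]$; the reflection at $\pm L$ can only send the last step to $L$, again outside $[s_--\epsilon,s_++\epsilon]$. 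Hence $G(\eta)\subseteq A^\circ(\epsilon,\delta,\eta)$.

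For the probability bound, by Lemma~\ref{lemma stuck at local minimum before large jump} and Lemma~\ref{lemma prob event A} (equivalently Corollary~\ref{corollary no LD before large jump}) the complement of (i), and of (iv) on the jump structure (iii), both have probability $o(\eta^N)$ for every fixed $N$, hence are negligible against the main term once $N>(\alpha-1)(l^*-1)$. Events (ii)--(iii) involve only the large-jump marks, and the inter-arrival times $(T^\eta_j(\delta)-T^\eta_{j-1}(\delta))_{j\ge1}$ are i.i.d.\ $\mathrm{Geometric}(H(\delta/\eta))$ while the sizes $(Z_{T^\eta_j(\delta)})_{j\ge1}$ are i.i.d.\ with the conditional law of $Z_1$ given $|Z_1|>\delta/\eta$, the two sequences being mutually independent; therefore
\begin{align*}
\mathbb{P}\Big(\bigcap_{j\le l^*}\{\eta Z_{T^\eta_j(\delta)}>2b\}\cap\bigcap_{2\le j\le l^*}\{T^\eta_j(\delta)-T^\eta_{j-1}(\delta)\le\tau^*(\eta)\}\Big)=\Big(\frac{H_+(2b/\eta)}{H(\delta/\eta)}\Big)^{l^*}\Big(1-(1-H(\delta/\eta))^{\floor{\tau^*(\eta)}}\Big)^{l^*-1}.
\end{align*}
By Lemma~\ref{lemmaGeomFront} the last factor is $(1-o(1))\big(\tfrac{\bar\epsilon}{4M}\tfrac{H(\delta/\eta)}{\eta}\big)^{l^*-1}$, and by regular variation of $H$ and $H_+\sim p_+H$ one has $H_+(2b/\eta)\sim p_+(2b)^{-\alpha}H(1/\eta)$ and $H(\delta/\eta)\sim\delta^{-\alpha}H(1/\eta)$; dividing by $(H(1/\eta)/\eta)^{l^*-1}$ and collecting terms leaves $\liminf_{\eta\downarrow0}\inf_{|x|\le2\epsilon}\mathbb{P}_x(A^\circ)/(H(1/\eta)/\eta)^{l^*-1}\ge p_+^{l^*}(2b)^{-\alpha l^*}(\tfrac{\bar\epsilon}{4M})^{l^*-1}\delta^\alpha$, which is a bound of the asserted form $c_*\delta^\alpha$ with an explicit geometry-dependent $c_*$.

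I expect the main obstacle to be the verification in the second paragraph: chaining the deterministic comparison estimates of Lemmas~\ref{lemma Ode Gd Gap} and~\ref{lemma SGD GD gap}, Corollaries~\ref{corollary ODE GD gap} and~\ref{corollary sgd gd gap}, and the ``stuck near the minimum'' bound of Lemma~\ref{lemma stuck at local minimum before large jump} across all $l^*$ jumps with uniformly controlled error, while simultaneously tracking the interplay of the $+b$ shocks, the contracting drift between shocks, the accumulated small noise, and the $\pm L$ reflection, so as to pin down that the first exit occurs exactly at the $l^*$-th jump and overshoots $s_++\epsilon$ with no intervening return to $[-2\epsilon,2\epsilon]$. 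This is precisely where the slack $b-200l^*\bar\epsilon>0$ and $(l^*b-r)-200l^*\bar\epsilon>0$ hard-wired into the choice of $\bar\epsilon$ is consumed.
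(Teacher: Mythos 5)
Your proof takes essentially the same approach as the paper's: identify a sub-event demanding $l^*$ consecutive large \emph{positive} jumps with $\eta Z_{T^\eta_j(\delta)}>2b$ and short inter-arrival gaps, argue deterministically that this forces the exit precisely at $T^\eta_{l^*}(\delta)$ with overshoot past $s_++\epsilon$, and bound the probability via Lemma~\ref{lemmaGeomFront} and regular variation. One genuine improvement: you correctly use $H_+(2b/\eta)$ rather than $H(2b/\eta)$ for the jump-size probability (the big jump must have a fixed sign so that the clipped increment is $+b$), which introduces a $p_+^{l^*}$ factor; the paper's displayed $c_*$ omits this, a slip that is harmless since downstream only $c_*>0$ is used. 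However, routing the trajectory comparison through Corollary~\ref{corollary sgd gd gap} rather than the paper's direct window-by-window tracking creates two technical gaps. (i) Corollary~\ref{corollary sgd gd gap} requires the event $B_0$, i.e.\ inter-arrival gaps $\le 2\bar t/\eta$, and you claim $\tau^*(\eta)=\tfrac{\bar\epsilon}{4M}\ceil{1/\eta}\le 2\bar t/\eta$ ``for small $\eta$''; but both sides are $\Theta(1/\eta)$, so this is the fixed inequality $\tfrac{\bar\epsilon}{4M}\le 2\bar t$, which nothing in the constructions of $\bar\epsilon$ and $\bar t$ guarantees. Replace $\tau^*$ by $\big(\tfrac{\bar\epsilon}{4M}\wedge 2\bar t\big)\ceil{1/\eta}$, which still lies inside $A^\circ$; the constant changes but remains positive. (ii) Corollary~\ref{corollary sgd gd gap} contributes an error $\widetilde\rho\epsilon$ with $\widetilde\rho$ potentially large, and your parenthetical ``shrinking $\epsilon$ if necessary'' concedes you then prove the claim only for $\epsilon$ below a $\widetilde\rho$-dependent threshold, not for all $\epsilon\in(0,\bar\epsilon/3)$ as stated. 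The paper avoids the $\widetilde\rho$-amplification by tracking each inter-jump window directly --- cumulative small-noise control from $(A^\times_2)^c$ plus a drift bound $\le \bar\epsilon$ per window, so the accumulated error is $O(l^*\bar\epsilon)$, which fits the slack $200l^*\bar\epsilon$ built into \eqref{assumption multiple jump epsilon 0 constant 1}--\eqref{assumption multiple jump epsilon 0 constant 2} uniformly over $\epsilon<\bar\epsilon/3$ (the paper's ``$\le 2\eta M$'' per-step bound is itself misstated, but the cumulative-noise argument recovers the conclusion). Both of your gaps are readily patched and do not affect the downstream uses of the lemma, which only need $c_*>0$ and $\epsilon$ small.
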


\begin{proof}
Let $A^\times_1,A^\times_2$ be the events defined in the proof of Lemma \ref{lemma atypical 1 exit before l star jumps}. Fix some $N$ such that $N > \alpha l^*$. Based on the given $\epsilon > 0$, fix some $\widetilde{\epsilon} = \frac{ \epsilon }{4 \exp(2M\hat{t}(\epsilon)) }$. We only consider $\delta < M$. Furthermore, choose $\delta$ small enough so that \cref{proof atypical 1 bound rare event 1} and \cref{proof atypical 1 bound rare event 2} hold for the chosen $N$ and $\epsilon$. Also, we only consider $\eta$ small enough so that $\eta M < b\wedge \bar{\epsilon}$.

Due to \cref{def distance attraction field}-\cref{assumption multiple jump epsilon 0 constant 2}, we can,
without loss of generality, assume that $r = s_+$, and in this case we will have
\begin{align*}
    l^*b - 100l^*\bar{\epsilon} > s_+ + 100l^*\bar{\epsilon}.
\end{align*}
Under this assumption, we will now focus on providing a lower bound for the following event that describes the exit from the right side of $\Omega$ (in other words, by crossing $s_+$)
\begin{align*}
    A^{\circ}_{\rightarrow}(\epsilon,\delta,\eta) & \delequal{}\Big\{ \sigma(\eta) < R(\epsilon,\eta),\ \sigma(\eta) = T^\eta_{l^*}(\delta),\ X^\eta_{T^\eta_{l^*}} >s_+ +\epsilon\Big\}
    \\
    &\ \ \ \ \ \ \ \ \ \ \ \ \ \ \ \ \ \ \ \ \ \ \ \cap \Big\{T^\eta_j(\delta) - T^\eta_{j-1}(\delta) \leq \frac{\bar{\epsilon}}{ 2M }\ceil{1/\eta}\ \forall j =2,3,\cdots,l^*   \Big\}.
\end{align*}

First, define event
\begin{align*}
    A^\circ_3(\delta,\eta) = \Big\{ W^\eta_j(\delta) \geq 2b\ \forall j =1,\cdots,l^*,\ T^\eta_j(\delta) - T^\eta_{j-1}(\delta) \leq \frac{\bar{\epsilon}}{2M}\ceil{1/\eta}\ \forall j =2,\cdots,l^*  \Big\},
\end{align*}
and observe some facts on event $A^\circ_3(\delta,\eta) \cap \Big(  A^\times_1(\epsilon,\delta,\eta) \cup A^\times_2(\widetilde{\epsilon},\delta,\eta) \Big)^c$.
\begin{itemize}
    \item $|X^\eta_k| \leq 3\epsilon \forall n < T^\eta_1(\delta)$; (due to $A^\times_1$ not occurring)
    \item $X^\eta_{T^\eta_1(\delta)} \in [b - 3\epsilon,b+3\epsilon]$; (due to $W^\eta_1 \geq 2b$ and the effect of gradient clipping at step $T^\eta_1$, as well as the fact that $X^\eta_{ T^\eta_1 - 1 } \in [-3\epsilon,3\epsilon]$ from the previous bullet point)
    \item Due to $|f^\prime(\cdot)|\leq M$ and $\delta < M$, one can see that (for any $n \geq 1$)
    $$\sup_{x \in [-L,L]}|\eta f^\prime(x)| + |\eta Z_n^{\leq \delta,\eta}| \leq 2\eta M;$$
    this provides an upper bound for the change in SGD iterates at each step, and gives us
    \begin{align*}
        X^\eta_n \in [b - 3\epsilon - \bar{\epsilon}, b + 3\epsilon + \bar{\epsilon}] \ \ \forall T^\eta_1(\delta) \leq n < T^\eta_2(\delta)
    \end{align*}
    where we also used $T^\eta_2(\delta) - T^\eta_{1}(\delta) \leq \frac{\bar{\epsilon}}{2M}\ceil{1/\eta}$
    \item Therefore, at the arrival time of the second large jump, we must have $X^\eta_{T^\eta_2(\delta)} \geq 2b - 3\epsilon - \bar{\epsilon}$;
    \item By repeating the argument above inductively, we can show that (for all $j = 1,2,\cdots,l^*$)
    \begin{align*}
        X^\eta_n &\in [(j-1)b - 3\epsilon - (j-1)\bar{\epsilon},(j-1)b + 3\epsilon + (j-1)\bar{\epsilon} ]\ \ \forall T^\eta_{j-1} \leq n < T^\eta_j \\
        X^\eta_{T^\eta_{j}} &\in [jb - 3\epsilon - (j-1)\bar{\epsilon},jb + 3\epsilon + (j-1)\bar{\epsilon} ];
    \end{align*}
    In particular, we know that $X^\eta_n \in \Omega$ for any $n < T^\eta_{l^*}$ (so the exit does not occur before $T^\eta_{l^*}$), and at the arrival of the $l^*-$th large jump, we have (using $3\epsilon < \bar{\epsilon}$)
    \begin{align*}
        X^\eta_{ T^\eta_{l^*}(\delta) } \geq l^*b - l^*\bar{\epsilon}> s_+ + \epsilon.
    \end{align*}
\end{itemize}
In summary, we have shown that
\begin{align*}
    & A^\circ_3(\delta,\eta) \cap \Big(  A^\times_1(\epsilon,\delta,\eta) \cup A^\times_2(\widetilde{\epsilon},\delta,\eta) \Big)^c \subseteq A^\circ_\rightarrow(\epsilon,\delta,\eta).
\end{align*}
To conclude the proof, just notice that (for sufficiently small $\eta$)
\begin{align*}
    & \mathbb{P}\Big( A^\circ_3(\delta,\eta) \cap \Big(  A^\times_1(\epsilon,\delta,\eta) \cup A^\times_2(\widetilde{\epsilon},\delta,\eta) \Big)^c  \Big) \\
    & \geq \mathbb{P}(A^\circ_3(\delta,\eta)) - \mathbb{P}( A^\times_1(\epsilon,\delta,\eta)  ) - \mathbb{P}( A^\times_2(\widetilde{\epsilon},\delta,\eta) ) \\
    & \geq \mathbb{P}(A^\circ_3(\delta,\eta)) - \eta^N \ \ \ \ \ \text{due to \cref{proof atypical 1 bound rare event 1} and \cref{proof atypical 1 bound rare event 2}} \\
    & \geq \Big(\frac{ H(2b/\eta) }{H(\delta/\eta)}\Big)^{ l^* }\Big( \frac{\bar{\epsilon}}{4M}H(\delta/\eta)/\eta \Big)^{l^* - 1} - \eta^N \ \ \ \ \text{due to Lemma \ref{lemmaGeomFront}} \\
    & \geq 2c_* \delta^\alpha (H(1/\eta)/\eta)^{l^*-1} - \eta^N \ \ \ \ \text{for all $\eta$ sufficiently small, due to $H \in \mathcal{RV}_{-\alpha}$} 
    \\
    & \geq c_* \delta^\alpha (H(1/\eta)/\eta)^{l^*-1}.
\end{align*}
\end{proof}

In order to present the main result of this section, we need to take into account the loss landscape outside of the current attraction field $\Omega$. Recall that there are $n_\text{min}$ attraction fields on $f$. For all the attraction fields different from $\Omega$, we call them $(\widetilde{\Omega}_k)_{k = 1}^{n_\text{min} - 1 }$ where, for each $k \in [ n_\text{min} - 1 ]$, the attraction field $\widetilde{\Omega}_k = (s^-_k,s^+_k)$ with the corresponding local minimum located at $\widetilde{m}_k$. Also, recall that $\sigma(\eta)$ is the first time $X^\eta_n$ exits from $\Omega$. Building upon these concepts, we can define a stopping time
\begin{align}
    \tau(\eta,\epsilon) \delequal{} \min\{ n \geq 0:\ X^\eta_n \in \bigcup_{k=1}^{ n_\text{min} - 1 }[\widetilde{m}_k - 2\epsilon,\widetilde{m}_k+2\epsilon] \} \label{def tau first transition time}
\end{align}
as the first time the SGD iterates visit a minimizer in an attraction field that is different from $\Omega$. Besides, let index $J_\sigma(\eta)$ be such that
\begin{align}
    J_\sigma(\eta) = j \iff X^\eta_{ \sigma(\eta) } \in \widetilde{\Omega}_j\ \ \forall j \in [n_\text{min} - 1]. \label{def J sigma}
\end{align}
In other words, it is the label of the attraction field that $X^\eta_n$ escapes to. Lastly, define
\begin{align}
    \lambda(\eta) & \delequal{} H(1/\eta)\Big( H(1/\eta)/\eta \Big)^{l^* - 1}, \label{def lambda rate}
    \\
    \nu^\Omega & \delequal{} \mu\big(E(0)\big), \label{def nu omega}
    \\
    \nu^\Omega_k & \delequal{} \mu\big( E(0) \cap h^{-1}(\widetilde{\Omega}_k) \big)\ \ \forall k \in [n_\text{min} - 1]. \label{def nu omega k}
\end{align}
For definitions of the measure $\mu$, set $E$, and mapping $h$, see \cref{defSetE} and \cref{defMuMeasure}. 

Now we are ready to state Proposition \ref{proposition first exit time Gradient Clipping}, the most important technical tool in this section. In \cref{goal 1 prop first exit} and \cref{goal 2 prop first exit}, we provide upper and lower bounds for the joint distribution of first exit time $\sigma$ and the label $J_\sigma$ indexing the attraction field we escape to; it is worth noticing that the claims hold uniformly for all $u > C$. In \cref{goal 3 prop first exit} and \cref{goal 4 prop first exit}, we provide upper and lower bounds for the joint distribution of when we first visit a different local minimum (which is equal to $\tau$) and which one we visit (indicated by $X^\eta_{ \tau }$). The similarity between \cref{goal 1 prop first exit} \cref{goal 2 prop first exit} and \cref{goal 3 prop first exit}\cref{goal 4 prop first exit} suggests a strong correlation between the behavior of the SGD iterates at time $\sigma(\eta)$ and that of time $\tau(\eta,\epsilon)$, and this is corroborated by \cref{goal 5 prop first exit}: we show that it is almost always the case that $\tau$ is very close to $\sigma$, and on the short time interval $[\sigma(\eta),\tau(\eta,\epsilon)]$ the SGD iterates stay within the same attraction field.

\begin{proposition} \label{proposition first exit time Gradient Clipping}
Given $C>0$ and some $k^\prime \in [n_\text{min} - 1]$, the following claims hold for all $\epsilon > 0$ that is sufficiently small:
\begin{align}
    \limsup_{\eta \downarrow 0}\sup_{u \in (C,\infty) } \sup_{x \in [-2\epsilon,2\epsilon]}&\mathbb{P}_x\Big( \nu^\Omega \lambda(\eta)\sigma(\eta) > u,\ J_\sigma(\eta) = k^\prime \Big) \nonumber
    \\
    \leq & 2C + \exp( -(1-C)^3u )\frac{ \nu^\Omega_{k^\prime} + C }{ \nu^\Omega }, \label{goal 1 prop first exit}
    \\
     \liminf_{\eta \downarrow 0}\inf_{u \in (C,\infty) } \inf_{x \in [-2\epsilon,2\epsilon]}&\mathbb{P}_x\Big( \nu^\Omega \lambda(\eta)\sigma(\eta) > u,\ J_\sigma(\eta) = k^\prime \Big) \nonumber 
    \\
     \geq & - 2C + \exp( -(1+C)^3 u )\frac{ \nu^\Omega_{k^\prime} - C }{ \nu^\Omega }, \label{goal 2 prop first exit}
     \\
     \limsup_{\eta \downarrow 0}\sup_{u \in (C,\infty) } \sup_{x \in [-2\epsilon,2\epsilon]}&\mathbb{P}_x\Big( \nu^\Omega \lambda(\eta)\tau(\eta,\epsilon) > u,\ X^\eta_{\tau(\eta,\epsilon)} \in B(\widetilde{m}_{k^\prime},2\epsilon) \Big) \nonumber 
     \\
     \leq & 4C + \exp( -(1-C)^3u )\frac{ \nu^\Omega_{k^\prime} + C }{ \nu^\Omega },  \label{goal 3 prop first exit}
    \\
     \liminf_{\eta \downarrow 0}\inf_{u \in (C,\infty) } \inf_{x \in [-2\epsilon,2\epsilon]}&\mathbb{P}_x\Big( \nu^\Omega \lambda(\eta)\tau(\eta,\epsilon) > u,\ X^\eta_{\tau(\eta,\epsilon)} \in B(\widetilde{m}_{k^\prime},2\epsilon)\Big) \nonumber 
     \\
     \geq & - 4C + \exp( -(1+C)^3 u )\frac{ \nu^\Omega_{k^\prime} - C }{ \nu^\Omega }, \label{goal 4 prop first exit}
     \\
     \liminf_{\eta \downarrow 0} \inf_{x \in [-2\epsilon,2\epsilon]}&\mathbb{P}_x\Big( \lambda(\eta)\big( \tau(\eta,\epsilon) - \sigma(\eta) \big) < C,\ X^\eta_n \in \widetilde{\Omega}_{J_\sigma(\eta)} \ \ \forall n \in [\sigma(\eta), \tau(\eta,\epsilon)] \Big) \nonumber 
     \\ 
     \geq & 1-C. \label{goal 5 prop first exit}
\end{align}
\end{proposition}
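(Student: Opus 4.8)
The plan is to realize $(X^\eta_n)$, run until it first leaves $\Omega$, as a concatenation of near-i.i.d.\ \emph{cycles} anchored at the local minimum, and to show that the resulting doubly-geometric structure rescales to a unit-rate exponential, with the destination field chosen essentially independently according to the overflow measure. Fix $\delta>0$ small enough that every atypicality estimate quoted below applies, and recall that $T^\eta_j(\delta)$ are the arrival times of the $\delta$-large jumps. Set $\rho_0=0$ and, inductively, let cycle $k$ run from a regeneration epoch $\rho_k$ with $X^\eta_{\rho_k}\in[-2\epsilon,2\epsilon]$ until the first $\delta$-large jump after $\rho_k$; by Lemma~\ref{lemma stuck at local minimum before large jump} the iterates stay in $[-3\epsilon,3\epsilon]$ until that jump, with probability $1-o(\eta^N)$ for every $N$. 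At that jump either (a) it triggers a \emph{successful escape}---$l^*$ large jumps of size exceeding $\bar{\delta}/\eta$ arriving within $\bar{t}/\eta$ of one another, whose overflow endpoint $h(\textbf{w},\textbf{t})$ leaves $[s_--\epsilon,s_++\epsilon]$---so that $\sigma(\eta)$ falls inside cycle $k$; or (b) it does not, in which case Lemma~\ref{lemma return to local minimum quickly}, Corollary~\ref{corollary atypical 2} and Lemma~\ref{lemma no large noise escape the boundary} show the iterates return to $[-2\epsilon,2\epsilon]$ within $O(\log(1/\epsilon)/\eta)$ further steps without leaving $\Omega$, defining $\rho_{k+1}$, again with probability $1-o(\eta^N)$. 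Lemma~\ref{lemma atypical 1 exit before l star jumps}, Corollary~\ref{corollary atypical 1} and Lemma~\ref{lemma atypical 2 } guarantee that, outside an event of probability $o(\eta^N)$, no exit occurs in any other fashion, so it suffices to analyse the cycle chain, and the strong Markov property at the $\rho_k$ makes the cycles i.i.d.\ up to these negligible corrections. All estimates are uniform over the initial point $x\in[-2\epsilon,2\epsilon]$.

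\textbf{Per-cycle escape probability and the exponential limit.} By Lemma~\ref{lemmaOverflowProb} and Corollary~\ref{corollary overflow transition prob}, conditionally on a $\delta$-large jump the probability that it triggers a successful escape with endpoint in $\widetilde{\Omega}_{k'}$ equals $(1+o(1))\,\delta^\alpha(H(1/\eta)/\eta)^{l^*-1}\mu\big(E(\epsilon)\cap h^{-1}(\widetilde{\Omega}_{k'})\big)$, and summing over $k'$ and over the two sides of $\Omega$ gives total conditional escape probability $(1+o(1))\,\delta^\alpha(H(1/\eta)/\eta)^{l^*-1}\mu(E(\epsilon))$; the matching lower bound is Lemma~\ref{lemma lower bound typical exit}. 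Since each cycle contains exactly one large jump before this test, the large jumps arrive Geometrically at per-step rate $H(\delta/\eta)$, and $H(\delta/\eta)\delta^\alpha=(1+o(1))H(1/\eta)$ by regular variation, the per-cycle escape probability is $(1+o(1))\,\delta^\alpha(H(1/\eta)/\eta)^{l^*-1}\mu(E(\epsilon))$ while a cycle has length $(1+o(1))$ times a Geometric$(H(\delta/\eta))$ variable (the $O(\log(1/\epsilon)/\eta)$ return steps and the $O(l^*\bar{t}/\eta)$ final excursion being negligible relative to $1/H(\delta/\eta)$ because $\alpha>1$). A Geometric number of Geometric-length cycles, rescaled by the product of the two rates, converges to Exp$(1)$: the two-sided geometric-versus-exponential comparisons of Lemmas~\ref{lemmaGeomDistTail} and~\ref{lemmaGeomFront} turn this into the uniform-in-$u$ estimates, with the destination asymptotically independent of the exit time and of law $\mu(E(\epsilon)\cap h^{-1}(\widetilde{\Omega}_{k'}))/\mu(E(\epsilon))$. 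Sending $\epsilon\downarrow0$, the continuity-of-measure argument from the proof of Lemma~\ref{lemmaOverflowProb} (no $\mu$-mass on $\partial E(\epsilon)$ nor on the preimages of field boundaries) gives $\mu(E(\epsilon))\to\nu^\Omega$ and $\mu(E(\epsilon)\cap h^{-1}(\widetilde{\Omega}_{k'}))\to\nu^\Omega_{k'}$; hence for all small enough $\epsilon$ both are within $C$ of $\nu^\Omega$ and $\nu^\Omega_{k'}$, which is exactly how the additive $C$'s and the $(1\pm C)$ exponents enter \eqref{goal 1 prop first exit}--\eqref{goal 2 prop first exit}.

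\textbf{Passing from $\sigma$ to $\tau$.} On the typical escape event, $X^\eta_{\sigma(\eta)}$ lands in $\widetilde{\Omega}_{J_\sigma(\eta)}$ at a distance bounded away from its endpoints: Lemma~\ref{lemma atypical 3 } rules out, up to an event of probability $o\big((H(1/\eta)/\eta)^{l^*-1}\big)$ after $\epsilon\downarrow0$, the iterate lying $\epsilon$-close to any saddle at time $T^\eta_{l^*}(\delta)$. Applying Lemma~\ref{lemma return to local minimum quickly} to the field $\widetilde{\Omega}_{J_\sigma(\eta)}$ then gives that, within $O(\log(1/\epsilon)/\eta)$ further steps and without leaving $\widetilde{\Omega}_{J_\sigma(\eta)}$, the iterates reach $B(\widetilde{m}_{J_\sigma(\eta)},2\epsilon)$ with probability tending to $1$; since $\lambda(\eta)\log(1/\epsilon)/\eta\to0$, this yields \eqref{goal 5 prop first exit}. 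Combining \eqref{goal 5 prop first exit} with \eqref{goal 1 prop first exit}--\eqref{goal 2 prop first exit}---using that on the good event $\{X^\eta_{\tau(\eta,\epsilon)}\in B(\widetilde{m}_{k'},2\epsilon)\}$ coincides with $\{J_\sigma(\eta)=k'\}$ and $\lambda(\eta)\tau(\eta,\epsilon)=\lambda(\eta)\sigma(\eta)+o(1)$---produces \eqref{goal 3 prop first exit}--\eqref{goal 4 prop first exit}, the extra bad event of probability at most $2C$ accounting for the larger additive constant there.

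\textbf{Main obstacle.} The crux is the renewal argument of the second paragraph: rigorously establishing that the cycle chain is close enough to an i.i.d.\ sequence---uniformly in $\eta$, uniformly over $x\in[-2\epsilon,2\epsilon]$, with the exceptional events (non-$l^*$-jump exits, slow returns, reflections at $\pm L$, landings near saddles) controlled simultaneously---so that the doubly-geometric structure genuinely rescales to an exponential, all while keeping the nested limits in the right order ($\delta$ fixed first, then $\eta\downarrow0$, then $\epsilon\downarrow0$) and tracking how the accuracy parameter $\epsilon$ degrades $\nu^\Omega,\nu^\Omega_{k'}$ into the $C$-perturbed bounds stated. Once this cycle decomposition is in hand, the remaining steps are bookkeeping with the already-established lemmas.
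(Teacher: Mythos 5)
Your proposal follows essentially the same route as the paper: an attempt/cycle decomposition anchored at $[-2\epsilon,2\epsilon]$, a per-cycle escape probability read off the overflow lemma, a compound-geometric-to-exponential passage via Lemmas~\ref{lemmaGeomDistTail}--\ref{lemmaGeomFront}, and then \eqref{goal 5 prop first exit} deduced from Lemma~\ref{lemma atypical 3 } plus Lemma~\ref{lemma return to local minimum quickly} in the destination field, with \eqref{goal 3 prop first exit}--\eqref{goal 4 prop first exit} obtained by combining \eqref{goal 1 prop first exit}--\eqref{goal 2 prop first exit} with \eqref{goal 5 prop first exit}. This is exactly how the paper proceeds; what you label the ``main obstacle''---making the cycle chain behave like an i.i.d.\ sequence uniformly in $\eta$ and $x$ while simultaneously controlling all atypical escape scenarios---is precisely the content of the paper's event $\textbf{A}^\times(\epsilon,\delta,\eta)$ and Lemma~\ref{lemma bound on event A cross final}, so the remaining work is the careful bookkeeping you already anticipate rather than a new idea.
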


Before presenting the proof to Proposition \ref{proposition first exit time Gradient Clipping}, we make some preparations. First, we introduce stopping times (for all $k \geq 1$)
\begin{align*}
    \tau_k(\epsilon,\delta,\eta) & = \min\{ n > \widetilde{\tau}_{k-1}(\epsilon,\delta,\eta): \eta|Z_n| > \delta \} \\
    \widetilde{\tau}_k(\epsilon,\delta,\eta) & = \min\{ n \geq \tau_k(\epsilon,\delta,\eta): |X^\eta_n| \leq 2\epsilon  \}
\end{align*}
with the convention that $\tau_0(\epsilon,\delta,\eta) = \widetilde{\tau}_0(\epsilon,\delta,\eta) = 0.$ The intuitive interpretation is as follows. For the fixed $\epsilon$ we treat $[-2\epsilon,2\epsilon]$ as a small neighborhood of the local minimum of the attraction field $\Omega$. All the $\widetilde{\tau}_k$ partitioned the entire timeline into different \textit{attempts} of escaping $\Omega$. The interval $[\widetilde{\tau}_{k-1},\widetilde{\tau}_k]$ can be viewed as the $k-$th \textit{attempt}. If for $\sigma(\eta)$, the first exit time defined in \cref{def first exit time sigma}, we have $\sigma(\eta) > \widetilde{\tau}_k$, then we consider the $k-$th attempt of escape as a \textit{failure} because the SGD iterates returned to this small neighborhood of the local minimum again without exiting the attraction field. On the other hand, the stopping times $\tau_{k-1}$ indicate the arrival time of the first large jump during the $k-$th \textit{attempt}. The proviso that $\widetilde{\tau}_k \geq \tau_{k-1}$ can be interpreted, intuitively, as that an \textit{attempt} is considered failed only if, after some significant efforts to exit (for instance, a large jump) has been observed, the SGD iterates still returned to the small neighborhood $[-2\epsilon,2\epsilon]$. Regarding the notations, we add a remark that when there is no ambiguity we will drop the dependency on $\epsilon,\delta,\eta$ and simply write $\tau_k, \widetilde{\tau}_k$.

To facilitate the characterization of events during each \textit{attempt}, we introduce the following definitions. First, for all $k \geq 1$, let
\begin{align*}
    \textbf{j}_k & \delequal{} \#\{ n = \tau_{k - 1}(\epsilon,\delta,\eta),\tau_{k-1}(\epsilon,\delta,\eta) + 1,\cdots, \widetilde{\tau}_k(\epsilon,\delta,\eta)\wedge \sigma(\eta): \eta|Z_n| > \delta \}
\end{align*}
be the number of large jumps during the $k-$th \textit{attempt}. Two implications of this definition: 
\begin{itemize}
    \item First, for any $k$ with $\sigma(\eta) < \widetilde{\tau}_k$, we have $\textbf{j}_k = 0$. Note that this proposition concerns the dynamics of SGD up until $\sigma(\eta)$, the first time the SGD iterates escaped from $\Omega$, so there is no need to consider an attempt that is after $\sigma(\eta)$, and we will not do so in the analysis below;
    \item Besides, the random variable $\textbf{j}_k$ is measurable w.r.t. $\mathcal{F}_{\widetilde{\tau}_k \wedge \sigma(\eta)}$, the stopped $\sigma-$algebra generated by the stopping time $\widetilde{\tau}_k \wedge \sigma(\eta)$.
\end{itemize}

Furthermore, for each $k = 1,2,\cdots$, let
\begin{align*}
    T_{k,1}(\epsilon,\delta,\eta) & = \tau_{k-1}(\epsilon,\delta,\eta)\wedge \sigma(\eta) , \\
    T_{k,j}(\epsilon,\delta,\eta) & = \min\{ n > T_{k,j-1}(\epsilon,\delta,\eta): \eta|Z_n| > \delta \} \wedge \sigma(\eta) \wedge \widetilde{\tau}_k \ \ \forall j \geq 2, \\
    W_{k,j}(\epsilon,\delta,\eta) & = Z_{T_{k,j}(\epsilon,\delta,\eta)}\ \ \forall j \geq 1
\end{align*}
with the convention $T_{k,0}(\epsilon,\delta,\eta) = \widetilde{\tau}_{k-1}(\epsilon,\delta,\eta)$. 
Note that for any $k \geq 1, j \geq 1$, $T_{k,j}$ is a stopping time. Besides, from the definition of $\textbf{j}_k$, one can see that
\begin{align}
    \widetilde{\tau}_{k-1} + 1 \leq T_{k,j} \leq \widetilde{\tau}_k \wedge \sigma(\eta)\ \ \forall j \in [\textbf{j}_k], \label{proof prop first exit implication of j k counting}
\end{align}
and the sequences $\big( T_{k,j} \big)_{j = 1}^{\textbf{j}_k}$ and $\big( W_{k,j} \big)_{j = 1}^{\textbf{j}_k}$
are the arrival times and sizes of \textit{large} jumps during the $k-$th attempt, respectively. Again, when there is no ambiguity we will drop the dependency on $\epsilon,\delta,\eta$ and simply write $T_{k,j}$ and $W_{k,j}$.

In order to prove Proposition \ref{proposition first exit time Gradient Clipping}, we analyze the most likely scenario that the exit from $\Omega$ would happen. Specifically, we will introduce a series of events with superscript $\times$ or $\circ$, where $\times$ indicates that the event is \textit{atypical} or unlikely to happen and $\circ$ means that it is a \textit{typical} event and is likely to be observed before the first exit from the attraction field $\Omega$. Besides, the subscript $k$ indicates that the event in discussion concerns the dynamics of SGD during the $k-$th attempt. Our goal is to show that for some event $\textbf{A}^\times(\epsilon,\delta,\eta)$ its probability becomes sufficiently small as learning rate $\eta$ tends to 0, so the escape from $\Omega$ almost always occurs in the manner described by $(\textbf{A}^\times(\epsilon,\delta,\eta))^c$. In particular, the definition of this \textit{atypical} scenario $\textbf{A}^\times$ involves the union of some \textit{atypical} events $\textbf{A}^\times_k,\textbf{B}^\times_k$ that occur in the $k-$th attempt. In other words, the intuition of $\textbf{A}^\times$ is that something \textit{abnormal} happened during one of the attempts before the final exit. 

Here is one more comment for the general naming convention of these events. Events with label $\textbf{A}$ often describe the ``efforts'' made in an attempt to get out of $\Omega$ (such as large noises), while those with label $\textbf{B}$ concern how the SGD iterates return to $[-2\epsilon,2\epsilon]$ (and how this attempt fails). For instance, $\textbf{A}^\times_k$ discusses the unlikely scenario before $T_{k,l^*}$, the arrival of the $l^*-$th large jump in this attempt, while $\textbf{B}^\times_k$ in general discusses the abnormal cases after $T_{k,l^*}$ and before the return to $[-2\epsilon,2\epsilon]$. On the other hand, $\textbf{A}^\circ_k$ describes a successful escape during $k-$th attempt, while $\textbf{B}^\circ_k$ means that during this attempt the iterates return to without spending too much time.

Now we proceed and provide a formal definition and analysis of the aforementioned series of events. As building blocks, we inspect the process $(X^\eta_n)_{n \geq 1}$ at a even finer granularity, and bound the probability of some events $(\textbf{A}^\times_{k,i})_{i \geq 0},\ (\textbf{B}^\times_{k,i})_{i \geq 1}$ detailing several cases that are \textit{unlikely} to occur during the escape from or return to local minimum in the $k-$th attempt. First, for each $k \geq 1$, define the event
\begin{align}
    & \textbf{A}^\times_{k,0}(\epsilon,\delta,\eta) \delequal{} \Big\{\exists i = 0,1,\cdots,l^*\wedge \textbf{j}_k\ s.t. \nonumber 
    \\
    & \ \ \ \ \ \ \ \ \ \ \ \ \  \max_{j = T_{k,i}+1,\cdots,(T_{k,i+1} - 1)\wedge \widetilde{\tau}_k \wedge \sigma(\eta) } \eta|Z_{ T_{k,i}+1}^{\leq \delta,\eta} + \cdots + Z_j^{\leq \delta,\eta}| > \frac{\epsilon}{3\bar{\rho} + 3\widetilde{\rho} + 3}  \Big\}. \label{def A cross k 0 in prop first exit} 
\end{align}
Intuitively speaking, the event characterizes the atypical scenario where, during the $k-$th attempt, there is some large fluctuations (compared to $\widetilde{\epsilon}$) between any of the first $l^*$ large jumps (or the first $\textbf{j}_k$ large jumps in case that $\textbf{j}_k < l^*$). Similarly, consider event (for all $k \geq 1$)
\begin{align}
    \textbf{A}^\times_{k,1}(\epsilon,\delta,\eta) \delequal{}\Big\{ \sigma(\eta)<\widetilde{\tau}_k,\ \textbf{j}_k < l^* \Big\} \label{def A cross k 1 in prop first exit}
\end{align}
that describes the atypical case where the exit occurs during the $k-$th attempt with less than $l^*$ large jumps. Next, for all $k \geq 1$ we have another atypical event (note that from \cref{proof prop first exit implication of j k counting} we can see that, for any $j \geq 1$, $\textbf{j}_k \geq j$ implies $T_{k,j} \leq \sigma(\eta) \wedge \widetilde{\tau}_k$)
\begin{align}
    \textbf{A}^\times_{k,2} \delequal{}\Big\{\textbf{j}_k \geq l^*,\ \exists j = 2,3,\cdots,l^*\ s.t.\ T_{k,j} - T_{k,j-1} > 2\hat{t}(\epsilon)/\eta \Big\}. \label{def A cross k 2 in prop first exit}
\end{align}
representing the case where we have at least $l^*$ large noises during the $k-$th attempt, but for some of the large noise (from the 2nd to the $l^*$-th), the inter-arrival time is unusually long. Moving on, we consider the following events (defined for all $k \geq 1$)
\begin{align}
    \textbf{A}^\times_{k,3} \delequal{}\Big\{\textbf{j}_k < l^*,\ \widetilde{\tau}_k < \sigma(\eta),\ \widetilde{\tau}_k - T_{k,1} > 2l^*\hat{t}(\epsilon)/\eta \Big\}  \label{def A cross k 3 in prop first exit}
\end{align}
that describes the atypical case where the $k-$th attempt failed but the return to the small neighborhood $[-2\epsilon,2\epsilon]$ took unusually long time.

The following event also concerns the scenario where there are at least $l^*$ large noises during the $k-$th attempt:
\begin{align}
    \textbf{A}^\times_{k,4}\delequal{}\Big\{\textbf{j}_k \geq l^*,\ |X^\eta_{ T_{k,l^*} }| \geq r - \bar{\epsilon},\ \exists j = 1,2,\cdots,l^*\ s.t.\ \eta|W_{k,j}| \leq \bar{\delta} \Big\}; \label{def A cross k 4 in prop first exit}
\end{align}
specifically, it describes the atypical case where, during this attempt, right after the $l^*-$ large noise the SGD iterate is far enough from the local minimum yet some of the large noises are not that \textit{large}.
Lastly, by defining events
\begin{align}
    \textbf{A}^\times_{k,5} \delequal{}\Big\{\textbf{j}_k \geq l^*,\ T_{k,l^*} \leq \sigma(\eta)\wedge \widetilde{\tau}_k,\ X^\eta_{T_{k,l^*}} \in \bigcup_{ j \in [n_\text{min} - 1] } [s_j - \epsilon,s_j + \epsilon]\Big\}, \label{def event A cross k 5}
\end{align}
we analyze an atypical case where the SGD iterates arrive at somewhere too close to the boundaries of $\Omega$ at the arrival time of the $l^*$ large noise during this attempt. 
As an amalgamation of these atypical scenarios, we let
\begin{align}
    \textbf{A}^\times_k(\epsilon,\delta,\eta) \delequal{} \bigcup_{i = 0}^5 \textbf{A}^\times_{k,i}(\epsilon,\delta,\eta). \label{def event A cross union}
\end{align}
Also, we analyze the probability of some events $(\textbf{B}^\times_k)_{k \geq 1}$ that concern the SGD dynamics after the $l^*-$th large noise during the $k-$th attempt. Let us define
\begin{align}
    \textbf{B}^\times_{k,1}(\epsilon,\delta,\eta) &\delequal{} \Big\{\textbf{j}_k \geq l^*,\ X^\eta_{ T_{k,l^*} }\in [s_- + \epsilon,s_+ - \epsilon],\ T_{k,j} - T_{k,j-1} \leq 2\frac{\hat{t}(\epsilon)}{\eta}\ \forall j = 2,3,\cdots,l^*  \Big\} \nonumber
    \\ 
    \textbf{B}^\times_{k,2}(\epsilon,\delta,\eta) & \delequal{} \{ \widetilde{\tau}_k - T_{k,l^*} > \rho(\epsilon)/\eta \} \cup  \{ \sigma(\eta) < \widetilde{\tau}_k \} \nonumber
    \\
    \textbf{B}^\times_k(\epsilon,\delta,\eta) & \delequal{} \textbf{B}^\times_{k,1} \cap \textbf{B}^\times_{k,2} \label{proof prop first exit def B cross k}
\end{align}
where $\rho(\cdot)$ is the function in Lemma \ref{lemma return to local minimum quickly}. From the definition of $\textbf{B}^\times_k$, in particular the inclusion of $\textbf{B}^\times_{k,2}$, one can see that the intuitive interpretation of event $\textbf{B}^\times_k$ is that the SGD iterates \textit{did not return} to local minimum efficiently (or simply escaped from the attraction field) after the $l^*-$th large noise during the $k-$th attempt.
In comparison, the following events will characterize what would typically happen during each attempt:
 \begin{align}
        \textbf{A}^\circ_k(\epsilon,\delta,\eta) &\delequal{} \{\textbf{j}_k \geq l^*,\ \sigma(\eta) = T_{k,l^*},\ X^\eta_{T_{k,l^*}} \notin [s_- -\epsilon, s_+ + \epsilon], \nonumber 
        \\
        & \ \ \ \ \ \ \ \ \ \ \ \ \ \ \ \ \ \ \ T_{k,j} - T_{k,j-1} \leq \frac{2\hat{t}(\epsilon)}{\eta}\ \forall j = 2,3,\cdots,l^* \}, \label{def event A circ k prop first exit}
        \\ 
        \textbf{B}^\circ_k(\epsilon,\delta,\eta) &\delequal{} \{ \sigma(\eta) > \widetilde{\tau}_k,\ \widetilde{\tau}_k - T_{k,1} \leq \frac{2l^*\hat{t}(\epsilon) + \rho(\epsilon)}{\eta} \}. \label{proof def event B circ k}
    \end{align}
Intuitively speaking, $\textbf{A}^\circ_k$ tells us that the exit happened right at $T_{k,l^*}$, the arrival time of the $l^*-$th large noise during the $k-$th attempt, and $\textbf{B}^\circ_k$ tells us that the first exit from $\Omega$ did not occur during the $k-$th attempt, and the SGD iterates returned to local minimum rather efficiently. All the preparations above allow use to define
\begin{align}
    \textbf{A}^\times(\epsilon,\delta,\eta) \delequal{} \bigcup_{k \geq 1}\bigg( \bigcap_{i = 1}^{k-1}\big( \textbf{A}^\times_i \cup \textbf{B}^\times_i \cup \textbf{A}^\circ_i \big)^c    \bigg) \cap \big(\textbf{A}^\times_k \cup \textbf{B}^\times_k \big). \label{def event A cross final}
\end{align}

We need the next lemma in the proof of Proposition \ref{proposition first exit time Gradient Clipping}. As mentioned earlier, the takeaway is that $\textbf{A}^\times$ is indeed \textit{atypical} in the sense that we will almost always observe $(\textbf{A}^\times)^c$.

\begin{lemma} \label{lemma bound on event A cross final}
Given any $C > 0$, the following claim holds for all $\epsilon>0,\delta>0$ sufficiently small:
\begin{align*}
    \limsup_{\eta\downarrow 0} \sup_{|x| \leq 2\epsilon} \mathbb{P}_x(\textbf{A}^\times(\epsilon,\delta,\eta)) < C.
\end{align*}
\end{lemma}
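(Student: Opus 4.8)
\textbf{Proof strategy for Lemma \ref{lemma bound on event A cross final}.}

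The plan is to decompose $\textbf{A}^\times(\epsilon,\delta,\eta)$ according to which attempt $k$ is the first one on which an atypical event $\textbf{A}^\times_k \cup \textbf{B}^\times_k$ occurs, and to control the contribution of each attempt by a two-step argument: (1) show that, conditioned on the information available at the start of attempt $k$ (which by the strong Markov property leaves $X^\eta_n$ started in $[-2\epsilon,2\epsilon]$ for all relevant purposes, since each attempt begins at $\widetilde\tau_{k-1}$ with $|X^\eta_{\widetilde\tau_{k-1}}|\le 2\epsilon$), the probability that attempt $k$ produces an atypical event is $o(\lambda(\eta))$ relative to the probability that it produces a successful exit $\textbf{A}^\circ_k$; and (2) sum the resulting geometric series. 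Concretely, I would first fix $N$ large (say $N>\alpha l^*$) and $C>0$, choose $\epsilon$ small, then invoke the technical lemmas already proved in this section to bound, uniformly over starting points in $[-2\epsilon,2\epsilon]$, the one-attempt probabilities:
\begin{align*}
&\mathbb{P}_x(\textbf{A}^\times_{k,0}\text{ occurs in one attempt}) = o(\eta^N)\quad\text{(Corollary \ref{corollary no LD before large jump}, Lemma \ref{lemma stuck at local minimum before large jump})},\\
&\mathbb{P}_x(\textbf{A}^\times_{k,1}) = o(\eta^N)\quad\text{(Lemma \ref{lemma atypical 1 exit before l star jumps})},\\
&\mathbb{P}_x(\textbf{A}^\times_{k,2}) = o(\eta^N)\quad\text{(Corollary \ref{corollary atypical 1})},\\
&\mathbb{P}_x(\textbf{A}^\times_{k,3}) = o(\eta^N)\quad\text{(Corollary \ref{corollary atypical 2})},\\
&\mathbb{P}_x(\textbf{A}^\times_{k,4}) = o(\eta^N)\quad\text{(Lemma \ref{lemma atypical 2 })},\\
&\mathbb{P}_x(\textbf{A}^\times_{k,5}) \le \delta^\alpha\Psi(\epsilon)\big(H(1/\eta)/\eta\big)^{l^*-1}(1+o(1))\quad\text{(Lemma \ref{lemma atypical 3 })},\\
&\mathbb{P}_x(\textbf{B}^\times_{k}) = o\big(\lambda(\eta)\big)\quad\text{(Lemma \ref{lemma return to local minimum quickly} applied after }T_{k,l^*}\text{, via }\textbf{B}^\times_{k,2}).
\end{align*}
Here $\textbf{B}^\times_k$ needs a short extra argument: on $\textbf{B}^\times_{k,1}$ the iterate is at $T_{k,l^*}$ still inside $[s_-+\epsilon,s_+-\epsilon]$ with all inter-arrival gaps $\le 2\hat t(\epsilon)/\eta$, so we may restart the return-to-minimum analysis from $X^\eta_{T_{k,l^*}}$; Lemma \ref{lemma return to local minimum quickly} then says the return to $[-2\epsilon,2\epsilon]$ without exiting $\Omega$ happens with probability $1-o(1)$ and within $\rho(\epsilon)/\eta$ steps, so $\textbf{B}^\times_{k,2}$ — either a long return or a mid-attempt exit — has probability $o(1)$; combining with the fact that reaching $T_{k,l^*}$ at all requires $l^*$ large jumps in $\Omega$ (probability $O\big((H(1/\eta))^{l^*}\big)$ within $O(1/\eta)$ steps $\ldots$ actually $O(\lambda(\eta)\cdot\eta/H(1/\eta))\cdot$—cleaner to just say: conditioning on being on $\textbf{B}^\times_{k,1}$, which already costs $O(\lambda(\eta)\eta/H(1/\eta))\cdot H(1/\eta)=o(\lambda(\eta))$ ... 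I will instead bound $\textbf{B}^\times_k$ directly by noting $\textbf{B}^\times_{k,1}$ forces $l^*$ large jumps, contributing a factor $(H(1/\eta))^{l^*}(1/\eta)^{l^*-1}=\lambda(\eta)\cdot o(1)$ is false — rather $\textbf{B}^\times_{k,1}$ has probability $\asymp \lambda(\eta)$, and the additional $\textbf{B}^\times_{k,2}$ multiplies by the vanishing factor $o(1)$, giving $o(\lambda(\eta))$).

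The second step is the geometric summation. Writing $p^\circ(\eta)\delequal{}\inf_{|x|\le2\epsilon}\mathbb{P}_x(\textbf{A}^\circ_k\text{ on one attempt})$, Lemma \ref{lemma lower bound typical exit} gives $\liminf_\eta p^\circ(\eta)/\big(H(1/\eta)/\eta\big)^{l^*-1}\ge c_*\delta^\alpha>0$, hence $p^\circ(\eta)\ge c_*\delta^\alpha\big(H(1/\eta)/\eta\big)^{l^*-1}$ eventually; note $\lambda(\eta)=H(1/\eta)\big(H(1/\eta)/\eta\big)^{l^*-1}$, so $p^\circ(\eta)\gtrsim \lambda(\eta)/H(1/\eta)$, which $\to\infty$ relative to $\lambda(\eta)$ — the point is $p^\circ(\eta)$ is much larger than the atypical-attempt probabilities. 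Since the events $\bigcap_{i<k}(\textbf{A}^\times_i\cup\textbf{B}^\times_i\cup\textbf{A}^\circ_i)^c$ are decreasing in $k$ and each factor fails (i.e., the attempt is "clean" and does not exit) with probability at most $1-p^\circ(\eta)+o(p^\circ(\eta))$, the strong Markov property at the $\widetilde\tau_k$'s yields
\begin{align*}
\sup_{|x|\le2\epsilon}\mathbb{P}_x(\textbf{A}^\times(\epsilon,\delta,\eta)) \le \sum_{k\ge1}(1-p^\circ(\eta))^{k-1}\Big(p^\times(\eta)+\delta^\alpha\Psi(\epsilon)\big(H(1/\eta)/\eta\big)^{l^*-1}+o(\lambda(\eta))\Big),
\end{align*}
where $p^\times(\eta)=o(\eta^N)$ collects all the $o(\eta^N)$ one-attempt bounds. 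The geometric sum equals $p^\circ(\eta)^{-1}$ times the bracket, and since $p^\circ(\eta)\asymp \delta^\alpha\big(H(1/\eta)/\eta\big)^{l^*-1}$ while the bracket is $o(\eta^N)+\delta^\alpha\Psi(\epsilon)\big(H(1/\eta)/\eta\big)^{l^*-1}+o(\lambda(\eta))$, we get
\begin{align*}
\limsup_{\eta\downarrow0}\sup_{|x|\le2\epsilon}\mathbb{P}_x(\textbf{A}^\times(\epsilon,\delta,\eta))\le \frac{\Psi(\epsilon)}{c_*} + 0,
\end{align*}
using $\lambda(\eta)/\big(H(1/\eta)/\eta\big)^{l^*-1}=H(1/\eta)\to0$ and $o(\eta^N)/\big(H(1/\eta)/\eta\big)^{l^*-1}\to0$ for $N$ large (recall $H\in\mathcal{RV}_{-\alpha}$). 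Since $\Psi(\epsilon)\to0$ as $\epsilon\downarrow0$ by Lemma \ref{lemma atypical 3 }, choosing $\epsilon$ small enough that $\Psi(\epsilon)/c_*<C$ (and then $\delta$ small enough that all the cited lemmas apply for this $\epsilon$) completes the proof.

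\textbf{Main obstacle.} The delicate point is the bookkeeping in the geometric-series step: the one-attempt "failure" probability must be shown to be $\le 1-(1-o(1))p^\circ(\eta)$ uniformly over admissible starting points and uniformly in $k$, which requires that on the complement of all atypical events an attempt that does not end in exit does return to $[-2\epsilon,2\epsilon]$ within a bounded-in-$\eta$-scaled time (so that the attempts genuinely tile the horizon and the strong Markov restart is legitimate); this is exactly what the $\textbf{B}^\circ_k$ bound and Lemma \ref{lemma return to local minimum quickly} are for, but assembling the uniform estimate — in particular ruling out pathological attempts that neither exit nor return promptly, which is the role of $\textbf{A}^\times_{k,3},\textbf{A}^\times_{k,2},\textbf{B}^\times_k$ — is where the real care is needed. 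Everything else is a matter of collecting the already-established $o(\eta^N)$ and $\delta^\alpha\Psi(\epsilon)\big(H(1/\eta)/\eta\big)^{l^*-1}$ bounds and dividing by $p^\circ(\eta)$.
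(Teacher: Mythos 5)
Your proposal takes essentially the same route as the paper: decompose $\textbf{A}^\times$ by the first attempt on which an atypical event occurs, bound each one-attempt atypical probability uniformly using Lemmas \ref{lemma atypical 1 exit before l star jumps}--\ref{lemma atypical 3 } and Corollaries \ref{corollary no LD before large jump}--\ref{corollary atypical 2}, use Lemma \ref{lemma lower bound typical exit} as the lower bound on the per-attempt success probability $\textbf{A}^\circ_k$, and close with a geometric series and the strong Markov property. Two points deserve correction or elaboration.

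First, your claim that $\mathbb{P}_x(\textbf{B}^\times_k \mid \cdots) = o(\lambda(\eta))$ is not justified. The inner factor controlled by Lemma \ref{lemma return to local minimum quickly} is only $o(1)$ with no rate; you cannot turn it into $O(H(1/\eta))$. The correct move (which the paper uses) is: bound the probability of $\textbf{B}^\times_{k,1}$ by $\lesssim (H(\delta/\eta)\hat t(\epsilon)/\eta)^{l^*-1} \asymp (\hat t(\epsilon)/\delta^\alpha)^{l^*-1}(H(1/\eta)/\eta)^{l^*-1}$ (via Lemma \ref{lemmaGeomFront}), then bound the conditional probability of $\textbf{B}^\times_{k,2}$ given $\textbf{B}^\times_{k,1}$ by a \emph{fixed small constant} for $\eta$ small enough, chosen of the form $\Psi(\epsilon)\delta^\alpha/(4(\hat t(\epsilon)/\delta^\alpha)^{l^*-1})$ so the product is $\Psi(\epsilon)\delta^\alpha(H(1/\eta)/\eta)^{l^*-1}$. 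After dividing by the $\textbf{A}^\circ_k$ lower bound $c_*\delta^\alpha(H(1/\eta)/\eta)^{l^*-1}$, this contributes another $\Psi(\epsilon)/c_*$ to the final bound rather than vanishing, as your $o(\lambda(\eta))$ claim would suggest. Since $\Psi(\epsilon)\downarrow 0$, the conclusion still holds, but the intermediate estimate as you stated it is wrong.

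Second, the tiling argument you flag as the ``main obstacle'' requires the inclusion
\begin{align*}
    \Big(\bigcap_{j = 1}^{k-1}(\textbf{A}^\times_j \cup \textbf{B}^\times_j)^c \cap\textbf{B}^\circ_j\Big)\cap ( \textbf{A}^\times_k \cup \textbf{B}^\times_k )^c \subseteq \textbf{A}^\circ_k \cup \textbf{B}^\circ_k
\end{align*}
together with the mutual exclusivity of $\textbf{A}^\circ_k$ and $\textbf{B}^\circ_k$, so that ``no atypical event at attempt $k$'' is the same as ``attempt $k$ either exits cleanly or returns cleanly.'' This is established by case analysis on $\textbf{j}_k < l^*$ versus $\textbf{j}_k \geq l^*$ and on whether $X^\eta_{T_{k,l^*}}$ is inside or outside $\Omega$, using the exclusions of each $\textbf{A}^\times_{k,i}$ and $\textbf{B}^\times_k$ to pin down the remaining possibilities. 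Without this identity the geometric-series decomposition of $\textbf{A}^\times$ as $\bigcup_k \big(\bigcap_{i<k}(\textbf{A}^\times_i\cup\textbf{B}^\times_i)^c\cap\textbf{B}^\circ_i\big)\cap(\textbf{A}^\times_k\cup\textbf{B}^\times_k)$ does not go through; it is not optional bookkeeping but the step that legitimizes the strong-Markov-at-$\widetilde\tau_{k-1}$ restart you rely on, and should be written out.
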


\begin{proof}

We fix some parameters for the proof. First, with out loss of generality we only consider $C \in (0,1)$, and we fix some $N > \alpha l^*$. Next we discuss the valid range of $\epsilon$ for the claim to hold. We only consider $\epsilon > 0$ such that
\begin{align*}
    \epsilon < \frac{\bar{\epsilon}}{ 6(\bar{\rho}+\widetilde{\rho} + 3) }\wedge \frac{\epsilon_0}{3}
\end{align*}
where $\bar{\rho}$ and $\widetilde{\rho}$ are the constants in Corollary \ref{corollary ODE GD gap} and Corollary \ref{corollary sgd gd gap} respectively, and $\epsilon_0$ is the constant in \cref{assumption detailed function f at critical point}.  Moreover, recall function $\Psi$ in Lemma \ref{lemma atypical 3 } and the constant $c_* > 0$ in Lemma \ref{lemma lower bound typical exit}. Due to $\lim_{\epsilon \downarrow 0}\Psi(\epsilon) = 0$, it holds for all $\epsilon$ small enough such that 
    \begin{align}
        \frac{ 3\Psi(\epsilon) }{ c_* } & < C \label{proof prop choose epsilon 3}
    \end{align}
In our proof we only consider $\epsilon$ small enough so the inequality above holds, and the claim in Lemma \ref{lemma return to local minimum quickly} holds. Now we specify the valid range of parameter $\delta$ that will be used below:
\begin{itemize}
    \item For all sufficiently small $\delta > 0$, the claim in Lemma \ref{lemma stuck at local minimum before large jump} will hold for the prescribed $\epsilon$ and with $N_0 = N$;
    \item For all sufficiently small $\delta > 0$, the claims in Lemma \ref{lemma atypical 1 exit before l star jumps}, Corollary \ref{corollary atypical 1}, Corollary \ref{corollary atypical 2} and Lemma \ref{lemma atypical 2 } will hold with the prescribed $\epsilon$ and $N$;
    \item For all sufficiently small $\delta > 0$, the inequalities in Lemma \ref{lemma atypical 3 } and \ref{lemma lower bound typical exit} will hold for the $\epsilon$ we fixed at the beginning.
\end{itemize}
We show that the claim holds for any $\epsilon,\delta$ small enough to satisfy the conditions above.

First, recall that
\begin{align*}
    & \textbf{A}^\times_{k,0}(\epsilon,\delta,\eta) \delequal{} \Big\{\exists i = 0,1,\cdots,l^*\wedge \textbf{j}_k\ s.t. 
    \\
    & \ \ \ \ \ \ \ \ \ \ \ \ \  \max_{j = T_{k,i}+1,\cdots,(T_{k,i+1} - 1)\wedge \widetilde{\tau}_k \wedge \sigma(\eta) } \eta|Z_{ T_{k,i}+1}^{\leq \delta,\eta} + \cdots + Z_j^{\leq \delta,\eta}| > \frac{\epsilon}{3\bar{\rho} + 3\widetilde{\rho} + 3}  \Big\}.
\end{align*}
Due to our choice of $\delta$ stated earlier and Lemma \ref{lemma prob event A}, there exists some $\eta_0 > 0$ such that for all $\eta \in (0,\eta_0)$,
\begin{align}
    \mathbb{P}\big(\textbf{A}^\times_{k,0}(\epsilon,\delta,\eta)  \big) \leq \eta^N \ \ \forall k \geq 1. \label{proof prop first exit A cross k 0 bound}
\end{align}
Similarly, recall that $\textbf{A}^\times_{k,1}(\epsilon,\delta,\eta) \delequal{}\Big\{ \sigma(\eta)<\widetilde{\tau}_k,\ \textbf{j}_k < l^* \Big\}.$
Let us temporarily focus on the first attempt (namely the case $k = 1$). From Lemma \ref{lemma atypical 1 exit before l star jumps} and our choice of $\epsilon$ and $\delta$, we know the existence of some $\eta_1 > 0$ such that
\begin{align}
    \sup_{|x|\leq 2\epsilon}\mathbb{P}_x\big(\textbf{A}^\times_{1,1}(\epsilon,\delta,\eta)  \big) \leq \eta^N \ \ \forall \eta \in (0,\eta_1). \label{proof prop first exit A cross k 1 bound}
\end{align}
Next, for $\textbf{A}^\times_{k,2} \delequal{}\Big\{\textbf{j}_k \geq l^*,\ \exists j = 2,3,\cdots,l^*\ s.t.\ T_{k,j} - T_{k,j-1} > 2\hat{t}(\epsilon)/\eta \Big\},$
from Corollary \ref{corollary atypical 1} and our choice of $\delta$ at the beginning, we have the existence of some $\eta_2 > 0$ such that
\begin{align}
     \sup_{|x|\leq 2\epsilon}\mathbb{P}_x\big(\textbf{A}^\times_{1,2}(\epsilon,\delta,\eta)  \big)\leq \eta^N \ \ \forall \eta \in (0,\eta_2). \label{proof prop first exit A cross k 2 bound}
\end{align}
Moving on, for $\textbf{A}^\times_{k,3} \delequal{}\Big\{\textbf{j}_k < l^*,\ \widetilde{\tau}_k < \sigma(\eta),\ \widetilde{\tau}_k - T_{k,1} > 2l^*\hat{t}(\epsilon)/\eta \Big\}$,
 due to Corollary \ref{corollary atypical 2} and our choice of $\epsilon,\delta$, we have the existence of some $\eta_3 > 0$ such that
\begin{align}
     \sup_{|x|\leq 2\epsilon}\mathbb{P}_x\big(\textbf{A}^\times_{1,3}(\epsilon,\delta,\eta)  \big)\leq \eta^N \ \ \forall \eta \in (0,\eta_3). \label{proof prop first exit A cross k 3 bound}
\end{align}
As for $\textbf{A}^\times_{k,4}\delequal{}\Big\{\textbf{j}_k \geq l^*,\ |X^\eta_{ T_{k,l^*} }| \geq r - \bar{\epsilon},\ \exists j = 1,2,\cdots,l^*\ s.t.\ \eta|W_{k,j}| \leq \bar{\delta} \Big\},$
from Lemma \ref{lemma atypical 2 }, one can see the existence of $\eta_4 > 0$ such that
\begin{align}
    \sup_{|x|\leq 2\epsilon}\mathbb{P}_x\big(\textbf{A}^\times_{1,4}(\epsilon,\delta,\eta)  \big)\leq \eta^N \ \ \forall \eta \in (0,\eta_4). \label{proof prop first exit A cross k 4 bound}
\end{align}
Lastly, for $\textbf{A}^\times_{k,5} \delequal{}\Big\{\textbf{j}_k \geq l^*,\ T_{k,l^*} \leq \sigma(\eta)\wedge \widetilde{\tau}_k,\ X^\eta_{T_{k,l^*}} \in \bigcup_{ j \in [n_\text{min} - 1] } [s_j - \epsilon,s_j + \epsilon]\Big\},$
from Lemma \ref{lemma atypical 3 } we see the existence of $\eta_5 > 0$ such that 
\begin{align}
    \sup_{|x|\leq 2\epsilon}\mathbb{P}_x\big(\textbf{A}^\times_{1,5}(\epsilon,\delta,\eta)  \big)\leq 2\delta^\alpha \Psi(\epsilon) \Big( H(1/\eta)/\eta \Big)^{l^* - 1}\ \ \forall \eta \in (0,\eta_5). \label{proof prop first exit A cross k 5 bound}
\end{align}

Recall that $\textbf{A}^\times_k(\epsilon,\delta,\eta) = \bigcup_{i = 0}^5\textbf{A}^\times_{k,i}(\epsilon,\delta,\eta)$. Also, for definitions of $\textbf{B}^\times_k, \textbf{A}^\circ_k,\textbf{B}^\circ_k$, see \cref{proof prop first exit def B cross k},\cref{def event A circ k prop first exit},\cref{proof def event B circ k}  respectively. Our next goal is to establish bounds regarding the probabilities of these events. First, if we consider the event $\bigcap_{j = 1}^{k}(\textbf{A}^\times_j \cup \textbf{B}^\times_j)^c \cap\textbf{B}^\circ_j$, then the inclusion of the $(\textbf{B}^\circ_j)_{j = 1}^k$ implies that during the first $k$ attempts the SGD iterates have never left the attraction field, so
\begin{align*}
    \bigcap_{j = 1}^{k}(\textbf{A}^\times_j \cup \textbf{B}^\times_j)^c \cap\textbf{B}^\circ_j = \big(\bigcap_{j = 1}^{k}(\textbf{A}^\times_j \cup \textbf{B}^\times_j)^c \cap\textbf{B}^\circ_j\big)\cap \{ \sigma(\eta) > \widetilde{\tau}_k \}.
\end{align*}
Next, note that
\begin{align*}
    & \mathbb{P}_x\big( \textbf{B}^\times_k\ |\ \bigcap_{j = 1}^{k-1}(\textbf{A}^\times_j \cup \textbf{B}^\times_j)^c \cap\textbf{B}^\circ_j  \big) 
    \\
    = &  \mathbb{P}_x\big( \textbf{B}^\times_{k,1}\ |\ \bigcap_{j = 1}^{k-1}(\textbf{A}^\times_j \cup \textbf{B}^\times_j)^c \cap\textbf{B}^\circ_j  \big) \mathbb{P}_x\Big( \textbf{B}^\times_{k,2}\ \Big|\ \big(\bigcap_{j = 1}^{k-1}(\textbf{A}^\times_j \cup \textbf{B}^\times_j)^c \cap\textbf{B}^\circ_j\big)\cap \textbf{B}^\times_{k,1}  \Big) 
    \\
    \leq & \mathbb{P}_x\big( \textbf{j}_k \geq l^*,\ T_{k,j} - T_{k,j-1} \leq \frac{2\hat{t}(\epsilon)}{\eta}\ \forall j = 2,3,\cdots,l^*\ |\ \bigcap_{j = 1}^{k-1}(\textbf{A}^\times_j \cup \textbf{B}^\times_j)^c \cap\textbf{B}^\circ_j  \big)
    \\
    &\ \ \ \ \ \ \ \ \ \ \ \ \cdot \mathbb{P}_x\Big( \textbf{B}^\times_{k,2}\ \Big|\ \big(\bigcap_{j = 1}^{k-1}(\textbf{A}^\times_j \cup \textbf{B}^\times_j)^c \cap\textbf{B}^\circ_j\big)\cap \textbf{B}^\times_{k,1}  \Big).
\end{align*}
From the definition of the events $\textbf{A}^\times_j, \textbf{B}^\times_j, \textbf{B}^\circ_j$, one can see that $\bigcap_{j = 1}^{k-1}(\textbf{A}^\times_j \cup \textbf{B}^\times_j)^c \cap\textbf{B}^\circ_j  \in \mathcal{F}_{ \widetilde{\tau}_{k-1} \wedge \sigma(\eta) }$, and on this event $\bigcap_{j = 1}^{k-1}(\textbf{A}^\times_j \cup \textbf{B}^\times_j)^c \cap\textbf{B}^\circ_j$ we have $\sigma(\eta) > \widetilde{\tau}_{k-1}$. So by applying strong Markov property at stopping time $\widetilde{\tau}_{k-1} \wedge \sigma(\eta)$, we have
\begin{align*}
    & \mathbb{P}_x\big( \textbf{B}^\times_k\ |\ \bigcap_{j = 1}^{k-1}(\textbf{A}^\times_j \cup \textbf{B}^\times_j)^c \cap\textbf{B}^\circ_j  \big) 
    \\
    \leq & \mathbb{P}\Big( T^\eta_j(\delta) - T^\eta_{j-1}(\delta) \leq 2\hat{t}(\epsilon)/\eta\ \forall j \in [l^* - 1] \Big)   \cdot \mathbb{P}_x\Big( \textbf{B}^\times_{k,2}\ \Big|\ \big(\bigcap_{j = 1}^{k-1}(\textbf{A}^\times_j \cup \textbf{B}^\times_j)^c \cap\textbf{B}^\circ_j\big)\cap \textbf{B}^\times_{k,1}  \Big)
    \\
    \leq & 2\Big( H(\delta/\eta)\hat{t}(\epsilon)/\eta \Big)^{l^* - 1}\cdot \mathbb{P}_x\Big( \textbf{B}^\times_{k,2}\ \Big|\ \big(\bigcap_{j = 1}^{k-1}(\textbf{A}^\times_j \cup \textbf{B}^\times_j)^c \cap\textbf{B}^\circ_j\big)\cap \textbf{B}^\times_{k,1}  \Big)
    \\
    &\ \ \text{(for all $\eta$ sufficiently small due to Lemma \ref{lemmaGeomFront})  }
    \\
    \leq & 4\Big( \frac{\hat{t}(\epsilon)}{\delta^\alpha} \Big)^{l^* - 1}\Big( \frac{H(1/\eta)}{\eta} \Big)^{l^* - 1}\cdot \mathbb{P}_x\Big( \textbf{B}^\times_{k,2}\ \Big|\ \big(\bigcap_{j = 1}^{k-1}(\textbf{A}^\times_j \cup \textbf{B}^\times_j)^c \cap\textbf{B}^\circ_j\big)\cap \textbf{B}^\times_{k,1}  \Big)
\end{align*}
for all sufficiently small $\eta$, due to $H \in \mathcal{RV}_{-\alpha}(\eta)$.
 Meanwhile, note that 
 \begin{itemize}
     \item  $\big(\bigcap_{j = 1}^{k-1}(\textbf{A}^\times_j \cup \textbf{B}^\times_j)^c \cap\textbf{B}^\circ_j\big)\cap \textbf{B}^\times_{k,1} \in \mathcal{F}_{ T_{k,l^*}  }$;
     \item on this event $\big(\bigcap_{j = 1}^{k-1}(\textbf{A}^\times_j \cup \textbf{B}^\times_j)^c \cap\textbf{B}^\circ_j\big)\cap \textbf{B}^\times_{k,1}$ we have $\sigma(\eta)\wedge \widetilde{\tau}_k > T_{k,l^*}$ and $X^\eta_{ T_{k,l^*} }\in [s_- + \epsilon,s_+ - \epsilon]$.
 \end{itemize}
Therefore, using Lemma \ref{lemma return to local minimum quickly} and strong Markov property again (at stopping time $T_{k,l^*}$), we know the following inequality holds for all $\eta$ sufficiently small:
    \begin{align*}
        &\sup_{k \geq 1} \sup_{|x| \leq 2\epsilon}\mathbb{P}_x\Big( \textbf{B}^\times_{k,2}\ \Big|\ \big(\bigcap_{j = 1}^{k-1}(\textbf{A}^\times_j \cup \textbf{B}^\times_j)^c \cap\textbf{B}^\circ_j\big)\cap \textbf{B}^\times_{k,1}  \Big)
        \\
        = & \sup_{k \geq 1} \sup_{|x| \leq 2\epsilon}\mathbb{P}_x\Big( \big\{  \sigma(\eta) > \widetilde{\tau}_k,\ \widetilde{\tau}_k - T_{k,l^*} \leq \rho(\epsilon)/\eta \big\}^c\ \Big|\ \big(\bigcap_{j = 1}^{k-1}(\textbf{A}^\times_j \cup \textbf{B}^\times_j)^c \cap\textbf{B}^\circ_j\big)\cap \textbf{B}^\times_{k,1}  \Big)
        \\
        \leq & \Psi(\epsilon)\frac{\delta^\alpha}{4\big( \hat{t}(\epsilon)/\delta^\alpha \big)^{l^* - 1}  }.
    \end{align*}
Therefore, we know the existence of some $\eta_6 > 0$ such that
\begin{align}
    \sup_{|x| \leq 2\epsilon}\mathbb{P}_x\big( \textbf{B}^\times_k\ |\ \bigcap_{j = 1}^{k-1}(\textbf{A}^\times_j \cup \textbf{B}^\times_j)^c \cap\textbf{B}^\circ_j  \big) \leq  \Psi(\epsilon)  \delta^\alpha \Big( \frac{H(1/\eta)}{\eta} \Big)^{l^* - 1}\ \ \forall \eta \in (0,\eta_6),\ \forall k \geq 1. \label{proof prop first exit bound B cross k}
\end{align}

Similarly, we can bound conditional probabilities of the form $\mathbb{P}_x\big( \textbf{A}^\times_k\ |\ \bigcap_{j = 1}^{k-1}(\textbf{A}^\times_j \cup \textbf{B}^\times_j)^c \cap\textbf{B}^\circ_j  \big)$. To be specific, recall that $\textbf{A}^\times_k = \cup_{i = 0}^5\textbf{A}^\times_{k,i}$. By combining \cref{proof prop first exit A cross k 0 bound}-\cref{proof prop first exit A cross k 5 bound} with Markov property, we know the existence of some $\eta_7 > 0$ such that
\begin{align}
     \sup_{|x| \leq 2\epsilon}\mathbb{P}_x\big( \textbf{A}^\times_k\ |\ \bigcap_{j = 1}^{k-1}(\textbf{A}^\times_j \cup \textbf{B}^\times_j)^c \cap\textbf{B}^\circ_j  \big) \leq  5\eta^N + 2\Psi(\epsilon)\delta^\alpha \Big( \frac{H(1/\eta)}{\eta} \Big)^{l^* - 1}\ \ \forall \eta \in (0,\eta_7),\ \forall k \geq 1.  \label{proof prop first exit bound A cross k}
\end{align}

On the other hand, a lower bound can be established for conditional probability involving $\textbf{A}^\circ_k$, the event defined in \cref{def event A circ k prop first exit} describing the exit from $\Omega$ during an attempt with exactly $l^*$ large noises. Using Lemma \ref{lemma lower bound typical exit} and Markov property of $(X^\eta_n)_{n \geq 1}$, one can see the existence of some $\eta_8 > 0$ such that
    \begin{align}
        \inf_{|x| \leq 2\epsilon}\mathbb{P}_x\big(\textbf{A}^\circ_{k}\ |\ \bigcap_{j = 1}^{k-1}(\textbf{A}^\times_j \cup \textbf{B}^\times_j)^c \cap\textbf{B}^\circ_j  \big) \geq c_*\delta^\alpha \Big( \frac{H(1/\eta)}{\eta} \Big)^{l^* - 1}\ \ \forall \eta \in (0,\eta_8).  \label{proof prop first exit bound A circ k}
    \end{align}
In order to apply the bounds \cref{proof prop first exit bound B cross k}-\cref{proof prop first exit bound A circ k}, we make use of the following inclusion relationship:
\begin{align}
    \big(\bigcap_{j = 1}^{k-1}(\textbf{A}^\times_j \cup \textbf{B}^\times_j)^c \cap\textbf{B}^\circ_j\big)\cap ( \textbf{A}^\times_k \cup \textbf{B}^\times_k )^c \subseteq \textbf{A}^\circ_k \cup \textbf{B}^\circ_k. \label{proof prop first exit inclusion A B x o}
\end{align}
To see why this is true, let us consider a decomposition of the event on the L.H.S. of \cref{proof prop first exit inclusion A B x o}. As mentioned above, on event $\bigcap_{j = 1}^{k-1}(\textbf{A}^\times_j \cup \textbf{B}^\times_j)^c \cap\textbf{B}^\circ_j$ we know that $\sigma(\eta) > \widetilde{\tau}_{k-1}$, so the $k-$th attempt occurred and there are only three possibilities on this event:
\begin{itemize}
    \item $\textbf{j}_k < l^*$;
    \item $\textbf{j}_k \geq l^*,\ X^\eta_{T_{k,l^*}} \notin \Omega$;
    \item $\textbf{j}_k \geq l^*,\ X^\eta_{T_{k,l^*}} \in \Omega$.
\end{itemize}
Let us partition the said event accordingly and analyze them one by one.
\begin{itemize}
    \item On $\big(\bigcap_{j = 1}^{k-1}(\textbf{A}^\times_j \cup \textbf{B}^\times_j)^c \cap\textbf{B}^\circ_j\big)\cap ( \textbf{A}^\times_k \cup \textbf{B}^\times_k )^c \cap \{ \textbf{j}_k < l^* \}$, due to the exclusion of $\textbf{A}^\times_k$ (especially $\textbf{A}^\times_{k,1}$ and $\textbf{A}^\times_{k,3}$), we can see that if $\textbf{j}_k < l^*$, then we must have $\sigma(\eta) > \widetilde{\tau}_k$ and $\widetilde{\tau}_k - T_{k,1} \leq 2l^*\hat{t}(\epsilon)/\eta$. Therefore,
    \begin{align*}
        \big(\bigcap_{j = 1}^{k-1}(\textbf{A}^\times_j \cup \textbf{B}^\times_j)^c \cap\textbf{B}^\circ_j\big)\cap ( \textbf{A}^\times_k \cup \textbf{B}^\times_k )^c \cap \{ \textbf{j}_k < l^* \} \subseteq \textbf{B}^\circ_k.
    \end{align*}
    \item On $\big(\bigcap_{j = 1}^{k-1}(\textbf{A}^\times_j \cup \textbf{B}^\times_j)^c \cap\textbf{B}^\circ_j\big)\cap ( \textbf{A}^\times_k \cup \textbf{B}^\times_k )^c \cap \{ \textbf{j}_k \geq l^*,\ X^\eta_{T_{k,l^*}} \notin \Omega \}$, then the exclusion of $\textbf{A}^\times_{k,2}$ implies that $T_{k,j} - T_{k,j-1} \leq 2\hat{t}(\epsilon)/\eta$ for all $j=2,\cdots,l^*$, and the exclusion of $\textbf{A}^\times_{k,5}$ tells us that if $X^\eta_{T_{k,l^*}} \notin \Omega$, then we have $X^\eta_{T_{k,l^*}} \notin [s_- - \epsilon,s_+ +\epsilon]$. In summary,
    \begin{align*}
        \big(\bigcap_{j = 1}^{k-1}(\textbf{A}^\times_j \cup \textbf{B}^\times_j)^c \cap\textbf{B}^\circ_j\big)\cap ( \textbf{A}^\times_k \cup \textbf{B}^\times_k )^c \cap \{ \textbf{j}_k \geq l^*,\ X^\eta_{T_{k,l^*}} \notin \Omega \} \subseteq \textbf{A}^\circ_k.
    \end{align*}
    \item On $\big(\bigcap_{j = 1}^{k-1}(\textbf{A}^\times_j \cup \textbf{B}^\times_j)^c \cap\textbf{B}^\circ_j\big)\cap ( \textbf{A}^\times_k \cup \textbf{B}^\times_k )^c \cap \{ \textbf{j}_k \geq l^*,\ X^\eta_{T_{k,l^*}} \in \Omega \}$, the exclusion of $\textbf{A}^\times_{k,2}$ again implies that $T_{k,j} - T_{k,j-1} \leq 2\hat{t}(\epsilon)/\eta$ for all $j=2,\cdots,l^*$, hence $T_{k,l^*} - T_{k,1} \leq 2l^*\hat{t}(\epsilon)/\eta$. Similarly, the exclusion of $\textbf{A}^\times_{k,5}$ tells us that if $X^\eta_{T_{k,l^*}} \in \Omega$, then we have $X^\eta_{T_{k,l^*}} \in [s_- + \epsilon,s_+ -\epsilon]$. Now since $\textbf{B}^\times_k$ did not occur (see the definition in \cref{proof prop first exit def B cross k}), we must have $\sigma(\eta) > \widetilde{\tau}_k$ and $\widetilde{\tau}_k - T_{k,l^*} \leq \rho(\epsilon)/\eta$, hence $\widetilde{\tau}_k - T_{k,1} \leq \frac{2l^*\hat{t}(\epsilon) + \rho(\epsilon)}{\eta}$. Therefore,
    \begin{align*}
        \big(\bigcap_{j = 1}^{k-1}(\textbf{A}^\times_j \cup \textbf{B}^\times_j)^c \cap\textbf{B}^\circ_j\big)\cap ( \textbf{A}^\times_k \cup \textbf{B}^\times_k )^c \cap \{ \textbf{j}_k \geq l^*,\ X^\eta_{T_{k,l^*}} \in \Omega \} \subseteq \textbf{B}^\circ_k.
    \end{align*}
\end{itemize}
Collecting results above, we have \cref{proof prop first exit inclusion A B x o}. Now we discuss some of its implications. First, from \cref{proof prop first exit inclusion A B x o} we can immediately get that 
\begin{align}
    \big(\bigcap_{j = 1}^{k-1}(\textbf{A}^\times_j \cup \textbf{B}^\times_j)^c \cap\textbf{B}^\circ_j\big)\cap ( \textbf{A}^\times_k \cup \textbf{B}^\times_k )^c = \big(\bigcap_{j = 1}^{k-1}(\textbf{A}^\times_j \cup \textbf{B}^\times_j)^c \cap\textbf{B}^\circ_j\big)\cap ( \textbf{A}^\times_k \cup \textbf{B}^\times_k )^c \cap (\textbf{A}^\circ_k \cup \textbf{B}^\circ_k). \label{proof prop first exit inclusion 2 prior}
\end{align}
Next, recall the definitions of $\textbf{A}^\circ_k$ in \cref{def event A circ k prop first exit} and $\textbf{B}^\circ_k$ in \cref{proof def event B circ k}, and one can see that $\textbf{A}^\circ_k$ and $\textbf{B}^\circ_k$ are mutually exclusive, since the former implies that the first exit occurs during the $k-$th attempt while the latter implies that this attempt fails. This fact and \cref{proof prop first exit inclusion 2 prior} allow us to conclude that 
\begin{align}
    \bigcap_{i = 1}^{k}\big( \textbf{A}^\times_i \cup \textbf{B}^\times_i \cup \textbf{A}^\circ_i \big)^c  = \bigcap_{i = 1}^{k} \big( \textbf{A}^\times_i \cup \textbf{B}^\times_i \big)^c \cap \textbf{B}^\circ_i = \Big(\bigcap_{i = 1}^{k-1} \big( \textbf{A}^\times_i \cup \textbf{B}^\times_i \big)^c \cap \textbf{B}^\circ_i\Big)\cap ( \textbf{A}^\times_k \cup \textbf{B}^\times_k \cup \textbf{A}^\circ_k )^c.  \label{proof prop first exit inclusion 2}
\end{align}

Now we use the results obtained so far to bound the probability of
\begin{align*}
    \textbf{A}^\times(\epsilon,\delta,\eta) \delequal{} \bigcup_{k \geq 1}\bigg( \bigcap_{i = 1}^{k-1}\big( \textbf{A}^\times_i \cup \textbf{B}^\times_i \cup \textbf{A}^\circ_i \big)^c    \bigg) \cap \big(\textbf{A}^\times_k \cup \textbf{B}^\times_k \big).
\end{align*}
Using \cref{proof prop first exit inclusion 2}, we can see that (for any $x \in [-2\epsilon,2\epsilon])$
\begin{align*}
    & \mathbb{P}_x(\textbf{A}^\times(\epsilon,\delta,\eta)) \\
    = & \sum_{k \geq 1}\mathbb{P}_x\bigg( \bigg( \bigcap_{i = 1}^{k-1}\big( \textbf{A}^\times_i \cup \textbf{B}^\times_i \cup \textbf{A}^\circ_i \big)^c    \bigg) \cap \big(\textbf{A}^\times_k \cup \textbf{B}^\times_k \big) \bigg) \\
    = & \sum_{k \geq 1}\mathbb{P}_x\bigg( \bigg( \bigcap_{i = 1}^{k-1}\big( \textbf{A}^\times_i \cup \textbf{B}^\times_i \big)^c \cap \textbf{B}^\circ_i    \bigg) \cap \big(\textbf{A}^\times_k \cup \textbf{B}^\times_k \big) \bigg) \\
    = & \sum_{k \geq 1}\mathbb{P}_x\Big( \textbf{A}^\times_k \cup \textbf{B}^\times_k\ |\ \bigcap_{i = 1}^{k-1} \big( \textbf{A}^\times_i \cup \textbf{B}^\times_i \big)^c \cap \textbf{B}^\circ_i  \Big)
    \\
    & \ \ \ \ \ \ \ \ \ \ \ \ \ \ \ \ \cdot \prod_{j = 1}^{k - 1}\mathbb{P}_x\Big( \bigcap_{i = 1}^{j}\big( \textbf{A}^\times_i \cup \textbf{B}^\times_i \big)^c \cap \textbf{B}^\circ_i\ \Big|\ \bigcap_{i = 1}^{j-1}\big( \textbf{A}^\times_i \cup \textbf{B}^\times_i \big)^c \cap \textbf{B}^\circ_i     \Big) \\
    = & \sum_{k \geq 1}\mathbb{P}_x\Big( \textbf{A}^\times_k \cup \textbf{B}^\times_k\ |\ \bigcap_{i = 1}^{k-1} \big( \textbf{A}^\times_i \cup \textbf{B}^\times_i \big)^c \cap \textbf{B}^\circ_i  \Big)
    \\ & \ \ \ \ \ \ \ \ \ \ \ \ \ \ \ \ \cdot \prod_{j = 1}^{k - 1}\mathbb{P}_x\Big( \big(\bigcap_{i = 1}^{j-1}\big( \textbf{A}^\times_i \cup \textbf{B}^\times_i \big)^c \cap \textbf{B}^\circ_i \big) \cap (\textbf{A}^\times_j \cup \textbf{B}^\times_j \cup \textbf{A}^\circ_j)^c \ \Big|\ \bigcap_{i = 1}^{j-1}\big( \textbf{A}^\times_i \cup \textbf{B}^\times_i \big)^c \cap \textbf{B}^\circ_i     \Big) \\
    = & \sum_{k \geq 1}\mathbb{P}_x\Big( \textbf{A}^\times_k \cup \textbf{B}^\times_k\ |\ \bigcap_{i = 1}^{k-1} \big( \textbf{A}^\times_i \cup \textbf{B}^\times_i \big)^c \cap \textbf{B}^\circ_i  \Big)
    \\
    & \ \ \ \ \ \ \ \ \ \ \ \ \ \ \ \ \cdot \prod_{j = 1}^{k - 1}\mathbb{P}_x\Big( \big( \textbf{A}^\times_j \cup \textbf{B}^\times_j \cup \textbf{A}^\circ_j \big)^c \ \Big|\ \bigcap_{i = 1}^{j-1}\big( \textbf{A}^\times_i \cup \textbf{B}^\times_i \big)^c \cap \textbf{B}^\circ_i     \Big) \\
     \leq & \sum_{k \geq 1}\mathbb{P}_x\Big( \textbf{A}^\times_k \cup \textbf{B}^\times_k\ |\ \bigcap_{i = 1}^{k-1} \big( \textbf{A}^\times_i \cup \textbf{B}^\times_i \big)^c \cap \textbf{B}^\circ_i  \Big) \cdot \prod_{j = 1}^{k - 1}
     \Bigg(1 - \mathbb{P}_x\Big(  \textbf{A}^\circ_j \ \Big|\ \bigcap_{i = 1}^{j-1}\big( \textbf{A}^\times_i \cup \textbf{B}^\times_i \big)^c \cap \textbf{B}^\circ_i     \Big) \Bigg).
\end{align*}
This allows us to apply \cref{proof prop first exit bound B cross k}-\cref{proof prop first exit bound A circ k} and conclude that (here we only consider $\eta < \min\{\eta_i: i \in [8]\}$ ),
\begin{align}
    & \sup_{|x| \leq 2\epsilon}\mathbb{P}_x(\textbf{A}^\times(\epsilon,\delta,\eta)) \nonumber \\
    \leq & \sum_{k \geq 1} \Big( 5\eta^N + 2\Psi(\epsilon)\delta^\alpha \Big( \frac{H(1/\eta)}{\eta} \Big)^{l^* - 1} \Big) \cdot\Big( 1 -c_*\delta^\alpha \Big( \frac{H(1/\eta)}{\eta} \Big)^{l^* - 1} \Big)^{k-1} \nonumber \\
    = & \frac{5\eta^N + 2\Psi(\epsilon)\delta^\alpha \Big( \frac{H(1/\eta)}{\eta} \Big)^{l^* - 1}}{c_*\delta^\alpha \Big( \frac{H(1/\eta)}{\eta} \Big)^{l^* - 1}} \nonumber \\
    \leq & \frac{ 2\Psi(\epsilon) + 5\eta^\alpha }{ c_*  }\ \ \ \text{for sufficiently small $\eta$, due to $H\in \mathcal{RV}_{-\alpha}(\eta)$ and our choice of $N > \alpha l^*$  } \nonumber \\ 
    \leq & \frac{3\Psi(\epsilon)}{c_*} < C \ \ \ \text{for all $\eta$ small enough such that $5\eta^\alpha < \Psi(\epsilon)$}. \nonumber 
\end{align}
The last inequality follows from our choice of $\epsilon$ in \cref{proof prop choose epsilon 3}. This concludes the proof.
\end{proof}

Having established Lemma \ref{lemma bound on event A cross final}, we return to Proposition \ref{proposition first exit time Gradient Clipping} and give a proof. Recall that, aside from the attraction field $\Omega = (s_-,s_+)$, there are $n_\text{min} - 1$ other attraction fields $\widetilde{\Omega}_k = (s^-_k,s^+_k)$ (for each $k \in [n_\text{min} - 1]$). Besides, the function $\lambda(\cdot)$ and constants $\nu^\Omega,\nu^\Omega_k$ are defined in \cref{def lambda rate}-\cref{def nu omega k}.

\begin{proof}[Proof of Proposition \ref{proposition first exit time Gradient Clipping}]

We fix some parameters for the proof. First, with out loss of generality we only need to consider $C \in (0,1)$. Next we discuss the valid range of $\epsilon$ for the claim to hold. We only consider $\epsilon > 0$ such that
\begin{align*}
    \epsilon < \frac{\bar{\epsilon}}{ 6(\bar{\rho}+\widetilde{\rho} + 3) }\wedge \frac{\epsilon_0}{3}
\end{align*}
where $\bar{\rho}$ and $\widetilde{\rho}$ are the constants in Corollary \ref{corollary ODE GD gap} and Corollary \ref{corollary sgd gd gap} respectively, and $\epsilon_0$ is the constant in \cref{assumption detailed function f at critical point}. Due to continuity of measure $\mu$, it holds for all $\epsilon$ small enough such that (let $\hat{\epsilon} = 3(\bar{\rho} + \widetilde{\rho}+3)\epsilon$)
    \begin{align}
        \frac{ \mu(E(0))  }{ \mu\big(E\big( \hat{\epsilon} \big)\big)  } & < 1/(1-C),\label{proof prop choose epsilon 1} \\
        \frac{ \mu(E(0))  }{ \mu\big(E\big( -\hat{\epsilon} \big)\big)  } & > 1/(1+C),\label{proof prop choose epsilon 2} \\
        \frac{ \mu\Big( h^{-1}\big( (s_- - 2\hat{\epsilon},s_- + 2\hat{\epsilon}) \cup (s_+ - 2\hat{\epsilon},s_+ + 2\hat{\epsilon}) \big) \Big)  }{ \mu(E(-\hat{\epsilon})) } & \leq C \label{proof prop choose epsilon 4}
        \\
      \frac{ \mu\Big( E(\hat{\epsilon})\cap \big( s^-_{k^\prime} - \hat{\epsilon},s^+_{k^\prime} + \hat{\epsilon}) \Big)   }{ \mu(E(\hat{\epsilon}))   } & \leq \frac{ \nu^\Omega_{k^\prime} + C }{ \nu^\Omega } \label{proof prop choose epsilon 5}
      \\
       \frac{ \mu\Big( E(-\hat{\epsilon})\cap \big( s^-_{k^\prime} +2 \hat{\epsilon},s^+_{k^\prime} -2 \hat{\epsilon}) \Big)   }{ \mu(E(-\hat{\epsilon}))   } & \geq \frac{ \nu^\Omega_{k^\prime} - C }{ \nu^\Omega } \label{proof prop choose epsilon 6}
    \end{align}
In our proof we only consider $\epsilon$ small enough so the inequality above holds, and the claims in Lemma \ref{lemma return to local minimum quickly} hold. Moreover, we only consider $\epsilon$ and $\delta$ small enough so that Lemma \ref{lemma bound on event A cross final} hold and we have
\begin{align}
    \lim_{\eta \downarrow 0}\sup_{|x| \leq 2\epsilon}\mathbb{P}_x(\textbf{A}^\times(\epsilon,\delta,\eta)) < C. \label{proof prop first exit ineq 3}
\end{align}
We show that the desired claims hold for all $\epsilon,\delta$ sufficiently small that satisfy conditions above.

First, in order to show \cref{goal 5 prop first exit}, we define event
\begin{align*}
\widetilde{\textbf{A}}^\times(\epsilon,\delta,\eta) & \delequal{} \big(\textbf{A}^\times(\epsilon,\delta,\eta)\big)^c\cap 
\\
&\Big\{ \lambda(\eta)\big( \tau(\eta,\epsilon) - \sigma(\eta) \big) \geq C \text{ or }\exists n = \sigma(\eta) + 1, \cdots,\tau(\eta,\epsilon)\text{ such that } X^\eta_n \notin \widetilde{\Omega}_{J_\sigma(\eta)} \Big\}.
\end{align*}
Since $\lambda \in \RV_{ -1 - l^*(\alpha - 1) }$ and $\alpha > 1$, for the $\epsilon$ we fixed at the beginning of this proof, $\rho(\epsilon)$ is a fixed constant as well (the function $\rho$ is defined in Lemma \ref{lemma return to local minimum quickly}) and we have $\lim_{\eta \downarrow 0} \lambda(\eta)\rho(\epsilon)/\eta = 0$. Next, the occurrence of $\big( \textbf{A}^\times(\epsilon,\delta,\eta) \big)^c$ (in particular, the exclusion of all the $\textbf{A}^\times_{k,5}$ defined in \cref{def event A cross k 5}), we know that $X^\eta_{\sigma(\eta)} \notin [s^-_{J_\sigma}-\epsilon,s^-_{J_\sigma}+\epsilon]\cup[s^+_{J_\sigma}-\epsilon,s^+_{J_\sigma}+\epsilon]$ (recall that for any $k \in [n_\text{min} - 1]$, we have $\widetilde{\Omega}_j = (s^-_j,s^+_j)$; for definition of $J_\sigma$ see \cref{def J sigma}). Meanwhile, for all $\eta$ sufficiently small, we have $\epsilon/\lambda(\eta) > \rho(\epsilon)/\eta$. Therefore, using Lemma \ref{lemma return to local minimum quickly} we can see that (for all $\eta$ sufficiently small)
\begin{align}
    \sup_{|x|\leq 2\epsilon }\mathbb{P}_x\Big( \widetilde{\textbf{A}}^\times(\epsilon,\delta,\eta)\ |\ \big(\textbf{A}^\times(\epsilon,\delta,\eta)\big)^c \Big) \leq C. \label{proof prop first exit ineq 3 - 2}
\end{align}
Lastly, observe that
\begin{align*}
    & \mathbb{P}\Big(\Big\{ \lambda(\eta)\big( \tau(\eta,\epsilon) - \sigma(\eta) \big) \geq C \text{ or }\exists n = \sigma(\eta) + 1, \cdots,\tau(\eta,\epsilon)\text{ such that } X^\eta_n \notin \widetilde{\Omega}_{J_\sigma(\eta)} \Big\}\Big)
    \\
    \leq & \mathbb{P}_x\Big( (\textbf{A}^\times)^c \cap \Big\{ \lambda(\eta)\big( \tau(\eta,\epsilon) - \sigma(\eta) \big) \geq C
    \\
    &\ \ \ \ \text{ or }\exists n = \sigma(\eta) + 1, \cdots,\tau(\eta,\epsilon)\text{ such that } X^\eta_n \notin \widetilde{\Omega}_{J_\sigma(\eta)} \Big\} \Big) + \mathbb{P}_x(\textbf{A}^\times)
\end{align*}
so by combining \cref{proof prop first exit ineq 3} with \cref{proof prop first exit ineq 3 - 2}, we can obtain \cref{goal 5 prop first exit}.

Moving on, we discuss the upper bounds \cref{goal 1 prop first exit} and \cref{goal 3 prop first exit}. Recall that the fixed constant $k^\prime \in [n_\text{min} - 1]$ is prescribed in the description of this proposition. Let us observe some facts on event $(\textbf{A}^\times(\epsilon,\eta,\delta))^c \cap \{ J_\sigma(\eta) = k^\prime \}$:  If we let $J(\epsilon,\delta,\eta)\delequal{}\sup\{ k \geq 0: \widetilde{\tau}_{k} < \sigma(\eta)  \}$ be the number of attempts it took to escape, and 
$$J^\uparrow(\epsilon,\delta,\eta) \delequal{} \min\{ k \geq 1: T_{k,1}\ \text{ has $\Big( 3(\bar{\rho} + \widetilde{\rho} + 3 )\epsilon,\delta,\eta\Big)-$overflow} \},$$ 
then for all $\eta$ sufficiently small, we must have $J \leq J^\uparrow$ on event $(\textbf{A}^\times(\epsilon,\eta,\delta))^c\cap \{ J_\sigma(\eta) = k^\prime \}$. To see this via a proof of contradiction, let us assume that, for some arbitrary positive integer $j$, there exists some sample path on $(\textbf{A}^\times)^c\cap \{ J_\sigma(\eta) = k^\prime \}$ such that $J^\uparrow = j < J$. Then from the definition of $(\textbf{A}^\times)^c$, in particular the exclusion of event $\textbf{A}^\times_{j,0}$ (see the definition in \cref{def A cross k 0 in prop first exit}), for all sufficiently small $\eta$, we are able to apply Corollary \ref{corollary sgd gd gap} and \ref{corollary ODE GD gap} and conclude that $X^\eta_{T_{j,l^*}} \notin \Omega$: indeed, using Corollary \ref{corollary sgd gd gap} and \ref{corollary ODE GD gap} we can show that the distance between $X^\eta_{T_{j,l^*}}$ and the perturbed ODE 
$$\widetilde{\textbf{x}}^\eta\Big(T_{j,l^*} - T_{j,1},0;\big( 0, T_{j,2} - T_{j,1},\cdots, T_{j,l^*} - T_{j,1}\big), \big( \eta W_{j,1},\cdots,\eta W_{j,l^*} \big)  \Big)$$
is strictly less than $3(\bar{\rho} + \widetilde{\rho} + 3 )\epsilon$; on the other hand, the definition of $\Big( 3(\bar{\rho} + \widetilde{\rho} + 3 )\epsilon,\delta,\eta\Big)-$overflow implies that
\begin{align*}
    & \widetilde{\textbf{x}}^\eta\Big(T_{j,l^*} - T_{j,1},0;\big( 0, T_{j,2} - T_{j,1},\cdots, T_{j,l^*} - T_{j,1}\big), \big( \eta W_{j,1},\cdots,\eta W_{j,l^*} \big)  \Big) \\
    & \ \ \ \ \ \ \notin [s_- - 3(\bar{\rho} + \widetilde{\rho} + 3 )\epsilon, s_+ + 3(\bar{\rho} + \widetilde{\rho} + 3 )\epsilon].
\end{align*}
Therefore, we must have $X^\eta_{T_{j,l^*}} \notin \Omega$, which contradicts our assumption $j = J^\uparrow < J$. In summary, we have shown that, on $(\textbf{A}^\times)^c \cap \{ J_\sigma = k^\prime \}$, we have $J^\uparrow(\epsilon,\delta,\eta) \geq J(\epsilon,\delta,\eta)$.
Similarly, if we consider
$$J^\downarrow(\epsilon,\delta,\eta) \delequal{} \min\{ k \geq 1: T_{k,1}\ \text{ has $\big( -3(\bar{\rho} + \widetilde{\rho} + 3 )\epsilon,\delta,\eta\big)-$overflow} \},$$
then by the same argument above we can show that $J^\downarrow(\epsilon,\delta,\eta) \leq J(\epsilon,\delta,\eta)$. Now consider the following decomposition of events.
\begin{itemize}
    \item On $\{ J^\downarrow < J^\uparrow \}$, we know that for the first $k$ such that $T_{k,1}$ has $\big( -3(\bar{\rho} + \widetilde{\rho} + 3 )\epsilon,\delta,\eta\big)-$overflow, it does not have $\big( 3(\bar{\rho} + \widetilde{\rho} + 3 )\epsilon,\delta,\eta\big)-$overflow. Now we analyze the probability that $Z_0$ does not have $(\hat{\epsilon},\delta,\eta)-$overflow conditioning on that it does have $(-\hat{\epsilon},\delta,\eta)-$overflow (recall that we let $\hat{\epsilon} = 3(\bar{\rho} + \widetilde{\rho} + 3)$). Using Lemma \ref{lemmaOverflowProb} and the bound \cref{proof prop choose epsilon 4}, we know that for all $\eta$ sufficiently small,
    \begin{align}
        & \sup_{|x|\leq 2\epsilon}\mathbb{P}_x\Big( (\textbf{A}^\times)^c\cap\{ J^\downarrow < J^\uparrow \} \Big) \nonumber
        \\
        & \leq \frac{ \mu\Big( h^{-1}\big( (s_- - 2\hat{\epsilon},s_- + 2\hat{\epsilon}) \cup (s_+ - 2\hat{\epsilon},s_+ + 2\hat{\epsilon}) \big) \Big)  }{ \mu(E(-\hat{\epsilon})) } \leq C. \label{proof prop first exit J down < J up}
    \end{align}
    
    \item On $(\textbf{A}^\times)^c \cap \{ J_\sigma = k^\prime \}\cap\{J^\uparrow = J^\downarrow\}$, due to $J^\uparrow = J^\downarrow = J$ we know that $T_{J(\epsilon,\delta,\eta),1}$ is the first among all $T_{k,1}$ to have $( \hat{\epsilon},\delta,\eta)-$overflow. Moreover, due to $\{ J_\sigma = k^\prime \}$ and using Corollary \ref{corollary sgd gd gap} and \ref{corollary ODE GD gap} again as we did above, we know that the overflow endpoint of $T_{J(\epsilon,\delta,\eta),1}$ is in $(s^-_{k^\prime} - \hat{\epsilon},s^{+}_{k^\prime} + \hat{\epsilon})$ (recall that $\widetilde{\Omega}_{k^\prime} = (s^-_{k^\prime},s^+_{k^\prime})$). In summary, for any $n \geq 0$
    \begin{align*}
        & (\textbf{A}^\times)^c \cap \{ J_\sigma = k^\prime \}\cap\{J^\uparrow = J^\downarrow > n\}
        \\
        &\subseteq (\textbf{A}^\times)^c  \cap \{J^\uparrow > n\}\cap \Big\{T_{J^{\uparrow},1} \text{ has overflow endpoint in $(s^-_{k^\prime} - \hat{\epsilon},s^{+}_{k^\prime} + \hat{\epsilon})$} \Big\}
    \end{align*}
    so using Lemma \ref{lemmaOverflowProb}, we obtain that (for all $\eta$ sufficiently small)
    \begin{align}
        & \sup_{|x|\leq 2\epsilon}\mathbb{P}_x\Big( (\textbf{A}^\times)^c \cap \{ J_\sigma = k^\prime \}\cap\{J^\uparrow = J^\downarrow > n\} \Big) \nonumber
        \\
        \leq &\sup_{|x|\leq 2\epsilon}\mathbb{P}_x\Big( (\textbf{A}^\times)^c \cap\{J^\uparrow > n\} \Big)\cdot \frac{ p\big( \hat{\epsilon},\delta,\eta; (s^-_{k^\prime} - \hat{\epsilon},s^{+}_{k^\prime} + \hat{\epsilon})  \big)  }{ p( \hat{\epsilon},\delta,\eta  ) } \nonumber
        \\
        \leq &\sup_{|x|\leq 2\epsilon}\mathbb{P}_x\Big( (\textbf{A}^\times)^c \cap\{J^\uparrow > n\} \Big)\cdot \frac{ \nu^\Omega_{k^\prime} + C  }{ \nu^\Omega }. \label{proof prop first exit final ineq 1}
    \end{align}
    uniformly for any $n = 0,1,2,\cdots$ due to \cref{proof prop choose epsilon 5}.
    
    \item On the other hand, on $(\textbf{A}^\times)^c$, if $T_{J^\downarrow,1}$ has overflow endpoint in $(s^-_{k^\prime} + 2\hat{\epsilon},s^+_{k^\prime} - 2\hat{\epsilon})$, then from Definition \ref{definitionOverflow} we know that $T_{J^\downarrow,1}$ also has $(\hat{\epsilon},\delta,\eta)-$overflow, hence $J^\downarrow = J^\uparrow = J$. Moreover, using Corollary \ref{corollary sgd gd gap} and \ref{corollary ODE GD gap} again, we know that $X^{\eta}_{T_{J^\downarrow,l^*}} \in (s^-_{k^\prime},s^+_{k^\prime})$ so $J_\sigma = k^\prime$. In summary, for any $n \geq 0$,
    \begin{align*}
        & (\textbf{A}^\times)^c \cap \{ J_\sigma = k^\prime \}\cap\{J^\uparrow = J^\downarrow > n\}
        \\
        &\supseteq (\textbf{A}^\times)^c  \cap \{J^\downarrow > n\}\cap \Big\{T_{J^{\downarrow},1} \text{ has overflow endpoint in $(s^-_{k^\prime} + 2\hat{\epsilon},s^{+}_{k^\prime} -2 \hat{\epsilon})$} \Big\}
    \end{align*}
    so using Lemma \ref{lemmaOverflowProb}, we obtain that (for all $\eta$ sufficiently small)
    \begin{align}
        & \inf_{|x|\leq 2\epsilon}\mathbb{P}_x\Big( (\textbf{A}^\times)^c \cap \{ J_\sigma = k^\prime \}\cap\{J^\uparrow = J^\downarrow > n\} \Big) \nonumber
        \\
        \geq &\inf_{|x|\leq 2\epsilon}\mathbb{P}_x\Big( (\textbf{A}^\times)^c \cap\{J^\downarrow > n\} \Big)\cdot \frac{ p\big( -\hat{\epsilon},\delta,\eta; (s^-_{k^\prime} +2 \hat{\epsilon},s^{+}_{k^\prime} -2 \hat{\epsilon})  \big)  }{ p( -\hat{\epsilon},\delta,\eta  ) } \nonumber
        \\
        \geq &\inf_{|x|\leq 2\epsilon}\mathbb{P}_x\Big( (\textbf{A}^\times)^c \cap\{J^\downarrow > n\} \Big)\cdot \frac{ \nu^\Omega_{k^\prime} - C  }{ \nu^\Omega }. \label{proof prop first exit final ineq 2}
    \end{align}
    uniformly for any $n = 0,1,2,\cdots$ due to \cref{proof prop choose epsilon 6}.
\end{itemize}

Besides, the following claim holds on event $(\textbf{A}^\times)^c$.
\begin{itemize}
    \item From \cref{proof prop first exit inclusion 2}, the definition of $\textbf{B}^\circ_k$ as well as the definition of event $\textbf{A}^\circ_k$ (see \cref{def event A circ k prop first exit}), one can see that for any $j = 1,2,\cdots,J$, we have $$\widetilde{\tau}_j\wedge \sigma(\eta) - T_{j,1} \leq \frac{2l^*\hat{t}(\epsilon) + \rho(\epsilon)}{\eta}.$$
    \item Now if we turn to the interval $( \widetilde{\tau}_{j-1},T_{j,1}]$ (the time between the start of the $j-$th attempt and the arrival of the first large noise during this attempt) for each $j = 1,2,\cdots,J$, and the following sequence constructed by concatenating these intervals
    \begin{align*}
        \textbf{S}(\epsilon,\delta,\eta) \delequal{} &\big( 1,2,\cdots,T_{1,1},\widetilde{\tau}_{1}+1,\widetilde{\tau}_{1}+2,\cdots,T_{2,1},\cdots, \\
        & \widetilde{\tau}_k + 1,\widetilde{\tau}_k + 2,\cdots,T_{k+1,1},\widetilde{\tau}_{k+1} + 1,\widetilde{\tau}_{k+2} + 1,\cdots \big),
    \end{align*}
    then the discussion above have shown that, for
    \begin{align*}
        \min\{ n \in \textbf{S}(\epsilon,\delta,\eta):\ Z_n \ \text{ has $\Big( 3(\bar{\rho} + \widetilde{\rho} + 3 )\epsilon,\delta,\eta\Big)-$overflow} \} \geq T_{J,1}.
    \end{align*}
    Meanwhile, from the definition of overflow we know that the probability that $Z_1$ has $\Big( 3(\bar{\rho} + \widetilde{\rho} + 3 )\epsilon,\delta,\eta\Big)-$overflow is equal to
    \begin{align*}
        H(\delta/\eta)p\Big( 3(\bar{\rho} + \widetilde{\rho} + 3 )\epsilon,\delta,\eta\Big).
    \end{align*}
    \item Therefore, if, within the duration of each attempt, we split the attempt into two parts at the arrival time of the first large jump $(T_{k,1})_{k \geq 1}$ at each attempt, and define (here the subscript \textit{before} or \textit{after} indicates that we are counting the steps before or after the first large jump in an attempt)
    \begin{align*}
        \textbf{S}_\text{before}(\epsilon,\delta,\eta) & \delequal{} \{ n \in \textbf{S}(\epsilon,\delta,\eta):\ n \leq \sigma(\eta) \},\  I_{\text{before}}(\epsilon,\delta,\eta) \delequal{} \#\textbf{S}_\text{before}(\epsilon,\delta,\eta), 
        \\
         \textbf{S}_\text{after}(\epsilon,\delta,\eta) & \delequal{} \{ n \notin \textbf{S}(\epsilon,\delta,\eta):\ n \leq \sigma(\eta) \},\  I_{\text{after}}(\epsilon,\delta,\eta) \delequal{} \#\textbf{S}_\text{after}(\epsilon,\delta,\eta),
    \end{align*}
    then we have $\sigma(\eta) = I_{\text{before}} + I_{\text{after}}$. Moreover, the discussion above implies that
    \begin{align*}
        I_{\text{after}} & \leq J\big( 2l^*\hat{t}(\epsilon) + \rho(\epsilon) \big)/\eta \\
        I_{\text{before}} & \leq \min\{ n \in \textbf{S}(\epsilon,\delta,\eta):\ Z_n \ \text{ has $\Big( 3(\bar{\rho} + \widetilde{\rho} + 3 )\epsilon,\delta,\eta\Big)-$overflow} \}
    \end{align*}
    and on event $(\textbf{A}^\times)^c$.
\end{itemize}

Define geometric random variables with the following success rates
\begin{align*}
    U_1(\epsilon,\delta,\eta) & \sim \text{Geom}\Big( p\big(3(\bar{\rho} + \widetilde{\rho} + 3)\epsilon,\delta,\eta\big) \Big) \\
    U_2(\epsilon,\delta,\eta) & \sim \text{Geom}\Big( H(\delta/\eta)p\big(3(\bar{\rho} + \widetilde{\rho} + 3)\epsilon,\delta,\eta\big)\Big).
\end{align*}
Using results above to bound $I_{\text{before}}$ and $I_{\text{after}}$ separately on event $(\textbf{A}^\times)^c$, we can show that (for all $\eta$ sufficiently small and any $u > 0$)
\begin{align}
    & \sup_{ x \in [-2\epsilon,2\epsilon] }\mathbb{P}_x\Big( v^\Omega \lambda(\eta)\sigma(\eta) > u, J_\sigma(\eta) = k^\prime \Big) \nonumber
    \\
    \leq & \sup_{|x|\leq 2\epsilon}\mathbb{P}_x(\textbf{A}^\times(\epsilon,\delta,\eta)) + \sup_{ x \in [-2\epsilon,2\epsilon] }\mathbb{P}_x\Big( \{v^\Omega \lambda(\eta)\sigma(\eta) > u, J_\sigma(\eta) = k^\prime\} \cap \big( \textbf{A}^\times(\epsilon,\delta,\eta) \big)^c \Big)  \nonumber
    \\
    \leq & C + \sup_{ x \in [-2\epsilon,2\epsilon] }\mathbb{P}_x\Big( \{v^\Omega \lambda(\eta)\sigma(\eta) > u, J_\sigma(\eta) = k^\prime\} \cap \big( \textbf{A}^\times(\epsilon,\delta,\eta) \big)^c \Big) \ \ \ \text{due to \cref{proof prop first exit ineq 3}} \nonumber
    \\
    \leq & C + \sup_{ x \in [-2\epsilon,2\epsilon] }\mathbb{P}_x\Big( \{v^\Omega \lambda(\eta)I_{\text{before}}(\epsilon,\delta,\eta) > (1-C)u, J_\sigma(\eta) = k^\prime\} \cap \big( \textbf{A}^\times(\epsilon,\delta,\eta) \big)^c \Big) \nonumber
    \\
    &\ \ \ \  + \sup_{ x \in [-2\epsilon,2\epsilon] }\mathbb{P}_x\Big( \{v^\Omega \lambda(\eta)I_{\text{after}}(\epsilon,\delta,\eta) > Cu\} \cap \big( \textbf{A}^\times(\epsilon,\delta,\eta) \big)^c \Big) \nonumber
    \\
    \leq & C + \sup_{ x \in [-2\epsilon,2\epsilon] }\mathbb{P}_x\Big( \{v^\Omega \lambda(\eta)I_{\text{before}}(\epsilon,\delta,\eta) > (1-C)u, J_\sigma(\eta) = k^\prime\} \cap \big( \textbf{A}^\times(\epsilon,\delta,\eta) \big)^c \Big)\nonumber
    \\
    + & \mathbb{P}\Big( v^\Omega\lambda(\eta)\frac{ \rho(\epsilon) + 2l^*\hat{t}(\epsilon ) }{ \eta}\cdot U_1(\epsilon,\delta,\eta) > Cu \Big)\nonumber
    \\
    \leq & C \nonumber \\
    + & \sup_{ x \in [-2\epsilon,2\epsilon] }\mathbb{P}_x\Big( \{v^\Omega \lambda(\eta)I_{\text{before}}(\epsilon,\delta,\eta) > (1-C)u, J_\sigma(\eta) = k^\prime\} \cap \big( \textbf{A}^\times(\epsilon,\delta,\eta) \big)^c \cap \{J^\downarrow = J^\uparrow\} \Big)\nonumber
    \\
    + & \sup_{ x \in [-2\epsilon,2\epsilon] }\mathbb{P}_x\big( (\textbf{A}^\times)^c\cap\{ J^\downarrow < J^\uparrow \}  \big) + \mathbb{P}\Big( v^\Omega\lambda(\eta)\frac{ \rho(\epsilon) + 2l^*\hat{t}(\epsilon ) }{ \eta}\cdot U_1(\epsilon,\delta,\eta) > Cu \Big) \nonumber
    \\
    \leq & 2C + \mathbb{P}\Big( v^\Omega\lambda(\eta)U_2(\epsilon,\delta,\eta) > (1-C)u \Big)\frac{ \nu^\Omega_{k^\prime} + C }{\nu^\Omega} \nonumber 
    \\
    + & \mathbb{P}\Big( v^\Omega\lambda(\eta)\frac{ \rho(\epsilon) + 2l^*\hat{t}(\epsilon ) }{ \eta}\cdot U_1(\epsilon,\delta,\eta) > Cu \Big) \label{proof prop first exit final ineq 3}
\end{align}
where the last inequality follows from \cref{proof prop first exit J down < J up} and \cref{proof prop first exit final ineq 1}.
Now let us analyze the probability terms on the last row of the display above. For the first term, let $a(\eta) = H(\delta/\eta)p\Big(3(\bar{\rho} + \widetilde{\rho} + 3)\epsilon,\delta,\eta\Big)$. Due to Lemma \ref{lemmaOverflowProb}, we have (recall that $\nu^\Omega = \mu(E(0))$)
$$\lim_{\eta \downarrow 0} \frac{a(\eta)}{\lambda(\eta) \mu\big(E\big( 3(\bar{\rho} + \widetilde{\rho} + 3)\epsilon  \big)\big) } =  1.$$
Combining this with \cref{proof prop choose epsilon 1}, one can see that for all $\eta$ sufficiently small,
\begin{align*}
    \mathbb{P}\Big( v^\Omega\lambda(\eta)U_2(\epsilon,\delta,\eta) > (1-C)u \Big) \leq \mathbb{P}\Big( a(\eta)\text{Geom}\big( a(\eta) \big) > (1-C)^2u \Big)\ \forall u > 0.
\end{align*}
Next, let $b(\eta,u) = \mathbb{P}\Big( a(\eta)\text{Geom}\big( a(\eta) \big) > (1-C)^2u \Big) = \mathbb{P}\Big( \text{Geom}\big( a(\eta) \big) > \frac{(1-C)^2u}{a(\eta)} \Big)$. For $g(y) = \log(1-y)$, we know the existence of some $y_0 > 0$ such that for all $y \in(0, y_0)$, we have $\log(1-y) \leq -(1-C)y$. So one can see that for all $\eta$ sufficiently small,
\begin{align}
    \log b(u,\eta) & \leq \frac{(1-C)^2u}{a(\eta)}\log(1 - a(\eta)) \leq -(1-C)^3u \nonumber
    \\
   \Rightarrow b(u,\eta) & \leq \exp\big( -(1-C)^{3}u \big) \label{proof prop first exit final ineq 4}
\end{align}
uniformly for all $u > 0$.

For the second probability term, if we only consider $u \geq C$, then
\begin{align*}
    \mathbb{P}\Big( v^\Omega\lambda(\eta)\frac{ \rho(\epsilon) + 2l^*\hat{t}(\epsilon ) }{ \eta}\cdot U_1(\epsilon,\delta,\eta) > Cu \Big) \leq \mathbb{P}\Big( v^\Omega\lambda(\eta)\frac{ \rho(\epsilon) + 2l^*\hat{t}(\epsilon ) }{ \eta}\cdot U_1(\epsilon,\delta,\eta) > C^2 \Big).
\end{align*}
Using $H \in \mathcal{RV}_{-\alpha}(\eta)$ with $\alpha > 1$, we get
\begin{align*}
  p\Big(3(\bar{\rho} + \widetilde{\rho} + 3)\epsilon,\delta,\eta\Big)U_1(\epsilon,\delta,\eta) \xrightarrow{d} \text{Exp}(1)\ \ \ \text{as }\eta \downarrow 0
\end{align*}
due to the nature of the Geometric random variable $U_1$. Besides, due to $H \in \mathcal{RV}_{-\alpha}(\eta)$ with $\alpha > 1$ and Lemma \ref{lemmaOverflowProb}, it is easy to show that
\begin{align*}
    \lim_{\eta \downarrow 0}\frac{ \lambda(\eta)\frac{ \rho(\epsilon) + 2l^*\hat{t}(\epsilon ) }{ \eta} }{ p\Big(3(\bar{\rho} + \widetilde{\rho} + 3)\epsilon,\delta,\eta\Big) } = 0.
\end{align*}
Combining these results with Slutsky's theorem, we now obtain
\begin{align*}
    \mu(E(0))\lambda(\eta)\frac{ \rho(\epsilon) + 2l^*\hat{t}(\epsilon ) }{ \eta}\cdot U_1(\epsilon,\delta,\eta) \xrightarrow{d} 0\ \ \ \text{as }\eta \downarrow 0.
\end{align*}
Therefore,
\begin{align}
    \limsup_{\eta \downarrow 0}\sup_{u \geq C}\mathbb{P}\Big( \mu(E(0))\lambda(\eta)\frac{ \rho(\epsilon) + 2l^*\hat{t}(\epsilon ) }{ \eta}\cdot U_1(\delta,\eta) > Cu \Big) = 0. \label{proof prop first exit final ineq 5}
\end{align}
Plugging \cref{proof prop first exit final ineq 4} and \cref{proof prop first exit final ineq 5} back into \cref{proof prop first exit final ineq 3}, we can establish the upper bound in \cref{goal 1 prop first exit}. To show \cref{goal 3 prop first exit}, note that for event
\begin{align*}
    E(\epsilon,\eta) = \{\nu^\Omega \lambda(\eta)\tau(\eta,\epsilon) > u,\ X^\eta_{\tau(\eta,\epsilon)} \in B(\widetilde{m}_{k^\prime},2\epsilon)\},
\end{align*}
we have (for definitions of $\tau$, see \cref{def tau first transition time})
\begin{align*}
    E(\epsilon,\eta) 
  \supseteq & \{v^\Omega \lambda(\eta)\sigma(\eta) > u, J_\sigma(\eta) = k^\prime\} \cap \{X^\eta_n \in \widetilde{\Omega}_{J_\sigma(\eta)} \ \ \forall n \in [\sigma(\eta), \tau(\eta,\epsilon)]\},
  \\
  E(\epsilon,\eta) 
  \cap & \{v^\Omega \lambda(\eta)\sigma(\eta) > u, J_\sigma(\eta) = j\} \cap \{X^\eta_n \in \widetilde{\Omega}_{J_\sigma(\eta)} \ \ \forall n \in [\sigma(\eta), \tau(\eta,\epsilon)]\} = \emptyset \ \forall j \neq k^\prime.
\end{align*}
Therefore, for all $\eta$ sufficiently small,
\begin{align*}
    & \sup_{|x|\leq 2\epsilon}\mathbb{P}_x(E(\epsilon,\eta))
    \\
    \leq & \sup_{|x|\leq 2\epsilon}\mathbb{P}_x(\textbf{A}^\times) + \sup_{|x|\leq 2\epsilon}\mathbb{P}_x( (\textbf{A}^\times)^c \cap \{X^\eta_n \notin \widetilde{\Omega}_{J_\sigma(\eta)} \ \text{for some } n \in [\sigma(\eta), \tau(\eta,\epsilon)]\}\Big)
    \\
    + & \sup_{|x|\leq 2\epsilon}\mathbb{P}_x\Big( (\textbf{A}^\times)^c \cap \{v^\Omega \lambda(\eta)\sigma(\eta) > u, J_\sigma(\eta) = k^\prime\} \cap \{X^\eta_n \in \widetilde{\Omega}_{J_\sigma(\eta)} \ \ \forall n \in [\sigma(\eta), \tau(\eta,\epsilon)]\}  \Big)
    \\
    \leq & 4C + \exp\big( -(1-C)^3u \big)\frac{\nu^\Omega_{k^\prime} + C }{\nu^\Omega } 
\end{align*}
uniformly for all $u \geq C$, due to \cref{proof prop first exit ineq 3}, \cref{goal 5 prop first exit} and \cref{proof prop first exit final ineq 3}.

The lower bound can be shown by an almost identical approach. In particular, analogous to \cref{proof prop first exit final ineq 3}, we can show that (for any $u > 0$)
\begin{align}
    & \inf_{ x \in [-2\epsilon,2\epsilon] }\mathbb{P}_x\Big( v^\Omega \lambda(\eta)\sigma(\eta) > u, J_\sigma(\eta) = k^\prime \Big) \nonumber
    \\
    \geq & \inf_{ x \in [-2\epsilon,2\epsilon] }\mathbb{P}_x\Big( \{v^\Omega \lambda(\eta)\sigma(\eta) > u, J_\sigma(\eta) = k^\prime\} \cap \big( \textbf{A}^\times(\epsilon,\delta,\eta) \big)^c \Big)  \nonumber
    \\
    \geq & \inf_{ x \in [-2\epsilon,2\epsilon] }\mathbb{P}_x\Big( \{v^\Omega \lambda(\eta)I_{\text{before}}(\epsilon,\delta,\eta) > u, J_\sigma(\eta) = k^\prime\} \cap \big( \textbf{A}^\times(\epsilon,\delta,\eta) \big)^c \Big) \nonumber
    \\
    \geq & \mathbb{P}\Big( v^\Omega\lambda(\eta)U_2^\prime(\epsilon,\delta,\eta) > (1-C)u \Big)\frac{ \nu^\Omega_{k^\prime} + C }{\nu^\Omega} -2C\nonumber
\end{align}
due to $\mathbb{P}(E\symbol{92}F) \geq \mathbb{P}(E) - \mathbb{P}(F)$ and \cref{proof prop first exit ineq 3}\cref{proof prop first exit J down < J up}\cref{proof prop first exit final ineq 2}
where
$$U^\prime_2(\epsilon,\delta,\eta)\sim \text{Geom}\Big( H(\delta/\eta)p\big(-3(\bar{\rho} + \widetilde{\rho} + 3)\epsilon,\delta,\eta\big) \Big).$$
Using the similar argument leading to \cref{proof prop first exit final ineq 4}, we are able to show \cref{goal 2 prop first exit}, \cref{goal 4 prop first exit} and conclude the proof.
\end{proof}

Recall that $\sigma_i(\eta) = \min\{ n \geq 0: X_n \notin \Omega_i \}$ and that value of constants $q_i, q_{i,j}$ are specified via \cref{defOdeJumpClipping1}-\cref{def q i q ij}.
Define
\begin{align}
    \tau^\text{min}_i(\eta,\epsilon) & \delequal{} \min\{ n \geq \sigma_i(\eta): X^\eta_n \in \bigcup_j [m_j - 2\epsilon,m_j+2\epsilon]   \},
    \\
    J_i(\eta) & = j \iff X^\eta_{\sigma_i(\eta)} \in \Omega_j\ \forall j \in [n_\text{min}].
\end{align}

The following result is simply a restatement of Proposition \ref{proposition first exit time Gradient Clipping} under the new system of notations. Despite the reiteration, we still state it here because this is the version that will be used to prove Lemma \ref{lemma first exit key lemma 2}, which is the key tool for establishing Theorem \ref{theorem first exit time}, as well as many other results in Section \ref{section C appendix}.

\begin{proposition}\label{prop first exit and return time i}
Given $C > 0$ and $i,j\in[n_\text{min}]$ such that $i\neq j$, the following claims hold for all $\epsilon>0$ that are sufficiently small:
\begin{align*}
    \limsup_{\eta \downarrow 0}\sup_{u \in (C,\infty)} \sup_{x \in (m_i - 2\epsilon,m_i + 2\epsilon)}&\mathbb{P}_x\Big( q_i\lambda_i(\eta)\sigma_i(\eta) > u,\ X^\eta_{\sigma_i(\eta)} \in \Omega_j \Big) 
    \\
    \leq & C + \exp\big( -(1-C)u \big)\frac{ q_{i,j} + C }{q_i},
    \\
    \liminf_{\eta \downarrow 0}\inf_{u \in (C,\infty)} \inf_{x \in (m_i - 2\epsilon,m_i + 2\epsilon)}&\mathbb{P}_x\Big( q_i\lambda_i(\eta)\sigma_i(\eta) > u,\ X^\eta_{\sigma_i(\eta)} \in \Omega_j \Big)
    \\
    \geq & -C + \exp\big( -(1+C)u \big)\frac{ q_{i,j} - C }{q_i},
    \\
     \limsup_{\eta \downarrow 0}\sup_{u \in (C,\infty)} \sup_{x \in (m_i - 2\epsilon,m_i + 2\epsilon)}&\mathbb{P}_x\Big( q_i\lambda_i(\eta)\tau^\text{min}_i(\eta,\epsilon) > u,\ X^\eta_{\tau^\text{min}_i(\eta,\epsilon)} \in \Omega_j \Big)
     \\
     \leq & C + \exp\big( -(1-C)u \big)\frac{ q_{i,j} + C }{q_i},
     \\
     \liminf_{\eta \downarrow 0}\inf_{u \in (C,\infty)} \inf_{x \in (m_i - 2\epsilon,m_i + 2\epsilon)}&\mathbb{P}_x\Big( q_i\lambda_i(\eta)\tau^\text{min}_i(\eta,\epsilon) > u,\ X^\eta_{\tau^\text{min}_i(\eta,\epsilon)} \in \Omega_j \Big)
    \\
    \geq & -C + \exp\big( -(1+C)u \big)\frac{ q_{i,j} - C }{q_i},
     \\
      \liminf_{\eta \downarrow 0} \inf_{x \in (m_i - 2\epsilon,m_i + 2\epsilon)}&\mathbb{P}_x\Big( q_i\lambda_i(\eta)\big(\tau^\text{min}_i(\eta,\epsilon) - \sigma_i(\eta)\big) < C,
      \\
      & X^\eta_n \in \Omega_{ J_i(\eta) }\ \forall n \in [\sigma_i(\eta),\tau^\text{min}_i(\eta,\epsilon)] \Big)
      \geq 1-C.
\end{align*}
\end{proposition}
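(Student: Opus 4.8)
The plan is to obtain this proposition from Proposition~\ref{proposition first exit time Gradient Clipping} by a change of notation together with a short reconciliation of the two ``return'' stopping times. Fix $i\in[n_\text{min}]$ and apply the construction of this section to the attraction field $\Omega_i$: relabel $m_i$ as the origin, write $\Omega=(s_-,s_+)$ for $\Omega_i$, set $l^*=l^*_i$, enumerate the remaining attraction fields as $\widetilde\Omega_1,\dots,\widetilde\Omega_{n_\text{min}-1}$ with minimizers $\widetilde m_1,\dots,\widetilde m_{n_\text{min}-1}$, and let $k(j)$ be the index with $\widetilde\Omega_{k(j)}=\Omega_j$. First I would verify that, under this relabeling, the ODE with jumps $\hat{\textbf{x}}$ of \eqref{defOdeJumpClipping1}--\eqref{defOdeJumpClipping3} is exactly the process $\widetilde{\textbf{x}}$ of \eqref{def perturbed ODE 1}--\eqref{def perturbed ODE 3} at update rate $1$, so that $h_i=h$, $\mu_i=\mu$, and $\lambda_i=\lambda$ (see \eqref{def lambda rate}); and that $E_i$ and $E_{i,j}$ of \eqref{def set E_i}--\eqref{def set E_ij} agree with $\{h\notin\Omega\}$ and $\{h\notin\Omega\}\cap h^{-1}(\widetilde\Omega_{k(j)})$ up to a $\mu$-null set, because $\{h(\textbf{w},\textbf{t})\in\{s_-,s_+\}\}$ is $\mu$-null (this is the ``no point on the relevant interval has $|f'|\le c_0$'' argument already used in the proof of Lemma~\ref{lemmaOverflowProb}, around \eqref{proof overflow conditional prob argument}). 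Hence $q_i=\nu^\Omega$, $q_{i,j}=\nu^\Omega_{k(j)}$ (see \eqref{def nu omega}--\eqref{def nu omega k}), and $\{X^\eta_{\sigma_i(\eta)}\in\Omega_j\}=\{J_\sigma(\eta)=k(j)\}$.

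With this dictionary in place the first two displays of the proposition are \eqref{goal 1 prop first exit} and \eqref{goal 2 prop first exit}, once the constants on the right-hand sides are reconciled ($2C$ versus $C$, $(1\pm C)^3$ versus $1\pm C$, and so on). To reconcile them I would first reduce to $C$ small --- the right-hand sides are monotone in $C$, so the claim for large $C$ follows from that for small $C$ --- and may assume $C<q_{i,j}$ whenever $q_{i,j}>0$, since otherwise the lower-bound right-hand sides are nonpositive and the claim is trivial because probabilities are nonnegative. Then I would apply Proposition~\ref{proposition first exit time Gradient Clipping} with $C$ replaced by $C'=C/10$: for $C$ small, $(1-C')^3$ lies between $1$ and $1-C$ while $(1+C')^3$ lies between $1$ and $1+C$, so $\exp(-(1-C')^3u)\le\exp(-(1-C)u)$ and $\exp(-(1+C')^3u)\ge\exp(-(1+C)u)$; combined with $\frac{q_{i,j}+C'}{q_i}\le\frac{q_{i,j}+C}{q_i}$, $\frac{q_{i,j}-C'}{q_i}\ge\frac{q_{i,j}-C}{q_i}$ (positive since $C'<C<q_{i,j}$), and $5C'<C$, this yields the stated bounds uniformly over $u>C$ (noting $(C,\infty)\subset(C',\infty)$).

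The only step with genuine content is identifying $\tau^\text{min}_i(\eta,\epsilon)$ with $\tau(\eta,\epsilon)$ of \eqref{def tau first transition time}. For $\epsilon$ small and $x\in B(0,2\epsilon)$ I would record: (a) reaching any $B(\widetilde m_k,2\epsilon)$ requires first leaving $\Omega$, so $\tau(\eta,\epsilon)\ge\sigma(\eta)=\sigma_i(\eta)$, and since $\bigcup_j B(m_j,2\epsilon)\supseteq\bigcup_k B(\widetilde m_k,2\epsilon)$ one has $\tau^\text{min}_i(\eta,\epsilon)\le\tau(\eta,\epsilon)$ always; (b) on the event $G(\eta)$ of \eqref{goal 5 prop first exit} (with $C'$ in place of $C$), namely $\{\lambda(\eta)(\tau(\eta,\epsilon)-\sigma(\eta))<C'\}\cap\{X^\eta_n\in\widetilde\Omega_{J_\sigma(\eta)}\ \forall n\in[\sigma(\eta),\tau(\eta,\epsilon)]\}$, the iterate avoids $B(m_i,2\epsilon)$ (since $m_i\notin\widetilde\Omega_{J_\sigma}$ and for small $\epsilon$ that neighborhood sits inside $\Omega_i$) and avoids $\bigcup_kB(\widetilde m_k,2\epsilon)$ throughout $[\sigma(\eta),\tau(\eta,\epsilon))$, so $\tau^\text{min}_i(\eta,\epsilon)=\tau(\eta,\epsilon)$ and $X^\eta_{\tau^\text{min}_i(\eta,\epsilon)}=X^\eta_{\tau(\eta,\epsilon)}\in B(\widetilde m_{J_\sigma(\eta)},2\epsilon)$, whence also $J_i(\eta)=J_\sigma(\eta)$ there. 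Since \eqref{goal 5 prop first exit} gives $\limsup_\eta\sup_x\mathbb{P}_x(G(\eta)^c)\le C'$, splitting each probability on $G(\eta)$ versus $G(\eta)^c$ turns \eqref{goal 3 prop first exit} and \eqref{goal 4 prop first exit} into the third and fourth displays (an extra additive $\pm C'$, absorbed by $C'=C/10$), the landing event $\{X^\eta_{\tau(\eta,\epsilon)}\in B(\widetilde m_{k(j)},2\epsilon)\}$ matching $\{X^\eta_{\tau^\text{min}_i(\eta,\epsilon)}\in\Omega_j\}$ on $G(\eta)$; and the fifth display is \eqref{goal 5 prop first exit} restated, after rescaling the constant by the fixed factor $q_i=\nu^\Omega$ and using $\tau^\text{min}_i=\tau$, $J_i=J_\sigma$, $\Omega_{J_i}=\widetilde\Omega_{J_\sigma}$ on $G(\eta)$. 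The main obstacle is therefore purely this bookkeeping: there is no new probabilistic input beyond Proposition~\ref{proposition first exit time Gradient Clipping}.
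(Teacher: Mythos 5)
Your proof is correct and matches the paper's own treatment: the paper offers no proof for this proposition, saying only that it is ``simply a restatement of Proposition~\ref{proposition first exit time Gradient Clipping} under the new system of notations,'' and your argument supplies exactly the dictionary and constant bookkeeping that this claim presupposes. Two points you handle that the paper elides: (i) the constants actually differ ($2C$ or $4C$ and $(1\pm C)^3$ in Proposition~\ref{proposition first exit time Gradient Clipping} versus $C$ and $(1\pm C)$ here), so a substitution $C\mapsto C'$ with $C'$ small relative to $C$ is genuinely needed, and your $C'=C/10$ works for the first four displays though for the fifth one you should take $C'\le C/(10(1\vee q_i))$ rather than $C/10$ outright to absorb the fixed factor $q_i$ --- you acknowledge the rescaling but the specific value $C/10$ does not cover $q_i>10$; and (ii) the reconciliation of $\tau^\text{min}_i(\eta,\epsilon)$ with $\tau(\eta,\epsilon)$ via the event of~\eqref{goal 5 prop first exit}, which is the one step with real content and which you carry out correctly (the inclusions $\tau^\text{min}_i\le\tau$ always and $\tau^\text{min}_i=\tau$, $X^\eta_{\tau^\text{min}_i}\in B(\widetilde m_{J_\sigma},2\epsilon)$ on that event are exactly what is needed to transfer~\eqref{goal 3 prop first exit}--\eqref{goal 5 prop first exit}).
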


Concluding this section, we apply Proposition \ref{prop first exit and return time i} and prove Lemma \ref{lemma first exit key lemma 2}.
\begin{proof}[Proof of Lemma \ref{lemma first exit key lemma 2}]
Fix some $C \in (0,1)$, $u > 0$, and some $k,l \in [n_\text{min}]$ with $k \neq l$. Let $q_i, q_{i,j}$ be the constants defined in \cref{def q i q ij}.

Fix some $C_0 \in \big(0,\frac{C}{n_\text{min}}\wedge \frac{q_k}{ n_\text{min} }C\big)$. Using Proposition \ref{prop first exit and return time i}, we know that for all $\epsilon$ sufficiently small, we have
\begin{align*}
    & \limsup_{\eta \downarrow 0}\sup_{x \in (m_k - 2\epsilon,m_k + 2\epsilon)}\mathbb{P}_x\Big( q_k\lambda_k(\eta)\sigma_k(\eta) > u,\ X^\eta_{\sigma_k(\eta)} \in \Omega_j \Big)
    \\
    & \leq C_0 + \exp\big( -(1-C)u \big)\frac{ q_{k,j} + C_0 }{q_k}\ \forall j \in [n_\text{min}].
\end{align*}
Summing up the inequality above over all $j \in [n_\text{min}]$, we can obtain \cref{goal 1 key lemma first exit 2}. The lower bound \cref{goal 2 key lemma first exit 2} can be established using an identical approach.

In order to show \cref{goal 4 key lemma first exit 2}, note that we can find $C_1 \in(0,u)$ sufficiently small so that
\begin{align*}
    - C_1 + \exp\big( -(1+C_1)\cdot 2C_1 \big)\frac{ q_{k,l} - C_1 }{q_k} \geq \frac{ q_{k,l} - C }{q_k}.
\end{align*}
Fix such $C_1$. From Proposition \ref{prop first exit and return time i}, we also know that for all $\epsilon$ small enough, we have
\begin{align*}
    & \liminf_{\eta \downarrow 0} \inf_{x \in (m_k - 2\epsilon,m_k + 2\epsilon)} \mathbb{P}_x\Big( q_k\lambda_k(\eta)\sigma_k(\eta) > u, X^\eta_{\sigma_k(\eta)} \in \Omega_l \Big) 
    \\
    & \geq - C_1 + \exp\big( -(1+C_1)\cdot 2C_1 \big)\frac{ q_{k,l} - C_1 }{q_k}.
\end{align*}
Then using $ \mathbb{P}_x\Big(X^\eta_{\sigma_k(\eta)} \in \Omega_l \Big) \geq \mathbb{P}\Big( q_k\lambda_k(\eta)\sigma_k(\eta) > u, X^\eta_{\sigma_k(\eta)} \in \Omega_l \Big)$ we conclude the proof for \cref{goal 4 key lemma first exit 2}.

Moving on, we show \cref{goal 3 key lemma first exit 2} in the following way. Note that we can find $C_2 \in (0,u)$ small enough so that
\begin{align}
    2C_2 + \frac{q_{k,l} + C_2 }{q_k} < \frac{q_{k,l} + C}{q_k}. \label{proof key lemma first exit 2 ineq 1}
\end{align}
Fix such $C_2$. Since \cref{goal 2 key lemma first exit 2} has been established already, we can find some $u_2 > 0$ such that for all $\epsilon$ small enough,
\begin{align}
    \limsup_{ \eta \downarrow 0 }\sup_{x \in (m_k-2\epsilon,m_k+2\epsilon)} \mathbb{P}_x\Big( q_k \lambda_k(\eta) \sigma_k(\eta) \leq u_2 \Big) & < C_2 \label{proof key lemma first exit 2 ineq 2}
\end{align}
Fix such $u_2$. Meanwhile, fix some $C_3 \in (0, C_2 \wedge u_2)$. From Proposition \ref{prop first exit and return time i} we know that for all $\epsilon$ sufficiently small,
\begin{align}
    & \limsup_{ \eta \downarrow 0 }\sup_{x \in (m_k-2\epsilon,m_k+2\epsilon)} \mathbb{P}_x\Big(q_k \lambda_k(\eta) \sigma_k(\eta) > u_2,\ X^\eta_{ \sigma_k(\eta) } \in \Omega_l  \Big)
    \\
    & \leq C_3 + \exp\big(-(1-C_3)u_2\big)\frac{ q_{k,l} + C_3 }{q_k} \nonumber
    \\
    & \leq C_2 + \frac{q_{k,l} + C_2 }{q_k}. \label{proof key lemma first exit 2 ineq 3}
\end{align}
Lastly, observe the following decomposition of events (for any $x \in \Omega_k$)
\begin{align*}
    \mathbb{P}_x\Big( X^\eta_{ \sigma_k(\eta) }\in \Omega_l \Big) & \leq \mathbb{P}_x\Big( q_k \lambda_k(\eta) \sigma_k(\eta) \leq u_2 \Big) + \mathbb{P}_x\Big(q_k \lambda_k(\eta) \sigma_k(\eta) > u_2,\ X^\eta_{ \sigma_k(\eta) } \in \Omega_l  \Big).
\end{align*}
Combining this bound with \cref{proof key lemma first exit 2 ineq 1}-\cref{proof key lemma first exit 2 ineq 3}, we complete the proof.
\end{proof}

\counterwithin{equation}{section}
\section{Proofs for Section \ref{subset:scaling-limit}} \label{section C appendix}

In this section, we show that gradient clipping scheme effectively partitions the entire optimization landscape of $f$ into different regions based on the radius $r_i$ and minimum jump number $l^*_i$ for each attraction field $\Omega_i$. 
Furthermore, when staying in each region, the behavior of SGD iterates closely resembles a Markov chain that \emph{only} visits \emph{wider} attraction fields in this region. 
We exclude the trivial case where $n_\text{min} = 1$ and there is only one attraction field. 

This structure is as follows. First we present some key lemmas that can be used to prove the Theorem \ref{thm main paper eliminiate sharp minima from trajectory}-\ref{corollary irreducible case} in the main paper. Then we devote the rest of the section to establish those lemmas. In order to prove Theorem \ref{thm main paper eliminiate sharp minima from trajectory}, we will make use of the following lemma, where we show that the type of claim in Theorem \ref{thm main paper eliminiate sharp minima from trajectory} is indeed valid if we look at a much shorter time interval. Then when we move onto the proof of Theorem \ref{thm main paper eliminiate sharp minima from trajectory}, it suffices to partition the entire horizon into pieces of these short time intervals, on each of which we analyze the dynamics of SGD respectively.
\begin{lemma} \label{lemma eliminiate sharp minima from trajectory}
Assume the graph $\mathcal{G}$ is \textbf{irreducible}, and let $\epsilon > 0,\delta > 0$ be any positive real numbers. For the following random variables (indexed by $\eta$)
\begin{align}
    V^\text{small}(\eta,\epsilon,t)\delequal{}\frac{1}{\floor{t / \lambda^\text{large}(\eta) }}\int_0^{\floor{t / \lambda^\text{large}(\eta)  } }\mathbbm{1}\Big\{ X^\eta_{ \floor{u} } \in \bigcup_{j: m_j \notin M^\text{large}}\Omega_j \Big\}du,
\end{align}
the following claim holds for any sufficiently small $t>0$:
\begin{align*}
    \limsup_{\eta \downarrow 0} \sup_{x \in [-L,L]} \mathbb{P}_x\Big( V^\text{small}(\eta,\epsilon,t) > \epsilon \Big) \leq 5\delta
\end{align*}
\end{lemma}

\begin{proof}[Proof of Theorem \ref{thm main paper eliminiate sharp minima from trajectory}]
It suffices to show that for any $t > 0, \kappa > 1 + (\alpha-1)l^\text{large}, \epsilon > 0, \delta \in (0,\epsilon)$, we have
\begin{align*}
   \limsup_{\eta \downarrow 0} \mathbb{P}_x\Big( V^*(\eta,t,\kappa) > 3\epsilon \Big) < \delta
\end{align*}
for
\begin{align*}
    V^*(\eta,t,\kappa) \delequal  \frac{1}{\floor{t / \eta^\kappa }}\int_0^{\floor{t / \eta^\kappa } }\mathbbm{1}\Big\{ X^\eta_{ \floor{u} }(x) \in \bigcup_{j: m_j \notin M^\text{large}}\Omega_j \Big\}du.
\end{align*}
Let us fix some $\epsilon > 0,\delta \in (0,\epsilon)$.  First, let
\begin{align*}
    N(\eta) = \ceil{ \frac{ \floor{ t/\eta^\kappa}}{ \floor{t/\lambda^\text{large}(\eta)} } }.
\end{align*}
The regularly varying nature of $H$ implies that $\lambda^\text{large}(\eta) \in \mathcal{RV}_{1 + l^\text{large}(\alpha - 1) }(\eta)$. Since $\kappa > 1 + l^\text{large}(\alpha - 1)$, we know that $\lim_{\eta \downarrow 0}N(\eta) = \infty$. Next, due to Lemma \ref{lemma eliminiate sharp minima from trajectory}, we can find $t_0 > 0$ and $\bar{\eta} > 0$ such that for any $\eta \in (0,\bar{\eta})$
\begin{align}
   \sup_{y \in [-L,L]} \mathbb{P}_y(V^\text{small}(\eta,\epsilon,t_0) > \epsilon ) < \delta. \label{proof thm elimination ineq 1}
\end{align}
For any $k \geq 1$, define
\begin{align}
    V_k(\eta)\delequal{}\frac{1}{\floor{t_0 / \lambda^\text{large}(\eta) }}\int_{(k-1)\floor{t_0 / \lambda^\text{large}(\eta)  } }^{k\floor{t_0 / \lambda^\text{large}(\eta)  } }\mathbbm{1}\Big\{ X^\eta_{ \floor{u} } \in \bigcup_{j: m_j \notin M^\text{large}}\Omega_j \Big\}du.
\end{align}
 It is clear from its definition that $V_k$ stands for the proportion of time that the SGD iterates are outside of \textit{large} attraction fields on the interval $[(k-1)\floor{\frac{t_0}{\lambda^\text{large}(\eta)}},k\floor{\frac{t_0}{\lambda^\text{large}(\eta)}}]$. 
From \cref{proof thm elimination ineq 1} and Markov property, one can see that for any $\eta \in (0,\bar{\eta})$
\begin{align*}
    \sup_{x \in [-L,L]}\mathbb{P}_x( V_k(\eta) > \epsilon \ |\ X^\eta_0,\cdots,X^\eta_{(k-1)\floor{t_0 / \lambda^\text{large}(\eta) }} ) \leq \delta
\end{align*}
uniformly for all $k \geq 1$. Now define $K(\eta) \delequal{} \#\{ n = 1,2,\cdots,N(\eta):\ V_k(\eta) > \epsilon  \}$. By a simple stochastic dominance argument, we have
\begin{align*}
    \sup_{x \in [-L,L]}\mathbb{P}_x(K(\eta) \geq j) \leq \mathbb{P}( \text{Binomial}(N(\eta),\delta) \geq j)\ \ \forall j = 1,2,\cdots.
\end{align*}
Meanwhile, strong law of large numbers implies the existence of some $\bar{\eta}_1 > 0$ such that $\mathbb{P}( \frac{\text{Binomial}(N(\eta),\delta)}{N(\eta)} > 2\delta) < \delta $ for all $\eta \in (0,\bar{\eta}_1)$, thus
\begin{align*}
    \sup_{|x| \leq L}\mathbb{P}_x(K(\eta)/N(\eta) > 2\delta) \leq \delta \ \ \forall \eta \in (0,\bar{\eta}_1 \wedge \bar{\eta}).
\end{align*}

Lastly, from the definition of $K(\eta)$ and $N(\eta)$, we know that for all the $N(\eta)$ intervals $[(k-1)\floor{\frac{t_0}{\lambda^\text{large}(\eta)}},k\floor{\frac{t_0}{\lambda^\text{large}(\eta)}}]$ with $k \in [N(\eta)]$, only on $K(\eta)$ of them did the SGD iterates spent more then $\epsilon$ proportion of time outside of the \textit{large} attraction fields, hence
\begin{align*}
    V^*(\eta,t,\kappa) \leq \epsilon + \frac{K(\eta)}{N(\eta)}.
\end{align*}
In summary, we now have
\begin{align*}
    \mathbb{P}_x(V^*(\eta,t,\kappa) > 3\epsilon) < \delta
\end{align*}
for all $\eta \in (0, \bar{\eta}_1\wedge \bar{\eta}).$ This concludes the proof.
\end{proof}

When introducing Theorem \ref{corollary irreducible case} in the main paper, we stated that the results of eliminating sharp minima can be extended to the more general reducible case. Here we present the corresponding theoretical results in Theorem~\ref{thm main paper MC GP absorbing case} and \ref{thm main paper MC GP transient case} below.
The main message can be summarized as follows: 
(a) SGD with truncated and heavy-tailed noise naturally partitions the entire training landscape into different regions; 
(b) In each region, the dynamics of $X^\eta_n$ for small $\eta$ closely resemble that of a continuous-time Markov chain that \emph{only} visits local minima; 
(3) In particular, any sharp minima within each region is almost completely avoided by SGD.


When the typical transition graph (see Definition \ref{def main paper typical transition graph} in the main paper) is not irreducible, there will be multiple communication classes on the graph. Suppose that there are $K$ communication classes $G_1,\cdots,G_K$. From now on, we zoom in on a specific communication class $G\in\{G_1,\cdots,G_K\}$. 
For this communication class $G$, define $l^*_G \delequal{} \max\{ l^*_i:\ i = 1,2,\cdots,n_\text{min};\ m_i \in G\}.$
 For each local minimum $m_i \in G$, we call its attraction field $\Omega_i$ a \emph{large} attraction field if $l^*_i = l^*_G$, and a \emph{small} attraction field if $l^*_i < l^*_G$. We have thus classified all $m_i$ in $G$ into two groups: the ones in large attraction fields $m^\text{large}_{1},\cdots,m^\text{large}_{i_G}$ and the ones in small attraction fields $m^\text{small}_{1},\cdots,m^\text{small}_{i^\prime_G}$. Also, define a scaling function $\lambda_G$ associated with $G$ as $\lambda_G(\eta) \delequal{} H(1/\eta)\Big( \frac{ H(1/\eta) }{ \eta } \Big)^{l^*_G - 1}.$
\begin{theorem} \label{thm main paper MC GP absorbing case} Under Assumptions \ref{assumption: function f in R 1}-\ref{assumption clipping threshold b}, if $G$ is \textbf{absorbing}, then there exists a continuous-time Markov chain $Y$ on $\{m^\text{large}_{1},\cdots,m^\text{large}_{i_G}\}$ such that for any $x \in \Omega_i,|x| \leq L$ (where $i\in\{1,2,\cdots,n_\text{min}\}$) with $m_i \in G$, and
\begin{align}
    X^\eta_{\floor{ t/\lambda_G(\eta) }}(x) \rightarrow Y_t(\pi_G(m_i))\ \ \ \text{as }\eta \downarrow 0 \nonumber
\end{align}
in the sense of finite-dimensional distributions, where $\pi_G$ is a random mapping satisfying (1) $\pi_G(m) \equiv m$ if $m \in \{m^\text{large}_{1},\cdots,m^\text{large}_{i_G}\}$; (2) $\pi_G(m)$ is a random variable that only takes value in $\{m^\text{large}_{1},\cdots,m^\text{large}_{i_G}\}$ if $m \in \{m^\text{small}_{1},\cdots,m^\text{small}_{i^\prime_G}\}$.
\end{theorem}
We stress that Theorem \ref{corollary irreducible case} in the main paper follows immediately from Theorem \ref{thm main paper MC GP absorbing case} above.

Next, to state the corresponding result for a \textbf{transient} communication class $G$, we introduce a couple of extra definitions.
We consider a version of $X^\eta_n$ that is killed when $X^\eta_n$ leaves $G$. Define stopping time
\begin{align}
    \tau_G(\eta) \delequal{} \min\{ n \geq 0: X^\eta_n \notin \bigcup_{i:m_i \in G}\Omega_i \} \label{def tau G X exits G}
\end{align}
as the first time the SGD iterates leave all attraction fields in $G$, and we use a cemetery state $\boldsymbol{\dagger}$ to construct the following process $X^{\dagger,\eta}_n$ as a version of $X^\eta_n$ with killing at $\tau_G$:
\begin{align}
    X^{\dagger,\eta}_n = \begin{cases} 
X^\eta_n & \text{if }\ n < \tau_G(\eta),\\
\boldsymbol{\dagger} & \text{if }\ n \geq \tau_G(\eta).
\end{cases}
    \label{def process X killed}
\end{align}

\begin{theorem} \label{thm main paper MC GP transient case}
Under Assumptions \ref{assumption: function f in R 1}-\ref{assumption clipping threshold b}, if $G$ is \textbf{transient}, then there exists a continuous-time Markov chain $Y$ \textbf{with killing} that has state space $\{m^\text{large}_{1},\cdots,m^\text{large}_{i_G},\boldsymbol{\dagger}\}$ (we say the Markov chain $Y$ is killed when it enters the absorbing cemetery state $\boldsymbol{\dagger}$) such that for any $x \in \Omega_i, |x| \leq L$ (where $i\in\{1,2,\cdots,n_\text{min}\}$) with $m_i \in G$, and
\begin{align}
    X_{\floor{ t/\lambda_G(\eta) }}^{\dagger,\eta}(x) \rightarrow Y_t(\pi_G(m_i))\ \ \ \text{as }\eta \downarrow 0 \nonumber
\end{align}
in the sense of finite-dimensional distributions, where $\pi_G$ is a random mapping satisfying (1) $\pi_G(m) \equiv m$ if $m \in \{m^\text{large}_{1},\cdots,m^\text{large}_{i_G}\}$; (2) $\pi_G(m)$ is a random variable that only takes value in $\{m^\text{large}_{1},\cdots,m^\text{large}_{i_G},\boldsymbol{\dagger}\}$ if $m \in \{m^\text{small}_{1},\cdots,m^\text{small}_{i^\prime_G}\}$.
\end{theorem}


To show Theorem \ref{thm main paper MC GP absorbing case} and \ref{thm main paper MC GP transient case}, we introduce the following concepts. First, we consider the case where the SGD iterates $X^\eta_n$ is initialized on the communication class $G$ and $G$ is absorbing. For some $\Delta > 0, \eta > 0$, define (let $B(u,v) \delequal{} [u-v,u+v]$)
\begin{align}
    \sigma^G_0(\eta,\Delta) & \delequal{} \min\{ n \geq 0: X^\eta_n \in \bigcup_{i:\ m_i \in G}B(m_i,2\Delta) \}\label{def tau G sigma G I G 1}
    \\
    \tau^G_0(\eta,\Delta) & \delequal{} \min\{ n \geq \sigma^G_0(\eta,\Delta):\ X^\eta_n \in \bigcup_{i: m_i \notin G^\text{small} }B(m_i,2\Delta)  \} \label{def tau G sigma G I G 2}
    \\
    I^G_0(\eta,\Delta) & = j \iff X^\eta_{ \tau^G_0 } \in B(m_j,2\Delta),\ \ \ \widetilde{I}^G_0(\eta,\Delta) = j \iff X^\eta_{ \sigma^G_0 } \in B(m_j,2\Delta) \label{def tau G sigma G I G 3}
    \\
    \sigma^G_k(\eta,\Delta) & \delequal{} \min\{ n > \tau^G_{k-1}(\eta,\Delta):\ X^\eta_n \in \bigcup_{i: m_i \in G,\ i \neq I^G_{k-1}}B(m_i,2\Delta) \}\ \forall k \geq 1 \label{def tau G sigma G I G 4}
    \\
    \tau^G_k(\eta,\Delta) & \delequal{} \min\{ n \geq \sigma^G_{k-1}(\eta,\Delta):\ X^\eta_n \in \bigcup_{i: m_i \notin G^\text{small}}B(m_i,2\Delta) \}\ \forall k \geq 1 \label{def tau G sigma G I G 5}
    \\
    I^G_k(\eta,\Delta) & = j \iff X^\eta_{ \tau^G_k } \in B(m_j,2\Delta),\ \ \ \widetilde{I}^G_k(\eta,\Delta) = j \iff X^\eta_{ \sigma^G_k } \in B(m_j,2\Delta) \forall k \geq 1. \label{def tau G sigma G I G 6}
\end{align}
Intuitively speaking, at each $\tau^G_k$ the SGD iterates visits a minimizer that is not in a \textit{small} attraction field on $G$, and we use $I^G_k$ to mark the label of that large attraction field. Stopping time $\sigma^G_k$ is the first time that SGD visits a minimizer that is \textit{different} from the one visited at $\tau^G_k$, and $\tau^G_{k+1}$ is the first time that a minimizer not in a small attraction field of $G$ is visited again since $\sigma^G_k$ (and including $\sigma^G_k$). It is worth mentioning that, under this definition, we could have $I^G_k = I^G_{k+1}$ for any $k \geq 0$. Meanwhile, define the following process that only keeps track of the updates on the labels $(I^G_k)_{k \geq 0}$ instead of the information of the entire trajectory of $(X^\eta_n)_{n \geq 0}$:
\begin{align}
    \hat{X}^{\eta,\Delta}_n = \begin{cases}\phantom{-} m_{I^G_k} & \text{if }\exists k \geq 0\ \text{such that }\tau^G_k \leq n < \tau^G_{k+1} \\ \ \  0 & \text{otherwise}  \end{cases} \label{def X hat marker}
\end{align}
In other words, when $n < \tau^G_0$ we simply let $\hat{X}^{\eta,\Delta}_n = 0$, otherwise it is equal to the latest ``marker'' for the last visited wide minimum up until step $n$. This marker process $\hat{X}$ jumps between the different minimizers of the large attractions in $G$. In particular, if for some $n$ we have $X^\eta_n \in B(m_j,2\Delta)$ for some $j$ with $m_j \in G^\text{large}$, then we must have $\hat{X}^{\eta,\Delta}_n = m_j$, which implies that, in this case, $\hat{X}^{\eta,\Delta}_n$ indeed indicates the location of $X^\eta_n$. 

 Note that results in Theorem \ref{thm main paper MC GP absorbing case} and \ref{thm main paper MC GP transient case} concern a \textit{scaled} version of $X^\eta$. Here we also define the corresponding \textit{scaled} version of the processes
\begin{align}
    X^{*,\eta}_t & \delequal{} X^\eta_{ \floor{ t / \lambda_G(\eta) } } \label{def X hat star marker scaled 1}
    \\
    \hat{X}^{*,\eta,\Delta}_t & \delequal{} \hat{X}^{\eta,\Delta}_{ \floor{ t / \lambda_G(\eta) } }, \label{def X hat star marker scaled 2}
\end{align}
a mapping $\textbf{T}^*(n,\eta) \delequal{} n\lambda_G(\eta)$ that translates a step $n$ to the corresponding timestamp for the scaled processes, and the following series of scaled stopping times
\begin{align}
    \tau^*_k(\eta,\Delta) = \textbf{T}^*\big( \tau^G_k(\eta,\Delta), \eta\big),\ \sigma^*_k(\eta,\Delta) = \textbf{T}^*\big( \sigma^G_k(\eta,\Delta), \eta\big). \label{def tau star sigma star scaled}
\end{align}

Before presenting the proof of Theorem \ref{thm main paper MC GP absorbing case} and \ref{thm main paper MC GP transient case}, we make several preparations. First, our proof is inspired by ideas in \cite{pavlyukevich2005metastable} and here we provide a briefing. At any time $t>0$, if we can show that $X^{*,\eta}_t$ is almost always in set $\bigcup_{i: m_i \in G^\text{large}}B(m_i,2\Delta)$ (so the SGD iterates is almost always close to a minimizer in a large attraction field), then the marker process $\hat{X}^{*,\eta,\Delta}_t$ is a pretty accurate indicator of the location of $X^{*,\eta}_t$, so it suffices to show that the marker process $\hat{X}^{*,\eta,\Delta}_t$ converges to a continuous-time Markov chain $Y$. 

Second, we construct the limiting process $Y$ and the random mapping $\pi_G$ before utilizing them in Theorem \ref{thm main paper MC GP absorbing case} and \ref{thm main paper MC GP transient case}. As an important building block for this purpose, we start by  considering the following discrete time Markov chain (DTMC) on the entire graph $\mathcal{G} = (V,E)$. Let $\textbf{P}^{DTMC}$ be a transition matrix with
$ \textbf{P}^{DTMC}(m_i,m_j) = \mu_i(E_{i,j})/\mu_i(E_i)$ for all $j \neq i$, and $Y^{DTMC} = (Y^{DTMC}_j)_{j \geq 0}$ be the DTMC induced by the said transition matrix. Let
\begin{align}
    T^{DTMC}_G \delequal{}\min\{ j \geq 0:\ Y^{DTMC}_j \notin G^\text{small} \} \label{def T G DTMC}
\end{align}
be the first time this DTMC visits a large attraction field on the communication class $G$, or escapes from $G$. Lastly, define (for any $j$ such that $m_j \notin G^\text{small}$)
\begin{align}
    p_{i,j} \delequal{} \mathbb{P}\big( Y^{DTMC}_{T^{DTMC}_G}(m_i) = m_j \big) \label{def p i j}
\end{align}
as the probability that the first large attraction field on $G$ visited by $Y^{DTMC}$ is $m_j$ when initialized at $m_i$. 

We add a comment regarding the stopping times $T^{DTMC}_G$ and probabilities $p_{i,j}$ defined above. In the case that $G$ is absorbing, we have $Y^{DTMC}_j(m_i) \in G$ for all $j \geq 0$ if $m_i \in G$. Therefore, in this case, given any $i$ with $m_i \in G$, we must have
\begin{align*}
    T^{DTMC}_G =\min\{ j \geq 0:\ Y^{DTMC}_j(m_i) \in G^\text{large} \},\ \ \ \sum_{j:\ m_j \in G^\text{large}}p_{i,j} = 1.
\end{align*}
On the contrary, when $G$ is transient we may have $\sum_{j:\ m_j \in G^\text{large}}p_{i,j} < 1$ and $\sum_{j:\ m_j \notin G}p_{i,j} > 0$. Lastly, whether $G$ is absorbing or transient, we always have $p_{i,j} = \mathbbm{1}\{i = j\}$ if $m_i \in G^\text{large}$.

Next, consider the following definition of (continuous-time) jump processes.
\begin{definition} \label{def jump process}
A continuous-time process $Y_t$ on $\mathbb{R}$ is a $\Big((U_j)_{j \geq 0},(V_j)_{j \geq 0}\Big)$ \textbf{jump process} if
$$Y_t = \begin{cases}\phantom{-}0 & \text{if }t < U_0 \\ \sum_{j \geq 0}V_j\mathbbm{1}_{ [U_0 + U_1 + \cdots + U_j,\ U_0 + U_1 + \cdots +U_{j+1}) }(t) & \text{otherwise} \end{cases},$$
where $(U_j)_{j \geq 0}$ is a sequence of non-negative random variables such that $U_j > 0\ \forall j \geq 1$ almost surely, and $(V_j)_{j \geq 0}$ is a sequence of random variables in $\mathbb{R}$.
\end{definition}
Obviously, the definition above implies that $Y_t = V_j$ for any $t \in [U_j,U_{j+1})$. 

Now we are ready to construct the limiting continuous-time Markov chain $Y$. To begin with, we address the case where $G$ is absorbing. For any $m^\prime \in G^\text{large}$, let $Y(m^\prime)$ be a $\big( (S_k)_{k \geq 0},(W_k)_{k \geq 0} \big)$-jump process where $S_0 = 0, W_0 = m^\prime$ and (for all $k \geq 0$ and $i,j$ with $m_i\in G^\text{large},m_j \notin G^\text{small}$)
\begin{align}
    & \mathbb{P}\Big( W_{k+1} = m_j,\ S_{k+1} > t   \ \Big|\ W_k = m_i,\ (W_l)_{l = 0}^{k-1},\ (S_l)_{l = 0}^{k} \Big) \label{def limiting MC Y 1}
    \\
    = & \mathbb{P}\Big( W_{k+1} = m_j,\ S_{k+1} > t   \ \Big|\ W_k = m_i \Big) = \exp(-q_i t)\frac{ q_{i,j}  }{ q_i }\ \forall t > 0 \label{def limiting MC Y 2}
\end{align}
where
\begin{align}
    q_i & = \mu_i(E_i) \label{def limiting MC Y 3}
    \\
    q_{i,j} & = \mathbbm{1}\{i \neq j\}\mu_i(E_{i,j}) + \sum_{ k:\ m_k \in G^\text{small}  }\mu_i(E_{i,k})p_{k,j} \label{def limiting MC Y 4}
\end{align}
and $p_{k,j}$ is defined in \cref{def p i j}. In other words, conditioning on $W_k = m_i$, the time until next jump $S_{k+1}$ and the jump location $W_{k+1}$ are independent, where $S_{k+1}$ is Exp$(q_i)$ and $W_{k+1} = m_j$ with probability $q_{i,j}/q_i$. First, it is easy to see that $Y$ is a continuous-time Markov chain. Second, under this definition $Y$ is allowed to have some \textit{dummy} jumps where $W_k = W_{k+1}$: in this case the process $Y_t$ does not move to a different minimizer after the $k+1$-th jump, and by inspecting the path of $Y$ we cannot tell that this dummy jump has occurred. As a result, that generator $Q$ of this Markov chain admits the form (for all $i\neq j$ with $m_i,m_j \in G^\text{large}$)
\begin{align*}
    Q_{i,i} = -\sum_{k: k \neq i,\ m_k \in G^\text{large}}q_{i,k},\ Q_{i,j} = q_{i,j}.
\end{align*}
Moreover, define the following random function $\pi_G(\cdot)$ such that for any $m_i \in G$,
\begin{align}
    \pi_G(m_i) = \begin{cases}\phantom{-} m_j & \text{with probability $q_{i,j}/q_i$ if }m_i \in G^\text{small}\\ \phantom{-} m_i& \text{if }m_i \in G^\text{large} \end{cases} \label{def mapping pi G}
\end{align}
By $Y(\pi_G(m_i))$ we refer to the version of the Markov chain $Y$ where we randomly initialize $W_0 = \pi_G(m_i)$. The following lemma is the key tool for proving Theorem \ref{thm main paper MC GP absorbing case}.

\begin{lemma} \label{lemma key converge to markov chain and always at wide basin}
Assume that the communication class $G$ is absorbing. Given any $m_i \in G$, $x \in \Omega_i$, finitely many real numbers $(t_l)_{l = 1}^{k^\prime}$ such that $0<t_1<t_2<\cdots<t_{k^\prime}$, and a sequence of strictly positive real numbers $(\eta_n)_{n \geq 1}$ with $\lim_{n \rightarrow 0}\eta_n = 0$, there exists a sequence of strictly positive real numbers $(\Delta_n)_{n \geq 1}$ with $\lim_n \Delta_n = 0$ such that
\begin{itemize}
    \item As $n$ tends to $\infty$,
    \begin{align}
        \big(\hat{X}^{*,\eta_n,\Delta_n }_{t_1}(x),\cdots,\hat{X}^{*,\eta_n,\Delta_n }_{t_{k^\prime}}(x)\big) \Rightarrow \big( Y_{t_1}(\pi_G(m_i)),\cdots,Y_{t_{k^\prime} }(\pi_G(m_i))  \big) \label{lemma key converge to markov chain and always at wide basin goal 1}
    \end{align}
    
    \item For all $k \in [k^\prime]$,
    \begin{align}
        \lim_{n \rightarrow \infty}\mathbb{P}_x\Big( X^{*,\eta_n}_{t_k} \notin \bigcup_{j:\ m_j \in G^\text{large}}B(m_j,\Delta_n) \Big) = 0. \label{lemma key converge to markov chain and always at wide basin goal 2}
    \end{align}
\end{itemize}
\end{lemma}

Now we address the case where $G$ is transient, let $\boldsymbol{\dagger}$ be a real number such that $\boldsymbol{\dagger} \notin [-L,L]$, and we use $\dagger$ as the cemetery state since the processes $X^\eta_n$ or $X^{*,\eta}_t$ are restricted on $[-L,L]$. Recall the definition of $\tau_G$ defined in \cref{def tau G X exits G}. Analogous to the process $X^\dagger$ in \cref{def process X killed}, we can also define
\begin{align}
    X^{\dagger,*,\eta}_t = \begin{cases} 
X^{*,\eta}_t & \text{if }\ t < \textbf{T}^*(\tau_G(\eta),\eta) \\
\boldsymbol{\dagger} & \text{otherwise}
\end{cases}
,\ \ 
\hat{X}^{\dagger,*,\eta,\Delta}_t = \begin{cases} 
\hat{X}^{*,\eta,\Delta}_t & \text{if }\ t < \textbf{T}^*(\tau_G(\eta),\eta) \\
\boldsymbol{\dagger} & \text{otherwise},
\end{cases}
.\label{def process X hat X killed}
\end{align}
Next, analogous to $\tau_G$, consider the stopping time
\begin{align*}
    \tau^Y_G \delequal{} \min\{ t > 0: Y_t \notin G\}.
\end{align*}
When $G$ is transient, due to the construction of $Y$ we know that $\tau^Y_G < \infty$ almost surely. The introduction of $\tau^Y_G$ allows us to define
\begin{align}
    Y^\dagger_t = \begin{cases} 
Y_t & \text{if }\ t < \tau^Y_G \\
\boldsymbol{\dagger} & \text{otherwise.}
\end{cases} \label{def process Y killed}
\end{align}

The following Lemma will be used to prove Theorem \ref{thm main paper MC GP transient case}.
\begin{lemma} \label{lemma key converge to markov chain and always at wide basin transient case}
Assume that the communication class $G$ is transient. Given any $m_i \in G$, $x \in \Omega_i$, finitely many real numbers $(t_l)_{l = 1}^{k^\prime}$ such that $0<t_1<t_2<\cdots<t_{k^\prime}$, and a sequence of strictly positive real numbers $(\eta_n)_{n \geq 1}$ with $\lim_{n \rightarrow 0}\eta_n = 0$, there exists a sequence of strictly positive real numbers $(\Delta_n)_{n \geq 1}$ with $\lim_n \Delta_n = 0$ such that
\begin{itemize}
    \item As $n$ tends to $\infty$,
    \begin{align}
        \big(\hat{X}^{\dagger,*,\eta_n,\Delta_n }_{t_1}(x),\cdots,\hat{X}^{\dagger,*,\eta_n,\Delta_n }_{t_{k^\prime}}(x)\big) \Rightarrow \big( Y^\dagger_{t_1}(\pi_G(m_i)),\cdots,Y^\dagger_{t_{k^\prime} }(\pi_G(m_i))  \big) \label{lemma key converge to markov chain and always at wide basin transient case goal 1}
    \end{align}
    
    \item For all $k \in [k^\prime]$,
    \begin{align}
        \lim_{n \rightarrow \infty}\mathbb{P}_x\Big( X^{\dagger,*,\eta_n}_{t_k} \notin \bigcup_{j:\ m_j \in G^\text{large}}B(m_j,\Delta_n) \text{ and }X^{\dagger,*,\eta_n}_{t_k} \neq \boldsymbol{\dagger} \Big) = 0. \label{lemma key converge to markov chain and always at wide basin transient case goal 2}
    \end{align}
\end{itemize}
\end{lemma}

\begin{proof}[Proof of Theorem \ref{thm main paper MC GP absorbing case} and \ref{thm main paper MC GP transient case}.]
We first address the case where $G$ is absorbing. Arbitrarily choose some $\Delta > 0$, a sequence of strictly positive real numbers $(\eta_n)_{n \geq 1}$ with $\lim_n \eta_n = 0$, a positive integer $k^\prime$, a series of real numbers $(t_j)_{j = 1}^{k^\prime}$ with $0 < t_1 < \cdots < t_{k^\prime}$, and a sequence $(w_j)_{j = 1}^{k^\prime}$ with $w_j \in G^\text{large}$ for all $j \in [k^\prime]$. It suffices to show that
\begin{align*}
    \lim_n \mathbb{P}_x\Big( X^{*,\eta_n}_{t_k} \in B(w_k,\Delta)\ \forall k \in [k^\prime] \Big) = \mathbb{P}\Big( Y_{t_k}(\pi_G(m_i)) = w_k\ \forall k \in [k^\prime] \Big).
\end{align*}
Using Lemma \ref{lemma key converge to markov chain and always at wide basin}, we can find a sequence of strictly positive real numbers $(\Delta_n)_{n \geq 1}$ with $\lim_n \Delta_n = 0$ such that \cref{lemma key converge to markov chain and always at wide basin goal 1} and \cref{lemma key converge to markov chain and always at wide basin goal 2} hold. From the weak convergence in \cref{lemma key converge to markov chain and always at wide basin goal 1}, we only need to show
\begin{align*}
    \lim_n \mathbb{P}_x\Big( X^{*,\eta_n}_{t_k} \notin B(\hat{X}^{*,\eta_n,\Delta_n}_{t_k} ,\Delta) \Big) = 0 \ \forall k \in [k^\prime].
\end{align*}
For all $n$ large enough, we have $2\Delta_n < \Delta$. For such large $n$, observe that
\begin{align*}
    & \mathbb{P}_x\Big( X^{*,\eta_n}_{t_k} \notin B(\hat{X}^{*,\eta_n,\Delta_n}_{t_k} ,\Delta) \Big)
    \\
    \leq & \mathbb{P}_x\Big( X^{*,\eta_n}_{t_k} \notin \bigcup_{j: m_j \in G^\text{large}}B(m_j,2\Delta_n) \Big)
    \\
    &\text{(due to definition of the marker process $\hat{X}$, see \cref{def tau G sigma G I G 1}-\cref{def tau G sigma G I G 6} and \cref{def X hat marker}-\cref{def X hat star marker scaled 2})}
    \\
    \leq & \mathbb{P}_x\Big( X^{*,\eta_n}_{t_k} \notin \bigcup_{j: m_j \in G^\text{large}}B(m_j,\Delta_n) \Big)
\end{align*}
and by applying \cref{lemma key converge to markov chain and always at wide basin goal 2} we conclude the proof for Theorem \ref{thm main paper MC GP absorbing case}. 

The proof of Theorem \ref{thm main paper MC GP transient case} is almost identical, with the only modification being that we apply Lemma \ref{lemma key converge to markov chain and always at wide basin transient case} instead of Lemma \ref{lemma key converge to markov chain and always at wide basin}. In doing so, we are able to find a sequence of $(\Delta_n)_{n \geq 1}$ with $\lim_n \Delta_n = 0$ such that \cref{lemma key converge to markov chain and always at wide basin transient case goal 1} and \cref{lemma key converge to markov chain and always at wide basin transient case goal 2} hold. Given the weak convergence claim in \cref{lemma key converge to markov chain and always at wide basin transient case goal 1}, it suffices to show that 
\begin{align*}
    \lim_n \mathbb{P}_x\Big( X^{\dagger*,\eta_n}_{t_k} \notin B(\hat{X}^{\dagger,*,\eta_n,\Delta_n}_{t_k} ,\Delta) \Big) = 0 \ \forall k \in [k^\prime].
\end{align*}
If $X^{\dagger,*,\eta_n}_{t_k} = \boldsymbol{\dagger}$, we must have $\hat{X}^{\dagger,*,\eta_n,\Delta_n}_{t_k} = \boldsymbol{\dagger}$ as well. Therefore, for all $n$ large enough so that $3\Delta_n < \Delta$,
\begin{align*}
    & \mathbb{P}_x\Big( X^{\dagger*,\eta_n}_{t_k} \notin B(\hat{X}^{\dagger,*,\eta_n,\Delta_n}_{t_k} ,\Delta) \Big)
    \\
    = & \mathbb{P}_x\Big( X^{\dagger*,\eta_n}_{t_k} \notin B(\hat{X}^{\dagger,*,\eta_n,\Delta_n}_{t_k} ,\Delta),\ X^{\dagger*,\eta_n}_{t_k}\neq \boldsymbol{\dagger} \Big)
    \\
    \leq & \mathbb{P}_x\Big( X^{\dagger*,\eta_n}_{t_k} \notin \bigcup_{j:\ m_j \in G^\text{large}}B(m_j,2\Delta_n),\ X^{\dagger*,\eta_n}_{t_k}\neq \boldsymbol{\dagger} \Big)
    \\
    \leq & \mathbb{P}_x\Big( X^{\dagger*,\eta_n}_{t_k} \notin \bigcup_{j:\ m_j \in G^\text{large}}B(m_j,\Delta_n),\ X^{\dagger*,\eta_n}_{t_k}\neq \boldsymbol{\dagger} \Big).
\end{align*}
Apply \cref{lemma key converge to markov chain and always at wide basin transient case goal 2} and we conclude the proof.
\end{proof}

\subsection{Proof of Lemma \ref{lemma eliminiate sharp minima from trajectory}}

First, we introduce another dichotomy for \textit{small} and \textit{large} noises. For any $\widetilde{\gamma} > 0$ and any learning rate $\eta > 0$, we say that a noise $Z_n$ is small if 
\begin{align*}
    \eta|Z_n| > \eta^{\widetilde{\gamma}}
\end{align*}
and we say $Z_n$ is large otherwise. For this new classification of small and large noises, we introduce the following notations and definitions:
\begin{align}
    Z^{\leq,\widetilde{\gamma},\eta}_n & = Z_n\mathbbm{1}\{\eta|Z_n| \leq \eta^{\widetilde{\gamma}}\}, \\
     Z^{>,\widetilde{\gamma},\eta}_n & = Z_n\mathbbm{1}\{\eta|Z_n| > \eta^{\widetilde{\gamma}}\}, \\
     \widetilde{T}^{\eta}_1(\widetilde{\gamma}) & \delequal{} \min\{ n \geq 1:\ \eta|Z_n| > \eta^{\widetilde{\gamma}}   \}. \label{def tilde T arrival time of large jump rescaled}
\end{align}
Similar to Lemma \ref{lemma prob event A}, the following result is a direct application of Lemma \ref{lemma bernstein bound small deviation for small jumps}, and shows that it is rather unlikely to observe large perturbation that are caused only by \textit{small} noises. Specifically, since $\alpha > 1$ we can always find
\begin{align*}
    \widetilde{\gamma} & \in (0, (1-\frac{1}{\alpha}\wedge\frac{1}{2}) \\
    \beta & \in \big(1, (2-2\widetilde{\gamma})\wedge \alpha(1-\widetilde{\gamma})\big).
\end{align*}
Now in Lemma \ref{lemma bernstein bound small deviation for small jumps}, if we let $\Delta = \widetilde{\gamma}, \widetilde{\Delta} = \Delta/2$ and $\epsilon = \delta = 1$ (in other words, $u(\eta) = 1/\eta^{1 - \widetilde{\gamma}},\ v(\eta) = \eta^{\widetilde{\gamma}/2}$), then for any positive integer $j$ the condition \cref{proof lemma bernstein parameter 5} is satisfied, allowing us to draw the following conclusion immediately as a corollary from Lemma \ref{lemma bernstein bound small deviation for small jumps}.

\begin{lemma} \label{lemma rare event A prob rescaled version}
Given $N > 0$ and 
\begin{align*}
    \widetilde{\gamma} \in (0, (1-\frac{1}{\alpha})\wedge\frac{1}{2} ),\ \beta \in \big(1,(2 - 2\widetilde{\gamma})\wedge ( \alpha - \alpha\widetilde{\gamma} ) \big),
\end{align*}
we have (as $\eta \downarrow 0)$
\begin{align*}
    \mathbb{P}\Big( \max_{j = 1,2,\cdots,\ceil{ (1/\eta)^\beta }} \eta|  Z^{\leq,\widetilde{\gamma},\eta}_1 + \cdots +  Z^{\leq,\widetilde{\gamma},\eta}_j | > \eta^{\widetilde{\gamma}/2}  \Big) = o(\eta^N).
\end{align*}
\end{lemma}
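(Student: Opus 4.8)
\textbf{Proof proposal for Lemma \ref{lemma rare event A prob rescaled version}.}

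The plan is to derive this statement directly as a corollary of Lemma \ref{lemma bernstein bound small deviation for small jumps} by choosing the parameters appropriately, exactly as the paragraph preceding the statement already indicates. First I would set $\Delta = \widetilde\gamma$, $\widetilde\Delta = \Delta/2 = \widetilde\gamma/2$, and $\epsilon = \delta = 1$, so that the auxiliary functions become $u(\eta) = 1/\eta^{1-\widetilde\gamma}$ and $v(\eta) = \eta^{\widetilde\gamma/2}$, which makes the event $\{\max_k \eta|Z_1^{\leq u(\eta)} + \cdots + Z_k^{\leq u(\eta)}| > 3v(\eta)\}$ of Lemma \ref{lemma bernstein bound small deviation for small jumps} coincide (up to the harmless constant $3$, which can be absorbed by shrinking $\widetilde\gamma$ slightly or simply noting $o(\eta^N)$ is unaffected) with the event in the present statement, since $Z_n^{\leq,\widetilde\gamma,\eta} = Z_n\mathbbm 1\{\eta|Z_n|\le \eta^{\widetilde\gamma}\} = Z_n^{\leq u(\eta)}$.

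Next I would verify the five hypotheses \eqref{proof lemma bernstein parameter 1}--\eqref{proof lemma bernstein parameter 5} for this choice. Hypothesis \eqref{proof lemma bernstein parameter 1} holds because we assumed $\widetilde\gamma \in (0, (1-\frac1\alpha)\wedge\frac12)$. Hypothesis \eqref{proof lemma bernstein parameter 2} holds because we assumed $\beta \in (1, (2-2\widetilde\gamma)\wedge(\alpha-\alpha\widetilde\gamma))$, which is exactly $\beta \in (1, (2-2\Delta)\wedge\alpha(1-\Delta))$. Hypothesis \eqref{proof lemma bernstein parameter 3} requires $\widetilde\Delta \in [0,\Delta/2]$, satisfied with equality, and $\widetilde\Delta < \alpha(1-\Delta) - \beta$; the latter follows since $\beta < \alpha(1-\widetilde\gamma)$ gives $\alpha(1-\Delta)-\beta > 0$, and we may, if necessary, first shrink $\beta$ toward $1$ so that $\alpha(1-\Delta)-\beta$ exceeds the fixed quantity $\widetilde\gamma/2$ — this is legitimate because enlarging the exponent $\beta$ only enlarges the maximum, so proving the bound for a larger admissible $\beta$ is harder, hence WLOG we may take $\beta$ as close to $1$ as we like. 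For hypothesis \eqref{proof lemma bernstein parameter 4}, given any target $N$ we simply pick the integer $j$ large enough that $(\alpha(1-\Delta)-\beta)j > N$; since $\alpha(1-\Delta)-\beta > 0$ this is always possible, and $j$ is a free parameter in Lemma \ref{lemma bernstein bound small deviation for small jumps}. Finally, hypothesis \eqref{proof lemma bernstein parameter 5} reads $v(\eta) - j\eta u(\eta) \ge v(\eta)/2$, i.e. $\eta^{\widetilde\gamma/2} - j\eta^{\widetilde\gamma} \ge \frac12\eta^{\widetilde\gamma/2}$, equivalently $j\eta^{\widetilde\gamma/2} \le \frac12$, which holds for all $\eta$ sufficiently small since $\widetilde\gamma/2 > 0$ and $j$ is fixed.

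With all hypotheses checked, Lemma \ref{lemma bernstein bound small deviation for small jumps} yields $\mathbb P\big(\max_{k\le\lceil 1/\eta^\beta\rceil}\eta|Z_1^{\leq,\widetilde\gamma,\eta}+\cdots+Z_k^{\leq,\widetilde\gamma,\eta}| > 3\eta^{\widetilde\gamma/2}\big) = o(\eta^N)$; replacing $3\eta^{\widetilde\gamma/2}$ by $\eta^{\widetilde\gamma/2}$ is done by noting that for, say, $\widetilde\gamma' = \widetilde\gamma/4 < \widetilde\gamma$ the event $\{\cdots > \eta^{\widetilde\gamma/2}\} \subseteq \{\cdots > 3\eta^{\widetilde\gamma'/2}\}$ eventually, or more simply by re-running the argument with the threshold $v(\eta) = \frac13\eta^{\widetilde\gamma/2}$, which does not change $\widetilde\Delta$ and leaves every hypothesis intact since $\epsilon = 1/3$ only enters through \eqref{proof lemma bernstein parameter 5}, still satisfied. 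This completes the proof. I do not anticipate a genuine obstacle here — the only mild subtlety is bookkeeping the relation between the constant $3$ in the cited lemma's conclusion and the clean $1$ in the present statement, and the freedom to shrink $\beta$ so that the strict inequality $\widetilde\Delta < \alpha(1-\Delta)-\beta$ in \eqref{proof lemma bernstein parameter 3} holds; both are routine.
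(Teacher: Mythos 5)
Your overall approach matches the paper's: instantiate Lemma~\ref{lemma bernstein bound small deviation for small jumps} with $\Delta = \widetilde\gamma$, $\widetilde\Delta = \widetilde\gamma/2$, $u(\eta) = 1/\eta^{1-\widetilde\gamma}$, $v(\eta)$ a multiple of $\eta^{\widetilde\gamma/2}$, and use the freedom in the integer $j$ to beat any target exponent $N$. Your bookkeeping of the constant $3$ via $\epsilon = 1/3$ is fine, and hypotheses \eqref{proof lemma bernstein parameter 1}, \eqref{proof lemma bernstein parameter 2}, \eqref{proof lemma bernstein parameter 4}, \eqref{proof lemma bernstein parameter 5} are all verified correctly.

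The gap is in how you discharge the second part of hypothesis \eqref{proof lemma bernstein parameter 3}, namely $\widetilde\Delta < \alpha(1-\Delta) - \beta$, which for your choices reads $\widetilde\gamma/2 < \alpha(1-\widetilde\gamma) - \beta$. This is \emph{not} implied by the lemma's hypotheses, which only give $\alpha(1-\widetilde\gamma) - \beta > 0$; and the patch you propose---``WLOG take $\beta$ as close to $1$ as we like''---has the monotonicity reduction exactly backwards. Since
\begin{align*}
\mathbb{P}\Big(\max_{j \le \lceil 1/\eta^{\beta}\rceil} \eta|Z^{\leq,\widetilde\gamma,\eta}_1 + \cdots + Z^{\leq,\widetilde\gamma,\eta}_j| > \eta^{\widetilde\gamma/2}\Big)
\end{align*}
is nondecreasing in $\beta$, establishing the bound for some $\beta' \ge \beta$ transfers to $\beta$, but establishing it for a smaller $\beta'' < \beta$ says nothing about the given $\beta$. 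The valid WLOG is to \emph{enlarge} $\beta$, which only makes the needed inequality worse. Moreover, even the extreme choice $\beta \downarrow 1$ requires $\widetilde\gamma/2 < \alpha(1-\widetilde\gamma) - 1$, i.e.\ $\widetilde\gamma < (\alpha-1)/(\alpha+\tfrac{1}{2})$, which is strictly stronger than the stated hypothesis $\widetilde\gamma < (\alpha-1)/\alpha$; for $\alpha$ below $5/2$ this is a genuine extra restriction (and the paper's experiments even use $\alpha = 1.2$). Shrinking $\widetilde\Delta$ below $\widetilde\gamma/2$ to rescue \eqref{proof lemma bernstein parameter 3} does not help either, since then $v(\eta)$ is asymptotically larger than $\eta^{\widetilde\gamma/2}$ and the cited lemma no longer controls the event you need. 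So as written your proof does not verify \eqref{proof lemma bernstein parameter 3}; to close the gap you would need to tighten the admissible range of $(\widetilde\gamma,\beta)$ or rework the reduction.
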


The flavor of the next lemma is similar to that of Lemma \ref{lemma return to local minimum quickly}. Specifically, we show that, with high probability, the SGD iterates would quickly return to the local minimum as long as they start from somewhere that are not too close the boundary of an attraction field (namely, the points $s_1,s_2,\cdots,s_{n_\text{min}}$). To this end, we consider a refinement of function $\hat{t}(\cdot)$ defined in \cref{def function hat t}. For any $i = 1,2,\cdots,n_\text{min}$, any $x \in \Omega_i$ and any $\eta > 0, \gamma \in (0,1)$, we can define the return time to $\eta^\gamma-$neighborhood for the ODE $\textbf{x}^\eta$ as
\begin{align*}
\hat{t}^{(i)}_\gamma(x,\eta) \delequal{} \min\{ t \geq 0: |\textbf{x}^\eta(t,x) - m_i| \leq \eta^\gamma \}.
\end{align*}

Given the bound in \cref{ineq prior to function hat t} (which is stated for a specific attraction field) and the fact that there only exists finitely many attraction fields, we know the existence of some $c_2 < \infty$ such that for any $i = 1,2,\cdots,n_\text{min}$, any $\eta > 0$, any $\gamma \in (0,1)$ and any $x \in \Omega_i$ such that $|x - s_i|\vee|x - s_{i-1}| > \eta^\gamma$, we have
\begin{align*}
    \hat{t}^{(i)}_\gamma(x,\eta)\leq c_2\gamma \log(1/\eta)/\eta
\end{align*}
and define function $t^\uparrow$ as
\begin{align*}
    t^\uparrow(\eta,\gamma) \delequal{} c_2\gamma \log(1/\eta).
\end{align*}
Lastly, define the following stopping time for any $i = 1,2,\cdots,n_\text{min}$, any $x \in \Omega_i$ and any $\Delta > 0$
\begin{align*}
    T^{(i)}_\text{return}(\eta,\Delta) \delequal{}\min\{ n \geq 0: X^\eta_n(x) \in B(m_i,2\Delta) \}
\end{align*}
where we adopt the notation $B(u,v) \delequal{} [u - v, u+v]$ for the $v-$neighborhood around point $u$.
\begin{lemma} \label{lemma return to local minimum quickly rescaled version}
Given 
\begin{align*}
     \widetilde{\gamma} \in (0, (1-\frac{1}{\alpha})\wedge(\frac{1}{2}) ),\ \gamma \in (0,\frac{\widetilde{\gamma}}{16 M c_2}\wedge\frac{\widetilde{\gamma}}{4} ),
\end{align*}
and any $i = 1,2,\cdots,n_\text{min}$, any $\Delta > 0$, we have
\begin{align*}
    & \liminf_{\eta \downarrow 0} \inf_{ x \in \Omega_i: |x - s_{i-1}|\vee |x - s_i| \geq 2\eta^\gamma }\mathbb{P}_x\Big( T^{(i)}_\text{return}(\eta,\Delta) \leq \frac{2c_2\gamma \log(1/\eta)}{\eta},\ X^\eta_n \in \Omega_i\ \forall n \leq T^{(i)}_\text{return}(\eta,\Delta)  \Big)
    \\
    & = 1.
\end{align*}
\end{lemma}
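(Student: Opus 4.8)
The plan is to reduce the statement to two ingredients that are already available: (i) the deterministic gradient flow returns quickly from any point that is $2\eta^\gamma$-away from the boundary of $\Omega_i$, with a return time bounded by $t^\uparrow(\eta,\gamma)/\eta = c_2\gamma\log(1/\eta)/\eta$; and (ii) during this window the SGD iterates $X^\eta_n$ stay uniformly close to the gradient descent iterates $\textbf{y}^\eta_n$ (hence close to the ODE $\textbf{x}^\eta$), provided no large noise (in the $\eta^{\widetilde\gamma}$-scale dichotomy) occurs and the small-noise partial sums stay small. Concretely, I would first fix $\widetilde\gamma\in(0,(1-\tfrac1\alpha)\wedge\tfrac12)$ and $\beta\in(1,(2-2\widetilde\gamma)\wedge\alpha(1-\widetilde\gamma))$, and note that by Lemma~\ref{lemma rare event A prob rescaled version}, the event that $\max_{j\le \ceil{(1/\eta)^\beta}}\eta|Z^{\leq,\widetilde\gamma,\eta}_1+\cdots+Z^{\leq,\widetilde\gamma,\eta}_j|>\eta^{\widetilde\gamma/2}$ has probability $o(\eta^N)$; simultaneously, by Lemma~\ref{lemmaGeomDistTail} (or a direct tail estimate on the geometric arrival time), $\mathbb{P}(\widetilde T^\eta_1(\widetilde\gamma)\le 2c_2\gamma\log(1/\eta)/\eta)\to 0$ since $H(\eta^{\widetilde\gamma-1})\cdot\log(1/\eta)/\eta\to 0$ (here we use $\widetilde\gamma<1$ and $\alpha>1$, so $H\in\RV_{-\alpha}$ forces the arrival rate of large jumps to be much smaller than $1/(\log(1/\eta)/\eta)$). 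Thus, with probability $\to 1$, no large jump arrives and the small-noise fluctuations remain below $\eta^{\widetilde\gamma/2}$ throughout the relevant time horizon.

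On that good event, I would apply Lemma~\ref{lemma Ode Gd Gap} to compare the ODE $\textbf{x}^\eta$ started at $x$ with the gradient descent sequence $\textbf{y}^\eta_n$ started at $x$, obtaining $\sup_{s\le t}|\textbf{x}^\eta(s,x)-\textbf{y}^\eta_{\floor{s}}(x)|\le 2\eta M\exp(\eta M t)$, which over the window $t = 2c_2\gamma\log(1/\eta)/\eta$ is $O(\eta^{1-2Mc_2\gamma})$, hence $\ll \eta^\gamma$ by the constraint $\gamma<\widetilde\gamma/(16Mc_2)$ (in fact any bound pushing the exponent above $\gamma$ suffices). Next, Lemma~\ref{lemmaBasicGronwall}, applied inductively step by step as in the proof of Lemma~\ref{lemma SGD GD gap}, controls $|X^\eta_n-\textbf{y}^\eta_n|$ by (small-noise partial sum)$\cdot\exp(\eta M n)\le \eta^{\widetilde\gamma/2}\exp(2Mc_2\gamma\log(1/\eta)) = \eta^{\widetilde\gamma/2 - 2Mc_2\gamma}$, which is again $\ll \eta^\gamma$ since $\widetilde\gamma/2 - 2Mc_2\gamma > \gamma$ under our choice of $\gamma$ (using $\gamma<\widetilde\gamma/4$ and $\gamma<\widetilde\gamma/(16Mc_2)$). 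Applying Gronwall inductively for each $n$ (rather than once at the endpoint) is what guarantees the iterates never leave $\Omega_i$ in the meantime: since $\textbf{y}^\eta_n$ monotonically approaches $m_i$ and stays at distance $\ge 2\eta^\gamma - o(\eta^\gamma)$ from $s_{i-1},s_i$ until it enters $B(m_i,\eta^\gamma)$, and $X^\eta_n$ tracks it within $o(\eta^\gamma)$, we get $X^\eta_n\in\Omega_i$ for all $n\le T^{(i)}_\text{return}(\eta,\Delta)$, and $T^{(i)}_\text{return}(\eta,\Delta)\le 2c_2\gamma\log(1/\eta)/\eta$ because $\textbf{y}^\eta_n$ reaches $B(m_i,\eta^\gamma)\subseteq B(m_i,2\Delta)$ (for small $\eta$) within $t^\uparrow(\eta,\gamma)/\eta$ steps, and $X^\eta_n$ is then within $2\Delta$ of $m_i$ as well once $2\eta^\gamma+o(\eta^\gamma)\le 2\Delta$. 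Combining: on the intersection of these high-probability events we have the claimed return, and the probability of the complement is $o(\eta^N) + \mathbb{P}(\widetilde T^\eta_1(\widetilde\gamma)\le 2c_2\gamma\log(1/\eta)/\eta)\to 0$, uniformly over admissible starting points $x$.

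\textbf{Main obstacle.} The delicate point is the bookkeeping of exponents: we need the ODE-vs-GD error ($\eta^{1-2Mc_2\gamma}$) and the GD-vs-SGD error ($\eta^{\widetilde\gamma/2 - 2Mc_2\gamma}$) both to be $o(\eta^\gamma)$ \emph{and} small enough that $X^\eta_n$ cannot escape across $s_{i-1}$ or $s_i$ before returning — this is exactly why the hypothesis ties $\gamma$ to $\widetilde\gamma/(16Mc_2)$ and $\widetilde\gamma/4$. One must also be careful near a reflecting boundary $\pm L$ if $\Omega_i$ happens to be the leftmost or rightmost attraction field: there the projection $\varphi_L$ can act, but Lemma~\ref{lemma no large noise escape the boundary} already shows the reflection term only helps push the iterates into the interior and the GD-tracking bound survives (with a harmless extra constant $\rho_0$), so this case is handled by the same argument with $\textbf{y}^\eta_n$ replaced by its reflected version. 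A secondary subtlety is that the comparison lemmas (Lemma~\ref{lemma Ode Gd Gap}, Lemma~\ref{lemmaBasicGronwall}, Lemma~\ref{lemma SGD GD gap}) require the iterates to remain inside the domain where $|f''|\le M$; since we restrict to $[-L,L]$ and show $X^\eta_n$ stays in $\Omega_i$, this is automatic, but the induction must be set up so that "stays in $\Omega_i$" is part of the inductive hypothesis at each step, closing the loop. Everything else is the routine $\RV$-calculus and union bound.
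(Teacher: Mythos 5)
Your proof is correct and follows essentially the same route as the paper's: restrict to the good event where no large $\eta^{\widetilde\gamma}$-jump arrives before the return time and the small-noise partial sums stay below $\eta^{\widetilde\gamma/2}$, bound the ODE-vs-GD gap via Lemma~\ref{lemma Ode Gd Gap} and the GD-vs-SGD gap via Lemma~\ref{lemmaBasicGronwall} applied inductively, and check that both errors are $o(\eta^\gamma)$ over the window $2c_2\gamma\log(1/\eta)/\eta$ under the stated constraints on $\gamma$. The exponent arithmetic differs cosmetically (you keep $\eta M$ as $O(\eta)$ giving exponent $1-2Mc_2\gamma$ for the ODE-GD gap, while the paper first coarsens $\eta M\le\eta^{2\widetilde\gamma}$ to get exponent $2\widetilde\gamma - 2Mc_2\gamma$, and you estimate the arrival-time tail over the shorter window $2c_2\gamma\log(1/\eta)/\eta$ rather than $\ceil{(1/\eta)^\beta}$), but both lead to the same conclusion, and you additionally flag the reflecting-boundary case, which the paper's proof leaves implicit.
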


\begin{proof}
Throughout this proof, we only consider $\eta$ small enough such that
\begin{align}
    2c_2\gamma \log(1/\eta)/\eta < \ceil{ (1/\eta)^\beta },\ \ \eta M \leq \eta^{2\widetilde{\gamma}},\ \ 2\eta^\gamma < \Delta/2,\ \ 2\eta^{\widetilde{\gamma}/4} < \eta^\gamma. \label{proof choice of eta in lemma for efficient return rescaled}
\end{align}
The condition above holds for all $\eta > 0$ sufficiently small because $\beta > 1$, $2\widetilde{\gamma} < 1$, and $\gamma < \widetilde{\gamma}/4$. Also, fix some $\beta \in \big(1,(2 - 2\widetilde{\gamma})\wedge ( \alpha - \alpha\widetilde{\gamma} ) \big)$

Define the following events
\begin{align*}
    A^\times_1( \eta )& \delequal{}\Big\{\max_{j = 1,2,\cdots,\ceil{ (1/\eta)^\beta }} \eta|  Z^{\leq,\widetilde{\gamma},\eta}_1 + \cdots +  Z^{\leq,\widetilde{\gamma},\eta}_j | > \eta^{\widetilde{\gamma}/2} \Big\} \\
    A^\times_2(\eta) & \delequal{}\{ \widetilde{T}^{\eta}_1(\widetilde{\gamma}) \leq \ceil{ (1/\eta)^\beta }  \}\ \ \ \text{(see \cref{def tilde T arrival time of large jump rescaled} for definition of the stopping time involved)}
\end{align*}
and fix some $N > 0$. From Lemma \ref{lemma rare event A prob rescaled version}, we see that (for all sufficiently small $\eta$)
$$ \mathbb{P}(A^\times_1(\eta)) \leq \eta^N.$$
Besides, using Lemma \ref{lemmaGeomFront} together with the fact that $\beta < \alpha(1 - \widetilde{\gamma})$, we know the existence of some constant $\theta > 0$ such that
\begin{align*}
   \mathbb{P}(A^\times_2(\eta)) \leq \eta^\theta
\end{align*}
for all sufficiently small $\eta$. 

Now we focus on the behavior of the SGD iterates on event $\Big( A^\times_1(\eta) \cap A^\times_2(\eta) \Big)^c$. Let us arbitrarily choose some $x \in \Omega_i$ such that $|x - s_i|\vee|x - s_{i-1}|>2\eta^\gamma$. First, from Lemma \ref{lemma Ode Gd Gap} and \cref{proof choice of eta in lemma for efficient return rescaled}, we know that
\begin{align}
    |\textbf{x}^\eta_t(x) - \textbf{y}^\eta_{\floor{t}}(x)| & \leq 2\eta M\exp\big( 2Mc_2\gamma \log(1/\eta) \big) \nonumber
    \\
    & \leq 2\eta^{ 2\widetilde{\gamma} - 2Mc_2\gamma } \leq 2\eta^{\widetilde{\gamma}}\leq \eta^\gamma\ \ \forall t \leq  2c_2\gamma \log(1/\eta)/\eta \label{proof ineq 1 in lemma for efficient return rescaled}
\end{align}
Next, from the definition of the function $t^\uparrow(\cdot)$ and \cref{proof ineq 1 in lemma for efficient return rescaled}, we know that for
\begin{align}
    T_{\text{GD,return}}(x;\eta,\Delta) \delequal{} \min\{n \geq 0:\textbf{y}^\eta_n(x) \in B(m_i,\frac{\Delta}{2} + \eta^\gamma) \}, \label{def return time for GD in lemma for efficient return rescaled}
\end{align}
we have
\begin{align}
    T_{\text{GD,return}}(x;\eta,\Delta) & \leq 2c_2\gamma \log(1/\eta)/\eta \label{proof ineq 2 in lemma for efficient return rescaled} \\ 
   \textbf{y}^\eta_n > s_{i-1} + \eta^\gamma,\ \textbf{y}^\eta_n & < s_{i} - \eta^\gamma\ \ \forall n \leq 2c_2\gamma \log(1/\eta)/\eta. \label{proof ineq 3 in lemma for efficient return rescaled}
\end{align}
Furthermore, on event $\Big( A^\times_1(\eta) \cap A^\times_2(\eta) \Big)^c$, due to Lemma \ref{lemmaBasicGronwall} and \cref{proof choice of eta in lemma for efficient return rescaled}, we have that
\begin{align*}
    | X^\eta_n(x) - \textbf{y}^\eta_n(x) | \leq \eta^{\widetilde{\gamma}/2}\exp\big( 2Mc_2 \log(1/\eta) \big) = \eta^{ \frac{\widetilde{\gamma}}{2} - 2Mc_2\gamma } \leq \eta^{\widetilde{\gamma}/4} < \eta^\gamma\ \ \forall n \leq  2c_2\gamma \log(1/\eta)/\eta.
\end{align*}
Combining this with \cref{proof ineq 2 in lemma for efficient return rescaled}, \cref{proof ineq 3 in lemma for efficient return rescaled}, we can conclude that (recall that due to \cref{proof choice of eta in lemma for efficient return rescaled} we have $2\eta^\gamma < \Delta/2$)
\begin{align*}
    T^{(i)}_\text{return}(\eta,\Delta) & \leq 2c_2\gamma \log(1/\eta)/\eta \\
    X^\eta_n \in \Omega_i\ \ & \forall n \leq 2c_2\gamma \log(1/\eta)/\eta
\end{align*}
on event $\Big( A^\times_1(\eta) \cap A^\times_2(\eta) \Big)^c$. Therefore,
\begin{align*}
    & \liminf_{\eta \downarrow 0} \inf_{ x \in \Omega_i: |x - s_{i-1}|\vee |x - s_i| \geq 2\eta^\gamma }\mathbb{P}_x\Big( T^{(i)}_\text{return}(\eta,\Delta) \leq \frac{2c_2\gamma \log(1/\eta)}{\eta},\ X^\eta_n \in \Omega_i\ \forall n \leq T^{(i)}_\text{return}(\eta,\Delta)  \Big) \\
    & \geq \liminf_{\eta \downarrow 0} \mathbb{P}\Big( \Big( A^\times_1(\eta) \cap A^\times_2(\eta) \Big)^c \Big) \geq \liminf_{\eta \downarrow 0} 1 - \eta^N - \eta^\theta = 1.
\end{align*}
This concludes the proof.
\end{proof}

The takeaway of the next lemma is that, almost always, the SGD iterates will quickly escape from the neighborhood of any $s_i$, the boundaries of each attraction fields.

\begin{lemma} \label{lemma exit smaller neighborhood of s}
Given any $\gamma \in (0,1), t > 0$, we have
\begin{align*}
    \liminf_{\eta \downarrow 0} \inf_{x \in [-L,L]}\mathbb{P}_x\Big( \min\big\{ n \geq 0: X^\eta_n \notin \bigcup_{i}B(s_i,2\eta^\gamma) \big\} \leq \frac{t}{H(1/\eta)} \Big) = 1.
\end{align*}
\end{lemma}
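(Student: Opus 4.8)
The plan is to prove the stronger statement that SGD leaves $\bigcup_i B(s_i,2\eta^\gamma)$ already within $K(\eta)\delequal{}\lceil (1/\eta)^{\beta}\rceil$ steps with probability tending to $1$, where $\beta$ is a fixed exponent with $\alpha(1-\gamma)<\beta<\alpha$; this interval is nonempty exactly because $\gamma<1$, which is where the hypothesis $\gamma<1$ enters. Since $H\in\mathcal{RV}_{-\alpha}$, the product $(1/\eta)^{\beta}H(1/\eta)$ is regularly varying in $1/\eta$ of negative index $\beta-\alpha$, so $K(\eta)=o\!\big(1/H(1/\eta)\big)$ and hence $K(\eta)\le\lfloor t/H(1/\eta)\rfloor$ for every fixed $t>0$ and all small $\eta$; escaping within $K(\eta)$ steps therefore implies the lemma. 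The stopping time equals $0$ whenever $x\notin\bigcup_i B(s_i,2\eta^\gamma)$, so it suffices to treat $x\in B(s_{j_0},2\eta^\gamma)$, uniformly over the finitely many $j_0$.

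Fix $\delta_0\in(0,b)$ with $\delta_0<\rho_s/100$, where $\rho_s\delequal{}\min\big(\min_{i\ne j}|s_i-s_j|,\ 2\min_i\operatorname{dist}(s_i,\{-L,L\})\big)>0$. I will work on the intersection of two $x$-independent noise events: $G_1\delequal{}\{\eta|Z_n|<\delta_0\ \forall n\le K(\eta)\}$ (no genuinely large noise) and $G_2\delequal{}\{\exists\,n\le K(\eta):\ \eta|Z_n|\in[6\eta^\gamma,\delta_0)\}$ (an ``intermediate'' noise occurs). On $G_1$ each SGD step has length at most $\eta M+\delta_0<\rho_s/50$ (the operators $\varphi_b$ and $\varphi_L$ are $1$-Lipschitz and only shrink displacements), so when the iterate first leaves $B(s_{j_0},2\eta^\gamma)$, say at time $\sigma'$, it satisfies $|X^\eta_{\sigma'}-s_{j_0}|<2\eta^\gamma+\rho_s/50<\min_{i\ne j_0}|s_i-s_{j_0}|-2\eta^\gamma$ and $|X^\eta_{\sigma'}|<L$, hence lies outside \emph{every} $B(s_i,2\eta^\gamma)$; thus on $G_1$ the exit time from $B(s_{j_0},2\eta^\gamma)$ equals the exit time from $\bigcup_i B(s_i,2\eta^\gamma)$. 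On $G_1\cap G_2$, let $n_*$ be the first time with $\eta|Z_{n_*}|\in[6\eta^\gamma,\delta_0)$; if the iterate were still in $B(s_{j_0},2\eta^\gamma)$ at step $n_*-1$, then for small $\eta$ neither $\varphi_b$ (its argument has modulus $<b$) nor $\varphi_L$ (its argument stays in $(-L,L)$, as $s_{j_0}$ is bounded away from $\pm L$) acts at step $n_*$, so $X^\eta_{n_*}=X^\eta_{n_*-1}-\eta f'(X^\eta_{n_*-1})+\eta Z_{n_*}$ and $|X^\eta_{n_*}-s_{j_0}|\ge\eta|Z_{n_*}|-2\eta^\gamma-\eta M\ge 6\eta^\gamma-2\eta^\gamma-o(\eta^\gamma)>2\eta^\gamma$ — a single such noise clears the ball. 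Hence $G_1\cap G_2$ is contained in the event that $\bigcup_i B(s_i,2\eta^\gamma)$ is exited by step $K(\eta)$.

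What remains is a routine regular-variation estimate of the two probabilities. By Boole's inequality $\mathbb{P}(G_1^c)\le K(\eta)H(\delta_0/\eta)$, and $H(\delta_0/\eta)/H(1/\eta)\to\delta_0^{-\alpha}$, so $\mathbb{P}(G_1^c)=(\delta_0^{-\alpha}+o(1))\,K(\eta)H(1/\eta)\to 0$ because $\beta<\alpha$. By independence of the $Z_n$, $\mathbb{P}(G_2^c)=\big(1-\mathbb{P}(\eta|Z_1|\in[6\eta^\gamma,\delta_0))\big)^{K(\eta)}\le\exp\!\big(-K(\eta)\,\mathbb{P}(\eta|Z_1|\in[6\eta^\gamma,\delta_0))\big)$; here $\mathbb{P}(\eta|Z_1|\in[6\eta^\gamma,\delta_0))$ equals $H(6\eta^{\gamma-1})$ up to a term of strictly smaller order (by regular variation, since $H(\delta_0/\eta)/H(6\eta^{\gamma-1})\to 0$), and $\eta\mapsto H(6\eta^{\gamma-1})$ is regularly varying in $1/\eta$ of index $-\alpha(1-\gamma)$, so $K(\eta)H(6\eta^{\gamma-1})$ is regularly varying of positive index $\beta-\alpha(1-\gamma)$ and diverges; hence $\mathbb{P}(G_2^c)\to 0$. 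Combining, $\sup_{x\in[-L,L]}\mathbb{P}_x(\text{exit time}>K(\eta))\le\mathbb{P}(G_1^c)+\mathbb{P}(G_2^c)\to 0$, which gives the lemma.

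The main obstacle — and the reason for replacing the time horizon $t/H(1/\eta)$ by the shorter $K(\eta)$ — is that over a window of length $\asymp t/H(1/\eta)$ there is a non-vanishing probability of an $\Theta(1)$-sized noise increment (a true large jump), which could land the iterate inside a \emph{different} saddle neighbourhood and thus not constitute an exit from $\bigcup_i B(s_i,2\eta^\gamma)$. Choosing $\beta<\alpha$ excludes such jumps on $[1,K(\eta)]$ with high probability, while $\beta>\alpha(1-\gamma)$ forces a noise of the intermediate scale $\asymp\eta^\gamma$ — which alone suffices to clear the $2\eta^\gamma$-ball — to appear with high probability; reconciling these two requirements is the crux. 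The remaining work is bookkeeping: verifying that $\varphi_b,\varphi_L$ are inactive at the relevant steps (using $\delta_0<b$ and $\operatorname{dist}(s_i,\{-L,L\})>0$), and handling the finitely many indices $j_0$ uniformly. Notably, the repelling nature of the gradient flow near $s_i$ (i.e.\ $f''(s_i)<0$) is \emph{not} needed for this particular estimate.
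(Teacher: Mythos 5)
Your proof is correct and takes a mildly different technical route from the paper's. Both hinge on the same key idea: a noise increment of \emph{intermediate} magnitude --- large relative to $\eta^\gamma$ but small relative to the inter-saddle spacing --- exits $\bigcup_i B(s_i,2\eta^\gamma)$ in a single step, and such an increment occurs well within the allotted budget. The paper works directly on the horizon $t/H(1/\eta)$: it defines $T^\gamma$ as the first arrival of a noise with $\eta|Z_n|>5\eta^\gamma$, shows $T^\gamma\le t/H(1/\eta)$ with exponentially high probability via the geometric-tail Lemma~\ref{lemmaGeomDistTail}, and shows separately (via a conditional regular-variation estimate on the single increment $Z_{T^\gamma}$) that the jump at time $T^\gamma$ does not overshoot into a neighbouring ball; thus the escape happens at or before $T^\gamma$. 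You instead contract the horizon to $K(\eta)=\lceil(1/\eta)^\beta\rceil$ with $\alpha(1-\gamma)<\beta<\alpha$ (nonempty precisely because $\gamma<1$): the bound $\beta<\alpha$ lets a union bound exclude \emph{all} noises exceeding the fixed level $\delta_0$ on this shorter window, the bound $\beta>\alpha(1-\gamma)$ forces an intermediate noise in $[6\eta^\gamma,\delta_0)$ to appear with probability $\to 1$, and the observation $(1/\eta)^\beta H(1/\eta)\to 0$ translates the shorter horizon back into the lemma's. In effect, the paper controls the magnitude of a single jump at a random time, whereas you control all jump magnitudes uniformly over a shorter deterministic window; the latter buys a slightly cleaner geometry argument (every step stays below a constant fraction of the inter-saddle gap, so ball-hopping is impossible and the exit from $B(s_{j_0},2\eta^\gamma)$ coincides with the exit from the union) at the cost of the extra exponent $\beta$ and the small side-argument that $K(\eta)=o(1/H(1/\eta))$. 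Both decompositions are sound, and the probabilistic estimates in yours (a union bound, an iid exponential-count bound, and two regular-variation comparisons) are all standard and correctly executed.
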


\begin{proof}
We only consider $\eta$ small enough so that 
\begin{align*}
    \min_{ i = 2,3,\cdots,n_\text{min} - 1 }|s_i - s_{i-1}| & > 3\eta^{\frac{1+\gamma}{2}}, \\
    \eta M & < \eta^\gamma.
\end{align*}
Also, the claim is trivial if $x \notin \cup_j B(s_j,2\eta^\gamma)$, so without loss of generality we only consider the case where there is some $j \in [n_\text{min}]$ and $x \in [-L,L], x \in B(s_j,2\eta^\gamma)$.
Let us define stopping times
\begin{align}
    T^\gamma &\delequal{}\min\{ n \geq 1: \eta|Z_n| > 5\eta^\gamma \}; \label{proof exit smaller neighborhood of s choose eta}\\
    T^\gamma_\text{escape}&\delequal{}\min\{ n \geq 0: X^\eta_n \notin \cup_j B(s_j,2\eta^\gamma) \},
\end{align}
and the following two events
\begin{align*}
    A^\times_1(\eta) & \delequal{}\{ T^\gamma > \frac{t}{H(1/\eta)} \}, \\
    A^\times_2(\eta) & \delequal{}\{ \eta|Z_{T^\gamma}| > \eta^{ \frac{1 + \gamma}{2} } \}.
\end{align*}
First, using Lemma \ref{lemmaGeomDistTail} and the regularly varying nature of $H(\cdot) = \mathbb{P}(|Z_1| > \cdot)$, we know the existence of some $\theta > 0$ such that
\begin{align*}
\mathbb{P}(A^\times_1(\eta)) \leq \exp( -1/\eta^\theta )
\end{align*}
for all $\eta>0$ sufficiently small. Next, by definition of $T^\gamma$, one can see that (for any $\eta \in (0,1)$)
\begin{align*}
    \mathbb{P}(A^\times_2(\eta)) & = \frac{H(1/\eta^{\frac{1-\gamma}{2}})}{ H(5/\eta^{1 - \gamma}) }.
\end{align*}
Again, due to $H \in \mathcal{RV}_{-\alpha}$ and $1 - \gamma > 0$, we know the existence of some $\theta_1 > 0$ such that
\begin{align*}
    \mathbb{P}(A^\times_2(\eta)) < \eta^{\theta_1}
\end{align*}
for all $\eta>0$ sufficiently small. To conclude the proof, we only need to note the following fact on event $\big(A^\times_1(\eta) \cup A^\times_2(\eta)\big)^c$. There are only two possibilities on this event: $T^\gamma_\text{escape} \leq T^\gamma -1$, or $T^\gamma_\text{escape} \geq T^\gamma $. Now we analyze the two cases respectively.
\begin{itemize}
    \item On $\big(A^\times_1(\eta) \cup A^\times_2(\eta)\big)^c \cap \{T^\gamma_\text{escape} \leq T^\gamma -1\}$, we must have $T^\gamma_\text{escape} < T^\gamma \leq t/H(1/\eta)$.
    \item On $\big(A^\times_1(\eta) \cup A^\times_2(\eta)\big)^c \cap \{T^\gamma_\text{escape} \geq T^\gamma\}$, we know that at $n = T^\gamma - 1$, there exists an integer $j \in \{1,2,\cdots,n_\text{min} - 1\}$ such that $X^\eta_n \in B(s_j,2\eta^\gamma)$. Now since $\eta M < \eta^\gamma$ and $\eta|Z_{T^\gamma}| > 5\eta^\gamma$, we must have
    \begin{align*}
        |X^\eta_{T^\gamma} - X^\eta_{T^\gamma - 1}| > 4\eta^\gamma \Rightarrow X^\eta_{T^\gamma} \notin B(s_j,2\eta^\gamma).
    \end{align*}
    On the other hand, the exclusion of event $A^\times_2(\eta)$ tells us that $|X^\eta_{T^\gamma} - X^\eta_{T^\gamma - 1}| < 2\eta^{\frac{1+\gamma}{2}}$. Due to \cref{proof exit smaller neighborhood of s choose eta}, we then have $X^\eta_n \notin \cup_i B(s_i,2\eta^\gamma)$.
\end{itemize}
In summary, $\big(A^\times_1(\eta) \cup A^\times_2(\eta)\big)^c \subseteq \{ T^\gamma_\text{return} \leq t/H(1/\eta) \}$ and this conclude the proof.
\end{proof}

In the next lemma, we analyze the number of transitions needed to visit a certain local minimizer in the loss landscape. In general, we focus on a communication class $G$ and, for now, assume it is absorbing. Next, we introduce the following concepts to record the transitions between different local minimum. To be specific, for any $\eta > 0$ and any $\Delta > 0$ small enough so that $B(m_j,\Delta)\cap \Omega_j^c = \emptyset$ for all $j$, define
\begin{align}
    T_0(\eta,\Delta) & = \min\{ n \geq 0:\ X_n^\eta \in \cup_{j}B(m_j,2\Delta) \}; \label{def MC GP T 0} \\
    I_0(\eta,\Delta) & = j \text{ iff }X^\eta_{T_0(\eta,\Delta)} \in B(m_j,2\Delta); \label{def MC GP I 0}\\
    T_k(\eta,\Delta) & = \min\{ n > T_{k-1}(\eta,\Delta):\ X_n^\eta \in \cup_{j \neq I_{k-1}(\eta,\Delta)}B(m_j,2\Delta) \}\ \ \forall k \geq 1 \label{def MC GP T k} \\
    I_k(\eta,\Delta) & = j \text{ iff }X^\eta_{T_k(\eta,\Delta)} \in B(m_j,2\Delta)\ \ \forall k \geq 1. \label{def MC GP I k}
\end{align}
As mentioned earlier, the next goal is to analyze the transitions between attraction fields it takes to visit $m_j$ when starting from $m_i$ when $m_i,m_j \in G$. Define
\begin{align*}
    K_i(\eta,\Delta) & \delequal{} \min\{k \geq 0:\ I_k(\eta,\Delta) = i\}.
\end{align*}

\begin{lemma} \label{lemma geom bound on transition count} 
Assume that $G$ is an absorbing communication class on the graph $\mathcal{G}$. Then there exists some constant $p > 0$ such that for any $i$ with $m_i \in G$, any $\epsilon > 0$, and any $\Delta > 0$,
\begin{align*}
    &\sup_{j:\ m_j \in G;\ x \in B(m_j,2\Delta)}\mathbb{P}_x( K_i(\eta,\Delta) > u\cdot n_\text{min} ) \leq \mathbb{P}(\text{Geom}(p) \geq u) + \epsilon\ \ \forall u = 1,2,\cdots, 
    \\
     &\sup_{j:\ m_j \in G;\ x \in B(m_j,2\Delta)}\mathbb{P}_x\Big( \exists k \in [K_i(\eta,\Delta)]\ s.t.\ m_{I_k(\eta,\Delta)} \notin G \Big) \leq \epsilon
\end{align*}
hold for all $\eta > 0$ sufficiently small.
\end{lemma}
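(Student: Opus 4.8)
The plan is to reduce the statement to a single ``one-step'' estimate: starting from a neighborhood $B(m_j,2\Delta)$ of any minimizer $m_j$ with $m_j \in G$, the SGD iterates will, with probability bounded below by some $p>0$ uniformly in $\eta$ (small) and in the starting point, visit the target $m_i$ within at most $n_\text{min}$ transitions $T_1, T_2, \ldots, T_{n_\text{min}}$ (in the sense of the labels $I_k$), and moreover without ever visiting a minimizer outside $G$. Once this one-step estimate is in hand, the geometric tail bound on $K_i(\eta,\Delta)$ follows by the strong Markov property applied at the transition times $T_k(\eta,\Delta)$: we chop the label chain $(I_k)_{k\ge 0}$ into blocks of length $n_\text{min}$, and on each block we have at least probability $p$ of success (hitting $i$), independently of the past up to the relevant stopped $\sigma$-algebra, so the number of blocks needed to succeed is stochastically dominated by $\text{Geom}(p)$. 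The second displayed inequality (never leaving $G$ before reaching $i$) comes from the same decomposition: on each successful block we stay within the labels of $G$, and the probability of the complement over the first $\ceil{u}$ blocks is $o(1)$ uniformly, which we absorb into the $\epsilon$.

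The heart of the matter is therefore the one-step estimate, and here I would invoke the irreducibility-type structure of $G$: since $G$ is a communication class, for any $j$ with $m_j \in G$ there is a directed path $(m_j, m_{k_1}, \ldots, m_{k_r}, m_i)$ in $\mathcal{G}$ of length $r \le n_\text{min} - 1$, and by Definition of the typical transition graph each edge $(m_{k_\ell} \to m_{k_{\ell+1}})$ satisfies $l_{k_\ell, k_{\ell+1}} = l^*_{k_\ell}$, i.e., the transition $\Omega_{k_\ell} \to \Omega_{k_{\ell+1}}$ is a ``typical'' one with $q_{k_\ell, k_{\ell+1}} > 0$. I would then feed this into Proposition \ref{prop first exit and return time i}: starting from $B(m_{k_\ell}, 2\epsilon)$, the probability that the next minimizer visited (at $\tau^\text{min}_{k_\ell}(\eta,\epsilon)$) lies in $\Omega_{k_{\ell+1}}$ converges to $q_{k_\ell, k_{\ell+1}}/q_{k_\ell} > 0$ as $\eta \downarrow 0$; chaining $r$ such steps via the strong Markov property at the $\tau^\text{min}$'s gives a lower bound of order $\prod_\ell (q_{k_\ell,k_{\ell+1}}/q_{k_\ell})$, uniformly positive, for hitting $m_i$ within $r \le n_\text{min}-1$ transitions. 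Since $G$ is absorbing, Proposition \ref{prop first exit and return time i} also guarantees that each such transition lands in another attraction field of $G$ (the destination $\Omega_l$ with $q_{k_\ell, l} > 0$ must have $m_l \in G$ when $G$ is absorbing), so the path never exits $G$. Taking $p$ to be (a lower bound for) the minimum over $j \in G$ of these path probabilities — a finite minimum of strictly positive numbers — gives the required uniform constant, valid for all sufficiently small $\eta$ (and here one must also handle the distinction between the neighborhood radius $\Delta$ in the statement of this lemma and the radius $\epsilon$ in Proposition \ref{prop first exit and return time i}, which is routine: shrink $\epsilon$ below $\Delta$, or use Lemma \ref{lemma return to local minimum quickly rescaled version} to get from an arbitrary point of $B(m_j, 2\Delta)$ back into $B(m_j, 2\epsilon)$ quickly and with high probability).

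The main obstacle I anticipate is bookkeeping around the definition of the transition labels $I_k$ versus the stopping times $\tau^\text{min}$ of Proposition \ref{prop first exit and return time i}: the labels $I_k$ only record a transition when the iterate reaches a \emph{different} minimizer's neighborhood, whereas the $\sigma_i, \tau^\text{min}_i$ machinery is phrased per attraction field, and there is the subtlety (flagged already in the paper's own discussion of $\hat X$) that the SGD can re-enter the same attraction field, so a single ``exit and return'' can correspond to a null transition in the label chain. I would handle this by noting that Proposition \ref{prop first exit and return time i} controls $X^\eta_{\tau^\text{min}_i(\eta,\epsilon)}$ — the first minimizer-neighborhood visited after leaving $\Omega_i$ — and that the event $\{X^\eta_{\tau^\text{min}_i} \in \Omega_{j'}\}$ for $j' \ne i$ is exactly the event of a genuine label transition $I_{k+1} = j'$; the case $j'=i$ simply restarts the block, costing nothing in the geometric bound since we only need a lower bound on the success probability per block of $n_\text{min}$ transitions. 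A secondary technical point is the uniformity in $\eta$: Proposition \ref{prop first exit and return time i} gives liminf/limsup statements, so for each fixed $\epsilon$ and each fixed path there is an $\eta_0$ below which the bound holds; since there are only finitely many $(j, \text{path})$ pairs, we take the minimum of these thresholds, and the uniform $p$ and the uniform $\eta_0$ emerge. No single step requires a genuinely new idea beyond what Propositions \ref{prop first exit and return time i}, \ref{proposition first exit time Gradient Clipping} and Lemma \ref{lemma return to local minimum quickly rescaled version} already provide; the work is entirely in the careful assembly.
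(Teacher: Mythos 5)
Your proposal is correct and takes a genuinely different route to the key one-step estimate, even though the assembly (strong Markov at the transition times, geometric stochastic dominance over blocks of length $n_\text{min}$, absorbing the small probability of leaving $G$ into $\epsilon$) is the same as the paper's. The paper does not pick a path from $j$ to $i$ in $\mathcal{G}$: instead it defines a potential function by choosing, for each $j \neq i$, the ``closest'' neighbor $J(j) = \arg\min_{\widetilde{j}:\ \mu_j(E_{j,\widetilde{j}})>0} |i-\widetilde{j}|$, sets $p^* = \min_j \mu_j(E_{j,J(j)})/\mu_j(E_j) > 0$, and then shows that the potential $|I_k - i|$ can be made to strictly decrease at each transition with probability $\geq p^*/2$. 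The crucial lemma is that $|J(j)-i| < |j-i|$ whenever $m_j \in G$, which the paper deduces from the one-dimensional ordering: e.g., if $i < j$ and $m_j$ communicates with some state to its left, then by monotonicity of $l_{j,\cdot}$ in the target index one has $l_{j,j-1} = l^*_j$, so the edge $(m_j \to m_{j-1})$ is present and the potential can step down. Your version replaces this $\mathbb{R}^1$-specific monotonicity with a graph-theoretic argument (a directed path of length $\leq n_\text{min}-1$ from $j$ to $i$, which stays in $G$ because $G$ is a communication class), and then lower-bounds the probability of following that fixed path — a finite product of $q_{k_\ell,k_{\ell+1}}/q_{k_\ell}$'s, each strictly positive by Proposition \ref{prop first exit and return time i}. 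The paper's approach buys a slicker ``memoryless'' one-step bound (the policy $j \mapsto J(j)$ does not depend on history), while yours is more robust to generalization beyond $\mathbb{R}^1$, where there is no natural scalar potential. Two small points worth tightening if you were to write this out: (a) the claim that ``the path never exits $G$'' should be stated probabilistically — Proposition \ref{prop first exit and return time i} only gives that the probability of a single transition leaving $G$ is $<\delta$ for small $\eta$, and you then need the explicit summation $\sum_{u\ge 0} n_\text{min}\delta(1-p)^{u-1} = n_\text{min}\delta/p < \epsilon$ as in the paper; (b) the $\Delta$-vs-$\epsilon$ radius mismatch between the lemma's $B(m_j,2\Delta)$ and Proposition \ref{prop first exit and return time i}'s $B(m_j,2\epsilon)$ deserves one sentence, but your suggested fixes (shrink the radius in the Proposition, or pass through Lemma \ref{lemma return to local minimum quickly rescaled version}) are the standard ones.
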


\begin{proof} The claim is trivial if, for the initial condition, we have $x \in B(m_i,2\Delta)$. Next, let us observe the following facts.
\begin{itemize}
    \item Define (recall the definitions of measure $\mu_i$ and sets $E_i,E_{i,j}$ in \cref{defMuMeasure i}\cref{def set E_i}\cref{def set E_ij})
    \begin{align*}
        J(j) & \delequal{} \arg\min_{ \widetilde{j}: \mu_i(E_{j,\widetilde{j}}) > 0  }|i - \widetilde{j}|\ \ \forall j \neq i \\
        p^* & \delequal{} \min_{j:\ j \neq i,\ m_j \in G}\frac{ \mu_j(E_{j,J(j)})  }{ \mu_j(E_j) }.
    \end{align*}
    \item From the definition of $J(j)$ and the fact that there are only finitely many attraction fields we can see that $p^* > 0$. Moreover, $G$ being a communication class implies that $$|J(j) - i| < |j - i|\ \ \ \forall j \neq i,\ m_j \in G.$$ 
    Indeed, if $i < j$, then since $G$ is a communication class and there are some $m_i \in G$ with $i < j$, we will at least have $\mu_j(E_{j,j-1}) > 0$, so $|J(j) - i| \leq |i - j| - 1$; the case that $i > j$ can be approached analogously.
    
    \item Now from the definition of $J(j)$ and Proposition \ref{prop first exit and return time i}, together with the previous bullet point, we know that for all $\eta$ sufficiently small,
    \begin{align*}
        \inf_{x \in [-L,L]}\mathbb{P}_x( |I_{k+1} - i| \leq |I_k - i| - 1,\ m_{I_{k+1}} \in G \ |\ K_i(\eta,\Delta) > k,\ m_{I_{k}} \in G) \geq p^*/2
    \end{align*}
    uniformly for all $k \geq 0$. 
    
    \item Meanwhile, since $p^* > 0$, we are able to fix some $\delta > 0$ small enough such that
    \begin{align*}
      \frac{ n_\text{min}\delta }{ ({p^*}/{2})^{n_\text{min}} } < \epsilon. 
    \end{align*}
    
    \item On the other hand, for any $\widetilde{j}$ with $m_{\widetilde{j}} \notin G$, by definition of the typical transition graph we must have $\mu_j(E_{j,\widetilde{j}}) = 0$ for any $j$ with $m_j \in G$. Then due to Proposition \ref{prop first exit and return time i} again, one can see that for all $\eta >0$ that is sufficiently small,
    \begin{align*}
        \sup_{x\in [-L,L] }\mathbb{P}_x(  m_{I_{k+1}} \notin G \ |\ K_i(\eta,\Delta) > k,\ m_{I_{k}} \in G) < \delta
    \end{align*}
    uniformly for all $k \geq 0$.
    \item Repeat this argument for $n_\text{min}$ times, and we can see that for all $\eta$ sufficiently small
    \begin{align*}
        \inf_{x \in [-L,L]}\mathbb{P}_x( K_i(\eta,\Delta) \leq k + n_\text{min} \ |\ K_i(\eta,\Delta) > k,\ m_{I_k} \in G) & \geq \Big(\frac{p^*}{2}\Big)^{ n_\text{min} }
        \\
        \sup_{x \in [-L,L]}\mathbb{P}_x(\exists l \in [n_\text{min} ]\ \text{s.t.}\ m_{I_{k + l}} \notin G \ |\ K_i(\eta,\Delta) > k,\ m_{I_k} \in G  ) &\leq  n_\text{min}\delta
    \end{align*}
    uniformly for all $k \geq 1$.
    \item Lastly, to apply the bounds established above, we will make use of the following expression of several probabilities. For any $j$ with $m_j \in G$ and $x \in B(m_j,2\Delta)$ and any $u = 1,2,\cdots$,
    \begin{align*}
        & \mathbb{P}_x( \exists l \in [un_\text{min}]\ \text{s.t. }m_{I_l} \notin G,\ K_i(\eta,\Delta) > u\cdot n_\text{min}  ) 
        \\
        = & \sum_{v = 0}^{u - 1}\mathbb{P}_x\Big( \exists k \in [n_\text{min}]\ \text{s.t. }m_{I_{k+vn_\text{min}}} \notin G\ \Big|\ K_i(\eta,\Delta) > v\cdot n_\text{min},\ m_{I_l} \in G\ \forall l \leq vn_\text{min}  \Big)
        \\
        & \ \ \ \cdot \prod_{w = 0}^{v-1}\mathbb{P}_x\Big( K_i(\eta,\Delta) > (w+1)n_\text{min},\ m_{I_{k + wn_\text{min}}} \in G\ \forall k \in [n_\text{min}]\ \Big| \\
        &\ \ \ \ \ \ \ \ \ \ \ \ \ \ \ \ \ \ \ \ \ \ \ \ \ \ \ \ \ \ \ \ \ \ \ \ \ \ \ \ \ \ \ \ \ \ \ \ \ \ \ \ K_i(\eta,\Delta) > w\cdot n_\text{min},\ m_{I_l} \in G\ \forall l \leq wn_\text{min}  \Big)
        \\
        \\
        & \mathbb{P}_x( m_{I_l} \in G\ \forall l \in [un_\text{min}],\ K_i(\eta,\Delta) >  u\cdot n_\text{min}  )
        \\
        = & \prod_{v = 0}^{u - 1}\mathbb{P}\Big( K_i(\eta,\Delta) > (v+1)n_\text{min},\ m_{I_{k + vn_\text{min}}}\in G\ \forall k \in [n_\text{min}]\ \Big| 
        \\
        &\ \ \ \ \ \ \ \ \ \ \ \ \ \ \ \ \ \ \ \ \ \ \ \ \ \ \ \ \ \ \ \ \ \ \ \ \ \ \ \ \ \ \ \ \ \ \ \ \ \ \ \ K_i(\eta,\Delta) > vn_\text{min},\ m_{I_{k}}\in G\ \forall k \in [vn_\text{min}]\Big)
    \end{align*}
\end{itemize}
In summary, now we can see that (for sufficiently small $\eta$)
\begin{align*}
    & \sup_{j:\ m_j \in G;\ x \in B(m_j,2\Delta)}\mathbb{P}_x\Big( \exists k \in [K_i(\eta,\Delta)]\ s.t.\ m_{I_k(\eta,\Delta)} \notin G \Big)
    \\
    \leq & \sum_{u = 0}^\infty \sup_{j:\ m_j \in G;\ x \in B(m_j,2\Delta)}\mathbb{P}_x\Big(\exists v \in [n_\text{min}] \text{ such that }m_{I_{v + un_\text{min}}} \notin G, 
    \\
    & \ \ \ \ \ \ \ \ \ \ \ \ \ \ \ \ \ \ \ \ \ \ \ \ \ \ \ \ \ \ \ \ \ \ \ \ \ \ \ \ \ \ \ \ \ \ \ \ \ \ \ \ \ K_i(\eta,\Delta) > un_\text{min}, m_{I_{k}} \in G\ \forall k \in [un_\text{min}]\Big)
    \\
    \leq & \sum_{u = 0}^\infty \sup_{j:\ m_j \in G;\ x \in B(m_j,2\Delta)}\mathbb{P}_x\Big(\exists v \in [n_\text{min}] \text{ such that }m_{I_{v + un_\text{min}}} \notin G\ \Big|
    \\
    & \ \ \ \ \ \ \ \ \ \ \ \ \ \ \ \ \ \ \ \ \ \ \ \ \ \ \ \ \ \ \ \ \ \ \ \ \ \ \ \ \ \ \ \ \ \ \ \ \ \ \ \ \ K_i(\eta,\Delta) > un_\text{min}, m_{I_{k}} \in G\ \forall k \in [un_\text{min}]\Big)
    \\
    \cdot\prod_{v = 0}^{u - 1} & \sup_{j:\ m_j \in G;\ x \in B(m_j,2\Delta)}\mathbb{P}_x\Big( K_i(\eta,\Delta) > (v+1)n_\text{min}, m_{I_{k + vn_\text{min}}} \in G\ \forall k \in [n_\text{min}] \Big|
    \\
    & \ \ \ \ \ \ \ \ \ \ \ \ \ \ \ \ \ \ \ \ \ \ \ \ \ \ \ \ \ \ \ \ \ \ \ \ \ \ \ \ \ \ \ \ \ \ \ \ \ \ \ \ K_i(\eta,\Delta) > vn_\text{min}, m_{I_{k}} \in G\ \forall k \in [vn_\text{min}]\Big)
    \\
    \leq & \sum_{u \geq 0}n_\text{min}\delta\Big(1 - (\frac{p^*}{2})^{n_\text{min}}\Big)^{u-1}=  \frac{ n_\text{min}\delta }{ ({p^*}/{2})^{n_\text{min}} } \leq \epsilon
\end{align*}
and
\begin{align*}
   &  \sup_{j: m_j \in G,\ x \in B(m_j,2\Delta)}\mathbb{P}_x( K_i(\eta,\Delta) > u\cdot n_\text{min}  )
   \\
   \leq & \sup_{j: m_j \in G,\ x \in B(m_j,2\Delta)}\mathbb{P}_x( \exists l \in [un_\text{min}]\ \text{s.t. }m_{I_l} \notin G,\ K_i(\eta,\Delta) > u\cdot n_\text{min}  ) 
   \\
   + & \sup_{j: m_j \in G,\ x \in B(m_j,2\Delta)}\mathbb{P}_x( m_{I_l} \in G\ \forall l \in [un_\text{min}],\ K_i(\eta,\Delta) > u\cdot n_\text{min}  )
   \\
   \leq & \sum_{v = 1}^u n_\text{min}\delta\Big(1  - (\frac{p^*}{2})^{n_\text{min}}\Big)^{v - 1} + \Big(1 - (\frac{p^*}{2})^{n_\text{min}}\Big)^{u}
   \\
   \leq & \frac{ n_\text{min}\delta }{ ({p^*}/{2})^{n_\text{min}} }+ \Big(1 -  (\frac{p^*}{2})^{n_\text{min}}\Big)^{u}
   \\
   \leq & \epsilon + \Big(1 - (\frac{p^*}{2})^{n_\text{min}}\Big)^{u}
\end{align*}
uniformly for all $u = 1,2,\cdots$. To conclude the proof, it suffices to set $p = (\frac{p^*}{2})^{n_\text{min}}$.
\end{proof}

The proof above can be easily adapted to the case when the communication class $G$ is transient. Define 
\begin{align*}
    K^G_i(\eta,\Delta) & \delequal{} \min\{k \geq 0:\ I_k(\eta,\Delta) = i\ \text{or }m_{I_k(\eta,\Delta)} \notin G\}.
\end{align*}

\begin{lemma} \label{lemma geom bound on transition count transient case} 
Assume that $G$ is a transient communication class on the graph $\mathcal{G}$. Then there exists some constant $p > 0$ such that for any $i$ with $m_i \in G$ and any $\Delta \in (0,\bar{\epsilon}/3)$,
\begin{align}
    \sup_{j:\ m_j \in G;\ x \in B(m_j,2\Delta)}\mathbb{P}_x( K^G_i(\eta,\Delta) > u\cdot n_\text{min} ) & \leq \mathbb{P}(\text{Geom}(p) \geq u) \ \ \forall u = 1,2,\cdots \label{goal geom bound on transition count 1 transient case}
\end{align}
hold for all $\eta > 0$ sufficiently small.
\end{lemma}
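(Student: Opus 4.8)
The plan is to run the argument of Lemma~\ref{lemma geom bound on transition count} essentially verbatim, noting that the transient setting is in fact \emph{easier}: since $K^G_i(\eta,\Delta)$ is the first transition index at which the labels $(I_k)$ either reach $i$ or leave $G$, an excursion out of $G$ now \emph{stops} the counting process rather than being an atypical event that must be controlled separately, so the ``$\delta$''-bookkeeping of Lemma~\ref{lemma geom bound on transition count} disappears and no $\epsilon$ slack is needed. Fix $i$ with $m_i\in G$ and, exactly as in Lemma~\ref{lemma geom bound on transition count}, for $j\neq i$ set $J(j)\delequal{}\arg\min_{\widetilde{j}:\ \mu_j(E_{j,\widetilde{j}})>0}|i-\widetilde{j}|$ and
\[
p^* \delequal{} \min_{j:\ j\neq i,\ m_j\in G}\frac{\mu_j(E_{j,J(j)})}{\mu_j(E_j)}>0,
\]
the positivity following from the finiteness of the number of attraction fields and from $\mu_j(E_j)=q_j>0$. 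The structural input I would reuse is the strict monotonicity $|J(j)-i|<|j-i|$ for every $j\neq i$ with $m_j\in G$: because $m_i,m_j$ lie in the same communication class and, in $\mathbb{R}^1$, the indices reachable in one typical transition from $m_j$ form a contiguous range, the index $j-1$ (resp.\ $j+1$) is typically reachable whenever $i<j$ (resp.\ $i>j$), so the ``greedy'' target $m_{J(j)}$ is strictly closer to $i$; here $m_{J(j)}$ need \emph{not} belong to $G$, but that only helps us.

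Next I would apply Proposition~\ref{prop first exit and return time i} with $\epsilon=\Delta$ (its hypotheses hold since $\Delta<\bar{\epsilon}/3$): letting the constant $C$ there tend to $0$ in the lower bound for the probability that $X^\eta_n$, started in $B(m_j,2\Delta)$, next enters a $2\Delta$-ball around $m_{J(j)}$ yields, for all sufficiently small $\eta>0$,
\[
\inf_{j:\ m_j\in G}\ \inf_{x\in B(m_j,2\Delta)}\mathbb{P}_x\big(I_1(\eta,\Delta)=J(j)\big)\ \geq\ p^*/2 ,
\]
and, by the strong Markov property at the transition times $T_k(\eta,\Delta)$, this one-step bound applies repeatedly along $(I_k)$. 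Conditioning on the $\mathcal{F}_{T_{kn_\text{min}}}$-measurable event $\{K^G_i(\eta,\Delta)>k\,n_\text{min}\}$, on which $m_{I_{kn_\text{min}}}\in G$ and $I_{kn_\text{min}}\neq i$, the event that each of the next (at most) $n_\text{min}$ transitions is greedy has conditional probability at least $(p^*/2)^{n_\text{min}}=:p$ and forces $|I_\ell-i|$ to decrease strictly at every step until $(I_k)$ hits $i$ or leaves $G$; since $|I_{kn_\text{min}}-i|\leq n_\text{min}-1$, on that event $K^G_i(\eta,\Delta)\leq(k+1)n_\text{min}$. Hence
\[
\sup_{j:\ m_j\in G}\ \sup_{x\in B(m_j,2\Delta)}\mathbb{P}_x\big(K^G_i(\eta,\Delta)>(k+1)n_\text{min}\mid K^G_i(\eta,\Delta)>k\,n_\text{min},\ \mathcal{F}_{T_{kn_\text{min}}}\big)\ \leq\ 1-p
\]
uniformly in $k$, and iterating over $k=0,1,\dots,u-1$ via the strong Markov property at $T_{kn_\text{min}}(\eta,\Delta)$ gives $\sup_x\mathbb{P}_x(K^G_i(\eta,\Delta)>u\,n_\text{min})\leq(1-p)^u=\mathbb{P}(\text{Geom}(p)\geq u)$; replacing $p$ by the minimum over the finitely many $i$ with $m_i\in G$ produces a single constant.

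The main obstacle, as in the absorbing case, is \emph{uniformity}: the one-step greedy lower bound $p^*/2$ must hold simultaneously for every current minimum of $G$, every starting point in its $2\Delta$-ball, and every $\eta$ below one fixed threshold; this is precisely what the $\inf_x$/$\liminf_\eta$ form of Proposition~\ref{prop first exit and return time i} delivers once we use that there are only finitely many attraction fields. The remaining care is the routine bookkeeping, identical to that in the proof of Lemma~\ref{lemma geom bound on transition count}, needed to check that conditioning on $\{K^G_i>k\,n_\text{min}\}$ together with ``the next $n_\text{min}$ transitions are greedy'' is compatible with the strong Markov property at the transition times $T_k(\eta,\Delta)$.
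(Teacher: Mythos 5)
Your proof follows the paper's argument essentially verbatim: you fix $i$, define the greedy target $J(j)$ and the constant $p^*>0$ from the finiteness of the attraction fields, use the $\mathbb{R}^1$ structure of the typical transition graph to get $|J(j)-i|<|j-i|$, invoke Proposition~\ref{prop first exit and return time i} for the uniform one-step lower bound $p^*/2$, iterate $n_\text{min}$ times via the strong Markov property, and set $p=(p^*/2)^{n_\text{min}}$. Your observations that (i) the transient case is easier because leaving $G$ stops the count $K^G_i$ rather than being an atypical event to control, and (ii) the constant should be minimized over the finitely many $i$, are both consistent with the paper's (slightly terser) treatment.
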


\begin{proof} The structure of this proof is analogous to that of Lemma \ref{lemma geom bound on transition count}. Again, the claim is trivial if, for the initial condition, we have $x \in B(m_i,2\Delta)$. Next, let us observe the following facts.
\begin{itemize}
    \item Define (recall the definitions of measure $\mu_i$ and sets $E_i,E_{i,j}$ in \cref{defMuMeasure i}\cref{def set E_i}\cref{def set E_ij})
    \begin{align*}
        J(j) & \delequal{} \arg\min_{ \widetilde{j}: \mu_i(E_{j,\widetilde{j}}) > 0  }|i - \widetilde{j}|\ \ \forall j \neq i \\
        p^* & \delequal{} \min_{j:\ j \neq i,\ m_j \in G}\frac{ \mu_j(E_{j,J(j)})  }{ \mu_j(E_j) }.
    \end{align*}
    \item From the definition of $J(j)$ and the fact that there are only finitely many attraction fields we can see that $p^* > 0$. Moreover, $G$ being a communication class implies that $$|J(j) - i| < |j - i|\ \ \ \forall j \neq i,\ m_j \in G.$$ 
    Indeed, if $i < j$, then since $G$ is a communication class and there are some $m_i \in G$ with $i < j$, we will at least have $\mu_j(E_{j,j-1}) > 0$, so $|J(j) - i| \leq |i - j| - 1$; the case that $i > j$ can be approached analogously.
    
    \item Now from the definition of $J(j)$ and Proposition \ref{prop first exit and return time i}, together with the previous bullet point, we know that for all $\eta$ sufficiently small,
    \begin{align*}
        \inf_{x \in [-L,L]}\mathbb{P}_x( |I_{k+1} - i| \leq |I_k - i| - 1,\ m_{I_{k+1}} \in G \ |\ K^G_i(\eta,\Delta) > k \geq p^*/2
    \end{align*}
    uniformly for all $k \geq 0$. 
    
    \item Repeat this argument for $n_\text{min}$ times, and we can see that for all $\eta$ sufficiently small
    \begin{align*}
        \inf_{x \in [-L,L]}\mathbb{P}_x( K^G_i(\eta,\Delta) \leq k + n_\text{min} \ |\ K^G_i(\eta,\Delta) > k) & \geq \Big(\frac{p^*}{2}\Big)^{ n_\text{min} }
    \end{align*}
    uniformly for all $k \geq 1$.
    \item Lastly, for any $j \neq i$ with $m_j \in G$ and $x \in B(m_j,2\Delta)$ and any $u = 1,2,\cdots$,
    \begin{align*}
        & \mathbb{P}_x( K^G_i(\eta,\Delta) > u\cdot n_\text{min}  ) 
        \\
        = & \prod_{v = 0}^{u-1}\mathbb{P}_x\Big( K^G_i(\eta,\Delta) > (v+1)n_\text{min}\ \Big|\ K^G_i(\eta,\Delta) > v\cdot n_\text{min}\Big)
        \\
        = & \prod_{v = 0}^{u-1}\bigg( 1 - \mathbb{P}_x\Big( K^G_i(\eta,\Delta) \leq (v+1)n_\text{min}\ \Big|\ K^G_i(\eta,\Delta) > v\cdot n_\text{min}\Big) \bigg)
    \end{align*}
\end{itemize}
In summary, now we can see that (for suffciently small $\eta$)
\begin{align*}
   &  \sup_{j: m_j \in G,\ x \in B(m_j,2\Delta)}\mathbb{P}_x( K^G_i(\eta,\Delta) \geq u\cdot n_\text{min}  )
   \leq  \Big(1 -  (\frac{p^*}{2})^{n_\text{min}}\Big)^{u}
\end{align*}
uniformly for all $u = 1,2,\cdots$. To conclude the proof, it suffices to set $p = (\frac{p^*}{2})^{n_\text{min}}$.
\end{proof}

We are now ready to prove Lemma \ref{lemma eliminiate sharp minima from trajectory}, which, as demonstrated earlier, is the key tool in proof of Theorem \ref{thm main paper eliminiate sharp minima from trajectory}.

\begin{proof}[Proof of Lemma \ref{lemma eliminiate sharp minima from trajectory}]
The claim is trivial if $l^\text{large} = 1$, so we focus on the case where $l^\text{large} \geq 2$. Fix some
\begin{align*}
    \widetilde{\gamma} \in (0, (1-\frac{1}{\alpha})\wedge(\frac{1}{2}) ),\ \beta \in \big(1,(2 - 2\widetilde{\gamma})\wedge ( \alpha - \alpha\widetilde{\gamma} ) \big),\ \gamma \in (0,\frac{\widetilde{\gamma}}{16 M c_2}\wedge\frac{\widetilde{\gamma}}{4} ).
\end{align*}
Let $q^* = \max_j \mu_j(E_j(0))$. We show that for any $t \in (0,\frac{\delta}{4q^*})$ the claim is true.

Now we only consider $\Delta \in (0,\bar{\epsilon}/3)$ and $\eta$ small enough so that $\eta M \leq \eta^\gamma$ and $\eta^\gamma < \Delta$. Consider the following stopping times
\begin{align*}
    T^\gamma_\text{escape} &\delequal{}\min\{ n \geq 0: X^\eta_n \notin \cup_j B(s_j,2\eta^\gamma) \}; \\
    T^\gamma_\text{return}&\delequal{}\min\{ n \geq 0: X^\eta_n \in \cup_j B(m_j,2\eta^\gamma) \}.
\end{align*}

First, from Lemma \ref{lemma exit smaller neighborhood of s}, we know that 
\begin{align*}
    \sup_{x \in [-L,L]}\mathbb{P}_x( T^\gamma_\text{escape} > 1/H(1/\eta) ) < \delta/2
\end{align*}
for all $\eta$ sufficiently small. Besides, by combining Lemma \ref{lemma return to local minimum quickly rescaled version} with Markov property (applied at $T^\gamma_\text{escape}$), we have 
\begin{align*}
    \sup_{x \in [-L,L]}\mathbb{P}_x\Big(\ T^\gamma_\text{return} - T^\gamma_\text{escape} > 2c_2\gamma \log(1/\eta)/\eta\ \Big|\ T^\gamma_\text{escape} \leq \frac{1}{H(1/\eta)} \Big) < \delta /2
\end{align*}
for all $\eta$ sufficiently small. Therefore, for all $\eta$ sufficiently small, 
\begin{align}
    \sup_{x \in [-L,L]}\mathbb{P}\Big( T^\gamma_\text{return} > \frac{1}{ H(1/\eta) } + 2c_2\gamma\frac{\log(1/\eta)}{\eta} \Big) < \delta. \label{proof lemma trajectory ratio ineq 0}
\end{align}

Let $J$ be the unique index such that $X^\eta_{ T^\gamma_\text{return} } \in \Omega_J$. Our next goal is to show that, almost always, the SGD iterates will visit the local minimum at some \textit{large} attraction fields. Therefore, without loss of generality, we can assume that $m_J \notin M^\text{large}$, and define
\begin{align*}
    T^\gamma_\text{large} \delequal{} \min\{ n \geq T^\gamma_\text{return}:\ X^\eta_n \in \bigcup_{i: m_i \in M^\text{large}}B(m_i,2\Delta) \}
\end{align*}
and introduce the following definitions:
\begin{align*}
    \tau_0 & \delequal{} T^\gamma_\text{return},\ J_0 \delequal{} J \\
    \tau_k & \delequal{} \min\{ n > \tau_{k-1}:\ X^\eta_n \in \bigcup_{ j \neq J_{k-1} }B(m_j,2\Delta)   \} \\
    J_k & = j \Leftrightarrow X^\eta_{\tau_k } \in \Omega_j\ \ \forall k \geq 1 \\
    K & \delequal{} \min\{ k \geq 0:\ m_{J_k} \in M^\text{large} \}.
\end{align*}
In other words, the sequence of stopping times $(\tau_k)_{k \geq 1}$ is the time that, starting from $T^\gamma_\text{return}$, the SGD iterates visited a local minimum that is different from the one visited at $\tau_{k-1}$, and $(J_k)_{k \geq 0}$ records the label of the visited local minima. The random variable $K$ is the number of transitions required to visit a minimizer in a  \textit{large} attraction field. From Lemma \ref{lemma geom bound on transition count}, we know the existence of some $p^* > 0$ such that (for all $\eta$ sufficiently small)
\begin{align*}
    \sup_{x \in [-L,L]}\mathbb{P}_x( K \geq u\cdot n_\text{min}  ) \leq \mathbb{P}\Big( \text{Geom}( p^* )  \geq u\Big) + \frac{\delta}{2}\ \ \forall u = 1,2,3,\cdots.
\end{align*}
where $\text{Geom}(a)$ is a Geometric random variable with success rate $a \in (0,1)$. Therefore, one can find integer $N(\delta)$ such that (for all sufficiently small $\eta$)
\begin{align}
    \sup_{x \in [-L,L]}\mathbb{P}( K \geq N(\delta) ) \leq \delta. \label{proof lemma trajectory ratio ineq 1}
\end{align}
Next, given results in Proposition \ref{prop first exit and return time i} and the fact that there are only finitely many attraction fields, one can find a real number $u(\delta)$ such that (for all sufficiently small $\eta$)
\begin{align}
    \sup_{x \in [-L,L]}\mathbb{P}_x( \tau_k - \tau_{k-1} \leq \frac{ u(\delta) }{ \lambda_{J_{k-1}}(\eta) } ) \leq \delta/N(\delta)\label{proof lemma trajectory ratio ineq 2}
\end{align}
uniformly for all $k = 1,2,\cdots,N(\delta)$. From \cref{proof lemma trajectory ratio ineq 0}, \cref{proof lemma trajectory ratio ineq 1},\cref{proof lemma trajectory ratio ineq 2}, we now have
\begin{align}
    & \sup_{x \in [-L,L]}\mathbb{P}_x\Big(X^\eta_n \notin \bigcup_{j: m_j \in M^\text{large}}B(m_j,2\Delta)\ \forall n \leq N(\delta)u(\delta)\frac{H(1/\eta)/\eta}{ \lambda^\text{large}(\eta) } + \frac{1}{ H(1/\eta) } + 2c_2\gamma\frac{\log(1/\eta)}{\eta} \Big) \nonumber
    \\
    & \leq 3\delta \label{proof eliminate sharp minima ineq 1}
\end{align}
for any sufficiently small $\eta$. To conclude the proof  we just observe the following facts. First, due to $H \in \mathcal{RV}_{-\alpha}$ and $l^\text{large} \geq 2$, we have
\begin{align*}
    \lim_{\eta \downarrow 0}H(1/\eta)/\eta = 0,\ \ \lim_{\eta \downarrow 0}\frac{ \lambda^\text{large}(\eta) }{H(1/\eta)} = 0,\ \ \lim_{\eta \downarrow 0}\frac{\log(1/\eta)}{\eta}\lambda^\text{large}(\eta) = 0.
\end{align*}
Therefore, for sufficiently small $\eta$, we will have (note that $\epsilon,\delta$ are fixed constants in this proof, so $N(\delta),u(\delta)$ are also fixed)
\begin{align}
    \frac{ N(\delta)u(\delta)\frac{H(1/\eta)/\eta}{ \lambda^\text{large}(\eta) } + \frac{1}{ H(1/\eta) } + 2c_2\gamma\frac{\log(1/\eta)}{\eta}  }{  \floor{ t/\lambda^\text{large}(\eta) } } \leq \epsilon. \label{proof eliminate sharp minima ineq 2}
\end{align}
Second, recall that we fixed some $t \in (0, \frac{\delta}{4q^*})$ where $q^* = \max_j \mu_j(E_j)$. Also, choose some $C > 0$ small enough so that
\begin{align*}
    C < \delta/2,\ 2(1+C)^2 < 4.
\end{align*}
From Proposition \ref{proposition first exit time Gradient Clipping} and the fact that there are only finitely many attraction fields, there exists some $\bar{\eta}_0 > 0$ such that for any $\eta \in (0,\bar{\eta}_0)$ and any $\Delta > 0$ sufficiently small,
\begin{align*}
    & \sup_{i: m_i \in M^\text{large}}\sup_{ x \in [m_i-2\Delta,m_i+2\Delta] } \mathbb{P}_x\Big( \sigma_i(\eta) \leq \frac{t}{ \lambda^\text{large}(\eta) } \Big)
    \\
    \leq & \sup_{i: m_i \in M^\text{large}}\sup_{ x \in [m_i-2\Delta,m_i+2\Delta] } \mathbb{P}_x\Big( \mu_i(E_i)\lambda^\text{large}(\eta)\sigma_i(\eta) \leq q^* t \Big) \\
    \leq & C + 2(1 + C)^2q^* t \leq 2\delta.
\end{align*}
Combine this bound with Markov property (applied at $\tau_K$), and we obtain that
\begin{align*}
    \sup_{x \in [-L,L]}\mathbb{P}_x\Big( \exists n \in \big[ \floor{t/\lambda^\text{large}(\eta)} \big]\ s.t.\ X^\eta_{ n + \tau_K } \notin \bigcup_{ i: m_i \in M^\text{large} }\Omega_i \Big) \leq 2\delta
\end{align*}
for all $\eta$ sufficiently small. Together with \cref{proof eliminate sharp minima ineq 1}\cref{proof eliminate sharp minima ineq 2}, we have shown that
\begin{align*}
    \sup_{x \in [-L,L]} \mathbb{P}_x\Big( V^\text{small}(\eta,\epsilon,t) > \epsilon \Big) \leq 5\delta
\end{align*}
holds for all $\eta$ sufficiently small.
\end{proof}

\subsection{Proof of Lemma \ref{lemma key converge to markov chain and always at wide basin}, \ref{lemma key converge to markov chain and always at wide basin transient case}}

We shall return to the discussion about the dynamics of SGD iterates on a communication class $G$. Recall that $$G^\text{large} = \{m^\text{large}_{1},\cdots,m^\text{large}_{i_G}\},\ G^\text{small} = \{m^\text{small}_{1},\cdots,m^\text{small}_{i_G^\prime}\}.$$
If $X^\eta_n$ is initialized at some sharp minimum on $G$, then we are interested in the behavior of $X^\eta_n$ at the first visit to some large attraction fields on $G$. Define
\begin{align}
    T_G(\eta,\Delta) \delequal{}\min\{n \geq 0: X^\eta_n \in \bigcup_{ i:\ m_i \in G^\text{large} }B(m_i,2\Delta) \text{ or }X^\eta_n \notin \cup_{i:\ m_i \in G}\Omega_i \}. \label{def T G}
\end{align}
Not only is this definition of $T_G$ analogous to the one for $T^{DTMC}_G$ in \cref{def T G DTMC}, but, as illustrated in the next lemma, $T_G$ also behaves similarly as $T_G$ on a communication class $G$ in the following sense: the probabilities $p_{i,j}$ defined in \cref{def p i j} govern the dynamics regarding which large attraction field on $G$ is the first one to be visited. Besides, $T_G$ is usually rather small, meaning that the SGD iterates would efficiently arrive at a large attraction field on $G$ or simply escape from $G$.

\begin{lemma} \label{lemma sharp basin process absorbing}
Given any $\theta \in (0,(\alpha - 1)/2 )$, $\epsilon \in (0,1)$, $i,j \in [n_\text{min}]$ such that $m_i \in G^\text{small},\ m_j \notin G^\text{large}$, the following claims hold for all $\Delta>0$ that is sufficiently small:
\begin{align*}
    \limsup_{\eta\downarrow 0}\sup_{x \in B(m_i,2\Delta)} \mathbb{P}_x\Big( T_G(\eta,\Delta) \leq \frac{\eta^\theta}{\lambda_G(\eta)},\ X^\eta_{T_G} \in B(m_j,2\Delta) \Big) &  \leq p_{i,j} + 5\epsilon, \\
    \liminf_{\eta\downarrow 0}\inf_{x \in B(m_i,2\Delta)} \mathbb{P}_x\Big( T_G(\eta,\Delta) \leq \frac{\eta^\theta}{\lambda_G(\eta)},\ X^\eta_{T_G} \in B(m_j,2\Delta) \Big) &  \geq p_{i,j}- 5\epsilon,
    \\
     \limsup_{\eta\downarrow 0}\sup_{x \in B(m_i,2\Delta)} \mathbb{P}_x\Big( T_G(\eta,\Delta) > \frac{\eta^\theta}{\lambda_G(\eta)}\Big) & \leq 2\epsilon.
\end{align*}
\end{lemma}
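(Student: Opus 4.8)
\textbf{Proof proposal for Lemma \ref{lemma sharp basin process absorbing}.}

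The plan is to reduce the statement about $T_G$ to a statement about the discrete-time skeleton chain recorded by the transition times $(T_k(\eta,\Delta), I_k(\eta,\Delta))$ defined in \eqref{def MC GP T 0}--\eqref{def MC GP I k}, and then to compare that skeleton to the DTMC $Y^{DTMC}$ on $\mathcal{G}$ that drives the probabilities $p_{i,j}$ in \eqref{def p i j}. Starting from $x \in B(m_i,2\Delta)$ with $m_i \in G^{\text{small}}$, the SGD performs a sequence of excursions between neighborhoods of local minimizers. By Proposition \ref{prop first exit and return time i} (or Proposition \ref{proposition first exit time Gradient Clipping}), each excursion from a small attraction field $\Omega_j$ lands in $\Omega_{j'}$ with probability converging to $q_{j,j'}/q_j = \mu_j(E_{j,j'})/\mu_j(E_j) = \textbf{P}^{DTMC}(m_j,m_{j'})$, and the first transition time from $\Omega_j$ is of order $1/\lambda_j(\eta) = 1/\big(H(1/\eta)(H(1/\eta)/\eta)^{l_j^*-1}\big)$. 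Since $m_j \in G^{\text{small}}$ means $l_j^* < l^*_G$, we have $\lambda_j(\eta)/\lambda_G(\eta) \to \infty$, so each such transition is completed in time $o(1/\lambda_G(\eta))$; more precisely, for a fixed number of excursions, the accumulated time is $o(\eta^\theta/\lambda_G(\eta))$ with probability $\to 1$, using that $\eta^\theta \lambda_j(\eta)/\lambda_G(\eta) \to \infty$ for $\theta < (\alpha-1)/2 < (\alpha-1)(l^*_G - l^*_j)$. Meanwhile $T_G$ is reached once the skeleton first hits a large attraction field of $G$ or leaves $G$, i.e.\ at step $K$ where $K$ is exactly the quantity bounded in Lemma \ref{lemma geom bound on transition count}.

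The execution would be: (i) Fix $\epsilon$, and use Lemma \ref{lemma geom bound on transition count} to choose an integer $N = N(\epsilon)$ with $\sup_{x} \mathbb{P}_x(K > N\cdot n_{\text{min}}) \le \epsilon$ for all small $\eta$, where $K$ is the first index at which $I_k$ leaves $G^{\text{small}}$ (note that because $G$ is absorbing, leaving $G^{\text{small}}$ means entering $G^{\text{large}}$, so $X^\eta_{T_G} \in \cup_{m_j \in G^{\text{large}}}B(m_j,2\Delta)$ with probability $\ge 1-\epsilon$). (ii) On the event $\{K \le N n_{\text{min}}\}$, write the probability that the first large attraction field visited is $m_j$ as a sum over skeleton paths $(i=j_0, j_1, \ldots, j_K=j)$ with $j_1,\ldots,j_{K-1} \in G^{\text{small}}$ of products of one-step transition probabilities; apply Proposition \ref{prop first exit and return time i} path-by-path (there are finitely many such paths once $K$ is capped) to show this converges to the same sum with $\textbf{P}^{DTMC}$, which is exactly $\mathbb{P}(Y^{DTMC}_{T^{DTMC}_G}(m_i)=m_j) = p_{i,j}$ up to an error $\le \epsilon$ for large enough capping. (iii) For the timing, bound $T_G$ by $\sum_{k=1}^{K}(T_k - T_{k-1})$; on $\{K \le Nn_{\text{min}}\}$, use Proposition \ref{prop first exit and return time i} to get $\mathbb{P}_x(T_k - T_{k-1} > u(\epsilon)/\lambda_{I_{k-1}}(\eta)) \le \epsilon/(Nn_{\text{min}})$ for suitable $u(\epsilon)$, sum over $k$, and then observe $Nn_{\text{min}} u(\epsilon)/\lambda^{\text{small}}(\eta) = o(\eta^\theta/\lambda_G(\eta))$ because $\lambda_{I_{k-1}}(\eta) \ge \lambda^{\text{small}}(\eta) := H(1/\eta)(H(1/\eta)/\eta)^{l_{\max}^{\text{small}} - 1}$ with $l_{\max}^{\text{small}} \le l^*_G - 1$, giving the $2\epsilon$ bound in the third display. (iv) Finally also account (via Lemma \ref{lemma return to local minimum quickly rescaled version} or Lemma \ref{lemma exit smaller neighborhood of s} composed with Lemma \ref{lemma geom bound on transition count}) for the initial segment up to $T_0(\eta,\Delta)$, the time to first reach any $B(m_j,2\Delta)$; this is negligible on the $\lambda_G$-scale as well. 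Combining (ii) and (iii)--(iv), collect the $O(\epsilon)$ errors to land on $p_{i,j} \pm 5\epsilon$.

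The main obstacle I anticipate is handling the interplay between the number of excursions $K$ (random, but stochastically dominated by a geometric random variable uniformly in $\eta$ by Lemma \ref{lemma geom bound on transition count}) and the Markov-chain convergence: because $K$ is not a fixed constant, one cannot directly invoke finite-dimensional convergence of the skeleton. The resolution is the standard truncation argument — cap $K$ at $N(\epsilon)n_{\text{min}}$, pay $\epsilon$ for the tail, and on the capped event decompose into finitely many deterministic-length skeleton paths to which Proposition \ref{prop first exit and return time i} applies path-by-path via the strong Markov property at each $T_{k-1}$. A secondary technical point is making sure the error terms from Proposition \ref{prop first exit and return time i} (which hold ``for all $\epsilon$ sufficiently small'' and uniformly over starting points in $B(m_j,2\Delta)$) can be chosen uniformly across the finitely many attraction fields and the finitely many capped path lengths; since everything is finite this is just a matter of taking the worst constant, but it must be stated carefully. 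The same argument, with $K^G_i$ and Lemma \ref{lemma geom bound on transition count transient case} replacing $K$ and Lemma \ref{lemma geom bound on transition count}, handles the case where $G$ is transient; there the skeleton may leave $G$ entirely, which is precisely why $m_j$ in the statement ranges over $m_j \notin G^{\text{large}}$ rather than $m_j \in G^{\text{large}}$, and why $\sum_j p_{i,j}$ can be $< 1$.
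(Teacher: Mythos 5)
Your proposal matches the paper's proof in both structure and the specific tools used: cap the number of skeleton transitions via Lemma \ref{lemma geom bound on transition count} (resp.\ Lemma \ref{lemma geom bound on transition count transient case}), bound each inter-transition time via Proposition \ref{prop first exit and return time i} and exploit $\lambda_j(\eta)/\lambda_G(\eta)\to\infty$ for $m_j\in G^{\text{small}}$ to show the total elapsed time is $o(\eta^\theta/\lambda_G(\eta))$, and then prove path-by-path convergence of the finitely-many capped skeleton paths to $Y^{DTMC}$'s transition probabilities (the paper's set $\textbf{S}(\epsilon)$ is exactly your enumeration of capped paths). The one cosmetic discrepancy is that the statement's ``$m_j\notin G^{\text{large}}$'' appears to be a typo for ``$m_j\notin G^{\text{small}}$'' (which is the regime where $p_{i,j}$ is defined and where $T_G$ can actually land); your closing sentence rationalizes the typo rather than flagging it, but your argument is implicitly for the corrected condition and is otherwise correct.
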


\begin{proof}
For $G^\text{small} \neq \emptyset$ to hold (and the discussion to be meaningful), we must have $l^*_G \geq 2$. Throughout the proof, we assume this is the case. Besides, we require that $\Delta \in (0,\bar{\epsilon}/3)$ so we have
\begin{align*}
    B(m_i,3\Delta) \cap \Omega_i^c  = \emptyset \ \forall i \in [n_\text{min}]
\end{align*}
and the $3\Delta$-neighborhood of each local minimum will not intersect with each other. In this proof we will only consider $\Delta$ in this range.

From Lemma \ref{lemma geom bound on transition count} (if $G$ is absorbing) or Lemma \ref{lemma geom bound on transition count transient case} (if $G$ is transient), we know the existence of some integer $N(\epsilon)$ such that for (see the definition of $I_k$ in \cref{def MC GP T 0}-\cref{def MC GP I k})
\begin{align*}
    N_G(\eta,\Delta) \delequal{} \min\{k \geq 0:\ m_{I_k(\eta,\Delta)} \in G^\text{large}\ \text{or }m_{I_k(\eta,\Delta)} \notin G \},
\end{align*}
we have
\begin{align*}
    \sup_{ x \in B(m_i,2\Delta) }\mathbb{P}_x\Big( N_G(\eta,\Delta) > N(\epsilon)\Big) < \epsilon
\end{align*}
for all $\eta$ sufficiently small. Fix such $N(\epsilon)$. Next, from Proposition \ref{prop first exit and return time i}, we can find $u(\epsilon) \in (0,\infty)$ and $\bar{\Delta} \in (0,\bar{\epsilon}/3)$ such that for all $\Delta \in (0,\bar{\Delta})$, we have
\begin{align*}
    \sup_{x \in B(m_i,2\Delta)}\mathbb{P}_x\Big(T_{k}(\eta,\Delta) - T_{k-1}(\eta,\Delta) > u(\epsilon)/\Lambda\big(I_{k-1}(\eta,\Delta),\eta\big)  \Big) \leq \epsilon/N(\epsilon)\ \ \forall k \in [N(\epsilon)]
\end{align*}
for all $\eta$ sufficiently small. Fix such $u(\epsilon)$ and $\bar{\Delta}$. Now note that on the event
\begin{align*}
    A & \delequal{}\Big\{ N_G \leq N(\epsilon)\Big\} \cap \Big\{T_{k}(\eta,\Delta) - T_{k-1}(\eta,\Delta) \leq u(\epsilon)/\Lambda\big(I_{k-1}(\eta,\Delta),\eta\big)\ \forall k \in [N(\epsilon)]\Big\},
\end{align*}
due to the choice of $\theta \in (0,(\alpha - 1)/2)$ and $H \in \RV_{-\alpha}$, we have (when $\eta \in (0,1)$)
\begin{align*}
     & T_{k}(\eta,\Delta) - T_{k-1}(\eta,\Delta) \leq \frac{\eta^{2\theta}}{ \lambda_G(\eta) }\ \forall k < N_G(\eta,\Delta)
     \\
     \Rightarrow & T_G(\eta,\Delta) = T_{N_G(\eta,\Delta)}(\eta,\Delta) \leq N(\epsilon)u(\epsilon)\frac{\eta^{2\theta}}{ \lambda_G(\eta) }.
\end{align*}
For any $\eta$ sufficiently small, we will have $N(\epsilon)u(\epsilon)\frac{\eta^{2\theta}}{ \lambda_G(\eta) } < \frac{\eta^{\theta}}{ \lambda_G(\eta) }$. In summary, we have established that for all $\Delta \in (0,\bar{\Delta)}$,
\begin{align}
    \limsup_{\eta \downarrow 0}\sup_{x \in B(m_i,2\Delta)}\mathbb{P}_x\Big(T_G > \frac{\eta^\theta}{ \lambda_G(\eta) }\Big) \leq \limsup_{\eta \downarrow 0}\sup_{x \in B(m_i,2\Delta)}\mathbb{P}_x(A^c) < 2\epsilon. \label{proof sharp minima process ineq 1}
\end{align}
 
Next, let
\begin{align*}
    \textbf{S}(\epsilon) \delequal{}\Big\{ \big(m^\prime_1,\cdots,m^\prime_{N(\epsilon)}\big) \in \{ m_1,\cdots,m_{n_\text{min}} \}^{ N(\epsilon) }:\ \exists k \in [N(\epsilon)]\ \text{s.t. }m^\prime_k = m_j \Big\}.
\end{align*}
We can see that $\textbf{S}(\epsilon)$ contains all the possible transition path for $Y^{DTMC}$ where the state $m_j$ is visited within the first $N(\epsilon)$ steps. Obviously, $|\textbf{S}(\epsilon)| < \infty$.
Let $\epsilon_1 = \epsilon/|\textbf{S}(\epsilon)|$. If we are able to show the existence of some $\bar{\Delta}_1 > 0$ such that for all $\Delta \in (0,\bar{\Delta}_1)$, the following claim holds for any $(m^\prime_k)_{k = 1}^{N(\epsilon)} \in \textbf{S}(\epsilon)$:
\begin{align}
    \limsup_{\eta \downarrow 0}\sup_{ x \in B(m_i,2\Delta) }\Big| \mathbb{P}_x\Big( m_{I_k} = m^\prime_k\ \forall k \in [N(\epsilon)]  \Big) - \mathbb{P}\Big( Y^{DTMC}_k(m_i) = m^\prime_k\ \forall k \in [N(\epsilon)] \Big) \Big| < \epsilon_1, \label{proof sharp basin process goal 1}
\end{align}
then we must have (for all $\Delta \in (0,\bar{\Delta} \wedge \bar{\Delta}_1)$)
\begin{align*}
    & \limsup_{\eta \downarrow 0}\sup_{x \in B(m_i,2\Delta)}\Big| \mathbb{P}_x\Big( X^\eta_{T_G} \in B(m_j,2\Delta) \Big) - p_{i,j} \Big|
    \\
     = & \limsup_{\eta \downarrow 0}\sup_{x \in B(m_i,2\Delta)}\Big| \mathbb{P}_x\Big( X^\eta_{T_G} \in B(m_j,2\Delta),\ T_G \leq N(\epsilon) \Big) + \mathbb{P}_x\Big( X^\eta_{T_G} \in B(m_j,2\Delta),\ T_G > N(\epsilon) \Big)
     \\
     &\ \ \ \ \ \ - \mathbb{P}\Big( Y^{DTMC}_{T^{DTMC}_G}(m_i) = m_j,\ T^{DTMC}_G \leq N(\epsilon) \Big) - \mathbb{P}\Big( Y^{DTMC}_{T^{DTMC}_G}(m_i) = m_j,\ T^{DTMC}_G > N(\epsilon) \Big) \Big|
    \\
    \leq & \limsup_{\eta \downarrow 0}\sup_{x \in B(m_i,2\Delta)}\Big| \mathbb{P}_x\Big( X^\eta_{T_G} \in B(m_j,2\Delta),\ T_G \leq N(\epsilon) \Big)
    \\
    & - \mathbb{P}\Big( Y^{DTMC}_{T^{DTMC}_G}(m_i) = m_j,\ T^{DTMC}_G \leq N(\epsilon) \Big)  \Big|
    \\
    & + \limsup_{\eta \downarrow 0}\sup_{x \in B(m_i,2\Delta)}\mathbb{P}_x( T_G > N(\epsilon) ) + \mathbb{P}( {T^{DTMC}_G}(m_i) > N(\epsilon) )
    \\
    \leq & |\textbf{S}(\epsilon)|\epsilon_1 + \limsup_{\eta \downarrow 0}\sup_{x \in B(m_i,2\Delta)}\mathbb{P}_x( T_G > N(\epsilon) ) + \mathbb{P}( {T^{DTMC}_G}(m_i) > N(\epsilon) )
    \\
    \leq & 3\epsilon.
\end{align*}
To show that \cref{proof sharp basin process goal 1} is true, we fix some $(m^\prime_k)_{k = 1}^{N(\epsilon)} \in \textbf{S}(\epsilon)$ and let $(\textbf{k}^\prime(k))_{k = 1}^{N(\epsilon)}$ be the sequence with $m_{ \textbf{k}^\prime(k) } = m^\prime_k$ for each $k \in [N(\epsilon)]$. From the definition of $Y^{DTMC}$ we have (let $\textbf{k}^\prime(0) = i$)
\begin{align*}
    & \mathbb{P}\Big( Y^{DTMC}_k(m_i) = m^\prime_k\ \forall k \in [N(\epsilon)] \Big) = \prod_{k = 0}^{N(\epsilon)-1}\frac{ \mu_{\textbf{k}^\prime(k)}(E_{\textbf{k}^\prime(k),\textbf{k}^\prime(k+1)})   }{ \mu_{\textbf{k}^\prime(k)}(E_{\textbf{k}^\prime(k)}) }.
\end{align*}
On the other hand, using Proposition \ref{prop first exit and return time i}, we know that for any arbitrarily chosen $\epsilon^\prime \in (0,1)$, we have
\begin{align*}
   &  \limsup_{\eta \downarrow 0}\sup_{ x \in B(m_i,2\Delta) }\mathbb{P}_x\Big( m_{I_k(\eta,\Delta)} = m^\prime_k\ \forall k \in [N(\epsilon)]  \Big) \leq \prod_{k = 0}^{N(\epsilon)-1}\frac{ \mu_{\textbf{k}^\prime(k)}(E_{\textbf{k}^\prime(k),\textbf{k}^\prime(k+1)})   }{ \mu_{\textbf{k}^\prime(k)}(E_{\textbf{k}^\prime(k)}) }\cdot(1 + \epsilon^\prime),
   \\
   &  \liminf_{\eta \downarrow 0}\inf_{ x \in B(m_i,2\Delta) }\mathbb{P}_x\Big( m_{I_k(\eta,\Delta)} = m^\prime_k\ \forall k \in [N(\epsilon)]  \Big) \geq \prod_{k = 0}^{N(\epsilon)-1}\frac{ \mu_{\textbf{k}^\prime(k)}(E_{\textbf{k}^\prime(k),\textbf{k}^\prime(k+1)})   }{ \mu_{\textbf{k}^\prime(k)}(E_{\textbf{k}^\prime(k)}) }\cdot(1 - \epsilon^\prime),
\end{align*}
for all $\Delta > 0$ sufficiently small. The arbitrariness of $\epsilon^\prime > 0$, together with $|\textbf{S}(\epsilon)| < \infty$, allows us to see the existence of some $\bar{\Delta}_1 > 0$ such that with $\Delta \in (0,\bar{\Delta}_1)$, \cref{proof sharp basin process goal 1} holds for any $(m^\prime_k)_{k = 1}^{N(\epsilon)} \in \textbf{S}(\epsilon)$. To conclude the proof, observe that
\begin{align*}
     & \limsup_{\eta \downarrow 0}\sup_{x \in B(m_i,2\Delta)}\Big| \mathbb{P}_x\Big( X^\eta_{T_G} \in B(m_j,2\Delta) \Big) - \mathbb{P}_x\Big( T_G(\eta,\Delta) \leq \frac{\eta^\theta}{\lambda_G(\eta)},\ X^\eta_{T_G} \in B(m_j,2\Delta) \Big) \Big|
     \\
     \leq &  \limsup_{\eta \downarrow 0}\sup_{x \in B(m_i,2\Delta)} \mathbb{P}_x\Big(T_G > \frac{\eta^\theta}{ \lambda_G(\eta) }\Big)  < \epsilon
\end{align*}
due to \cref{proof sharp minima process ineq 1}.
\end{proof}

Recall that continuous-time process $X^{*,\eta}$ is the \textit{scaled} version of $X^\eta$ defined in \cref{def X hat star marker scaled 1}, and the mapping  $\textbf{T}^*(n,\eta) \delequal{} n\lambda_G(\eta)$ returns the timestamp $t$ for $X^{*,\eta}_t$ corresponding to the unscaled step $n$ on the time horizon of $X^\eta_n$. 
As an \textit{inverse} mapping of $\textbf{T}^*$, we define the mapping $\textbf{N}^*(t,\eta) = \floor{t/\lambda_G(\eta)}$ that maps the scaled timestamp $t$ back to the step number $n$ for the unscaled process $X^\eta$.

In the next lemma, we show that, provided that $X^{*,\eta}$ stays on a communication class $G$ before some time $t$, the scaled process $X^{*,\eta}_t$ is almost always in the largest attraction fields of a communication class $G$. 

\begin{lemma} \label{lemma dynamic scaled X stays mostly at local minima}
Let $G$ be a communication class on the graph $\mathcal{G}$. Given any $ \epsilon_1 > 0 $, $t > 0$ and any $x \in \Omega_i$ with $m_i \in G$, the following claim holds for all $\Delta > 0$ small enough:
\begin{align*}
    \limsup_{\eta \downarrow 0}\mathbb{P}_x\Big( \Big\{X^{*,\eta}_t \notin \bigcup_{j: m_j \in G^\text{large}} B(m_j,3\Delta)\Big\} \cap \Big\{ X^{*,\eta}_s \in \bigcup_{k:\ m_k \in G}\Omega_k\ \forall s \in [0,t] \Big\} \Big) \leq 2\epsilon_1.
\end{align*}
\end{lemma}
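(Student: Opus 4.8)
\textbf{Proof proposal for Lemma \ref{lemma dynamic scaled X stays mostly at local minima}.}

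The plan is to reduce the claim to the two facts already established: the efficient-return estimate of Lemma \ref{lemma return to local minimum quickly rescaled version} (together with the escape-from-saddle-neighborhood estimate of Lemma \ref{lemma exit smaller neighborhood of s}), and the strong-Markov transition control provided by Proposition \ref{prop first exit and return time i} and Lemma \ref{lemma sharp basin process absorbing}. The key observation is that on the time interval $[0,t]$ (in scaled time), the SGD iterates $X^\eta_n$ make only finitely many transitions between attraction fields \emph{of $G$} with high probability, and between consecutive transitions they spend all but a vanishing fraction of time (as $\eta\downarrow0$) inside a $2\Delta$-neighborhood of some local minimizer $m_k$ with $m_k\in G$. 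So at the fixed deterministic time $t$, the iterate is inside $\bigcup_{k:\,m_k\in G}B(m_k,2\Delta)$ with probability $\to1$; the only remaining work is to upgrade "some $m_k\in G$" to "some $m_k\in G^\text{large}$".

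Concretely I would proceed as follows. First, fix $\Delta\in(0,\bar\epsilon/3)$ so the neighborhoods $B(m_k,3\Delta)$ are disjoint and contained in their attraction fields. Using the stopping times $T_k(\eta,\Delta),I_k(\eta,\Delta)$ of \eqref{def MC GP T 0}--\eqref{def MC GP I k}, let $N$ count the number of such transitions occurring before scaled time $t$; on the event that $X^{*,\eta}_s\in\bigcup_{k:\,m_k\in G}\Omega_k$ for all $s\in[0,t]$, each transition is a transition within $G$, and Proposition \ref{prop first exit and return time i} gives that each transition takes time of order $1/\lambda_{I_{k-1}}(\eta)\gtrsim 1/\lambda_G(\eta)$ with high probability, while the geometric bounds of Lemma \ref{lemma geom bound on transition count} (or Lemma \ref{lemma geom bound on transition count transient case} if $G$ is transient) bound the number of transitions needed to reach $G^\text{large}$ — so there is $N(\epsilon_1)$ with $\P_x(N>N(\epsilon_1))<\epsilon_1$ for small $\eta$. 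Second, between $T_{k-1}$ and $T_k$ the iterate returns to $B(m_{I_{k-1}},2\Delta)$ quickly (after at most $O(\log(1/\eta)/\eta)$ steps, by Lemma \ref{lemma return to local minimum quickly rescaled version} applied at the escape time from the saddle neighborhoods, which itself is controlled by Lemma \ref{lemma exit smaller neighborhood of s}) and then stays in $B(m_{I_{k-1}},3\Delta)$ until the next large jump; hence the set of scaled times $s\in[0,t]$ for which $X^{*,\eta}_s\notin\bigcup_{k:\,m_k\in G}B(m_k,3\Delta)$ has total length $o(1)$. Third, and this is where the "large" upgrade happens: by the third bullet of Lemma \ref{lemma sharp basin process absorbing}, once the iterate sits in a \emph{small} attraction field's neighborhood it reaches $\bigcup_{m_j\in G^\text{large}}B(m_j,2\Delta)$ (or leaves $G$) within scaled time $\eta^\theta\to0$; so the total scaled time spent in small attraction fields of $G$ before $t$ is also $o(1)$, and in particular the probability that the \emph{specific} deterministic time $t$ falls into such a window is negligible. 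Combining, $\P_x(X^{*,\eta}_t\notin\bigcup_{m_j\in G^\text{large}}B(m_j,3\Delta),\ X^{*,\eta}_s\in\bigcup_{m_k\in G}\Omega_k\ \forall s\le t)$ is bounded, for small $\eta$, by the probability that $N>N(\epsilon_1)$ plus the probability that $t$ lies in one of the $O(N(\epsilon_1))$ short "bad" windows, which is $\le 2\epsilon_1$.

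The main obstacle I anticipate is the third step — controlling the fraction of time spent in \emph{small} attraction fields of $G$ at the deterministic evaluation time $t$, as opposed to at a random time. The estimates in Lemma \ref{lemma sharp basin process absorbing} are "sojourn-length" statements; to conclude something at the fixed time $t$ one must argue that the union of the (at most $O(N(\epsilon_1))$) intervals during which $X^{*,\eta}$ occupies a small-attraction-field neighborhood has Lebesgue measure $o(1)$ in scaled time, so that $t$ avoids all of them with probability $\to1$ — this requires carefully pairing each visit to a small attraction field with the $\eta^\theta/\lambda_G(\eta)$-upper bound on its duration via the strong Markov property, and handling the boundary case where the iterate leaves $G$ entirely (which the hypothesis $X^{*,\eta}_s\in\bigcup_{m_k\in G}\Omega_k$ excludes on the event of interest). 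A secondary technical point is that $t$ itself may fall in the middle of a long sojourn inside a \emph{large} attraction field, which is exactly the good event, so the decomposition must be organized around "which attraction field of $G$ contains $X^{*,\eta}_t$" rather than around transition counts alone.
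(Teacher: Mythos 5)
Your forward decomposition identifies the right obstacle but does not actually resolve it, and as written the third step contains a genuine gap. You argue that the total scaled-time Lebesgue measure of the "bad" windows (times spent outside $\bigcup_{j:\,m_j\in G^\text{large}}B(m_j,3\Delta)$) is $o(1)$, and conclude that "the probability that the specific deterministic time $t$ falls into such a window is negligible." That implication is false in general: a random subset $B\subseteq[0,t]$ can have arbitrarily small Lebesgue measure while deterministically containing $t$ (e.g.\ $B=[t-\varepsilon,t]$). The transition times $\tau_k$ are random and need not be independent of (or spread evenly relative to) the fixed evaluation point $t$, so a measure bound alone does not control $\mathbb{P}(X^{*,\eta}_t\in B)$. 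Even the averaged version $\frac1T\int_0^T\mathbb{P}(X^{*,\eta}_s\text{ bad})\,ds=o(1)$ only says the probability is small for \emph{typical} $s$, not for a \emph{given} $s=t$. Your own closing paragraph flags this worry, but the proposed fix ("carefully pairing each visit to a small attraction field with the $\eta^\theta/\lambda_G(\eta)$ upper bound on its duration via the strong Markov property") still only yields a measure-type statement.

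What is actually needed—and what the paper does—is a \emph{backward-from-$t$} argument that exploits the exponential exit-time structure of large attraction fields. One fixes a small $\Delta_t>0$ and starts watching the process at scaled time $t-\Delta_t$. By Lemmas~\ref{lemma exit smaller neighborhood of s} and \ref{lemma return to local minimum quickly rescaled version} the iterate returns to some $B(m_j,2\Delta)$ within scaled time $\Delta_t/2$; by Lemma~\ref{lemma sharp basin process absorbing} it then reaches some $B(m_{J_3},2\Delta)$ with $m_{J_3}\in G^\text{large}$ (or leaves $G$, which is excluded on the event of interest) within a further $\Delta_t/4$, so it lands at some $\textbf{T}^*_3\in[t-\Delta_t,t-\Delta_t/4]$. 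The key step, absent from your sketch, is that Proposition~\ref{prop first exit and return time i} gives the exit time from $\Omega_{J_3}$ an exponential-like tail with rate $O(1)$ in scaled time, so $\mathbb{P}(\textbf{T}^\sigma-\textbf{T}^*_3\le\Delta_t)\lesssim 1-\exp(-(1+\epsilon_1)q^*\Delta_t)$. This is exactly the mechanism that forces the fixed instant $t$ to land inside a \emph{long} large-basin sojourn with probability close to $1$—a renewal/memorylessness argument, not a Lebesgue-measure one. One further short step (the paper's $A_3$, using Lemmas~\ref{lemma return to local minimum quickly rescaled version}, \ref{lemma exit smaller neighborhood of s}, and \ref{lemma stuck at local minimum before large jump} on a window of unscaled length $O(1/H(1/\eta))$ just before $t$) is then needed to upgrade "$X^{*,\eta}_t\in\Omega_{J_3}$" to "$X^{*,\eta}_t\in B(m_{J_3},3\Delta)$"; your step two glosses over this by asserting the iterate "stays in $B(m_{I_{k-1}},3\Delta)$ until the next large jump," which is not true in general (a $\delta$-large noise can temporarily push the iterate outside $B(m_{I_{k-1}},3\Delta)$ without escaping $\Omega_{I_{k-1}}$). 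Finally, driving $\Delta_t\downarrow0$ closes the bound. Without the backward-from-$t$/exponential-sojourn step, the argument cannot be completed.
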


\begin{proof}

Let $\Delta \in (0,\bar{\epsilon}/3)$ for the constant $\bar{\epsilon}$ in \cref{assumption multiple jump epsilon 0 constant 1}\cref{assumption multiple jump epsilon 0 constant 2}, so we are certain that each $B(m_i,2\Delta)$ lies entirely in $\Omega_i$ and would not intersect with each other since
\begin{align*}
    B(m_i,3\Delta) \cap \Omega_i^c = \emptyset \ \ \forall i \in [n_\text{min}].
\end{align*}

Besides, with $\epsilon = \Delta / 3$,  we know the existence of some $\delta > 0$ such that claims in Lemma \ref{lemma stuck at local minimum before large jump} would hold of the chosen $\epsilon,\delta$. Fix such $\delta$ for the entirety of this proof. Lastly, fix some
\begin{align*}
    \widetilde{\gamma} \in (0, (1-\frac{1}{\alpha})\wedge(\frac{1}{2}) ),\ \beta \in \big(1,(2 - 2\widetilde{\gamma})\wedge ( \alpha - \alpha\widetilde{\gamma} ) \big),\ \gamma \in (0,\frac{\widetilde{\gamma}}{16 M c_2}\wedge\frac{\widetilde{\gamma}}{4} ).
\end{align*}

The blueprint of this proof is as follows. We will define a sequence of stopping times $(N_j)_{j = 1}^6$ such that the corresponding scaled timestamps $\textbf{T}^*_j = \textbf{T}^*(N_j,\eta)$ gradually approach $t$. By analyzing the behavior of $X^{*,\eta}$ on a time interval $[t-\Delta_t,t]$ that is very close to $t$ (in particular, on the aforementioned stopping times $\textbf{T}^*_j$), we are able to establish the properties of a series of events $A_1 \supseteq A_2 \supseteq A_3$. Moreover, we will show that $A_3 \subseteq \{ X^{*,\eta,\Delta}_t \in \bigcup_{i:\ m_i \in G^\text{large}} B(m_i,3\Delta) \}$, so the properties about events $A,A_2,A_3$ can be used to bound the probability of the target event.

Arbitrarily choose some $\Delta_t \in (0,t)$. To proceed, let $N_0 \delequal{} \textbf{N}^*(t - \Delta_t,\eta)$ be the stopping time corresponding to timestamp $t - \Delta_t$ for the scaled process. Using Lemma \ref{lemma exit smaller neighborhood of s}, we know that for stopping time $ N_1 \delequal{}\min\{n \geq N_0:\ X^\eta_n \notin \cup_{j}B(s_j,2\eta^\gamma) \}$, we have
\begin{align*}
    \liminf_{\eta \downarrow 0}\inf_{x \in [-L,L]}\mathbb{P}_x( N_1 - N_0 < \frac{\Delta_t/4}{H(1/\eta)} ) = 1.
\end{align*}
Next, let $N_2 \delequal{}\min\{ n \geq N_1:\ X^\eta_n \in \cup_j B(m_j,2\Delta) \}$.
From Lemma \ref{lemma return to local minimum quickly rescaled version} and $H \in \RV_{-\alpha}$ (so that $\log(1/\eta)/\eta = o(H(1/\eta))$, we have
\begin{align*}
    \liminf_{\eta \downarrow 0}\inf_{x \in [-L,L]}\mathbb{P}_x(N_2 - N_1 < \frac{\Delta_t/4}{H(1/\eta)}) = 1.
\end{align*}
Collecting results above, we have
\begin{align}
     \liminf_{\eta \downarrow 0}\inf_{x \in [-L,L]}\mathbb{P}_x(N_2 - N_0 < \frac{\Delta_t/2}{H(1/\eta)}) = 1. \label{proof most likely at large basin ineq 1}
\end{align}

Now note the following fact on the event $\{ N_2 - N_0 < \frac{\Delta_t/2}{H(1/\eta)} \}$. The definition of the mapping $\textbf{T}^*$ implies that, for any pair of positive integers $n_1 \leq n_2$, we have $\textbf{T}^*(n_2,\eta) - \textbf{T}^*(n_1,\eta) = (n_2 - n_1)\lambda_G(\eta) \leq (n_2 - n_1)\cdot H(1/\eta)$. Therefore, on $\{ N_2 - N_0 \leq \frac{\Delta_t/2}{H(1/\eta)} \}$ we have
\begin{align*}
    \textbf{T}^*(N_2,\eta) - \textbf{T}^*(N_0,\eta) < \Delta_t/2 \Rightarrow \textbf{T}^*(N_2,\eta) < t - \frac{\Delta_t}{2}.
\end{align*}
Besides, let $\textbf{T}^*_2 = \textbf{T}^*(N_2,\eta)$. Now we can see that for event
\begin{align*}
    A_0 \delequal{} \Big\{ X^{*,\eta}_s \in \bigcup_{k:\ m_k \in G}\Omega_k\ \forall s \in [0,t]  \Big\} \cap \Big\{ N_2 - N_0 < \frac{\Delta_t / 2}{H(1/\eta)} \Big\},
\end{align*}
we have
\begin{align*}
    A_0 \subseteq A_1 \delequal{} \Big\{ \textbf{T}^*_2 < t - \frac{\Delta_t}{2} \Big\} \cap \Big\{ X^{*,\eta}_s \in \bigcup_{k:\ m_k \in G}\Omega_k\ \forall s \in [0,\textbf{T}^*_2]  \Big\}.
\end{align*}
Meanwhile, from \cref{proof most likely at large basin ineq 1} we obtain that
\begin{align}
    \limsup_{\eta \downarrow 0}\sup_{x \in [-L,L]}\mathbb{P}_x\Big(A_1^c \cap \{ X^{*,\eta}_s \in \bigcup_{k:\ m_k \in G}\Omega_k\ \forall s\in[0,t] \} \Big) = 0. \label{proof most likely at large basin ineq 2}
\end{align}

Moving on, we consider the following stopping times
\begin{align*}
    N_3 & \delequal{} \min\{ n \geq N_2:\ X^\eta_n \in \bigcup_{j:\ m_j \in G^\text{large}}B(m_j,2\Delta)\text{ or }X^\eta_n \notin \bigcup_{j:\ m_j \in G}\Omega_j \},
    \\
    \textbf{T}^*_3 & \delequal{} \textbf{T}^*(N_3,\eta).
\end{align*}
Using Lemma \ref{lemma sharp basin process absorbing}, we have
\begin{align}
    \limsup_{\eta \downarrow 0}\mathbb{P}_x\Big(N_3 - N_2 > \frac{\Delta_t/4}{ \lambda_G(\eta) }  \ \Big|\ A_1  \Big) \leq \epsilon_1. \label{proof most likely at large basin ineq 3}
\end{align}
Meanwhile, on event $A_1 \cap \Big\{ N_3 - N_2 \leq \frac{\Delta_t/4}{ \lambda_G(\eta) }  \Big\} \cap \Big\{ X^{*,\eta}_s \in \bigcup_{k:\ m_k \in G}\Omega_k\ \forall s \in [0,t]  \Big\},$
we have $\textbf{T}^*_3 - \textbf{T}^*_2 \leq \Delta_t/4$, hence $\textbf{T}^*_3 \in [t - \Delta_t, t - \Delta_t/4]$. In summary,
\begin{align*}
   & A_1 \cap \Big\{ N_3 - N_2 \leq \frac{\Delta_t/4}{ \lambda_G(\eta) }  \Big\} \cap \Big\{ X^{*,\eta}_s \in \bigcup_{k:\ m_k \in G}\Omega_k\ \forall s \in [0,t]  \Big\}
   \\
   & \subseteq \Big\{ \textbf{T}^*_3 \in [t - \Delta_t, t - \Delta_t/4] \Big\} \cap \Big\{ X^{*,\eta}_s \in \bigcup_{k:\ m_k \in G}\Omega_k\ \forall s \in [0,\textbf{T}^*_3]  \Big\}
\end{align*}
Moreover, on event $\Big\{ X^{*,\eta}_s \in \bigcup_{k:\ m_k \in G}\Omega_k\ \forall s \in [0,t]  \Big\}$, if we let $J_3$ be label of the local minimum visited at $\textbf{T}^*_3$ such that $J_3 = j \iff X^{*,\eta}_{T^*_3} \in B(m_j,2\Delta)$, then we must have $m_{J_3} \in G^\text{large}$. Meanwhile, consider the following stopping times
\begin{align*}
    \textbf{T}^{\sigma} \delequal{} \min\{ s >\textbf{T}^*_3:\ X^{*,\eta}_s \notin \Omega_{J_3}  \}.
\end{align*}
From Proposition \ref{prop first exit and return time i}, we know that
\begin{align}
    & \limsup_{\eta \downarrow 0}\mathbb{P}_x\Big( \textbf{T}^{\sigma} - \textbf{T}^*_3 \leq \Delta_t \ \Big|\ \Big\{ \textbf{T}^*_3 \in [t - \Delta_t, t - \Delta_t/4] \Big\} \cap \Big\{ X^{*,\eta}_s \in \bigcup_{k:\ m_k \in G}\Omega_k\ \forall s \in [0,\textbf{T}^*_3]  \Big\}  \Big) \nonumber
    \\
    \leq & \epsilon_1 + 1 - \exp\big( -(1+\epsilon_1) q^*\Delta_t \big) \label{proof most likely at large basin ineq 4}
\end{align}
where $q^* = \max_j \mu_j(E_j)$. Now we define the event
\begin{align*}
    A_2 \delequal{} A_1 \cap \Big\{ N_3 - N_2 \leq \frac{\Delta_t/4}{ \lambda_G(\eta) }  \Big\} \cap \Big\{ \textbf{T}^\sigma - \textbf{T}^*_3 > \Delta_t \Big\} \cap \Big\{ X^{*,\eta}_s \in \bigcup_{k:\ m_k \in G}\Omega_k\ \forall s \in [0,\textbf{T}^*_3]  \Big\}.
\end{align*}
Using \cref{proof most likely at large basin ineq 2}-\cref{proof most likely at large basin ineq 4}, we get
\begin{align}
    \limsup_{\eta \downarrow 0}\sup_{x \in [-L,L]}\mathbb{P}_x\Big(A_2^c \cap \{ X^{*,\eta}_s \in \bigcup_{k:\ m_k \in G}\Omega_k\ \forall s\in[0,t] \} \Big) \leq 2\epsilon_1 + 1 - \exp\big( -(1+\epsilon_1) q^*\Delta_t \big). \label{proof most likely at large basin ineq 5}
\end{align}
Furthermore, on event $A_2$, due to $\textbf{T}^*_3 \in [t - \Delta_t, t - \Delta_t/4]$ as established above, we must have
\begin{align*}
    X^{*,\eta}_s \in \Omega_{J_3}\ \forall s \in [\textbf{T}^*_3,t].
\end{align*}
Now let us focus on a timestamp $\textbf{T}^*_4 = t - \frac{\Delta_t/8}{ H(1/\eta) }\lambda_G(\eta)$ and $N_4 = \textbf{N}^*(\textbf{T}^*_4,\eta)$ . Obviously, $\textbf{T}^*_4 > \textbf{T}^*_3$ on event $A_2$. Next, define
\begin{align*}
    N_5 & \delequal{}\min\{ n \geq N_4:\ X^\eta_n \in \bigcup_{j}B(m_j,2\Delta) \}
    \\
    \textbf{T}^*_5 & \delequal{} \textbf{T}^*(N_5,\eta).
\end{align*}
Using Lemma \ref{lemma return to local minimum quickly rescaled version} and \ref{lemma exit smaller neighborhood of s} again as we did above when obtaining \cref{proof most likely at large basin ineq 1}, we can show that
\begin{align}
    \limsup_{\eta \downarrow 0}\mathbb{P}_x\Big( N_5 - N_4 > \frac{\Delta_t/16}{H(1/\eta)} \Big) = 0. \label{proof most likely at large basin ineq 6}
\end{align}
On the other hand, on event $A_2 \cap \Big\{ N_5 - N_4 \leq \frac{\Delta_t/16}{H(1/\eta)} \Big\}$ we must have
\begin{itemize}
    \item $ \textbf{T}^*_5 - \textbf{T}^*_4 \leq \frac{\Delta_t/16}{H(1/\eta)}\lambda_G(\eta)$, so  $ \textbf{T}^*_5\in [ t - \frac{\Delta_t/8}{H(1/\eta)}\lambda_G(\eta),t - \frac{\Delta_t/16}{H(1/\eta)}\lambda_G(\eta)]$;
    
    \item $X^{*,\eta}_{\textbf{T}^*_5} \in \Omega_{J_3}$, due to $\textbf{T}^\sigma - \textbf{T}^*_3 > \Delta_t$.
\end{itemize}
This implies that for event
$$\widetilde{A} \delequal{} \Big\{\textbf{T}^*_5\in [ t - \frac{\Delta_t/8}{H(1/\eta)}\lambda_G(\eta),t - \frac{\Delta_t/16}{H(1/\eta)}\lambda_G(\eta)],\ X^{*,\eta}_{\textbf{T}^*_5} \in \Omega_{J_3}  \Big\} \cap \{ X^{*,\eta}_s \in \bigcup_{k:\ m_k \in G}\Omega_j\ \forall s \in [0,\textbf{T}^*_5] \},$$
we have $A_2 \cap \Big\{ N_5 - N_4 \leq \frac{\Delta_t/16}{H(1/\eta)}\Big\} \subseteq \widetilde{A}$. Lastly, observe that
\begin{itemize}
    \item From Lemma \ref{lemmaGeomFront}, we know that for $ N_6(\delta) \delequal{} \min\{n > N_5: \eta|Z_n| > \delta\}$
    we have 
    \begin{align*}
        \limsup_{\eta \downarrow 0}\mathbb{P}\big( N_6(\delta) - N_5 \leq \Delta_t/H(1/\eta) \big) \leq \Delta_t/\delta^\alpha;
    \end{align*}
    \item As stated at the beginning of the proof, our choice of $\delta$ allows us to apply Lemma \ref{lemma stuck at local minimum before large jump} and show that
    \begin{align*}
        \limsup_{\eta \downarrow 0}\sup_{x \in [-L,L]}\mathbb{P}_x\big( \exists n = N_5,\cdots, N_6 - 1\ \text{s.t. }X^\eta_n \notin B(m_{J_3},3\Delta)\ |\ \widetilde{A} \big) = 0;
    \end{align*}
    \item Combining the two bullet points above, we get
    \begin{align}
        \limsup_{\eta \downarrow 0}\mathbb{P}_x\Big( \exists s \in [\textbf{T}^*_5,t] \text{ such that }  X^{*,\eta}_s \notin B(m_{J_3},3\Delta)\ \Big|\ \widetilde{A} \Big) \leq \Delta_t/\delta^\alpha. \label{proof most likely at large basin ineq 7}
    \end{align}
    On the other hand,
    \begin{align*}
        \widetilde{A} \cap \{X^{*,\eta}_s \in B(m_{J_3},3\Delta)\ \forall s\in[\textbf{T}^*_5,t] \} \subseteq \{ X^{*,\eta}_t \in \bigcup_{k:\ m_k \in G^\text{large}}B(m_k,3\Delta) \}.
    \end{align*}
\end{itemize}

In summary, for event
\begin{align*}
    A_3 \delequal{} A_2 \cap \Big\{ N_5 - N_4 \leq \frac{\Delta_t/16}{H(1/\eta)}\Big\} \cap \Big\{X^{*,\eta}_s \in B(m_{J_3},3\Delta)\ \forall s\in[\textbf{T}^*_5,t] \Big\}, 
\end{align*}
we have $A_3 \subseteq \{ X^{*,\eta}_t \in \bigcup_{k:\ m_k \in G^\text{large}}B(m_k,3\Delta) \}$. Besides, due to \cref{proof most likely at large basin ineq 5}\cref{proof most likely at large basin ineq 6}\cref{proof most likely at large basin ineq 7}, we get
\begin{align*}
    & \limsup_{\eta \downarrow 0}\sup_{x \in [-L,L]}\mathbb{P}_x\Big(A_3^c \cap \{ X^{*,\eta}_s \in \bigcup_{k:\ m_k \in G}\Omega_k\ \forall s\in[0,t] \}\Big)  
    \\
    \leq & 2\epsilon_1 + 1 - \exp\big( -(1+\epsilon_1) q^*\Delta_t \big) + \frac{\Delta_t}{\delta^\alpha}.
\end{align*}
Remember that $\delta,\epsilon_1,q^*$ are fixed constants while $\Delta_t$ can be made arbitrarily small, so by driving $\Delta_t$ to 0 we can conclude the proof.
\end{proof}

Recall the definition of jump processes in Definition \ref{def jump process}. Central to the proof of Lemma \ref{lemma key converge to markov chain and always at wide basin}, the next result provides a set of sufficient conditions for the convergence of a sequence of such jump processes in the sense of finite dimensional distributions.

\begin{lemma} \label{lemma weak convergence of jump process}
For a sequence of processes $(Y^n)_{n \geq 1}$ that, for each $n \geq 1$, $Y^n$ is a $\Big( (U^n_j)_{j \geq 0},(V^n_j)_{j \geq 0} \Big)$ jump process, and a $\Big( (U_j)_{j \geq 0},(V_j)_{j \geq 0} \Big)$ jump process $Y$, if
\begin{itemize}
    \item $U_0 \equiv 0$; 
    \item $(U^n_0,V^n_0,U^n_1,V^n_1,U^n_2,V^n_2,\cdots)$ converges in distribution to $(0,V_0,U_1,V_1,U_2,V_2,\cdots)$ as $n\rightarrow \infty$;
    \item For any $x > 0$ and any $n \geq 1$,
    \begin{align*}
        \mathbb{P}(U_1 + \cdots + U_n = x) = 0;
    \end{align*}
    \item For any $x > 0$,
    \begin{align*}
        \lim_{n \rightarrow \infty}\mathbb{P}(U_1 + U_2 + \cdots U_n > x) = 1,
    \end{align*}
\end{itemize}
then the finite dimensional distribution of $Y^n$ converges to that of $Y$ in the following sense: for any $k \in \mathbb{N}$ and any $0 < t_1 < t_2 < \cdots < t_k < \infty$, the random element $(Y^n_{t_1},\cdots,Y^n_{t_k})$ converges in distribution to $(Y_{t_1},\cdots,Y_{t_k})$ as $n \rightarrow \infty$.
\end{lemma}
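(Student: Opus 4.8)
\textbf{Proof proposal for Lemma \ref{lemma weak convergence of jump process}.}

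The plan is to fix $k \in \mathbb{N}$ and $0 < t_1 < \cdots < t_k < \infty$ and express the random vector $(Y^n_{t_1}, \ldots, Y^n_{t_k})$ as a measurable function of the underlying sequence $(U^n_0, V^n_0, U^n_1, V^n_1, \ldots)$, then invoke the continuous mapping theorem. Recall from Definition \ref{def jump process} that, on the event $\{U^n_0 \le t\}$, the value $Y^n_t$ equals $V^n_{J}$ where $J = J(t) = \max\{j \ge 0 : U^n_0 + U^n_1 + \cdots + U^n_j \le t\}$; since here $U_0 \equiv 0$, for the limit process $Y_t = V_{J}$ with $J = \max\{j \ge 0 : U_1 + \cdots + U_j \le t\}$ (adopting $U_1 + \cdots + U_0 = 0$). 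So the map of interest is, roughly,
\begin{align*}
\Phi_{t_1,\ldots,t_k}\big( (u_j)_{j \ge 0}, (v_j)_{j \ge 0} \big) = \Big( v_{J(t_1)}, \ldots, v_{J(t_k)} \Big), \qquad J(t) = \max\{ j \ge 0 : u_0 + \cdots + u_j \le t\}.
\end{align*}
First I would truncate: because $\lim_{m\to\infty} \mathbb{P}(U_1 + \cdots + U_m > t_k) = 1$, for any $\epsilon > 0$ there is a fixed $m$ with $\mathbb{P}(U_1 + \cdots + U_m > t_k) > 1 - \epsilon$, and on that event $J(t_i) < m$ for all $i$, so $\Phi$ depends only on the finitely many coordinates $u_0, \ldots, u_m, v_0, \ldots, v_m$. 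By the same argument applied to $Y^n$ together with the assumed joint convergence of partial sums (which follows from the convergence of the full sequence in distribution and the continuous mapping theorem applied to finite partial sums), the probability that $Y^n$'s trajectory up to $t_k$ also depends only on the first $m$ coordinates tends to $1$. This reduces everything to a continuous-mapping statement about the finite-dimensional map $\Phi^{(m)}_{t_1,\ldots,t_k}: \mathbb{R}^{m+1} \times \mathbb{R}^{m+1} \to \mathbb{R}^k$.

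The key step is then identifying the discontinuity set of $\Phi^{(m)}$ and showing it is a null set of the limiting law. The map $J(t)$ is locally constant in $(u_0, \ldots, u_m)$ except where one of the partial sums $u_0 + \cdots + u_j$ equals some $t_i$ exactly; away from such configurations, small perturbations of the $u_j$ and $v_j$ produce small perturbations of $\Phi^{(m)}$ (indeed $\Phi^{(m)}$ is continuous there, being locally a coordinate projection of the $v$'s). So the discontinuity set is contained in $\bigcup_{i=1}^{k} \bigcup_{j=0}^{m} \{ u_0 + \cdots + u_j = t_i \}$. Under the limit law, $u_0 = 0$ deterministically and the hypothesis $\mathbb{P}(U_1 + \cdots + U_j = t_i) = 0$ for every $j, i$ kills each of these hyperplanes (for $j = 0$ we'd need $0 = t_i$, impossible since $t_i > 0$). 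Hence the limiting law assigns zero mass to the discontinuity set, and the continuous mapping theorem (in the Mann--Wald form for a.s.-continuous maps, e.g.\ Theorem 2.7 of \cite{billingsley2013convergence}) yields $\Phi^{(m)}_{t_1,\ldots,t_k}$ of the converging sequence converging in distribution to $\Phi^{(m)}_{t_1,\ldots,t_k}$ of the limit. Letting $\epsilon \downarrow 0$ (equivalently $m \to \infty$) removes the truncation and completes the argument.

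The main obstacle I anticipate is bookkeeping around the boundary cases in the definition of the jump process---in particular handling the possibility $U^n_0 > t_i$ (where $Y^n_{t_i} = 0$) uniformly, and making the ``depends only on finitely many coordinates with high probability'' reduction airtight for both $Y^n$ and $Y$ simultaneously so that the continuous mapping theorem can be applied to a single fixed finite-dimensional map. Since $U_0 \equiv 0$ the limit never sits in the $0$-state at positive times, but the pre-limit processes may, so one must check that $\mathbb{P}(U^n_0 > t_1) \to 0$ (which follows from $U^n_0 \Rightarrow 0$) and absorb this into the $\epsilon$-budget. None of this is deep, but it requires care to state cleanly; the genuinely substantive content is the null-discontinuity-set computation, which the hypotheses are precisely tailored to supply.
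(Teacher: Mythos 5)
Your proposal is correct, and while it is driven by the same two ingredients as the paper's proof---truncation to finitely many jumps, plus the a.s.-null-discontinuity hypothesis $\mathbb{P}(U_1+\cdots+U_j = t_i)=0$---the technical route is genuinely different. You stay entirely in finite-dimensional Euclidean space: you package $(Y^n_{t_1},\ldots,Y^n_{t_k})$ as $\Phi^{(m)}$ applied to the first $2(m+1)$ coordinates of the jump data, identify the discontinuity set of $\Phi^{(m)}$ as the union of hyperplanes $\{u_0+\cdots+u_j = t_i\}$, observe that this set carries no mass under the limit law (using $U_0\equiv 0$ and $t_i>0$ for the $j=0$ case), and invoke the continuous mapping theorem in the Mann--Wald form. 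The paper instead lifts the problem to the Skorokhod space $\mathbb{D}_{[0,t_k]}$: it applies Skorokhod's representation theorem to upgrade weak convergence of the truncated jump data to a.s.\ convergence, deduces a.s.\ convergence of the truncated paths $Y^{n,\downarrow J}\to Y^{\downarrow J}$ in the Skorokhod metric, and then uses the fact (Billingsley, eq.\ 13.3) that the finite-dimensional projection $\pi_{t_1,\ldots,t_k}$ is continuous at any path with no jump at the $t_i$'s. Your route is more elementary and self-contained---it bypasses the Skorokhod representation theorem, the Skorokhod metric, and the ambient path space altogether, working with a single explicit map on $\mathbb{R}^{2(m+1)}$---at the modest cost of having to verify directly (rather than cite) that $\Phi^{(m)}$ is continuous off the hyperplane union, and of the extra bookkeeping you flag around $U^n_0>t_1$ (handled via $U^n_0\Rightarrow 0$) and around feeding the truncation error into the Portmanteau bound. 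Both are sound; yours makes the role of the null-set hypothesis more transparent, the paper's plugs into standard process-level machinery.
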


\begin{proof}
Fix some $k \in \mathbb{N}$ and $0 < t_1 < t_2 < \cdots < t_k < \infty$. For notational simplicity, let $t = t_k$. Let $(\mathbb{D},\textbf{d})$ be the metric space where $\mathbb{D} = \mathbb{D}_{[0,t]}$, the space of all càdlàg functions in $\mathbb{R}$ on the time interval $[0,t]$, and $\textbf{d}$ is the Skorokhod metric defined as
\begin{align*}
    \textbf{d}(\zeta_1,\zeta_2) \delequal{} \inf_{\lambda \in \Lambda} \norm{ \zeta_1 - \zeta_2 \circ \lambda } \vee \norm{ \lambda - I }
\end{align*}
where $\Lambda$ is the set of all nondecreasing homeomorphism from $[0,t]$ onto itself, and $I(s) = s$ is the identity mapping. Also, we arbitrarily choose some $\epsilon \in (0,1)$ and some open set $A \subseteq \mathbb{R}^k$.

From the assumption, we can find integer $J(\epsilon)$ such that $\mathbb{P}\Big( \sum_{j = 1}^{J(\epsilon)} U_j \leq t \Big) < \epsilon$, as well as an integer $N(\epsilon)$ such that, for all $n \geq N(\epsilon)$, we have $\mathbb{P}\Big( \sum_{j = 1}^{J(\epsilon)} U^n_j \leq t \Big) + \mathbb{P}\Big( U^n_0 \geq t_1 \Big) < \epsilon$. We fix such $J(\epsilon),N(\epsilon)$ (we may abuse the notation slightly and simply write $J,N$ when there is no ambiguity).

Using Skorokhod's representation theorem, we can construct a probability space $(\boldsymbol{\Omega},\mathcal{F},\mathbb{Q})$ that supports random variables $(\widetilde{U}^n_0,\widetilde{V}^n_0,\cdots,\widetilde{U}^n_J,\widetilde{V}^n_J)_{n \geq 1}$ and $(\widetilde{U}_0,\widetilde{V}_0,\cdots,\widetilde{U}_J,\widetilde{V}_J)$ and satisfies the following conditions:
\begin{itemize}
    \item $\mathcal{L}(U^n_0,V^n_0,\cdots,U^n_J,V^n_J) = \mathcal{L}(\widetilde{U}^n_0,\widetilde{V}^n_0,\cdots,\widetilde{U}^n_J,\widetilde{V}^n_J)$ for all $n \geq 1$;
    \item $\mathcal{L}(U_0,V_0,\cdots,U_J,V_J) = \mathcal{L}(\widetilde{U}_0,\widetilde{V}_0,\cdots,\widetilde{U}_J,\widetilde{V}_J)$;
    \item $U^n_j \xrightarrow{a.s.} U_j$ and $V^n_j \xrightarrow{a.s.} V_j$ as $n \rightarrow \infty$ for all $j \in [J]$.
\end{itemize}
Therefore, on $(\boldsymbol{\Omega},\mathcal{F},\mathbb{Q})$ we can define the following random elements (taking values in the space of càdlàg functions):
\begin{align*}
    Y_s^{n,\downarrow J} & = \begin{cases}\phantom{-} \widetilde{V}^n_0 & \text{if }s < \widetilde{U}^n_0\\ \sum_{j = 0}^J \widetilde{V}^n_j\mathbbm{1}_{ [\widetilde{U}^n_0 + \widetilde{U}^n_1 + \cdots + \widetilde{U}^n_j,\ \widetilde{U}^n_0 + \widetilde{U}^n_1 + \cdots +\widetilde{U}^n_{j+1}) }(s) & \text{otherwise} \end{cases},
    \\
    Y^{\downarrow J}_s & = \sum_{j = 0}^J \widetilde{V}_j\mathbbm{1}_{ [ \widetilde{U}_1 + \cdots + \widetilde{U}_j,\ \widetilde{U}_1 + \cdots +\widetilde{U}_{j+1}) }(s)\ \ \forall s \geq 0.
\end{align*}
Note that (1) for the first jump time of $Y^{\downarrow J}$ we have $\widetilde{U}_0 \equiv 0$, hence $Y^{\downarrow J}_0 =\widetilde{V}_0$; (2) when defining $Y^{n,\downarrow J}$ we set its value on $[0,\widetilde{U}^n_0)$ to be $\widetilde{V}^n_0$ instead of 0.

Since $U^n_j \xrightarrow{a.s.} U_j$ and $V^n_j \xrightarrow{a.s.} V_j$ as $n \rightarrow \infty$ for all $j \in [J]$, we must have
\begin{align*}
    \lim_{n}\textbf{d}(Y_s^{n,\downarrow J} ,Y_s^{\downarrow J}) = 0
\end{align*}
almost surely, which further implies that $Y_s^{n,\downarrow J} \Rightarrow Y_s^{\downarrow J}$ as $n \rightarrow \infty$ on $(\mathbb{D},\textbf{d})$. Now from our assumption that, for the jump times $U_1+\cdots + U_j$, we have $\mathbb{P}(U_1+\cdots + U_j = x) = 0\ \forall x > 0, j \geq 1$, as well as (13.3) in \cite{billingsley2013convergence}, we then obtain
\begin{align}
    (Y^{n,\downarrow J}_{t_1},\cdots,Y^{n,\downarrow J}_{t_k})\Rightarrow (Y^{\downarrow J}_{t_1},\cdots,Y^{\downarrow J}_{t_k}) \label{proof weak convergence of jump process 1}
\end{align}
as $n \rightarrow\infty$. Recall that $A$ is the open set we arbitrarily chose at the beginning of the proof, and $\epsilon > 0$ is also chosen arbitrarily. Now we observe the following facts.
\begin{itemize}
    \item Using \cref{proof weak convergence of jump process 1}, we can see that
    \begin{align*}
        \liminf_{n}\mathbb{Q}\big( (Y^{n,\downarrow J}_{t_1},\cdots,Y^{n,\downarrow J}_{t_k}) \in A  \big) \geq \mathbb{Q}\big( Y^{\downarrow J}_{t_1},\cdots,Y^{\downarrow J}_{t_k}) \in A \big).
    \end{align*}
    \item The choice of $N(\epsilon)$ and $J(\epsilon)$ above implies that
    \begin{align*}
        & \big|\mathbb{Q}\big( (Y^{\downarrow J(\epsilon)}_{t_1},\cdots,Y^{\downarrow J(\epsilon)}_{t_k}) \in A \big) - \mathbb{P}\big( (Y_{t_1},\cdots,Y_{t_k}) \in A \big)\big| \leq \mathbb{P}\Big( \sum_{j = 1}^{J(\epsilon)} U_j \leq t \Big) < \epsilon,
        \\
        & \big|\mathbb{Q}\big( (Y^{n,\downarrow J(\epsilon)}_{t_1},\cdots,Y^{n,\downarrow J(\epsilon)}_{t_k}) \in A \big) - \mathbb{P}\big( (Y^n_{t_1},\cdots,Y^n_{t_k}) \in A \big)\big|
        \\
        \leq & \mathbb{P}\Big( \sum_{j = 1}^{J(\epsilon)} U^n_j \leq t \Big) +  \mathbb{P}\Big(  U^n_0 \geq t_1 \Big)< \epsilon\ \ \forall n \geq N(\epsilon).
    \end{align*}
\end{itemize}
Collecting the two results above, we have established that
\begin{align*}
    \liminf_n \mathbb{P}\big( (Y^n_{t_1},\cdots,Y^n_{t_k}) \in A  \big) \geq \mathbb{P}\big( (Y_{t_1},\cdots,Y_{t_k}) \in A  \big) - 2\epsilon.
\end{align*}
From Portmanteau theorem, together with arbitrariness of $\epsilon > 0$ and open set $A$,  we can now conclude that $(Y^n_{t_1},\cdots,Y^n_{t_k})$ converges in distribution to $(Y_{t_1},\cdots,Y_{t_k})$.
\end{proof}

The following lemma concerns the scaled version of the marker process $\hat{X}^{*,\eta,\Delta}$ defined in \cref{def X hat star marker scaled 1}-\cref{def X hat star marker scaled 2}. Obviously, it is a jump process that complies with Definition \ref{def jump process}. When there is no ambiguity about the sequences $(\eta_n)_{n \geq 1}$ and $(\Delta_n)_{n \geq 1}$, let $\hat{X}^{(n)}_t \delequal{} \hat{X}^{*,\eta_n,\Delta_n}_t$. From \cref{def tau G sigma G I G 1}-\cref{def tau G sigma G I G 6} and \cref{def tau star sigma star scaled}, we know that for any $n \geq 1$, $\hat{X}^{(n)}$ is a $\Big(\big( \tau^*_k(\eta_n,\Delta_n) - \tau^*_{k-1}(\eta_n,\Delta_n)  \big)_{k \geq 0},\ \big( m_{I_k(\eta_n,\Delta_n)} \big)_{k \geq 0} \Big)$-jump process (with the convention that $\tau^*_{-1} = 0$). Also, for clarity of the exposition, we let (for all $n \geq 1, k \geq 0$)
\begin{align*}
    \widetilde{S}^{(n)}_k & = \sigma^*_k(\eta_n,\Delta_n) - \tau^*_{k-1}(\eta_n,\Delta_n),
    \\
    S^{(n)}_k & = \tau^*_k(\eta_n,\Delta_n) - \tau^*_{k-1}(\eta_n,\Delta_n),
    \\
     \widetilde{W}^{(n)}_k & = m_{ \widetilde{I}^G_k(\eta_n,\Delta_n) },
    \\
    W^{(n)}_k & = m_{ I^G_k(\eta_n,\Delta_n) }.
\end{align*}
Lastly, remember that $Y$ is the continuous-time Markov chain defined in \cref{def limiting MC Y 1}-\cref{def limiting MC Y 4} and $\pi_G(\cdot)$ is the random mapping defined in \cref{def mapping pi G} that is used to initialize $Y$. Besides, $Y$ is a $\big( (S_k)_{k \geq 0}, (W_k)_{k \geq 0} \big)$ jump process under Definition \ref{def jump process}, with $S_0 = 0$ and $W_0 = \pi_G(m_i)$ (here $x \in \Omega_i$ and $X^\eta_0 = x$, so $i$ is the index of the attraction field where the SGD iterate is initialized). The following result states that, given a sequence of learning rates $(\eta_n)_{ n\geq 1}$ that tend to $0$, we are able to find a sequence of $(\Delta_n)_{n\geq 1}$ to parametrize $\hat{X}^{(n)} = \hat{X}^{*,\eta_n,\Delta_n }, {X}^{(n)} = {X}^{*,\eta_n,\Delta_n }$ so that they have several useful properties, one of which is that the jump times and locations of $\hat{X}^{(n)}$ converges in distribuiton to those of $Y(\pi_G(m_i))$.
 
\begin{lemma} \label{lemma preparation weak convergence to markov chain absorbing case}
Assume the communication class $G$ is absorbing. Given any $m_i \in G$, $x \in \Omega_i$, finitely many real numbers $(t_l)_{l = 1}^{k^\prime}$ such that $0<t_1<t_2<\cdots<t_{k^\prime}$, and a sequence of strictly positive real numbers $(\eta_n)_{n \geq 1}$ with $\lim_{n \rightarrow 0}\eta_n = 0$, there exists a sequence of strictly positive real numbers $(\Delta_n)_{n \geq 1}$ with $\lim_n \Delta_n = 0$ such that
\begin{itemize}
    \item Under $\mathbb{P}_x$ (so $X^\eta_0 = x$), as $n$ tends to $\infty$,
    \begin{align}
    (S^{(n)}_0,W^{(n)}_0,S^{(n)}_1,W^{(n)}_1,S^{(n)}_2,W^{(n)}_2,\cdots ) \Rightarrow  (S_0,W_0,S_1,W_1,S_2,W_2,\cdots ) \label{lemma mark process converge to markov chain absorbing case goal 1}
\end{align}

    \item (Recall the definition of $T_k,I_k$ in \cref{def MC GP T 0}-\cref{def MC GP I k}) Given any $\epsilon > 0$, the following claim holds for all $n$ sufficiently large:
    \begin{align}
        \sup_{k \geq 0}\mathbb{P}_x\Big(\exists j \in [T_k(\eta_n,\Delta_n),T_k(\eta_n,\Delta_n)]\ s.t.\ X^\eta_j \notin \bigcup_{l:\ m_l \in G}\Omega_l \ |\ m_{I_{k}(\eta_n,\Delta_n)} \in G \Big) < \epsilon; \label{lemma mark process converge to markov chain absorbing case goal 2}
    \end{align}
    
    \item Given any $\epsilon > 0$, the following claim holds for all $n$ sufficiently large,
    \begin{align}
        & \sup_{k \geq 0}\mathbb{P}_k\Big( m_{I_k(\eta_n,\Delta_n) + v } \notin G^\text{large}\ \forall v\in [ un_\text{min} ]\ \Big|\  m_{I_k(\eta_n,\Delta_n)  } \in G  \Big) \nonumber 
        \\
        \leq & \mathbb{P}(Geom(p^*) \geq u) + \epsilon\ \forall u = 1,2,\cdots; \label{lemma mark process converge to markov chain absorbing case goal 3}
    \end{align}
    
    \item For any $l \in [k^\prime]$,
    \begin{align}
        \lim_{n \rightarrow \infty}\mathbb{P}_x\Big( X^{*,\eta_n}_{t_l} \notin \bigcup_{j:\ m_j \in G^\text{large}}B(m_j,\Delta_n),\ X^{*,\eta_n}_{s} \in \bigcup_{j:\ m_j \in G}\Omega_j\ \forall s \in [0,t_{k^\prime}] \Big) = 0 \label{lemma mark process converge to markov chain absorbing case goal 4}
    \end{align}
\end{itemize}
where $p^* > 0$ is a constant that does not vary with our choices of $\eta_n$ or $\Delta_n$.
\end{lemma}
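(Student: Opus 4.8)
The plan is to build the sequence $(\Delta_n)$ by a diagonal argument, and to establish the genuinely new content --- the convergence \eqref{lemma mark process converge to markov chain absorbing case goal 1} of the full sequence of inter-marker times and locations --- by unwinding the marker construction one excursion step at a time, applying the strong Markov property at the times $\tau^G_k$ together with Proposition \ref{prop first exit and return time i} (for escapes from large basins) and Lemma \ref{lemma sharp basin process absorbing} (for detours through small basins). Claims \eqref{lemma mark process converge to markov chain absorbing case goal 2}, \eqref{lemma mark process converge to markov chain absorbing case goal 3}, \eqref{lemma mark process converge to markov chain absorbing case goal 4} are essentially restatements of Lemma \ref{lemma sharp basin process absorbing}/Proposition \ref{prop first exit and return time i}, Lemma \ref{lemma geom bound on transition count}, and Lemma \ref{lemma dynamic scaled X stays mostly at local minima} respectively.

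Fix a small $\Delta>0$ and analyze $(S^{(n)}_k,W^{(n)}_k)$ inductively. For the initial step, Lemma \ref{lemma return to local minimum quickly} gives that the iterates reach $B(m_i,2\Delta)$ in $O(\log(1/\eta)/\eta)$ steps without leaving $\Omega_i$, and this is $o(1/\lambda_G(\eta))$ since $H\in\RV_{-\alpha}$ with $\alpha>1$; so $\sigma^*_0\to 0$ in probability. If $m_i\in G^{\text{large}}$ then $\tau^G_0=\sigma^G_0$ and $I^G_0=i$; if $m_i\in G^{\text{small}}$, Lemma \ref{lemma sharp basin process absorbing} shows the iterates reach $\bigcup_{m_j\in G^{\text{large}}}B(m_j,2\Delta)$ in scaled time $\le \eta^\theta/\lambda_G(\eta)\to 0$, at $m_j$ with probability $\to p_{i,j}$, and $\sum_{m_j\in G^{\text{large}}}p_{i,j}=1$ because $G$ is absorbing. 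Either way $S^{(n)}_0\to 0$ and $W^{(n)}_0\Rightarrow \pi_G(m_i)$, where for $m_i\in G^{\text{small}}$ one checks $p_{i,j}=q_{i,j}/q_i$ by unwinding \eqref{def p i j}, \eqref{def limiting MC Y 4}, \eqref{def mapping pi G}. For the inductive step, conditioning at $\tau^G_{k-1}$ on $I^G_{k-1}=I$ with $m_I\in G^{\text{large}}$, the strong Markov property and Proposition \ref{prop first exit and return time i} --- which is uniform over the unknown restart point in $B(m_I,2\Delta)$ and over the level $u$ --- give that the scaled time to first reach a different minimum converges to $\mathrm{Exp}\big(\mu_I(E_I)\big)$ and the destination is $\Omega_j$ with probability $\to \mu_I(E_{I,j})/\mu_I(E_I)$; in the $\mathbb{R}^1$ geometry one has $\mu_I(E_{I,j})>0$ iff $l_{I,j}=l^*_I$ iff $(m_I\to m_j)\in E$, so (as $G$ is absorbing) every positive-probability destination lies in $G$. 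If $m_j\in G^{\text{large}}$ this is the next marker; if $m_j\in G^{\text{small}}$ we append an $o(1/\lambda_G(\eta))$ detour governed by $p_{j,\cdot}$ before landing at a large minimum. Summing the two contributions, the one-step limiting transition from $m_I$ to $m_l\in G^{\text{large}}$ is $q_{I,l}/q_I$ with $q_{I,l}=\mathbbm{1}\{I\ne l\}\mu_I(E_{I,l})+\sum_{m_k\in G^{\text{small}}}\mu_I(E_{I,k})p_{k,l}$, which is exactly \eqref{def limiting MC Y 4}, and the scaled inter-marker time converges to $\mathrm{Exp}(q_I)$ independently of the destination; dummy jumps $I^G_k=I^G_{k-1}$ (an escape into a small basin that cycles back) are allowed and correspond to the $I=l$ case of \eqref{def limiting MC Y 4} and to Definition \ref{def jump process}. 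This reproduces the conditional law \eqref{def limiting MC Y 1}--\eqref{def limiting MC Y 2}. Iterating the conditioning over $k=0,\ldots,K$ for each fixed $K$ and passing to the product topology gives \eqref{lemma mark process converge to markov chain absorbing case goal 1} for that $\Delta$; the limit law is the same for all small $\Delta$ since the constants $\mu_i(E_i),\mu_i(E_{i,j}),p_{i,j}$ do not depend on $\Delta$. (The hypotheses of Lemma \ref{lemma weak convergence of jump process} are immediate: finite sums of $\mathrm{Exp}$'s are atomless and, since there are finitely many minima, diverge.)

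Claim \eqref{lemma mark process converge to markov chain absorbing case goal 2}, that an excursion step does not leave $\bigcup_{m_l\in G}\Omega_l$, follows from the same transition-probability computation (a typical escape from a large basin of $G$ lands in $G$, and a small-basin detour reaches $G^{\text{large}}$ before leaving $G$) together with Proposition \ref{prop first exit and return time i} and Lemma \ref{lemma sharp basin process absorbing}; claim \eqref{lemma mark process converge to markov chain absorbing case goal 3} is Lemma \ref{lemma geom bound on transition count}; claim \eqref{lemma mark process converge to markov chain absorbing case goal 4} is Lemma \ref{lemma dynamic scaled X stays mostly at local minima} applied with $\Delta/3$ in place of $\Delta$. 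Finally, to obtain one sequence $(\Delta_n)$: each of \eqref{lemma mark process converge to markov chain absorbing case goal 1}--\eqref{lemma mark process converge to markov chain absorbing case goal 4} asserts that for every sufficiently small $\Delta$ a certain $\eta\downarrow 0$ limit holds; metrizing weak convergence, fix $\Delta^{(m)}\downarrow 0$ and, for each $m$, a threshold $\bar\eta^{(m)}$ below which all four statements hold to accuracy $1/m$, then set $\Delta_n=\Delta^{(m(n))}$ with $m(n)=\max\{m:\eta_n<\bar\eta^{(m)}\}\to\infty$.

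The main obstacle is the joint, infinite-dimensional convergence in \eqref{lemma mark process converge to markov chain absorbing case goal 1}: one must patch together countably many applications of the strong Markov property while (i) keeping every estimate uniform over the restart location in $B(m_I,2\Delta)$ --- which is precisely why Proposition \ref{prop first exit and return time i} carries the $\sup_{x\in B(m_i,2\epsilon)}$ and $\sup_u$ --- and (ii) correctly handling the two time scales, since one inter-marker interval may contain a single macroscopic $\Theta(1/\lambda_G(\eta))$ escape from a large basin followed by an a priori unbounded (but, by Lemma \ref{lemma geom bound on transition count}, geometrically tight) number of microscopic $o(1/\lambda_G(\eta))$ detours through small basins, all of which must collapse to one exponential holding time and one mixed transition. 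Showing that these detours are asymptotically invisible at scale $\lambda_G(\eta)$, uniformly in the step index, is the delicate point; it rests on the power gain $\eta^\theta/\lambda_G(\eta)$ of Lemma \ref{lemma sharp basin process absorbing} combined with the uniform geometric bound of Lemma \ref{lemma geom bound on transition count} on the number of detours.
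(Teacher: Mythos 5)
Your proposal is correct and follows essentially the same route as the paper: a diagonal construction of $(\Delta_n)$ from thresholds extracted from Lemma~\ref{lemma return to local minimum quickly rescaled version}, Lemma~\ref{lemma sharp basin process absorbing}, Proposition~\ref{prop first exit and return time i}, Lemma~\ref{lemma geom bound on transition count} and Lemma~\ref{lemma dynamic scaled X stays mostly at local minima}, together with an inductive one-step analysis that fuses a macroscopic exponential escape from a large basin (Proposition~\ref{prop first exit and return time i}) with an asymptotically negligible small-basin detour (Lemma~\ref{lemma sharp basin process absorbing}) to reproduce the kernel $q_{i,j}/q_i$ of \eqref{def limiting MC Y 4}. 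The paper's proof makes the same convolution over the intermediate small-basin stop explicit via the error function $g(\epsilon)$, so there is nothing missing.
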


\begin{proof}
Let
\begin{align*}
    \nu_j & \delequal{} q_j = \mu_j(E_j)
    \\
    \nu_{j,k} & \delequal{} \mu_{j}(E_{j,k})
\end{align*}
so from the definition of $q_{j,k}$ we have $q_{j,k} =\mathbbm{1}\{j \neq k\}\nu_{j,k} + \sum_{l:\ m_l \in G^\text{small}}\nu_{j,l}p_{l,k}$.

In order to specify our choice of $(\Delta_n)_{n \geq 1}$, we consider a construction of sequences $(\bar{\boldsymbol{\Delta}}(j))_{j \geq 0},(\bar{ \boldsymbol{\eta} }(j))_{j \geq 0}$ as follows. Fix some $\theta \in (0,\alpha - 1)/2)$. Let $\bar{\boldsymbol{\Delta}}(0) = \bar{\boldsymbol{\eta}}(0) = 1$. One can see the existence of some $(\bar{\boldsymbol{\Delta}}(j))_{j \geq 1},(\bar{ \boldsymbol{\eta} }(j))_{j \geq 1}$ such that
\begin{itemize}
    \item $\bar{\boldsymbol{\Delta}}(j) \in \big(0,\bar{\boldsymbol{\Delta}}(j-1)/2\big],\ \bar{\boldsymbol{\eta}}(j) \in \big(0,\bar{\boldsymbol{\eta}}(j-1)/2\big]$ for all $j \geq 1$;
    
    \item (Due to Lemma \ref{lemma return to local minimum quickly rescaled version}) for any $j \geq 1$, $\eta \in (0,\bar{\boldsymbol{\eta}}(j)]$, (remember that $x$ and $i$ are the fixed constants prescribed in the description of the lemma)
    \begin{align*}
        \mathbb{P}_x\Big( \sigma^*_0\big(\eta, \bar{\boldsymbol{\Delta}}(j) \big) < \eta^\theta,\ \widetilde{I}^G_0\big(\eta, \bar{\boldsymbol{\Delta}}(j)\big) = i \Big) > 1 - \frac{1}{2^j}.
    \end{align*}
    For definitions of $\sigma^G_k, \tau^G_k, I^G_k, \widetilde{I}^G_k$, see \cref{def tau G sigma G I G 1}-\cref{def tau G sigma G I G 6}.
    
    \item (Due to Lemma \ref{lemma sharp basin process absorbing}) for any $j \geq 1$, $\eta \in (0,\bar{\boldsymbol{\eta}}(j)]$,
    \begin{align*}
        & \bigg|\mathbb{P}_x\Big( \tau^*_k\big(\eta, \bar{\boldsymbol{\Delta}}(j)\big) - \sigma^*_k\big(\eta, \bar{\boldsymbol{\Delta}}(j)\big) < \eta^\theta,\ I^G_k\big(\eta, \bar{\boldsymbol{\Delta}}(j)\big) = i_2    \ \Big|\ \widetilde{I}^G_k\big(\eta, \bar{\boldsymbol{\Delta}}(j)\big) = i_1 \Big) - p_{i_1,i_2}\bigg|
        \\
        & < 1/2^j
    \end{align*}
    uniformly for all $k \geq 0$ and all $m_{i_1} \in G^\text{small}, m_{i_2}\in G^\text{large}$. Also, by definition of $\sigma^*$ and $\tau^*$, we must have
    \begin{align*}
        \mathbb{P}_x\Big( \tau^*_k\big(\eta, \bar{\boldsymbol{\Delta}}(j)\big) - \sigma^*_k\big(\eta, \bar{\boldsymbol{\Delta}}(j)\big) = 0,\ I^G_k\big(\eta, \bar{\boldsymbol{\Delta}}(j)\big) = i_1    \ \Big|\ \widetilde{I}^G_k\big(\eta, \bar{\boldsymbol{\Delta}}(j)\big) = i_1 \Big) = 1
    \end{align*}
    for all $k \geq 0$ and $m_{i_1} \in G^\text{large}$.
    \item (Due to Proposition \ref{prop first exit and return time i}) for any $j \geq 1$, $\eta \in (0,\bar{\boldsymbol{\eta}}(j)]$,
    \begin{align*}
        & -\frac{1}{2^j} + \exp\big( -(1+\frac{1}{2^j})q_{i_1}u \big)\frac{ \nu_{i_1,i_2} - \frac{1}{2^j} }{q_{i_1} }
        \\
        \leq & \mathbb{P}_x\Big( \sigma^*_{k+1}\big(\eta, \bar{\boldsymbol{\Delta}}(j)\big) - \tau^*_k\big(\eta, \bar{\boldsymbol{\Delta}}(j)\big) > u,\ \widetilde{I}^G_{k+1} = i_2  \ \Big|\ I^G_k\big(\eta, \bar{\boldsymbol{\Delta}}(j)\big) = i_1  \Big)
        \\
        \leq & \frac{1}{2^j} + \exp\big( -(1-\frac{1}{2^j})q_{i_1}u \big)\frac{ \nu_{i_1,i_2} + \frac{1}{2^j} }{q_{i_1} }
    \end{align*}
    uniformly for all $k \geq 1$, all $u > 1/2^j$, and all $m_{i_1} \in G^\text{large}, m_{i_2} \in G$.
    
    \item (Due to $G$ being absorbing and, again, Proposition \ref{prop first exit and return time i}) for any $j \geq 1$, for any $j \geq 1$, $\eta \in (0,\bar{\boldsymbol{\eta}}(j)]$, (Recall the definition of $T_k,I_k$ in \cref{def MC GP T 0}-\cref{def MC GP I k})
    \begin{align}
        \mathbb{P}_x\Big(I_{k+1}\big(\eta, \bar{\boldsymbol{\Delta}}(j)\big) = i_2  \ |\ I_k\big(\eta, \bar{\boldsymbol{\Delta}}(j)\big) = i_1 \Big) < \frac{1}{2^j} \label{proof mark process converge to markov chain absorbing case goal 2}
    \end{align}
    uniformly for all $k \geq 0$, $m_{i_1} \in G, m_{i_2} \notin G$.
    
    \item (Due to Lemma \ref{lemma geom bound on transition count}) There exists some $p^* > 0$ such that for any $j \geq 1$, for any $j \geq 1$, $\eta \in (0,\bar{\boldsymbol{\eta}}(j)]$,
    \begin{align}
       &  \mathbb{P}_x\Big(m_{ v + I_k\big(\eta, \bar{\boldsymbol{\Delta}}(j)\big)  } \notin G^\text{large}\ \forall v \in [un_\text{min}] \ \Big|\ I_k\big(\eta, \bar{\boldsymbol{\Delta}}(j)\big) = i_1 \Big) \nonumber 
       \\
       \leq & \mathbb{P}(Geom(p^*) \geq u) + 1/2^j \label{proof mark process converge to markov chain absorbing case goal 3}
    \end{align}
    uniformly for all $k \geq 0, u \geq 1$ and $m_{i_1} \in G$.
    
    \item (Due to Lemma \ref{lemma dynamic scaled X stays mostly at local minima}) for any $j \geq 1$, for any $j \geq 1$, $\eta \in (0,\bar{\boldsymbol{\eta}}(j)]$,
    \begin{align}
        \mathbb{P}_x\Big( X^{*,\eta_n}_{t_k} \notin \bigcup_{j:\ m_j \in G^\text{large}}B(m_j,\bar{\boldsymbol{\Delta}}(j)),\ X^{*,\eta_n}_{s} \in \bigcup_{j:\ m_j \in G}\Omega_j\ \forall s \in [0,t_{k^\prime}] \Big) < 1/2^j \label{proof mark process converge to markov chain absorbing case goal 4}
    \end{align}
    uniformly for all $k \in [k^\prime]$.
\end{itemize}

Fix such $(\bar{\boldsymbol{\Delta}}(j))_{j \geq 0},(\bar{ \boldsymbol{\eta} }(j))_{j \geq 0}$. Define a function $\textbf{J}(\cdot): \mathbb{N} \mapsto \mathbb{N}$ as
\begin{align*}
    \textbf{J}(n) = 0 \vee \max\{ j \geq 0:\ \bar{\boldsymbol{\eta}}(j) \geq \eta_n  \}
\end{align*}
with the convention that $\max \emptyset = -\infty$. Lastly, let
\begin{align*}
    \Delta_n = \bar{\boldsymbol{\Delta}}(\textbf{J}(n))\ \forall n \geq 1.
\end{align*}
Note that, due to $\lim_n \eta_n = 0$, we have $\lim_n \textbf{J}(n) = \infty$, hence $\lim_n \Delta_n = 0$. Besides, the definition of $\textbf{J}(\cdot)$ tells us that in case that $\textbf{J}(n) \geq 1$ (which will hold for all $n$ sufficiently large), the claims above holds with $\eta = \eta_n$ and $j = \textbf{J}(n)$. In particular, by combining $\lim_n \textbf{J}(n) = \infty$ with \cref{proof mark process converge to markov chain absorbing case goal 2}\cref{proof mark process converge to markov chain absorbing case goal 3}\cref{proof mark process converge to markov chain absorbing case goal 4} respectively, we have \cref{lemma mark process converge to markov chain absorbing case goal 2}\cref{lemma mark process converge to markov chain absorbing case goal 3}\cref{lemma mark process converge to markov chain absorbing case goal 4}.

Now it remains to prove \cref{lemma mark process converge to markov chain absorbing case goal 1}. To this end, it suffices to show that, for any positive integer $K$, we have $(S^{(n)}_0,W^{(n)}_0,\cdots,S^{(n)}_K,W^{(n)}_K )$ converges in distribution $(S_0,W_0,\cdots,S_K,W_K )$ as $n$ tends to infinity. In particular, note that $S_0 = 0, W_0 = \pi_G(m_i)$, so $W_0 = m_j$ with probability $p_{i,j}$ if $m_i \in G^\text{small}$, and $W_0 \equiv m_i$ if $m_i \in G^\text{large}$.

For clarity of the exposition, we restate some important claims above under the new notational system with $\widetilde{S}^{(n)}_k,\widetilde{W}^{(n)}_k,S^{(n)}_k,W^{(n)}_k$ we introduced right above this lemma. Given any $\epsilon > 0$, the following claims hold for all $n$ sufficiently large:
\begin{itemize}
    \item First of all,
    \begin{align}
    \mathbb{P}_x\Big( \widetilde{S}^{(n)}_0 < \eta_n^\theta,\ \widetilde{W}^{(n)}_0 = m_i \Big) > 1 - \epsilon.     \label{proof mark process convergence ineq 0 }
    \end{align}
    
    \item For all $k \geq 0$ and all $m_{i_1} \in G^\text{small},m_{i_2} \in G^\text{large}$,
    \begin{align}
        \bigg|\mathbb{P}_x\Big( S^{(n)}_k - \widetilde{S}^{(n)}_k < \eta_n^\theta,\ {W}^{(n)}_k = m_{i_2}    \ \Big|\ \widetilde{W}^{(n)}_k = m_{i_1} \Big) - p_{i_1,i_2}\bigg| < \epsilon. \label{proof mark process convergence ineq 1 }
    \end{align}
    \item For all $k \geq 0$ and all $m_{i_1} \in G^\text{large}$,
    \begin{align}
        \mathbb{P}_x\Big( S^{(n)}_k - \widetilde{S}^{(n)}_k = 0,\ {W}^{(n)}_k = m_{i_1}    \ \Big|\ \widetilde{W}^{(n)}_k = m_{i_1} \Big) = 1. \label{proof mark process convergence ineq 2 }
    \end{align}
    \item For all $k \geq 0$, all $m_{i_1} \in G^\text{large}, m_{i_2} \in G$ and all $u > \epsilon$,
    \begin{align}
        & - \epsilon + \exp\big( -(1+\epsilon)q_{i_1}u \big)\frac{ \nu_{i_1,i_2} - \epsilon }{q_{i_1} } \nonumber
        \\
        \leq & \mathbb{P}_x\Big( \widetilde{S}^{(n)}_{k+1} - S^{(n)}_k > u,\ \widetilde{W}^{(n)}_{k+1} = m_{i_2} \ \Big|\ W^{(n)}_k = m_{i_1}  \Big)  \nonumber
        \\
        \leq & \mathbb{P}_x\Big( \widetilde{S}^{(n)}_{k+1} - S^{(n)}_k > u - \eta_n^\theta,\ \widetilde{W}^{(n)}_{k+1} = m_{i_2} \ \Big|\ W^{(n)}_k = m_{i_1}  \Big)  \nonumber 
        \\
        \leq & \epsilon + \exp\big( -(1-\epsilon)q_{i_1}u \big)\frac{ \nu_{i_1,i_2} + \epsilon }{q_{i_1} } \label{proof mark process convergence ineq 3 }
    \end{align}
    \item Here is one implication of \cref{proof mark process convergence ineq 1 }. Since $|G|\leq n_\text{min}$, we have
    \begin{align}
       \mathbb{P}_x\Big( S^{(n)}_k - \widetilde{S}^{(n)}_k \geq \eta_n^\theta    \ \Big|\ \widetilde{W}^{(n)}_k = m_{i_1} \Big) < n_\text{min}\cdot\epsilon  \label{proof mark process convergence ineq 4 }
    \end{align}
    for all $k \geq 0$ and $m_{i_1} \in G^\text{small}$.
    \item Note that for any $m_{i_1},m_{i_2} \in G^\text{large}$ and any $k \geq 0$
    \begin{align*}
        & \mathbb{P}_x\Big( S^{(n)}_{k+1} - S^{(n)}_k > u,\ W^{(n)}_{k+1} = m_{i_2} \ \Big|\  W^{(n)}_k = m_{i_1}\Big)
        \\
        = & \mathbbm{1}\{i_2 \neq i_1\}\mathbb{P}_x\Big( \widetilde{S}^{(n)}_{k+1} - S^{(n)}_k > u,\ \widetilde{W}^{(n)}_{k+1} = m_{i_2} \ \Big|\  W^{(n)}_k = m_{i_1}\Big)
        \\
        + & \sum_{ i_3:\ m_{i_3} \in G^\text{small} }\int_{ s > 0 }\mathbb{P}_x\Big( S^{(n)}_{k+1} - \widetilde{S}^{(n)}_{k+1}  \geq (u-s)\vee 0 ,\ \widetilde{W}^{(n)}_{k+1} = m_{i_2} \ \Big|\  \widetilde{W}^{(n)}_k = m_{i_3}\Big)
        \\
        &\ \ \ \ \ \ \ \ \ \ \ \ \cdot \mathbb{P}_x\Big( \widetilde{S}^{(n)}_{k+1} - S^{(n)}_k = ds ,\ \widetilde{W}^{(n)}_{k+1} = m_{i_3} \ \Big|\  W^{(n)}_k = m_{i_1}\Big).
    \end{align*}
    Fix some $i_3$ with $m_{i_3} \in G^\text{small}$. On the one hand, due to \cref{proof mark process convergence ineq 4 },
    \begin{align*}
        & \int_{ s \in (0,u - \eta^\theta_n] }\mathbb{P}_x\Big( S^{(n)}_{k+1} - \widetilde{S}^{(n)}_{k+1}  \geq u - s ,\ \widetilde{W}^{(n)}_{k+1} = m_{i_2} \ \Big|\  \widetilde{W}^{(n)}_k = m_{i_3}\Big)
        \\
        & \ \ \ \ \ \ \ \ \ \ \cdot \mathbb{P}_x\Big( \widetilde{S}^{(n)}_{k+1} - S^{(n)}_k = ds ,\ \widetilde{W}^{(n)}_{k+1} = m_{i_3} \ \Big|\  W^{(n)}_k = m_{i_1}\Big)
        \\
        \leq & n_\text{min}\epsilon.
    \end{align*}
    On the other hand, by considering the integral on $(u - \eta_n^\theta,\infty)$, we get
    \begin{align*}
        & \int_{ s \in (u - \eta^\theta_n,\infty) }\mathbb{P}_x\Big( S^{(n)}_{k+1} - \widetilde{S}^{(n)}_{k+1}  \geq (u - s)\vee 0 ,\ \widetilde{W}^{(n)}_{k+1} = m_{i_2} \ \Big|\  \widetilde{W}^{(n)}_k = m_{i_3}\Big)
        \\
        & \ \ \ \ \ \ \ \ \ \ \cdot \mathbb{P}_x\Big( \widetilde{S}^{(n)}_{k+1} - S^{(n)}_k = ds ,\ \widetilde{W}^{(n)}_{k+1} = m_{i_3} \ \Big|\  W^{(n)}_k = m_{i_1}\Big)
        \\
        \geq & \int_{ s \in (u,\infty) }\mathbb{P}_x\Big( \widetilde{W}^{(n)}_{k+1} = m_{i_2} \ \Big|\  \widetilde{W}^{(n)}_k = m_{i_3}\Big)
        \\
        & \ \ \ \ \ \ \ \ \ \ \ \ \cdot \mathbb{P}_x\Big( \widetilde{S}^{(n)}_{k+1} - S^{(n)}_k = ds ,\ \widetilde{W}^{(n)}_{k+1} = m_{i_3} \ \Big|\  W^{(n)}_k = m_{i_1}\Big)
        \\
        \geq & (p_{i_3,i_2} - \epsilon)\Big( - \epsilon + \exp\big( -(1+\epsilon)q_{i_1}u \big)\frac{ \nu_{i_1,i_3} - \epsilon }{q_{i_1} } \Big)
    \end{align*}
    due to \cref{proof mark process convergence ineq 1 } and \cref{proof mark process convergence ineq 3 }. Meanwhile,
    \begin{align*}
        & \int_{ s \in (u - \eta^\theta_n,\infty) }\mathbb{P}_x\Big( S^{(n)}_{k+1} - \widetilde{S}^{(n)}_{k+1}  \geq (u - s)\vee 0 ,\ \widetilde{W}^{(n)}_{k+1} = m_{i_2} \ \Big|\  \widetilde{W}^{(n)}_k = m_{i_3}\Big)
        \\ 
        & \ \ \ \ \ \ \ \ \ \ \cdot \mathbb{P}_x\Big( \widetilde{S}^{(n)}_{k+1} - S^{(n)}_k = ds ,\ \widetilde{W}^{(n)}_{k+1} = m_{i_3} \ \Big|\  W^{(n)}_k = m_{i_1}\Big)
        \\
        \leq & (n_\text{min}\epsilon + p_{i_3,i_2} + \epsilon)\Big(  \epsilon + \exp\big( -(1-\epsilon)q_{i_1}u \big)\frac{ \nu_{i_1,i_3} + \epsilon }{q_{i_1} }  \Big)
    \end{align*}
    due to \cref{proof mark process convergence ineq 1 }, \cref{proof mark process convergence ineq 3 } and \cref{proof mark process convergence ineq 4 }.
    
    \item Therefore, for any $m_{i_1},m_{i_2} \in G^\text{large}$ and any $k \geq 0$,
    \begin{align}
        & \mathbb{P}_x\Big( S^{(n)}_{k+1} - S^{(n)}_k > u,\ W^{(n)}_{k+1} = m_{i_2} \ \Big|\  W^{(n)}_k = m_{i_1}\Big) \nonumber
        \\
        \leq & g(\epsilon) + \exp\big(-(1-\epsilon)q_{i_1} u\big)\frac{ \mathbbm{1}\{i_2 \neq i_1\}\nu_{i_1,i_2} + \sum_{i_3:\ m_{i_3} \in G^\text{small} }\nu_{i_1,i_3}p_{i_3,i_2} }{ q_{i_1}} \nonumber
        \\
        \leq & g(\epsilon) + \exp\big(-(1-\epsilon)q_{i_1} u\big)\frac{ q_{i_1,i_2} }{ q_{i_1}} \label{proof mark process convergence ineq 5}
    \end{align}
    and
    \begin{align}
        & \mathbb{P}_x\Big( S^{(n)}_{k+1} - S^{(n)}_k > u,\ W^{(n)}_{k+1} = m_{i_2} \ \Big|\  W^{(n)}_k = m_{i_1}\Big) \nonumber
        \\
        \geq & -g(\epsilon) + \exp\big(-(1+\epsilon)q_{i_1} u\big)\frac{ \mathbbm{1}\{i_2 \neq i_1\}\nu_{i_1,i_2} + \sum_{i_3:\ m_{i_3} \in G^\text{small} }\nu_{i_1,i_3}p_{i_3,i_2} }{ q_{i_1}} \nonumber
        \\
        \geq & -g(\epsilon) + \exp\big(-(1+\epsilon)q_{i_1} u\big)\frac{ q_{i_1,i_2} }{ q_{i_1}} \label{proof mark process convergence ineq 6 }
    \end{align}
    where $q^* = \max_i q_i$ and
    \begin{align*}
        g(\epsilon)\delequal{} & \epsilon + \frac{\epsilon}{q^*} + n_\text{min}(1+\epsilon)\epsilon + \epsilon\frac{1+\epsilon}{q^*}n_\text{min}
        \\
        & + n_\text{min}(\epsilon+\frac{\epsilon}{q^*}) + n_\text{min}(n_\text{min}\epsilon + \epsilon + 1)(1 + \frac{1}{q^*})\epsilon.
    \end{align*}
    Note that $\lim_{\epsilon \downarrow 0}g(\epsilon) = 0$.
\end{itemize}

Now we apply the bounds in \cref{proof mark process convergence ineq 0 }\cref{proof mark process convergence ineq 5}\cref{proof mark process convergence ineq 6 } to establish the weak convergence claim regarding $(S^{(n)}_0,W^{(n)}_0,\cdots,S^{(n)}_K,W^{(n)}_K )$. Fix some positive integer $K$, some strictly positive real numbers $(s_k)_{k = 0}^K$, a sequence $( w_k )_{k = 0}^K \in \big(G^\text{large}\big)^{K+1}$ with $w_k = m_{i_k}$ for each $k$, and some $\epsilon > 0$ such that $\epsilon < \min_{k = 0,1,\cdots,K}\{s_k\}$. On the one hand, the definition of the CTMC $Y$ implies that
\begin{align*}
    & \mathbb{P}\Big( S_0 < t_0, W_0 = w_0;\ S_k > s_k\ \text{and }W_k = w_k\ \forall k \in [K] \Big)
    \\
    = & \mathbb{P}( \pi_G(m_i) = w_0 )\prod_{k = 1}^K\Big(S_k > s_k\ \text{and }W_k = w_k\ \Big|\  W_{k-1} = w_{k-1}  \Big)
    \\
    = &\Big( \mathbbm{1}\{ m_i \in G^\text{large},\ i_0 = i \} + \mathbbm{1}\{ m_i \in G^\text{small}\}p_{i_0,i_1}  \Big)\cdot\prod_{k = 1}^K\exp( -q_{i_{k-1}}s_k)\frac{ q_{i_{k-1},i_k} }{ q_{i_{k-1}} }.
\end{align*}
On the other hand, using \cref{proof mark process convergence ineq 0 }\cref{proof mark process convergence ineq 5}\cref{proof mark process convergence ineq 6 }, we know that for all $n$ sufficiently large,
\begin{align*}
    & \mathbb{P}_x\Big( S^{(n)}_0 < s_0, W^{(n)}_0 = w_0;\ S^{(n)}_k > s_k\ \text{and }W^{(n)}_k = w_k\ \forall k \in [K] \Big)
    \\
    \geq & (1-\epsilon)\Big( \mathbbm{1}\{ m_i \in G^\text{large},\ i_0 = i \} + \mathbbm{1}\{ m_i \in G^\text{small}\}(p_{i_0,i_1} - \epsilon)  \Big)
    \\
    &\ \ \ \ \ \ \cdot\prod_{k = 1}^K\Big( -g(\epsilon) + \exp( -(1+\epsilon)q_{i_{k-1}}s_k)\frac{ q_{i_{k-1},i_k} }{ q_{i_{k-1}} } \Big)
\end{align*}
and 
\begin{align*}
    & \mathbb{P}_x\Big( S^{(n)}_0 < s_0, W^{(n)}_0 = w_0;\ S^{(n)}_k > s_k\ \text{and }W^{(n)}_k = w_k\ \forall k \in [K] \Big)
    \\
    \leq & \Big( \mathbbm{1}\{ m_i \in G^\text{large},\ i_0 = i \} + \mathbbm{1}\{ m_i \in G^\text{small}\}(p_{i_0,i_1} + \epsilon)  \Big)
    \\
    & \ \ \ \ \ \ \ \ \ \ \ \cdot\prod_{k = 1}^K\Big( g(\epsilon) + \exp( -(1 - \epsilon)q_{i_{k-1}}s_k)\frac{ q_{i_{k-1},i_k} }{ q_{i_{k-1}} } \Big).
\end{align*}
Since $\epsilon > 0$ can be arbitrarily small, we now obtain
\begin{align*}
    & \lim_{n \rightarrow \infty}\mathbb{P}_x\Big( S^{(n)}_0 < t_0, W^{(n)}_0 = w_0;\ S^{(n)}_k > s_k\ \text{and }W^{(n)}_k = m_k\ \forall k \in [K] \Big)
    \\
    = & \mathbb{P}\Big( S_0 < s_0, W_0 = w_0;\ S_k > s_k\ \text{and }W_k = w_k\ \forall k \in [K] \Big),
\end{align*}
and the arbitrariness of the integer $K$, the strictly positive real numbers $(s_k)_{k = 0}^K$, and the sequence $( w_k )_{k = 0}^K \in \big(G^\text{large}\big)^{K+1}$ allows us to conclude the proof.
\end{proof}

To extend the result above to the case where the communication class $G$ is transient, we revisit the definition of the $Y^\dagger$ in \cref{def process Y killed}. Let $\bar{G} = G^\text{large}\cup\{ \boldsymbol{\dagger} \}$ and let $m_0 = \boldsymbol{\dagger}$ (remember that all the local minimizers of $f$ on $[-L,L]$ are $m_1,\cdots,m_\text{min}$). Meanwhile, using $q_i$ and $q_{i,j}$ in \cref{def limiting MC Y 3}\cref{def limiting MC Y 4}, we can define
\begin{align*}
    q^\dagger_{i,j} = 
    \begin{cases}
    q_{i,j} & \text{if }i \geq 1,\ j \geq 1
    \\
    \mathbbm{1}\{j = 0\} & \text{if }i = 0
    \\
    \sum_{ j \in [n_\text{min}], m_j \notin G }q_{i,j}& \text{if }i \geq 1,\ j = 0.
    \end{cases}
\end{align*}
and $q^\dagger_0 = 1,\ q^\dagger_i = q_i\ \forall i \geq 1.$
Next, fix some $i$ with $m_i \in G$ and $x \in \Omega_i$. Define a sequence of random variables $ (S^\dagger_k)_{k \geq 0}, (W^\dagger_k)_{k \geq 0}$ such that $S^\dagger_k = 0$ and $W_0 = \pi_G(m_i), W^\dagger_0 = \boldsymbol{\dagger}\mathbbm{1}\{ W_0 \notin G^\text{large} \} + W_0\mathbbm{1}\{ W_0 \in G^\text{large} \}$ (see the definition of random mapping $\pi_G$  in \cref{def mapping pi G}) and (for all $k \geq 0$ and $i,j$ with $m_{j},m_l \in \bar{G}$)
\begin{align}
    & \mathbb{P}\Big( W^\dagger_{k+1} = m_l,\ S^\dagger_{k+1} > t   \ \Big|\ W^\dagger_k = m_j,\ (W^\dagger_l)_{l = 0}^{k-1},\ (S^\dagger_l)_{l = 0}^{k} \Big)
    \\
    = & \mathbb{P}\Big( W^\dagger_{k+1} = m_l,\ S^\dagger_{k+1} > t   \ \Big|\ W^\dagger_k = m_j \Big) = \exp(-q^\dagger_j t)\frac{ q^\dagger_{j,l}  }{ q^\dagger_j }\ \forall t > 0 
\end{align}
Then it is easy to see that $Y^\dagger(\pi_G(m_i))$ defined in \cref{def process Y killed}  is a $\big( (S^\dagger_k)_{k \geq 0}, (W^\dagger_k)_{k \geq 0}  \big)$ jump process. In particular, from at any state that is not $\boldsymbol{\dagger}$ (namely, any $m_j$ with $m_j \in G^\text{large}$), the probability that $Y^\dagger$ moves to $\boldsymbol{\dagger}$ in the next transition is equal to the chance that, starting from the same state, $Y$ moves to a state that is not in $G$. Once entering $m_0 = \boldsymbol{\dagger}$, the process $Y^\dagger$ will only make dummy jumps (with interarrival times being iid Exp(1)): indeed, we have $q^\dagger_0 = q^\dagger_{0,0} = 1$ and $q^\dagger_{0,j} = 0$ for any $j \geq 1$, implying that, given $W^\dagger_k = m_0 = \boldsymbol{\dagger}$, we must have $W^\dagger_{k+1} = m_0 = \boldsymbol{\dagger}$. These dummy jumps ensure that $Y^\dagger$ is stuck at the cemetery state $\boldsymbol{\dagger}$ after visiting it.

Similarly, we can characterize the jump times and locations of the jump process $\hat{X}^{ \dagger, *,\eta,\Delta }$ (for the definition, see \cref{def process X hat X killed}). When there is no ambiguity about the sequences $(\eta_n)_{n \geq 1},(\Delta_n)_{n \geq 1}$, let $\hat{X}^{\dagger,(n)} = \hat{X}^{\dagger,*,\eta_n,\Delta_n}$ and $X^{\dagger,(n)} = X^{*,\eta_n,\Delta_n}$. Also, recall that $\tau_G$ defined in \cref{def tau G X exits G} is the step $n$ when $X^\eta_n$ exits the communication class $G$. Now let $(E_k)_{k \geq 0}$ be a sequence of iid Exp(1) random variables that is also independent of the noises $(Z_k)_{k \geq 1}$ (so they are independent from the SGD iterates $X^\eta_n$). For all $n \geq 1, k \geq 0$, define (see \cref{def tau G sigma G I G 1}-\cref{def tau G sigma G I G 6} and \cref{def tau star sigma star scaled} for definitions of the quantities involved)
\begin{align*}
    \widetilde{S}^{\dagger,(n)}_k & = 
    \begin{cases}
        \sigma^*_k(\eta_n,\Delta_n)\wedge \textbf{T}^*(\tau_G(\eta_n),\eta_n) - \tau^*_{k-1}(\eta_n,\Delta_n) & \text{if }\tau^*_{k-1}(\eta_n,\Delta_n)< \textbf{T}^*(\tau_G(\eta_n),\eta_n)
        \\
        0 & \text{otherwise}
    \end{cases}
    \\
    S^{\dagger,(n)}_k & = 
    \begin{cases}
        \tau^*_k(\eta_n,\Delta_n)\wedge \textbf{T}^*(\tau_G(\eta_n),\eta_n)  - \tau^*_{k-1}(\eta_n,\Delta_n)& \text{if }\tau^*_{k-1}(\eta_n,\Delta_n)< \textbf{T}^*(\tau_G(\eta_n),\eta_n)
        \\
        E_k & \text{otherwise}
    \end{cases}
    \\
     \widetilde{W}^{\dagger,(n)}_k & = 
     \begin{cases}
     m_{ \widetilde{I}^G_k(\eta_n,\Delta_n) } & \text{if }\sigma^*_{k}(\eta_n,\Delta_n)< \textbf{T}^*(\tau_G(\eta_n),\eta_n)
     \\
     \boldsymbol{\dagger} & \text{otherwise}
     \end{cases}
     \\
    W^{\dagger,(n)}_k & =
    \begin{cases}
     m_{ I^G_k(\eta_n,\Delta_n) } & \text{if }\tau^*_{k}(\eta_n,\Delta_n)< \textbf{T}^*(\tau_G(\eta_n),\eta_n)
     \\
     \boldsymbol{\dagger} & \text{otherwise}
     \end{cases}
\end{align*}
with the convention that $\tau^*_{-1} = 0$. Note that $\textbf{T}^*(\tau_G(\eta_n),\eta_n)$ is the scaled timestamp for $X^{(n)} = X^{*,\eta}$ corresponding to $\tau_G(\eta_n)$, hence $\textbf{T}^*(\tau_G(\eta_n),\eta_n) = \min\{t \geq 0: X^{(n)} \notin \bigcup_{j:\ m_j \in G}\Omega_j\}$. One can see that $\hat{X}^{\dagger,(n)}$ is a $\big( (S^{\dagger,(n)}_k)_{k \geq 0},(W^{\dagger,(n)}_k)_{k \geq 0} \big)$ jump process. The next lemma is similar to Lemma \ref{lemma preparation weak convergence to markov chain absorbing case} and discusses the convergence of the jump times and locations of $\hat{X}^{\dagger,(n)}$ on a communication class $G$ in the transient case.

\begin{lemma} \label{lemma preparation weak convergence to markov chain trainsient case}
Assume that the communication class $G$ is transient. Given any $m_i \in G$, $x \in \Omega_i$, finitely many real numbers $(t_l)_{l = 1}^{k^\prime}$ such that $0<t_1<t_2<\cdots<t_{k^\prime}$, and a sequence of strictly positive real numbers $(\eta_n)_{n \geq 1}$ with $\lim_{n \rightarrow 0}\eta_n = 0$, there exists a sequence of strictly positive real numbers $(\Delta_n)_{n \geq 1}$ with $\lim_n \Delta_n = 0$ such that
\begin{itemize}
    \item Under $\mathbb{P}_x$ (so $X^\eta_0 = x$), as $n$ tends to $\infty$,
    \begin{align}
    (S^{\dagger,(n)}_0,W^{\dagger,(n)}_0,S^{\dagger,(n)}_1,W^{\dagger,(n)}_1,S^{\dagger,(n)}_2,W^{\dagger,(n)}_2,\cdots ) \Rightarrow  (S^\dagger_0,W^\dagger_0,S^\dagger_1,W^\dagger_1,S^\dagger_2,W^\dagger_2,\cdots ) \label{lemma mark process converge to markov chain transient case goal 1}
\end{align}

\item For any $l \in [k^\prime]$,
    \begin{align}
        \lim_{n \rightarrow \infty}\mathbb{P}_x\Big( X^{\dagger,(n)}_{t_l} \notin \bigcup_{j:\ m_j \in G^\text{large}}B(m_j,\Delta_n),\ X^{\dagger,(n)}_{s} \in \bigcup_{j:\ m_j \in G}\Omega_j\ \forall s \in [0,t_{l}] \Big) = 0 \label{lemma mark process converge to markov chain transient case goal 2}
    \end{align}
\end{itemize}

\end{lemma}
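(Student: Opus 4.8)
The plan is to follow the template of the proof of Lemma~\ref{lemma preparation weak convergence to markov chain absorbing case}, adapting the bookkeeping to the cemetery state $\boldsymbol{\dagger}$. Fix $\theta\in(0,(\alpha-1)/2)$ together with the data $m_i\in G$, $x\in\Omega_i$ and $0<t_1<\cdots<t_{k^\prime}$ from the statement. Exactly as in the absorbing case, I would build two decreasing null sequences $(\bar{\boldsymbol{\Delta}}(j))_{j\ge 0}$ and $(\bar{\boldsymbol{\eta}}(j))_{j\ge 0}$ so that, whenever $\eta\le\bar{\boldsymbol{\eta}}(j)$ and $\Delta=\bar{\boldsymbol{\Delta}}(j)$, the following approximations hold with error at most $1/2^j$: (i) by Lemma~\ref{lemma return to local minimum quickly rescaled version}, $X^\eta$ returns quickly to $B(m_i,2\Delta)$ without leaving $\Omega_i$, which controls $\widetilde{S}^{\dagger,(n)}_0$ and forces $\widetilde{W}^{\dagger,(n)}_0=m_i$; (ii) by Lemma~\ref{lemma sharp basin process absorbing} (whose definition \eqref{def T G} of $T_G$ and whose statement already cover the transient case), for $m_{i_1}\in G^\text{small}$ one has $T_G\le\eta^\theta/\lambda_G(\eta)$ with high probability and $X^\eta_{T_G}\in B(m_{i_2},2\Delta)$ with probability $\approx p_{i_1,i_2}$, where now $m_{i_2}$ ranges over indices with $m_{i_2}\notin G^\text{small}$, i.e. $m_{i_2}\in G^\text{large}$ \emph{or} $m_{i_2}\notin G$; (iii) by Proposition~\ref{prop first exit and return time i}, from $m_{i_1}\in G^\text{large}$ the scaled first exit time converges to an $\text{Exp}(q_{i_1})$ variable with destination $m_{i_2}$ selected with limiting probability $\nu_{i_1,i_2}/q_{i_1}$, again possibly with $m_{i_2}\notin G$; (iv) by Lemma~\ref{lemma geom bound on transition count transient case}, the number of inter-minimum transitions before the iterates reach $G^\text{large}$ or leave $G$ is stochastically dominated by $n_\text{min}\cdot\text{Geom}(p^*)$; and (v) by Lemma~\ref{lemma dynamic scaled X stays mostly at local minima}, $X^{*,\eta}_{t_l}$ lies in $\bigcup_{j:\,m_j\in G^\text{large}}B(m_j,\bar{\boldsymbol{\Delta}}(j))$ on the event that $X^{*,\eta}$ has not left $\bigcup_{j:\,m_j\in G}\Omega_j$ by time $t_{k^\prime}$. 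Setting $\textbf{J}(n)=0\vee\max\{j:\bar{\boldsymbol{\eta}}(j)\ge\eta_n\}$ and $\Delta_n=\bar{\boldsymbol{\Delta}}(\textbf{J}(n))$, one has $\textbf{J}(n)\to\infty$ and $\Delta_n\to 0$, and item (v) immediately delivers \eqref{lemma mark process converge to markov chain transient case goal 2}.

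For the weak-convergence claim \eqref{lemma mark process converge to markov chain transient case goal 1}, I would proceed finite-dimensionally: fix $K\in\mathbb{N}$ and show $(S^{\dagger,(n)}_0,W^{\dagger,(n)}_0,\dots,S^{\dagger,(n)}_K,W^{\dagger,(n)}_K)\Rightarrow(S^{\dagger}_0,W^{\dagger}_0,\dots,S^{\dagger}_K,W^{\dagger}_K)$ by computing joint probabilities of the form $\mathbb{P}_x(S^{\dagger,(n)}_0<s_0,\ W^{\dagger,(n)}_0=w_0,\ S^{\dagger,(n)}_k>s_k,\ W^{\dagger,(n)}_k=w_k\ \forall k\in[K])$ for $w_k\in G^\text{large}\cup\{\boldsymbol{\dagger}\}$. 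Conditioning successively on $W^{\dagger,(n)}_k$ and applying the strong Markov property at the stopping times $\tau^*_k$, there are three regimes. When $W^{\dagger,(n)}_k=m_{i_1}\in G^\text{large}$ and the process has not yet been killed, the next transition decomposes into a direct move ($\widetilde{W}_{k+1}\in G^\text{large}$, no extra holding time) plus, for each intermediate $m_{i_3}\in G^\text{small}$, a move of scaled rate $\nu_{i_1,i_3}/q_{i_1}$ followed by a fast resolution to $G^\text{large}\cup G^c$ governed by $p_{i_3,\cdot}$ --- this is the same decomposition used around \eqref{proof mark process convergence ineq 5} and its lower-bound counterpart in the absorbing proof, except that the target index may now lie outside $G$; summing the contributions and sending the auxiliary error $\epsilon\downarrow0$ reproduces $\exp(-q^\dagger_{i_1}u)\,q^\dagger_{i_1,j}/q^\dagger_{i_1}$, where $q^\dagger_{i_1,0}=\sum_{l:\,m_l\notin G}q_{i_1,l}$ accounts exactly for the transition into $\boldsymbol{\dagger}$. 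When a transition lands outside $G$, by construction $\widetilde{W}^{\dagger,(n)}_k=W^{\dagger,(n)}_k=\boldsymbol{\dagger}$ and the running clocks are frozen. Finally, once $W^{\dagger,(n)}_k=\boldsymbol{\dagger}$, one has $\tau^*_k\ge\textbf{T}^*(\tau_G(\eta_n),\eta_n)$, so $S^{\dagger,(n)}_{k+1}=E_{k+1}\sim\text{Exp}(1)$ and $W^{\dagger,(n)}_{k+1}=\boldsymbol{\dagger}$ deterministically, matching precisely the dummy jumps of $Y^\dagger$ (recall $q^\dagger_0=q^\dagger_{0,0}=1$ and $q^\dagger_{0,j}=0$ for $j\ge1$). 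Combining these regimes, and using item (i) for $\widetilde{S}^{\dagger,(n)}_0$ and $\widetilde{W}^{\dagger,(n)}_0$, the joint probability converges to $\mathbb{P}(\pi_G(m_i)=w_0)\prod_{k=1}^{K}\exp(-q^\dagger_{i_{k-1}}s_k)\,q^\dagger_{i_{k-1},i_k}/q^\dagger_{i_{k-1}}$, which is the corresponding quantity for $Y^\dagger(\pi_G(m_i))$; this (together with the analogous facts for the label-only marginals) yields \eqref{lemma mark process converge to markov chain transient case goal 1}.

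The hard part is the careful verification that the surgery defining $(\widetilde{S}^{\dagger,(n)}_k,S^{\dagger,(n)}_k,\widetilde{W}^{\dagger,(n)}_k,W^{\dagger,(n)}_k)$ --- freezing the SGD-derived clocks at $\textbf{T}^*(\tau_G(\eta_n),\eta_n)$ and splicing in the independent $\text{Exp}(1)$ variables $(E_k)$ --- produces a jump process whose law genuinely matches that of $Y^\dagger$ in \eqref{def process Y killed}: one must check that the post-killing increments are independent of the entire pre-killing trajectory (which is exactly why $(E_k)$ is taken independent of the noise $(Z_k)$), that the single-transition escape probability out of $G$ from $m_i\in G^\text{large}$ equals $q^\dagger_{i,0}/q^\dagger_i$, and that the algebra of the constants $q^\dagger_{i,j}$ correctly reduces to $q_{i,j}$ on the interior while aggregating the exterior contributions --- this requires combining the ``destination outside $G$'' parts of Lemma~\ref{lemma sharp basin process absorbing} and Proposition~\ref{prop first exit and return time i} in the same manner the ``destination in $G^\text{large}$'' parts were combined in Lemma~\ref{lemma preparation weak convergence to markov chain absorbing case}. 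Once \eqref{lemma mark process converge to markov chain transient case goal 1} and \eqref{lemma mark process converge to markov chain transient case goal 2} are in hand, the hypotheses of Lemma~\ref{lemma weak convergence of jump process} hold (the limiting holding times are absolutely continuous, so their partial sums have no atoms, and those sums diverge since the rates $q^\dagger_i$ are bounded), which is what the subsequent Lemma~\ref{lemma key converge to markov chain and always at wide basin transient case} uses to upgrade to finite-dimensional convergence of $\hat{X}^{\dagger,*,\eta_n,\Delta_n}$; but the present lemma stops at the jump-data statements.
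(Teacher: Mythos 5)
Your proposal matches the paper's actual proof essentially step for step: the same construction of decreasing null sequences $(\bar{\boldsymbol{\Delta}}(j))$ and $(\bar{\boldsymbol{\eta}}(j))$ from the same input lemmas, the same choice $\Delta_n = \bar{\boldsymbol{\Delta}}(\textbf{J}(n))$, the same reduction of \eqref{lemma mark process converge to markov chain transient case goal 2} to the ``stays mostly at large basins'' estimate via the identity $X^{\dagger,(n)}_t = X^{(n)}_t$ on the non-exit event, and the same finite-dimensional computation for \eqref{lemma mark process converge to markov chain transient case goal 1} using the small-basin decomposition, the frozen-clock/Exp$(1)$ splice for post-killing jumps, and the aggregated constants $q^\dagger_{i,0}=\sum_{l:m_l\notin G}q_{i,l}$. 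The only cosmetic difference is that you list the transient-case geometric-transition-count bound (iv) as a separate condition built into the sequences; the paper omits it there (it enters only implicitly, via Lemma~\ref{lemma sharp basin process absorbing}), but including it is harmless.
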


\begin{proof}
Let
\begin{align*}
    \nu_{j,k} & \delequal{} \mu_{j}(E_{j,k})\ \forall j,k \geq 1,\ j \neq k
    \\
    p_{j,\dagger} & \delequal{} \sum_{\widetilde{j}:\ m_{\widetilde{j}} \notin G }p_{j,\widetilde{j}}\ \ \ \forall m_j \in G^\text{small}
    \\
    q_{j,\dagger} & \delequal{} \sum_{k: m_k \notin G}\nu_{j,k} + \sum_{k:\ m_k \in G^\text{small}}\nu_{j,k}p_{k,\dagger}\ \ \ \forall m_j \in G^\text{large}.
\end{align*}

In order to specify our choice of $(\Delta_n)_{n \geq 1}$, we consider a construction of sequences $(\bar{\boldsymbol{\Delta}}(j))_{j \geq 0},(\bar{ \boldsymbol{\eta} }(j))_{j \geq 0}$ as follows. Fix some $\theta \in (0,\alpha - 1)/2)$. Let $\bar{\boldsymbol{\Delta}}(0) = \bar{\boldsymbol{\eta}}(0) = 1$. One can see the existence of some $(\bar{\boldsymbol{\Delta}}(j))_{j \geq 1},(\bar{ \boldsymbol{\eta} }(j))_{j \geq 1}$ such that
\begin{itemize}
    \item $\bar{\boldsymbol{\Delta}}(j) \in \big(0,\bar{\boldsymbol{\Delta}}(j-1)/2\big],\ \bar{\boldsymbol{\eta}}(j) \in \big(0,\bar{\boldsymbol{\eta}}(j-1)/2\big]$ for all $j \geq 1$;
    
    \item (Due to Lemma \ref{lemma return to local minimum quickly rescaled version}) for any $j \geq 1$, $\eta \in (0,\bar{\boldsymbol{\eta}}(j)]$, (remember that $x$ and $i$ are the fixed constants prescribed in the description of the lemma)
    \begin{align*}
        \mathbb{P}_x\Big( \sigma^*_0\big(\eta, \bar{\boldsymbol{\Delta}}(j) \big) < \eta^\theta,\ \widetilde{I}^G_0\big(\eta, \bar{\boldsymbol{\Delta}}(j)\big) = i \Big) > 1 - \frac{1}{2^j}.
    \end{align*}
    For definitions of $\sigma^G_k, \tau^G_k, I^G_k, \widetilde{I}^G_k$, see \cref{def tau G sigma G I G 1}-\cref{def tau G sigma G I G 6}.
    
    \item (Due to Lemma \ref{lemma sharp basin process absorbing}) for any $j \geq 1$, $\eta \in (0,\bar{\boldsymbol{\eta}}(j)]$,
    \begin{align*}
        & \bigg|\mathbb{P}_x\Big( \tau^*_k\big(\eta, \bar{\boldsymbol{\Delta}}(j)\big) - \sigma^*_k\big(\eta, \bar{\boldsymbol{\Delta}}(j)\big) < \eta^\theta,\ I^G_k\big(\eta, \bar{\boldsymbol{\Delta}}(j)\big) = i_2    \ \Big|\ \widetilde{I}^G_k\big(\eta, \bar{\boldsymbol{\Delta}}(j)\big) = i_1 \Big) - p_{i_1,i_2}\bigg|
        \\
        & < 1/2^j
    \end{align*}
    uniformly for all $k \geq 0$ and all $m_{i_1} \in G^\text{small}, m_{i_2}\notin G^\text{small}$. Also, by definition of $\sigma^*$ and $\tau^*$, we must have
    \begin{align*}
        \mathbb{P}_x\Big( \tau^*_k\big(\eta, \bar{\boldsymbol{\Delta}}(j)\big) - \sigma^*_k\big(\eta, \bar{\boldsymbol{\Delta}}(j)\big) = 0,\ I^G_k\big(\eta, \bar{\boldsymbol{\Delta}}(j)\big) = i_1    \ \Big|\ \widetilde{I}^G_k\big(\eta, \bar{\boldsymbol{\Delta}}(j)\big) = i_1 \Big) = 1
    \end{align*}
    for all $k \geq 0$ and $m_{i_1} \in G^\text{large}$.
    \item (Due to Proposition \ref{prop first exit and return time i}) for any $j \geq 1$, $\eta \in (0,\bar{\boldsymbol{\eta}}(j)]$,
    \begin{align*}
        & -\frac{1}{2^j} + \exp\big( -(1+\frac{1}{2^j})q_{i_1}u \big)\frac{ \nu_{i_1,i_2} - \frac{1}{2^j} }{q_{i_1} }
        \\
        \leq & \mathbb{P}_x\Big( \sigma^*_{k+1}\big(\eta, \bar{\boldsymbol{\Delta}}(j)\big) - \tau^*_k\big(\eta, \bar{\boldsymbol{\Delta}}(j)\big) > u,\ \widetilde{I}^G_{k+1} = i_2  \ \Big|\ I^G_k\big(\eta, \bar{\boldsymbol{\Delta}}(j)\big) = i_1  \Big)
        \\
        \leq & \frac{1}{2^j} + \exp\big( -(1-\frac{1}{2^j})q_{i_1}u \big)\frac{ \nu_{i_1,i_2} + \frac{1}{2^j} }{q_{i_1} }
    \end{align*}
    uniformly for all $k \geq 1$, all $u > 1/2^j$, and all $m_{i_1} \in G^\text{large}, m_{i_2} \in G$.

    \item (Due to Lemma \ref{lemma dynamic scaled X stays mostly at local minima}) for any $j \geq 1$, for any $j \geq 1$, $\eta \in (0,\bar{\boldsymbol{\eta}}(j)]$,
    \begin{align}
        \mathbb{P}_x\Big( X^{(n)}_{t_k} \notin \bigcup_{j:\ m_j \in G^\text{large}}B(m_j,\bar{\boldsymbol{\Delta}}(j)),\ X^{(n)}_{s} \in \bigcup_{j:\ m_j \in G}\Omega_j\ \forall s \in [0,t_{k}] \Big) < 1/2^j \label{proof mark process converge to markov chain transient case goal 2}
    \end{align}
    uniformly for all $k \in [k^\prime]$.
\end{itemize}

Fix such $(\bar{\boldsymbol{\Delta}}(j))_{j \geq 0},(\bar{ \boldsymbol{\eta} }(j))_{j \geq 0}$. Define a function $\textbf{J}(\cdot): \mathbb{N} \mapsto \mathbb{N}$ as
\begin{align*}
    \textbf{J}(n) = 0 \vee \max\{ j \geq 0:\ \bar{\boldsymbol{\eta}}(j) \geq \eta_n  \}
\end{align*}
with the convention that $\max \emptyset = -\infty$. Lastly, let
\begin{align*}
    \Delta_n = \bar{\boldsymbol{\Delta}}(\textbf{J}(n))\ \forall n \geq 1.
\end{align*}
Note that, due to $\lim_n \eta_n = 0$, we have $\lim_n \textbf{J}(n) = \infty$, hence $\lim_n \Delta_n = 0$. Besides, since $X^{\dagger,(n)}_t = X^{(n)}_t$ given $X^{(n)}_s \in \bigcup_{j:\ m_j \in G}\Omega_j$ for all $s\in [0,t]$, by combining $\lim_n \textbf{J}(n) = \infty$ with \cref{proof mark process converge to markov chain transient case goal 2} we obtain \cref{lemma mark process converge to markov chain transient case goal 2}.

Now it remains to prove \cref{lemma mark process converge to markov chain transient case goal 1}. To this end, it suffices to show that, for any positive integer $K$, we have $(S^{\dagger,(n)}_0,W^{\dagger,(n)}_0,\cdots,S^{\dagger,(n)}_K,W^{\dagger,(n)}_K )$ converges in distribution $(S^\dagger_0,W^\dagger_0,\cdots,S^\dagger_K,W^\dagger_K )$ as $n$ tends to infinity. In particular, due to introduction of the dummy jumps, we know that for any $k$ with $\tau^*_k(\eta_n,\Delta_n) \geq \tau_G(\eta_n)$ (in other words, $\hat{X}^{\dagger,(n)}$ has reached state $\boldsymbol{\dagger}$ within the first $k$ jumps) we have $S^{\dagger,(n)}_{k+1} \sim \text{Exp}(1)$ and $W^{\dagger,(n)}_{k+1} \equiv \boldsymbol{\dagger}$. Similarly, for any $k$ with $S^{\dagger}_0 + \cdots + S^{\dagger}_k \leq \tau^Y_G$, we have $S^\dagger_{k+1} \sim \text{Exp}(1)$ and $W^\dagger_{k+1} \equiv \boldsymbol{\dagger}$. Therefore, it suffices to show that, for any positive integer $K$, any series of strictly positive real numbers $(s_k)_{k = 0}^K$, any sequence $( w_k )_{k = 0}^K \in \big(\bar{G}\big)^{K+1}$ such that $w_j \neq \boldsymbol{\dagger}$ for any $j < K$, indices $i_k$ such that $w_k = m_{i_k}$ for each $k$, we have
\begin{align}
    & \lim_{n \rightarrow \infty}\mathbb{P}_x\Big( S^{\dagger,(n)}_0 < t_0, W^{\dagger,(n)}_0 = w_0;\ S^{\dagger,(n)}_k > s_k\ \text{and }W^{\dagger,(n)}_k = w_k\ \forall k \in [K] \Big) \nonumber
    \\
    = & \mathbb{P}\Big( S^\dagger_0 < s_0, W^\dagger_0 = w_0;\ S^\dagger_k > s_k\ \text{and }W^\dagger_k = w_k\ \forall k \in [K] \Big) \label{proof mark process converge to markov chain transient case goal 3}
\end{align}

Fix some $(s_k)_{k = 0}^K$, $( w_k )_{k = 0}^K \in \big(\bar{G}\big)^{K+1}$, and indices $(i_k)_{k =1}^K$ satisfying the conditions above. Besides, arbitrarily choose some $\epsilon > 0$ so that $\epsilon < \min_{k = 0,\cdots,K}s_k$. To proceed, we start by translating the inequalities established above under the new system of notations.
\begin{itemize}
    \item First of all, for all $n$ sufficiently large, (remember that $x$ and $i$ are prescribed constants in the description of this lemma)
    \begin{align}
    \mathbb{P}_x\Big( \widetilde{S}^{\dagger,(n)}_0 < \eta_n^\theta,\ \widetilde{W}^{\dagger,(n)}_0 = m_i \Big) > 1 - \epsilon.     \label{proof mark process convergence transient case ineq 0 }
    \end{align}
    
    \item For all $k \geq 0$ and all $m_{i_1} \in G^\text{small},m_{i_2} \in G^\text{large}$, it holds for all $n$ sufficiently large that
    \begin{align}
        \bigg|\mathbb{P}_x\Big( S^{\dagger,(n)}_k - \widetilde{S}^{\dagger,(n)}_k < \eta_n^\theta,\ {W}^{\dagger,(n)}_k = m_{i_2}    \ \Big|\ \widetilde{W}^{\dagger,(n)}_k = m_{i_1} \Big) - p_{i_1,i_2}\bigg| < \epsilon. \label{proof mark process convergence transient case ineq 1 }
    \end{align}
    
    \item On the other hand, for all $k \geq 0$ and all $m_{i_1} \in G^\text{small}$, it holds for all $n$ sufficiently large that
    \begin{align}
        & \bigg|\mathbb{P}_x\Big( S^{\dagger,(n)}_k - \widetilde{S}^{\dagger,(n)}_k < \eta_n^\theta,\ {W}^{\dagger,(n)}_k = \boldsymbol{\dagger}    \ \Big|\ \widetilde{W}^{\dagger,(n)}_k = m_{i_1} \Big) - \sum_{ i_2:\ m_{i_2}\notin G } p_{i_1,i_2}\bigg| \nonumber
        \\
        = & \bigg|\mathbb{P}_x\Big( S^{\dagger,(n)}_k - \widetilde{S}^{\dagger,(n)}_k < \eta_n^\theta,\ {W}^{\dagger,(n)}_k = \boldsymbol{\dagger}    \ \Big|\ \widetilde{W}^{\dagger,(n)}_k = m_{i_1} \Big) - p_{i_1,\dagger}\bigg|  < \epsilon. \label{proof mark process convergence transient case ineq 2 }
    \end{align}
    
    \item For all $k \geq 0$ and all $m_{i_1} \in G^\text{large}$, it holds for all $n$ that
    \begin{align}
        \mathbb{P}_x\Big( S^{\dagger,(n)}_k - \widetilde{S}^{\dagger,(n)}_k = 0,\ {W}^{\dagger,(n)}_k = m_{i_1}    \ \Big|\ \widetilde{W}^{\dagger,(n)}_k = m_{i_1} \Big) = 1. \label{proof mark process convergence transient case ineq 3 }
    \end{align}
    
    \item For all $k \geq 0$, all $m_{i_1} \in G^\text{large}, m_{i_2} \in G$ and all $u > \epsilon$, the following claim holds for all $n$ sufficiently large:
    \begin{align}
        & - \epsilon + \exp\big( -(1+\epsilon)q_{i_1}u \big)\frac{ \nu_{i_1,i_2} - \epsilon }{q_{i_1} } \nonumber
        \\
        \leq & \mathbb{P}_x\Big( \widetilde{S}^{\dagger,(n)}_{k+1} - S^{\dagger,(n)}_k > u,\ \widetilde{W}^{\dagger,(n)}_{k+1} = m_{i_2} \ \Big|\ W^{\dagger,(n)}_k = m_{i_1}  \Big)  \nonumber
        \\
        \leq & \mathbb{P}_x\Big( \widetilde{S}^{\dagger,(n)}_{k+1} - S^{\dagger,(n)}_k > u - \eta_n^\theta,\ \widetilde{W}^{\dagger,(n)}_{k+1} = m_{i_2} \ \Big|\ W^{\dagger,(n)}_k = m_{i_1}  \Big)  \nonumber 
        \\
        \leq & \epsilon + \exp\big( -(1-\epsilon)q_{i_1}u \big)\frac{ \nu_{i_1,i_2} + \epsilon }{q_{i_1} } \label{proof mark process convergence transient case ineq 4 }
    \end{align}
    
    \item On the other hand, for all $k \geq 0$, all $m_{i_1} \in G^\text{large}$, the following claim holds for all $n$ sufficiently large:
    \begin{align}
   & - \epsilon + \exp\big( -(1+\epsilon)q_{i_1}u \big)\frac{ -\epsilon+ \sum_{i_2:\ m_{i_2}\notin G} \nu_{i_1,i_2}}{q_{i_1} } \nonumber
        \\
        \leq & \mathbb{P}_x\Big( \widetilde{S}^{\dagger,(n)}_{k+1} - S^{\dagger,(n)}_k > u,\ \widetilde{W}^{\dagger,(n)}_{k+1} = \boldsymbol{\dagger} \ \Big|\ W^{\dagger,(n)}_k = m_{i_1}  \Big)  \nonumber
        \\
        \leq & \mathbb{P}_x\Big( \widetilde{S}^{\dagger,(n)}_{k+1} - S^{\dagger,(n)}_k > u - \eta_n^\theta,\ \widetilde{W}^{\dagger,(n)}_{k+1} = \boldsymbol{\dagger}\ \Big|\ W^{\dagger,(n)}_k = m_{i_1}  \Big)  \nonumber 
        \\
        \leq & \epsilon + \exp\big( -(1-\epsilon)q_{i_1}u \big)\frac{ \epsilon+ \sum_{i_2:\ m_{i_2}\notin G} \nu_{i_1,i_2} }{q_{i_1} } \label{proof mark process convergence transient case ineq 5 }
    \end{align}
    
    \item Here is one implication of \cref{proof mark process convergence transient case ineq 1 }\cref{proof mark process convergence transient case ineq 2 }. Since $|G|\leq n_\text{min}$, we have (when $n$ is sufficiently large)
    \begin{align}
       \mathbb{P}_x\Big( S^{(n)}_k - \widetilde{S}^{(n)}_k \geq \eta_n^\theta    \ \Big|\ \widetilde{W}^{(n)}_k = m_{i_1} \Big) < n_\text{min}\cdot\epsilon  \label{proof mark process convergence transient case ineq 6 }
    \end{align}
    for all $k \geq 0$ and $m_{i_1} \in G^\text{small}$.
    
    \item Note that for any $m_{i_1},m_{i_2} \in G^\text{large}$ and any $k \geq 0$
    \begin{align*}
        & \mathbb{P}_x\Big( S^{\dagger,(n)}_{k+1} - S^{\dagger,(n)}_k > u,\ W^{\dagger,(n)}_{k+1} = m_{i_2} \ \Big|\  W^{\dagger,(n)}_k = m_{i_1}\Big)
        \\
        = & \mathbbm{1}\{i_2 \neq i_1\}\mathbb{P}_x\Big( \widetilde{S}^{\dagger,(n)}_{k+1} - S^{\dagger,(n)}_k > u,\ \widetilde{W}^{\dagger,(n)}_{k+1} = m_{i_2} \ \Big|\  W^{\dagger,(n)}_k = m_{i_1}\Big)
        \\
        + & \sum_{ i_3:\ m_{i_3} \in G^\text{small} }\int_{ s > 0 }\mathbb{P}_x\Big( S^{\dagger,(n)}_{k+1} - \widetilde{S}^{\dagger,(n)}_{k+1}  \geq (u-s)\vee 0 ,\ \widetilde{W}^{\dagger,(n)}_{k+1} = m_{i_2} \ \Big|\  \widetilde{W}^{\dagger,(n)}_k = m_{i_3}\Big)
        \\
        &\ \ \ \ \ \ \ \ \ \ \ \ \cdot \mathbb{P}_x\Big( \widetilde{S}^{\dagger,(n)}_{k+1} - S^{\dagger,(n)}_k = ds ,\ \widetilde{W}^{\dagger,(n)}_{k+1} = m_{i_3} \ \Big|\  W^{\dagger,(n)}_k = m_{i_1}\Big).
    \end{align*}
    Fix some $i_3$ with $m_{i_3} \in G^\text{small}$. Due to \cref{proof mark process convergence transient case ineq 6 }
    \begin{align*}
        & \int_{ s \in (0,u - \eta^\theta_n] }\mathbb{P}_x\Big( S^{\dagger,(n)}_{k+1} - \widetilde{S}^{\dagger,(n)}_{k+1}  \geq u - s ,\ \widetilde{W}^{\dagger,(n)}_{k+1} = m_{i_2} \ \Big|\  \widetilde{W}^{\dagger,(n)}_k = m_{i_3}\Big)
        \\
        & \ \ \ \ \ \ \ \ \ \ \cdot \mathbb{P}_x\Big( \widetilde{S}^{\dagger,(n)}_{k+1} - S^{\dagger,(n)}_k = ds ,\ \widetilde{W}^{\dagger,(n)}_{k+1} = m_{i_3} \ \Big|\  W^{\dagger,(n)}_k = m_{i_1}\Big)
        \\
        \leq & n_\text{min}\epsilon.
    \end{align*}
    Meanwhile, by considering the integral on $(u - \eta_n^\theta,\infty)$, we get
    \begin{align*}
        & \int_{ s \in (u - \eta^\theta_n,\infty) }\mathbb{P}_x\Big( S^{\dagger,(n)}_{k+1} - \widetilde{S}^{\dagger,(n)}_{k+1}  \geq (u - s)\vee 0 ,\ \widetilde{W}^{\dagger,(n)}_{k+1} = m_{i_2} \ \Big|\  \widetilde{W}^{\dagger,(n)}_k = m_{i_3}\Big)
        \\
        & \ \ \ \ \ \ \ \ \ \ \cdot \mathbb{P}_x\Big( \widetilde{S}^{\dagger,(n)}_{k+1} - S^{\dagger,(n)}_k = ds ,\ \widetilde{W}^{\dagger,(n)}_{k+1} = m_{i_3} \ \Big|\  W^{\dagger,(n)}_k = m_{i_1}\Big)
        \\
        \geq & \int_{ s \in (u,\infty) }\mathbb{P}_x\Big( \widetilde{W}^{\dagger,(n)}_{k+1} = m_{i_2} \ \Big|\  \widetilde{W}^{\dagger,(n)}_k = m_{i_3}\Big)
        \\
        & \ \ \ \ \ \ \ \ \ \ \cdot \mathbb{P}_x\Big( \widetilde{S}^{\dagger,(n)}_{k+1} - S^{\dagger,(n)}_k = ds ,\ \widetilde{W}^{\dagger,(n)}_{k+1} = m_{i_3} \ \Big|\  W^{\dagger,(n)}_k = m_{i_1}\Big)
        \\
        \geq & (p_{i_3,i_2} - \epsilon)\Big( - \epsilon + \exp\big( -(1+\epsilon)q_{i_1}u \big)\frac{ \nu_{i_1,i_3} - \epsilon }{q_{i_1} } \Big)
    \end{align*}
    due to \cref{proof mark process convergence transient case ineq 1 } and \cref{proof mark process convergence transient case ineq 4 }. As for the upper bound,
    \begin{align*}
        & \int_{ s \in (u - \eta^\theta_n,\infty) }\mathbb{P}_x\Big( S^{\dagger,(n)}_{k+1} - \widetilde{S}^{\dagger,(n)}_{k+1}  \geq (u - s)\vee 0 ,\ \widetilde{W}^{\dagger,(n)}_{k+1} = m_{i_2} \ \Big|\  \widetilde{W}^{\dagger,(n)}_k = m_{i_3}\Big)
        \\ 
        & \ \ \ \ \ \ \ \ \ \ \cdot \mathbb{P}_x\Big( \widetilde{S}^{\dagger,(n)}_{k+1} - S^{\dagger,(n)}_k = ds ,\ \widetilde{W}^{\dagger,(n)}_{k+1} = m_{i_3} \ \Big|\  W^{\dagger,(n)}_k = m_{i_1}\Big)
        \\
        \leq & (n_\text{min}\epsilon + p_{i_3,i_2} + \epsilon)\Big(  \epsilon + \exp\big( -(1-\epsilon)q_{i_1}u \big)\frac{ \nu_{i_1,i_3} + \epsilon }{q_{i_1} }  \Big)
    \end{align*}
    due to  \cref{proof mark process convergence transient case ineq 1 }, \cref{proof mark process convergence transient case ineq 4 } and \cref{proof mark process convergence transient case ineq 6 }.
    
    \item Therefore, for any $m_{i_1},m_{i_2} \in G^\text{large}$ and any $k \geq 0$,
    \begin{align}
        & \mathbb{P}_x\Big( S^{\dagger,(n)}_{k+1} - S^{\dagger,(n)}_k > u,\ W^{\dagger,(n)}_{k+1} = m_{i_2} \ \Big|\  W^{\dagger,(n)}_k = m_{i_1}\Big) \nonumber
        \\
        \leq & g(\epsilon) + \exp\big(-(1-\epsilon)q_{i_1} u\big)\frac{ \mathbbm{1}\{i_2 \neq i_1\}\nu_{i_1,i_2} + \sum_{i_3:\ m_{i_3} \in G^\text{small} }\nu_{i_1,i_3}p_{i_3,i_2} }{ q_{i_1}} \nonumber
        \\
        \leq & g(\epsilon) + \exp\big(-(1-\epsilon)q_{i_1} u\big)\frac{ q_{i_1,i_2} }{ q_{i_1}} \label{proof mark process convergence transient case ineq 7}
    \end{align}
    and
    \begin{align}
        & \mathbb{P}_x\Big( S^{\dagger,(n)}_{k+1} - S^{\dagger,(n)}_k > u,\ W^{\dagger,(n)}_{k+1} = m_{i_2} \ \Big|\  W^{\dagger,(n)}_k = m_{i_1}\Big) \nonumber
        \\
        \geq & -g(\epsilon) + \exp\big(-(1+\epsilon)q_{i_1} u\big)\frac{ \mathbbm{1}\{i_2 \neq i_1\}\nu_{i_1,i_2} + \sum_{i_3:\ m_{i_3} \in G^\text{small} }\nu_{i_1,i_3}p_{i_3,i_2} }{ q_{i_1}} \nonumber
        \\
        \geq & -g(\epsilon) + \exp\big(-(1+\epsilon)q_{i_1} u\big)\frac{ q_{i_1,i_2} }{ q_{i_1}} \label{proof mark process convergence transient case ineq 8}
    \end{align}
    where $q^* = \max_i q_i$ and
    \begin{align*}
        g(\epsilon)\delequal{} & 2\epsilon + \frac{\epsilon}{q^*} + n_\text{min}(1+\epsilon)\epsilon + \epsilon\frac{1+\epsilon}{q^*}n_\text{min}
        \\
        & + n_\text{min}(\epsilon+\frac{\epsilon}{q^*}) + n_\text{min}(n_\text{min}\epsilon + \epsilon + 1)(1 + \frac{1}{q^*})\epsilon.
    \end{align*}
    Note that $\lim_{\epsilon \downarrow 0}g(\epsilon) = 0$.
    
    \item On the other hand, for the case where the marker process $\hat{X}^{\dagger,(n)}$ jumps to the cemetery state $\boldsymbol\dagger$ from some $m_{i_1} \in G^\text{large}$, note that
    \begin{align*}
        & \mathbb{P}_x\Big( S^{\dagger,(n)}_{k+1} - S^{\dagger,(n)}_k > u,\ W^{\dagger,(n)}_{k+1} = \boldsymbol{\dagger} \ \Big|\  W^{\dagger,(n)}_k = m_{i_1}\Big)
        \\
        = & \mathbb{P}_x\Big( \widetilde{S}^{\dagger,(n)}_{k+1} - S^{\dagger,(n)}_k > u,\ \widetilde{W}^{\dagger,(n)}_{k+1} = \boldsymbol{\dagger} \ \Big|\  W^{\dagger,(n)}_k = m_{i_1}\Big)
        \\
        + & \sum_{ i_3:\ m_{i_3} \in G^\text{small} }\int_{ s > 0 }\mathbb{P}_x\Big( S^{\dagger,(n)}_{k+1} - \widetilde{S}^{\dagger,(n)}_{k+1}  \geq (u-s)\vee 0 ,\ \widetilde{W}^{\dagger,(n)}_{k+1} = \boldsymbol{\dagger}\ \Big|\  \widetilde{W}^{\dagger,(n)}_k = m_{i_3}\Big)
        \\
        &\ \ \ \ \ \ \ \ \ \ \ \ \cdot \mathbb{P}_x\Big( \widetilde{S}^{\dagger,(n)}_{k+1} - S^{\dagger,(n)}_k = ds ,\ \widetilde{W}^{\dagger,(n)}_{k+1} = m_{i_3} \ \Big|\  W^{\dagger,(n)}_k = m_{i_1}\Big).
    \end{align*}
    Arguing similarly as we did above by considering the integral on $[0,u-\eta^\theta_n]$ and $(u - \eta^\theta_n,\infty)$ separately, and using \cref{proof mark process convergence transient case ineq 2 } and \cref{proof mark process convergence transient case ineq 5 }, we then get (for all $n$ sufficiently large)
    \begin{align}
        & \mathbb{P}_x\Big( S^{\dagger,(n)}_{k+1} - S^{\dagger,(n)}_k > u,\ W^{\dagger,(n)}_{k+1} = \boldsymbol{\dagger} \ \Big|\  W^{\dagger,(n)}_k = m_{i_1}\Big)\nonumber
        \\
        \leq & g(\epsilon) + \exp\big(-(1-\epsilon)q_{i_1} u\big)\frac{\sum_{i_2: m_{i_2}\notin G}\nu_{i_1,i_2} + \sum_{i_2: m_{i_2}\in G^\text{small}}\nu_{i_1,i_2}p_{i_2,\dagger} }{ q_{i_1}}  \nonumber
        \\
        = & g(\epsilon) + \exp\big(-(1-\epsilon)q_{i_1} u\big)\frac{q_{i_1,\dagger} }{ q_{i_1}} \label{proof mark process convergence transient case ineq 9}
    \end{align}
    and
    \begin{align}
        & \mathbb{P}_x\Big( S^{\dagger,(n)}_{k+1} - S^{\dagger,(n)}_k > u,\ W^{\dagger,(n)}_{k+1} = \boldsymbol{\dagger} \ \Big|\  W^{\dagger,(n)}_k = m_{i_1}\Big) \nonumber
        \\
        \geq & -g(\epsilon) + \exp\big(-(1+\epsilon)q_{i_1} u\big)\frac{\sum_{i_2: m_{i_2}\notin G}\nu_{i_1,i_2} + \sum_{i_2: m_{i_2}\in G^\text{small}}\nu_{i_1,i_2}p_{i_2,\dagger} }{ q_{i_1}} \nonumber
        \\
        = & -g(\epsilon) +\exp\big(-(1+\epsilon)q_{i_1} u\big)\frac{q_{i_1,\dagger} }{ q_{i_1}} \label{proof mark process convergence transient case ineq 10}
    \end{align}
    
\end{itemize}
For simplicity of presentation, we also let $q_{j,0} = q_{j,\dagger}$ and $p_{j,0} = p_{j,\dagger}$. First of all, remember that we have fixed some series of strictly positive real numbers $(s_k)_{k = 0}^K$, some sequence $( w_k )_{k = 0}^K \in \big(\bar{G}\big)^{K+1}$ such that $w_j \neq \boldsymbol{\dagger}$ for any $j < K$, and indices $i_k$ such that $w_k = m_{i_k}$ for each $k$. The definition of the continuous-time Markov chain $Y^\dagger$ implies that
\begin{align*}
    & \mathbb{P}\Big( S_0 < t_0, W_0 = w_0;\ S_k > s_k\ \text{and }W_k = w_k\ \forall k \in [K] \Big)
    \\
    = & \mathbb{P}( \pi_G(m_i) = w_0 )\prod_{k = 1}^K\Big(S_k > s_k\ \text{and }W_k = m_k\ \Big|\  W_{k-1} = w_{k-1}  \Big)
    \\
    = &\Big( \mathbbm{1}\{ m_i \in G^\text{large},\ i_0 = i \} + \mathbbm{1}\{ m_i \in G^\text{small}\}p_{i_0,i_1}  \Big)\cdot\prod_{k = 1}^K\exp( -q_{i_{k-1}}s_k)\frac{ q_{i_{k-1},i_k} }{ q_{i_{k-1}} }.
\end{align*}
On the other hand, using \cref{proof mark process convergence transient case ineq 7}-\cref{proof mark process convergence transient case ineq 10}, we know that for all $n$ sufficiently large,
\begin{align*}
    & \mathbb{P}_x\Big( S^{(n)}_0 < s_0, W^{(n)}_0 = w_0;\ S^{(n)}_k > s_k\ \text{and }W^{(n)}_k = w_k\ \forall k \in [K] \Big)
    \\
    \geq & (1-\epsilon)\Big( \mathbbm{1}\{ m_i \in G^\text{large},\ i_0 = i \} + \mathbbm{1}\{ m_i \in G^\text{small}\}(p_{i_0,i_1} - \epsilon)  \Big)
    \\
    &\ \ \ \ \ \ \ \ \ \ \cdot\prod_{k = 1}^K\Big( -g(\epsilon) + \exp( -(1+\epsilon)q_{i_{k-1}}s_k)\frac{ q_{i_{k-1},i_k} }{ q_{i_{k-1}} } \Big)
\end{align*}
and 
\begin{align*}
    & \mathbb{P}_x\Big( S^{(n)}_0 < s_0, W^{(n)}_0 = w_0;\ S^{(n)}_k > s_k\ \text{and }W^{(n)}_k = m_k\ \forall k \in [K] \Big)
    \\
    \leq & \Big( \mathbbm{1}\{ m_i \in G^\text{large},\ i_0 = i \} + \mathbbm{1}\{ m_i \in G^\text{small}\}(p_{i_0,i_1} + \epsilon)  \Big)
    \\
    &\ \ \ \ \ \ \ \ \ \ \cdot\prod_{k = 1}^K\Big( g(\epsilon) + \exp( -(1 - \epsilon)q_{i_{k-1}}s_k)\frac{ q_{i_{k-1},i_k} }{ q_{i_{k-1}} } \Big).
\end{align*}
The arbitrariness of $\epsilon > 0$ then allows us to establish \cref{proof mark process converge to markov chain transient case goal 3} and conclude the proof.
\end{proof}

Now we are ready to prove Lemma \ref{lemma key converge to markov chain and always at wide basin} and Lemma \ref{lemma key converge to markov chain and always at wide basin transient case}.

\begin{proof}[Proof of Lemma \ref{lemma key converge to markov chain and always at wide basin}]
From Lemma \ref{lemma preparation weak convergence to markov chain absorbing case}, one can see the existence of some $(\Delta_n)_{n \geq 1}$ with $\lim_{n}\Delta_n = 0$ such that \cref{lemma mark process converge to markov chain absorbing case goal 1}-\cref{lemma mark process converge to markov chain absorbing case goal 4} hold. For simplicity of notations, we let $\hat{X}^{(n)} = \hat{X}^{*,\eta_n,\Delta_n }, X^{(n)} = X^{*,\eta_n}$, and let $\bar{t} = t_{k^\prime}$ 

Combine \cref{lemma mark process converge to markov chain absorbing case goal 1} with Lemma \ref{lemma weak convergence of jump process}, and we immediately get \cref{lemma key converge to markov chain and always at wide basin goal 1}. In order to prove \cref{lemma key converge to markov chain and always at wide basin goal 2}, it suffices to show that for any $\epsilon > 0$,
\begin{align*}
   \limsup_n \mathbb{P}_x\Big( X^{(n)}_{t_k} \notin \bigcup_{j:\ m_j \in G^\text{large}}B(m_j,\Delta_n) \Big) \leq 4\epsilon\ \forall k \in [k^\prime].
\end{align*}
Fix $\epsilon > 0$, and observe following bound by decomposing the events
\begin{align*}
    & \mathbb{P}_x\Big( X^{(n)}_{t_k} \notin \bigcup_{j:\ m_j \in G^\text{large}}B(m_j,\Delta_n) \Big)
    \\
    \leq & \mathbb{P}_x\Big( X^{(n)}_{t_k} \notin \bigcup_{j:\ m_j \in G^\text{large}}B(m_j,\Delta_n),\  X^{(n)}_t \in \bigcup_{ j:\ m_j \in G }\Omega_j\ \forall t \in [0,\bar{t}] \Big)
    \\
    + & \mathbb{P}_x\Big( \exists t \in [0,\bar{t}]\ \text{such that }  X^{(n)}_t \notin \bigcup_{ j:\ m_j \in G }\Omega_j \Big)
\end{align*}
Therefore, given \cref{lemma mark process converge to markov chain absorbing case goal 4}, it suffices to prove
\begin{align}
    \limsup_n \mathbb{P}_x\Big( \exists t \in [0,\bar{t}]\ \text{such that }  X^{(n)}_t \notin \bigcup_{ j:\ m_j \in G }\Omega_j \Big) \leq 3\epsilon. \label{proof key convergence to MC goal}
\end{align}
Let 
\begin{align*}
    T^{(n)}_0 & \delequal{} \min\{ t \geq 0:\ X^{(n)}_t \in \bigcup_{ j: m_j \in G }B(m_j,2\Delta_n)  \}
    \\
    I^{(n)}_0 & = j \iff X^{(n)}_{T^{(n)}_0 } \in B(m_j,2\Delta_n)
    \\
    T^{(n)}_k & \delequal{} \min\{ t  > T^{(n)}_{k-1}:\ X^{(n)}_t \in \bigcup_{ j: m_j \in G,\ j \neq I^{(n)}_{k-1} }B(m_j,2\Delta_n)  \}
    \\
    I^{(n)}_k & = j \iff X^{(n)}_{T^{(n)}_k } \in B(m_j,2\Delta_n).
\end{align*}
Building upon this definition, we define the following stopping times and marks that only records the hitting time to minimizer in \textit{large} attraction fields in $G$ (with convention $\textbf{k}^{(n),\text{large}}(-1) = -1, T^{(n),\text{large}}_{-1} = 0,T^{(n)}_{-1} = 0$)
\begin{align*}
    \textbf{k}^{(n),\text{large}}(k) & \delequal{} \min\{l > \textbf{k}^{(n),\text{large}}({k-1}):\ m_{ I^{(n)}_l } \in G^\text{large}  \} 
    \\
    T^{(n),\text{large}}_k & \delequal{} T^{(n)}_{ \textbf{k}^{(n),\text{large}}(k) },\ \ \ I^{(n),\text{large}}_k \delequal{} I^{(n)}_{ \textbf{k}^{(n),\text{large}}(k) }.
\end{align*}
Now by defining
\begin{align*}
    J^{(n)}(t) & \delequal{} \#\{ k \geq 0:\ T^{(n)}_k \leq t  \},
    \\
    J^{(n)}_\text{large}(t) & \delequal{} \#\{ k \geq 0:\ T^{(n),\text{large}}_k \leq t\},
    \\
    J^{(n)}(s,t) & \delequal{}\#\{ k \geq 0:\ T^{(n)}_k \in [s,t] \},
\end{align*}
we use $J^{(n)}(t)$ to count the numbers of visits to local minima on $G$, and $J^{(n)}_\text{large}(t)$ for the number of visits to minimizers in the \textit{large} attraction fields on $G$. $J^{(n)}(s,t)$ counts the indices $k$ such that at $T^{(n)}_k$ a minimizer on $G$ is visited and regarding the hitting time we have $T^{(n)}_k \in [s,t]$.

First of all, the weak convergence result in \cref{lemma mark process converge to markov chain absorbing case goal 1} implies the existence of some positive integer $N(\epsilon)$ such that
\begin{align*}
    \limsup_n \mathbb{P}_x( J^{(n)}_\text{large}(\bar{t}) > N(\epsilon) ) < \epsilon.
\end{align*}
Fix such $N(\epsilon)$. Next, from \cref{lemma mark process converge to markov chain absorbing case goal 3}, we know the existence of some integer $K(\epsilon)$ such that
\begin{align*}
    \limsup_n\sup_{k \geq 0} \mathbb{P}_x\Big( J^{(n)}\big( T^{(n),\text{large}}_{k-1}, T^{(n),\text{large}}_{k}  \big) > K(\epsilon) \Big) \leq \epsilon/N(\epsilon).
\end{align*}
Fix such $K(\epsilon)$ as well. From the results above, we know that for event
\begin{align*}
    A_1(n) \delequal{} \{ J^{(n)}_\text{large}(\bar{t}) \leq N(\epsilon) \} \cap \Big\{ J^{(n)}\big( T^{(n),\text{large}}_{k-1}, T^{(n),\text{large}}_{k}  \big) \leq K(\epsilon)\ \forall k \in [N(\epsilon)] \Big\},
\end{align*}
we have $\limsup_n \mathbb{P}_x\Big( \big(A_1(n)\big)^c \Big) \leq 2\epsilon$. On the other hand, on event $A_1(n)$, we must have
\begin{align*}
    J^{(n)}(\bar{t}) \leq N(\epsilon)K(\epsilon).
\end{align*}
Meanwhile, it follows immediately from \cref{lemma mark process converge to markov chain absorbing case goal 2} that
\begin{align*}
    \limsup_n\sup_{k \geq 0} \mathbb{P}\Big( \exists t \in [T^{(n)}_{k - 1},T^{(n)}_{k}]\ \text{such that }X^{(n)}_t \notin \bigcup_{j:\ m_j \in G}\Omega_j \Big) < \frac{\epsilon}{ N(\epsilon)K(\epsilon) },
\end{align*}
hence for event
\begin{align*}
    A_2(n) \delequal{} \Big\{ X^{(n)}_t \in \bigcup_{j:\ m_j \in G}\Omega_j\ \forall t \in [0,T^{(n)}_{ N(\epsilon)K(\epsilon) }]   \Big\},
\end{align*}
we must have $\limsup_n \mathbb{P}_x\Big( \big(A_2(n)\big)^c \Big) \leq \epsilon$. To conclude the proof, note that
\begin{align*}
    A_1(n) \cap A_2(n) & \subseteq \{ J^{(n)}(\bar{t}) \leq N(\epsilon)K(\epsilon) \} \cap \Big\{ X^{(n)}_t \in \bigcup_{j:\ m_j \in G}\Omega_j\ \forall t \in [0,T^{(n)}_{ N(\epsilon)K(\epsilon) }]   \Big\}
    \\
    & =  \{ T^{(n)}_{ N(\epsilon)K(\epsilon) } \geq \bar{t} \} \cap \Big\{ X^{(n)}_t \in \bigcup_{j:\ m_j \in G}\Omega_j\ \forall t \in [0,T^{(n)}_{ N(\epsilon)K(\epsilon) }]   \Big\}
    \\
    & \subseteq \Big\{ X^{(n)}_t \in \bigcup_{j:\ m_j \in G}\Omega_j\ \forall t \in [0,\bar{t}]   \Big\}
\end{align*}
so we have established \cref{proof key convergence to MC goal}.
\end{proof}

\begin{proof}[Proof of Lemma \ref{lemma key converge to markov chain and always at wide basin transient case}]
From Lemma \ref{lemma preparation weak convergence to markov chain trainsient case}, one can see the existence of some $(\Delta_n)_{n \geq 1}$ with $\lim_{n}\Delta_n = 0$ such that \cref{lemma mark process converge to markov chain transient case goal 1} and \cref{lemma mark process converge to markov chain transient case goal 2} hold. For simplicity of notations, we let $\hat{X}^{\dagger,(n)} = \hat{X}^{\dagger,*,\eta_n,\Delta_n }, X^{\dagger,(n)} = X^{\dagger,*,\eta_n}$, and let $\bar{t} = t_{k^\prime}$ 

Combine \cref{lemma mark process converge to markov chain transient case goal 1} with Lemma \ref{lemma weak convergence of jump process}, and we immediately get \cref{lemma key converge to markov chain and always at wide basin transient case goal 1}. In order to prove \cref{lemma key converge to markov chain and always at wide basin transient case goal 2},
note that
\begin{align*}
    & \Big\{ X^{\dagger,(n)}_{t_k} \notin \bigcup_{j:\ m_j \in G^\text{large}}B(m_j,\Delta_n)\text{ and }X^{\dagger,(n)}_{t_k} \neq \boldsymbol{\dagger}  \Big\}
    \\
    = & \Big\{ X^{\dagger,(n)}_{t_k} \notin \bigcup_{j:\ m_j \in G^\text{large}}B(m_j,\Delta_n)\text{ and }X^{\dagger,(n)}_{s} \in \bigcup_{j: m_j \in G}\Omega_j\ \forall s \in [0,t_k] \Big\}
\end{align*}
so the conclusion of the proof follows directly from \cref{lemma mark process converge to markov chain transient case goal 2}.
\end{proof}

\section{First Exit Time of Truncated Heavy-tailed SGD in $\mathbb{R}^d$} \label{sec: appendix Rd result}

\subsection{Main Result}

The object of interests is the following truncated heavy-tailed SGD iterates
\begin{align*}
    X^{\eta}_{k+1}(x) &= X^{\eta}_k(x) + \varphi_b\big( -\eta \nabla f(X^{\eta}_k(x)) + \eta Z_{k+1} \big)\ \forall k \geq 0 
\end{align*}
where the initial condition is prescribed by $X_0(x) = x$, $f$ is a real-valued function on $\mathbb{R}^d$, $\eta > 0$ is the learning rate, $(Z_k)_{k \geq 1}$ is the sequence of heavy-tailed noises, the standard gradient clipping operator is $\varphi_b(v) = \min\{1, \frac{b}{\norm{v}}\}\cdot v$ with $\norm{\cdot}$ being $L_2$ norm. 
To ease notations, we also use $\mathbb{P}_x$ to denote the conditional law on $\{X_0^\eta = x\}$.

Specifically, we are interested in the first exit time of $X^\eta_n$ from a domain $\mathcal{G}$, i.e. the stopping time
$$ \sigma(\eta)  = \min\{ n \geq 0: X^\eta_n \notin \mathcal{G} \}.$$
We work with following assumptions.

\begin{assumption} \label{assumption 1 domain G}
The region $\mathcal{G}$ is connected, bounded and open, and $\textbf{0} \in \mathcal{G}$.
\end{assumption}

\begin{assumption} \label{assumption 2 vector field g}
The function $f$ is smooth, i.e. $f \in C^2(\mathbb{R}^d)$.
\end{assumption}

\begin{assumption} \label{assumption 3 boundary of domain G}
The boundary set $\partial \mathcal{G}$ is a $(n-1)$-dimensional manifold of class $C^2$ such that the vector field $n(\cdot)$ of the outer normals on $\partial \mathcal{G}$ exists with
\begin{align}
    \nabla f(v)^Tn(v) \geq c_0\ \ \  \forall v \in \partial \mathcal{G} \label{ineq for assumption on the outer normals}
\end{align}
for some constant $c_0 > 0$.
\end{assumption}

\begin{assumption} \label{assumption 4 jacobian of g at the origin}
For $\nabla^2 f(\bm{0})$, the Hessian of $f(\cdot)$ at point $\bm{0}$, all the eigenvalues are strictly positive.
\end{assumption}

For any $x \in \bar{\mathcal{G}}$, let $\bm{x}_t(x)$ be the ODE flow with $\bm{x}_0(x) = x$ solving
\begin{align*}
    \bm{\dot{x}}_t(x) & = -\nabla f\big( \bm{x}_t(x) \big)\ \ \forall t \geq 0, 
\end{align*}
 
\begin{assumption} \label{assumption 5 zero is an attractor}
W.L.O.G., the origin $\bm{0}$ is an attractor of the domain, i.e. $\nabla f(\bm{0}) = \bm{0}$ and $\textbf{0}$ is asymptotically stable in $\mathcal{G}$ in the sense that
\begin{align*}
    \lim_{t \rightarrow \infty} \textbf{x}_t(x) = \textbf{0}\ \ \ \forall x \in \bar{\mathcal{G}}.
\end{align*}
\end{assumption}

We have the following assumption regarding the heavy-tailed noises $(Z_n)_{n \geq 1}$. For any $x \in \mathbb{R}^d, x \neq 0$, define the following polar transformation
\begin{align}
    \textbf{T}(x) = (\|x\|, x/\|x\|) \label{def_PolarTransformation}
\end{align} 
with $\textbf{T}_r(x) = \norm{x}, \textbf{T}_\theta(x) = x/\norm{x}$. Also, let $\mathbb{O} = \{\bm{0}\}$

\begin{assumption} \label{assumption 6 regular variation of the Rd noise}
$\mathbb{E}Z_1 = \textbf{0}$.  
Besides, there exist a positive integer $m \geq 1$, a sequence $1 < \alpha_1 < \alpha_2 < \cdots < \alpha_m < \infty$, a sequence of slowly-varying functions $(l_1,\cdots,l_m)$, a sequence of probability measures $(S_1,\cdots,S_m)$ on the unit sphere $\mathbb{S}^{d-1}$ with support $F_j \delequal{} \text{supp}(S_j)$ as closed sets on $\mathbb{S}^{d-1}$ such that 
\begin{itemize}
    \item $F_i \cap F_j = \emptyset$ for any $i \neq j$;
    \item For any $j \in [m]$, define the cone $E_j \delequal{} \textbf{T}_\theta^{-1}(F_j) \cup \mathbb{O}$ in $\mathbb{R}^d$ and measure $\mathbb{P}^{(j)}(\cdot) = \mathbb{P}(Z_1 \in \cdot \cap E_j)$, we have
    \begin{align}
        t^{\alpha_j}\cdot l_j(t)\cdot \mathbb{P}^{(j)}\circ \textbf{T}^{-1}\big(t\cdot dr \times d\theta\big) \rightarrow \nu_{\alpha_j}(dr) \times S_j(d\theta)\ \ \ \text{as }t \rightarrow \infty
    \end{align}
    in the sense of $\mathbb{M}( E_j \symbol{92} \mathbb{O} )$. Here $\nu_\alpha$ is a Borel measure defined on $(0,\infty)$ satisfying $\nu_\alpha[t,\infty) = t^{-\alpha}\ \ \forall t > 0$;
    \item For measure $\mathbb{P}^{(0)}(\cdot) \delequal{} \mathbb{P}\big(Z_1 \in \cdot\ \symbol{92}(\cup_{j = 1}^m E_j)\big)$ and any $\alpha > 0$, 
    \begin{align}
        t^{\alpha}\cdot \mathbb{P}^{(0)}\circ \textbf{T}^{-1}\big(t\cdot dr \times d\theta\big) \rightarrow 0\ \ \ \text{as }t \rightarrow \infty
    \end{align}
    in the sense of $\mathbb{M}( \mathbb{R}^d \symbol{92} \mathbb{O} )$;
    \item For any $j \in [m]$, the measure $S_j$ is absolutely continuous w.r.t. spherical measure on $\mathbb{S}^{d-1}$.
\end{itemize}
\end{assumption}
A function $l:\mathbb{R}_+ \mapsto \mathbb{R}_+$ is slowly varying (at $\infty$) if $\lim_{t\rightarrow \infty}l(tx)/l(x) = 1$ holds for any $x>0$. For details on $\mathbb{M}-$convergence and regular variation in general metric spaces, see \cite{lindskog2014regularly}. Here we state one implication of the assumption. For any $j = 0,1,\cdots,m$, let
\begin{align}
    H_j(x) & \delequal{} \mathbb{P}_j\Big( \{\bm{y} \in \mathbb{R}^d:\ \|\bm{y}\| \geq x\} \Big). \label{def H function Rd}
\end{align}
The assumption above immediately implies that, for any $j = 1,\cdots,m$, $H_j(\cdot)$ is regularly varying (at $\infty$) with index $-\alpha_j$ (denoted as $H_j \in RV_{-\alpha_j}$), namely $\lim_{t \rightarrow \infty}H_j(tx)/H_j(t) = x^{-\alpha_j}\ \ \forall x > 0.$ 
Meanwhile, for any $\alpha > 0$, $H_0(x) = o(1/x^\alpha)$ as $x \rightarrow \infty$. 

The asymptotically behavior of the first exit time hinges on the following geometric characterization of the domain $\mathcal{G}$. 
For any integer $k \geq 1$, 
a sequence of strictly positive real numbers $(t_2,\cdots,t_k)$ with $t_1 = 0$, 
a sequence of non-zero vectors $(w_1,\cdots,w_k)$
and some $\eta > 0$,
let $\bm{t}^{(k)} = (t_1,\cdots,t_k), \bm{w}^{(k)} = (w_1,\cdots,w_k)$, and define the ODE path with $k$ jumps (clipped at size $b$) by
$\bm{w}^{(k)} = (w_1,\cdots,w_k)$, we define perturbed ODE path $\widetilde{\bm{x}}^{\eta}$ as 
\begin{align*}
    \widetilde{\bm{x}}^{\eta}(0;\bm{t}^{(k)},\bm{w}^{(k)}) & = \varphi_b(\eta w_1); \\
    \frac{d \widetilde{\bm{x}}^{\eta}(t,x;\bm{t}^{(k)},\bm{w}^{(k)}) }{ dt } & = -\eta \nabla  f\big( \widetilde{\bm{x}}^{\eta}(t,x;\bm{t}^{(k)},\bm{w}^{(k)}) \big)\ \ \ \forall t \notin \{t_1,t_1 + t_2,\cdots,\sum_{j = 1}^k t_{j}\}  \\
    \widetilde{\bm{x}}^{\eta}(t,x;\bm{t}^{(k)},\bm{w}^{(k)}) & =  \widetilde{\bm{x}}^{\eta}(t-,x;\bm{t}^{(k)},\bm{w}^{(k)}) + \varphi_b(\eta w_j) \ \ \ \ \text{if }t =\sum_{i = 1}^jt_i \text{ for some }j
\end{align*}
with the convention that $g(t-) \delequal{} \lim_{s \uparrow t}g(s)$ for any function $g$. Also, when $\eta = 1$ we simply write $\widetilde{\bm{x}}$.
Now we can assign a \textit{cost} to each jump $w_j$ based on the direction using the following function:
\begin{align}
J(w) = \begin{cases} 
\alpha_j - 1 & \text{if }\ w \neq 0, w \in E_j,\\
\infty & \text{otherwise. }
\end{cases} \label{def jump cost function J}
\end{align}
Given any set of perturbations described by $\bm{t}^{(k)} \in \{0\} \times \mathbb{R}^{k-1}_+, \bm{w}^{(k)} \in \big( \mathbb{R}^d \symbol{92} \mathbb{O} \big)^k$, we identify the \textit{destination} of the flow as
\begin{align*}
    h(k; \bm{t}^{(k)}, \bm{w}^{(k)} ) = \widetilde{\bm{x}}( \sum_{j = 1}^k t_k ;\bm{t}^{(k)},\bm{w}^{(k)}).
\end{align*}
Besides, define function $\mathcal{I}( \bm{w}^{(k)} ) = (i_1,\cdots,i_m)$ if $\#\{i \in [k]:\ w_i \in E_j \} = i_j$ for all $j \in [m]$. This allows us to define the following configuration sets
\begin{align}
    \mathcal{A}( i_1,\cdots,i_m ) \delequal{} \big\{\bm{w}^{(k)} \in \big( \mathbb{R}^d \symbol{92} \mathbb{O} \big)^k:\ \mathcal{I}( \bm{w}^{(k)} ) = ( i_1,\cdots,i_m ),\ k = \sum_{j = 1}^m i_j   \big\} \label{def configuration set A}
\end{align}
for any $(i_1,\cdots,i_m)\in \mathbb{N}^m$, i.e., some set of jumps $\bm{w}^{(k)}$ is said to have configuration $( i_1,\cdots,i_m )$ or belong to the configuration set $\mathcal{A}( i_1,\cdots,i_m )$ if the number of jumps in cone $E_j$ is equal to $i_j$.
We can also define the cost for each configuration as
\begin{align}
    \mathcal{J}(i_1,\cdots,i_m) = \sum_{j = 1}^m(\alpha_j - 1) i_j.
\end{align}
Now we can characterize the minimum \textit{cost} to exit $\mathcal{G}$:
\begin{align}
    J_\mathcal{G} \delequal{}& \min\{ \sum_{j = 1}^k J(w_j):\ \exists k \in \mathbb{N}, \bm{t}^{(k)} \in \{0\} \times \mathbb{R}^{k-1}_+, \bm{w}^{(k)} \in \big( \mathbb{R}^d \symbol{92} \mathbb{O} \big)^k \nonumber 
    \\
    & \ \ \ \ \ \ \ \ \ \ \ \ \ \ \ \ \ \ \ \ \ \text{ s.t. } \widetilde{\bm{x}}( \sum_{j = 1}^k t_k, \bm{0} ;\bm{t}^{(k)},\bm{w}^{(k)}) \notin \mathcal{G} \}. \label{def minimum cost J_G}
\end{align}
From the boundedness of $\mathcal{G}$, one can see that $0 < J_\mathcal{G} < \infty$ regardless of the actual value of the clipping threshold $b > 0$. 
We need the following technical assumption regarding the configurations of jumps that can trigger the exit with the minimum cost.

\begin{assumption} \label{assumption unique configuration for minimum exit cost}
There exists only one array $(i_1,\cdots,i_m) \in \mathbb{N}^m$ such that $\sum_{j = 1}^m i_j(\alpha_j-1) = J_\mathcal{G}$ and (for $k = \sum_{j = 1}^m i_m$)
\begin{align*}
\exists \bm{t}^{(k)} \in \{0\} \times \mathbb{R}^{k-1}_+, \bm{w}^{(k)} \in \mathcal{A}( i_1,\cdots,i_m )\text{ s.t. } \widetilde{\bm{x}}( \sum_{j = 1}^k t_k,\bm{0} ;\bm{t}^{(k)},\bm{w}^{(k)}) \notin \mathcal{G}.
\end{align*}
\end{assumption}

We use $\bm{i}^* = (i^*_1,\cdots,i^*_m)$ to denote the unique configuration in Assumption \ref{assumption unique configuration for minimum exit cost} and $k^* = \sum_{j = 1}^m i^*_j$. The implication is that, for a set of jumps $\bm{t}^{(k)},\bm{w}^{(k)}$ with \emph{any other configuration}, one of the following must happen: (i) this set of jumps has a cost strictly higher than $J_G$; (ii) this set of jumps cannot send the ODE flow out of $\mathcal{G}$.

Meanwhile, we introduce the following concept as the \textit{coverage} of a certain configuration set.
\begin{align}
    & \mathcal{G}( i_1,\cdots,i_m ) \delequal{} \big\{\widetilde{\bm{x}}( s, \bm{0} ;\bm{t}^{(k)},\bm{w}^{(k)}): \nonumber 
    \\
    &\ \ \ \ \  k = \sum_{j = 1}^m i_j,\ \bm{t}^{(k)} \in \{0\} \times \mathbb{R}^{k-1}_+, \bm{w}^{(k)} \in \mathcal{A}( i_1,\cdots,i_m ),\ s\in[0,\sum_{j=1}^k t_j] \big\}.
\end{align}
It is easy to see that, for any configuration $(i_1,\cdots,i_m)$ with cost $\mathcal{J}(i_1,\cdots,i_m) \leq J_\mathcal{G}$, the coverage $\mathcal{G}( i_1,\cdots,i_m )$ is a closed set, and the following technical assumption holds for (Lebesgue) almost every $b>0$. Here the distance $\bm{d}(A,B) = \inf_{x \in A, y \in B}\norm{x-y}$ for any $A,B \subseteq \mathbb{R}^d$, and $A^\circ$ is the interior of the set $A$.
\begin{assumption} \label{assumption bounded away}
For any $(i_1,\cdots,i_m) \in \mathbb{N}^m$, one of the following must occur:
\begin{itemize}
    \item $\bm{d}\big(\mathcal{G}( i_1,\cdots,i_m ),\mathcal{G}^c) > 0$;
    \item $\big(\mathcal{G}( i_1,\cdots,i_m ) \cap \mathcal{G}^c\big)^\circ \neq \emptyset$
\end{itemize}
\end{assumption}

Note that with $\bm{i}^*$ and $k^*$ defined above, we can define the following mapping $h$ from $\bm{r} = (r_1,\cdots,r_{k^*}), \bm{\theta} = (\theta_1,\cdots,\theta_{k^*}) \in (\mathbb{S}^{d-1})^{k^*}, \bm{t} = (t_1,\cdots,t_{k^*}) \in \{0\}\times\mathbb{R}^{k^*}_+$ such that
\begin{align*}
    h(\bm{r},\bm{\theta},\bm{t}) = \widetilde{\bm{x}}(\sum_{i = 1}^{k^*}t_i, \bm{0};\ \bm{t},\bm{w}) 
\end{align*}
where $\bm{w} = (w_i)_{i = 1}^{k^*}$ with $w_i = r_i\theta_i$.
Also, we introduce the concept of \emph{type} for configuration $\bm{i}^*$. Specifically, define
\begin{align*}
\bm{j}(i_1,\cdots,i_m) \delequal \big\{ (j_1,\cdots,j_{k^*}) \in \{ 0,1,2,\cdots,m \}^{k^*}:\ \#\{n:\ j_n = k\} = i_k\ \forall k \in [m]  \big\} 
\end{align*}
and for any $\bm{j} \in \bm{j}(\bm{i}^*)$, we say that $\bm{w} = (w_1,\cdots,w_{k^*}) \in \mathcal{A}(\bm{i}^*)$ has type $\bm{j}$ if
\begin{align*}
    w_i \in E_{j_i}\ \forall i \in [k^*].
\end{align*}
In other words, based on the direction of each jump in $\bm{w} \in \mathcal{A}(\bm{i}^*)$ we group them into different types. Note that $| \bm{j}(\bm{i}^*) |<\infty$.
Now for any type $\bm{j} \in \bm{j}(\bm{i}^*)$, define a (Borel) measure on $\mathbb{R}^{k^*}_+ \times ( \mathbb{S}^{d-1} )^{k^*} \times \mathbb{R}^{k^* - 1}_+$ as
\begin{align*}
    \mu_{\bm{j}}\delequal (\prod_{i = 1}^{k^*}\nu_{\alpha_{\bm{j}_i}}) \times ( \prod_{i=1}^{k^*} S_{ \bm{j}_i } ) \times \bm{m}_\text{Leb}^{k^*-1} 
\end{align*}
where, for any $\alpha > 0$, the measure $\nu_\alpha$ is the Borel measure on $(0,\infty)$ with $\nu_\alpha(x,\infty) = 1/x^{1+\alpha}$.
Lastly, define measure $\mu$ as
$\mu = \sum_{ \bm{j} \in \bm{j}(\bm{i}^*) }\mu_{\bm{j}}.$
As will be established in Lemma \ref{lemma boundary set zero mass}, the following technical assumption is also a very moderate one since it holds for (Lebesgue) almost every $b > 0$ under the current setting.
\begin{assumption} \label{assumption boundary set with zero mass}
The set $h^{-1}\big( \partial \mathcal{G}\big)$ has zero mass under the measure $\mu$.
\end{assumption}

Having specified the problem setting, we are now ready to present Theorem \ref{thm main result Rd}, the main result of this section.
The implication of the theorem is clear:
under proper scaling, the first exit time $\sigma(\eta)$ converges in distribution to an Exponential random variable.
Moreover, the scaling $\lambda(\eta)$ is roughly of order $\eta^{1 + J_\mathcal{G}}$, implying that the first exit time $\sigma(\eta)$ is roughly of order $1/\eta^{1 + \mathcal{G}}$ as the learning rate $\eta$ approaches $0$.

\begin{theorem} \label{thm main result Rd}
Let Assumptions \ref{assumption 1 domain G}-\ref{assumption boundary set with zero mass} hold.
There exists a function $\lambda(\eta)$ that is regularly varying (as $\eta \downarrow 0$) with index $(1 + J_\mathcal{G})$ such that,
for any $x \in \mathcal{G}$ and any $t > 0$,
\begin{align*}
    \lim_{\eta \downarrow 0}\mathbb{P}_x( \sigma(\eta)\lambda(\eta) > t ) = \exp(-qt)
\end{align*}
where the constant $q = \mu\big( h^{-1}(\mathcal{G}^c) \big)$.
\end{theorem}

The proof is provided in Section \ref{sec: proof Rd result}.
As a concluding remark,
we stress that the order of the first exit time is dictated by $J_\mathcal{G}$, the minimum cost for exit we introduced above.
Bearing obvious similarity to the first exit time analysis of SDE driven by heavy-tailed L\'evy processes in \cite{imkeller2010first},
our result can be viewed as a natural generalization when gradient clipping is applied and heavy-tailed noises may not always align with a finite number of lines.
In \cite{imkeller2010first}, the unclipped setting implies that the escape from the domain $\mathcal{G}$ can always be achieved with one big jump.
However, a big perturbation (in noise) along different directions may correspond to different heavy-tailed indices, i.e., induce different costs given our definition of cost function $J(w)$ in \cref{def jump cost function J}.
In Theorem 1 of \cite{imkeller2010first}, we see that the order of the first exit time is determined by $\alpha_1$, the smallest heavy-tailed index.
The underlying reason is that the exit is almost always trigger by a single big jump with the smallest cost.
Similarly, in our setting where multiple jumps are required for escape due to the clipping mechanism,
we see that the order of the first exit time is not dictated by the number of jumps or the accumulated distances of the jumps, but the smallest possible accumulated \emph{cost} defined as the summation of $J(w_i)$ where $w_i$'s are the jumps the lead to escape from $\mathcal{G}$.
As detailed in the proof, this is because such jumps with the smallest costs $J_\mathcal{G}$ dictates the most likely way for exiting $\mathcal{G}$.

\subsection{A Special Case: Uniform Heavy Tail Index Along All Directions}

As stated above,
Theorem \ref{thm main result Rd} deals with general case where the noise distribution is allowed to have different heavy-tailed indices along different directions (see Assumption \ref{assumption 6 regular variation of the Rd noise}).
As a special case, it is worth noticing that if a single heavy-tailed index $\alpha$ can be used to describe the tail behavior of noises along any direction (this can be easily guaranteed if the heavy-tailed noise is manually injected into a light-tailed setting),
then the minimum cost $J_\mathcal{G}$ will be equal to $l^*(\alpha-1)$ where $l^*$ is the minimum number of jumps required for escape.
The readers can see that this is a natural extension of our $\mathbb{R}^1$ results,
implying that the same strong preference for ``wide'' minima still hold in $\mathbb{R}^d$ under truncated heavy-tailed SGD.
To be specific, we work with the following assumption about the noise distribution.
\begin{assumption} \label{assumption uniform alpha regular variation of the Rd noise}
$\mathbb{E}Z_1 = \textbf{0}$.  
Besides, 
there exists some $\alpha > 1$,
a slowly-varying functions $l$,
a probability measures $S$ on the unit sphere $\mathbb{S}^{d-1}$ with support $\text{supp}(S_j) = \mathbb{S}^{d-1}$ such that 
\begin{itemize}
    \item For the measure $\nu(\cdot) = \mathbb{P}(Z_1 \in \cdot \cap E_j)$, we have
    \begin{align}
        t^{\alpha}\cdot l(t)\cdot \nu\circ \textbf{T}^{-1}\big(t\cdot dr \times d\theta\big) \rightarrow \nu_{\alpha}(dr) \times S(d\theta)\ \ \ \text{as }t \rightarrow \infty
    \end{align}
    in the sense of $\mathbb{M}( \mathbb{R}^d \symbol{92} \mathbb{O} )$. Here $\nu_\alpha$ is a Borel measure defined on $(0,\infty)$ satisfying $\nu_\alpha[t,\infty) = t^{-\alpha}\ \ \forall t > 0$;
    \item The measure $S$ is absolutely continuous w.r.t. spherical measure on $\mathbb{S}^{d-1}$.
\end{itemize}
\end{assumption}
Compared to Assumption \ref{assumption 6 regular variation of the Rd noise},
one can easily see that Assumption \ref{assumption uniform alpha regular variation of the Rd noise} is a stronger version with the specific proviso that a single index $\alpha$ can describe the tail of the noise distribution along any direction in $\mathbb{R}^d$.
The first exit time results now admit a simplified form.
In particular, the cost of jumps will degenerate to the \emph{count} of jumps in the sense that
$\sum_{i = 1}^k J(w_i) = k(\alpha-1) $
for any $k \geq 1$ and any $(w_1,\cdots,w_k) \in (\mathbb{R}^d\symbol{92}\mathbb{O})^k$.
Moreover, we now have $J_\mathcal{G} = l^*_\mathcal{G}\cdot (\alpha-1)$ where
\begin{align*}
    l^*_\mathcal{G} \delequal{} \min\{ k \in \mathbb{N}:\ \exists \bm{t}^{(k)} \in \{0\} \times \mathbb{R}^{k-1}_+, \bm{w}^{(k)} \in \big( \mathbb{R}^d \symbol{92} \mathbb{O} \big)^k\text{ s.t. } \widetilde{\bm{x}}( \sum_{j = 1}^k t_k, \bm{0} ;\bm{t}^{(k)},\bm{w}^{(k)}) \notin \mathcal{G} \} 
\end{align*}
and the measure $\mu$ can now be expressed as
$  \mu =  (\prod_{i = 1}^{l^*_\mathcal{G}}\nu_{\alpha}) \times ( \prod_{i=1}^{l^*_\mathcal{G}} S ) \times \bm{m}_\text{Leb}^{l^*_\mathcal{G} - 1}$.
Therefore, the following theorem is merely a restatement of Theorem \ref{thm main result Rd} in this simplified setting.
Still, we present the result to highlight the role of the minimum jump number $J_\mathcal{G}$ in the first exit time.
\begin{theorem} \label{thm main result Rd simplified 1}
Let Assumptions \ref{assumption 1 domain G}-\ref{assumption 5 zero is an attractor}, \ref{assumption uniform alpha regular variation of the Rd noise} and \ref{assumption unique configuration for minimum exit cost}-\ref{assumption boundary set with zero mass} hold.
There exists a function $\lambda(\eta)$ that is regularly varying (as $\eta \downarrow 0$) with index $1 + l^*_\mathcal{G}(\alpha-1)$ such that,
for any $x \in \mathcal{G}$ and any $t > 0$,
\begin{align*}
    \lim_{\eta \downarrow 0}\mathbb{P}_x( \sigma(\eta)\lambda(\eta) > t ) = \exp(-qt)
\end{align*}
where the constant $q = \mu\big( h^{-1}(\mathcal{G}^c) \big)$.
\end{theorem}
It is clear that first exit time since the first exit time is roughly of order $1/\eta^{1 + (\alpha-1) l^*_\mathcal{G}}$. Therefore, even in the general $\mathbb{R}^d$ case, the quantity $l^*_\mathcal{G}$, i.e. the minimum count of jumps to escape from a domain $\mathcal{G}$, induces a hierarchy of first exit time as the first exit time from the domain with largest $l^*_\mathcal{G}$ (requiring most number of jumps to escape) will dominate the first exit time from other regions.

Lastly, if we work under the standard setting such as the ones in \cite{zhou2020towards} where local strong convexity of $f$ in the domain $\mathcal{G}$ is assumed,
the results can be further simplified and the minimum jump number $l^*_\mathcal{G}$ will be directly tied to the \emph{width} of the each domain.
Specifically, 
the quantity $r_\mathcal{G} \delequal \sup_{y \in \mathcal{G}}\norm{y}$ can be interpreted the effective width or radius of $\mathcal{G}$,
and we work with the following assumption.
\begin{assumption}\label{assumption convexity of f in G}
The function $f$ is strongly convex on the closed ball $\{y \in \mathbb{R}^d: \norm{y} \leq r_\mathcal{G} \}$.
\end{assumption}
Two consequences follow immediately from this assumption.
First of all, there exists some constant $c > 0$ such that
$$ \norm{\bm{x}_t(x)} \leq \norm{x}e^{-ct}\ \ \forall x \in \{y:\ \norm{y} < r_\mathcal{G}\}.$$
As a result, we must have $l^*_\mathcal{G} \geq r_\mathcal{G}/b.$
Next, as long as $r_\mathcal{G}/b$ is not an integer (which holds for Lebesgue almost every $b > 0$), for $k = \ceil{r_\mathcal{G}/b}$ we can find $t_2 > 0,\cdots,t_{k} > 0$ and $w_1 \neq \bm{0}, \cdots, w_k \neq \bm{0}$ where $\widetilde{\bm{x}}\big(s,\bm{0};(0,t_2,\cdots,t_k),(w_1,\cdots,w_k)\big) \notin \mathcal{G}$ for some $s > 0$.
In fact, it is worth noticing that Assumption \ref{assumption bounded away} now degenerates to the condition that $r_\mathcal{G}/b$ is not an integer,
and we now know that for any such $b > 0$, we have $l^*_\mathcal{G} \leq \ceil{r_\mathcal{G}/b}.$
In summary, we have established the following result indicating that the strong preference for wider minima under truncated heavy-tailed SGD still persists in $\mathbb{R}^d$ given proper convexity assumption on $f$.
\begin{theorem} \label{thm main result Rd simplified 2}
Let Assumptions \ref{assumption 1 domain G}-\ref{assumption 5 zero is an attractor}, \ref{assumption uniform alpha regular variation of the Rd noise}-\ref{assumption convexity of f in G} and \ref{assumption unique configuration for minimum exit cost}-\ref{assumption boundary set with zero mass} hold.
There exists a function $\lambda(\eta)$ that is regularly varying (as $\eta \downarrow 0$) with index $1 + l^*(\alpha-1)$ such that,
for any $x \in \mathcal{G}$ and any $t > 0$,
\begin{align*}
    \lim_{\eta \downarrow 0}\mathbb{P}_x( \sigma(\eta)\lambda(\eta) > t ) = \exp(-qt)
\end{align*}
where the constant $q = \mu\big( h^{-1}(\mathcal{G}^c) \big)$ and $l^* = \ceil{ r_\mathcal{G}/b }$.
\end{theorem}

\subsection{Relaxing the Technical Assumptions}

In order to achieve the tightest characterization of the first exit time, some assumptions introduced above are slightly stronger than the ones in \cite{imkeller2010first}.
For instance, in Assumption \ref{assumption 3 boundary of domain G} we require $\partial \mathcal{G}$ to be of class $C^2$ while the assumption (A3) in  \cite{imkeller2010first} only requires it to be a $C^1$ manifold.
Besides, Assumption \ref{assumption 6 regular variation of the Rd noise} requires that $S_j$, the limiting distribution of the directions of each heavy-tailed component in noises, to be absolutely continuous w.r.t. the spherical measure.
As will be stressed in Section \ref{subsec: proof of lemma zero mass on boundary set}, these conditions are only imposed to prove Lemma \ref{lemma boundary set zero mass},
thus ensuring that Assumption \ref{assumption boundary set with zero mass} holds for almost every $b > 0$.
Briefly speaking, all the efforts to guarantee Assumption \ref{assumption boundary set with zero mass} allows us to conclude that the law of scaled first exit time $\lambda(\eta)\sigma(\eta)$ converges exactly to that of $Exp(q)$ with $q = \mu\big(h^{-1}( \mathcal{G}^c )\big)$.

Fortunately, the discussion below will show that, even when we relax all the extra technical assumptions above (hence removing Assumption \ref{assumption boundary set with zero mass}),
the order of the first exit time is still dictated by the minimum cost for escape $J_\mathcal{G}$,
and a similar result about the first exit time can be obtained
where the distribution of the scaled first exit time $\lambda(\eta)\sigma(\eta)$ will be asymptotically bounded by two Exponential RVs.

Specifically, we reiterate that the first half of Assumption \ref{assumption 3 boundary of domain G}, i.e. $\partial \mathcal{G}$ is a differential manifold, will only be applied to prove Lemma \ref{lemma boundary set zero mass}.
The second half of the proof, namely the lower bound in \cref{ineq for assumption on the outer normals},
only serves the ensure that for any $x \in \mathcal{G}$, we have
$ \bm{x}_t(x) \in \mathcal{G}\ \ \forall t \geq 0 $
so that the gradient flow starting in $\mathcal{G}$ will never leave this domain.
Now let us focus on the open sets $\mathcal{G}^\epsilon$ and $\mathcal{G}_\epsilon$ for some small $\epsilon > 0$ where
\begin{align*}
    \mathcal{G}^\epsilon \delequal \{y\in\mathbb{R}^d:\ \bm{d}(y,\mathcal{G}) < \epsilon \},\ \ \ \mathcal{G}_\epsilon \delequal \{ y \in \mathcal{G}:\ \bm{d}(y,\mathcal{G}^c)>\epsilon \}.
\end{align*}
Note that $\cap_\epsilon \mathcal{G}^\epsilon = \overline{\mathcal{G}},\ \cup_\epsilon \mathcal{G}_\epsilon = \mathcal{G}$ and for any positive reals $\epsilon_1, \epsilon_2$ such that $\epsilon_1 \neq \epsilon_2$ and the two reals are small enough for $G_{\epsilon_1}$ and $G_{\epsilon_2}$ to be non-empty,
$$ \partial\mathcal{G}^{\epsilon_1} \cap \partial \mathcal{G}^{\epsilon_2} = \emptyset,\ \ \ \ \partial \mathcal{G}_{\epsilon_1} \cap \partial \mathcal{G}_{\epsilon_2} = \emptyset.$$
Given \cref{ineq for assumption on the outer normals}, as well as the smoothness of the vector field $\nabla f$ (see Assumption \ref{assumption 2 vector field g}) and the fact that $\mathcal{G}$ is connected, bounded and open and contains a unique attractor $\bm{0}$ (see Assumption \ref{assumption 1 domain G} and \ref{assumption 5 zero is an attractor}),
it is easy to see that for all $\epsilon > 0$ sufficiently small, we will have $ \bm{x}_t(x) \in \mathcal{G}^\epsilon \ \ \forall t \geq 0 $ for any $x \in \mathcal{G}^\epsilon $ and $ \bm{x}_t(x) \in \mathcal{G}_\epsilon \ \ \forall t \geq 0 $ for any $x \in \mathcal{G}_\epsilon$.

Meanwhile, in our proof below we will establish \cref{property bar delta bar t}
(which, again, does not require the $C^2$ class assumption on $\partial \mathcal{G}$ or the absolute continuity assumption on the measures $S_j$).
Note that \cref{property bar delta bar t} implies the existence of $\Delta > 0$ such that $\mu\big( h^{-1}( \mathcal{G}_{\Delta} ) \big) < \infty$.
Therefore, for all but only countably many $\epsilon \in (0,\Delta)$, we have $\mu\big( h^{-1}( \partial \mathcal{G}^\epsilon ) \big) =  \mu\big( h^{-1}( \partial \mathcal{G}_\epsilon ) \big) = 0$.
In other words, we can find a sequence of $\epsilon_n$ with $\lim_n\epsilon_n = 0$ such that 
$$\mu\big( h^{-1}( \partial \mathcal{G}^{\epsilon_n} ) \big) =  \mu\big( h^{-1}( \partial \mathcal{G}_{\epsilon_n} ) \big) = 0\ \ \forall n \geq 1$$.
Also, for a fixed $b > 0$ that ensures Assumption \ref{assumption unique configuration for minimum exit cost} and \ref{assumption bounded away}, one can see that similar conditions will also hold for sets $\mathcal{G}^\epsilon$ and $\mathbb{G}_\epsilon$ as long as $\epsilon > 0$ is sufficiently small.
In summary, if we consider first exit times
\begin{align*}
    \sigma^n(\eta) \delequal \min\{k \geq 0:\ X^\eta_k \notin \mathcal{G}^{\epsilon_n}\},\ \ \ \sigma_n(\eta) \delequal \min\{k \geq 0:\ X^\eta_k \notin \mathcal{G}_{\epsilon_n}\}
\end{align*}
with the obvious bounds that $\sigma_n(\eta) \leq \sigma(\eta) \leq \sigma^n(\eta)$ for all $n \geq 1$,
then it suffices to apply Theorem \ref{thm main result Rd} directly onto $\sigma_n(\eta), \sigma^n(\eta)$.
Note that we did not attempt to establish that $\partial \mathcal{G}^\epsilon$ or $\partial \mathcal{G}_\epsilon$ are differential manifolds.
However, we reiterate that this is not needed since the $C^2$ manifold condition in Assumption \ref{assumption 3 boundary of domain G} only serves to ensure that Assumption \ref{assumption boundary set with zero mass} holds for almost surely every $b > 0$, which we sidestep by picking a proper sequence $\epsilon_n$ to explicitly satisfy the condition. 
More formally speaking, consider the following relax ted assumptions.
\begin{assumption} \label{assumption relaxed boundary of domain G}
The boundary set $\partial \mathcal{G}$ is a $(n-1)$-dimensional manifold of class $C^1$ such that the vector field $n(\cdot)$ of the outer normals on $\partial \mathcal{G}$ exists with
\begin{align*}
    \nabla f(v)^Tn(v) \geq c_0\ \ \  \forall v \in \partial \mathcal{G} 
\end{align*}
for some constant $c_0 > 0$.
\end{assumption}
\begin{assumption} \label{assumption relaxed regular variation of the Rd noise}
$\mathbb{E}Z_1 = \textbf{0}$.  
Besides, there exist a positive integer $m \geq 1$, a sequence $0 < \alpha_1 < \alpha_2 < \cdots < \alpha_m < \infty$, a sequence of slowly-varying functions $(l_1,\cdots,l_m)$, a sequence of probability measures $(S_1,\cdots,S_m)$ on the unit sphere $\mathbb{S}^{d-1}$ with support $F_j \delequal{} \text{supp}(S_j)$ as closed sets on $\mathbb{S}^{d-1}$ such that 
\begin{itemize}
    \item $F_i \cap F_j = \emptyset$ for any $i \neq j$;
    \item For any $j \in [m]$, define the cone $E_j \delequal{} \textbf{T}_\theta^{-1}(F_j) \cup \mathbb{O}$ in $\mathbb{R}^d$ and measure $\mathbb{P}^{(j)}(\cdot) = \mathbb{P}(Z_1 \in \cdot \cap E_j)$, we have
    \begin{align*}
        t^{\alpha_j}\cdot l_j(t)\cdot \mathbb{P}^{(j)}\circ \textbf{T}^{-1}\big(t\cdot dr \times d\theta\big) \rightarrow \nu_{\alpha_j}(dr) \times S_j(d\theta)\ \ \ \text{as }t \rightarrow \infty
    \end{align*}
    in the sense of $\mathbb{M}( E_j \symbol{92} \mathbb{O} )$. Here $\nu_\alpha$ is a Borel measure defined on $(0,\infty)$ satisfying $\nu_\alpha[t,\infty) = t^{-\alpha}\ \ \forall t > 0$;
    \item For measure $\mathbb{P}^{(0)}(\cdot) \delequal{} \mathbb{P}\big(Z_1 \in \cdot\ \symbol{92}(\cup_{j = 1}^m E_j)\big)$ and any $\alpha > 0$, 
    \begin{align*}
        t^{\alpha}\cdot \mathbb{P}^{(0)}\circ \textbf{T}^{-1}\big(t\cdot dr \times d\theta\big) \rightarrow 0\ \ \ \text{as }t \rightarrow \infty
    \end{align*}
    in the sense of $\mathbb{M}( \mathbb{R}^d \symbol{92} \mathbb{O} )$.
\end{itemize}
\end{assumption}
We say Assumptions \ref{assumption 1 domain G}-\ref{assumption 2 vector field g}, \ref{assumption relaxed boundary of domain G}, \ref{assumption 4 jacobian of g at the origin}-\ref{assumption 5 zero is an attractor}, \ref{assumption relaxed regular variation of the Rd noise}, \ref{assumption unique configuration for minimum exit cost}-\ref{assumption bounded away} are the set of \emph{relaxed assumptions}.
The discussion above implies that the following result is an immediate consequence from Theorem \ref{thm main result Rd}.

\begin{theorem} \label{thm main result Rd realxed}
Let the relaxed assumptions hold.
There exists a function $\lambda(\eta)$ that is regularly varying (as $\eta \downarrow 0$) with index $(1 + J_\mathcal{G})$ such that,
for any $x \in \mathcal{G}$ and any $t > 0$,
\begin{align*}
    \exp(-qt) \leq \liminf_{\eta \downarrow 0}\mathbb{P}_x( \sigma(\eta)\lambda(\eta) > t ) \leq \limsup_{\eta \downarrow 0}\mathbb{P}_x( \sigma(\eta)\lambda(\eta) > t ) \leq \exp(-q^\uparrow t)
\end{align*}
with $q = \mu\big( h^{-1}(\mathcal{G}^c) \big), q^\uparrow = \mu\Big( h^{-1}\big((\overline{\mathcal{G}})^c\big) \Big)$.
\end{theorem}

\section{Proof of Theorem \ref{thm main result Rd}}\label{sec: proof Rd result}

This section is devoted to establishing Theorem \ref{thm main result Rd}.
For clarity of the exposition, we break down the proof into several major steps, each contained in a subsection below.

\subsection{Picking constants $\bar{t},\bar{\epsilon},\bar{\delta}$ }
The goal of this subsection is to fix several important constants that characterize the typical sizes and inter-arrival times of large perturbations in SGD trajectory that can cause the escape from domain $\mathcal{G}$.
We will abuse the notations slightly when referencing certain constants, and quantities such as $c_0, c_1, c_2$ may not be equal to the ones used in assumptions above.

First, from Assumption \ref{assumption 6 regular variation of the Rd noise} and the definition of $J_\mathcal{G}$, we can define
\begin{align}
    l^* = \ceil{J_\mathcal{G}/(\alpha_1-1)} + 1\ \ \label{def l star Rd}
\end{align}
and we must have $\infty > l^* > k^*$. Meanwhile, from Assumption \ref{assumption bounded away}, we can find $\bar{\epsilon} > 0$ such that
\begin{align}
    \bm{d}\big(\mathcal{G}( i_1,\cdots,i_m ),\mathcal{G}^c) > 100l^*\bar{\epsilon} \label{ineq insufficient cost, bounded away from Gc}
\end{align}
for any $(i_1,\dots,i_m) \in \mathbb{N}^m$ with $\sum_{j = 1}^m i_j(\alpha_j-1) < J_\mathcal{G}$, as well as
\begin{align*}
    \exists x \in \mathcal{G}(\bm{i}^*) \text{ such that }\bm{d}(x,\mathcal{G}) > 100l^*\bar{\epsilon}.
\end{align*}
Furthermore, Assumption \ref{assumption 5 zero is an attractor} implies that $\nabla f(v) \neq \textbf{0}$ for any $v \in \overline{\mathcal{G}}\symbol{92}\{\textbf{0}\}$. From Assumptions \ref{assumption 3 boundary of domain G}, \ref{assumption 4 jacobian of g at the origin} and \ref{assumption 5 zero is an attractor} (and by picking a smaller $\bar{\epsilon} > 0$ if needed), one can see the existence of some $c_0 > 0, \bar{\epsilon} > 0$ such that 
\begin{align}
    \norm{ \bm{x}_t(x) } & \leq e^{ -c_0 t } \norm{ x }\ \ \forall t \geq 0,\ x \in \overline{B(\textbf{0},\bar{\epsilon})}, \label{ineq ODE geometric convergence to origin} \\
    \norm{ \nabla f(x) } & \geq c_0 \ \ \forall x \in \overline{\mathcal{G}}\symbol{92}B(\textbf{0},\bar{\epsilon}),\label{ineq LB on gradient size} \\
    \bm{d}\big(\bm{x}_t(x), \mathcal{G}^c\big) & \geq \bm{d}(x,\mathcal{G}^c) + c_0 t\ \ \ \text{if}\ \bm{x}_s(x) \in \mathcal{G}\text{ and } \bm{d}\big( \bm{x}_s(x), \mathcal{G}^c \big) \leq \bar{\epsilon} \ \forall s\in[0,t].  \label{ineq ODE going inward at the boundary}
\end{align}
Here $B(x,r) = \{y \in \mathbb{R}^d: \norm{x-y} < r \}$ is the open ball centered at $x$ with radius $r>0$. One immediate consequence from \cref{ineq ODE going inward at the boundary} is that
\begin{align}
    \bm{d}\big(\bm{x}_t(x), \mathcal{G}^c\big) \geq \min\{ \bm{d}(x,\mathcal{G}^c),\bar{\epsilon} \}\ \ \ \forall x \in \mathcal{G}, t \geq 0. \label{ineq LB on distance to boundary set}
\end{align}
Besides, the boundedness of domain $\mathcal{G}$ and smoothness of $f(\cdot)$ imply the existence of some $M > 0$ such that
\begin{align}
    \overline{ \mathcal{G} } & \subseteq B(\textbf{0}, M), \\
    \norm{ \nabla f(x) } & \leq M\ \ \forall x \in \overline{ \mathcal{G} }, \label{ineq bound gradient size} \\
    \norm{ \nabla^2 f(x) } & \leq M\ \ \forall x \in \overline{ \mathcal{G} } \label{ineq bound Hessian size}
\end{align}
where for the matrix norm we use spectral norm.

Recall the definitions of the configuration sets $\mathcal{A}(i_1,\cdots,i_m)$, the unique configuration $\bm{i}^*$ and count of jumps $k^*$ to trigger the exit with minimum cost (see Assumption \ref{assumption unique configuration for minimum exit cost} and the remark underneath). The following three results provide control on the sizes and inter-arrival times of typical jumps that can trigger the escape from $\mathcal{G}$. The proofs involve analyses of the deterministic dynamical system $\widetilde{\bm{x}}_t$ and will be provided in Section \ref{section: proof of technical lemmas}.
In particular, we introduce another concept as \emph{type} of jumps $(w_1,\cdots,w_k)$ that is similar {configuration} $(i_1,\cdots,i_m)$ and configuration sets $\mathcal{A}(i_1,\cdots,i_m)$.
For any $k \in \mathbb{N}$ and any $(w_1,\cdots,w_k) \in (\mathbb{R}^d\symbol{92}\mathbb{O})^k$, 
we say that the $(w_1,\cdots,w_k)$ is of type-$\bm{j}$ for some $\bm{j} = (j_1,\cdots,j_k) \in \{1,2,\cdots,m\}^k$ iff
\begin{align}
    w_i\in E_{j_i}\ \forall i = 1,\cdots,k. \label{def type of jumps w}
\end{align}
In other words, if $\bm{w}^1$ and $\bm{w}^2$ are of the same type $\bm{j}$, then they have the same cardinality $|\bm{w}^1| = |\bm{w}^2| = |\bm{j}| = k$ for some $k \in \mathbb{N}$;
moreover, for any $i \in [k]$, the jumps $w^1_i$ and $w^2_i$ are both in the cone $E_{j_i}$ so both jumps point at directions in cone $E_{j_i}$ and have the same cost $J(w^1_i) = J(w^2_i) = \alpha_{j_i} - 1$ for some $j_i \in [m]$.
Define the set
\begin{align*}
    \mathcal{A}^\text{type}_{\bm{j}} = \{ \bm{w} \in  (\mathbb{R}^d\symbol{92}\mathbb{O})^{ | \bm{j} | }:\ \bm{w}\text{ is of type-}\bm{j} \}.
\end{align*}
The accumulated cost for any $\bm{w} \in \mathcal{A}^\text{type}_{\bm{j}}$ is defined as $\mathcal{J}_\text{type}(\bm{j}) = \sum_{i = 1}^{ |\bm{j}| }(\alpha_{j_i} - 1)$. Lastly, define the skip-one accumulated cost for any type $\bm{j}$ as
\begin{align*}
    \mathcal{J}_\text{type}^\downarrow(\bm{j}) = \max\{\sum_{i = 1,\cdots,|\bm{j}|, i \neq k}(\alpha_{j_i}-1):\ k = 1,2,\cdots,|\bm{j}| \},
\end{align*}
i.e., $\mathcal{J}^\downarrow_\text{type}(\bm{j})$ is the highest possible accumulated cost if we remove one element in $\bm{w}$ for $\bm{w} \in \mathcal{A}^\text{type}_{\bm{j}}$. 

\begin{lemma} \label{lemma bound on interarrival time and jump sizes for typical large jumps}

There exist some $\bar{t} \in (0,\infty), \bar{\delta} \in (0,\infty), \epsilon_0 > 0$ such that the following claim holds.
Let $k^\prime \in \mathbb{N}$ and $\bm{j} = (j_1,\cdots,j_{k^\prime}) \in \{1,2,\cdots,m\}^{k^\prime}$ be such that $\mathcal{J}^\downarrow_\text{type}(\bm{j}) < J_\mathcal{G}$.
For any $\bm{t}^{ (k^\prime) } = (t_1,\cdots,t_{k^\prime})\in \{0\} \times \mathbb{R}^{k^\prime - 1}_+,\bm{w}^{(k^\prime)} = (w_1,\cdots,w_{k^\prime}) \in  \mathcal{A}^\text{type}_{\bm{j}}$,
the following set of conditions
\begin{align}
    t_j & < \Bar{t}\ \ \forall j = 2,3,\cdots,k^\prime \label{def bar t Rd} \\
    \norm{w_j} & > \Bar{\delta} \ \ \forall j =1,2,\cdots,k^\prime. \label{def bar delta Rd}
\end{align}
is the necessary condition for $\inf_{s \geq 0} \bm{d}\big(\widetilde{\bm{x}}( s, \bm{0};\ \bm{t}^{(k)},\bm{w}^{(k)}), \mathcal{G}^c\big) \leq \epsilon_0$.
\end{lemma}

\begin{lemma} \label{lemma bound on interarrival time and jump sizes for typical large jumps generalized}
Let $\bar{t},\bar{\delta},\epsilon_0$ be the positive constants prescribed in Lemma \ref{lemma bound on interarrival time and jump sizes for typical large jumps}.
There exists some $\epsilon_1 > 0$ such that the following claim holds.

Let $k^\prime \in \mathbb{N}$ and $\bm{j} = (j_1,\cdots,j_{k^\prime}) \in \{1,2,\cdots,m\}^{k^\prime}$ be such that $\mathcal{J}_\text{type}^\downarrow(\bm{j}) < J_\mathcal{G}$.
For any $x \in B(\bm{0},\epsilon_1)$,
any $\bm{t}^{ (k^\prime) } = (t_1,\cdots,t_{k^\prime})\in \{0\} \times \mathbb{R}^{k^\prime - 1}_+,\bm{w}^{(k^\prime)} = (w_1,\cdots,w_{k^\prime}) \in  \mathcal{A}^\text{type}_{\bm{j}}$,
the following set of conditions
\begin{align}
    t_j & < 2\Bar{t}\ \ \forall j = 2,3,\cdots,k^\prime \\
    \norm{w_j} & > \Bar{\delta}/2 \ \ \forall j =1,2,\cdots,k^\prime.
\end{align}
is the necessary condition for $\inf_{s \geq 0} \bm{d}\big(\widetilde{\bm{x}}( s, x;\ \bm{t}^{(k)},\bm{w}^{(k)}), \mathcal{G}^c\big) \leq \epsilon_0/2$.
\end{lemma}

\begin{lemma} \label{lemma bound on small perturbation at origin}
There exist some $\epsilon_0 > 0, \delta_0 > 0$ such that 
\begin{align*}
    \sup_{s \geq 0 }\bm{d}\big( \widetilde{\bm{x}}(s, x;\ \bm{t}^{(k)},\bm{w}^{(k)}), \mathcal{G}^c\big) > \epsilon_0
\end{align*}
for all $x \in B(\bm{0},\delta_0)$, all $(i_1,\cdots,i_m) \in \mathbb{N}^m$ with $\mathcal{J}(i_1,\cdots,i_m) < J_\mathcal{G}$ and $k = \sum_{j=1}^m i_j$, and all $\bm{w}^{(k)} \in \mathcal{A}(i_1,\cdots,i_m),\ \bm{t}^{(k)}\in \{0\}\times \mathbb{R}^{k-1}_+$.
\end{lemma}

Similar to the ODE path $\bm{x}$, we defined the following paths with rate $\eta > 0$:
\begin{align}
    \dot{(\bm{x}^\eta_t)}(x) & = - \eta\nabla f\big( \bm{x}^\eta_t(x) \big)\ \ \forall t \geq 0, \label{def ODE with rate 1}  \\
    \bm{x}^\eta_0(x) & = x. \label{def ODE with rate 2} 
\end{align}
In other words, ODE flow $\bm{x}_t$ is equivalent to $\bm{x}^\eta_t$ with rate $\eta = 1$. The next result gives us an upper bound for the return time to a neighborhood of the local minimum as for the gradient flow.

\begin{lemma} \label{lemma bound on return time Rd}
For $\tau_{ODE}^\eta(x, \epsilon) \delequal{} \min\{t \geq 0:\ \bm{x}^\eta_t(x) \in \overline{B(\bm{0}, \epsilon)} \}$, there exists $c_1 \in (0,\infty)$ such that for any $\epsilon \in (0,\bar{\epsilon})$,
\begin{align*}
    \tau_{ODE}^\eta(x, \epsilon) \leq \frac{ c_1 + c_1\log(1/\epsilon) }{\eta}\ \ \ \forall x \in \overline{\mathcal{G}}.
\end{align*}
\end{lemma}

\begin{proof}
Due to \cref{ineq LB on gradient size}, we know that $T_0 \delequal \frac{ \sup_{x \in \bar{\mathcal{G}}}f(x) - \inf_{x \in \bar{\mathcal{G}}}f(x)   }{ c_0 } < \infty$ and
$$  \sup_{x \in \bar{\mathcal{G}}}\tau^{1}_{ODE}(x,\bar{\epsilon}) \leq T_0.$$
Furthermore, from \cref{ineq ODE geometric convergence to origin} one can see that
\begin{align*}
    \sup_{x \in \bar{\mathcal{G}}}\tau^{1}_{ODE}(x,\epsilon) - \tau^{1}_{ODE}(x,\bar{\epsilon}) \leq \frac{ \log(1/\epsilon) - \log(1/\bar{\epsilon}) }{c_0}.
\end{align*}
Now combining these two bounds with a $1/\eta$ time scaling, we have
\begin{align*}
    \sup_{x \in \bar{\mathcal{G}}}\tau^{\eta}_{ODE}(x,\epsilon)\leq \frac{ c_1 + c_1\log(1/\epsilon) }{\eta}
\end{align*}
for $c_1 = \max\{ T_0 + \frac{ \log(\bar{\epsilon}) }{c_0},\ \frac{1}{c_0} \}$.
\end{proof}

With Lemma \ref{lemma bound on interarrival time and jump sizes for typical large jumps}-\ref{lemma bound on return time Rd}, we are able to pick some constants to facilitate the analysis below.
By decreasing $\bar{\epsilon}$ and increasing $M$ if needed, we can assume the existence of some $\bar{t} < \infty, \bar{\delta} < \bar{\epsilon}$ such that
\begin{itemize}
\item (Due to Lemma \ref{lemma bound on interarrival time and jump sizes for typical large jumps} and \ref{lemma bound on interarrival time and jump sizes for typical large jumps generalized}) 
For any $k \in \mathbb{N}$ and $\bm{j} \in \{1,2,\cdots,m\}^k$ such that $\mathcal{J}_\text{type}^\downarrow(\bm{j}) < J_\mathcal{G}$,
any $x \in B(\bm{0},100 l^* \bar{\epsilon})$,
any $\bm{t}^{(k)} = (t_1,\cdots,t_{k}) \in \{0\} \times \mathbb{R}^{k - 1}_+, \bm{w}^{(k)} = (w_1,\cdots,w_{k}) \in \mathcal{A}^\text{type}_{ \bm{j} }$ with
\begin{align*}
   \bm{d}\big(\widetilde{\bm{x}}( \sum_{j = 1}^{k} t_j, x;\ \bm{t}^{(k)},\bm{w}^{(k)}), \mathcal{G}^c\big) < 100l^*\bar{\epsilon},
\end{align*}
it holds that
\begin{align}
    t_j < \Bar{t}\ \ \forall j = 2,3,\cdots,k\text{ and } \norm{w_j} & > \Bar{\delta} \ \ \forall j =1,2,\cdots,k. \label{property bar delta bar t}
\end{align}
    \item For any $(i_1,\dots,i_m) \in \mathbb{N}^m$ with $\mathcal{J}(i_1,\dots,i_m) < J_\mathcal{G}$,
    \begin{align}
    \bm{d}\big(\mathcal{G}( i_1,\cdots,i_m ),\mathcal{G}^c) > 100l^*\bar{\epsilon}; \label{property bounded away with small cost}
\end{align}
    \item There exists some $x \in \mathcal{G}(\bm{i}^*)$ such that
    \begin{align}
        \bm{d}(x,\mathcal{G}) > 100l^*\bar{\epsilon};\label{property existence of escape path}
    \end{align}
    \item (Due to Lemma \ref{lemma bound on small perturbation at origin}) The constant $\bar{\epsilon} > 0$ is sufficiently small such that
    \begin{align}
    \sup_{s \geq 0 }\bm{d}\big( \widetilde{\bm{x}}(s, x;\ \bm{t}^{(k)},\bm{w}^{(k)}), \mathcal{G}^c\big) > 100l^*\bar{\epsilon} \label{implication of lemma bound on small perturbation at origin}
\end{align}
for all $x \in B(\bm{0},100l^*\bar{\epsilon})$, all $(i_1,\cdots,i_m) \in \mathbb{N}^m$ with $\mathcal{J}(i_1,\cdots,i_m) < J_\mathcal{G}$ and $k = \sum_{j=1}^m i_j$, and all $\bm{w}^{(k)} \in \mathcal{A}(i_1,\cdots,i_m),\ \bm{t}^{(k)}\in \{0\}\times \mathbb{R}^{k-1}_+$;
    \item (Due to Lemma \ref{lemma bound on return time Rd}) The constant $\bar{t}<\infty$ is large enough so that
    \begin{align*}
        \sup_{x \in \bar{\mathcal{G}}}\tau^\eta_\text{ODE}(x,\bar{\epsilon}) < \bar{t}/\eta.
    \end{align*}
\end{itemize}

\subsection{Bounding the distance between ODE flow and SGD iterates}

Moving on, we show that, without large jumps, the SGD iterates $X^\eta_n(x)$ are unlikely to show significant deviation from the deterministic gradient descent process $\textbf{y}^\eta_n(x)$ defined as
\begin{align}
    \textbf{y}^\eta_0(x) & = x, \label{def GD process Rd 1} \\
    \textbf{y}^\eta_n(x) & = \textbf{y}^\eta_{n-1}(x) - \eta \nabla f\Big( \textbf{y}^\eta_{n-1}(x) \Big).\label{def GD process Rd 2}
\end{align}
We are ready to state the first lemma, where we bound the distance between the gradient descent iterates $\textbf{y}^\eta_n(y)$ and the ODE $\textbf{x}^\eta(t,x)$ when the initial conditions $x,y$ are close enough.

\begin{lemma} \label{lemma Ode Gd Gap Rd}
Given $t > 0$ and $x,y \in \mathcal{G}$ such that the line segment between $\textbf{x}^\eta(k,x)$ and $\textbf{y}^\eta_k(y)$ lies in $\mathcal{G}$ for any $k = 1,2,\cdots,\floor{t}$,
\begin{align*}
    \sup_{s \in [0,t]}\norm{\textbf{x}^\eta(s,x) - \textbf{y}^\eta_{\floor{s}}(y)} \leq (2\eta M + \norm{x-y})\exp(\eta M t)
\end{align*}
where $M \in (0,\infty)$ is the constant in \cref{ineq bound gradient size}\cref{ineq bound Hessian size}.
\end{lemma}

 \begin{proof}

Define a continuous-time process $\textbf{y}^\eta(s;y) \delequal{} \textbf{y}^\eta_{\floor{s}}(y)$, and note that
\begin{align*}
    \textbf{x}^\eta(s,x) & = \textbf{x}^\eta(\floor{s},x) - \eta\int_{\floor{s}}^s \nabla f(\textbf{x}^\eta(u,x) )du \\
    \textbf{x}^\eta(\floor{s},x) & = x - \eta\int_0^{\floor{s} }\nabla f(\textbf{x}^\eta(u,x) )du \\
    \textbf{y}^\eta_{\floor{s}}(y) = \textbf{y}^\eta(\floor{s} ,y) & = y - \eta \int_0^{ \floor{s} }\nabla f( \textbf{y}^\eta(u,y) )du.
\end{align*}

Therefore, if we define function
$$b(u) = \textbf{x}^\eta(u,x) - \textbf{y}^\eta(u,y),$$
from the fact $\norm{\nabla f(\cdot)}\leq M$, one can see that $\norm{b(u)} \leq \eta M + \norm{x - y}$ for any $u \in [0,1)$ and $\norm{b(1)} \leq 2\eta M + \norm{x - y}$. In case that $s > 1$, from the display above and the fact $\norm{\nabla^2 f(\cdot)}\leq M$, we now have
\begin{align*}
    \norm{\textbf{y}^\eta_{\floor{s}}(x) - \textbf{x}^\eta(s,x)} & \leq \norm{b({\floor{s}})} + \eta M; \\
    \norm{b({\floor{s}})} & \leq \eta M \int_1^{\floor{s}}\norm{b(u)}du.
\end{align*}
From Gronwall's inequality (see Theorem 68, Chapter V of \cite{protter2005stochastic}, where we let function $\alpha(u)$ be $\alpha(u) = \norm{b(u+1)}$), we have
$$\norm{\textbf{y}^\eta_{\floor{s}}(x) - \textbf{x}^\eta(s,x)}\leq (2\eta M + \norm{x-y})\exp(\eta M t).$$
This concludes the proof.  
\end{proof}

Now we consider an extension of the previous Lemma in the following sense: 
when both perturbed by at most $l^*$ similar perturbations, the ODE flow and gradient descent process should still stay close enough. 
Analogous to the definition of the perturbed ODE $\widetilde{\bm{x}}^\eta$, we can construct a process $\widetilde{\bm{y}}^\eta$ as a perturbed gradient descent process as follows. 
For some integer $1 \leq l \leq l^*$, 
a sequence of strictly positive integers $(t_2,\cdots,t_l)$ (with convention $t_1 = 0$ and let $\bm{t}^{(l)} = (t_1,\cdots,t_l)$),
a sequence of non-zero vectors $\widetilde{\bm{w}}^{(l)}=(\widetilde{w}_1,\cdots,\widetilde{w}_{l})$
and $y \in \mathbb{R}^d$, define the perturbed gradient descent iterates with gradient clipping at $b$ as
\begin{align}
     & \widetilde{ \bm{y} }^{\eta}_{n}(y;\bm{t}^{(l)},\widetilde{\bm{w}}^{(l)})
     \nonumber 
     \\ 
     = & \widetilde{ \bm{y} }^{\eta}_{n-1}(y;\bm{t}^{(l)},\widetilde{\bm{w}}^{(l)}) + \varphi_b\big(- \eta \nabla f( \widetilde{ \bm{y} }^{\eta}_{n-1}(y;\bm{t}^{(l)},\widetilde{\bm{w}}^{(l)})) + \sum_{j = 2}^{l}\mathbbm{1}\{ n = t_1 + \cdots + t_j \}\widetilde{w}_j\big)\label{def perturbed GD Rd}
\end{align}
with initial condition $\widetilde{ \bm{y} }^{\eta}_{0}(y;\bm{t}^{(l)},\widetilde{\bm{w}}^{(l)}) =y + \varphi_b(\widetilde{w}_1) $.

\begin{corollary} \label{corollary ODE GD gap Rd}
For any fixed $\epsilon > 0, t > 0$, the following claim holds for all $\eta>0$ sufficiently small: given any $x,y \in \mathcal{G}$,
some integer $1 \leq l \leq l^*$,
a sequence of strictly positive integers $\bm{t}^{(l)} = (t_j)_{j = 2}^{l}$ (with convention $t_1 = 0$), 
and any two sequences of vectors $\bm{w}^{(l)} = (w_j)_{j = 1}^{l}, \widetilde{\bm{w}}^{(l)} = (\widetilde{w}_j)_{j \geq 1}^{l}$ such that (let $t_\text{end} = \sum_{j=1}^l t_j$)
\begin{itemize}
    \item $|x-y| < \epsilon$;
    \item $t_j \leq 2t/\eta$ for all $j \in [l]$; 
    \item $|w_j - \widetilde{w}_{j}| < \epsilon$ for all $j \in [l]$;
    \item The line segment between $ \widetilde{\bm{x}}^\eta(k,x;\bm{t}^{(l)},\bm{w}^{(l)})$ and $\widetilde{ \bm{y} }^{\eta}_{k}(y;\bm{t}^{(l)},\widetilde{\bm{w}}^{(l)})$ lies in $\mathcal{G}$ for any $k \leq t_\text{end} - 1$,
\end{itemize}
then we have 
\begin{align*}
    \sup_{s \in [0, t_\text{end} ]}\norm{\widetilde{\bm{x}}^\eta(s,x;\bm{t}^{(l)},\bm{w}^{(l)}) - \widetilde{ \bm{y} }^{\eta}_{\floor{s}}(y;\bm{t}^{(l)},\widetilde{\bm{w}}^{(l)})} \leq \bar{\rho}(t)\epsilon
\end{align*}
where the constant $\bar{\rho} = (2 \exp(2 M t) + 2)^{l^*}$. In particular, $\bar{\rho} = \bar{\rho}(\bar{t})$ is a constant where $\bar{t}$ is the constant in \cref{def bar t Rd}.
\end{corollary}

 \begin{proof}Throughout this proof, fix some $\eta \in (0,\epsilon/2M)$. We will show that for any $\eta$ in the range the claim would hold.

First, on interval $[0,t_2)$, from Lemma \ref{lemma Ode Gd Gap Rd}, one can see that (since $2M\eta < \epsilon$)
\begin{align*}
    \sup_{s \in [0, t_2) }\norm{\widetilde{\bm{x}}^\eta(s,x;\bm{s}^{(l)},\bm{w}^{(l)}) - \widetilde{ \bm{y} }^{\eta}_{\floor{s}}(y;\bm{t}^{(l)},\widetilde{\bm{w}}^{(l)})} \leq 2 \exp(2M t)\cdot \epsilon.
\end{align*}
The at $t = t_2$, by considering the difference between $w_2$ and $\widetilde{w}_2$, and the possible change due to one more gradient descent step (which is bounded by $\eta M < \epsilon$), we have
\begin{align*}
    \sup_{s \in [0, t_2] }\norm{\widetilde{\bm{x}}^\eta(s,x;\bm{s}^{(l)},\bm{w}^{(l)}) - \widetilde{ \bm{y} }^{\eta}_{\floor{s}}(y;\bm{t}^{(l)},\widetilde{\bm{w}}^{(l)})} \leq (2 \exp(2 M t) + 2)\cdot \epsilon.
\end{align*}
Now we proceed inductively. For any $j = 2,3,\cdots, l - 1$, assume that
\begin{align*}
    \sup_{s \in [0, t_j] }\norm{\widetilde{\bm{x}}^\eta(s,x;\bm{s}^{(l)},\bm{w}^{(l)}) - \widetilde{ \bm{y} }^{\eta}_{\floor{s}}(y;\bm{t}^{(l)},\widetilde{\bm{w}}^{(l)})} \leq (2 \exp(2M t) + 2)^{j - 1}\cdot \epsilon.
\end{align*}
Then by focusing on interval $[t_j, t_{j+1}]$ and using Lemma \ref{lemma Ode Gd Gap Rd} again, one can show that
\begin{align*}
    \sup_{t \in [t_j, t_{j+1}] } \norm{\widetilde{\bm{x}}^\eta(s,x;\bm{s}^{(l)},\bm{w}^{(l)}) - \widetilde{ \bm{y} }^{\eta}_{\floor{s}}(y;\bm{t}^{(l)},\widetilde{\bm{w}}^{(l)})} & \leq 2\epsilon + \big( (2\exp(2M t) + 2)^{j - 1} + 1 \big)\exp(2M t) \epsilon \\
    & \leq (2\exp(2M t) + 2)^{j}\cdot \epsilon.
\end{align*}
This concludes the proof. 
\end{proof}

Using an almost identical approach, one can show the following results that bounds the distance between two ODE flows with similar initial values and jumps.
\begin{corollary} \label{corollary ODE ODE gap Rd}
There exists some constant $\rho^* \in (0,\infty)$
such that the following claim holds:
For any fixed $\epsilon > 0$, 
any $x,y \in \mathcal{G}$,
any integer $1 \leq l \leq l^*$,
a sequence of strictly positive integers $\bm{t}^{(l)} = (t_j)_{j = 2}^{l}$ (with convention $t_1 = 0$), 
and any two sequences of vectors $\bm{w}^{(l)} = (w_j)_{j = 1}^{l}, \widetilde{\bm{w}}^{(l)} = (\widetilde{w}_j)_{j \geq 1}^{l}$ such that (let $t_\text{end} = \sum_{j=1}^l t_j$)
\begin{itemize}
    \item $|x-y| < \epsilon$;
    \item $t_j \leq 2\bar{t}/\eta$ for all $j \in [l]$; 
    \item $|w_j - \widetilde{w}_{j}| < \epsilon$ for all $j \in [l]$;
    \item The line segment between $ \widetilde{\bm{x}}^\eta(s,x;\bm{t}^{(l)},\bm{w}^{(l)})$ and $\widetilde{\bm{x}}^\eta(s,y;\bm{t}^{(l)},\widetilde{\bm{w}}^{(l)})$ lies in $\mathcal{G}$ for any $s < t_\text{end}$,
\end{itemize}
then we have 
\begin{align*}
    \sup_{s \in [0, t_\text{end} ]}\norm{\widetilde{\bm{x}}^\eta(s,x;\bm{t}^{(l)},\bm{w}^{(l)}) - \widetilde{\bm{x}}^\eta(s,y;\bm{t}^{(l)},\widetilde{\bm{w}}^{(l)})} \leq \rho^*\epsilon.
\end{align*}
\end{corollary}

In the next few results, we show that a similar type of control can be obtained on the distance between gradient descent iterates $\widetilde{\bm{y}}^\eta_n$ and the SGD iterates $X^\eta_n$.
Specifically, our first goal is to show that before any \textit{large} jump, it is unlikely that the (deterministic) gradient descent process $\bm{y}^\eta_n$ would deviate too far from $X^\eta_n$. 
To facilitate the analysis below, we introduce some additional notations. 
First, we will group the noises $Z_n$ based on a threshold level $\delta > 0$: let us define
\begin{align}
    Z_{n}^{\leq \delta,\eta} \delequal{} Z_n\mathbbm{1}\{\eta\norm{Z_n}\leq \delta\}, \label{defSmallJump_GradientClipping_Rd} \\
     Z_{n}^{> \delta,\eta} \delequal{} Z_n\mathbbm{1}\{\eta\norm{Z_n}> \delta\}.\label{defLargeJump_GradientClipping_Rd}
\end{align}
The former are viewed as \textit{small noises} while the latter will be referred to as \textit{large noises} or \textit{large jumps}. 
Furthermore, for any $j \geq 1$, define the $j$\textsuperscript{th} arrival time, size, and direction of large jumps as 
\begin{align}
    T^\eta_j(\delta) & \delequal{} \min\{ n > T^\eta_{j-1}(\delta):\ \eta\norm{Z_n} > \delta  \},\quad T_0^\eta(\delta) = 0, \label{defArrivalTime large jump_Rd} \\
    W^\eta_j(\delta) & \delequal{} Z_{T^\eta_j(\delta)}, \label{defSize large jumpRd} \\
    \Theta^\eta_j(\delta) & \delequal{} W^\eta_j(\delta)/\norm{ W^\eta_j(\delta) }. \label{defDirection large jump Rd}
\end{align}
The following event
\begin{align}
    A(n,\eta,\epsilon,\delta) & = \Big\{ \max_{k = 1,2,\cdots, n \wedge (T^\eta_1(\delta) - 1)  }\eta\norm{Z_1 + \cdots + Z_k} \leq \epsilon \Big\} \label{def event A small noise large deviation Rd}
\end{align}
describes a scenario where, before the first large jump, not much perturbation has been caused by noises.

\begin{lemma} \label{lemma SGD GD gap Rd}
For any $\epsilon > 0, \eta > 0$, 
any $x,y \in \mathcal{G}$ and positive integer $n$ such that  $\norm{x-y}<\frac{  \epsilon}{2\exp(\eta M n)  }$, on event
\begin{align*}
& A(n,\eta,\frac{  \epsilon}{2\exp(\eta M n)},\delta  ) \bigcap \\
& \ \ \ \ \ \ \ \ \Big\{ \text{the line segment between } \textbf{y}^\eta_j(y)\text{ and }X^\eta_j(x)\text{ lies in }\mathcal{G} \ \ \forall j = 0,1,\cdots,n \wedge (T^\eta_1(\delta) - 1 ) \Big\},    
\end{align*}
we have 
$$\norm{\bm{y}^\eta_m(y) - X^\eta_m(x)} < \epsilon\ \ \forall m = 1,2,\cdots,n\wedge(T^\eta_1(\delta) - 1 ) .$$
\end{lemma}

 \begin{proof} 

On the said event, we are able to apply Lemma \ref{lemmaBasicGronwall} and obtain that
\begin{align*}
    \norm{\bm{y}^\eta_j(y) - X^\eta_j(x)} \leq (\norm{x - y} + \frac{\epsilon}{2\exp(\eta M n)} )\exp(\eta M j) < \epsilon \ \ \ \forall j = 1,2,\cdots,n \wedge (T^\eta_1(\delta) - 1 )
\end{align*}
and conclude the proof.
\end{proof}

Similar to the extension from Lemma \ref{lemma Ode Gd Gap Rd} to Corollary \ref{corollary ODE GD gap Rd}, we can extend Lemma \ref{lemma SGD GD gap Rd} to show that, if we consider the a gradient descent process that is only perturbed by large noises, then it should stay pretty close to the SGD iterates $X^\eta_n$. To be specific, let
\begin{align*}
    Y^\eta_0(x) & = x \\ 
     Y^\eta_n(x) & =Y^\eta_{n - 1}(x) - \varphi_b\big( - \eta\nabla f\big( Y^\eta_{n - 1}(x) \big) + \sum_{j \geq 1}\mathbbm{1}\{n = T^\eta_j(\delta)\}\eta Z_{n}\big). 
\end{align*}
be a gradient descent process (with gradient clipping at threshold $b$) that is only perturbed by large noises. The next corollary can be shown by an approach identical to the one for Corollary \ref{corollary ODE GD gap Rd} (namely, inductively repeating Lemma \ref{lemma SGD GD gap Rd} at arrival time of each jump) so we omit the details here.

\begin{corollary} \label{corollary sgd gd gap Rd}
There exists some function $\widetilde{\rho}: (0,\infty) \mapsto (0,\infty)$ such that the following claim hold for all $\epsilon > 0, t > 0$ and all sufficiently small $\eta > 0$: For any $x,y \in \mathcal{G})$ with $\norm{x-y} < \epsilon$ and any $l \in [l^*]$, on event $ A_0(\epsilon,\eta,\delta,l,t) \cap B_0(\epsilon,\eta,\delta,l,t) \cap C_0(\epsilon,\eta,\delta,l,t)$, we have
\begin{align*}
    \norm{Y^\eta_n(y) - X^\eta_n(x)} < \widetilde{\rho}(t)\epsilon\ \ \forall n =1,2,\cdots,T^\eta_{l}(\delta)
\end{align*}
where
\begin{align*}
    A_0(\epsilon,\eta,\delta,l,t) & \delequal{} \Big\{\forall i = 1,\cdots,l,\ \max_{j = T^\eta_{i-1}(\delta) + 1, \cdots, T^\eta_{i}(\delta) - 1} \eta\norm{Z_{ T^\eta_{i-1}(\delta) + 1} + \cdots + Z_j} \leq \frac{ \epsilon }{2\exp(2tM) }  \Big\}; \\
    B_0(\epsilon,\eta,\delta,l,t) & \delequal{} \Big\{\forall j = 2,\cdots,l, T^\eta_j(\delta) - T^\eta_{j - 1}(\delta) \leq 2t/\eta  \Big\}, \\
    C_0(\epsilon,\eta,\delta,l,t) & \delequal{}\{ \text{the line segment between } Y^\eta_j(y)\text{ and }X^\eta_j(x)\text{ lies in }\mathcal{G} \ \ \forall j = 0,1,\cdots, T^\eta_l(\delta) - 1 \}.
\end{align*}
In particular, $\widetilde{\rho} = \widetilde{\rho}(\bar{t})$ is a constant 
\end{corollary}

The next result shows that the type of events $A(n,\eta,\epsilon,\delta)$ defined in \cref{def event A small noise large deviation Rd} is indeed very likely to occur, especially for small $\epsilon$. For clarity of the presentation, we introduce the following definitions that are slightly more general than the \textit{small} and \textit{large} jumps defined in \cref{defSmallJump_GradientClipping_Rd}\cref{defLargeJump_GradientClipping_Rd} (for any $c > 0$)
\begin{align*}
    Z^{\leq c}_n & \delequal{} Z_n\mathbbm{1}\{ \norm{Z_n} \leq c \}, \\
    Z^{> c}_n & \delequal{} Z_n\mathbbm{1}\{ \norm{Z_n} > c \}.
\end{align*}

\begin{lemma} \label{lemma prob event A Rd}
Given any $t > 0$, $\epsilon > 0$, and $N > 0$, the following holds for any sufficiently small $\delta > 0$:
\begin{align*}
    \mathbb{P}\Big( \max_{ n = 1,2,\cdots,\ceil{ t/\eta } } \eta\norm{ Z^{\leq\delta/\eta}_1 + \cdots + Z^{\leq\delta/\eta}_n } > \epsilon  \Big) = o(\eta^N)
\end{align*}
as $\eta \downarrow 0$.
\end{lemma}

\begin{proof}
For any $C\in(0,\infty), \delta > 0, \beta \in (0,1)$, consider the following decomposition of the small noise $Z^{\leq\delta/\eta}_n$. Here we adopt the convention that $E_0 = \mathbb{R}^d\symbol{92}( \cup_{j = 1}^m E_j )$. (For definitions and properties of the closed cones $E_1,\cdots,E_m$, see Assumption \ref{assumption 6 regular variation of the Rd noise} and the remark underneath.)
\begin{align*}
    Z^{(j)}_n & \delequal{} Z_n\mathbbm{1}\{ Z_n \in E_j \}, \\
    Z^{\leq C,(j)}_n & \delequal{} Z^{(j)}_n\mathbbm{1}\{ \norm{Z^{(j)}_n} \leq C \}, \\
    Z^{\downarrow,(j)}_{C,\delta,\eta}(n) & \delequal{} Z^{(j)}_n\mathbbm{1}\{ \norm{Z^{(j)}_n} \in (C,\delta/\eta] \}
\end{align*}
Based on this decomposition, we define the following iid random variable sequences
\begin{align*}
    \widetilde{z}^{(j)}(n) & \delequal{} Z^{\leq C,(j)}_n - \mathbb{E}Z^{\leq C,(j)}_n, \\
    \widetilde{Z}^{(j)}(n) & \delequal{} \norm{Z^{\downarrow,(j)}_{C,\delta,\eta}(n)}, \\
    \widetilde{Z}^{(j)}_\beta(n) & \delequal{} \norm{Z^{\leq 1/\eta^\beta,(j)}_n}.
\end{align*}
Meanwhile, define the projection operators $\textbf{P}_i:\mathbb{R}^d \mapsto \mathbb{R}$ with $\textbf{P}_i(v_1,\cdots,v_d) = v_i$. First, recall that $\mathbb{E}Z_1 = \bm{0}$. So for all $C$ sufficiently large, we have
\begin{align*}
    \sup_{y \geq C}\norm{\mathbb{E}Z^{\leq y}_1 } = \sup_{y \geq C}\norm{\mathbb{E}Z^{> y}_1 } < \frac{\epsilon}{2t}.
\end{align*}
Moreover, due to $H_j \in RV_{-(\alpha_j + 1)}$ for all $j = 1,2,\cdots,m$ and $H_0(x) = o(1/x^\alpha)$ for any $\alpha > 0$ (for definition of functions $H_j$, see \cref{def H function Rd}),  we know that $\mathbb{E}\norm{Z^{(j)}_n} < \infty$ for any $j = 0,1,\cdots,m$, implying that for all $C$ large enough, we have (for any $j = 0,1,\cdots,m$)
\begin{align}
    \sup_{y,z \geq C}\norm{ \mathbb{E}Z^{\leq y,(j)}_1 - \mathbb{E}Z^{\leq z,(j)}_1 } < \frac{1}{2t}\cdot\frac{\epsilon}{ 24\sqrt{m+1} }. \label{proof prob event A choice of C Rd}
\end{align}
Fix such $C$. Besides, recall that in Assumption \ref{assumption 6 regular variation of the Rd noise} we have $1<\alpha_1<\alpha_2<\cdot<\alpha_m$. Therefore, we are able fix some $\beta \in (0,1)$ and $\delta > 0$ so that
\begin{align}
    \beta\alpha_1 & > 1, \label{proof prob event A choose param 1 Rd} \\
    (N + 1)\delta & < \frac{\epsilon}{12\sqrt{m+1}}, \label{proof prob event A choose param 2 Rd} \\
    \big( \frac{N}{ \beta\alpha_1 - 1 } + 1 \big)\delta & < \frac{\epsilon}{12\sqrt{m+1}}. \label{proof prob event A choose param 3 Rd}
\end{align}
With the fixed $C$ and $\delta$, note that eventually $\delta/\eta > C$ as $\eta \downarrow 0$. Therefore, for all $\eta$ sufficiently small,
\begin{align}
    & \mathbb{P}\Big( \max_{ n = 1,2,\cdots,\ceil{ t/\eta } } \eta\norm{ Z^{\leq\delta/\eta}_1 + \cdots + Z^{\leq\delta/\eta}_n } > \epsilon  \Big) \nonumber \\
    \leq & \mathbb{P}\Big( \max_{ n = 1,2,\cdots,\ceil{ t/\eta } } \eta\norm{ \sum_{k = 1}^n Z^{\leq\delta/\eta}_k - \mathbb{E}Z^{\leq\delta/\eta}_k } > \frac{\epsilon}{2}  \Big) \nonumber \\
    = & \mathbb{P}\Big( \max_{ n = 1,2,\cdots,\ceil{ t/\eta } } \eta\norm{ \sum_{j = 0}^m\sum_{k = 1}^n Z^{\leq C,(j)}_k - \mathbb{E}Z^{\leq C,(j)}_k + Z^{\downarrow,(j)}_{C,\delta,\eta}(k) + \mathbb{E}Z^{\leq C,(j)}_k - \mathbb{E}Z^{\leq\delta/\eta,(j)}_k } > \frac{\epsilon}{2}  \Big) \nonumber \\
    \leq & \sum_{j = 0}^{m}\mathbb{P}\Big( \max_{ n = 1,2,\cdots,\ceil{ t/\eta } } \eta\norm{\sum_{k = 1}^n Z^{\leq C,(j)}_k - \mathbb{E}Z^{\leq C,(j)}_k + Z^{\downarrow,(j)}_{C,\delta,\eta}(k) + \mathbb{E}Z^{\leq C,(j)}_k - \mathbb{E}Z^{\leq\delta/\eta,(j)}_k }
    \nonumber 
    \\
    & \ \ \ \ \ \ \ \ \ \ \ \ \ \ \ \ \ \ \ \ \ \ \ \ \ > \frac{\epsilon}{2\sqrt{m+1}}  \Big) \nonumber \\
    \leq & \sum_{j = 0}^{m}\mathbb{P}\Big( \max_{ n = 1,2,\cdots,\ceil{ t/\eta } } \eta\norm{ \sum_{k = 1}^n \widetilde{z}^{(j)}(k)  } > \frac{\epsilon}{6\sqrt{m+1}}   \Big) \label{proof prob event A goal 1 Rd} \\
    + & \sum_{j = 0}^{m}\mathbb{P}\Big( \eta\sum_{k = 1}^{ \ceil{t/\eta} } \widetilde{Z}^{(j)}(k)   > \frac{\epsilon}{6\sqrt{m+1}}   \Big).\label{proof prob event A goal 2 Rd}
\end{align}
We bound the two terms \cref{proof prob event A goal 1 Rd} and \cref{proof prob event A goal 2 Rd} respectively. First, observe that
\begin{align*}
    \cref{proof prob event A goal 1 Rd} & \leq \sum_{j = 0}^{m}\sum_{i = 1}^d \mathbb{P}\Big( \max_{ n = 1,2,\cdots,\ceil{ t/\eta } } \eta\big|\sum_{k = 1}^n \textbf{P}_i\big(\widetilde{z}^{(j)}(k)\big)  \big| > \frac{\epsilon}{6\sqrt{(m+1)d}}   \Big).
\end{align*}
Let $\widetilde{z}^{(j)}_i(k) = \textbf{P}_i\big(\widetilde{z}^{(j)}(k)\big)$ and let $\widetilde{z}^{(j)}_i$ be an iid copy. By definition, $|\widetilde{z}^{(j)}_i| \leq 2C$. Then from Hoeffding's inequality,
\begin{align*}
    \cref{proof prob event A goal 1 Rd} & \leq 2(m+1)d\cdot\exp\Big( - \frac{ 1 }{2 }\cdot \big(\frac{\epsilon}{6\sqrt{d(m+1)}}\big)^2 \cdot\frac{ 1/\eta^2 }{ 2C\ceil{t/\eta} \ } \Big) = o(\eta^N).
\end{align*}
As for term \cref{proof prob event A goal 2 Rd}, w.l.o.g. let us fix some $j = 0,1,\cdots,m$ and bound
\begin{align}
& \mathbb{P}\Big( \eta\sum_{n = 1}^{ \ceil{t/\eta} } \widetilde{Z}^{(j)}(n)   > \frac{\epsilon}{6\sqrt{m+1}}   \Big)\label{proof prob event A goal 3 Rd} \\
= &\sum_{i=0}^{k-1}\underbrace{\P\Big(\eta\sum_{n = 1}^{ \ceil{t/\eta} } \widetilde{Z}^{(j)}(n)   > \frac{\epsilon}{6\sqrt{m+1}},\ I = i\Big)}_{\triangleq \text{(I)}} + \underbrace{\P\Big(\eta\sum_{n = 1}^{ \ceil{t/\eta} } \widetilde{Z}^{(j)}(n)   > \frac{\epsilon}{6\sqrt{m+1}},\ I \geq k\Big)}_{\triangleq \text{(II)}} \nonumber
\end{align}
where $I \delequal\#\{n = 1,2,\cdots,\ceil{t/\eta}:\ \widetilde{Z}^{(j)}(n) > 1/\eta^\beta \}$
and $k$ is some positive integer such that
$$ k\delta < \frac{\epsilon}{12\sqrt{m+1}},\ \ k(\beta\alpha_1 - 1) > N.$$
Note that we can find such $k$ due to our choice of constants in \cref{proof prob event A choose param 1 Rd}-\cref{proof prob event A choose param 3 Rd}.

For (I) and any $i = 0,1,\cdots,k-1$, observe that (the non-negative RVs $\widetilde{Z}_\beta(n)$ and $\widetilde{Z}(n)$ are defined at the beginning of the proof)
\begin{align}
\text{(I)}
&\leq 
{\ceil{t/\eta} \choose i} \cdot\P\Big(\eta\sum_{n = 1}^{ \ceil{t/\eta} - i } \widetilde{Z}^{(j)}(n)   > \frac{\epsilon}{6\sqrt{m+1}} - i\eta\cdot\frac{\delta}{\eta},\ \widetilde{Z}^{(j)}(n) \leq 1/\eta^\beta\ \forall n \in [ \ceil{t/\eta} - i ] \Big)
\nonumber
\\&\leq 
\ceil{t/\eta}^i\cdot\P\Big(\sum_{n = 1}^{ \ceil{t/\eta} - i } \widetilde{Z}^{(j)}(n)   > \frac{\epsilon}{12\sqrt{m+1}}\cdot\frac{1}{\eta},\ \widetilde{Z}^{(j)}(n) \leq 1/\eta^\beta\ \forall n \in [ \ceil{t/\eta} - i ]\Big)
\nonumber
\\& \leq 
\ceil{t/\eta}^i\cdot\P\Big(\sum_{n = 1}^{ \ceil{t/\eta}} \widetilde{Z}^{(j)}_\beta(n)   > \frac{\epsilon}{12\sqrt{m+1}}\cdot\frac{1}{\eta}\Big).
\nonumber
\end{align}
Let $\widetilde{W}^{(j)}(n) \delequal \widetilde{Z}^{(j)}_\beta(n) - \E \widetilde{Z}^{(j)}_\beta(n)$ and let $\widetilde{W}^{(j)}$ be an iid copy. 
Due to \cref{proof prob event A choice of C Rd}, for all $\eta$ sufficiently small we will have that $\ceil{t/\eta}|\E\widetilde{Z}^{(j)}_\beta(1)| < \frac{\epsilon}{24\sqrt{m+1}}\cdot \frac{1}{\eta}$. Therefore, for all $\eta$ sufficiently small,
\begin{align}
    \text{(I)}
&\leq 
\ceil{t/\eta}^i\cdot\P\Big(|\sum_{n = 1}^{ \ceil{t/\eta}} \widetilde{W}^{(j)}(n)|   > \frac{\epsilon}{24\sqrt{m+1}}\cdot\frac{1}{\eta}\Big).
\label{proof lemma bernstein prior to bernstein bound Rd}
\end{align}
Observe that $|\widetilde{W}^{(j)}| \leq 2/\eta^\beta$ and 
\begin{align*}
    \E(\widetilde{W}^{(j)})^2 = var{\widetilde{Z}^{(j)}_\beta} \leq \E(\widetilde{Z}^{(j)}_\beta)^2 & = \int_0^{ \infty }2y\mathbb{P}( \widetilde{Z}^{(j)}_\beta > y )dy = \int_0^{ 1/\eta^\beta }2yH_j(y)dy.
\end{align*}
Note that there are only two possibilities:
\begin{itemize}
    \item If $j \neq 0$ and $\alpha_j \leq 2$, then from Karamata's theorem, we have $ \E(\widetilde{Z}^{(j)}_\beta)^2 \in RV_{ -\beta(2 - \alpha_j)  }(\eta)$ and note that $\beta(2 - \alpha_j) < 1$;
    \item Otherwise, one can find some $\Delta > 0$ such that $H_j(x) = o(1/x^{2 + \Delta})$, implying that  $\E(\widetilde{Z}^{(j)}_\beta)^2 \leq \int_0^{ \infty }2yH_j(y)dy < \infty$ regardless of the actual value of $\eta > 0$.
\end{itemize}
In any case, using Bernstein's inequality, we can conclude that
\begin{align*}
    \cref{proof lemma bernstein prior to bernstein bound Rd}
    & \leq
    \ceil{t/\eta}^i \cdot 2\exp\big(  - \frac{ \frac{1}{2}\big(\frac{\epsilon}{12\sqrt{m+1}}\big)^2/\eta^2  }{ \ceil{t/\eta} \E(\widetilde{W}^{(j)})^2 + \frac{1}{3}\big(\frac{\epsilon}{12\sqrt{m+1}}\big)/\eta^{1 - \beta} } \big) = o(\eta^N).
\end{align*}
On the other hand,
\begin{align*}
\text{(II)}&
\leq 
\P( I \geq k) 
\leq 
{\ceil{t/\eta}\choose k}\cdot \P\Big(\widetilde{Z}^{(j)}(n) > 1/\eta^\beta \ \forall n=1,\ldots,k\Big)
\leq 
\ceil{t/\eta}^k\cdot \Big(H_j(1/\eta^\beta)\Big)^k,
\end{align*}
which is upper bounded by a function regularly varying w.r.t.\ $\eta$ with index $k\big(\beta\alpha_1 - 1\big) > N$.
Therefore, $\text{(II)} = o(\eta^N)$ and we conclude the proof.
\end{proof}

\subsection{Analyzing different scenarios before the first exit or return}

Using results and arguments above, we are able to illustrate the typical behavior of the SGD iterates $X^\eta_n$ in the following two scenarios. First, we show that, when starting from anywhere in $\mathcal{G}$, the SGD iterates $X^\eta_n$
 will most likely return to the neighborhood of the $\bm{0}$ within a short period of time without exiting $\Omega$. 
\begin{lemma} \label{lemma return to local minimum quickly Rd}
For any $\epsilon \in (0,\bar{\epsilon})$, the following claim holds:
\begin{align*}
   \lim_{\eta \downarrow 0} \inf_{ x : \bm{d}(x,\mathcal{G}^c)>\epsilon } \mathbb{P}_x\Big( X^\eta_n \in \mathcal{G} \ \forall n \leq T_\text{return}(\eta,\epsilon) \text{and }T_\mathrm{return}(\eta,\epsilon) \leq \rho(\epsilon)/\eta  \Big) = 1
\end{align*}
where the stopping time involved is defined as
\begin{align*}
    T_\text{return}(\eta,\epsilon) \delequal{} \min\{ n \geq 0: \norm{X^\eta_n} < 2\epsilon \}
\end{align*}
and the function $\rho(\epsilon) = c_1 + c_1\log(2/\epsilon)$ with $c_1 \in (0,\infty)$ being the constant specified in Lemma \ref{lemma bound on return time Rd}.
\end{lemma} 

\begin{proof}
Let $t = \rho(\epsilon) =  c_1 + c_1\log(2/\epsilon)$ and $n = \ceil{t/\eta}$ 
and arbitrarily choose some $x$ with $\bm{d}(x,\mathcal{G}^c) > \epsilon.$
Also, fix some $N > 0$. 
Due to Lemma \ref{lemma prob event A Rd}, we are able to pick some $\delta > 0$ such that (note that $\eta n \leq 2t$ eventually as $\eta \downarrow 0$)
\begin{align*}
    \mathbb{P}\Big( \max_{ n = 1,2,\cdots,\ceil{ t/\eta } } \eta\norm{ Z^{\leq\delta/\eta}_1 + \cdots + Z^{\leq\delta/\eta}_n } > \frac{ \epsilon }{ 2\exp(2\eta n M) } \Big) = o(\eta^N)
\end{align*}
First, from Lemma \ref{lemma Ode Gd Gap Rd} we know that for any $\eta$ small enough such that
\begin{align*}
    2\eta M \exp(2tM) < \epsilon/2,
\end{align*}
we must have 
\begin{align*}
    \norm{\bm{x}^\eta(s,x) - \bm{y}^\eta_{\floor{s}}(x)} \leq \epsilon/2 \ \ \ \forall s \in [0,n].
\end{align*}
Next, using Lemma \ref{lemma SGD GD gap Rd}, we see that on event
\begin{align*}
    A(n,\eta,\frac{\epsilon}{2\exp(\eta n M) }, \delta) \bigcap \{ T^\eta_1(\delta) > \ceil{t/\eta} \},
\end{align*}
we must have
\begin{align*}
    \norm{ \bm{y}^\eta_k(x) - X^\eta_k(x)  } < \epsilon\ \ \ \forall k = 0,1,\cdots,\ceil{t/\eta}.
\end{align*}
Combining these facts with the Lemma \ref{lemma bound on return time Rd}, we have shown that, for all $\eta \in (0, \frac{\epsilon}{4M\exp(2tM)})$, on event $A(n,\eta,\frac{\epsilon}{2\exp(\eta n M) }, \delta) \bigcap \{ T^\eta_1(\delta) > \ceil{t/\eta} \}$ we must have $X^\eta_n \in \mathcal{G} \ \forall n \leq T_\text{return}(\eta,\epsilon) \text{ and }T_\mathrm{return}(\eta,\epsilon) \leq \rho(\epsilon)/\eta$. 
In other words,
\begin{align*}
    & \lim_{\eta \downarrow 0}\sup_{ x : \bm{d}(x,\mathcal{G}^c)>\epsilon } \mathbb{P}_x\Big( X^\eta_n \notin \mathcal{G} \ \text{for some } n \leq T_\text{return}(\eta,\epsilon) \text{ OR }T_\mathrm{return}(\eta,\epsilon) > \rho(\epsilon)/\eta  \Big)
    \\
    \leq & \lim_{\eta \downarrow 0}\mathbb{P}\big( A(n,\eta,\frac{\epsilon}{2\exp(\eta n M) }, \delta)^c \big) + \lim_{\eta \downarrow 0}\mathbb{P}( T^\eta_1(\delta) \leq \ceil{t/\eta} ).
\end{align*}
However, our choice of $\delta$ at the beginning of the proof implies that $\lim_{\eta \downarrow 0}\mathbb{P}\big( A(n,\eta,\frac{\epsilon}{2\exp(\eta n M) }, \delta)^c \big) = 0$.
Meanwhile, from Assumption \ref{assumption 6 regular variation of the Rd noise}, we see that
\begin{align*}
    \mathbb{P}( T^\eta_1(\delta) \leq \ceil{t/\eta} ) = &\sum_{k = 1}^{ \ceil{t/\eta} }\big(1 - \sum_{j =0}^m H_j(\delta/\eta) \big)^{k-1} \cdot( \sum_{j =0}^m H_j(\delta/\eta) )
    \\
    \leq & (m+1)H_1(\delta/\eta)\sum_{k = 1}^{ \ceil{t/\eta} }\big(1 -  H_1(\delta/\eta) \big)^{k-1} 
\end{align*}
for all $\eta$ sufficiently small (due to regularly varying nature of $H_j$ and $1<\alpha_1<\cdots<\alpha_m$). In particular, since $H_1 \in RV_{-\alpha_1}$, for any fixed $\Delta \in (0,\alpha_1 - 1)$, we have (for all $\eta$ sufficiently small),
\begin{align*}
    \mathbb{P}( T^\eta_1(\delta) \leq \ceil{t/\eta} )
    \leq & (m+1)\eta^{\alpha_1 - \Delta}\sum_{k = 1}^{ \ceil{t/\eta} }( 1 - \eta^{\alpha_1 + \Delta})^{k-1}  = \frac{m+1}{\eta^{2\Delta}}\mathbb{P}( U(\eta) \leq \ceil{t/\eta} )
\end{align*}
where $U(\eta) \distequal{} Geom( \eta^{ \alpha_1 + \Delta}  )$.
Lemma \ref{lemmaGeomFront} then tells us that, for all $\eta$ sufficiently small, $ \mathbb{P}( U(\eta) \leq \ceil{t/\eta} ) \leq \frac{2}{t}\eta^{\alpha_1 - 1 + \Delta} $, thus implying
\begin{align*}
    \limsup_{\eta \downarrow 0}\mathbb{P}( T^\eta_1(\delta) \leq \ceil{t/\eta} ) \leq \limsup_{\eta \downarrow 0}\frac{2(m+1)}{t}\eta^{\alpha_1 - 1 - \Delta} = 0
\end{align*}
 and concluding the proof.
\end{proof}

In the next result, we show that, once entering a $\epsilon-$small neighborhood of the local minimum, the SGD iterates will most likely stay there until the next large jump.

\begin{lemma} \label{lemma stuck at local minimum before large jump Rd}
Given $N_0 > 0$, the following claim holds for all $\epsilon \in (0,\bar{\epsilon}/3)$ and all $\delta>0$ that is sufficiently small:
\begin{align*}
    \sup_{x \in {B(\bm{0},2\epsilon)}}\mathbb{P}\Big( \exists n < T^\eta_1(\delta)\ s.t.\ \norm{X^\eta_n(x)} > 3\epsilon \Big) = o(\eta^{N_0})
\end{align*}
as $\eta \downarrow 0$.
\end{lemma}

\begin{proof}
Let $t = c_1 + c_1\log(1/\epsilon)$ where $c_1 \in (0,\infty)$ is the constant specified in Lemma \ref{lemma bound on return time Rd}, and $n = \ceil{t/\eta}$.
Fix some $\beta > 1 + \alpha_1$ and $N > \beta - 1 + N_0$.
Due to Lemma \ref{lemma prob event A Rd}, the following claim holds for all $\delta$ that is sufficiently small:
\begin{align*}
    \mathbb{P}\Big( \max_{ k = 1,2,\cdots,\ceil{ t/\eta } } \eta\norm{ Z^{\leq\delta/\eta}_1 + \cdots + Z^{\leq\delta/\eta}_k } > \frac{ \epsilon/2 }{ 2\exp(2\eta n M) } \Big) = o(\eta^N)
\end{align*}
We fix one of such $\delta$.
Therefore, for event
\begin{align*}
    \widetilde{A} \delequal &\Big\{ \exists j \leq \ceil{ \frac{ 1/\eta^\beta }{ \ceil{t/\eta} } }\text{ s.t. }
    \\
    &\ \ \ \ \ \ \ \ \ \  \max_{ k = 1,2,\cdots,\ceil{ t/\eta } } \eta\norm{ Z^{\leq\delta/\eta}_{1 + (j-1)\ceil{ t/\eta }} + Z^{\leq\delta/\eta}_{2 + (j-1)\ceil{ t/\eta }} + \cdots + Z^{\leq\delta/\eta}_k } > \frac{ \epsilon/2 }{ 2\exp(2\eta n M) } \Big\},
\end{align*}
the iid nature of noises $(Z_n)_{n \geq 1}$ implies that
\begin{align*}
    \P(\widetilde{A}) \leq & (1 + \frac{1}{t\eta^{\beta - 1}}) \mathbb{P}\Big( \max_{ k = 1,2,\cdots,\ceil{ t/\eta } } \eta\norm{ Z^{\leq\delta/\eta}_1 + \cdots + Z^{\leq\delta/\eta}_k } > \frac{ \epsilon/2 }{ 2\exp(2\eta n M) } \Big)
    \\
    = & o( \eta^{ N - \beta + 1} )\ \ \text{as }\eta \downarrow 0.
\end{align*}
Meanwhile,
Note that $T^\eta_1(\delta) \leq T^{\eta,(1)}_1(\delta) \delequal \min\{n \geq 0:\ \eta|Z_n| > \delta,\ Z_n \in E_1\}$.
Therefore,
$ \mathbb{P}( T^\eta_1(\delta) > 1/\eta^\beta ) \leq \P(T^{\eta,(1)}_1(\delta) > 1/\eta^\beta)$.
Specifically, note that $T^\eta_1(\delta) \distequal{} Geom( H_1(\delta/\eta) )$ and $H_1 \in RV_{-\alpha_1}$.
Lemma \ref{lemmaGeomDistTail} then tells us the existence of some $c > 0$ such that
\begin{align*}
    \mathbb{P}( T^\eta_1(\delta) > 1/\eta^\beta ) = o\big( \exp(-c/\eta) \big)
\end{align*}
as $\eta \downarrow 0$. In summary, we have established that
\begin{align*}
    \mathbb{P}( \widetilde{A}\cup\{  T^\eta_1(\delta) > 1/\eta^\beta \} ) = o(\eta^{N_0}).
\end{align*}
 
Now let us focus on the complementary event $(\widetilde{A})^c \cap \{  T^\eta_1(\delta) \leq 1/\eta^\beta \}$. On this event, there must be some $j \leq \ceil{ \frac{ 1/\eta^\beta }{ \ceil{t/\eta} }}$ such that $1 + (j-1)\ceil{t/\eta} \leq T^\eta_1(\delta) \leq j\ceil{t/\eta}$. 

Let us arbitrarily choose some $x \in B(\bm{0}, 2\epsilon)$. If $j = 1$, then due to \cref{ineq ODE geometric convergence to origin}, we know that $\bm{x}^\eta(s,x) \in B(\bm{0}, 2\epsilon)$ for all $s \geq 0$. Also, from Lemma \ref{lemma Ode Gd Gap Rd} we know that for any $\eta$ small enough such that $2\eta M \exp(2tM) < \epsilon/2,$
we must have 
\begin{align*}
    \norm{\bm{x}^\eta(s,x) - \bm{y}^\eta_{\floor{s}}(x)} \leq \epsilon/2 \ \ \ \forall s \in [0,\ceil{t/\eta}].
\end{align*}
Besides, on event $(\widetilde{A})^c \cup \{  T^\eta_1(\delta) \leq 1/\eta^\beta \}\cap\{j = 1\}$, due to Lemma \ref{lemma SGD GD gap Rd},
we must have
\begin{align*}
    \norm{ \bm{y}^\eta_k(x) - X^\eta_k(x)  } < \epsilon/2\ \ \ \forall k = 0,1,\cdots,T^\eta_1(\delta)
\end{align*}
 with $T^\eta_1(\delta) < \ceil{t/\eta}.$ In conclusion, in the case that $j = 1$, we must have $\norm{X^\eta_n(x)} < 3\epsilon$ for all $n < T^\eta_1(\delta)$. 
 
Now consider the case with $j \geq 2$. Similarly, due to Lemma \ref{lemma Ode Gd Gap Rd} and \ref{lemma SGD GD gap Rd}, we now have that
\begin{align*}
    \norm{\bm{x}^\eta(s,x) - \bm{y}^\eta_{\floor{s}}(x)} & \leq \epsilon/2 \ \ \ \forall s \in [0,\ceil{t/\eta}], \\
    \norm{ \bm{y}^\eta_k(x) - X^\eta_k(x)  } & < \epsilon/2\ \ \ \forall k = 0,1,\cdots,\ceil{t/\eta}, \\
    \norm{\bm{x}^\eta(s,x)} & < 2\epsilon\ \ \forall s \in [0,\ceil{t/\eta}].
\end{align*}
In particular, due to our choice of $t$ at the beginning of the proof and Lemma \ref{lemma bound on return time Rd}, one can see that $\norm{\bm{x}^\eta(\ceil{t/\eta},x)} < \epsilon.$ Collecting all the results, we now know that
\begin{align*}
    \norm{ X^\eta_k(x) } < 3\epsilon\ \forall k \leq \ceil{t/\eta},\ \ \norm{ X^\eta_{ \ceil{t/\eta} }(x) } < 2\epsilon.
\end{align*}
Now it suffices to apply the repeated apply the same arguments above. For instance, if $j = 2$, then by letting $x_1 = X^\eta_{ \ceil{t/\eta} }(x)$ and bounding the gap between trajectory of $X^\eta_{ \floor{s} + \ceil{t/\eta}  }(x)$ and $\bm{x}^\eta(s,x_1)$ using Lemma \ref{lemma Ode Gd Gap Rd} and \ref{lemma SGD GD gap Rd}, one can show that $\norm{X^\eta_n(x)} < 3\epsilon$ for all $n<T^\eta_1(\delta)$ (since in the case of $j=2$, we have $T^\eta_1(\delta)<2\ceil{t/\eta}$). Otherwise, we have $j \geq 3$ and the same arguments above can be applied to show that 
\begin{align*}
    \norm{ X^\eta_k(x) } < 3\epsilon\ \forall k \leq 2\ceil{t/\eta},\ \ \norm{ X^\eta_{ \ceil{t/\eta} }(x) } < 2\epsilon, ,\ \ \norm{ X^\eta_{ 2\ceil{t/\eta} }(x) } < 2\epsilon.
\end{align*}
In summary, by proceeding inductively, we establish that for all $\eta \in (0, \frac{\epsilon}{4M\exp(2tM)})$,
\begin{align*}
    \sup_{ x\in B(\bm{0}, 2\epsilon) }\mathbb{P}\Big( \exists n < T^\eta_1(\delta)\ s.t.\ \norm{X^\eta_n(x)} > 3\epsilon \Big) \leq \mathbb{P}( \widetilde{A}\cup\{  T^\eta_1(\delta) > 1/\eta^\beta \} )
\end{align*}
and this concludes the proof.
\end{proof}



We introduce a few concepts that will be crucial to the analysis below. Recall the definition of ODE with jumps $\widetilde{\bm{x}}$ 
(note that we will drop the notational dependency on learning rate $\eta$ when we choose $\eta = 1$). 
Recall the definitions of $\bm{i}^*,k^*$ (see Assumption \ref{assumption unique configuration for minimum exit cost} and \ref{assumption bounded away} and the remarks underneath). We define the following mapping $h$ from $\bm{r} = (r_1,\cdots,r_{k^*}), \bm{\theta} = (\theta_1,\cdots,\theta_{k^*}) \in (\mathbb{S}^{d-1})^{k^*}, \bm{t} = (t_1,\cdots,t_{k^*}) \in \{0\}\times\mathbb{R}^{k^*}_+$ such that
\begin{align}
    h(\bm{r},\bm{\theta},\bm{t}) = \widetilde{\bm{x}}(\sum_{i = 1}^{k^*}t_i, \bm{0};\ \bm{t},\bm{w}) \label{def mapping h Rd}
\end{align}
where $\bm{w} = (w_i)_{i = 1}^{k^*}$ with $w_i = r_i\theta_i$. 
From the continuity of the ODE flow, we see that $h$ is a continuous mapping.
Also, we introduce the concept of \emph{type} for configuration $\bm{i}^*$. Specifically, define
\begin{align}
\bm{j}(i_1,\cdots,i_m) \delequal \big\{ (j_1,\cdots,j_{k^*}) \in \{ 0,1,2,\cdots,m \}^{k^*}:\ \#\{n:\ j_n = k\} = i_k\ \forall k \in [m]  \big\} \label{def j i set}
\end{align}
and for any $\bm{j} \in \bm{j}(\bm{i}^*)$, we say that $\bm{w} = (w_1,\cdots,w_{k^*}) \in \mathcal{A}(\bm{i}^*)$ has type $\bm{j}$ if
\begin{align*}
    w_i \in E_{j_i}\ \forall i \in [k^*].
\end{align*}
In other words, based on the direction of each jump in $\bm{w} \in \mathcal{A}(\bm{i}^*)$ we group them into different types. Note that $| \bm{j}(\bm{i}^*) |<\infty$.
Now for any type $\bm{j} \in \bm{j}(\bm{i}^*)$, define a (Borel) measure on $\mathbb{R}^{k^*}_+ \times ( \mathbb{S}^{d-1} )^{k^*} \times \mathbb{R}^{k^* - 1}_+$ as
\begin{align}
    \mu_{\bm{j}}\delequal (\prod_{i = 1}^{k^*}\nu_{\alpha_{\bm{j}_i}}) \times ( \prod_{i=1}^{k^*} S_{ \bm{j}_i } ) \times \textbf{Leb}^{k^*-1} \label{def measure mu j Rd}
\end{align}
where, for any $\alpha > 0$, the measure $\nu_\alpha$ is the Borel measure on $(0,\infty)$ with $\nu_\alpha(x,\infty) = 1/x^{\alpha}$.
We use the concepts above to characterize behavior of large noises in $(Z_n)_{n \geq 1}.$

\begin{definition}\label{definitionTypeJ Rd}
For any $\eta,\delta > 0, \bm{j} \in \bm{j}(\bm{i}^*)$ and any Borel set $A \subseteq \mathbb{R}^d$, we say noise $Z_n$ is of \textbf{type-$(A,\delta,\eta,\bm{j})$} iff
\begin{itemize}
    \item $\eta\norm{Z_n} > \delta$;
    \item For $\widetilde{t}_1 = n$ and $\widetilde{t}_i \delequal \min\{ k > \widetilde{t}_{i-1}:\ \eta\norm{Z_k} > \delta  \}$ for all $i = 2,3,\cdots,k^*$, it holds that $\widetilde{t}_{i} - \widetilde{t}_{i-1} < 2\bar{t}/\eta$ for all $i = 2,3,\cdots,k^*$;
    \item For $t_i \delequal \widetilde{t}_{i} - \widetilde{t}_{i-1}$ (with $t_1 = 0$), $w_i = \eta Z_{\widetilde{t}_i}$, and
    \begin{align*}
        \bm{r} & \delequal (\norm{w_1},\norm{w_2},\cdots,\norm{w_{k^*}}), \\
        \bm{\theta} & = ( w_1/\norm{w_1},\cdots,w_{k^*}/\norm{w_{k^*}}  ), \\
        \bm{t} & = (t_1,\cdots,t_{k^*}),
    \end{align*}
it holds that $h(\bm{r},\bm{\theta},\bm{t}) \in A$. 
\end{itemize}
More generally, we say noise $Z_n$ is of \textbf{type-$(A,\delta,\eta)$} iff there exists $\bm{j} \in \bm{j}(\bm{i}^*)$ such that $Z_n$ is of \textbf{type-$(A,\delta,\eta,\bm{j})$}.
\end{definition}
Due to the iid nature of $(Z_j)_{j \geq 1}$, let us consider an iid sequence $(V_j)_{j \geq 0}$ where the sequence has the same law of $Z_1$. 
Note that for any fixed $n \geq 1$, the probability that $Z_n$ is of type-$(A,\delta,\eta,\bm{j})$ is equal to the probability that $V_0$ is of type-$(A,\delta,\eta,\bm{j})$. 
More specifically, 
let $H(x) = \sum_{j = 0}^m H_j(x) = \mathbb{P}( \norm{Z_1} > x )$.
Due to Assumption \ref{assumption 6 regular variation of the Rd noise}, $H\in RV_{-\alpha_1}$.
Besides, $\mathbb{P}(\eta\norm{V_0} > \delta) = H(\delta/\eta)$. 
Now we focus on conditional probability admitting the following form:
\begin{align*}
    p(A,\delta,\eta, \bm{j}) = \mathbb{P}\Big(V_0\ \text{is of type-$(A,\delta,\eta,\bm{j})$}\ \Big|\ \eta\norm{V_0} > \delta \Big). 
\end{align*}

\begin{lemma} \label{lemmaTypeJProb Rd}
For any $\delta > 0$, any $\bm{j} \in \bm{j}(\bm{i}^*)$, and any Borel set $A \subseteq \mathbb{R}^d$ such that
\begin{align}
    h( r_1,\cdots,r_{k^*}, \theta_1,\cdots,\theta_{k^*}, t_2,\cdots,t_{k^*}  ) \in A \ \Longrightarrow \ t_i < 2\bar{t}\ \forall i = 2,\cdots,k^* \text{ and }r_i > \delta\ \forall i \in [k^*], \label{lemmaTypeJ Rd bounded away assumption}
\end{align}
it holds that
\begin{align*}
    \mu_{\bm{j}}\big(h^{-1}(A^\circ)\big) 
    &\leq \liminf_\eta \frac{ p(A,\delta,\eta,\bm{j})  }{ \delta^{ \alpha_1}\cdot \frac{ \prod_{k = 1}^m \big( H_k(1/\eta) \big)^{i^*_k}  }{ H(1/\eta)\eta^{k^* -1}  }  } 
    \\
    &\leq \limsup_\eta \frac{ p(A,\delta,\eta,\bm{j})  }{ \delta^{\alpha_1}\cdot \frac{ \prod_{k = 1}^m \big( H_k(1/\eta) \big)^{i^*_k}  }{ H(1/\eta)\eta^{k^* -1}  }  } \leq \mu_{\bm{j}}\big(h^{-1}(\bar{A})\big).
\end{align*}
\end{lemma}

 \begin{proof}
Let us start by fixing some notations. Let $T_1 = 0$, and define stopping times $T_j = \min\{ n > T_{j - 1}: \eta\norm{V_n} > \delta \}$ and inter-arrival times $T^\prime_j = T_j - T_{j - 1}$ for any $j \geq 2$ (with $T^\prime_1 = 0$), 
and large jump $W_j = V_{T_j}$ for any $j \geq 0$. 
Note that: first, the pair $(T^\prime_i,W_i)$ is independent of $(T^\prime_j,W_j)$ for any $i \neq j$; besides, $W_j$ and $T^\prime_j$ are independent for all $j \geq 1$.

Define the following sequence (of random elements) $\bm{w} = (w_1,\cdots,w_{l^*})$ and $\bm{t} = (t_1,\cdots,t_{l^*})$ by
\begin{align*}
    w_i = \eta W_i,\ \ t_i = \eta T^\prime_i.
\end{align*}
and $\bm{r} = (\textbf{T}_r(w_i))_{i = 1}^{k^*}, \bm{\theta} = (\textbf{T}_\theta(w_i))_{i = 1}^{k^*}$ where operators $\textbf{T}_r(x) = \norm{x}, \textbf{T}_\theta(x) = x/\norm{x}$ constitute the polar coordinate transform.
By definition of {type-$(A,\delta,\eta,\bm{j})$}, we know that
\begin{itemize}
    \item $T^\prime_i \leq 2\bar{t}/\eta$ for any $i = 2,\cdots,k^*$;
    \item $\eta\norm{W_i}>{\delta}$ for any $i=1,2,\cdots,k^*$;
    \item $w_i \in E_{ \bm{j}_i }$ (i.e. $\theta_i \in F_i$) for all $i \in [k^*]$; 
    \item $h(\bm{r},\bm{\theta},\bm{t}) \in A$.
\end{itemize}
Therefore, (let $R_i = \textbf{T}_r(W_i), \Theta_i = \textbf{T}_\theta(W_i)$)
\begin{align}
    & p(A,\delta,\eta,\bm{j}) \nonumber \\
    = &\Big( \mathbb{P}(T^\prime_2 \leq 2\bar{t}/\eta ) \Big)^{k^* - 1} \cdot \int \mathbbm{1}\Big\{ ( \bm{r}, \bm{\theta}, \bm{t} ) \in h^{-1}(A) \Big\} \nonumber \\
    &\ \ \ \ \ \ \ \ \ \ \ \ \ \ \ \cdot \big(\prod_{i = 1}^{k^*}\mathbb{P}( W_i \in E_i )\cdot\mathbb{P}( \eta R_i = d r_i,\ \Theta_i = d\theta_i  )\big)\cdot\big(\prod_{i = 2}^{k^*}\mathbb{P}( \eta T^\prime_i = d t_i\ |\ \eta T^\prime_i \leq 2\bar{t} ) \big) \nonumber \\
    = & \Big( \mathbb{P}(T^\prime_2 \leq 2\bar{t}/\eta ) \Big)^{k^* - 1}\cdot \big[\prod_{i = 1}^{k^*}\mathbb{P}( W_i \in E_i )\big]\cdot \mathbb{Q}_{\eta,\delta,\bm{j}}\big(h^{-1}(A) \big)\label{proofCalculatePDeltaEpsilon Rd}
\end{align}
where $\mathbb{Q}_{\eta,\delta,\bm{j}}$ is the probability measure on $\mathbb{R}^{k^*}_+\times (\mathbb{S}^{d-1})^{k^*} \times \mathbb{R}^{k^*-1}_+$ induced by a sequence of random elements
$(\eta R_1,\cdots,\eta R_{k^*}, \Theta_1,\cdots,\Theta_{k^*}, \eta T^{\uparrow}_2(\eta),\cdots, \eta T^{\uparrow}_{k^*}(\eta))$ such that
\begin{itemize}
    \item For any $i = 1,\cdots,k^*$, we have $R_i = \textbf{T}_r(W^\uparrow_i(\eta)), \Theta_i = \textbf{T}_\theta(W^\uparrow_i(\eta))$ where the distribution of $W^{\uparrow}_i(\eta)$ follows the law $\mathbb{P}\Big( V_0 \in \cdot\ \Big|\ \eta\norm{V_0} > \delta,\ V_0 \in E_{ \bm{j}_i } \Big)$; Besides, $(W^\uparrow_i(\eta))_{i = 1}^{k^*}$ is an independent sequence;
    \item For any $i = 2,\cdots,l^*$, the distribution of $T^{\uparrow}_i(\eta)$ follows from $\mathbb{P}\Big( \eta T^\delta_1(\eta) \in \cdot\ \Big|\ \eta T^\delta_1(\eta) \leq 2\bar{t} \Big)$ (for the definition of stopping times $T^\delta_k(\eta)$, see \cref{defArrivalTime large jump_Rd}); Besides, $(T^\uparrow_i(\eta))_{i = 1}^{k^*}$ is an independent sequence, and it is independent of $(W^\uparrow_i(\eta))_{i = 1}^{k^*}$;
    \item $ \mathbb{Q}_{\eta,\delta,\bm{j}}(\cdot) = \mathbb{P}\Big(  (\eta R_1,\cdots,\eta R_{k^*}, \Theta_1,\cdots,\Theta_{k^*}, \eta T^{\uparrow}_2(\eta),\cdots, \eta T^{\uparrow}_{k^*}(\eta))   \in \cdot  \Big)$.
\end{itemize}
As for weak convergence of $(\eta R_1,\cdots,\eta R_{k^*}, \Theta_1,\cdots,\Theta_{k^*}, \eta T^{\uparrow}_2(\eta),\cdots, \eta T^{\uparrow}_{k^*}(\eta))$, observe that
\begin{itemize}
    \item For any $i \in [k^*]$, from Assumption \ref{assumption 6 regular variation of the Rd noise} one can see the regularly varying nature of distribution of $V_0$ on the cone $E_{\bm{j}_i}$ (hence for $W^\uparrow_i(\eta)$ as well), 
    thus yielding that $(\eta R_i,\Theta_i)\Rightarrow (R^*_i,\Theta^*_i)$ as $\eta \downarrow 0$
    where $R^*_i$ and $\Theta^*_i$ are independent,
    the law of $\Theta^*_i$ is $S_{\bm{j}_i}$,
    and the law of $R^*_i$ is the Pareto distribution with 
    $$\mathbb{P}(R^*_i > x) = \frac{\delta^{ \alpha_{\bm{j}_i} }}{( x \vee \delta )^{  \alpha_{\bm{j}_i} }};$$
    \item For any $x \in [0,2\bar{t}]$, since $\lim_{\eta \downarrow 0}\floor{x/\eta}H(\delta/\eta) = 0$, it is easy to show that
    $$\lim_{\eta \downarrow 0}\frac{1 - (1 - H(\delta/\eta))^{\floor{x/\eta}} }{ \floor{x/\eta}H(\delta/\eta)  } = 1;$$
    therefore, we have (for any $x \in (0,2\bar{t}]$)
    \begin{align*}
        \mathbb{P}(\eta T_1^\delta(\eta) \leq x\ |\ \eta T_1^\delta(\eta) \leq 2\bar{t}) & = \frac{ 1 - (1 - H(\delta/\eta))^{\floor{x/\eta}}  }{1 - (1 - H(\delta/\eta))^{\floor{2\bar{t}/\eta}}  } \rightarrow \frac{x}{2\bar{t}}
    \end{align*}
    as $\eta\downarrow 0$, which implies that $T^{\uparrow}_i(\eta)$ converges weakly to a uniform RV on $[0,2\bar{t}]$.
\end{itemize}
Together with the assumption on set $A$ in \cref{lemmaTypeJ Rd bounded away assumption},
we can now see that, if we denote the weak limit of measure $\mathbb{Q}_{\eta,\delta, \bm{j}}$ as $\mu_{\delta,\bm{j}}$,
then (the measure $\mu_{\bm{j}}$ is defined in \cref{def measure mu j Rd})
\begin{align*}
    \mu_{\delta,\bm{j}}\big(h^{-1}(A)\big) = \frac{ \prod_{i = 1}^{k^*}\delta^{ \alpha_{\bm{j}_i}}  }{ (2\bar{t})^{k^*-1} }\mu_{\bm{j}}\big(h^{-1}(A)\big).
\end{align*}
From the continuity of mapping $h$, one can see that $h^{-1}(\bar{A})$ is a closed set and $h^{-1}(A^\circ)$ is an open set. Using Portmanteau theorem, we now have
\begin{align}
    \frac{ \prod_{i = 1}^{k^*}\delta^{ \alpha_{\bm{j}_i}}  }{ (2\bar{t})^{k^*-1} }\mu_{\bm{j}}\big(h^{-1}(A^\circ)\big) 
    & \leq \liminf_\eta \mathbb{Q}_{\eta,\delta, \bm{j}}\big(h^{-1}(A)\big) \leq \limsup_\eta \mathbb{Q}_{\eta,\delta, \bm{j}}\big(h^{-1}(A)\big)
    \\
    & \leq \frac{ \prod_{i = 1}^{k^*}\delta^{ \alpha_{\bm{j}_i}}  }{ (2\bar{t})^{k^*-1} }\mu_{\bm{j}}\big(h^{-1}(\bar{A})\big). \label{proof Type J Rd prob bound term 1}
\end{align}

Moving on, we analyze the limit of the other terms in \cref{proofCalculatePDeltaEpsilon Rd}.
For any $i \in [k^*]$, note that
\begin{align*}
    \mathbb{P}( W_i \in E_i ) = \frac{ H_{\bm{j}_i}(\delta/\eta) }{ H(\delta/\eta) }.
\end{align*}
Now due to the regularly varying nature of $H_j$ and $H$,
\begin{align}
    \lim_{\eta} \frac{ \prod_{i = 1}^{k^*}\mathbb{P}( W_i \in E_i )  }{ \prod_{i = 1}^{k^*}H_{\bm{j}_i}(1/\eta)/H(1/\eta)  } = \frac{ \delta^{ k^*\alpha_1 }  }{ \prod_{i = 1}^{k^*}\delta^{\alpha_{\bm{j}_i}}  }. \label{proof Type J Rd prob bound term 2}
\end{align}
On the other hand, from Lemma \ref{lemmaGeomFront} and the regularly varying nature of function $H$, for any fixed $\kappa > 1$, we have
\begin{align*}
    \limsup_{\eta \downarrow 0}\Big( \frac{\delta^{\alpha_1}}{2\bar{t}}\cdot \frac{\mathbb{P}(T^\prime_2 \leq 2\bar{t}/\eta )}{H(1/\eta)/\eta}  \Big)^{k^* - 1} & \leq \kappa^{k^* - 1}\limsup_{\eta \downarrow 0}\Big( \frac{\delta^{\alpha_1}}{2\bar{t}}\cdot \frac{2\bar{t}H(\delta/\eta)/\eta}{H(1/\eta)/\eta} \Big)^{k^* - 1} = \kappa^{k^* - 1}, 
    \\
    \liminf_{\eta \downarrow 0}\Big( \frac{\delta^{\alpha_1}}{2\bar{t}}\cdot \frac{\mathbb{P}(T^\prime_2 \leq 2\bar{t}/\eta )}{H(1/\eta)/\eta}  \Big)^{k^* - 1} & \geq (1/\kappa)^{k^* - 1}\liminf_{\eta \downarrow 0}\Big( \frac{\delta^{\alpha_1}}{2\bar{t}}\cdot \frac{2\bar{t}H(\delta/\eta)/\eta}{H(1/\eta)/\eta} \Big)^{k^* - 1}
    \\
    & = (1/\kappa)^{k^* - 1}.
\end{align*}
Due to arbitrariness of $\kappa > 1$, we yield that
\begin{align}
    \lim_\eta \frac{  \big( \mathbb{P}(T^\prime_1 \leq 2\bar{t}/\eta ) \big)^{k^* - 1}  }{ \Big( \frac{H(1/\eta)}{\eta}\cdot \frac{2\bar{t}}{ \delta^{\alpha_1} } \Big)^{k^* - 1}   } = 1. \label{proof Type J Rd bound term 3}
\end{align}
Collecting all the limits in \cref{proof Type J Rd prob bound term 1}-\cref{proof Type J Rd bound term 3} and plugging them into \cref{proofCalculatePDeltaEpsilon Rd}, we now have established that
\begin{align*}
    \mu_{\bm{j}}\big(h^{-1}(A^\circ)\big) \leq \liminf_\eta \frac{ p(A,\delta,\eta,\bm{j})  }{ \delta^{\alpha_1}\cdot \frac{ \prod_{i = 1}^{k^*} H_{\bm{j}_i}(1/\eta)  }{ H(1/\eta)\eta^{k^* -1}  }  } \leq \limsup_\eta \frac{ p(A,\delta,\eta,\bm{j})  }{ \delta^{\alpha_1}\cdot \frac{ \prod_{i = 1}^{k^*} H_{\bm{j}_i}(1/\eta)  }{ H(1/\eta)\eta^{k^* -1}  }  } \leq \mu_{\bm{j}}\big(h^{-1}(\bar{A})\big).
\end{align*}
To conclude the proof, recall that for any type $\bm{j} \in \bm{j}(\bm{i}^*)$, we have $\#\{k = 1,2,\cdots,k^*:\ \bm{j}_k = j \} = i^*_j$ (i.e. for any $j \in [m]$, the number of elements in $ (\bm{j}_1,\cdots,\bm{j}_{k^*})$ that are equal to $j$ is exactly $i^*_j$), thus $ \prod_{i = 1}^{k^*} H_{\bm{j}_i}(1/\eta) = \prod_{k = 1}^m \big( H_k(1/\eta) \big)^{i^*_k}$.
\end{proof}
More generally, for
\begin{align}
    p(A,\delta,\eta) = \mathbb{P}\Big(V_0\ \text{is of type-$(A,\delta,\eta)$}\ \Big|\ \eta\norm{V_0} > \delta \Big), \label{def Type conditional probability generalized}
\end{align}
we know that $p(A,\delta,\eta) = \sum_{ \bm{j} \in \bm{j}(\bm{i}^*) } p(A,\delta,\eta,\bm{j})$.
Also, define measure $\mu$ as
\begin{align}
    \mu = \sum_{ \bm{j} \in \bm{j}(\bm{i}^*) }\mu_{\bm{j}}. \label{def measure mu Rd}
\end{align}
The next result follows immediately from Lemma \ref{lemmaTypeJProb Rd} and the fact that $|\bm{j}(\bm{i}^*)|<\infty$. Recall that $J_\mathcal{G}$ is defined in \cref{def minimum cost J_G}.
\begin{corollary} \label{CorTypeProbRd}
There exists a function $\widetilde{\lambda}(\eta) \in RV_{1 + J_\mathcal{G} - \alpha_1 }(\eta)$ such that, given any $\delta > 0$ and any Borel set $A \subseteq \mathbb{R}^d$ with
\begin{align}
    h( r_1,\cdots,r_{k^*}, \theta_1,\cdots,\theta_{k^*}, t_2,\cdots,t_{k^*}  ) \in A \ \Longrightarrow \ t_i < 2\bar{t}\ \forall i = 2,\cdots,k^* \text{ and }r_i > \delta\ \forall i \in [k^*], 
\end{align}
it holds that
\begin{align*}
    \mu\big(h^{-1}(A^\circ)\big) \leq \liminf_\eta \frac{ p(A,\delta,\eta)  }{ \delta^{ \alpha_1}\cdot \widetilde{\lambda}(\eta)  } \leq \limsup_\eta \frac{ p(A,\delta,\eta)  }{ \delta^{\alpha_1}\cdot \widetilde{\lambda}(\eta)  } \leq \mu\big(h^{-1}(\bar{A})\big).
\end{align*}
In particular, the regularly varying function $\widetilde{\lambda}(\cdot)$ admits the form
\begin{align}
    \widetilde{\lambda}(\eta) = \frac{ \prod_{j= 1}^m \big( H_j(1/\eta) \big)^{i^*_j}  }{ H(1/\eta)\eta^{k^* -1}  } \label{def scaling function tilde lambda}
\end{align}
\end{corollary}

Consider the following stopping times
\begin{align*}
    \sigma(\eta) & = \min\{n \geq 0: X^\eta_n \notin \mathcal{G}\}; 
    \\
    R(\epsilon,\delta,\eta) & = \min\{n \geq T^\eta_1(\delta): \norm{X^\eta_n} \leq 3\epsilon \}. 
\end{align*}
Here $\sigma$ indicate the first time that the iterates exit domain $\mathcal{G}$, 
while $R$ denotes the time the SGD iterates return to a small neighborhood of the attractor $\bm{0}$ 
after the first large jump. 
In the next few results, we study the probability of the different scenarios regarding the first exit time $\sigma(\eta)$ and first return time $R(\epsilon,\delta,\eta)$.
To this end, let
\begin{align*}
    K^{\eta}(\delta) \delequal \max\{ k \geq 0:\ T^\eta_k(\delta) \leq \sigma(\eta) \wedge R(\epsilon,\delta,\eta) \}
\end{align*}
be the count of large jumps before the first exit or first return.
Furthermore, we introduce the following concept as the \emph{accumulated cost} of large jumps before the first exit or return.
Let $\mathcal{J}^{\eta,\delta}_0 = 0$ and 
\begin{align*}
    \mathcal{J}^{\eta,\delta}_k \delequal \mathcal{J}^{\eta,\delta}_{k-1} + J( W^\eta_{k}(\delta) )\ \ \ \forall k \leq K^{\eta}(\delta)
\end{align*}
where the cost function $J(\cdot)$ is defined in \cref{def jump cost function J}. 
In other words, $\mathcal{J}^{\eta,\delta}_k$ indicates the total cost of the first $k$ large jumps, if there are at least $k$ large jumps, before the first exit or first return.
Similarly, we can also define a step-wise accumulated cost as
\begin{align*}
    \mathcal{J}^{\downarrow}_{\eta,\delta}(n) = \max\{ \mathcal{J}^{\eta,\delta}_k:\ T^\eta_k(\delta) \leq n \}\ \forall n \leq \sigma(\eta) \wedge R(\epsilon,\delta,\eta).
\end{align*}
In other words, $\mathcal{J}^{\downarrow}_{\eta,\delta}(n)$ evaluates the total cost of large jumps up until step $n$.
As a preparation for our analyses below, we first discuss the following technical tools. 
For any $\epsilon > 0$, let 
$\hat{t}(\epsilon) = c_1 + c_1 \log(1/\epsilon)$ where $c_1$ is the constant in Lemma \ref{lemma bound on return time Rd}.
Besides, define
\begin{align}
    \widetilde{\epsilon}(\epsilon) \delequal \frac{\epsilon}{ 4\exp(2\hat{t}M) } \wedge \frac{\epsilon}{\bar{\rho}( \hat{t}(\epsilon ))} \wedge \frac{\epsilon}{\widetilde{\rho}( \hat{t}(\epsilon ))} \label{def function tilde epsilon Rd}
\end{align}
where functions $\bar{\rho}(\cdot), \widetilde{\rho}(\cdot)$ are defined in Corollary \ref{corollary ODE GD gap Rd} and \ref{corollary sgd gd gap Rd} respectively.
Furthermore, define an event $A^\times = A^\times_1(\epsilon,\delta,\eta) \cup A^\times_2(\epsilon,\delta,\eta)$ where
\begin{align}
     A^\times_1(\epsilon,\delta,\eta) & \delequal \Big\{ \exists n < T^\eta_1(\delta)\ s.t.\ \norm{X^\eta_n(x)} > 3\epsilon \Big\}, \label{def event A cross 1 for A cross Rd} \\
     A^\times_2(\epsilon,\delta,\eta) & \delequal{} \Big\{ \exists j = 2,\cdots,l^*\ s.t.\nonumber 
     \\
     &\ \ \ \ \ \ \ \ \ \ \max_{k = 1,2,\cdots,(T^\eta_j(\delta) -T^\eta_{j-1}(\delta) - 1) \wedge (2\hat{t}(\epsilon)/\eta) } \eta\norm{ Z_{ T^\eta_{j-1}(\delta) + 1 } + \cdots + Z_{ T^\eta_{j-1}(\delta) + j } } > \widetilde{\epsilon}(\epsilon) \Big\} \label{def event A cross 2 for A cross Rd}
\end{align}
where the positive integer $l^*$ is defined in \cref{def l star Rd}.
By definition of $l^*$, we must have $K^\eta(\delta) < l^*$.
Lastly, define events (for any $K \in [l^*]$)
\begin{align*}
    B^\times_1(K) & \delequal (A^\times)^c \cap \{\mathcal{J}^{\eta,\delta }_K < J_\mathcal{G},\ K \leq K^\eta(\delta) \}
    \\
    & \cap \{\forall j \in [K] ,\ [T^\eta_{j+1}(\delta)\wedge \sigma(\eta) \wedge R(\epsilon,\delta,\eta)] -  T^\eta_{j}(\delta) \leq \frac{2\hat{t}(\epsilon)}{\eta}  \},
    \\
    B^\times_2(K) & \delequal (A^\times)^c \cap \{\mathcal{J}^{\eta,\delta}_{ K } \leq J_\mathcal{G},\ K \leq K^\eta(\delta) \}
    \\
    & \cap \{\exists j \in [K]\text{ s.t. } [T^\eta_{j+1}(\delta)\wedge \sigma(\eta) \wedge R(\epsilon,\delta,\eta)] -  T^\eta_{j}(\delta) > \frac{2\hat{t}(\epsilon)}{\eta}  \}.
\end{align*}
Recall that the minimum cost for exit $J_\mathcal{G}$ is defined in \cref{def minimum cost J_G}. Note that, in the definition of $B^\times_1(K)$ and $B^\times_2(K)$ above, $T^\eta_{j+1}(\delta)\wedge \sigma(\eta) \wedge R(\epsilon,\delta,\eta) < T^\eta_{j+1}(\delta)$ only if $j = K = K^\eta(\delta)$.

\begin{lemma} \label{lemma event B 1 cross Rd}
For any $K \in [l^*]$, any $\epsilon \in (0,\bar{\epsilon}/3)$ and any $\eta > 0$ sufficiently small, the following claim holds on event $B^\times_1(K)$:
\begin{align*}
\sup_{ s \in [0, T^\eta_{K+1}(\delta) \wedge \sigma(\eta) \wedge R(\epsilon,\delta,\eta) - T^\eta_1(\delta)  ] }\norm{  X^\eta_{ \floor{s} + T^\eta_1(\delta)  }(x) - \widetilde{\bm{x}}^\eta\big(s,X^{(1)}(x);\ \bm{T},\bm{W}  \big)  } < 2\epsilon
\end{align*}
where
$\bm{T} = \big(0,T^\eta_2(\delta) - T^\eta_1(\delta), T^\eta_3(\delta) - T^\eta_2(\delta), \cdots, T^\eta_{ K + 1 }(\delta) - T^\eta_{ K  }(\delta)\big)$,
$\bm{W} = \big( W^\eta_1(\delta), W^\eta_2(\delta),\cdots,W^\eta_{ K + 1 }(\delta) \big)$,
and $X^{(1)}(x) = X^\eta_{ T^\eta_1(\delta) - 1 }(x)$.
\end{lemma}

\begin{proof}

We focus on the distances between the following three objects: $X^\eta_n(x)$, 
\begin{align*}
    \widetilde{Y}^\eta_n(x) = 
    \begin{cases}
     X^\eta_n(x) & \text{ if }n < T^\eta_1(\delta); \\
     \widetilde{Y}^\eta_{n-1}(x) + \varphi_b\big( - \eta\nabla f\big( \widetilde{Y}^\eta_{n - 1}(x) \big) + \sum_{j \geq 1}\mathbbm{1}\{n = T^\eta_j(\delta)\}\eta Z_{n}\big) & \text{ otherwise;}
    \end{cases}
\end{align*}
and
\begin{align*}
    \bm{z}(s) = \widetilde{\bm{x}}^\eta\big(s,X^{(1)}(x);\ \bm{T},\bm{W}  \big)\ \forall s \geq 0.
\end{align*}
First of all, for $X^{(1)} = X^\eta_{ T^\eta_1(\delta) - 1 }(x) = \widetilde{Y}^\eta_{ T^\eta_1(\delta) - 1 }(x)$, by definition of event $(A^\times)^c$ we have $\norm{X^{(1)}} < 3\epsilon < \bar{\epsilon}$.
Then due to \cref{implication of lemma bound on small perturbation at origin} and the definition of event $B^\times_1(K)$ (in particular, the fact that $\mathcal{J}^{\eta,\delta}_K < J_\mathcal{G}$), we know that
\begin{align*}
    \bm{d}(\bm{z}(s), \mathcal{G}^c) > 100l^*\bar{\epsilon}\ \ \forall s \in \big[0, T^\eta_{K+1}(\delta) \wedge \sigma(\eta) \wedge R(\epsilon,\delta,\eta) - T^\eta_1(\delta) \big).
\end{align*}
In light of the definition of $B^\times_1(K)$ (i.e. the upper bound on $T^\eta_{j+1}(\delta) - T^\eta_{j}(\delta)$), by applying Corollary \ref{corollary ODE GD gap Rd}, we can show that
\begin{align*}
    \sup_{ s \in [0, T^\eta_{K+1}(\delta) \wedge \sigma(\eta) \wedge R(\epsilon,\delta,\eta) - T^\eta_1(\delta)] }\norm{ \bm{z}(s) - \widetilde{Y}^\eta_{ \floor{s} + T^\eta_1(\delta) }(x)  } < \epsilon.
\end{align*}
In particular, by applying Corollary \ref{corollary ODE GD gap Rd} inductively for all $ n \leq T^\eta_{K+1}(\delta) \wedge \sigma(\eta) \wedge R(\epsilon,\delta,\eta) - T^\eta_1(\delta)$, we know that the line segment between $\bm{z}(s)$ and $\widetilde{Y}^\eta_{ \floor{s} + T^\eta_1(\delta) }(x)$ is in $\mathcal{G}$ for all $s < T^\eta_{K+1}(\delta) \wedge \sigma(\eta) \wedge R(\epsilon,\delta,\eta) - T^\eta_1(\delta)$ so Corollary \ref{corollary ODE GD gap Rd} can be further applied to $n + 1$ and establish the inequality above.

Similarly, due to our choice of $\hat{\epsilon}(\epsilon)$ in \cref{def function tilde epsilon Rd} and the upper bound on $T^\eta_{j+1}(\delta) - T^\eta_{j}(\delta)$ in definition of event $B^\times_1(K)$, by applying Corollary \ref{corollary sgd gd gap Rd} inductively, we know that for all $\eta$ sufficiently small,
\begin{align*}
    \sup_{ j = 0,1,2,\cdots, T^\eta_{K+1}(\delta) \wedge \sigma(\eta) \wedge R(\epsilon,\delta,\eta) - T^\eta_1(\delta) }\norm{ \widetilde{Y}^\eta_{ \floor{s} + T^\eta_1(\delta) }(x) - X^\eta_{ \floor{s} + T^\eta_1(\delta) }(x)   } < \epsilon
\end{align*}
holds on event $B^\times_1(K)$ and this concludes the proof.
\end{proof}

\begin{lemma}\label{lemma event B 2 cross Rd}
For any $K \in [l^*]$, any $\epsilon > 0$ and any $\eta > 0$ sufficiently small, 
$$B^\times_2(K) = \emptyset.$$
\end{lemma}

\begin{proof}
The definition of event $B^\times_2(K)$ ensures that we can define some $j^*$ as the smallest integer in $[K^\eta(\delta)]$ such that for $T^* \delequal T^\eta_{j^*}(\delta) + \ceil{2\hat{t}(\epsilon)/\eta}$, we have
\begin{align}
    T^\eta_{j^*}(\delta) < T^* < T^\eta_{j^* + 1}(\delta) \wedge \sigma(\eta) \wedge R(\epsilon,\delta,\eta). \label{proof def T star Rd}
\end{align}

Analogous to the proof of the previous lemma, we focus on the pair-wise distances between the following three objects: $X^\eta_n(x)$, 
\begin{align*}
    \widetilde{Y}^\eta_n(x) = 
    \begin{cases}
     X^\eta_n(x) & \text{ if }n < T^\eta_1(\delta); \\
     \widetilde{Y}^\eta_{n-1}(x) + \varphi_b\big( - \eta\nabla f\big( \widetilde{Y}^\eta_{n - 1}(x) \big) + \sum_{j \geq 1}\mathbbm{1}\{n = T^\eta_j(\delta)\}\eta Z_{n}\big) & \text{ otherwise;}
    \end{cases}
\end{align*}
and
\begin{align*}
    \bm{z}(s) = \widetilde{\bm{x}}^\eta\big(s,X^{(1)};\ \bm{T},\bm{W}  \big)\ \forall s \geq 0.
\end{align*}
with
$\bm{T} = \big(0,T^\eta_2(\delta) - T^\eta_1(\delta), T^\eta_3(\delta) - T^\eta_2(\delta), \cdots, T^\eta_{ j^* + 1 }(\delta) - T^\eta_{ j^* }(\delta) \big)$,
$\bm{W} = \big( W^\eta_1(\delta), W^\eta_2(\delta),\cdots,W^\eta_{ j^* + 1 }(\delta)\big)$,
and $X^{(1)} = X^\eta_{ T^\eta_1(\delta) }$.
Again, using Corollary \ref{corollary ODE GD gap Rd} and \ref{corollary sgd gd gap Rd}, one can see that for all $\eta > 0$ sufficiently small, we must have
\begin{align*}
    \sup_{ s \in [0, T^* - T^\eta_1(\delta)  ] }\norm{  X^\eta_{ \floor{s} + T^\eta_1(\delta)  }(x) - \bm{z}(s)  } < 2\epsilon
\end{align*}
on event $B^\times_2(K)$. 
However, \cref{implication of lemma bound on small perturbation at origin} and $\norm{ X^{(1)}} < 3\epsilon < \bar{\epsilon}$ implies that
\begin{align*}
\sup_{ s \in [0, T^\eta_{j^*}(\delta) - T^\eta_{1}(\delta)]  }\bm{d}\big( \bm{z}(s),\ \mathcal{G}^c\big) & > 100l^*\bar{\epsilon}.
\end{align*}
In the meantime, Lemma \ref{lemma bound on return time Rd} and our choice of $\widetilde{\epsilon}(\epsilon)$ in \cref{def function tilde epsilon Rd} implies that
\begin{align*}
    \norm{\bm{z}( T^* - T^\eta_1(\delta) )} < \epsilon \text{ and } \norm{ X^\eta_{T^*}(x) } < 3\epsilon,
\end{align*}
thus dictating that $R(\epsilon,\delta,\eta) < T^*$ and contradicting \cref{proof def T star Rd}. This concludes the proof.
\end{proof}

Now we are ready to apply the tools above and analyze some atypical scenarios regrading the first exit and first return time.
In the next result we show that, when starting from the local minimum, it is very unlikely to escape if the total cost of large jumps is less than $J_\mathcal{G}$ .
\begin{lemma} \label{lemma atypical 1 exit before J_G Rd}
For any fixed $\epsilon \in (0,\bar{\epsilon}/3), N>0$, 
the following claim holds for any sufficiently small $\delta > 0$:
\begin{align*}
    \sup_{x:\ \norm{x} \leq 2\epsilon }\mathbb{P}_x\Big( \sigma(\eta) < R(\epsilon,\delta,\eta),\ \mathcal{J}^{\downarrow}_{\eta,\delta}( \sigma(\eta) ) < J_\mathcal{G} \Big) = o(\eta^N)\ \ \text{as }\eta \downarrow 0.
\end{align*}
\end{lemma}

 \begin{proof}
Since the constant $\epsilon>0$ is fixed, one can see that $\hat{t} = c_1 + c_1 \log(1/\epsilon)$ is also fixed where $c_1$ is the constant in Lemma \ref{lemma bound on return time Rd}.
Besides, we can fix some $\widetilde{\epsilon} > 0$ such that
\begin{align*}
    \widetilde{\epsilon} < \frac{\epsilon}{ 4\exp(2\hat{t}M) },\ \ \bar{\rho}(\hat{t})\widetilde{\epsilon} < \epsilon,\ \  \widetilde{\rho}(\hat{t})\widetilde{\epsilon} < \epsilon
\end{align*}
where functions $\bar{\rho}(\cdot), \widetilde{\rho}(\cdot)$ are defined in Corollary \ref{corollary ODE GD gap Rd} and \ref{corollary sgd gd gap Rd} respectively.

Let us define an event $A^\times = A^\times_1\cup A^\times_2$ where
\begin{align*}
     A^\times_1 & \delequal \Big\{ \exists n < T^\eta_1(\delta)\ s.t.\ \norm{X^\eta_n(x)} > 3\epsilon \Big\}, \\
     A^\times_2 & \delequal{} \Big\{ \exists j = 2,\cdots,l^*\ s.t. \\
     &\ \ \ \ \ \ \max_{k = 1,2,\cdots,(T^\eta_j(\delta) -T^\eta_{j-1}(\delta) - 1) \wedge (2\hat{t}/\eta) } \eta\norm{ Z_{ T^\eta_{j-1}(\delta) + 1 } + \cdots + Z_{ T^\eta_{j-1}(\delta) + j } } > \widetilde{\epsilon} \Big\}
\end{align*}
where the positive integer $l^*$ is defined in \cref{def l star Rd}.
By definition of $l^*$, we must have $K^\eta(\delta) < l^*$ on event $\{\mathcal{J}^{\downarrow}_{\eta,\delta}( \sigma(\eta) ) < J_\mathcal{G}\}$.

Now let us analyze the dynamics of $X^\eta_n$ on event $(A^\times)^c \cap \{\mathcal{J}^{\downarrow}_{\eta,\delta}( \sigma(\eta) ) < J_\mathcal{G} \}$.
In particular, we decompose it into two events
\begin{align*}
    B_1 \delequal & (A^\times)^c \cap \{\mathcal{J}^{\downarrow}_{\eta,\delta}( \sigma(\eta) ) < J_\mathcal{G} \}
    \\
    & \cap \{\forall j = 1,2,\cdots,K^\eta(\delta) ,\ [T^\eta_{j+1}(\delta)\wedge \sigma(\eta) \wedge R(\epsilon,\delta,\eta)] -  T^\eta_{j}(\delta) \leq 2\hat{t}/\eta  \},
    \\
    B_2 \delequal & (A^\times)^c \cap \{\mathcal{J}^{\downarrow}_{\eta,\delta}( \sigma(\eta) ) < J_\mathcal{G} \} 
    \\
    & \cap \{\exists j = 1,2,\cdots,K^\eta(\delta)\text{ s.t. } [T^\eta_{j+1}(\delta)\wedge \sigma(\eta) \wedge R(\epsilon,\delta,\eta)] -  T^\eta_{j}(\delta) > 2\hat{t}/\eta  \}.
\end{align*}

Using Lemma \ref{lemma event B 1 cross Rd} and the fact that $B_1 = \big(\bigcup_{K = 1}^{l^*}B^\times_1(K) \cap \{ K^\eta(\delta) = K \}\big) \cap \{\mathcal{J}^{\eta,\delta}_{ K^{\eta}(\delta) } < J_\mathcal{G}  \}$, one can see that for any $\eta > 0$ sufficiently small, we must have
\begin{align*}
    \bm{d}( X^\eta_{ \sigma(\eta)\wedge R(\epsilon,\delta,\eta)  }(x), \mathcal{G}^c ) > 100l^*\bar{\epsilon} - 2\epsilon > 100l^*\bar{\epsilon} - \bar{\epsilon} > 0.
\end{align*}
on event $B_1$.
In other words, on event $B_1$ we must have $R(\epsilon,\delta,\eta) < \sigma(\eta)$. 

On the other hand, Lemma \ref{lemma event B 2 cross Rd} the fact that $B_2 \subseteq \bigcup_{K = 1}^{l^*}B^\times_2(K) \cap \{ K^\eta(\delta) = K \}$ implies that $B_2  = \emptyset$ whenever $\eta > 0$ is sufficiently small.

In summary, we have established that
$$\sup_{x:\ \norm{x} \leq 2\epsilon }\mathbb{P}_x\Big( \sigma(\eta) < R(\epsilon,\eta),\ \mathcal{J}^{\downarrow}_{\eta,\delta}( \sigma(\eta) ) < J_\mathcal{G} \Big) \leq \sup_{x:\ \norm{x} \leq 2\epsilon } \P(A^\times).$$
Lastly, from Lemma \ref{lemma prob event A Rd} and \ref{lemma stuck at local minimum before large jump Rd},
one can see that for any $\delta>0$ that is sufficiently small,
\begin{align*}
    \sup_{x:\ \norm{x} \leq 2\epsilon}\mathbb{P}(A^\times) = o(\eta^N),
\end{align*}
and this concludes the proof.
\end{proof}

Using an almost identical approach, we can establish the next two results and conclude that it is also rather unlikely to have scenarios where
\begin{itemize}
    \item The accumulated cost of large jumps exceeds $J_\mathcal{G}$ before the first return or exit,
    \item Or the first return occurs before the first exit,
\end{itemize}
yet it takes rather long for the said event to occur.

\begin{lemma} \label{corollary atypical 1 Rd}
Given $\epsilon \in (0,\bar{\epsilon}/3), N > 0$, the following claim holds for any sufficiently small $\delta > 0$:
\begin{align*}
    & \sup_{ x \in \overline{B(\bm{0},2\epsilon)}}\mathbb{P}_x\Big( \exists K \in \mathbb{N} \text{ s.t. }\mathcal{J}^{\eta,\delta}_K \geq J_\mathcal{G} \text{ and } T^\eta_j(\delta) - T^\eta_{j-1}(\delta) > 2\hat{t}(\epsilon)/\eta\text{ for some }j = 2,3,\cdots,K  \Big) 
    \\
    & = o(\eta^N)
\end{align*}
as $\eta \downarrow 0$ where $\hat{t}(\epsilon) = c_1 + c_1\log(1/\epsilon)$.
\end{lemma}

\begin{proof}
Given the fixed $\epsilon>0$, define $\hat{t} = c_1 + c_1 \log(1/\epsilon)$. Also, fix $\widetilde{\epsilon} > 0$ as the largest possible value such that
\begin{align*}
    \widetilde{\epsilon} \leq \frac{\epsilon}{ 4\exp(2\hat{t}M) },\ \ \bar{\rho}(\hat{t})\widetilde{\epsilon} \leq \epsilon,\ \  \widetilde{\rho}(\hat{t})\widetilde{\epsilon} \leq \epsilon
\end{align*}
where functions $\bar{\rho}(\cdot), \widetilde{\rho}(\cdot)$ are defined in Corollary \ref{corollary ODE GD gap Rd} and \ref{corollary sgd gd gap Rd} respectively.

Let us define an event $A^\times = A^\times_1\cup A^\times_2$ where
\begin{align*}
     A^\times_1 & \delequal \Big\{ \exists n < T^\eta_1(\delta)\ s.t.\ \norm{X^\eta_n(x)} > 3\epsilon \Big\}, \\
     A^\times_2 & \delequal{} \Big\{ \exists j = 2,\cdots,l^*\ s.t. \\
     & \ \max_{k = 1,2,\cdots,(T^\eta_j(\delta) -T^\eta_{j-1}(\delta) - 1) \wedge (2\hat{t}/\eta) } \eta\norm{ Z_{ T^\eta_{j-1}(\delta) + 1 } + \cdots + Z_{ T^\eta_{j-1}(\delta) + j } } > \widetilde{\epsilon} \Big\}
\end{align*}
where the positive integer $l^*$ is defined in \cref{def l star Rd}.
Furthermore, 
define event
\begin{align*}
    B \delequal  (A^\times)^c \cap \Big\{ \exists K \in \mathbb{N} \text{ s.t. }\mathcal{J}^{\eta,\delta}_K \geq J_\mathcal{G} \text{ and } T^\eta_j(\delta) - T^\eta_{j-1}(\delta) > \frac{2\hat{t}}{\eta}\text{ for some }j = 2,3,\cdots,K  \Big\}.
\end{align*}
Note that on event $B$, the index
\begin{align*}
    K^* = \max\{ k \geq 1: \mathcal{J}^{\eta,\delta}_j < J_\mathcal{G}\ \forall j = 1,2,\cdots,k-1 \}
\end{align*}
is well defined with $1 \leq  K^* < l^*$ (due to the definition of $l^*$).
As a consequence, we must have $ T^\eta_{K^*}(\delta) \leq \sigma(\eta) \wedge R(\epsilon,\delta,\eta)$.

Now based on the exact value of $K^*$, we can decompose the event $B$ into $B = \cup_{K = 0}^{l^*}B(K)$ where
\begin{align*}
    B(K) \delequal (A^\times)^c \cap \{K^* - 1 = K\} \cap \{ \mathcal{J}^{\eta,\delta}_K <  J_\mathcal{G} \}\cap \{ T^\eta_{j+1}(\delta) - T^\eta_{j}(\delta) > \frac{2\hat{t}}{\eta}\ \text{for some } j \in [K]  \}.
\end{align*}
By applying Lemma \ref{lemma event B 2 cross Rd}, we know that for all $\eta > 0$ sufficiently small, $B(K) = \emptyset$. Therefore, for all sufficiently small $\eta$,
\begin{align*}
    & \sup_{ x \in \overline{B(\bm{0},2\epsilon)}}\mathbb{P}_x\Big( \exists K \in \mathbb{N} \text{ s.t. }\mathcal{J}^{\eta,\delta}_K \geq J_\mathcal{G} \text{ and } T^\eta_j(\delta) - T^\eta_{j-1}(\delta) > 2\hat{t}(\epsilon)/\eta\text{ for some }j = 2,3,\cdots,K  \Big)
    \\
    \leq & \sup_{ x \in \overline{B(\bm{0},2\epsilon)}}\mathbb{P}_x(A^\times).
\end{align*}
Lastly, from Lemma \ref{lemma prob event A Rd} and \ref{lemma stuck at local minimum before large jump Rd},
one can see that for any $\delta>0$ that is sufficiently small,
\begin{align*}
    \sup_{x:\ \norm{x} \leq 2\epsilon}\mathbb{P}(A^\times) = o(\eta^N),
\end{align*}
and this concludes the proof.
\end{proof}

\begin{lemma} \label{corollary atypical 2 Rd}
Given $\epsilon \in (0,\bar{\epsilon}/3), N > 0$, the following claim holds for any sufficiently small $\delta > 0$:
\begin{align*}
    \sup_{ x \in \overline{B(\bm{0},2\epsilon)}}& \mathbb{P}_x\Big( R(\epsilon,\delta,\eta) < \sigma(\eta),\ J^{\eta,\delta}_{K^\eta(\delta)} < J_\mathcal{G},
    \\
    & T^\eta_{j+1}(\delta) \wedge R(\epsilon,\delta,\eta) - T^\eta_j(\delta) > \frac{2\hat{t}(\epsilon)}{\eta}\text{ for some }j \in [K^\eta(\delta)] \Big)
    = o(\eta^N)\ \ \text{ as }\eta \downarrow 0
\end{align*}
where $\hat{t}(\epsilon) = c_1 + c_1\log(1/\epsilon)$.
\end{lemma}

\begin{proof}
Given the fixed $\epsilon>0$, define $\hat{t} = c_1 + c_1 \log(1/\epsilon)$. Also, fix $\widetilde{\epsilon} > 0$ as the largest possible value such that
\begin{align*}
    \widetilde{\epsilon} \leq \frac{\epsilon}{ 4\exp(2\hat{t}M) },\ \ \bar{\rho}(\hat{t})\widetilde{\epsilon} \leq \epsilon,\ \  \widetilde{\rho}(\hat{t})\widetilde{\epsilon} \leq \epsilon
\end{align*}
where functions $\bar{\rho}(\cdot), \widetilde{\rho}(\cdot)$ are defined in Corollary \ref{corollary ODE GD gap Rd} and \ref{corollary sgd gd gap Rd} respectively.

Let us define an event $A^\times = A^\times_1\cup A^\times_2$ where
\begin{align*}
     A^\times_1 & \delequal \Big\{ \exists n < T^\eta_1(\delta)\ s.t.\ \norm{X^\eta_n(x)} > 3\epsilon \Big\}, \\
     A^\times_2 & \delequal{} \Big\{ \exists j = 2,\cdots,l^*\ s.t. 
     \\
     & \ \max_{k = 1,2,\cdots,(T^\eta_j(\delta) -T^\eta_{j-1}(\delta) - 1) \wedge (2\hat{t}/\eta) } \eta\norm{ Z_{ T^\eta_{j-1}(\delta) + 1 } + \cdots + Z_{ T^\eta_{j-1}(\delta) + j } } > \widetilde{\epsilon} \Big\}
\end{align*}
where the positive integer $l^*$ is defined in \cref{def l star Rd}.
By definition of $l^*$, we must have $K^\eta(\delta) < l^*$ on event $\{J^{\eta,\delta}_{K^\eta(\delta)} < J_\mathcal{G}\}$.
Furthermore, 
define event
\begin{align*}
    B \delequal &  (A^\times)^c \cap \Big\{ R(\epsilon,\delta,\eta) < \sigma(\eta),\ J^{\eta,\delta}_{K^\eta(\delta)} < J_\mathcal{G},
    \\
    & T^\eta_{j+1}(\delta) \wedge R(\epsilon,\delta,\eta) - T^\eta_j(\delta) > \frac{2\hat{t}(\epsilon)}{\eta}\text{ for some }j \in [K^\eta(\delta)]  \Big\}.
\end{align*}
Now observe that $B \subseteq \bigcup_{K = 1}^{l^*}B(K)$ where
\begin{align*}
    B(K) \delequal & (A^\times)^c \cap \{ K^\eta(\delta) = K \} \cap \{\mathcal{J}^{\eta,\delta}_K < J_\mathcal{G} \}
    \\
    & \cap \{\exists j \in [K^\eta(\delta)]\text{ s.t. } [T^\eta_{j+1}(\delta)\wedge \sigma(\eta) \wedge R(\epsilon,\delta,\eta)] -  T^\eta_{j}(\delta) > \frac{2\hat{t}}{\eta}  \}.
\end{align*}
By applying Lemma \ref{lemma event B 2 cross Rd}, we know that for all $\eta > 0$ sufficiently small, $B(K) = \emptyset$. Therefore, for all sufficiently small $\eta$,
\begin{align*}
    & \sup_{ x \in \overline{B(\bm{0},2\epsilon)}}\mathbb{P}_x\Big( R(\epsilon,\delta,\eta) < \sigma(\eta),\ J^{\eta,\delta}_{K^\eta(\delta)} < J_\mathcal{G},
    \\
    & T^\eta_{j+1}(\delta) \wedge R(\epsilon,\delta,\eta) - T^\eta_j(\delta) > \frac{2\hat{t}(\epsilon)}{\eta}\text{ for some }j \in [K^\eta(\delta)]  \Big)
    \leq \sup_{ x \in \overline{B(\bm{0},2\epsilon)}}\mathbb{P}_x(A^\times).
\end{align*}
Lastly, from Lemma \ref{lemma prob event A Rd} and \ref{lemma stuck at local minimum before large jump Rd},
one can see that for any $\delta>0$ that is sufficiently small,
\begin{align*}
    \sup_{x:\ \norm{x} \leq 2\epsilon}\mathbb{P}(A^\times) = o(\eta^N),
\end{align*}
and this concludes the proof.
\end{proof}

Recall that in Lemma \ref{lemma atypical 1 exit before J_G Rd}, we have shown that it is rather unlikely to have the first exit with accumulated cost of large jumps less than $J_\mathcal{G}$.
In the next result, we show that even with large jumps of accumulated cost $J_\mathcal{G}$,
if some jumps are still not large enough,
or the inter-arrival times are too long,
then it is still very unlikely for the SGD iterates to even get close the the boundary.
Define 
\begin{align*}
    K_{J_\mathcal{G}}^{\eta,\delta} \delequal \min\{ k \in [K^\eta(\delta)]:\ \mathcal{J}^{\eta,\delta}_k \geq J_\mathcal{G} \}
\end{align*}
and let $K_{J_\mathcal{G}}^{\eta,\delta} = \infty$ if $\mathcal{J}^{\eta,\delta}_{ K^\eta(\delta) } < J_\mathcal{G}$.
Also, if $K_{J_\mathcal{G}}^{\eta,\delta} < \infty$, let
\begin{align*}
    T^{ \geq J_\mathcal{G} }(\eta,\delta) \delequal T^\eta_{ K_{J_\mathcal{G}}^{\eta,\delta} }(\delta)
\end{align*}
In other words, $T^{ \geq J_\mathcal{G} }(\eta,\delta)$ is the first time that, prior to first exit or return, the accumulated cost of large jumps has reached $J_\mathcal{G}$.
Regarding the $o(\eta^{1 + J_\mathcal{G} - \alpha_1 + \Delta} )$ term in the next Lemma, we note that the function $\widetilde{\lambda}$ defined in \cref{def scaling function tilde lambda} is regularly varying (as $\eta \downarrow 0$) with index $1 + J_\mathcal{G} - \alpha_1 $. 
Therefore, $\eta^{1 + J_\mathcal{G} - \alpha_1 + \Delta} = o( \widetilde{\lambda}(\eta) )$ as $\eta$ approaches $0$.

\begin{lemma} \label{lemma atypical 2 Rd}
There exists some $\Delta > 0$ such that the following claim holds for all $\epsilon \in (0,\bar{\epsilon}/3)$ and all $\delta>0$ that is sufficiently small:
\begin{align*}
    \sup_{ x:\ \norm{x} \leq 2\epsilon }\mathbb{P}_x\big(B^\times_{2,\text{(I)}}( \epsilon,\delta,\eta )\cup B^\times_{2,\text{(II)}}( \epsilon,\delta,\eta ) \big) = o(\eta^{1 + J_\mathcal{G} - \alpha_1 + \Delta} )
\end{align*}
as $\eta \downarrow 0$ where $\hat{t}(\epsilon) = c_1 + c_1\log(1/\epsilon)$ and
\begin{align*}
    & B^\times_{2,\text{(I)}}( \epsilon,\delta,\eta ) =
    \{ K_{J_\mathcal{G}}^{\eta,\delta}  < \infty \} \cap \{ \mathcal{J}^{\eta,\delta}_{K_{J_\mathcal{G}}^{\eta,\delta}} > J_\mathcal{G} \}
    ,
    \\
    & B^\times_{2,\text{(II)}}( \epsilon,\delta,\eta ) 
    = 
    \{ K_{J_\mathcal{G}}^{\eta,\delta}  < \infty \} \cap \{ \mathcal{J}^{\eta,\delta}_{K_{J_\mathcal{G}}^{\eta,\delta}} = J_\mathcal{G} \}
    \\
    & \cap \Big\{ \exists j = 2,3,\cdots,K_{J_\mathcal{G}}^{\eta,\delta} \ s.t.
 T^\eta_{j}(\delta) - T^\eta_{j - 1}(\delta) > 2\bar{t}/\eta 
   \text{ or }\exists j = 1,2,\cdots,K_{J_\mathcal{G}}^{\eta,\delta} \ s.t.\ \eta\norm{W^\eta_j(\delta)}\leq \bar{\delta} \Big\}
   \\
   &\ \ \ \ \ \cap \big\{ \min_{j = 0,1,\cdots,T^{ \geq J_\mathcal{G} }(\eta,\delta)  }\bm{d}\big( X^\eta_j , \mathcal{G}^c \big) \leq \bar{\epsilon} \big\}.
\end{align*}
\end{lemma}

 \begin{proof}
 
From Assumption \ref{assumption bounded away}, we know the existence of some $\Delta > 0$ satisfying the following condition:
For any $k \in \mathbb{N}$ and any $\bm{j} \in \{1,\cdots,m\}^k$ such that $\sum_{ i = 1  }^{k-1}(\alpha_{j_i}-1) < J_\mathcal{G} < \sum_{ i = 1  }^{k}(\alpha_{j_i} - 1)$ (note that there are only finitely many possible choices for such $\bm{j}$), we have
\begin{align}
    J_\mathcal{G} + \Delta < \sum_{ i = 1  }^{k}(\alpha_{j_i} - 1) \label{proof lemma atypical 2 Rd choose delta}
\end{align}
Fix such $\Delta > 0$.
Meanwhile, given the fixed $\epsilon>0$, we can define $\hat{t} = c_1 + c_1 \log(1/\epsilon)$ and choose $\widetilde{\epsilon} > 0$ as the largest possible value such that
\begin{align*}
    \widetilde{\epsilon} \leq \frac{\epsilon}{ 4\exp(2\hat{t}M) },\ \ \bar{\rho}(\hat{t})\widetilde{\epsilon} \leq \epsilon,\ \  \widetilde{\rho}(\hat{t})\widetilde{\epsilon} \leq \epsilon
\end{align*}
where functions $\bar{\rho}(\cdot), \widetilde{\rho}(\cdot)$ are defined in Corollary \ref{corollary ODE GD gap Rd} and \ref{corollary sgd gd gap Rd} respectively.
We stress that $\bar{t}$ is a fixed constant while $\hat{t}$ depends on the value of $\epsilon$, and for any sufficiently small $\epsilon$ we will have $\hat{t} > \bar{t}$.

Let us define an event $A^\times = A^\times_1\cup A^\times_2$ where
\begin{align*}
     A^\times_1 & \delequal \Big\{ \exists n < T^\eta_1(\delta)\ s.t.\ \norm{X^\eta_n(x)} > 3\epsilon \Big\}, \\
     A^\times_2 & \delequal{} \Big\{ \exists j = 2,\cdots,l^*\ s.t.
     \\
     &\ \max_{k = 1,2,\cdots,(T^\eta_j(\delta) -T^\eta_{j-1}(\delta) - 1) \wedge (2\hat{t}/\eta) } \eta\norm{ Z_{ T^\eta_{j-1}(\delta) + 1 } + \cdots + Z_{ T^\eta_{j-1}(\delta) + j } } > \widetilde{\epsilon} \Big\}
\end{align*}
where the positive integer $l^*$ is defined in \cref{def l star Rd}.
By definition of $l^*$, we must have $K^{\eta,\delta}_{J_\mathcal{G}} < l^*$ on event $\{K^{\eta,\delta}_{J_\mathcal{G}} < \infty\}$.

First, we analyze the following event
\begin{align*}
    B \delequal  (A^\times)^c \cap \{K^{\eta,\delta}_{J_\mathcal{G}} < \infty \} \cap \Big\{T^\eta_{j}(\delta) - T^\eta_{j - 1}(\delta) > \frac{2\hat{t}}{\eta}\text{ for some }j =2,3,\cdots,K^{\eta,\delta}_{J_\mathcal{G}}  \Big\}.
\end{align*}
In particular, note that $B = \bigcup_{K = 0}^{l^* - 1}B(K)$ where
\begin{align*}
    B(K) \delequal & (A^\times)^c \cap \{K^{\eta,\delta}_{J_\mathcal{G}} = K + 1 \} \cap \{ \mathcal{J}^{\eta,\delta}_K < J_\mathcal{G}  \} 
    \\
    & \ \ \cap \Big\{T^\eta_{j+1}(\delta) - T^\eta_{j}(\delta) > \frac{2\hat{t}}{\eta}\text{ for some }j =1,2,3,\cdots,K \Big\}.
\end{align*}
When $K = 0 < 1$, the event $B(0) = \emptyset$ by definition (due to event $\Big\{T^\eta_{j+1}(\delta) - T^\eta_{j}(\delta) > \frac{2\hat{t}}{\eta}\text{ for some }j =1,2,3,\cdots,K \Big\}$).
For $K = 1,\cdots,l^*$, Lemma \ref{lemma event B 2 cross Rd} implies that $B(K) = \emptyset$ for all $\eta$ sufficiently small.
In summary, $B = \emptyset$ for all $\eta$ sufficiently small.

Now we focus on the following two events
\begin{align*}
    C & \delequal (A^\times)^c \cap \{K^{\eta,\delta}_{J_\mathcal{G}} < \infty \} \cap \Big\{T^\eta_{j}(\delta) - T^\eta_{j - 1}(\delta) \leq \frac{2\hat{t}}{\eta}\text{ for all }j =2,3,\cdots,K^{\eta,\delta}_{J_\mathcal{G}}  \Big\} \cap \{ \mathcal{J}^{\eta,\delta}_{ K^\eta(\delta) } > J_\mathcal{G} \},
    \\
    D & \delequal (A^\times)^c \cap \{K^{\eta,\delta}_{J_\mathcal{G}} < \infty \} \cap \Big\{T^\eta_{j}(\delta) - T^\eta_{j - 1}(\delta) \leq \frac{2\hat{t}}{\eta}\text{ for all }j =2,3,\cdots,K^{\eta,\delta}_{J_\mathcal{G}}  \Big\} \cap \{ \mathcal{J}^{\eta,\delta}_{ K^\eta(\delta) } = J_\mathcal{G} \}.
\end{align*}
On the one hand,
event $C$ can be decomposed as follows.
Let 
$$\mathbb{J}^\uparrow \delequal \{ \bm{j} \in \{1,2,\cdots,m\}^k:\ k  = |\bm{j}|,\ \sum_{ i = 1  }^{k-1}(\alpha_{j_i} - 1) < J_\mathcal{G} < \sum_{ i = 1  }^{k}(\alpha_{j_i} - 1) \}$$
be the set that contains all the types $\bm{j}$ such that the accumulated cost reach $J_\mathcal{G}$ if and only if the last element is kept.
One can see that there are only finitely many elements in $\mathbb{J}^\uparrow$.
Now we have $C = \bigcup_{\bm{j} \in  \mathbb{J}^\uparrow }C(\bm{j})$ where (see the definition in \cref{def type of jumps w})
\begin{align*}
    C(\bm{j}) &  \delequal (A^\times)^c \cap \{K^{\eta,\delta}_{J_\mathcal{G}} < \infty \} \cap \Big\{T^\eta_{j}(\delta) - T^\eta_{j - 1}(\delta) \leq \frac{2\hat{t}}{\eta}\text{ for all }j =2,3,\cdots,K^{\eta,\delta}_{J_\mathcal{G}}  \Big\} 
    \\ 
     &\ \ \ \ \ \ \ \ \ \ \cap \big\{ \big(W^{\eta}_1(\delta),W^{\eta}_2(\delta),\cdots,W^{\eta}_{ K^{\eta,\delta}_{J_\mathcal{G}} }(\delta) \big)\text{ is of type-}\bm{j} \big\}.
\end{align*}
For any $\bm{j} \in  \mathbb{J}^\uparrow$, observe that (let $k_{ \bm{j} } = |\bm{j}|$)
\begin{align*}
    & \sup_{ \norm{x} \leq 2\epsilon }\P_x(C(\bm{j}))
    \\
    \leq & \P\Big( W^\eta_i(\delta) \in E_{ \bm{j}_i }\ \forall i \in [K^{\eta,\delta}_{J_\mathcal{G}} ]\text{ and }T^\eta_{i}(\delta) - T^\eta_{i - 1}(\delta) \leq \frac{2\hat{t}}{\eta}\text{ for all }i =2,3,\cdots,K^{\eta,\delta}_{J_\mathcal{G}}    \Big) \\
    = & \big(\prod_{ i = 1 }^{ k_{\bm{j}} }\frac{ H_{\bm{j}_i}(\delta/\eta)  }{H(\delta/\eta)}\big)\cdot \big( \P( T^\eta_1(\delta) \leq 2\hat{t}/\eta ) \big)^{ k_{\bm{j}} - 1  }.
\end{align*}
The last equality is due to independence of $W^\eta_i(\delta)$ and $T^\eta_i(\delta) - T^\eta_{i-1}(\delta)$.
The regularly varying natures of functions $H_j$ and $H$ imply that
\begin{align*}
    \lim_{\eta \downarrow 0}\frac{\prod_{ i = 1 }^{ k_{\bm{j}} }\frac{ H_{\bm{j}_i}(\delta/\eta)  }{H(\delta/\eta)}}{ \prod_{ i = 1 }^{ k_{\bm{j}} }\frac{ H_{\bm{j}_i}(1/\eta)  }{H(1/\eta)}  } = \frac{ \delta^{ k_{\bm{j}}\alpha_1 }  }{ \delta^{ \sum_{i = 1}^{k_{\bm{j}}}\alpha_{ \bm{j}_i }  }  }.
\end{align*}
Meanwhile, from Lemma \ref{lemmaGeomFront} and the regularly varying nature of function $H_j$ and $H$ (in particular, repeating the same calculations that leads to \cref{proof Type J Rd bound term 3}), we have
\begin{align*}
    \lim_{\eta \downarrow 0}\frac{ \big( \P( T^\eta_1(\delta) \leq 2\hat{t}/\eta ) \big)^{ k_{\bm{j}} - 1  }  }{ \big( \frac{H(1/\eta)}{\eta}\cdot \frac{2\hat{t}}{\delta^{ \alpha_1}} \big)^{k_{\bm{j}} - 1} } = 1.
\end{align*}
Therefore, we have yielded that
\begin{align*}
    \limsup_{\eta \downarrow 0} \frac{ \sup_{ \norm{x} \leq 2\epsilon }\P_x(C(\bm{j})) }{ \widetilde{\lambda}_{\bm{j}}(\eta)  } \leq \frac{ \delta^{ k_{\bm{j}}\alpha_1 }  }{ \delta^{ \sum_{i = 1}^{k_{\bm{j}}}\alpha_{ \bm{j}_i }  }  }\cdot { \big( \frac{2\hat{t}}{\delta^{ \alpha_1}} \big)^{k_{\bm{j}} - 1} } < \infty
\end{align*}
where
\begin{align*}
    \widetilde{\lambda}_{\bm{j}}(\eta) \delequal \frac{\prod_{i = 1}^{k_{\bm{j}}}H_{\bm{j}_i}(1/\eta) }{\eta^{k_{\bm{j}} - 1}H(1/\eta) } = o(\eta^{1 + J_\mathcal{G} - \alpha_1 + \Delta})
\end{align*}
due to \cref{proof lemma atypical 2 Rd choose delta}. Therefore, one can see that as $\eta \downarrow 0$,
\begin{align*}
    \sup_{ \norm{x} \leq 2\epsilon }\P_x(C) = o(\eta^{1 + J_\mathcal{G} - \alpha_1 + \Delta}).
\end{align*}
Furthermore, the discussion above have shown that for all $\eta$ sufficiently small,
\begin{align*}
    \sup_{ x:\ \norm{x} \leq 2\epsilon }\mathbb{P}_x\big(B^\times_{2,\text{(I)}}( \epsilon,\delta,\eta )\big) \leq o(\eta^{1 + J_\mathcal{G} - \alpha_1 + \Delta}) + \sup_{ x:\ \norm{x} \leq 2\epsilon }\mathbb{P}_x( A^\times).
\end{align*}
From Lemma \ref{lemma prob event A Rd} and \ref{lemma stuck at local minimum before large jump Rd},
one can see that for all $N > 1 + J_\mathcal{G} - \alpha_1 + \Delta$ and all $\delta>0$ that is sufficiently small,
$\sup_{x:\ \norm{x} \leq 2\epsilon}\mathbb{P}(A^\times) = o(\eta^N).$
Fix such $\delta > 0$.
In conclusion, we now have that
$\sup_{ x:\ \norm{x} \leq 2\epsilon }\mathbb{P}_x\big(B^\times_{2,\text{(I)}}( \epsilon,\delta,\eta )\big) = o(\eta^{1 + J_\mathcal{G} - \alpha_1 + \Delta})$.

On the other hand, on event $D$, Assumption \ref{assumption unique configuration for minimum exit cost} shows that (see the definition in \cref{def j i set})
\begin{align*}
    & \{K^{\eta,\delta}_{J_\mathcal{G}} < \infty \}  \cap \{ \mathcal{J}^{\eta,\delta}_{ K^\eta(\delta) } = J_\mathcal{G} \}
    \\
    & =  \{K^{\eta,\delta}_{J_\mathcal{G}} < \infty \}  \cap \big\{ \big(W^{\eta}_1(\delta),W^{\eta}_2(\delta),\cdots,W^{\eta}_{ K^{\eta,\delta}_{J_\mathcal{G}} }(\delta) \big)\text{ is of type-}\bm{j}\text{ for some }\bm{j} \in \bm{j}(\bm{i}^*) \big\}.
\end{align*}
Therefore, 
\begin{align*}
    & B^\times_{2,\text{(II)}}( \epsilon,\delta,\eta ) \cap D
    \\
    = & 
    (A^\times)^c 
    \cap \{K^{\eta,\delta}_{J_\mathcal{G}} < \infty \}  \cap \big\{ \big(W^{\eta}_1(\delta),W^{\eta}_2(\delta),\cdots,W^{\eta}_{ K^{\eta,\delta}_{J_\mathcal{G}} }(\delta) \big)\text{ is of type-}\bm{j}\text{ for some }\bm{j} \in \bm{j}(\bm{i}^*) \big\}
    \\
    &\cap \Big\{ \forall j = 2,3,\cdots,K_{J_\mathcal{G}}^{\eta,\delta},\ \ T^\eta_{j}(\delta) - T^\eta_{j - 1}(\delta) \leq 2\hat{t}/\eta \Big\}
    \cap 
    \big\{ \min_{j = 0,1,\cdots,T^{ \geq J_\mathcal{G} }(\eta,\delta)  }\bm{d}\big( X^\eta_j , \mathcal{G}^c \big) \leq \bar{\epsilon} \big\}
    \\
    &\cap \Big\{ \exists j = 2,3,\cdots,K_{J_\mathcal{G}}^{\eta,\delta} \ s.t.\ T^\eta_{j}(\delta) - T^\eta_{j - 1}(\delta) > 2\bar{t}/\eta\text{ OR }\exists j = 1,2,\cdots,K_{J_\mathcal{G}}^{\eta,\delta} \ s.t.\ \eta\norm{W^\eta_j(\delta)}\leq \bar{\delta} \Big\}.
\end{align*}
In particular, using Lemma \ref{lemma event B 1 cross Rd}, one can see that when $\eta$ is sufficiently small, on event $B^\times_{2,\text{(II)}}( \epsilon,\delta,\eta ) \cap D$ we have $ \norm{X^\eta_k  } < 3\epsilon $ for all $k < T^\eta_1(\delta)$ and
\begin{align*}
    \sup_{ s \in [0,T^{\geq J_\mathcal{G}}(\eta,\delta) - T^\eta_1(\delta) ] }\norm{ X^\eta_{ \floor{s} } - \bm{z}^\eta(s)  } < 2\epsilon.
\end{align*}
Here 
\begin{align*}
    \bm{z}^\eta(s) = \widetilde{\bm{x}}^\eta\big(s,X^{(1)};\ \bm{T},\bm{W}  \big)\ \forall s \geq 0
\end{align*}
with
$\bm{T} = \big(0,T^\eta_2(\delta) - T^\eta_1(\delta), T^\eta_3(\delta) - T^\eta_2(\delta), \cdots, T^\eta_{ K_{J_\mathcal{G}}^{\eta,\delta} + 1 }(\delta) - T^\eta_{ K_{J_\mathcal{G}}^{\eta,\delta} }(\delta)\big)$,
$\bm{W} = \big( W^\eta_1(\delta), W^\eta_2(\delta),\cdots,W^\eta_{K_{J_\mathcal{G}}^{\eta,\delta} }(\delta) \big)$,
and $X^{(1)} = X^\eta_{ T^\eta_1(\delta) - 1 }$.
Besides, \cref{property bar delta bar t} dictates that, on event $B^\times_{2,\text{(II)}}( \epsilon,\delta,\eta ) \cap D$, we must have $\inf_{s \geq 0}\bm{d}(\bm{z}^\eta(s), \mathcal{G}^c) \geq 100l^*\bar{\epsilon},$ hence
\begin{align*}
    \min_{ k \in [ T^{\geq J_\mathcal{G}}(\eta,\delta) ] }\bm{d}(X^\eta_k, \mathcal{G}^c) > 50l^*\bar{\epsilon}.
\end{align*}
However, this clearly contradicts the definition of event $B^\times_{2,\text{(II)}}( \epsilon,\delta,\eta )$. 
In conclusion, we have established that (for all $\eta$ sufficiently small) $B^\times_{2,\text{(II)}}( \epsilon,\delta,\eta ) \cap D = \emptyset$ and
\begin{align*}
    \sup_{ x:\ \norm{x} \leq 2\epsilon }\mathbb{P}_x\big(B^\times_{2,\text{(II)}}( \epsilon,\delta,\eta )\big) \leq \sup_{ x:\ \norm{x} \leq 2\epsilon }\mathbb{P}_x( A^\times).
\end{align*}
However, as per the argument above, our choice of sufficiently small $\delta > 0$ ensures that $\sup_{x:\ \norm{x} \leq 2\epsilon}\mathbb{P}(A^\times) = o(\eta^{1 +  J_\mathcal{G} - \alpha_1 + \Delta})$ and concludes the proof.
\end{proof}

Furthermore, starting from the local minimum $\bm{0}$, it is unlikely that the SGD iterates will be extremely close to $\partial \mathcal{G}$ when the accumulated cost of large jumps reaches $J_\mathcal{G}$. The next lemma provides an upper bound for the probability of the said scenario.
For any set $A \subseteq \mathbb{R}^d$, define its $\epsilon-$enlargement as the following open set
$A^\epsilon \delequal \{x:\ \bm{d}(x,A) < \epsilon \}.$

\begin{lemma} \label{lemma atypical 3 Rd}
For any $\epsilon \in \big( 0, \bar{\epsilon}/(3 + 3\rho^*)\big)$ and any $\delta > 0$ that is sufficiently small,
\begin{align*}
    \limsup_{\eta \downarrow 0} \frac{ \sup_{ x:\ \norm{x} \leq 2\epsilon }\mathbb{P}_x( B^\times_3(\epsilon,\delta,\eta) ) }{ \widetilde{\lambda}(\eta)  } \leq \delta^{\alpha_1}\Psi(\epsilon) 
\end{align*}
where $B^\times_3(\epsilon,\delta,\eta) = \{  K^{\eta,\delta}_{J_\mathcal{G}} < \infty \}\cap \{ \bm{d}(X^\eta_{T^{ \geq J_\mathcal{G} }(\eta,\delta)},\ \partial \mathcal{G}) < \epsilon \}$
and $\Psi(\epsilon) = \mu\Big( h^{-1}\big( (\partial \mathcal{G})^{(3 + 3\rho^*)\epsilon} \big) \Big)$ with $h$ defined in \cref{def mapping h Rd} and $\mu$ defined in \cref{def measure mu Rd}, and $\rho^* \in (0,\infty)$ is the constant provided in Corollary \ref{corollary ODE ODE gap Rd}.
\end{lemma}

\begin{proof}
Again, given the fixed $\epsilon>0$, we can define $\hat{t} = c_1 + c_1 \log(1/\epsilon)$ and choose $\widetilde{\epsilon} > 0$ as the largest possible value such that
\begin{align*}
    \widetilde{\epsilon} \leq \frac{\epsilon}{ 4\exp(2\hat{t}M) },\ \ \bar{\rho}(\hat{t})\widetilde{\epsilon} \leq \epsilon,\ \  \widetilde{\rho}(\hat{t})\widetilde{\epsilon} \leq \epsilon
\end{align*}
where functions $\bar{\rho}(\cdot), \widetilde{\rho}(\cdot)$ are defined in Corollary \ref{corollary ODE GD gap Rd} and \ref{corollary sgd gd gap Rd} respectively.
We stress that $\bar{t}$ is a fixed constant while $\hat{t}$ depends on the value of $\epsilon$, and for any sufficiently small $\epsilon$ we will have $\hat{t} > \bar{t}$.

Let us define an event $A^\times = A^\times_1\cup A^\times_2$ where
\begin{align*}
     A^\times_1 & \delequal \Big\{ \exists n < T^\eta_1(\delta)\ s.t.\ \norm{X^\eta_n(x)} > 3\epsilon \Big\}, \\
     A^\times_2 & \delequal{} \Big\{ \exists j = 2,\cdots,l^*\ s.t. \\
     &\ \max_{k = 1,2,\cdots,(T^\eta_j(\delta) -T^\eta_{j-1}(\delta) - 1) \wedge (2\hat{t}/\eta) } \eta\norm{ Z_{ T^\eta_{j-1}(\delta) + 1 } + \cdots + Z_{ T^\eta_{j-1}(\delta) + j } } > \widetilde{\epsilon} \Big\}
\end{align*}
where the positive integer $l^*$ is defined in \cref{def l star Rd}.
Consider the following decomposition of event $B^\times_3(\epsilon,\delta,\eta) \cap (A^\times)^c$:
\begin{align*}
    C_\text{(I)} & \delequal B^\times_3(\epsilon,\delta,\eta) \cap (A^\times)^c
    \cap 
    \{\mathcal{J}^{\eta,\delta}_{ K^{\eta,\delta}_{J_\mathcal{G}} } > J_\mathcal{G}\}. \\
    C_\text{(II)} & \delequal
    B^\times_3(\epsilon,\delta,\eta) \cap (A^\times)^c
    \cap \{ \mathcal{J}^{\eta,\delta}_{K_{J_\mathcal{G}}^{\eta,\delta}} = J_\mathcal{G} \} \cap \Big\{ \exists j = 2,3,\cdots,K_{J_\mathcal{G}}^{\eta,\delta} \ s.t.\\
    & \ \ \ \ T^\eta_{j}(\delta) - T^\eta_{j - 1}(\delta) > 2\bar{t}/\eta \text{ or }\exists j = 1,2,\cdots,K_{J_\mathcal{G}}^{\eta,\delta} \ s.t.\ \eta\norm{W^\eta_j(\delta)}\leq \bar{\delta} \Big\} 
    \\ 
    C_\text{(III)} & \delequal \big(B^\times_3(\epsilon,\delta,\eta) \cap (A^\times)^c\big) \symbol{92} \big( C_\text{(I)} \cup C_\text{(II)} \big).
\end{align*}
Using Lemma \ref{lemma atypical 2 Rd}, we know that for all $\delta > 0$ sufficiently small,
\begin{align*}
    \limsup_{\eta \downarrow 0}\frac{ \sup_{\norm{x} \leq 2\epsilon} \P_x\big( C_\text{(I)} \cup C_\text{(II)} \big) }{ \widetilde{\lambda}(\eta) } = 0.
\end{align*}
As a result,
\begin{align*}
     \limsup_{\eta \downarrow 0}\frac{ \sup_{\norm{x} \leq 2\epsilon} \P_x(B^\times_3(\epsilon,\delta,\eta) ) }{ \widetilde{\lambda}(\eta)} 
     & \leq 
     \limsup_{\eta \downarrow 0}\frac{ \sup_{\norm{x} \leq 2\epsilon} \P_x(A^\times) }{ \widetilde{\lambda}(\eta)} + \limsup_{\eta \downarrow 0}\frac{ \sup_{\norm{x} \leq 2\epsilon} \P_x( C_\text{(III)})  }{ \widetilde{\lambda}(\eta)}.
\end{align*}
From Lemma \ref{lemma prob event A Rd} and \ref{lemma stuck at local minimum before large jump Rd},
one can see that for all $N > 1 + J_\mathcal{G} - \alpha_1$ and all $\delta>0$ that is sufficiently small, we have
$\sup_{x:\ \norm{x} \leq 2\epsilon}\mathbb{P}(A^\times) = o(\eta^N) = o(\widetilde{\lambda}(\eta)).$

Moving on, we focus on bounding the probability of event $C_\text{(III)}$. 
In particular, note that
\begin{align*}
    & C_\text{(III)} 
    =
    (A^\times)^c \cap \{  K^{\eta,\delta}_{J_\mathcal{G}} < \infty \} \cap  \{ \mathcal{J}^{\eta,\delta}_{K_{J_\mathcal{G}}^{\eta,\delta}} = J_\mathcal{G} \} \cap \{ \bm{d}(X^\eta_{T^{ \geq J_\mathcal{G} }(\eta,\delta)},\ \partial \mathcal{G}) < \epsilon \}
    \\
    &\cap \Big\{ \forall j = 2,3,\cdots,K_{J_\mathcal{G}}^{\eta,\delta},\ T^\eta_{j}(\delta) - T^\eta_{j - 1}(\delta) \leq 2\bar{t}/\eta \text{ and }\forall j = 1,2,\cdots,K_{J_\mathcal{G}}^{\eta,\delta},\ \eta\norm{W^\eta_j(\delta)}\geq \bar{\delta} \Big\}. 
\end{align*}
By applying Lemma \ref{lemma event B 1 cross Rd} on event $C_\text{(III)}$,
one can see that when $\eta$ is sufficiently small, on this event we have $ \norm{X^\eta_k  } < 3\epsilon $ for all $k < T^\eta_1(\delta)$ and
\begin{align*}
    \sup_{ s \in [0,T^{\geq J_\mathcal{G}}(\eta,\delta) - T^\eta_1(\delta) ] }\norm{ X^\eta_{ \floor{s} + T^\eta_1(\delta)3 } - \bm{z}^\eta(s)  } < 2\epsilon.
\end{align*}
Here 
\begin{align*}
    \bm{z}^\eta(s) = \widetilde{\bm{x}}^\eta\big(s,X^{(1)};\ \bm{T},\bm{W}  \big)\ \forall s \geq 0
\end{align*}
with
$\bm{T} = \big(0,T^\eta_2(\delta) - T^\eta_1(\delta), T^\eta_3(\delta) - T^\eta_2(\delta), \cdots, T^\eta_{ K_{J_\mathcal{G}}^{\eta,\delta} }(\delta) - T^\eta_{ K_{J_\mathcal{G} - 1}^{\eta,\delta} }(\delta)\big)$,
$\bm{W} = \big( W^\eta_1(\delta), W^\eta_2(\delta),\cdots,W^\eta_{K_{J_\mathcal{G}}^{\eta,\delta} }(\delta) \big)$,
and $X^{(1)} = X^\eta_{ T^\eta_1(\delta) - 1 }$.
To better control the location of $\bm{z}^\eta(s)$, we also construct
\begin{align*}
    \bm{z}^\eta_0(s) = \widetilde{\bm{x}}^\eta\big(s,\bm{0};\ \bm{T},\bm{W}  \big)\ \forall s \geq 0
\end{align*}
where the only difference is that we substitute the initial value $X^{(1)}$ with $\bm{0}$.
Note that on event $C_\text{(III)}$, we have $\norm{X^{(1)}} < 3\epsilon$.
By applying Corollary \ref{corollary ODE ODE gap Rd} (the condition about line segments contained in $\mathcal{G}$ is verified due to \cref{property bounded away with small cost}),
we have the following bound on event $C_\text{(III)}$:
\begin{align*}
    \sup_{ s \in [0, \sum_{i = 1}^{K^{\eta,\delta}_{J_\mathbb{G}}}\bm{T}_i  ] }\norm{ \bm{z}^\eta(s) - \bm{z}^\eta_0(s) } < 3\rho^*\epsilon.
\end{align*}
Now recall Definition \ref{definitionTypeJ Rd}.
Combining all the bounds we have obtained so far, we see that, when $\eta$ is sufficiently small, on event $C_\text{(III)}$ we have
\begin{align*}
    \bm{d}\Big(\bm{z}^\eta_0\big(\sum_{i = 1}^{K^{\eta,\delta}_{J_\mathbb{G}}}\bm{T}_i\big),\ \partial\mathcal{G}\Big) < (2 + 3\rho^*)\epsilon
\end{align*}
and $\bm{W}$ is of type-$\bm{j}$ for some $\bm{j} \in \bm{j}(\bm{i}^*)$.
It then follows immediately from Corollary \ref{CorTypeProbRd} that
\begin{align*}
    \limsup_{\eta \downarrow 0}\frac{ \sup_{\norm{x} \leq 2\epsilon} \P_x( C_\text{(III)})  }{ \widetilde{\lambda}(\eta)} 
    \leq 
    \limsup_\eta \frac{ p\big( (\partial \mathcal{G})^{ (2 + 3\rho^*)\epsilon }  ,\delta,\eta\big)  }{  \widetilde{\lambda}(\eta)  } 
    & \leq \delta^{\alpha_1}\mu\Big(h^{-1}\big( \overline{ (\partial \mathcal{G})^{ (2 + 3\rho^*)\epsilon }  }  \big)\Big)
    \\
    & \leq \delta^{\alpha_1}\mu\Big(h^{-1}\big( { (\partial \mathcal{G})^{ (3 + 3\rho^*)\epsilon }  }  \big)\Big)
\end{align*}
and this concludes the proof.
\end{proof}

Lastly, we provide a lower bound for the probability of the \emph{most likely} way of escape,
i.e., due to $k^*$ large jumps with accumulated cost $J_\mathcal{G}$ and relatively short inter-arrival time less (when compared to $\bar{t}/\eta$). 
We stress that, in the next result, the value of constant $c_* > 0$ would not vary with the choice of parameters $\epsilon,\delta$.

\begin{lemma}\label{lemma lower bound typical exit Rd}
There exists some $c_* > 0$ such that the following claim holds for
any $\epsilon \in (0,\bar{\epsilon}/(4 + 3\rho^*))$ and any sufficiently small $\delta > 0$:
\begin{align*}
\liminf_{ \eta \downarrow 0 }\frac{ \inf_{\norm{x} \leq 2\epsilon} \mathbb{P}_x( A^\circ(\epsilon,\delta,\eta) ) }{ \widetilde{\lambda}(\eta) } \geq c_*\delta^{\alpha_1}
\end{align*}
where the event is defined as
\begin{align*}
    A^\circ(\epsilon,\delta,\eta) 
    & \delequal{}
    \{ K^{\eta,\delta}_{J_\mathcal{G}} < \infty \}
    \cap \Big\{ \sigma(\eta) = T^{\geq J_\mathcal{G}}(\eta,\delta)\Big\}
    \cap \Big\{T^\eta_j(\delta) - T^\eta_{j-1}(\delta) \leq \frac{2\bar{t}}{\eta}\ \forall j =2,3,\cdots,k^*   \Big\}
\end{align*}
and $\rho^* \in (0,\infty)$ is the constant provided in Corollary \ref{corollary ODE ODE gap Rd}.
\end{lemma}

 \begin{proof}
 First of all, Assumption \ref{assumption unique configuration for minimum exit cost} implies that
 \begin{align*}
     \{ K^{\eta,\delta}_{J_\mathcal{G}} < \infty \}
    \cap \Big\{ \sigma(\eta) = T^{\geq J_\mathcal{G}}(\eta,\delta)\Big\}
    =
    \{ K^{\eta,\delta}_{J_\mathcal{G}} = k^* \}
    \cap \Big\{ \sigma(\eta) = T^\eta_{k^*}(\delta)\Big\}.
 \end{align*}
We also stress that $K^{\eta,\delta}_{J_\mathcal{G}} = k^*$ means that $\sigma(\eta)\wedge R(\epsilon,\delta,\eta) \geq T^\eta_{k^*}(\delta)$ (i.e. the arrival time of the $k^*-$th large jumps and that the accumulated cost of large jumps $W^\eta_1(\delta),\cdots,W^\eta_{k^*}(\delta)$ is exactly $J_\mathcal{G}$.
 
Due to \cref{property existence of escape path} and \cref{property bar delta bar t},
there exists some $\bm{j} \in \bm{j}(\bm{i}^*)$,
some $\widetilde{\bm{\theta}} = ( \widetilde{\theta}_1,\cdots,\widetilde{\theta}_{k^*} )$ with $\widetilde{\theta}_i \in F_i\ \forall i$,
some $\widetilde{ \bm{r} } = (\widetilde{r}_1,\cdots,\widetilde{r}_{k^*})$ such that $\widetilde{r}_i \geq \bar{\delta}\ \forall i$,
some $\widetilde{\bm{t}} = (\widetilde{t}_1,\cdots,\widetilde{t}_{k^*})$ such that $\widetilde{t}_1 = 0$ and $\widetilde{t}_i \in (0,\bar{t})$ for all $i = 2,\cdots,k^*$
such that
\begin{align*}
    \bm{d}\big(h(\widetilde{ \bm{r} },\widetilde{\bm{\theta}},\widetilde{\bm{t}}),\ \mathcal{G} \big) > 100l^*\bar{\epsilon}
\end{align*}
 where $h$ is the mapping defined in \cref{def mapping h Rd}.
 Let $\widetilde{y} \delequal h(\widetilde{ \bm{r} },\widetilde{\bm{\theta}},\widetilde{\bm{t}}).$
 Moreover, the continuity of mapping $h$ implies the existence of some $\Delta > 0$ such that $(\widetilde{ \bm{r} },\widetilde{\bm{\theta}},\widetilde{\bm{t}}) \in \mathcal{X} \subseteq  h^{-1}\big( B(\widetilde{y}, \bar{\epsilon}) \big) $
where the open domain $\mathcal{X}$ is defined as
\begin{align*}
    \mathcal{X} \delequal \{ (\bm{r},\bm{\theta},\bm{t}):\ t_1 = 0\text{ and }\ \norm{ \theta_i - \widetilde{\theta}_i }<\Delta,\ |r_i - \widetilde{r}_i|<\Delta,\ |t_i - \widetilde{t}_i|<\Delta\ \forall i \in [k^*] \}.
\end{align*}
In particular, the parameter $\Delta$ can be chosen small enough with
\begin{align*}
    \Delta < \min_{i = 2,\cdots,k^*}\widetilde{t}_i,\ \Delta < \min_{i = 1,\cdots,k^*}\widetilde{r}_i \wedge 1
\end{align*}
 so that for any $(\bm{r},\bm{\theta},\bm{t}) \in \mathcal{X}$, we have $t_i > 0$ for all $i=2,\cdots,k^*$ and $r_i > 0$ for all $i \in [k^*]$.
 Fix such $\Delta > 0$.
 Meanwhile, from the definition of the measure $\mu(\cdot)$ in \cref{def measure mu j Rd}\cref{def measure mu Rd}
 and the fact that $\widetilde{\theta}_i \in F_{\bm{j}_i}$ where the closed set $F_{\bm{j}_i}$ is the support of the probability measure $S_{\bm{j}_i}$ on the unit sphere $\mathbb{S}^{d-1}$
 (see Assumption \ref{assumption 6 regular variation of the Rd noise}),
 one can see that $\mu(\mathcal{X}) > 0$.
 
Let $\bm{R} = (R_i)_{i = 1}^{k^*}, \bm{\Theta} = (\Theta_i)_{i = 1}^{k^*}, \bm{T} = (T_i)_{i = 1}^{k^*}$ with $R_i = \eta\norm{W^\eta_i(\delta)},\ \Theta_i = W^\eta_i(\delta)/\norm{W^\eta_i(\delta)}$ and $T_i = \eta( T^\eta_i(\delta) - T^\eta_{i-1}(\delta) )$ for all $i \geq 2$ and $T_1 = 0$. Now recall Definition \ref{definitionTypeJ Rd} and consider the following event
 \begin{align*}
     B \delequal \{ (\bm{R},\bm{\Theta},\bm{T}  ) \in \mathcal{X} \} = \{ Z_{ T^\eta_1(\delta) }\text{ is of type-}( h(\mathcal{X}),\eta,\delta ) \}.
 \end{align*}
Corollary \ref{CorTypeProbRd} then gives the bound
 \begin{align*}
     \liminf_{\eta \downarrow 0}\frac{\mathbb{P}(B)}{\widetilde{\lambda}(\eta)} \geq \delta^{\alpha_1}\mu(\mathcal{X}) > 0.
 \end{align*}
 
From now on, we fix some $c_* \in (0,\mu(\mathcal{X}))$.
Given the fixed $\epsilon>0$, we can define $\hat{t} = c_1 + c_1 \log(1/\epsilon)$ and choose $\widetilde{\epsilon} > 0$ as the largest possible value such that
\begin{align*}
    \widetilde{\epsilon} \leq \frac{\epsilon}{ 4\exp(2\hat{t}M) },\ \ \bar{\rho}(\hat{t})\widetilde{\epsilon} \leq \epsilon,\ \  \widetilde{\rho}(\hat{t})\widetilde{\epsilon} \leq \epsilon
\end{align*}
where functions $\bar{\rho}(\cdot), \widetilde{\rho}(\cdot)$ are defined in Corollary \ref{corollary ODE GD gap Rd} and \ref{corollary sgd gd gap Rd} respectively.
We stress that $\bar{t}$ is a fixed constant while $\hat{t}$ depends on the value of $\epsilon$, and for any sufficiently small $\epsilon$ we will have $\hat{t} > \bar{t}$.

Let us define an event $A^\times = A^\times_1\cup A^\times_2$ where
\begin{align*}
     A^\times_1 & \delequal \Big\{ \exists n < T^\eta_1(\delta)\ s.t.\ \norm{X^\eta_n(x)} > 3\epsilon \Big\}, \\
     A^\times_2 & \delequal{} \Big\{ \exists j = 2,\cdots,k^*\ s.t.
     \\
     & \ \max_{k = 1,2,\cdots,(T^\eta_j(\delta) -T^\eta_{j-1}(\delta) - 1) \wedge (2\hat{t}/\eta) } \eta\norm{ Z_{ T^\eta_{j-1}(\delta) + 1 } + \cdots + Z_{ T^\eta_{j-1}(\delta) + j } } > \widetilde{\epsilon} \Big\}
\end{align*}
where the positive integer $l^*$ is defined in \cref{def l star Rd}.

Now we focus on the event $B \symbol{92} A^\times$.
On one hand, Lemma \ref{lemma event B 1 cross Rd} shows that, for all $\eta$ sufficiently small,
on this event we have $ \norm{X^\eta_k  } < 3\epsilon <\bar{\epsilon}$ for all $k < T^\eta_1(\delta)$ and
\begin{align*}
    \sup_{ s \in [0,T^\eta_{k^*}(\delta) - T^\eta_1(\delta) ] }\norm{ X^\eta_{ \floor{s} + T^\eta_1(\delta } - \bm{z}^\eta(s)  } < 2\epsilon < \bar{\epsilon}.
\end{align*}
Here 
\begin{align*}
    \bm{z}^\eta(s) = \widetilde{\bm{x}}^\eta\big(s,X^{(1)};\ \bm{T},\bm{W}  \big)\ \forall s \geq 0
\end{align*}
with
$\bm{T} = \big(0,T^\eta_2(\delta) - T^\eta_1(\delta), T^\eta_3(\delta) - T^\eta_2(\delta), \cdots, T^\eta_{ k^* }(\delta) - T^\eta_{ k^* - 1 }(\delta)\big)$,
$\bm{W} = \big( W^\eta_1(\delta), W^\eta_2(\delta),\cdots,W^\eta_{k^* }(\delta) \big)$,
and $X^{(1)} = X^\eta_{ T^\eta_1(\delta) - 1 }$.
To better control the location of $\bm{z}^\eta(s)$, we also construct
\begin{align*}
    \bm{z}^\eta_0(s) = \widetilde{\bm{x}}^\eta\big(s,\bm{0};\ \bm{T},\bm{W}  \big)\ \forall s \geq 0
\end{align*}
where the only difference is that we substitute the initial value $X^{(1)}$ with $\bm{0}$.

First of all, from \cref{implication of lemma bound on small perturbation at origin}, we see that
\begin{align*}
    \bm{d}(\bm{z}^\eta(s), \mathcal{G}^c) > 100l^*\bar{\epsilon},\ \bm{d}(\bm{z}_0^\eta(s), \mathcal{G}^c) > 100l^*\bar{\epsilon}\ \ \forall s \in [0, \sum_{i = 1}^{k^*}\bm{T}_i). 
\end{align*}
Besides, due to $\mathcal{X} \subseteq  h^{-1}\big( B(\widetilde{y}, \bar{\epsilon}) \big)$, we have that 
$\bm{d}\big(\bm{z}_0^\eta\big(\sum_{i = 1}^{k^*}\bm{T}_i\big), \mathcal{G}\big) > 99l^*\bar{\epsilon}.$
Next, using Corollary \ref{corollary ODE ODE gap Rd}, we have that
$ \sup_{\in [0, \sum_{i = 1}^{k^*}\bm{T}_i]}\norm{ \bm{z}^\eta(s) - \bm{z}^\eta_0(s)  } < 3\rho^*\epsilon < \bar{\epsilon}$,
which further implies that
\begin{align*}
    \bm{d}\big(\bm{z}^\eta\big(\sum_{i = 1}^{k^*}\bm{T}_i\big), \mathcal{G}\big) > 99l^*\bar{\epsilon} - \bar{\epsilon} \geq 98l^*\bar{\epsilon}.
\end{align*}
 Now we have the following facts regarding the distance between SGD iterates $X^\eta_j$ and the domain $\mathcal{G}$.
 First, on event $B \symbol{92} A^\times$, it holds that
 \begin{align*}
     \bm{d}(X^\eta_j, \mathcal{G}^c) > 100l^*\bar{\epsilon} - \bar{\epsilon} \geq 99l^*\bar{\epsilon}\ \ \forall j < T^\eta_{k^*}(\delta),
 \end{align*}
implying that $\sigma(\eta) \geq T^\eta_{k^*}(\delta)$.

Next, if $R(\epsilon,\delta,\eta) < T^\eta_{k^*}(\delta)$, then there exists some $T^\eta_{1}(\delta \leq j < T^\eta_{k^*}(\delta)$ such that $\norm{X^\eta_j} \leq 2 \epsilon$
hence $\norm{ \bm{z}^\eta( j - T^\eta_{1} ) } \leq 2\epsilon + 2\epsilon < \bar{\epsilon} < 100l^*\bar{\epsilon}$.
However, in light of \cref{implication of lemma bound on small perturbation at origin} we know that on event $\{R(\epsilon,\delta,\eta) < T^\eta_{k^*}(\delta)\} \cap (B \symbol{92} A^\times)$,
we have 
\begin{align*}
    \bm{d}\big(\bm{z}^\eta\big(\sum_{i = 1}^{k^*}\bm{T}_i\big), \mathcal{G}^c \big) > 100l^*\bar{\epsilon}
\end{align*}
and yield a contradiction.
Therefore, on event $B \symbol{92} A^\times$ we must have $R(\epsilon,\delta,\eta) \geq T^\eta_{k^*}(\delta)$.

Besides, bounding the gap between $X^\eta_j, \bm{z}^\eta(j)$ and $\bm{z}^\eta_0(j)$ at time $j = T^\eta_{k^*}(\delta)$ using results above, we can show that
\begin{align*}
    \bm{d}(X^\eta_{T^\eta_{k^*}(\delta)},\mathcal{G}) > 98l^*\bar{\epsilon} - \bar{\epsilon} \geq 97l^*\bar{\epsilon}.
\end{align*} 
 Therefore, we must have $\sigma(\eta) = T^\eta_{k^*}(\delta)$.
 In summary, we have shown that, for all $\eta$ sufficiently small, $B \symbol{92} A^\times \subseteq A^\circ(\epsilon,\delta,\eta)$.
 Combining all the bounds above, we have
 \begin{align*}
     \liminf_{ \eta \downarrow 0 }\frac{ \inf_{\norm{x} \leq 2\epsilon} \mathbb{P}_x( A^\circ(\epsilon,\delta,\eta) ) }{ \widetilde{\lambda}(\eta) } 
     & \geq 
     \liminf_{\eta \downarrow 0}\frac{\mathbb{P}(B)}{\widetilde{\lambda}(\eta)}
     -
     \limsup_{\eta \downarrow 0}\frac{ \sup_{ \norm{x} \leq 2\epsilon } \mathbb{P}(A^\times)}{\widetilde{\lambda}(\eta)} \\
     & \geq
     \delta^{\alpha_1}c_*
     -
     \limsup_{\eta \downarrow 0}\frac{ \sup_{ \norm{x} \leq 2\epsilon } \mathbb{P}(A^\times)}{\widetilde{\lambda}(\eta)}.
 \end{align*}
 To conclude the proof, it suffices to invoke Lemma \ref{lemma prob event A Rd} and \ref{lemma stuck at local minimum before large jump Rd}
with some $N > 1 + J_\mathcal{G} - \alpha_1$.
\end{proof}

\subsection{Proof of the main result}

Now we are ready to state the main result and provide upper and lower bounds to the distribution of the scaled first exit time from domain $\mathcal{G}$.
In particular, we define a scaling function
\begin{align}
    \lambda(\eta) \delequal \widetilde{\lambda}(\eta)H(1/\eta) \label{def scaling function lambda Rd}
\end{align}
where the function $\widetilde{\lambda}(\cdot)$ is defined in \cref{def scaling function tilde lambda}.
One can easily see that $\lambda \in RV_{ 1 + J_\mathcal{G}}(\eta)$.
We show that the scaled first exit time $\lambda(\eta)\sigma(\eta)$ converges in distribution to an exponential random variable as $\eta \downarrow 0$,
which implies that, in expectation, the first exit time is roughly a $1/\eta^{1 + J_\mathcal{G}}$ term and its order is dictated by the minimum cost for exit $J_\mathcal{G}$.

\begin{proposition} \label{proposition main result Rd}
Given any $C\in (0,1), u > 0$, the following inequalities hold for all $\epsilon > 0$ sufficiently small,
\begin{align*}
\limsup_{\eta\downarrow 0}\sup_{ \norm{x} \leq 2\epsilon }\P_x( \sigma(\eta)\lambda(\eta) > u ) & \leq 2C + \exp( -(1-C)^2qu ),
\\
\liminf_{\eta\downarrow 0}\inf_{ \norm{x} \leq 2\epsilon }\P_x( \sigma(\eta)\lambda(\eta) > u ) & \geq -C + \exp( -(1+C)qu )
\end{align*}
where $q = \mu\big( h^{-1}(\mathcal{G}^c) \big)$.
\end{proposition}

Before presenting the proof to Proposition \ref{proposition main result Rd}, we make some preparations. 
First, we introduce stopping times (for all $k \geq 1$)
\begin{align*}
    \tau_k(\epsilon,\delta,\eta) & = \min\{ n > \widetilde{\tau}_{k-1}(\epsilon,\delta,\eta): \eta\norm{Z_n} > \delta \} \\
    \widetilde{\tau}_k(\epsilon,\delta,\eta) & = \min\{ n \geq \tau_k(\epsilon,\delta,\eta): \norm{X^\eta_n} \leq 2\epsilon  \}
\end{align*}
with the convention that $\tau_0(\epsilon,\delta,\eta) = \widetilde{\tau}_0(\epsilon,\delta,\eta) = 0.$ 
Intuitively, at each $\widetilde{\tau}_k$ the SGD iterates have just returned to a small neighborhood of the local minimum $\bm{0}$.
Due to Markov property of $X^\eta_n$, the SGD iterates almost \emph{regenerate} at each $\widetilde{\tau}_k$ despite the previous trajectory, and the times $(\widetilde{\tau}_k)_{k \geq 1}$ partitions the entire timeline into different segments that can almost be interpreted as \emph{regeneration cycles},
where $\widetilde{\tau}_{k-1}$ can be understood as the starting point of the $k-$th cycle.
In light of the embedded (informal) regeneration process, one natural approach is to determine the dynamics and probability of making an exit on one (hence every) cycle, and this will be carried out with the help of technical results in the previous section.
It is worth noticing that $\widetilde{\tau}_k$ are defined under the proviso that $\widetilde{\tau}_k \geq \tau_k$ where $\tau_k$ is the first big jump during the $k-$th cycle.
Considering results such as Lemma \ref{lemma stuck at local minimum before large jump Rd},
it is reasonable to expect that the SGD iterates would be trapped at local minimum until a large jump strikes.
Therefore, we define the (informal) regeneration points $\widetilde{\tau}_k$ in such a way that a cycle $[\widetilde{\tau}_{k-1},\widetilde{\tau}_k)$ ends only if we have observed at least one large jump already.
Regarding the notations, we add a remark that when there is no ambiguity we will drop the dependency on $\epsilon,\delta,\eta$ and simply write $\tau_k, \widetilde{\tau}_k$.

Specifically, we are interested in the large jumps and the accumulated cost thereof during each cycle, which definitely entails some systematic bookkeeping.
For all $k \geq 1$, the random variable
\begin{align*}
    \textbf{j}_k & \delequal{} \#\{ n = \tau_{k - 1}(\epsilon,\delta,\eta),\tau_{k-1}(\epsilon,\delta,\eta) + 1,\cdots, \widetilde{\tau}_k(\epsilon,\delta,\eta)\wedge \sigma(\eta): \eta\norm{Z_n} > \delta \}
\end{align*}
can be understood as the count of large jumps during the $k-$th cycle. Here are two remarks on this definition.
\begin{itemize}
    \item First, for any $k$ with $\sigma(\eta) < \widetilde{\tau}_k$, we have $\textbf{j}_k = 0$.
    Recall that the object we study here is the first exit time $\sigma(\eta)$, so there is virtually no need to keep track of the dynamics of the SGD after it leaves the domain $\mathcal{G}$.
    \item The random variable $\textbf{j}_k$ is measurable w.r.t. $\mathcal{F}_{\widetilde{\tau}_k \wedge \sigma(\eta)}$, the stopped $\sigma-$algebra generated by the stopping time $\widetilde{\tau}_k \wedge \sigma(\eta)$, which, intuitively, is stating that one should be able to determine the number of large jumps during the $k-$th cycle when this cycle ends.
\end{itemize}
Furthermore, for each $k = 1,2,\cdots$, let
\begin{align*}
    T_{k,1}(\epsilon,\delta,\eta) & \delequal \tau_{k-1}(\epsilon,\delta,\eta)\wedge \sigma(\eta) , \\
    T_{k,j}(\epsilon,\delta,\eta) & \delequal\min\big\{ n > T_{k,j-1}(\epsilon,\delta,\eta): \eta\norm{Z_n} > \delta \big\} \wedge \sigma(\eta) \wedge \widetilde{\tau}_k \ \ \forall j \geq 2, \\
    W_{k,j}(\epsilon,\delta,\eta) & \delequal Z_{T_{k,j}(\epsilon,\delta,\eta)}\ \ \forall j \geq 1
\end{align*}
with the convention $T_{k,0}(\epsilon,\delta,\eta) = \widetilde{\tau}_{k-1}(\epsilon,\delta,\eta)$.
Note that for any $k \geq 1, j \geq 1$, $T_{k,j}$ is a stopping time.
Besides, for any $k$ with $\widetilde{\tau}_k < \sigma(\eta)$
\begin{align}
    \widetilde{\tau}_{k-1} + 1 \leq T_{k,j} \leq \widetilde{\tau}_k \wedge \sigma(\eta)\ \ \forall j \in [\textbf{j}_k], \label{proof prop first exit implication of j k counting Rd}
\end{align}
and the sequences $\big( T_{k,j} \big)_{j = 1}^{\textbf{j}_k}$ and $\big( W_{k,j} \big)_{j = 1}^{\textbf{j}_k}$
are the arrival times and sizes of \textit{large} jumps during the $k-$th cycle, respectively. 
Again, when there is no ambiguity we will drop the dependency on $\epsilon,\delta,\eta$ and simply write $T_{k,j}$ and $W_{k,j}$.

We are now able to keep track of the accumulated cost of jumps during each cycle.
For any $k \geq 1, i \geq 1$, we define
\begin{align*}
    \mathcal{J}^{\text{cycle}}_k(i;\epsilon,\delta,\eta) \delequal \sum_{l = 1}^{i} J( W_{k,i}(\epsilon,\delta,\eta) ).
\end{align*}
Moreover, we define the following index
\begin{align*}
    \textbf{j}^{\geq J_\mathcal{G}}(k;\epsilon,\delta,\eta) \delequal
    \begin{cases}
     \min\{j = 1,2,\cdots,\textbf{j}_k:\ \mathcal{J}^{\text{cycle}}_k(j;\epsilon,\delta,\eta) \geq J_\mathcal{G} \} & \text{ if }\widetilde{\tau}_k(\epsilon,\delta,\eta) < \sigma(\eta) \\
     \infty & \text{ otherwise}
    \end{cases}
\end{align*}
with the convention that $\min \emptyset = \infty$.
That is to say, if $\widetilde{\tau}_k \geq \sigma(\eta)$ or $\mathcal{J}^{\text{cycle}}_k(\textbf{j}_k) < J_\mathcal{G}$, we define $\textbf{j}^{\geq J_\mathcal{G}}(k)$ as $\infty$;
otherwise $\textbf{j}^{\geq J_\mathcal{G}}(k)$  is the index for the first large jump during the $k-$th cycle that drives the accumulated cost of large jumps to reach $J_\mathcal{G}$.
For clarity of the presentation below, we also define
\begin{align*}
    T^{\geq J_\mathcal{G}}(k;\epsilon,\delta,\eta) & \delequal \mathbbm{1}\{ \textbf{j}^{\geq J_\mathcal{G}}(k) < \infty \}T_{k,\textbf{j}^{\geq J_\mathcal{G}}(k)}  + \mathbbm{1}\{ \textbf{j}^{\geq J_\mathcal{G}}(k) = \infty   \}0, \\
    X^{\geq J_\mathcal{G}}(k;\epsilon,\delta,\eta) & \delequal \mathbbm{1}\{ \textbf{j}^{\geq J_\mathcal{G}}(k) < \infty \}X^\eta_{ T^{\geq J_\mathcal{G}}(k;\epsilon,\delta,\eta)  } + \mathbbm{1}\{ \textbf{j}^{\geq J_\mathcal{G}}(k) = \infty   \}\bm{0}.
\end{align*}
In other words, when $\textbf{j}^{\geq J_\mathcal{G}}(k) < \infty$, the random variable $X^{\geq J_\mathcal{G}}(k;\epsilon,\delta\eta)$ is equal to $X^\eta_{ T_{k,\textbf{j}^{\geq J_\mathcal{G}}(k)}  }$, the location of the SGD iterates right when the accumulated cost on the $k-$th cycle reaches $J_\mathcal{G}$.
Notation wise, we will drop the dependency on $\epsilon,\delta,\eta$ again and simply write $\textbf{j}^{\geq J_\mathcal{G}}(k), X^{\geq J_\mathcal{G}}(k)$ or $\mathcal{J}^{\text{cycle}}_k(j)$ when there is no risk of ambiguity.

As have mentioned above, we want to zoom in on the each cycle and analyze the probability of each possible case.
To be specific, we want to introduce a series of scenarios, formally defined as several events, that exhaust all the possibilities during a cycle.
The events will be denoted in the form of $\textbf{A}^{\circ}_k$ or $\textbf{B}^{\times}_j$ largely following the next few rules.
First, we say an event/scenario is \emph{atypical} if its probability is rather small, and we assign it with superscript $\times$;
otherwise we say the event is \emph{typical} and add a superscript $\circ$.
Besides, the subscript $k$ indicates the cycle that the event concerns.
Lastly, events with label (of type) $\textbf{A}$ usually describe how the SGD iterates try to escape from $\mathcal{G}$, whereas events with label $\textbf{B}$ focuses on how the SGD iterates return to the local minimum. 

Now we proceed and formally define the said series of events.
First, for each $k \geq 1$, define the event
\begin{align}
    \textbf{A}^\times_{k,0}(\epsilon,\delta,\eta) & \delequal{} \Big\{\exists i \in [l^*]\ s.t.\ \max_{j = T_{k,i-1}+1,T_{k,i-1}+2,\cdots,(T_{k,i} - 1)\wedge(T_{k,i-1} + \ceil{ \frac{2\hat{t}(\epsilon)}{\eta}}) \wedge\widetilde{\tau}_k \wedge \sigma(\eta) } \eta\norm{Z_{ T_{k,i}+1} + \cdots + Z_j} > \widetilde{\epsilon}(\epsilon) \Big\} \nonumber
    \\
    & \ \ \ \ \cup \Big\{\norm{X^\eta_j} > 3\epsilon\ \text{ for some }j = \widetilde{\tau}_{k - 1},\widetilde{\tau}_{k - 1} + 1,\cdots,T_{k,1} - 1  \Big\} 
    \label{def A cross k 0 in prop main result Rd} 
\end{align}
with $\hat{t}(\epsilon) = c_1 + c_1\log(1/\epsilon)$ and function $\widetilde{\epsilon}(\cdot)$ defined in \cref{def function tilde epsilon Rd}.
It is similar to the event $A^\times$ defined in \cref{def event A cross 1 for A cross Rd}\cref{def event A cross 2 for A cross Rd} in the previous section, and its probability will be controlled using a similar approach.

Next, consider event (for all $k \geq 1$)
\begin{align}
    \textbf{A}^\times_{k,1}(\epsilon,\delta,\eta) \delequal{}\Big\{ \sigma(\eta)<\widetilde{\tau}_k,\ \mathcal{J}^\text{cycle}_k(\textbf{j}_k) < J_\mathcal{G} \Big\} \label{def A cross k 1 in prop main result Rd}
\end{align}
that describes an atypical case where the exit occurs during the $k-$th cycle yet the accumulated cost of all large jumps is less than $J_\mathcal{G}$. 
Another atypical event is defined as 
\begin{align}
    \textbf{A}^\times_{k,2} \delequal{}\{\bm{j}^{\geq J_\mathcal{G}}(k) < \infty\} \cap \{\exists j = 2,3,\cdots,\bm{j}^{\geq J_\mathcal{G}}(k)\ s.t.\ T_{k,j} - T_{k,j-1} > 2\hat{t}(\epsilon)/\eta \}. \label{def A cross k 2 in prop main result Rd}
\end{align}
This event describes the case where the accumulated cost of large jumps in a cycle has reached $J_\mathcal{G}$ but the inter-arrival time between some large jumps are unusually long.
We stress that, by definition of $\bm{j}^{\geq J_\mathcal{G}}(k)$, 
we have 
$\{\bm{j}^{\geq J_\mathcal{G}}(k) < \infty\} = \{ \widetilde{\tau}_{k-1} < \sigma(\eta) \}\cap\{J^\text{cycle}_k(\bm{j}_k) \geq J_\mathcal{G}\}$. 

Moving on, we consider the following events (defined for all $k \geq 1$)
\begin{align}
    \textbf{A}^\times_{k,3} \delequal{}\{\mathcal{J}^\text{cycle}_k(\textbf{j}_k) < J_\mathcal{G},\ \widetilde{\tau}_k < \sigma(\eta)\}\cap \Big\{\exists j = 2,3,\cdots,\textbf{j}_k\ s.t.\ T_{k,j} - T_{k,j-1} > 2\hat{t}(\epsilon)/\eta  \Big\}  \label{def A cross k 3 in prop main result Rd}
\end{align}
that describes another similar case where the $k-$th cycle ends with return to the local minimum but the inter-arrival time between some large jumps are unusually long.

The next event
\begin{align}
    \textbf{A}^\times_{k,4}\delequal{} & \{\textbf{j}^{\geq J_\mathcal{G}}(k) < \infty,\ \mathcal{J}^\text{cycle}_k(\textbf{j}^{\geq J_\mathcal{G}}(k)) = J_\mathcal{G} \} \cap  \{ \bm{d}( X^{\geq J_\mathcal{G}}(k), \mathcal{G}^c  ) \leq \bar{\epsilon} \} \cap \Big\{\exists j \in [\bm{j}^{\geq J_\mathcal{G}}(k)]\ s.t.\ \eta\norm{W_{k,j}} \leq \bar{\delta} \Big\} \label{def A cross k 4 in prop main result Rd}
\end{align}
describes the case where the accumulated cost on the $k-$th cycle has hit $J_\mathcal{G}$ exactly at some point with SGD iterates being rather close to $\mathcal{G}^c$ yet some large jumps are not large enough when compared to the fixed constant $\bar{\delta}$.

Meanwhile, with events
\begin{align}
    \textbf{A}^\times_{k,5} \delequal{} \{\textbf{j}^{\geq J_\mathcal{G}}(k) < \infty,\ \mathcal{J}^\text{cycle}_k(\textbf{j}^{\geq J_\mathcal{G}}(k)) = J_\mathcal{G} \} \cap \Big\{ \bm{d}( X^{\geq J_\mathcal{G}}(k), \partial \mathcal{G}) < \epsilon \Big\}, \label{def event A cross k 5 in main result Rd}
\end{align}
we analyze the case where the SGD iterates reaches somewhere close the the boundary set $\partial\mathcal{G}$ with large jumps of cost equal to $J_\mathcal{G}$. Lastly, define event
\begin{align}
     \textbf{A}^\times_{k,6} \delequal{} \{\textbf{j}^{\geq J_\mathcal{G}}(k) < \infty,\ \mathcal{J}^\text{cycle}_k(\textbf{j}^{\geq J_\mathcal{G}}(k)) > J_\mathcal{G} \} \label{def event A cross k 6 in main result Rd}
\end{align}
for the case where the accumulated cost of large jumps on the $k-$th cycle exceeds $J_\mathcal{G}$ without hitting it.
As an amalgamation of these atypical scenarios, we let
\begin{align}
    \textbf{A}^\times_k(\epsilon,\delta,\eta) \delequal{} \bigcup_{i = 0}^6 \textbf{A}^\times_{k,i}(\epsilon,\delta,\eta). \label{def event A cross union Rd}
\end{align}
Next, we analyze the probability of some events $(\textbf{B}^\times_k)_{k \geq 1}$ that concern the behavior of SGD iterates during the $k-$th cycle after $T^{\geq J_\mathcal{G}}(k)$.
Let us define
\begin{align}
    \textbf{B}^\times_{k,1}(\epsilon,\delta,\eta) &\delequal{} \{\textbf{j}^{\geq J_\mathcal{G}}(k) < \infty,\ \mathcal{J}^\text{cycle}_k(\textbf{j}^{\geq J_\mathcal{G}}(k)) = J_\mathcal{G} \} \cap \{\bm{d}( X^{\geq J_\mathcal{G}}(k), \mathcal{G}^c  ) \geq \epsilon\} \nonumber \\
    &\ \ \ \ \ \cap \Big\{T_{k,j} - T_{k,j-1} \leq 2\frac{\hat{t}(\epsilon)}{\eta}\ \forall j = 2,3,\cdots,\textbf{j}^{\geq J_\mathcal{G}}(k)  \Big\} \nonumber
    \\ 
    \textbf{B}^\times_{k,2}(\epsilon,\delta,\eta) & \delequal{} \{ \widetilde{\tau}_k - T_{k,\textbf{j}^{\geq J_\mathcal{G}}(k)} > \rho(\epsilon)/\eta \} \cup  \{ \sigma(\eta) < \widetilde{\tau}_k \} \nonumber
    \\
    \textbf{B}^\times_k(\epsilon,\delta,\eta) & \delequal{} \textbf{B}^\times_{k,1} \cap \textbf{B}^\times_{k,2} \label{def prop first exit def B cross k main result Rd}
\end{align}
where $\rho(\cdot)$ is the function in Lemma \ref{lemma return to local minimum quickly Rd}.
From the definition of $\textbf{B}^\times_k$, in particular the inclusion of $\textbf{B}^\times_{k,2}$, one can see that the intuitive interpretation of event $\textbf{B}^\times_k$ is that the SGD iterates \textit{did not return} to local minimum efficiently or even escaped from $\mathcal{G}$ after large jumps with cost equal to $J_\mathcal{G}$.
In comparison, the following events will characterize what would typically happen during each attempt:
 \begin{align}
        \textbf{A}^\circ_k(\epsilon,\delta,\eta) &\delequal{} \{\textbf{j}^{\geq J_\mathcal{G}}(k) < \infty,\ \mathcal{J}^\text{cycle}_k(\textbf{j}^{\geq J_\mathcal{G}}(k)) = J_\mathcal{G} \} \cap \{\sigma(\eta) = T^{\geq J_\mathcal{G}}(k)\} \nonumber \\
        & \ \ \ \ \ \cap \Big\{T_{k,j} - T_{k,j-1} \leq \frac{2\hat{t}(\epsilon)}{\eta}\ \forall j = 2,3,\cdots,\textbf{j}^{\geq J_\mathcal{G}}(k) \Big\}, \label{def event A circ k prop main result Rd}
        \\ 
        \textbf{B}^\circ_k(\epsilon,\delta,\eta) &\delequal{} \{ \sigma(\eta) > \widetilde{\tau}_k,\ \widetilde{\tau}_k - T_{k,1} \leq \frac{2l^*\hat{t}(\epsilon) + \rho(\epsilon)}{\eta} \} \cap \{ \mathcal{J}^\text{cycle}_k(\textbf{j}_k) \leq J_\mathcal{G} \}. \label{def event B circ k Rd}
    \end{align}
Intuitively speaking, $\textbf{A}^\circ_k$ tells us that the exit happened right at $T^{\geq J_\mathcal{G}}(k)$ with accumulated cost of large jumps in the cycle being exactly $J_\mathcal{G}$,
and $\textbf{B}^\circ_k$ requires that that the first exit did not occur during the $k-$th cycle, and the SGD iterates returned to local minimum rather efficiently. 
It is worth noticing that, by definition, $\textbf{j}^{\geq J_\mathcal{G}}(k) < l^*$ if $\textbf{j}^{\geq J_\mathcal{G}}(k)<\infty$.

Our immediate next goal is to analyze the event
\begin{align}
    \textbf{A}^\times(\epsilon,\delta,\eta) \delequal{} \bigcup_{k \geq 1}\bigg( \bigcap_{i = 1}^{k-1}\big( \textbf{A}^\times_i \cup \textbf{B}^\times_i \cup \textbf{A}^\circ_i \big)^c    \bigg) \cap \big(\textbf{A}^\times_k \cup \textbf{B}^\times_k \big). \label{def event A cross final Rd}
\end{align}
In particular, we show that its probability can be made arbitrarily small with proper $\epsilon,\delta,\eta$,
implying that we will almost always observe the event $(\textbf{A}^\times)^c$.

\begin{lemma} \label{lemma bound on event A cross final Rd}
Given any $C > 0$, the following claim holds for all $\epsilon>0,\delta>0$ sufficiently small:
\begin{align*}
    \limsup_{\eta\downarrow 0} \sup_{x:\ \norm{x}\leq 2\epsilon } \mathbb{P}_x(\textbf{A}^\times(\epsilon,\delta,\eta)) \leq C.
\end{align*}
\end{lemma}

\begin{proof}
We start by bounding for the probabilities of all the $\textbf{A}^\times_{k,i}$ events.
Fix some $N > 1 + J_\mathcal{G}$.
For 
\begin{align*}
    \textbf{A}^\times_{k,0}(\epsilon,\delta,\eta) & =\Big\{\exists i \in [l^*]\ s.t.\ \max_{j = T_{k,i-1}+1,T_{k,i-1}+2,\cdots,(T_{k,i} - 1)\wedge(T_{k,i-1} + \ceil{ \frac{2\hat{t}(\epsilon)}{\eta}}) \wedge\widetilde{\tau}_k \wedge \sigma(\eta) } \eta\norm{Z_{ T_{k,i}+1} + \cdots + Z_j} > \widetilde{\epsilon}(\epsilon) \Big\} \nonumber
    \\
    & \ \ \ \ \cup \Big\{\norm{X^\eta_j} > 3\epsilon\ \text{ for some }j = \widetilde{\tau}_{k - 1},\widetilde{\tau}_{k - 1} + 1,\cdots,T_{k,1} - 1  \Big\}, 
\end{align*}
using Lemma \ref{lemma prob event A Rd} and \ref{lemma stuck at local minimum before large jump Rd},
one can show the existence of some $\epsilon_0 \in (0,\bar{\epsilon}/3)$ such that for any $\epsilon \in (0,\epsilon_0)$, there is some $\delta_0(\epsilon) > 0$ with
\begin{align}
    \limsup_{\eta} \frac{ \sup_{\norm{x}\leq 2\epsilon}\P_x( \textbf{A}^\times_{1,0}(\epsilon,\delta,\eta)) }{\eta^N} = 0\ \ \ \forall \delta \in (0,\delta_0(\epsilon)). \label{proof prop first exit A cross k 0 bound Rd}
\end{align}
Next, for $\textbf{A}^\times_{k,1}(\epsilon,\delta,\eta) =\Big\{ \sigma(\eta)<\widetilde{\tau}_k,\ \mathcal{J}^\text{cycle}_k(\textbf{j}_k) < J_\mathcal{G} \Big\},$
it follows from Lemma \ref{lemma atypical 1 exit before J_G Rd} that, for any $\epsilon \in (0,\epsilon_0)$, there exists some $0<\delta_1(\epsilon) \leq \delta_0(\epsilon)$ such that
\begin{align}
    \limsup_{\eta} \frac{ \sup_{\norm{x}\leq 2\epsilon}\P_x( \textbf{A}^\times_{1,1}(\epsilon,\delta,\eta)) }{\eta^N} = 0\ \ \ \forall \delta \in (0,\delta_1(\epsilon)). \label{proof prop first exit A cross k 1 bound Rd}
\end{align}
For event $\textbf{A}^\times_{k,2} =\{\bm{j}^{\geq J_\mathcal{G}}(k) < \infty\} \cap \{\exists j = 2,3,\cdots,\bm{j}^{\geq J_\mathcal{G}}(k)\ s.t.\ T_{k,j} - T_{k,j-1} > 2\hat{t}(\epsilon)/\eta \}$,
Lemma \ref{corollary atypical 1 Rd} implies that, for any $\epsilon \in (0,\epsilon_0)$, there exists some $0<\delta_2(\epsilon) \leq \delta_1(\epsilon)$ such that
\begin{align}
    \limsup_{\eta} \frac{ \sup_{\norm{x}\leq 2\epsilon}\P_x( \textbf{A}^\times_{1,2}(\epsilon,\delta,\eta)) }{\eta^N} = 0\ \ \ \forall \delta \in (0,\delta_2(\epsilon)). \label{proof prop first exit A cross k 2 bound Rd}
\end{align}
For event $\textbf{A}^\times_{k,3} =\{\mathcal{J}^\text{cycle}_k(\textbf{j}_k) < J_\mathcal{G},\ \widetilde{\tau}_k < \sigma(\eta)\}\cap \Big\{\exists j = 2,3,\cdots,\textbf{j}_k\ s.t.\ T_{k,j} - T_{k,j-1} > 2\hat{t}(\epsilon)/\eta  \Big\}$,
thanks to Lemma \ref{corollary atypical 2 Rd}, one can see that, for any $\epsilon \in (0,\epsilon_0)$, there exists some $0<\delta_3(\epsilon) \leq \delta_2(\epsilon)$ such that
\begin{align}
    \limsup_{\eta} \frac{ \sup_{\norm{x}\leq 2\epsilon}\P_x( \textbf{A}^\times_{1,3}(\epsilon,\delta,\eta)) }{\eta^N} = 0\ \ \ \forall \delta \in (0,\delta_3(\epsilon)). \label{proof prop first exit A cross k 3 bound Rd}
\end{align}
For event 
\begin{align*}
    \textbf{A}^\times_{k,4}= \{\textbf{j}^{\geq J_\mathcal{G}}(k) < \infty,\ \mathcal{J}^\text{cycle}_k(\textbf{j}^{\geq J_\mathcal{G}}(k)) = J_\mathcal{G} \} \cap  \{ \bm{d}( X^{\geq J_\mathcal{G}}(k), \mathcal{G}^c  ) \leq \bar{\epsilon} \} \cap \Big\{\exists j \in [\bm{j}^{\geq J_\mathcal{G}}(k)]\ s.t.\ \eta\norm{W_{k,j}} \leq \bar{\delta} \Big\}, 
\end{align*}
using Lemma \ref{lemma atypical 2 Rd}, one can see the existence of some $\Delta > 0$ such that, for all $\epsilon \in (0,\epsilon_0)$, there is some $0<\delta_4(\epsilon) \leq \delta_3(\epsilon)$ such that
\begin{align}
    \limsup_{\eta} \frac{ \sup_{\norm{x}\leq 2\epsilon}\P_x( \textbf{A}^\times_{1,4}(\epsilon,\delta,\eta)) }{\eta^{1 + J_\mathcal{G} - \alpha_1 + \Delta}} = 0\ \ \ \forall \delta \in (0,\delta_4(\epsilon)). \label{proof prop first exit A cross k 4 bound Rd}
\end{align}
As for event $\textbf{A}^\times_{k,5} = \{\textbf{j}^{\geq J_\mathcal{G}}(k) < \infty,\ \mathcal{J}^\text{cycle}_k(\textbf{j}^{\geq J_\mathcal{G}}(k)) = J_\mathcal{G} \} \cap \Big\{ \bm{d}( X^{\geq J_\mathcal{G}}(k), \partial \mathcal{G}) < \epsilon \Big\}$,
the plan is to use Lemma \ref{lemma atypical 3 Rd} to control its probability.
But before that, we recall the definition of function $\Psi(\epsilon) = \mu\Big( h^{-1}\big( (\partial \mathcal{G})^{(3 + 3\rho^*)\epsilon} \big) \Big)$ in Lemma \ref{lemma atypical 3 Rd} and make several observations.
First, note that $\partial \mathcal{G}$ is a closed set, so $\cap_{\epsilon > 0}h^{-1}\big( (\partial \mathcal{G})^{\epsilon}  \big) = h^{-1}(\partial \mathcal{G})$ for the mapping $h$ in \cref{def mapping h Rd}.
Then it follows from Assumption \ref{assumption boundary set with zero mass} that
$\lim_{\epsilon \downarrow 0}\mu\Big(h^{-1}\big( (\partial \mathcal{G})^{\epsilon}  \big) \Big) = 0$,
implying that for $\epsilon > 0$ sufficiently small, we must have
\begin{align*}
    \Psi(\epsilon) < \frac{Cc_*}{4}.
\end{align*}
where $C > 0$ is the fixed constant in the description of the Lemma, and $c_* > 0$ is the constant in Lemma \ref{lemma lower bound typical exit Rd}.
We stress that both of them would not vary with $\epsilon,\delta,\eta$.
Combining this with Lemma \ref{lemma atypical 3 Rd}, we know the existence of some $\epsilon_5 \in (0,\epsilon_0]$ such that, for all $\epsilon \in (0,\epsilon_5)$, one can find some $0 < \delta_5(\epsilon) \leq \delta_4(\epsilon)$ with
\begin{align}
    \limsup_{\eta} \frac{ \sup_{\norm{x}\leq 2\epsilon}\P_x( \textbf{A}^\times_{1,5}(\epsilon,\delta,\eta)) }{ \widetilde{\lambda}(\eta) } \leq \delta^{\alpha}\Psi(\epsilon) < \delta^{\alpha}\frac{Cc_*}{4}\ \ \ \forall \delta \in (0,\delta_5(\epsilon)). \label{proof prop first exit A cross k 5 bound Rd}
\end{align}
Lastly, for event $\textbf{A}^\times_{k,6} = \{\textbf{j}^{\geq J_\mathcal{G}}(k) < \infty,\ \mathcal{J}^\text{cycle}_k(\textbf{j}^{\geq J_\mathcal{G}}(k)) > J_\mathcal{G} \}$,
from Lemma \ref{lemma atypical 2 Rd} again we know that (let $\epsilon_6 = \epsilon_5)$ for all $\epsilon \in (0,\epsilon_6)$, one can find some $0 < \delta_6(\epsilon) \leq \delta_5(\epsilon)$ with
\begin{align}
    \limsup_{\eta} \frac{ \sup_{\norm{x}\leq 2\epsilon}\P_x( \textbf{A}^\times_{1,6}(\epsilon,\delta,\eta)) }{\eta^{1 + J_\mathcal{G} - \alpha_1 + \Delta}} = 0\ \ \ \forall \delta \in (0,\delta_6(\epsilon)). \label{proof prop first exit A cross k 6 bound Rd}
\end{align}

Recall that $\textbf{A}^\times_k(\epsilon,\delta,\eta) = \bigcup_{i = 0}^6\textbf{A}^\times_{k,i}(\epsilon,\delta,\eta)$.
Also, the events $\textbf{B}^\times_k, \textbf{A}^\circ_k,\textbf{B}^\circ_k$ are defined in 
\cref{def prop first exit def B cross k main result Rd}\cref{def event A circ k prop main result Rd}\cref{def event B circ k Rd} respectively.
Before bounding the conditional probability of events $\textbf{B}^\times_k$, we make several observations.
First, if we consider the event $\bigcap_{j = 1}^{k}(\textbf{A}^\times_j \cup \textbf{B}^\times_j)^c \cap\textbf{B}^\circ_j$,
the inclusion of the $(\textbf{B}^\circ_j)_{j = 1}^k$ implies that during the first $k$ cycles the SGD iterates have never left $\mathcal{G}$, so
\begin{align*}
    \bigcap_{j = 1}^{k}(\textbf{A}^\times_j \cup \textbf{B}^\times_j)^c \cap\textbf{B}^\circ_j = \big(\bigcap_{j = 1}^{k}(\textbf{A}^\times_j \cup \textbf{B}^\times_j)^c \cap\textbf{B}^\circ_j\big)\cap \{ \sigma(\eta) > \widetilde{\tau}_k \}.
\end{align*}
Next, note that
\begin{align*}
    & \mathbb{P}_x\big( \textbf{B}^\times_k\ |\ \bigcap_{j = 1}^{k-1}(\textbf{A}^\times_j \cup \textbf{B}^\times_j)^c \cap\textbf{B}^\circ_j  \big) 
    \\
    = &  \underbrace{\mathbb{P}_x\big( \textbf{B}^\times_{k,1}\ |\ \bigcap_{j = 1}^{k-1}(\textbf{A}^\times_j \cup \textbf{B}^\times_j)^c \cap\textbf{B}^\circ_j  \big)}_{ \delequal\text{(I)}} \cdot \underbrace{\mathbb{P}_x\Big( \textbf{B}^\times_{k,2}\ \Big|\ \big(\bigcap_{j = 1}^{k-1}(\textbf{A}^\times_j \cup \textbf{B}^\times_j)^c \cap\textbf{B}^\circ_j\big)\cap \textbf{B}^\times_{k,1}  \Big)}_{ \delequal \text{(II)} }.
\end{align*}
We start by analyzing term (I). From Assumption \ref{assumption unique configuration for minimum exit cost}, one can see that on event $\textbf{B}^\times_{k,1}$, it holds that
$$\textbf{j}^{ \geq J_\mathcal{G}}(k) = k^*,\ \ (W_{k,1},\cdots,W_{k,k^*}) \text{ is of type-}\bm{j}\text{ for some }\bm{j}\in\bm{j}(i^*)$$
and $T_{k,j} - T_{k,j-1} \leq 2\frac{\hat{t}(\epsilon)}{\eta}\ \forall j = 2,3,\cdots,k^*$.
Due to independence of $W_{k,i}$ and $T_{k,i+1} - T_{k,i}$, we have
\begin{align*}
    \sup_{\norm{x} \leq 2}\text{(I)} & \leq \big(\P(T^\eta_1(\delta) \leq \frac{2\hat{t}(\epsilon)}{\eta})\big)^{k^* - 1}\cdot \sum_{ \bm{j}:\ \bm{j} \in  \bm{j}(\bm{i}^*) }\P\Big( (W^\eta_1(\delta),\cdots,W^\eta_{k^*}(\delta))\text{ is of type-}\bm{j} \Big) \\
    & = \big(\P(T^\eta_1(\delta) \leq \frac{2\hat{t}(\epsilon)}{\eta})\big)^{k^* - 1}\cdot \sum_{ \bm{j}:\ \bm{j} \in  \bm{j}(\bm{i}^*) }\prod_{i = 1}^{k^*}\frac{ H_{\bm{j}_i}(\delta/\eta) }{H(\delta/\eta)}.
\end{align*}
Due to the strong Markov property at stopping time $\widetilde{\tau}_{k - 1}$, Lemma \ref{lemmaGeomFront}, and the regularly varying nature of functions $H_i, H$,
\begin{align*}
    \limsup_{\eta \downarrow 0}\frac{\sup_{\norm{x} \leq 2}\text{(I)}}{ \big(2\hat{t}(\epsilon)/\delta^{\alpha_1}\big)^{k^* - 1}\big( H(1/\eta)/\eta \big)^{k^* - 1}\cdot |\bm{j}(\bm{i}^*)|\cdot\frac{\delta^{k^*\alpha_1}}{ \delta^{k^* + J_\mathcal{G}}  }\cdot\Big(\prod_{q = 1}^m \big(H_i(1/\eta)\big)^{\bm{i}^*_q}\Big)\big/\big( H(1/\eta) \big)^{k^*}  } \leq 1.
\end{align*}
In other words, for any $\epsilon,\delta > 0$, there exists some $C(\epsilon,\delta) < \infty$ such that
\begin{align*}
    \limsup_{\eta \downarrow 0}\frac{ \sup_{\norm{x} \leq 2}\text{(I)}}{\delta^{\alpha_1} \widetilde{\lambda}(\eta) } \leq |\bm{j}(\bm{i}^*)|C(\epsilon,\delta).
\end{align*}
On the other hand, due to Lemma \ref{lemma return to local minimum quickly Rd} and the strong Markov property at stopping time $T_{k,k^*}$, for any $\epsilon \in (0,\bar{\epsilon})$ and any $\delta > 0$,
\begin{align*}
    \limsup_{\eta \downarrow 0}\sup_{\norm{x} \leq 2\epsilon}\text{(II)} < \frac{ Cc_*/4 }{|\bm{j}(\bm{i}^*)|C(\epsilon,\delta)}.
\end{align*}
In summary, we have established that for any $\epsilon \in (0,\bar{\epsilon})$ and any $\delta > 0$,
\begin{align}
    \limsup_{\eta\downarrow 0}\frac{ \sup_{ \norm{x}\leq 2\epsilon } \mathbb{P}_x\big( \textbf{B}^\times_k\ |\ \bigcap_{j = 1}^{k-1}(\textbf{A}^\times_j \cup \textbf{B}^\times_j)^c \cap\textbf{B}^\circ_j  \big)}{\delta^{ \alpha_1}\cdot\widetilde{\lambda}(\eta)} < Cc_*/4\ \forall k \geq 1. \label{proof prop first exit bound B cross k Rd}
\end{align}

Similarly, we can bound conditional probabilities of the form $\mathbb{P}_x\big( \textbf{A}^\times_k\ |\ \bigcap_{j = 1}^{k-1}(\textbf{A}^\times_j \cup \textbf{B}^\times_j)^c \cap\textbf{B}^\circ_j  \big)$. 
To be specific, recall that $\textbf{A}^\times_k = \cup_{i = 0}^6\textbf{A}^\times_{k,i}$.
Combining \cref{proof prop first exit A cross k 0 bound Rd}-\cref{proof prop first exit A cross k 6 bound Rd} with strong Markov property, we know the for all $\epsilon \in (0,\epsilon_6)$ and $k \geq 1$,
\begin{align}
     \limsup_{\eta \downarrow 0}\frac{\sup_{\norm{x}\leq 2\epsilon}\mathbb{P}_x\big( \textbf{A}^\times_k\ |\ \bigcap_{j = 1}^{k-1}(\textbf{A}^\times_j \cup \textbf{B}^\times_j)^c \cap\textbf{B}^\circ_j  \big)}{ \delta^{\alpha_1}\cdot \widetilde{\lambda}(\eta) } < Cc_*/4\ \ \forall \delta \in (0,\delta_6(\epsilon)).  \label{proof prop first exit bound A cross k Rd}
\end{align}
On the other hand, a lower bound can be established for conditional probability of event $\textbf{A}^\circ_k$, the event defined in \cref{def event A circ k prop main result Rd}.
Due to Lemma \ref{lemma lower bound typical exit Rd} and the strong Markov property at $\widetilde{\tau}_{k-1}$, one can see the existence of some $\epsilon_7 \in (0,\epsilon_6]$ such that, for any $\epsilon \in (0,\epsilon_7)$, there is some $0 < \delta_7(\epsilon) \leq \delta_6(\epsilon)$ such that
\begin{align}
    \liminf_{\eta\downarrow 0} \frac{  \inf_{\norm{x} \leq 2\epsilon}\mathbb{P}_x\big(\textbf{A}^\circ_{k}\ |\ \bigcap_{j = 1}^{k-1}(\textbf{A}^\times_j \cup \textbf{B}^\times_j)^c \cap\textbf{B}^\circ_j  \big) }{ \delta^{\alpha_1}\widetilde{\lambda}(\eta) } \geq c_*\ \ \ \forall \delta \in (0,\delta_7(\epsilon)),\ k \geq 1. \label{proof prop first exit bound A circ k Rd}
\end{align}

In order to apply the bounds \cref{proof prop first exit bound B cross k Rd}-\cref{proof prop first exit bound A circ k Rd}, we make use of the following inclusion relationship:
\begin{align}
    \big(\bigcap_{j = 1}^{k-1}(\textbf{A}^\times_j \cup \textbf{B}^\times_j)^c \cap\textbf{B}^\circ_j\big)\cap ( \textbf{A}^\times_k \cup \textbf{B}^\times_k )^c \subseteq \textbf{A}^\circ_k \cup \textbf{B}^\circ_k. \label{proof prop first exit inclusion A B x o Rd}
\end{align}
To see why this is true, let us consider a decomposition of the event on the L.H.S. of \cref{proof prop first exit inclusion A B x o Rd}. As mentioned above, on event $\bigcap_{j = 1}^{k-1}(\textbf{A}^\times_j \cup \textbf{B}^\times_j)^c \cap\textbf{B}^\circ_j$ we know that $\sigma(\eta) > \widetilde{\tau}_{k-1}$, 
implying that for the first $k-1$ cycles the first exit does not occur and leaving us with
only three possibilities on this event:
\begin{itemize}
    \item $\textbf{j}^{\geq J_\mathcal{G}}(k) = \infty$ (on $\{ \sigma(\eta) > \widetilde{\tau}_{k-1} \}$, this would happen if and only if $\mathcal{J}^\text{cycle}_k(\textbf{j}_k) < J_\mathcal{G}$);
    \item $\textbf{j}^{\geq J_\mathcal{G}}(k) < \infty, X^{\geq J_\mathcal{G}}(k) \notin \mathcal{G}$;
    \item $\textbf{j}^{\geq J_\mathcal{G}}(k) < \infty, X^{\geq J_\mathcal{G}}(k) \in \mathcal{G}$.
\end{itemize}
Let us partition the said event accordingly and analyze them one by one.
\begin{itemize}
    \item On $\big(\bigcap_{j = 1}^{k-1}(\textbf{A}^\times_j \cup \textbf{B}^\times_j)^c \cap\textbf{B}^\circ_j\big)\cap ( \textbf{A}^\times_k \cup \textbf{B}^\times_k )^c \cap \{ \mathcal{J}^\text{cycle}_k(\textbf{j}_k) < J_\mathcal{G} \}$,
    due to the exclusion of $\textbf{A}^\times_k$ (especially $\textbf{A}^\times_{k,1}$ and $\textbf{A}^\times_{k,3}$), we can see that if $\textbf{j}_k < l^*$, then we must have $\sigma(\eta) > \widetilde{\tau}_k$ and $\widetilde{\tau}_k - T_{k,1} \leq 2l^*\hat{t}(\epsilon)/\eta$. 
    In particular, note that on this event we must have $l^* > \textbf{j}_k$ by definition of $l^*$ in \cref{def l star Rd}.
    Therefore,
    \begin{align*}
        \big(\bigcap_{j = 1}^{k-1}(\textbf{A}^\times_j \cup \textbf{B}^\times_j)^c \cap\textbf{B}^\circ_j\big)\cap ( \textbf{A}^\times_k \cup \textbf{B}^\times_k )^c \cap \{ \textbf{j}^{\geq J_\mathcal{G}}(k) = \infty\} \subseteq \textbf{B}^\circ_k.
    \end{align*}
    \item On $\big(\bigcap_{j = 1}^{k-1}(\textbf{A}^\times_j \cup \textbf{B}^\times_j)^c \cap\textbf{B}^\circ_j\big)\cap ( \textbf{A}^\times_k \cup \textbf{B}^\times_k )^c \cap \{ \textbf{j}^{\geq J_\mathcal{G}}(k) < \infty, X^{\geq J_\mathcal{G}}(k) \notin \mathcal{G} \}$, 
    the exclusion of $\textbf{A}^\times_{k,2}$ implies that $T_{k,j} - T_{k,j-1} \leq 2\hat{t}(\epsilon)/\eta$ for all $j=2,\cdots,\textbf{j}^{\geq J_\mathcal{G}}(k) $,
    and the exclusion of $\textbf{A}^\times_{k,6}$ tells us that $\mathcal{J}^\text{cycle}_k( \textbf{j}^{\geq J_\mathcal{G}}(k) ) = J_\mathcal{G}$.
    Moreover, the exclusion of $\textbf{A}^\times_{k,5}\cup\textbf{A}^\times_{k,6}$ tells us that if $X^{\geq J_\mathcal{G}}(k) \notin \mathcal{G}$, we must have $\bm{d}(X^{\geq J_\mathcal{G}}(k),\mathcal{G}) > \epsilon$. In summary,
    \begin{align*}
        \big(\bigcap_{j = 1}^{k-1}(\textbf{A}^\times_j \cup \textbf{B}^\times_j)^c \cap\textbf{B}^\circ_j\big)\cap ( \textbf{A}^\times_k \cup \textbf{B}^\times_k )^c \cap \{ \textbf{j}^{\geq J_\mathcal{G}}(k) < \infty, X^{\geq J_\mathcal{G}}(k) \notin \mathcal{G} \} \subseteq \textbf{A}^\circ_k.
    \end{align*}
    \item On $\big(\bigcap_{j = 1}^{k-1}(\textbf{A}^\times_j \cup \textbf{B}^\times_j)^c \cap\textbf{B}^\circ_j\big)\cap ( \textbf{A}^\times_k \cup \textbf{B}^\times_k )^c \cap \{ \textbf{j}^{\geq J_\mathcal{G}}(k) < \infty, X^{\geq J_\mathcal{G}}(k) \in \mathcal{G} \}$,
    the same argument in the previous bullet point can be applied to show that
    $T_{k,j} - T_{k,j-1} \leq 2\hat{t}(\epsilon)/\eta$ for all $j=2,\cdots,\textbf{j}^{\geq J_\mathcal{G}}(k) $,
   $\mathcal{J}^\text{cycle}_k( \textbf{j}^{\geq J_\mathcal{G}}(k) ) = J_\mathcal{G}$,
    and $\bm{d}(X^{\geq J_\mathcal{G}}(k),\mathcal{G}^c) > \epsilon$.
 Now since $\textbf{B}^\times_k$ did not occur,
 we must have $\sigma(\eta) > \widetilde{\tau}_k$ and $\widetilde{\tau}_k - T^{\geq J_\mathcal{G}}(k) \leq \rho(\epsilon)/\eta$, hence $\widetilde{\tau}_k - T_{k,1} \leq \frac{2l^*\hat{t}(\epsilon) + \rho(\epsilon)}{\eta}$. Therefore,
    \begin{align*}
        \big(\bigcap_{j = 1}^{k-1}(\textbf{A}^\times_j \cup \textbf{B}^\times_j)^c \cap\textbf{B}^\circ_j\big)\cap ( \textbf{A}^\times_k \cup \textbf{B}^\times_k )^c \cap \{ \textbf{j}^{\geq J_\mathcal{G}}(k) < \infty, X^{\geq J_\mathcal{G}}(k) \in \mathcal{G}\} \subseteq \textbf{B}^\circ_k.
    \end{align*}
\end{itemize}
With \cref{proof prop first exit inclusion A B x o Rd} established, we can immediately get that 
\begin{align}
    \big(\bigcap_{j = 1}^{k-1}(\textbf{A}^\times_j \cup \textbf{B}^\times_j)^c \cap\textbf{B}^\circ_j\big)\cap ( \textbf{A}^\times_k \cup \textbf{B}^\times_k )^c = \big(\bigcap_{j = 1}^{k-1}(\textbf{A}^\times_j \cup \textbf{B}^\times_j)^c \cap\textbf{B}^\circ_j\big)\cap ( \textbf{A}^\times_k \cup \textbf{B}^\times_k )^c \cap (\textbf{A}^\circ_k \cup \textbf{B}^\circ_k). \label{proof prop first exit inclusion 2 prior Rd}
\end{align}
Next, recall the definitions of $\textbf{A}^\circ_k$ in \cref{def event A circ k prop main result Rd} and $\textbf{B}^\circ_k$ in \cref{def event B circ k Rd}, and one can see that the two events $\textbf{A}^\circ_k$ and $\textbf{B}^\circ_k$ are mutually exclusive, since the former implies that the first exit occurs during the $k-$th cycle whereas the latter implies the first exit does not occur in the first $k$ cycles.
This fact and \cref{proof prop first exit inclusion 2 prior Rd} allow us to conclude that 
\begin{align}
    \bigcap_{i = 1}^{k}\big( \textbf{A}^\times_i \cup \textbf{B}^\times_i \cup \textbf{A}^\circ_i \big)^c  = \bigcap_{i = 1}^{k} \big( \textbf{A}^\times_i \cup \textbf{B}^\times_i \big)^c \cap \textbf{B}^\circ_i = \Big(\bigcap_{i = 1}^{k-1} \big( \textbf{A}^\times_i \cup \textbf{B}^\times_i \big)^c \cap \textbf{B}^\circ_i\Big)\cap ( \textbf{A}^\times_k \cup \textbf{B}^\times_k \cup \textbf{A}^\circ_k )^c.  \label{proof prop first exit inclusion 2 Rd}
\end{align}

The last step is to use all the results so far to bound the probability of event
\begin{align*}
    \textbf{A}^\times(\epsilon,\delta,\eta) = \bigcup_{k \geq 1}\bigg( \bigcap_{i = 1}^{k-1}\big( \textbf{A}^\times_i \cup \textbf{B}^\times_i \cup \textbf{A}^\circ_i \big)^c    \bigg) \cap \big(\textbf{A}^\times_k \cup \textbf{B}^\times_k \big).
\end{align*}
Using \cref{proof prop first exit inclusion 2 Rd}, we can see that (for any $x$ with $\norm{x} \leq 2\epsilon$)
\begin{align*}
    & \mathbb{P}_x(\textbf{A}^\times(\epsilon,\delta,\eta)) \\
    = & \sum_{k \geq 1}\mathbb{P}_x\bigg( \bigg( \bigcap_{i = 1}^{k-1}\big( \textbf{A}^\times_i \cup \textbf{B}^\times_i \cup \textbf{A}^\circ_i \big)^c    \bigg) \cap \big(\textbf{A}^\times_k \cup \textbf{B}^\times_k \big) \bigg) \\
    = & \sum_{k \geq 1}\mathbb{P}_x\bigg( \bigg( \bigcap_{i = 1}^{k-1}\big( \textbf{A}^\times_i \cup \textbf{B}^\times_i \big)^c \cap \textbf{B}^\circ_i    \bigg) \cap \big(\textbf{A}^\times_k \cup \textbf{B}^\times_k \big) \bigg) \\
    = & \sum_{k \geq 1}\mathbb{P}_x\Big( \textbf{A}^\times_k \cup \textbf{B}^\times_k\ |\ \bigcap_{i = 1}^{k-1} \big( \textbf{A}^\times_i \cup \textbf{B}^\times_i \big)^c \cap \textbf{B}^\circ_i  \Big) \cdot \prod_{j = 1}^{k - 1}\mathbb{P}_x\Big( \bigcap_{i = 1}^{j}\big( \textbf{A}^\times_i \cup \textbf{B}^\times_i \big)^c \cap \textbf{B}^\circ_i\ \Big|\ \bigcap_{i = 1}^{j-1}\big( \textbf{A}^\times_i \cup \textbf{B}^\times_i \big)^c \cap \textbf{B}^\circ_i     \Big) \\
    = & \sum_{k \geq 1}\mathbb{P}_x\Big( \textbf{A}^\times_k \cup \textbf{B}^\times_k\ |\ \bigcap_{i = 1}^{k-1} \big( \textbf{A}^\times_i \cup \textbf{B}^\times_i \big)^c \cap \textbf{B}^\circ_i  \Big)
    \\ & \ \ \ \ \ \ \ \ \ \ \ \ \ \ \ \ \cdot \prod_{j = 1}^{k - 1}\mathbb{P}_x\Big( \big(\bigcap_{i = 1}^{j-1}\big( \textbf{A}^\times_i \cup \textbf{B}^\times_i \big)^c \cap \textbf{B}^\circ_i \big) \cap (\textbf{A}^\times_j \cup \textbf{B}^\times_j \cup \textbf{A}^\circ_j)^c \ \Big|\ \bigcap_{i = 1}^{j-1}\big( \textbf{A}^\times_i \cup \textbf{B}^\times_i \big)^c \cap \textbf{B}^\circ_i     \Big) \\
    = & \sum_{k \geq 1}\mathbb{P}_x\Big( \textbf{A}^\times_k \cup \textbf{B}^\times_k\ |\ \bigcap_{i = 1}^{k-1} \big( \textbf{A}^\times_i \cup \textbf{B}^\times_i \big)^c \cap \textbf{B}^\circ_i  \Big) \cdot \prod_{j = 1}^{k - 1}\mathbb{P}_x\Big( \big( \textbf{A}^\times_j \cup \textbf{B}^\times_j \cup \textbf{A}^\circ_j \big)^c \ \Big|\ \bigcap_{i = 1}^{j-1}\big( \textbf{A}^\times_i \cup \textbf{B}^\times_i \big)^c \cap \textbf{B}^\circ_i     \Big) \\
     \leq & \sum_{k \geq 1}\mathbb{P}_x\Big( \textbf{A}^\times_k \cup \textbf{B}^\times_k\ |\ \bigcap_{i = 1}^{k-1} \big( \textbf{A}^\times_i \cup \textbf{B}^\times_i \big)^c \cap \textbf{B}^\circ_i  \Big) \cdot \prod_{j = 1}^{k - 1}
     \Bigg(1 - \mathbb{P}_x\Big(  \textbf{A}^\circ_j \ \Big|\ \bigcap_{i = 1}^{j-1}\big( \textbf{A}^\times_i \cup \textbf{B}^\times_i \big)^c \cap \textbf{B}^\circ_i     \Big) \Bigg).
\end{align*}
This allows us to apply \cref{proof prop first exit bound B cross k Rd}-\cref{proof prop first exit bound A circ k Rd} and conclude that for all $\epsilon \in (0,\epsilon_7)$, all $\delta \in (0,\delta_7(\epsilon))$ and all $\eta > 0$
\begin{align}
    \sup_{|x| \leq 2\epsilon}\mathbb{P}_x(\textbf{A}^\times(\epsilon,\delta,\eta))
    \leq & \sum_{k \geq 1} \big( \delta^{\alpha_1}\widetilde{\lambda}(\eta)\cdot \frac{c_*C}{2} \big) \cdot \big( 1 -\delta^{\alpha_1}\widetilde{\lambda}(\eta)\cdot \frac{c_*}{2} \big)^{k-1} \nonumber 
    \\
    = & \frac{\delta^{\alpha_1}\widetilde{\lambda}(\eta)\cdot \frac{c_*C}{2}}{\delta^{\alpha_1}\widetilde{\lambda}(\eta)\cdot \frac{c_*}{2}}  = C.\nonumber 
\end{align}
when $\eta$ is sufficiently small.
\end{proof}

Having established Lemma \ref{lemma bound on event A cross final Rd}, we are ready to provide a proof for Proposition \ref{proposition main result Rd}.

 \begin{proof}[Proof of Proposition \ref{proposition main result Rd}.]
Define sets
\begin{align*}
    E^+(z) \delequal \{y\in\mathbb{R}^d:\ \bm{d}(y,\mathcal{G}) > (3+3\rho^*)z \},\ \ \ E^-(z) \delequal \{y\in\mathbb{R}^d:\ \bm{d}(y,\mathcal{G}^c) < (3+3\rho^*)z \}
\end{align*}
where $\rho^*$ is the fixed constant in Corollary \ref{corollary ODE ODE gap Rd}.
From the continuity of measure, we have
\begin{align*}
    \lim_{\epsilon \downarrow 0}\mu\big( h^{-1}( E^+(\epsilon) ) \big) = \mu\big( h^{-1}( (\overline{\mathcal{G}})^c ) \big), \ \ \ \lim_{\epsilon \downarrow 0}\mu\big( h^{-1}( E^-(\epsilon) ) \big) = \mu\big( h^{-1}(\mathcal{G}^c)\big).
\end{align*}
From Assumption \ref{assumption unique configuration for minimum exit cost} we know $q \delequal \mu\big( h^{-1}(\mathcal{G}^c)\big) > 0$.
From Assumption \ref{assumption boundary set with zero mass}
we also have $\mu\big( h^{-1}(\partial \mathcal{G}) \big) = 0$, hence $\mu\big( h^{-1}( (\overline{\mathcal{G}})^c ) \big) = q$ as well.
Therefore, together with Lemma \ref{lemma bound on event A cross final Rd}, we know the existence of some $\epsilon_1 \in (0, \frac{\bar{\epsilon}}{3 + 3\rho^*})$ such that, for any $\epsilon \in (0,\epsilon_1)$, there is some $0<\delta_1(\epsilon)$ with
\begin{align*}
    \limsup_{\eta \downarrow 0}\sup_{\norm{x} \leq 2\epsilon}\mathbb{P}_x(\textbf{A}^\times(\epsilon,\delta,\eta)) < C. 
\end{align*}
and
\begin{align*}
   (1-C)q < \mu\big( h^{-1}( E^+(\epsilon) ) \big)\leq q \leq \mu\big( h^{-1}( E^-(\epsilon) ) \big) < (1+C)q.
\end{align*}
Henceforth, we fix some $\epsilon$ and $\delta$ in this range and prove the inequalities in the Proposition.

To proceed, let 
$J(\epsilon,\delta,\eta)\delequal{}\sup\{ k \geq 1: \widetilde{\tau}_{k - 1} < \sigma(\eta)  \}$.
We now show that on event $(\textbf{A}^\times(\epsilon,\delta,\eta))^c$, we must have $J \leq J^\uparrow$ where (recall Definition \ref{definitionTypeJ Rd})
\begin{align*}
    J^\uparrow(\epsilon,\delta,\eta) \delequal \min\Big\{k \geq 1:\ Z_{T_{k,1}}\text{ is of type-}\big( E^+(\epsilon) ,\eta,\delta\big)\Big\}
\end{align*}
with $E^+(z) \delequal \{y\in\mathbb{R}^d:\ \bm{d}(y,\mathcal{G}) > (3+3\rho^*)z \}$ where $\rho^*$ is the fixed constant in Corollary \ref{corollary ODE ODE gap Rd}.
To see this via a proof of contradiction, 
let us assume that $J^\uparrow = j < J$ for some integer $j$. Now we make some observations.
\begin{itemize}
    \item First of all, due to $J^\uparrow = j < J$ and \cref{property bar delta bar t}, on event $(\textbf{A}^\times(\epsilon,\delta,\eta))^c \cap \{J^\uparrow = j < J\}$ we have
    \begin{align*}
        T_{j,i} - T_{j,i-1} \leq 2\bar{t}/\eta \ \forall i = 2,3,\cdots,k^*;
    \end{align*}
    \item The definition of event $(\textbf{A}^\times)^c$, especially the exclusion of event $\textbf{A}^\times_{j,0}$ defined in \cref{def A cross k 0 in prop main result Rd} now allows us to apply 
    Lemma \ref{lemma event B 1 cross Rd} 
    and show that
    \begin{align*}
    \sup_{ s \in [0,T_{j,k^*} - T_{j,1} ] }\norm{ X^\eta_{ \floor{s} + T_{j,1} } - \bm{z}^\eta(s)  } < 2\epsilon.
\end{align*}
Here 
\begin{align*}
    \bm{z}^\eta(s) = \widetilde{\bm{x}}^\eta\big(s,X^{(1)};\ \bm{T},\bm{W}  \big)\ \forall s \geq 0
\end{align*}
with
$\bm{T} = \big(0,T_{j,2} - T_{j,1}, T_{j,3} - T_{j,1}, \cdots, T_{j,k^*} - T_{j,k^* - 1}\big)$,
$\bm{W} = \big( W_{j,1},\cdots,W_{j,k^*} \big)$,
and $X^{(1)} = X^\eta_{ -1 + T_{j,1}}$.
Moreover, the previous bullet point allows us to apply Corollary \ref{corollary ODE ODE gap Rd} and conclude that
\begin{align*}
    \sup_{ s \in [0,T_{j,k^*} - T_{j,1} ] }\norm{ \bm{z}^\eta(s) - \bm{z}^\eta_0(s) } \leq 3\rho^*\epsilon
\end{align*}
where $ \bm{z}^\eta(s) = \widetilde{\bm{x}}^\eta\big(s,\bm{0};\ \bm{T},\bm{W}  \big)\ \forall s \geq 0$ and $\rho^*$ is the fixed constant in Corollary \ref{corollary ODE ODE gap Rd}.
As a result,
$ \norm{\bm{z}^\eta_0( T_{j,k^*} - T_{j,1} )  - X^\eta_{T_{j,k^*}} } < (2 + 3\rho^*)\epsilon$.
\item However, due to $J^\uparrow = j$, we have $\bm{d}(\bm{z}^\eta_0( T_{j,k^*} - T_{j,1} ),\mathcal{G})>(3 + 3\rho^*)\epsilon$.
This immediately implies that $X^\eta_{T_{j,k^*}} \notin \mathcal{G}$ and yields the contradiction $J \leq j$.
\end{itemize}
Similarly, one can show that on event $(\textbf{A}^\times)^c$ we must have $J \geq J^\downarrow$ for
\begin{align*}
    J^\downarrow(\epsilon,\delta,\eta) \delequal \min\Big\{k \geq 1:\ Z_{T_{k,1}}\text{ is of type-}\big( E^-(\epsilon) ,\eta,\delta\big)\Big\}
\end{align*}
with $E^-(z) \delequal \{y\in\mathbb{R}^d:\ \bm{d}(y,\mathcal{G}^c) < (3+3\rho^*)z \}$.

Besides, the following claim holds on event $(\textbf{A}^\times)^c \cap \{ J < \infty\}$.
\begin{itemize}
    \item The definition of $(\textbf{A}^\times)^c$ implies that during the $J-$th cycle the event $\textbf{A}^\circ_J$ occurs whereas for any $j < J$ we have $\textbf{B}^\circ_j$.
    Therefore,
    for any $k = 1,2,\cdots,J$, we have 
    $$\widetilde{\tau}_k\wedge \sigma(\eta) - T_{k,1} \leq \frac{2l^*\hat{t}(\epsilon) + \rho(\epsilon)}{\eta}.$$
    In particular, the coefficient $l^*$ on the R.H.S. can be justified as follows:
    by definition of $\textbf{A}^\circ_k$ and $\textbf{B}^\circ_k$,
    we can see that $\mathcal{J}^\text{cycle}_k(\textbf{j}_k) \leq J_\mathcal{G}$ for all $k \leq J$ (with $\textbf{j}_J = k^*$ due to the occurrence of $\textbf{A}^\circ_J$); the definition of $l^*$ in \cref{def l star Rd} then implies that $l^* > \textbf{j}_k$ for all $k \leq J$.
    
    \item Now we consider the following set
    \begin{align*}
        \textbf{S}(\epsilon,\delta,\eta)  \delequal{} & \cup_{k \geq 1}\{ \widetilde{\tau}_{k - 1} + 1, \widetilde{\tau}_{k - 1} + 2, \cdots, T_{k,1} - 1, T_{k,1} \}
    \end{align*}
    the can be understood as the concatenation of all steps between any return time $\widetilde{\tau}_{k-1}$ and the first large jump time $T_{k,1}$ during the $k-$th cycle.
    Our discussion above implies that, on event $(\textbf{A}^\times)^c$, we have
    then the discussion above have shown that, for
    \begin{align*}
        \min\{ n \in \textbf{S}(\epsilon,\delta,\eta):\ Z_n\text{ is of type-}(E^+(\epsilon),\eta,\delta) \} \geq T_{J,1}.
    \end{align*}
    It is worth noticing that the probability of $Z_1$ being of type-$(E^+(\epsilon),\eta,\delta)$ is 
    (see the definition in \cref{def Type conditional probability generalized}) $H(\delta/\eta)p\big( E^{+}(\epsilon),\delta,\eta\big).$
\end{itemize}
Therefore, for the following partition of the timeline
\begin{align*}
        \textbf{S}_\text{before}(\epsilon,\delta,\eta) & \delequal{} \{ n \in \textbf{S}(\epsilon,\delta,\eta):\ n \leq \sigma(\eta) \},\  I_{\text{before}}(\epsilon,\delta,\eta) \delequal{} \#\textbf{S}_\text{before}(\epsilon,\delta,\eta), 
        \\
         \textbf{S}_\text{after}(\epsilon,\delta,\eta) & \delequal{} \{ n \notin \textbf{S}(\epsilon,\delta,\eta):\ n \leq \sigma(\eta) \},\  I_{\text{after}}(\epsilon,\delta,\eta) \delequal{} \#\textbf{S}_\text{after}(\epsilon,\delta,\eta),
    \end{align*}
we have $\sigma(\eta) = I_{\text{before}} + I_{\text{after}}$.
Moreover, on event $(\textbf{A}^\times)^c$, we must have
\begin{align*}
        I_{\text{after}} & \leq J\big( 2l^*\hat{t}(\epsilon) + \rho(\epsilon) \big)/\eta \leq J^\uparrow\big( 2l^*\hat{t}(\epsilon) + \rho(\epsilon) \big)/\eta \\
        I_{\text{before}} & \leq \min\{ n \in \textbf{S}(\epsilon,\delta,\eta):\ Z_n\text{ is of type-}(E^+(\epsilon),\eta,\delta) \} \}.
\end{align*}
Next, define geometric random variables with the following success rates
\begin{align*}
U_1(\epsilon,\delta,\eta) & \sim \text{Geom}\Big( H(\delta/\eta)p\big( E^{+}(\epsilon),\delta,\eta\big)\Big), \\
    U_2(\epsilon,\delta,\eta) & \sim \text{Geom}\Big( p\big( E^{+}(\epsilon),\delta,\eta\big) \Big).
\end{align*}
Given the results above for bounding $I_{\text{before}}$ and $I_{\text{after}}$ on event $(\textbf{A}^\times)^c$,
we have
\begin{align}
    & \sup_{ \norm{x}\leq 2\epsilon }\mathbb{P}_x\Big( \lambda(\eta)\sigma(\eta) > u \Big)
    \nonumber
    \\
    \leq & \sup_{ \norm{x}\leq 2\epsilon }\mathbb{P}_x\big( (\textbf{A}^\times)^c \big)
    +
    \sup_{ \norm{x}\leq 2\epsilon }\mathbb{P}_x\Big( (\textbf{A}^\times)^c \cap \big\{ \lambda(\eta)(I_\text{before} + I_\text{after})> u   \big\} \Big)
    \nonumber \\
    \leq & \sup_{ \norm{x}\leq 2\epsilon }\mathbb{P}_x\big( (\textbf{A}^\times)^c \big)
    +
     \sup_{ \norm{x}\leq 2\epsilon }\mathbb{P}_x\Big( (\textbf{A}^\times)^c \cap \big\{ \lambda(\eta)I_\text{before}> (1-C)u   \big\} \Big) \nonumber
     \\
     &+
     \sup_{ \norm{x}\leq 2\epsilon }\mathbb{P}_x\Big( (\textbf{A}^\times)^c \cap \big\{ \lambda(\eta)I_\text{after}> Cu   \big\} \Big)
     \nonumber \\
     \leq & \sup_{ \norm{x}\leq 2\epsilon }\mathbb{P}_x\big( (\textbf{A}^\times)^c \big)
     +
     \underbrace{\P\big( \lambda(\eta)\cdot U_1 > (1-C)u \big)}_{ \triangleq \text{(I)} }
     +
     \underbrace{ \P\Big( \lambda(\eta)\cdot U_2 \cdot \frac{2l^*\hat{t}(\epsilon) + \rho(\epsilon)}{\eta} > Cu \Big) }_{ \triangleq \text{(II)} }. \nonumber
\end{align}
For term (I), recall that $\lambda(\eta) = H(1/\eta)\widetilde{\lambda}(\eta)$.
Corollary \ref{CorTypeProbRd} and the regularly varying nature of $H(\cdot)$ then imply that
\begin{align*}
    \lim_{\eta \downarrow 0}\frac{ H(\delta/\eta)p\big( E^{+}(\epsilon),\delta,\eta\big) }{ \lambda(\eta)} = \frac{ H(\delta/\eta) \cdot p\big( E^{+}(\epsilon),\delta,\eta\big) }{  \frac{H(1/\eta)}{\delta^{\alpha_1}}\cdot\delta^{\alpha_1}\widetilde{\lambda}(\eta)} = \mu\big( h^{-1}( E^+(\epsilon) ) \big) > (1-C)q.
\end{align*}
From Lemma \ref{lemmaGeomDistTail}, we then have $\limsup_{\eta\downarrow 0}\text{(I)} \leq \exp( - (1-C)^2u ).$
For term (II), let
$C(\epsilon) \delequal 2l^*\hat{t}(\epsilon) + \rho(\epsilon)$
and note that
$$\text{(II)} = \P\Big( U_2 > \frac{C}{C(\epsilon)}\cdot\frac{ \widetilde{\lambda}(\eta) }{\lambda(\eta)/\eta}\cdot \frac{1}{ \widetilde{\lambda}(\eta) }  \Big).$$
Moreover, since
$$\frac{ \widetilde{\lambda}(\eta) }{\lambda(\eta)/\eta} \in RV_{ -\alpha_1 + 1}(\eta),$$
we know that for any $M>0$, we have $\frac{ \widetilde{\lambda}(\eta) }{\lambda(\eta)/\eta} > M$ eventually as $\eta \downarrow 0$.
Therefore, given any $M > 0$, we have $\text{(II)} \leq \P\Big( U_2 > M/\widetilde{\lambda}(\eta)  \Big)$ for any $\eta$ sufficiently small.
Now from Lemma \ref{lemmaGeomDistTail}, $ \limsup_{\eta\downarrow 0}\text{(II)} \leq \exp(-M )$ for any $M > 0$,
and here we fix some $M > 0$ such that $\exp(-M ) < C$.
In summary, we have shown
\begin{align*}
    \limsup_{\eta \downarrow 0}\sup_{ \norm{x}\leq 2\epsilon }\mathbb{P}_x\Big( \lambda(\eta)\sigma(\eta) > u \Big) \leq 2C + \exp(-(1-C)^2qu)
\end{align*}
and established the upper bound.
The lower bound can be shown by an almost identical approach.
In particular, since
\begin{align}
    \inf_{ \norm{x}\leq 2\epsilon }\mathbb{P}_x\Big( \lambda(\eta)\sigma(\eta) > u \Big)
    \geq 
    & \inf_{ \norm{x}\leq 2\epsilon }\mathbb{P}_x\Big( (\textbf{A}^\times)^c \cap \big\{ \lambda(\eta)(I_\text{before} + I_\text{after})> u   \big\} \Big)
    \nonumber \\
    \geq & \inf_{ \norm{x}\leq 2\epsilon }\mathbb{P}_x\Big( (\textbf{A}^\times)^c \cap \big\{ \lambda(\eta)I_\text{before}> u   \big\} \Big)
    \nonumber \\ 
    \geq & \P\big( \lambda(\eta)\cdot U_1^\prime > u \big)  - \sup_{ \norm{x}\leq 2\epsilon }\mathbb{P}_x\Big( (\textbf{A}^\times)^c \Big)
   \nonumber
\end{align}
where $U^\prime_1(\epsilon,\delta,\eta) \sim \text{Geom}\Big( H(\delta/\eta)p\big( E^{-}(\epsilon),\delta,\eta\big)\Big)$.
The same calculation above for term (I) can be used here to provide a lower bound for $\liminf_{\eta \downarrow 0}\P\big( \lambda(\eta)\cdot U_1^\prime > u \big)$ and conclude the proof.
\end{proof}

Now we are ready to prove Theorem \ref{thm main result Rd}, the main theorem of this section.

\begin{proof}[Proof of Theorem \ref{thm main result Rd}.]
Recall that our choice of scaling function $\lambda(\cdot)$ in \cref{def scaling function lambda Rd} is regularly varying (w.r.t. $\eta$) with index $1 + J_\mathcal{G}$.
Fix some $x \in \mathcal{G}$, $t > 0$ and $C \in (0,1)$.
It suffices to show that
\begin{align*}
    \limsup_{\eta \downarrow 0}\mathbb{P}_x( \sigma(\eta)\lambda(\eta) > t ) \leq \exp(-q(1-C)^3t) + 2C,
    \\
    \liminf_{\eta \downarrow 0}\mathbb{P}_x( \sigma(\eta)\lambda(\eta) > t ) \geq \exp(-q(1+C)t) - C.
\end{align*}
First, we are able to pick some $\epsilon \in (0,1)$ sufficiently small such that $\bm{d}(x,\mathcal{G}^c) > \epsilon$
and Proposition \ref{proposition main result Rd} is applicable.
Due to Lemma \ref{lemma return to local minimum quickly Rd},
for event
$$A \delequal \{T_\text{return}(\eta,\epsilon) \leq \rho(\epsilon)/\eta,\ X^\eta_n \in \mathcal{G}\ \forall n \leq T_\text{return}(\eta,\epsilon) \},$$
we have $\lim_{\eta\downarrow 0}\P_x(A) = 1$.
Next, on event $A$, we have $\sigma(\eta) - T_\text{return}(\eta,\epsilon) > 0$ and $\norm{X^\eta_{T_\text{return}(\eta,\epsilon)}} \leq 2\epsilon$.
Moreover, by combining Proposition \ref{proposition main result Rd} with the strong Markov property at stopping time $T_\text{return}(\eta,\epsilon)$, we have
\begin{align*}
\limsup_{\eta\downarrow 0}\P_x\Big( \big(\sigma(\eta) - T_\text{return}(\eta,\epsilon)\big)\lambda(\eta) > (1-C)t\ |\ A \Big) & \leq 2C + \exp( -(1-C)^3qt ),
\\
\liminf_{\eta\downarrow 0}\P_x\Big( \big(\sigma(\eta) - T_\text{return}(\eta,\epsilon)\big)\lambda(\eta) > t\ |\ A \Big) & \geq -C + \exp( -(1+C)qt ).
\end{align*}
Observe that
\begin{align*}
    \P_x(\sigma(\eta)\lambda(\eta) > t) & \leq \P_x( \{\sigma(\eta)\lambda(\eta) > t\} \cap A ) + \P_x(A^c ) \\
    & \leq 
    \P_x\Big( \big(\sigma(\eta) - T_\text{return}(\eta,\epsilon)\big)\lambda(\eta) > (1-C)t\ |\ A \Big)\P_x(A) \\
    & + \P_x\Big( \big\{ T_\text{return}(\eta,\epsilon)\lambda(\eta) > Ct \big\}\cap A \Big) + \P_x(A^c ).
\end{align*}
Besides, on event $A$ we have $T_\text{return}(\eta,\epsilon) \leq O(1/\eta)$ as $\eta \downarrow 0$.
Given that $\lambda(\eta) \in RV_{(1 + J_\mathcal{G})}(\eta)$,
we have
$ T_\text{return}(\eta,\epsilon)\lambda(\eta) \leq Ct$
on event $A$ for all $\eta$ sufficiently small. 
Therefore, by applying the bounds above, we establish that
$ \limsup_{\eta \downarrow 0}\P_x(\sigma(\eta)\lambda(\eta) > t) \leq 2C + \exp( -(1-C)^3qt ).$
Similarly, in order to show the lower bound, observe that
\begin{align*}
    \P_x(\sigma(\eta)\lambda(\eta) > t) & \geq \P_x( \{\sigma(\eta)\lambda(\eta) > t\} \cap A )
    \\
    & \geq \P_x\Big( \big(\sigma(\eta) - T_\text{return}(\eta,\epsilon)\big)\lambda(\eta) > t\ |\ A \Big)\P_x(A).
\end{align*}
Taking $\liminf$ on both sides yields $\liminf_{\eta}\P_x(\sigma(\eta)\lambda(\eta) > t) \geq -C + \exp( -(1+C)qt )$ and concludes the proof.
\end{proof}

\counterwithin{equation}{section}
\section{Proof of Technical Lemmas} \label{section: proof of technical lemmas}

\begin{proof}[Proof of Lemma~\ref{lemmaGeomDistTail}.]
For any $\epsilon > 0$,
\begin{align*}
    \mathbb{P}\Big( U(\epsilon) > \frac{1}{b(\epsilon)} \Big) & = \Big(1 - a(\epsilon)\Big)^{ \floor{1/b(\epsilon)} }.
\end{align*}
By taking logarithm on both sides, we have
\begin{align*}
    \ln \mathbb{P}\Big( U(\epsilon) > \frac{1}{b(\epsilon)} \Big) & = \floor{1/b(\epsilon)}\ln\Big(1 - a(\epsilon)\Big) \\
    & = \frac{\floor{1/b(\epsilon)} }{ 1/b(\epsilon) }\frac{\ln\Big(1 - a(\epsilon)\Big) }{-a(\epsilon) }\frac{-a(\epsilon) }{ b(\epsilon) }.
\end{align*}
Since $\lim_{x \rightarrow 0}\frac{\ln(1 + x)}{x} = 1$, we know that for $\epsilon$ sufficiently small, we will have
\begin{align}
 -c \frac{a(\epsilon)}{b(\epsilon)}   \leq \ln \mathbb{P}\Big( U(\epsilon) > \frac{1}{b(\epsilon)} \Big) \leq -\frac{a(\epsilon)}{c\cdot b(\epsilon)}. \label{proofGeomBound}
\end{align}
$$$$
By taking exponential on both sides, we conclude the proof.
\end{proof}

\begin{proof}[Proof of Lemma~\ref{lemmaGeomFront}.]
To begin with, for any $\epsilon > 0$ we have
\begin{align*}
    \mathbb{P}\Big( U(\epsilon) \leq \frac{1}{b(\epsilon)} \Big) & = 1 -  \mathbb{P}\Big( U(\epsilon) > \frac{1}{b(\epsilon)} \Big).
\end{align*}
Using bound \cref{proofGeomBound}, we know that for $\epsilon$ sufficiently small, $\mathbb{P}(U(\epsilon) > 1/b(\epsilon)) \geq \exp(-c\cdot a(\epsilon)/b(\epsilon))$. 
The upper bound follows from the generic bound $1-\exp(-x) \leq x,\ \forall x\in \mathbb{R}$ with $x=c\cdot a(\epsilon)/b(\epsilon)$.

Now we move onto the lower bound. Again, from bound \cref{proofGeomBound}, we know that for sufficiently small $\epsilon$, we will have
$$\mathbb{P}\Big( U(\epsilon)  \leq \frac{1}{b(\epsilon)} \Big) \geq 1 - \exp(-\frac{1}{\sqrt{c}}\cdot\frac{a(\epsilon)}{b(\epsilon)}). $$
Due to the assumption that $\lim_{\epsilon \downarrow 0}a(\epsilon)/b(\epsilon) = 0$ and the fact that $1 - \exp(-x) \geq \frac{x}{\sqrt{c}}$ for $x>0$ sufficiently close to $0$, we will have (for $\epsilon$ small enough)
$ \mathbb{P}\Big( U(\epsilon)  \leq \frac{1}{b(\epsilon)} \Big) \geq \frac{1}{c}\cdot\frac{a(\epsilon)}{b(\epsilon)}. $
\end{proof}

\begin{proof}[Proof of Lemma~\ref{lemmaBasicGronwall}.]
Let $a_k \delequal{} x_k - \widetilde{x}_k$. Using intermediate value theorem, one can easily see that
\begin{align*}
    a_k & = \eta\sum_{j = 1}^k\Big( \nabla g(\widetilde{x}_{j - 1}) - \nabla g(x_{j-1})  \Big) + \eta(z_1 + \cdots +z_k) + x - \widetilde{x}; \\
    \Rightarrow \norm{a_k} & \leq \eta C (\norm{a_0} + \cdots \norm{a_{k-1}}) + \widetilde{c}.
\end{align*}
The desired bound then follows immediately from Gronwall's inequality.
\end{proof}

\begin{proof}[Proof of Lemma~\ref{lemma bound on interarrival time and jump sizes for typical large jumps}]
Due to the finiteness of all types $\bm{j}^\prime$ with $\mathcal{J}_\text{type}(\bm{j}^\prime) < J_\mathcal{G}$,
it suffices to fix one of such $\bm{j}^\prime$ and prove the existence of constants $\bar{t},\bar{\delta},\epsilon_0$.

Let $\bar{\epsilon}$ be the constants in \cref{ineq ODE geometric convergence to origin}. For any $r \in (0,\bar{\epsilon})$, define the following stopping time
\begin{align*}
    \tau_r(x) \delequal{} \min\{ t \geq 0:\ \bm{x}_t(x) \in \overline{B(\bm{0},r)} \}.
\end{align*}
Due to Assumptions \ref{assumption 4 jacobian of g at the origin} and \ref{assumption 5 zero is an attractor} as well as continuity in Poincar\'e Map (see Theorem 12 in \cite{immler2019flow}), for any fixed $r > 0$ we know that $\tau_r(\cdot)$ is a continuous function on $\overline{\mathcal{G}}$. 
Note that, by definition, we have $\tau_r(x) = 0$ for any $x \in \overline{B(\bm{0},r)}$. 
Due to $\overline{\mathcal{G}}$ being compact and \cref{ineq ODE geometric convergence to origin}, we know the existence of some constant $T_r \in (0,\infty)$ such that $\sup_{x \in \bar{\mathcal{G}}}\tau_r(x) \leq T_r$ and $\bm{x}_t(x) \in \overline{B(\bm{0},r)}$ for any $x \in \overline{\mathcal{G}}, t \geq T_r$. 

Due to Assumption \ref{assumption bounded away} and the finiteness of all types $\bm{j}^\prime$ with $\mathcal{J}_\text{type}(\bm{j}^\prime) < J_\mathcal{G}$, there exists a $\epsilon_0 \in (0,\bar{\epsilon}/2)$ such that 
\begin{align}
    \sup_{s \geq 0}\bm{d}\big( \widetilde{\bm{x}}(s,\bm{0}; \bm{t}, \bm{w} ) ,\mathcal{G}^c \big) > 2\epsilon_0 \label{proof lemma bound on jump size and time key}
\end{align}
for all  $\bm{j}^\prime$ with $\mathcal{J}_\text{type}(\bm{j}^\prime) < J_\mathcal{G}$, $\bm{t}\in \{0\} \times \mathbb{R}^{|\bm{j}^\prime| - 1}_+,\bm{w} \in  \mathcal{A}^\text{type}_{\bm{j}^\prime}$.
We fix such $\epsilon_0 > 0$.
Here is one implication that is worth mentioning.
Recall that $\bm{j}$ is fixed in the description of the Lemma and $k^\prime = |\bm{j}|$.
If
\begin{align*}
    \inf_{s \geq 0}\bm{d}\big( \widetilde{\bm{x}}(s,\bm{0}; \bm{t}^{(k^\prime)}, \bm{w}^{ (k^\prime) } ) ,\mathcal{G}^c \big) \leq \epsilon_0
\end{align*}
for some $\bm{t}^{(k^\prime)}\in \{0\} \times \mathbb{R}^{k^\prime - 1}_+, \bm{w}^{(t^\prime)} \in  \mathcal{A}^\text{type}_{\bm{j}}$, 
then we must have 
\begin{align*}
    \bm{d}\big( \widetilde{\bm{x}}(\sum_{ i = 1 }^{k^\prime}t_i,\bm{0}; \bm{t}^{(k^\prime)}, \bm{w}^{ (k^\prime) } ) ,\mathcal{G}^c \big) \leq \epsilon_0.
\end{align*}

To show the existence of some $\bar{\delta} > 0$ we appeal to a proof by contradiction. 
(For a clean presentation, in $\bm{t}$ and $\bm{w}$ we omit the $(k^\prime)$ term in the superscript since the cardinality is fixed.)
Assume that we can find a sequence $\big( \bm{t}^{n}, \bm{w}^{n} \big)_{n \geq 1}$ such that 
$\min\big\{ \norm{w^{n}_1},\cdots, \norm{w^{n}_{k^\prime}}\big\} \leq 1/n$ 
and 
$\bm{d}\big(\widetilde{\bm{x}}\big( \sum_{i = 1}^{k^\prime} t^{n}_{i}, \bm{0};\ \bm{t}^{n}, \bm{w}^{n}\big), \mathcal{G}^c\big) \leq \epsilon_0$ for any $n$. Due to the truncation operator $\varphi_b(\cdot)$ in the definition of $\widetilde{\bm{x}}_t$, without loss of generality we can replace all jumps $w^{n}_j$ by $\varphi_b\big( w^{n}_j\big)$ to ensure that $w^{n}_j$ is always in a compact set. Define 
\begin{align*}
    \bm{y}^{n}_j = \widetilde{\bm{x}}\big( t^{n}_j, \bm{0};\ \bm{t}^{n}, \bm{w}^{n}  \big)\ \forall j = 1,2,\cdots,k^\prime.
\end{align*}
By picking a subsequence when necessary, we can further assume that
\begin{itemize}
    \item For any $i \geq 1$, $w^{(n)}_j$ converges to some $w^*_i$ and $\bm{y}^{n}_i$ converges to some $\bm{y}^*_i$. In particular, there exists some $I \in [k^\prime]$ such that $w^*_I = \bm{0}$.
    \item Also, since $\mathcal{G}^c$ is a closed set, we have $\bm{d}(\bm{y}^*_{k^\prime},\mathcal{G}^c)\leq \epsilon_0$.
    \item For any $i = 2,3,\cdots,k^\prime$, $t^{n}_i$ either converges to some finite $t^*_i$, or $\lim_n t^{n}_i = \infty$.
    \item Note that
    \begin{align}
        \lim_n t^{n}_i = \infty \Rightarrow \bm{y}^*_i = \bm{0} \label{proof infinite time in ODE causes return to origin}
    \end{align}
    for the following reason: for any $r \in (0,\bar{\epsilon})$, our discussion about $\tau_r(x)$ at the beginning of the proof implies that $\limsup_n \norm{ \bm{y}^{n}_i } \leq r$.
\end{itemize}
Let $\widetilde{I}$ be the largest index $i \in [k^\prime]$ such that $\bm{y}^*_i = \bm{0}$. In case that we cannot find such index, let $\widetilde{I} = 0$. 
We first consider the case where $\widetilde{I} = 0$.
The bullet points above then imply that, for any $i \geq 1$, we must have $\lim_n t^{n}_i = t^*_i < \infty$. 
Now due to boundedness of $t^*_i$ and the continuity of ODE flow, for
\begin{align*}
    \bm{t}^* & \delequal{} (0,t^*_{  2 }, t^*_{ 2 }, \cdots, t^*_{k^\prime} ), \\
    \bm{w}^* & \delequal{} (w^*_{ 1}, w^*_{2}, w^*_{ 3}, \cdots, w^*_{k^\prime} ),
\end{align*}
we have
\begin{align}
    \widetilde{\bm{x}}( \sum_{i = 1}^{k^\prime}t^*_{i}, \bm{0}; \bm{t}^*, \bm{w}^*) = \bm{y}^*_{k^\prime}. \nonumber 
\end{align}
Now recall that $w^*_I = 0$ for some $I \in [k^\prime]$.
By removing the vacuous (size-zero) jumps in $(\bm{t}^*, \bm{w}^*)$, we now know the existence of some $\widetilde{k}^\prime \in \mathbb{N}$, $\widetilde{\bm{w}}^* \in (\mathbb{R}^d\symbol{92}\mathbb{O})^{\widetilde{k}^\prime}, \widetilde{\bm{t}}^* \in \{0\}\times \mathbb{R}_+^{\widetilde{k}^\prime}$ such that $\bm{d}\big(\widetilde{\bm{x}}( \sum_{i = 1}^{\widetilde{k}^\prime}\widetilde{t}^*_{i}, \bm{0}; \widetilde{\bm{t}}^*, \widetilde{\bm{w}}^*), \mathcal{G}^c\big) \leq \epsilon_0$.
Meanwhile, the condition $\mathcal{J}_\text{type}^\downarrow(\bm{j}) < J_\mathcal{G}$ implies that, for the total cost of jumps in $\widetilde{\bm{w}}^*$ after removal of size-zero jumps, $ \sum_{i = 1}^{ \widetilde{k}^\prime }J(\widetilde{\bm{w}}^*_j) < \mathcal{J}_G$.
However, this contradicts \cref{proof lemma bound on jump size and time key}.

Similarly, in the case with $\widetilde{I} \geq 1$, we have that for any $i > \widetilde{I}$, we must have $\lim_n t^{n}_i = t^*_i < \infty$.
From the boundedness of $t^*_i$ and the continuity of ODE flow, for
\begin{align}
    \bm{t}^* & \delequal{} (0,t^*_{ \widetilde{I} + 1 }, t^*_{ \widetilde{I} + 2 }, \cdots, t^*_{k^\prime} ), \label{proof lemma bound on jump size and time, t star} \\
    \bm{w}^* & \delequal{} (w^*_{ \widetilde{I}}, w^*_{ \widetilde{I} + 1}, w^*_{ \widetilde{I} + 2}, \cdots, w^*_{k^\prime} ), \label{proof lemma bound on jump size and time, w star} 
\end{align}
we have
\begin{align}
    \widetilde{\bm{x}}( \sum_{i = \widetilde{I} + 1}^{k^\prime}t^*_{i}, \bm{0}; \bm{t}^*, \bm{w}^*) = \bm{y}^*_{k^\prime}. \label{proof lemma bound on interarrival time and jump size}
\end{align}
In particular, if $\widetilde{I} = 1$, then for the index $I$ with $w^*_I = \bm{0}$, we must have that $I \geq \widetilde{I}$, meaning that there is at least one vacuous jump in $\bm{w}^*$. 
The same argument for the case $\widetilde{I} = 0$ above can lead to the same contradiction with \cref{proof lemma bound on jump size and time key}.
Otherwise, with $\widetilde{I} \geq 2$, we already know that the accumulated cost of all jumps in $w^*$ is strictly less than $J_\mathcal{G}$, yet we still have $\bm{d}\big(\widetilde{\bm{x}}( \sum_{i = \widetilde{I} + 1}^{k^\prime}t^*_{i}, \bm{0}; \bm{t}^*, \bm{w}^*), \mathcal{G}^c  \big) \leq \epsilon_0$. This contradicts \cref{proof lemma bound on jump size and time key} again.

In summary, we have established the existence of the lower bound $\bar{\delta} > 0$ on jump sizes. We fix such $\bar{\delta} > 0$.
The existence of $\bar{t} < \infty$ can be shown by an almost identical argument. In particular, if such $\bar{t}<\infty$ does not exist, then by picking a subsequence if needed we are able to find a converging sequence $\big( \bm{t}^{n}, \bm{w}^{n} \big)_{n \geq 1}$ such that \cref{proof lemma bound on interarrival time and jump size} holds and $\widetilde{I} \geq 1$ due to inter-arrival time blowing up to infinity. 
In particular, since $\bm{y}^*_1 = w^*_1$ and $\norm{ w^*_1} \geq \bar{\delta} > 0$, we must have $\widetilde{I} \geq 2$.
By considering the same $\bm{t}^*,\bm{w}^*$ pair in \eqref{proof lemma bound on jump size and time, t star} and \eqref{proof lemma bound on jump size and time, w star}, one can see that the accumulated cost of jumps in $\bm{w}$ is strictly less than $J_\mathcal{G}$ and yield a contradiction with \cref{proof lemma bound on jump size and time key}.
\end{proof}

\begin{proof}[Proof of Lemma \ref{lemma bound on interarrival time and jump sizes for typical large jumps generalized}.]
Due to the finiteness of all types $\bm{j}^\prime$ with $\mathcal{J}_\text{type}(\bm{j}^\prime) < J_\mathcal{G}$,
it suffices to fix one of such $\bm{j}^\prime$ and prove the existence of the positive constant $\epsilon_1$.

To proceed with a proof by contradiction, the assume that such $\epsilon_1 > 0$ does not exists.
(For a clean presentation, in $\bm{t}$ and $\bm{w}$ we omit the $(k^\prime)$ term in the superscript since the cardinality is fixed.)
As a result, we are able to pick a sequence $(x^n,\bm{t}^n,\bm{w}^n)_{n \geq 1}$ such that one of the following two cases must occur:
\begin{itemize}
    \item $\lim_{n}\norm{x^n} = 0$; For any $n \geq 1$,  $\inf_{s \geq 0} \bm{d}\big(\widetilde{\bm{x}}( s, x^n;\ \bm{t}^{n},\bm{w}^{n}), \mathcal{G}^c\big) \leq \epsilon_0/2$ and there is some $j_n \in [k^\prime]$ such that $\bm{t}^n_{j_n} \geq 2\bar{t}$;
    \item $\lim_{n}\norm{x^n} = 0$; For any $n \geq 1$,  $\inf_{s \geq 0} \bm{d}\big(\widetilde{\bm{x}}( s, x^n;\ \bm{t}^{n},\bm{w}^{n}), \mathcal{G}^c\big) \leq \epsilon_0/2$ and there is some $j_n \in [k^\prime]$ such that $\norm{\bm{w}^n_{j_n}} \leq \bar{\delta}/2$.
\end{itemize}
We detail the analysis for the first case, as the second case can be addressed by an almost identical argument.
First of all, due to the truncation operator $\varphi_b(\cdot)$ in the definition of $\widetilde{\bm{x}}_t$,
without loss of generality we can replace all jumps $w^{n}_j$ by $\varphi_b\big( w^{n}_j\big)$ to ensure that $w^{n}_j$ is always in a compact set.
Define 
\begin{align*}
    \bm{y}^{n}_j = \widetilde{\bm{x}}\big( t^{n}_j, x^n;\ \bm{t}^{n}, \bm{w}^{n}  \big)\ \forall j = 1,2,\cdots,k^\prime.
\end{align*}
By picking a subsequence if necessary, we can further assume that
\begin{itemize}
    \item For any $i \geq 1$, $w^{(n)}_j$ converges to some $w^*_i$ and $\bm{y}^{n}_i$ converges to some $\bm{y}^*_i$.
    \item Also, since $\mathcal{G}^c$ is a closed set, we have $\bm{d}(\bm{y}^*_{k^\prime},\mathcal{G}^c)\leq \epsilon_0/2$.
    \item For any $i = 2,3,\cdots,k^\prime$, $t^{n}_i$ either converges to some finite $t^*_i$, or $\lim_n t^{n}_i = \infty$. In particular, there is some $I \in [k^\prime]$ such that $\lim_n t^{n}_I \geq 2\bar{t}$.
    \item Due to \cref{proof infinite time in ODE causes return to origin}, $\lim_n t^{n}_i = \infty$ would imply $\bm{y}^*_i = \bm{0}$.
\end{itemize}
Let $I_1 = \max\{i \in [k^\prime]:\ \lim_n t^n_i \in [2\bar{t},\infty)\}$ and $I_2 = \max\{i \in [k^\prime]:\ \lim_n t^n_i = \infty\}$.
If either of the two sets above is empty, let the corresponding $I_1$ or $I_2$ be 0.
The discussion above implies that at least one of them must be non-zero, so there are only two possibilities:
(i) $0 \leq I_1 < I_2$;
(ii) $0 \leq I_2 < I_1$.
We consider each scenario respectively.

First, if $0 \leq I_1 < I_2$, then $\lim_{n}y^n_{I_2} = y^*_{I_2} = \bm{0}$, implying that $I_2 < k^\prime$.
Moreover, for any $i = I_2 + 1, I_2 + 2,\cdots,k^\prime$, $\lim_n t^n_i = t^*_i < \infty$.
Using the boundedness of $t^*_i$ and the continuity of ODE flow, for
\begin{align*}
    \bm{t}^* \delequal{} (0,t^*_{ 1 + I_2 }, t^*_{ 2 + I_2 }, \cdots, t^*_{k^\prime} ), 
    \ \ \ \bm{w}^* & \delequal{} (w^*_{ I_2}, w^*_{ 1 + I_2}, w^*_{ 2 + I_2}, \cdots, w^*_{k^\prime} ), 
\end{align*}
we have
$ \widetilde{\bm{x}}( \sum_{i = 1 + I_2}^{k^\prime}t^*_{i}, \bm{0}; \bm{t}^*, \bm{w}^*) = \bm{y}^*_{k^\prime}$ with $\bm{d}(\bm{y}^*_{k^\prime},\mathcal{G}^c) \leq \epsilon_0/2.$
However, this contradicts \eqref{ineq insufficient cost, bounded away from Gc}.

Next, in scenario (ii) with $0 \leq I_2 < I_1$, we have $\lim_{n}t^n_{I_1} = t^*_{I_1} \geq 2\bar{t}$.
Moreover, for any $i = I_2 + 1, I_2 + 2,\cdots,k^\prime$, $\lim_n t^n_i = t^*_i < \infty$.
Using the boundedness of $t^*_i$ and the continuity of ODE flow, for
\begin{align*}
    \bm{t}^* \delequal{} (0,t^*_{ 1 + I_2 }, t^*_{ 2 + I_2 }, \cdots, t^*_{k^\prime} ), 
    \ \ \ \bm{w}^* & \delequal{} (w^*_{ I_2}, w^*_{ 1 + I_2}, w^*_{ 2 + I_2}, \cdots, w^*_{k^\prime} ), 
\end{align*}
we have
$ \widetilde{\bm{x}}( \sum_{i = 1 + I_2}^{k^\prime}t^*_{i}, \bm{0}; \bm{t}^*, \bm{w}^*) = \bm{y}^*_{k^\prime}$ with $\bm{d}(\bm{y}^*_{k^\prime},\mathcal{G}^c) \leq \epsilon_0/2.$
However, $bm{w}^*$ is still of type $\bm{j}$ with $\mathcal{J}^\downarrow_\text{type}(\bm{j}) < J_\mathcal{G}$, and there is some $i$ such that $\bm{t}^*_i \geq 2\bar{t}$.
This would contradict Lemma \ref{lemma bound on interarrival time and jump sizes for typical large jumps}.

In summary, we have established the existence of $\epsilon_1 > 0$ such that $t_j < 2\bar{t}$ is a necessary condition for any $x \in B(\bm{0},\epsilon_1)$
and any $\bm{t} = (t_1,\cdots,t_{k^\prime})\in \{0\} \times \mathbb{R}^{k^\prime - 1}_+,\bm{w} = (w_1,\cdots,w_{k^\prime}) \in  \mathcal{A}^\text{type}_{\bm{j}}$
with $\inf_{s \geq 0} \bm{d}\big(\widetilde{\bm{x}}( s, x;\ \bm{t}^{(k)},\bm{w}^{(k)}), \mathcal{G}^c\big) \leq \epsilon_0/2$.
As mentioned above, the necessity of $\norm{w_j} > \bar{\delta}/2$ can be shown in an almost identical way.
We omit the details here and conclude the proof.
\end{proof}

\begin{proof}[Proof of Lemma~\ref{lemma bound on small perturbation at origin}]
Since there are only finitely many $(i_1,\cdots,i_m)$ with $\mathcal{J}(i_1,\cdots,i_m) < J_\mathcal{G}$, it suffices to fix one of such $(i_1,\cdots,i_m)$ and establish the existence of the required $\epsilon_0, \delta_0$. (Henceforth, let $k = \sum_{j = 1}i_m$.)  

Assumption \ref{assumption bounded away}, together with the bound in \cref{ineq LB on distance to boundary set}, implies the existence of some $\epsilon_1 \in (0,\bar{\epsilon})$ such that
\begin{align}
    \sup_{t \geq 0}\bm{d}\big( \widetilde{\bm{x}}(s,\bm{0};\ \bm{t}^{(k)},\bm{w}^{(k)}  \big) > \epsilon_1 \label{proof bound on small perturbation at origin contradiction}
\end{align}
for any $\bm{w}^{(k)} \in \mathcal{A}(i_1,\cdots,i_m),\ \bm{t}^{(k)}\in \{0\}\times \mathbb{R}^{k-1}_+$. For $\epsilon_0 = \epsilon_1 / 2$, we establish the existence of the prescribed $\delta_0$ via a proof by contradiction.
(Henceforth we drop the notational dependence on $(k)$ when referencing sequences $\bm{t}^{(k)},\bm{w}^{(k)}$ since the cardinality $k$ is fixed.)

Assume the existence of a sequence $(x_n)_{n \geq 1}$ in $\mathbb{R}^d$, 
a sequence of real positives $(s_n)_{n \geq 1}$, 
a sequence $(\bm{t}_n)_{n \geq 1} = (t_{1,n},\cdots,t_{k,n})_{n \geq 1}$ in $\{0\}\times \mathbb{R}^{k-1}_+$,
and a sequence $(\bm{w}_n)_{n \geq 1} = (w_{1,n},\cdots,w_{k,n})_{n \geq 1}$ in $\mathcal{A}(i_1,\cdots,i_m)$ such that 
$\lim_n\norm{x_n} = 0$ and 
\begin{align*}
    \bm{d}\big( \widetilde{\bm{x}}(s_n, x_n;\ \bm{t}_n,\bm{w}_n), \mathcal{G}^c\big) \leq \epsilon_0.
\end{align*}
Due to existence of the clipping operator, all jumps $w_{j,n}$ can be replaced by $\varphi_b(w_{j,n})$ without loss of generality to ensure that all $w_{j,n}$ are in a compact set. Also, without loss of generality, all $s_n$ can be chosen as
\begin{align*}
    s_n = \inf\{s \geq 0:\  \bm{d}\big( \widetilde{\bm{x}}(s, x_n;\ \bm{t}_n,\bm{w}_n), \mathcal{G}^c\big) \leq \epsilon_0 \}.
\end{align*}
From Assumption \cref{ineq LB on distance to boundary set}, one can see that $s_n$ must be equal to $\sum_{i = 1}^{j_n}t_{i,n}$ for some $j_n \in [k]$, i.e., it must be the arrival time of some jump. Moreover, one can easily see that for 
\begin{align*}
    y_n = \widetilde{\bm{x}}(s_n, x_n;\ \bm{t}_n,\bm{w}_n),
\end{align*}
we always have $\bm{d}(y_n,\mathcal{G}) \leq b$ so all $y_n$ are in a compact set as well due to the boundedness of $\mathcal{G}$. Therefore, by picking a subsequence when necessary, we can further assume that
\begin{itemize}
    \item There exists some $j^* \in [k]$ such that $s_n = \sum_{i = 1}^{j^*}t_{i,n}$ for all $n \geq 1$, i.e. $s_n$ is always the arrival time of the $j^*-$th jump;
    \item For any $j \in [j^*]$, there exists some $y^*_j$ such that for $y_{n,j} = \widetilde{\bm{x}}( \sum_{i = 1}^j t_{i,n}, x_n;\ \bm{t}_n, \bm{w}_n)$ we have $\lim_n y_{n,j} = y^*_j$; In particular, for any $j < j^*$ we have $\bm{d}(y^*_j, \mathcal{G}^c) >\epsilon_1$ and $\bm{d}(y^*_{j^*}, \mathcal{G}^c) \leq \epsilon_0$;
    \item For any $j\in[k]$, there exists some $w^*_j$ such that $\lim_n w_{j,n} = w^*_j$; Moreover, note that $y^*_1 = w^*_1$.
    \item In particular, $(w^*_1,\cdots,w^*_{k}) \in \mathcal{A}(i_1,\cdots,i_m)$;
    \item For any $j \in [k]$, either there exists some $t^*_j < \infty$ such that $\lim_n t_{j,n} = t^*_j$, or $\lim_n t_{j,n} = \infty$ (in this case we let $t^*_j = \infty$); in the latter case, due to the same argument in \cref{proof infinite time in ODE causes return to origin}, we must have $\lim_n \widetilde{\bm{x}}( \sum_{i = 1}^j t_{i,n}, x_i;\ \bm{t}_n, \bm{w}_n) = \bm{0}$;
\end{itemize}
Obviously, $y^*_{j^*} \neq \bm{0}$. Let $j_\downarrow \delequal \max\{ j = 0,1,\cdots,j^*:\ y^*_j = \bm{0} \}$ with the convention that $y^*_0 = \bm{0}, y^*_{0,n} = x_n$. We must have $j_\downarrow < j^*$ and 
\begin{itemize}
    \item $\lim_n y^*_{j_\downarrow,n} = y^*_{j_\downarrow} = \bm{0}$
    \item For any $j = j_\downarrow + 1, j_\downarrow + 2, \cdots, j^*$, $\lim_n t_{j,n} = t^*_j < \infty$. 
\end{itemize}
Now using the continuity of the ODE flow, we must have that
\begin{align*}
    \lim_n & \widetilde{\bm{x}}\big( \sum_{j = j_\downarrow + 1}^{j^*} t_{j,n}, y^*_{j_\downarrow,n};\ (t_{j_\downarrow + 1,n},t_{j_\downarrow + 2,n},\cdots,t_{j^*,n}), (w_{j_\downarrow + 1,n},w_{j_\downarrow + 2,n},\cdots,w_{j^*,n})  \big) \\
    = & \widetilde{\bm{x}}\big( \sum_{j = j_\downarrow + 1}^{j^*} t^*_{j}, \bm{0};\ (t^*_{j_\downarrow + 1},t^*_{j_\downarrow + 2},\cdots,t^*_{j^*}), (w^*_{j_\downarrow + 1},w^*_{j_\downarrow + 2},\cdots,w^*_{j^*})  \big) =  y^*_{j^*}.
\end{align*}
However, due to $\bm{d}(y^*_{j^*}, \mathcal{G}^c) \leq \epsilon_0$ and recall our choice of $\epsilon_0 = \epsilon_1 /2$, for all $n$ sufficiently large, we must have 
\begin{align*}
    \bm{d}\Big(\widetilde{\bm{x}}\big( \sum_{j = j_\downarrow + 1}^{j^*} t_{j,n}, y^*_{j_\downarrow,n};\ (t_{j_\downarrow + 1,n},t_{j_\downarrow + 2,n},\cdots,t_{j^*,n}), (w_{j_\downarrow + 1,n},w_{j_\downarrow + 2,n},\cdots,w_{j^*,n})  \big), \mathcal{G}^c \Big) \leq \frac{5}{8}\epsilon_1.
\end{align*}
Meanwhile, using Gronwall's inequality repeatedly and $\sup_{j = j_\downarrow + 1,\cdots,j^*, n \geq 1}t_{j,n} < \infty$, by substituting the initial condition $y^*_{j_\downarrow,n}$ with $\bm{0}$, we have
\begin{align*}
    \lim_n \Big|\Big| &\widetilde{\bm{x}}\big( \sum_{j = j_\downarrow + 1}^{j^*} t_{j,n}, y^*_{j_\downarrow,n};\ (t_{j_\downarrow + 1,n},t_{j_\downarrow + 2,n},\cdots,t_{j^*,n}), (w_{j_\downarrow + 1,n},w_{j_\downarrow + 2,n},\cdots,w_{j^*,n})  \big) \\
    -&\widetilde{\bm{x}}\big( \sum_{j = j_\downarrow + 1}^{j^*} t_{j,n}, \bm{0};\ (t_{j_\downarrow + 1,n},t_{j_\downarrow + 2,n},\cdots,t_{j^*,n}), (w_{j_\downarrow + 1,n},w_{j_\downarrow + 2,n},\cdots,w_{j^*,n})  \big)\Big|\Big| = 0.
\end{align*}
This implies that, for all $n$ sufficiently large,
\begin{align*}
    \bm{d}\Big(\widetilde{\bm{x}}\big( \sum_{j = j_\downarrow + 1}^{j^*} t_{j,n}, \bm{0};\ (t_{j_\downarrow + 1,n},t_{j_\downarrow + 2,n},\cdots,t_{j^*,n}), (w_{j_\downarrow + 1,n},w_{j_\downarrow + 2,n},\cdots,w_{j^*,n})  \big), \mathcal{G}^c \Big) \leq \frac{3}{4}\epsilon_1.
\end{align*}
However, this contradicts \cref{proof bound on small perturbation at origin contradiction}. This implies the existence of the required $\delta_0$ and concludes the proof.
\end{proof}

\subsection{Sufficient conditions for Assumption \ref{assumption boundary set with zero mass}} \label{subsec: proof of lemma zero mass on boundary set}
In this section, we show that under a proper set of regularity conditions on the boundary set $\partial \mathcal{G}$ and the distribution of noises $Z_n$,
Assumption \ref{assumption boundary set with zero mass} will hold for (Lebesgue) almost every $b > 0$.
In particular, we stress that the $C^2$ condition about manifold $\partial \mathcal{G}$ in Assumption \ref{assumption 3 boundary of domain G}, as well as the condition that measures $S_j$ are absolutely continuous w.r.t. the spherical measure $\bm{\sigma}$ on $\mathbb{S}^{d-1}$, are only used to prove that $\mu\big(h^{-1}(\partial \mathcal{G})\big) = 0$ and will only be applied in this section.

The key of our argument is the following geometric observation regarding the intersection of $C^2-$manifold $\partial \mathcal{G}$ and $\partial B$ of some ball $B$.
We stress that, as made evident by the proof, this lemma is essentially based on two assumptions:
(I) As the boundary set of the connected bounded region $\mathcal{G}$, $\partial \mathcal{G}$ is a closed set in $\mathbb{R}^d$;
(II) As a $(d-1)$-dimensional manifold, $\partial \mathcal{G}$ is of class $C^2$.
\begin{lemma} \label{lemma zero mass intersection between boundary set and sphere}
Let $\bm{\sigma}_{x,b}$ be the spherical measure on the sphere of the open ball $B(x,b)$ for any $x \in \mathbb{R}^d, b > 0$.
Under Assumptions \ref{assumption 1 domain G} and \ref{assumption 3 boundary of domain G},
it holds for (Lebesgue) almost every $b>0$ that
\begin{align}
    \bm{\sigma}_{x,b}\big( \partial \mathcal{G} \cap \partial B(x,b) \big) = 0\ \ \ \forall x \in \mathbb{R}^d. \label{claim lemma zero mass intersection between boundary set and sphere}
\end{align}
\end{lemma}
\begin{proof}
The fact that $\partial \mathcal{G}$ is a subset of a separable metric space implies the existence of a countable atlas for this manifold.
Therefore, we can find a sequence $(U_i)_{i \geq 1}$ that are bounded open subsets of $\mathbb{R}^{d-1}$ containing $\bm{0}$,
a sequence $(V_i)_{i \geq 1}$ that are open sets in $\partial \mathcal{G}$ (in the metric space induced by Euclidean distance),
and a sequence of injective $C^2$ mapping $f_i$ with $f_i: \mathbb{R}^{d-1} \mapsto \mathbb{R}^{d}$ and $f_i(U_i) = V_i$ such that $\partial \mathcal{G} = \cup_{i}V_i$.

Next, we zoom in on a specific chart $(U_i,V_i,f_i)$ and observe the following facts.
For any $x \in U_i, y \in V_i$ with $f_i(x) = y$, there exist some orthogonal matrix $Q_y \in \mathbb{R}^{d \times d}$
such that $Q_y n(y) = (0,0,\cdots,0,1)^T$
where the vector field $n(\cdot)$ is the outer normal on $\partial\mathcal{G}$.
Besides, there is some vector $a_y \in \mathbb{R}^d$ such that $Q_y y + a_y = \bm{0}$.
Moreover, there exist an open set on $\widetilde{U}_x$ in $\mathbb{R}^{d-1}$ containing $\bm{0}$,
an open set $y \in V_y \subset V_i$,
an open set $\widetilde{V}_y \delequal \{ Q_y w + a_y:\ w \in V_y \}$,
a $C^2$ function $\widetilde{g}_y:\widetilde{U}_x \mapsto \mathbb{R}$ satisfying $\widetilde{g}_y(\bm{0}) = 0$ and
\begin{align*}
    \widetilde{g}_y(w_1,\cdots,w_{d-1}) = w_d\ \ \forall \bm{w} = (w_1,\cdots,w_d) \in \widetilde{V}_y.
\end{align*}
In other words, for any given $y$ on this chart we simply rotate the chart to ensure that the tangent space at $y$ after rotation is
$\{(x_1,\cdots,x_{d-1},0):\ x_i \in \mathbb{R} \ \forall i \in [d-1]\}$,
and reparametrize the $C^2$ diffeomorphism associated to this chart around $y$ so that the coordinates are simply the projection onto the said tangent space.
In this sense, the (rotated) manifold is also the graph of the $C^2$ mapping $g_y$.
This allows us to define (for any $y \in \partial \mathcal{G}$)
\begin{align*}
    A(y) \delequal \Big(\lambda_1\big( \nabla^2 g_y(\bm{0}) \big),\lambda_2\big( \nabla^2 g_y(\bm{0}) \big), \cdots, \lambda_{d-1}\big( \nabla^2 g_y(\bm{0}) \big)\Big)
\end{align*}
where, for any real symmetric $(d-1)\times(d-1)$ matrix $A$, $\lambda_1(A) \geq \lambda_2(A) \geq \cdots \geq \lambda_{d-1}(A)$ are the ordered eigenvalues of the matrix,
and $\nabla^2 g_y(\cdot)$ is the Hessian of $g_y$.
It is worth noticing that $A(\cdot)$ is a continuous function (on $\partial \mathcal{G}$) due to the manifold being of class $C^2$.

Restricting our discussion on some fixed chart $(U_i,V_i,f_i)$ for now,
for any $b > 0$, let
\begin{align*}
    \mathcal{A}_i(b) \delequal \{ x \in U_i:\ \text{for }y = f_i(x), \exists j \in [d-1] \text{ such that } |\lambda_j\big( \nabla^2 g_y(\bm{0}) \big)| = 1/b\}.
\end{align*}
Note that for any $b>0$, the set $\mathcal{A}_i(b)$ is a closed set (hence Borel measurable) since the continuity of $A(\cdot)$ implies that $(\mathcal{A}_i(b))^c$ is an open set on $U_i$.
Furthermore, since $\bm{m}_\text{Leb}( U_i ) < \infty$, there are at most countably many $b > 0$ such that
$\bm{m}_\text{Leb}( \mathcal{A}_i(b) ) > 0.$
Given the countability of the atlas, we know that
\begin{align*}
    \mathcal{B}^* \delequal \{ b > 0:\ \exists i \in \mathbb{N}\text{ s.t. } \bm{m}_\text{Leb}( \mathcal{A}_i(b) ) > 0\}
\end{align*}
contains at most countably many elements.
In the rest of this proof, we show that \cref{claim lemma zero mass intersection between boundary set and sphere} holds for any $b > 0$ such that $b \notin \mathcal{B}^*$.

Henceforth, we arbitrarily choose some $b > 0$ such that $b \notin \mathcal{B}^*$.
We also arbitrarily choose some $x \in \mathbb{R}^d$ and let
$\textbf{B} = B(x,b)$.
To facilitate the discussion, we introduce a concept that is closely related to the set $\mathcal{A}_i(b)$: let
\begin{align*}
    \mathcal{A}^\mathcal{G}_i(b) \delequal \{ y \in V_i:\ f^{-1}_i(y) \in \mathcal{A}_i(b) \} = \{y \in V_i:\ A(y) = (b,b,\cdots,b) \text{ or } (-b,-b,\cdots,-b) \}
\end{align*}
and let $\mathcal{A}^\mathcal{G}(b) \delequal \cup_{i \geq 1}\mathcal{A}^\mathcal{G}_i(b)$.
Now consider the following decomposition of the sphere $\partial \textbf{B}$:
\begin{align*}
    \textbf{B}_2 & \delequal \partial \textbf{B} \cap \mathcal{A}^\mathcal{G}(b), \\
    \textbf{B}_1 & \delequal \partial \textbf{B} \symbol{92}\textbf{B}_2.
\end{align*}
First, note that for any $y \in \textbf{B}_1$, one of the following four cases has to occur:
\begin{itemize}
    \item $y \notin \partial \mathcal{G}$;
    \item $y \in \partial \mathcal{G}$ and the vector $y - x$ lies in $T_y \partial \mathcal{G}$, the tangent space of the manifold $\partial \mathcal{G}$ at $y$;
    \item $y \in \partial \mathcal{G}$; the vector $y - x$ is not in $T_y \partial \mathcal{G}$ yet it is not orthogonal to $T_y \partial \mathcal{G}$ either, i.e. $y - x$ is not equal to $c\cdot n(y)$ for any $c \in \mathbb{R}$ where $n(y)$ is the outer normal at $y$;
    \item $y \in \partial \mathcal{G}$; the vectors $y - x$ and $n(y)$ are linearly dependent.
\end{itemize}
Our next goal is to show that, in any of these four cases, we can always find a set $y \in O_y$ that is open on the sphere $\partial \textbf{B}$ such that
$\bm{\sigma}_{x,b}( O_y \cap \partial \mathcal{G} ) = 0$.
Note that this is obviously true when $y \notin \partial \mathcal{G}$, since both $\partial \mathcal{G}$ and $\partial \textbf{B}$ are closed sets in $\mathbb{R}^d$.
Now we consider the second case.
If the vector $y - x$ lies in $T_y \partial \mathcal{G}$,
then after applying the affine transformation with orthogonal matrix $Q_y$ (recall that $Q_y n(y) = (0,\cdots,0,1)$),
we have
\begin{align*}
    Q_y(y - x) \in Q_y T_y \partial \mathcal{G} = \{(w_1,\cdots,w_{d-1},0):\ w_i \in \mathbb{R}\ \forall i \in [d-1]\}.
\end{align*}
Since $Q_y y + a_y = \bm{0}$, we now know that after the affine transformation, the center of the ball $\textbf{B}$ moves to
\begin{align*}
    Q_y x + a_y = Q_y y + a_y + Q_y(x-y) \in \{(w_1,\cdots,w_{d-1},0):\ w_i \in \mathbb{R}\ \forall i \in [d-1]\}.
\end{align*}
Without loss of generality, we can assume that $Q_y x + a_y = (b,0,\cdots,0,0)$.
In other words, after the affine transformation, the ball becomes 
$\widetilde{\textbf{B}}_y \delequal Q_y \textbf{B} + a_y = B\big( (b,0,0,\cdots,0),b \big).$
Moreover, for any $\bm{w} = (w_1,\cdots,w_d) \in \partial \widetilde{\textbf{B}}_y$,
we must have $w_1 \geq 0$ and
\begin{align*}
    (w_1 - b)^2 + (w_2)^2 + \cdots + (w_d)^2 = b^2.
\end{align*}
Meanwhile, from the definition of the mapping $\widetilde{g}_y$,
one can see that there is an open set $U_y$ around $y$ such that
for any $w \in U_y$ with $w \in \partial \mathcal{G} \cap \partial \textbf{B}$
(hence $\widetilde{w} = Q_y w + a_y \in \widetilde{V}_y \in \partial \widetilde{B}_y$)
such that
\begin{align*}
    \widetilde{g}_y(\widetilde{w})^2 = b^2 - (\widetilde{w}_1 - b)^2 - \widetilde{w}_2^2 - \cdots - \widetilde{w}_{d-1}^2.
\end{align*}
Let $\widetilde{f}(w_1,\cdots,w_{d-1}) \delequal \widetilde{g}^2_y(w_1,\cdots,w_{d-1}) + w^2_1 - 2bw_1 + w^2_2 + \cdots + w^2_{d-1}$.
Now we now that $\widetilde{f}(\bm{0}) = 0$ and $\widetilde{f}(\widetilde{w}) = 0$ for any $\widetilde{w}\in \widetilde{V}_y \in \partial \widetilde{B}_y$.
Moreover, by definition we have $\nabla \widetilde{g}_y(\bm{0}) = \bm{0}$ so
$ \frac{\partial}{\partial w_1}\widetilde{f}(\bm{0}) = -2b \neq 0$.
Due to implicit function theorem,
we now know the existence of some open set $U^*$ in $\mathbb{R}^{d}$ containing $\bm{0}$,
some $C^2$ function $g^*:\mathbb{R}^{d-2}\mapsto \mathbb{R}$ such that
for any $w \in \widetilde{V}_y \cap \partial \widetilde{\textbf{B}}_y \cap U^*$,
\begin{align*}
    w_1 &  = g^*(w_2,\cdots,w_{d-1}),\ \ \ w_d = \widetilde{g}_y(w_1,\cdots,w_{d-1}).
\end{align*}
Therefore, within some open neighborhood $V^*_y$ of such $y$, the set $V^*_y \cap \partial\textbf{B} \cap \partial \mathcal{G}$ is a submanifold with dimension $d-2$,
so we must have
\begin{align*}
    \bm{\sigma}_{x,b}( V^*_y \cap \partial\textbf{B} \cap \partial \mathcal{G} ) = 0.
\end{align*}

Next, we consider the case where $y \in \partial \mathcal{G} \cap \textbf{B}_1$ but $y - x$ is neither in $T_y \partial \mathcal{G}$ nor orthogonal to $T_y \partial \mathcal{G}$.
Similar to the construction of the affine transformation with $Q_y, a_y$ above,
one can find
an orthogonal matrix $\widetilde{Q}_y$ and a vector $\widetilde{a}_y$ such that
$\widetilde{Q}_y y + a_y = \bm{0}$
and $\widetilde{Q}_y(y - x) = (0,0,\cdots,0,b)$.
Let $\widetilde{\textbf{B}} = B( (0,0,\cdots,-b), b)$.
In other words, this time the rotation we constructed ensures that, after rotation, the vector between $\widetilde{Q}_y y + a_y = \bm{0}$ (on the sphere $\partial \widetilde{\textbf{B}}$) and the center $\widetilde{Q}_y x + \widetilde{a}_y$ of the ball $\widetilde{\textbf{B}}$ is aligned with the $d-$th axis.
Moreover, there is an open set $y \in V^\text{alt}_y \subset \mathbb{R}^d$
and a $C^2$ function $\widetilde{g}^\text{alt}_y: \mathbb{R}^{d-1} \mapsto \mathbb{R}$
with $\nabla \widetilde{g}^\text{alt}_y(\bm{0}) \neq \bm{0}$
such that for any $\widetilde{y} \in \widetilde{V}^\text{alt}_y \delequal \widetilde{Q}_yV^\text{alt}_y + \widetilde{a}_y$,
we have
\begin{align*}
    \widetilde{y}_{d} = \widetilde{g}^\text{alt}_y(\widetilde{y}_1,\cdots,\widetilde{y}_{d-1}).
\end{align*}
By applying a change of coordinates if necessary (which can be achieved by multiplying another orthogonal matrix),
we can assume without loss of generality that
$\nabla \widetilde{g}^\text{alt}_y(\bm{0}) = (0,0,\cdots,c)$
for some $c \neq 0$.
Then for any $\widetilde{w} \in V^\text{alt}_y$ such that $w \in \partial \textbf{B} \cap \partial \mathcal{G}$,
let $w = \widetilde{Q}_y\widetilde{w} + \widetilde{a}_y$ and note that we must have
\begin{align*}
    w^2_1 + w^2_2 + \cdots + w^2_{d-1} +  \big( b + \widetilde{g}^\text{alt}_y(w_1,\cdots,w_{d-1})\big)^2 = b^2.
\end{align*}
In particular, $\widetilde{g}^\text{alt}_y(w_1,\cdots,w_{d-1}) = cw_{d-1} + r(w)$
for some $C^2$ function $r$ with $r(\bm{0}) = 0, \nabla r(\bm{0}) = \bm{0}$.
As a result, for function
$\widetilde{f}(w) \delequal w^2_1 + w^2_2 + \cdots + w^2_{d-1} +  \big( b + \widetilde{g}^\text{alt}_y(w_1,\cdots,w_{d-1})\big)^2.$
we have $\frac{\partial}{\partial w_{d-1}}\widetilde{f}(\bm{0}) = 2bc \neq 0$.
Using implicit function theorem again,
one can see the existence of some open set $\bm{0} \in V^*_y \in \mathbb{R}^d$ and some $C^2$ function $g^*:\mathbb{R}^{d-2} \mapsto \mathbb{R}$
such that for any $\widetilde{w} \in V^*_y \cap \partial \mathcal{G} \cap \partial \textbf{B}$, we have that (for $w \delequal \widetilde{Q}_y\widetilde{w} + \widetilde{a}_y$)
\begin{align*}
    w_{d-1} = g^*(w_1,\cdots,w_{d-2}), \ \ \ w_d = \widetilde{g}^\text{alt}_y(w_1,\cdots,w_{d-1}).
\end{align*}
Again, we have established that within some open neighborhood $V^*_y$ of such $y$, the set $V^*_y \cap \partial\textbf{B} \cap \partial \mathcal{G}$ is a submanifold with dimension $d-2$,
so we must have
\begin{align*}
    \bm{\sigma}_{x,b}( V^*_y \cap \partial\textbf{B} \cap \partial \mathcal{G} ) = 0.
\end{align*}

Lastly, consider the case where $y \in \partial \mathcal{G} \cap \textbf{B}_2$ and the vectors $y - x$ and $n(y)$ are linearly dependent.
In other words, the tangent space $T_y \partial \mathcal{G}$ is also the tangent space $T_y \partial \textbf{B}$.
Since $y \notin \textbf{B}_1$, we know that
$ \widetilde{g}_y(w) = \frac{1}{2}w^T A_y w + r_1(w)$
where $A_y$ is a real symmetric matrix with no eigenvalue equal to $\pm 1/b$
and $r_1$ is a $C^2$ function with $|r_1(w)| = o(\norm{w}^2)$.
On the other hand, for any $\widetilde{w}$ in the open set $V_y \subset \partial \mathcal{G}$,
if we also have $\widetilde{w} \in \partial \textbf{B} \cap \partial \mathcal{G}$, then for $w \delequal Q_y\widetilde{w} + a_y$ we have
\begin{align}
    w_d = \widetilde{g}_y(w_1,\cdots,w_{d-1}) = \sqrt{b^2 - w^2_1 - w^2_2 - \cdots - w^2_{d-1}  } - b. \label{equation lemma zero mass intersection between boundary set and sphere}
\end{align}
Also, note that
$\sqrt{b^2 - w^2_1 - w^2_2 - \cdots - w^2_{d-1}  } - b = -\frac{1}{2}(w_1,\cdots,w_{d-1})\frac{\textbf{I}_{d-1}}{b}(w_1,\cdots,w_{d-1})^T + r_2(w_1,\cdots,w_{d-1})$
where $r_2$ is also a $C^2$ function with $|r_2(w)| = o(\norm{w}^2)$.
Therefore, for any $(w_1,\cdots,w_{d-1})$ satisfying the equation \cref{equation lemma zero mass intersection between boundary set and sphere}, we have
\begin{align*}
    \frac{1}{2}(w_1,\cdots,w_{d-1})\big(A_y -  \frac{\textbf{I}_{d-1}}{b}\big)(w_1,\cdots,w_{d-1})^T = - r_1(w_1,\cdots,w_{d-1}) + r_2(w_1,\cdots,w_{d-1}).
\end{align*}
However, for the real symmetric matrix $A_y -  \frac{\textbf{I}_{d-1}}{b}$, note that none of its eigenvalue is equal to 0, implying the existence of some $\epsilon > 0$ such that
$$\Big| \frac{1}{2}(w_1,\cdots,w_{d-1})\big(A_y -  \frac{\textbf{I}_{d-1}}{b}\big)(w_1,\cdots,w_{d-1})^T\Big| \geq \epsilon(w^2_1 + \cdots + w^2_{d-1}).$$
For this fixed $\epsilon > 0$, we can also find $\delta > 0$ such that
$$|- r_1(w_1,\cdots,w_{d-1}) + r_2(w_1,\cdots,w_{d-1})| \leq  \frac{\epsilon}{2}(w^2_1 + \cdots + w^2_{d-1})$$
for any $w^2_1 + \cdots + w^2_{d-1} < \delta.$
As a result, the only solution to \cref{equation lemma zero mass intersection between boundary set and sphere} with $w^2_1 + \cdots + w^2_{d-1} < \delta$ is $w_1 = w_2 = \cdots, w_{d-1} = 0$.
In summary, we have shown that there exists some set $y \in V^*_y$ that is open in $\mathbb{R}^d$ such that
$ V^*_y \cap \partial\textbf{B} \cap \partial \mathcal{G}= \{y\}.$

Collecting the results we have established so far, we now know that for any $y \in \textbf{B}_1$ (recall that $\textbf{B}_1$ is an open set on $\partial \textbf{B}$),
there is an open set $V^*_y$ containing $y$ and satisfying $\bm{\sigma}_{x,b}( V^*_y \cap \partial\textbf{B} \cap \partial \mathcal{G} ) = 0$.
In particular, given the open cover $\cup_{y \in \textbf{B}_2}V^*_y = \textbf{B}_1$, Lindelöf property then allows us to extract a countable open $\cup_{i \geq 1}V^*_{y_i} = \textbf{B}_1$ cover and conclude that
\begin{align*}
    \bm{\sigma}_{x,b}(\textbf{B}_1 \cap \partial \mathcal{G} ) \leq \sum_{i \geq 1}\bm{\sigma}_{x,b}( V^*_{y_i} \cap \partial\textbf{B} \cap \partial \mathcal{G} ) = 0.
\end{align*}

Moving on, we evaluate $\bm{\sigma}_{x,b}(\textbf{B}_2 \cap \partial \mathcal{G} )$.
For any $y \in \partial \textbf{B}$, one of the four cases has to occur:
\begin{itemize}
    \item $y \notin \mathcal{A}^\mathcal{G}(b)$;
    \item $y \in \mathcal{A}^\mathcal{G}(b)$ and the vector $y - x$ lies in $T_y \partial \mathcal{G}$, the tangent space of the manifold $\partial \mathcal{G}$ at $y$;
    \item $y \in \mathcal{A}^\mathcal{G}(b)$; the vector $y - x$ is not in $T_y \partial \mathcal{G}$ yet it is not orthogonal to $T_y \partial \mathcal{G}$ either, i.e. $y - x$ is not equal to $c\cdot n(y)$ for any $c \in \mathbb{R}$ where $n(y)$ is the outer normal at $y$;
    \item $y \in \mathcal{A}^\mathcal{G}(b)$; the vectors $y - x$ and $n(y)$ are linearly dependent.
\end{itemize}
Again, we show that in any of these four cases, there is some set $y \in V^*_y$ open in $\partial \textbf{B}$ such that
$\bm{\sigma}_{x,b}(\mathcal{A}^\mathcal{G}(b) \cap V^*_y ) = 0.$
In the first case, the fact that $\mathcal{A}^\mathcal{G}(b)$ is closed on $\partial \mathcal{G}$ immediately implies the existence of some $V^*_y$ such that $\mathcal{A}^\mathcal{G}(b) \cap V^*_y = \emptyset$.
For the second and third case, this can be shown using exactly the same implicit function argument above.
For the last case where $y \in \mathcal{A}^\mathcal{G}(b) \cap \partial \textbf{B}$ and the vectors $y - x$ and $n(y)$ are linearly dependent,
from $y \in \mathcal{A}^\mathcal{G}(b) \cap \partial \textbf{B}$
we know that $y \in V_i$ where $(U_i,V_i,f_i)$ is a chart of $\partial\mathcal{G}$
and $V_i$ is open on $\partial\mathcal{G}$.
Moreover, recall the construction of open set $y \in V_y \subset V_i$ at the beginning of the proof.
It is worth noticing that $V_i \cap \mathcal{A}^\mathcal{G}(b) = V_i \cap \mathcal{A}_i^\mathcal{G}(b)$.
Besides, due to the fact that the vectors $y - x$ and $n(y)$ are linearly dependent,
we know that the tangent space $T_y \partial \mathcal{G}$ is also the tangent space $T_y \partial \textbf{B}$.
Therefore, by definition of $\widetilde{g}_y$ and $Q_y,a_y$, we know that
$Q_y y + a_y = \bm{0}$,
and under the affine transformation, the ball becomes $B( (0,0,\cdots,\pm b),b)$.
Without loss of generality, we assume it is $\widetilde{\textbf{B}}\delequal B( (0,0,\cdots,b),b)$.
Moreover, since $b \notin \mathcal{B}^*$, we have that
$\bm{m}_\text{Leb}(\{w \in \widetilde{U}_y:\ x\in \mathcal{A}_i(b) \} ) = 0$
where, as defined at the beginning of the proof, $\widetilde{U}_y$ is the domain of the $C^2$ mapping $\widetilde{g}_y$, $V_y$ is the image of the mapping,
and  $\widetilde{V}_y \delequal \{ Q_y w + a_y:\ w \in V_y \}$ is the image of $V_y$ under the affine transformation.
Therefore, for any $\widetilde{w} \in \mathcal{A}^\mathcal{G}_i(b) \cap \partial \textbf{B}$, let $w = Q_y \widetilde{w} + a_y$ and we must have
\begin{align*}
    w_d = \widetilde{g}_y(w_1,\cdots,w_{d-1}) = -b + \sqrt{b^2 - w^2_1 - w^2_2 - \cdots - w^2_{d-1} },
    \\
    (w_1,\cdots,w_{d-1}) \in \{w \in \widetilde{U}_y:\ x\in \mathcal{A}_i(b) \}.
\end{align*}
Now let $V^*_y \delequal{} \Big\{ Q_y^T(v - a_y) :\ v = \big(w_1,\cdots,w_{d-1},-b + \sqrt{b^2 - w^2_1 - w^2_2 - \cdots - w^2_{d-1} }\big)\text{ for some }w \in \widetilde{U}_x \Big\}$
and note that $y \in V^*_y$ is an open set on $\partial \textbf{B}$.
(Specifically, note that we simply identify an open set on the transformed sphere $\partial \widetilde{B}_y$, and then perform the inverse transformation to move the set back to the original sphere $\partial \textbf{B}$.)
Then it follows immediately from
$\bm{m}_\text{Leb}(\{w \in \widetilde{U}_y:\ x\in \mathcal{A}_i(b) \} ) = 0$
that $\bm{\sigma}_{x,b}\big(\mathcal{A}^\mathcal{G}(b) \cap V^*_y \big) = 0.$

In summary, for any $y \in \partial \textbf{B}$, we can find a set $y\in V^*_y$ open on $\partial \textbf{B}$ such that
$\bm{\sigma}_{x,b}\big(\mathcal{A}^\mathcal{G}(b) \cap V^*_y \big) = 0.$
Lastly, by applying Lindelöf property again, we extract a countable open cover $\cup_{i \geq 1}V^*_{y_i} = \textbf{B}_2$ cover and conclude that
\begin{align*}
    \bm{\sigma}_{x,b}(\partial \textbf{B} \cap \mathcal{A}^\mathcal{G}(b)) \leq \sum_{i \geq 1}\bm{\sigma}_{x,b}\big(\mathcal{A}^\mathcal{G}(b) \cap V^*_{y_i} \big) = 0
\end{align*}
and this concludes the proof.
\end{proof}

As a result of Lemma \ref{lemma zero mass intersection between boundary set and sphere},
the following Lemma is essentially built upon three assumptions/facts:
(I) As a boundary set, $\partial \mathcal{G}$ is a closed set in $\mathbb{R}^d$;
(II) As a $(d-1)$-dimensional manifold, $\partial \mathcal{G}$ is of class $C^2$;
(III) The measures $S_j$ in Assumption \ref{assumption 6 regular variation of the Rd noise} are absolutely continuous w.r.t. the spherical measure $\bm{\sigma}$.

\begin{lemma} \label{lemma boundary set zero mass}
Under Assumptions \ref{assumption 1 domain G}, \ref{assumption 3 boundary of domain G} and \ref{assumption 6 regular variation of the Rd noise},
it holds for (Lebesgue) almost every $b>0$ that
$$\mu\Big(h^{-1}\big( \partial \mathcal{G}\big)\Big) = 0.$$
\end{lemma}

\begin{proof}

Fix some $b > 0$ satisfying the conditions in Lemma \ref{lemma zero mass intersection between boundary set and sphere}.
Recall that $\mu = \sum_{\bm{j} \in \bm{j}(\bm{i}^*)}\mu_{\bm{j}}$ (see \cref{def measure mu j Rd}).
It suffices to show that $\mu_{\bm{j}}\Big( h^{-1}\big( \partial \mathcal{G}\big) \Big) = 0$ for some fixed $\bm{j} \in \bm{j}(\bm{i}^*)$.
In particular, observe that
\begin{align}
    & \mu_{\bm{j}}\Big( h^{-1}\big( \partial \mathcal{G}\big) \Big)
    \nonumber
    \\
    = & \int_{ t_{i+1} > 0,\ \theta_i \in \mathbb{S}^{d-1},\ r_i > 0\ \forall i \in [k^*-1] }
    \nonumber 
    \\ 
    & \ \ \cdot\Big(\int_{ \theta_{k^*} \in \mathbb{S}^{d-1}, r_{k^*} > 0 } \mathbbm{1}\big\{ h^*( r_1,\cdots,r_{k^*-1},\theta_1,\cdots,\theta_{k^*-1},t_2,\cdots,t_{k^*}) + \varphi_b( r_{k^*}\theta_{k^*} ) \in \partial\mathcal{G}  \big\}
    \nonumber
    \\
    & \cdot S_{ \bm{j}_{k^*} }(d\theta_{k^*})\nu_{ \alpha_{\bm{j}_{k^*}} }(dr_{k^*}) \Big) \cdot \prod_{i = 1}^{ k^* - 1 }\nu_{\alpha_{\bm{j}_i}}(d r_i)\times S_{\bm{j}_i}( d \theta_i )\times \bm{m}_\text{Leb}(d t_{i+1})
    \label{expression for mu lemma boundary set zero mass}
\end{align}
where the function $h^*$ is defined as
\begin{align*}
   &  h^*( r_1,\cdots,r_{k^*-1},\theta_1,\cdots,\theta_{k^*-1},t_2,\cdots,t_{k^*})
   \\
   & \delequal \widetilde{\bm{x}}\big(\sum_{i = 2}^{k^*-1}t_i ,\bm{0}; (0,t_2,\cdots,t_{k^*-1}), (r_1\theta_1, r_2\theta_2,\cdots, r_{k^*-1}\theta_{k^*-1}) \big)
\end{align*}

Let $z \delequal h^*( r_1,\cdots,r_{k^*-1},\theta_1,\cdots,\theta_{k^*-1},t_2,\cdots,t_{k^*})$.
Now by separating the two cases based on whether the truncation operator takes effect or not, we have
\begin{align}
    & \int_{ \theta_{k^*} \in \mathbb{S}^{d-1}, r_{k^*} > 0 } \mathbbm{1}\big\{ h^*( r_1,\cdots,r_{k^*-1},\theta_1,\cdots,\theta_{k^*-1},t_2,\cdots,t_{k^*}) + \varphi_b( r_{k^*}\theta_{k^*} ) \in \partial\mathcal{G}  \big\}\nonumber 
    \\
    & \ \ \ \ \ \ \ \ \ \ \cdot S_{ \bm{j}_{k^*} }(d\theta_{k^*})\nu_{ \alpha_{\bm{j}_{k^*}} }(dr_{k^*}) \nonumber
    \\
    = & \int_{ \theta_{k^*} \in \mathbb{S}^{d-1}, r_{k^*} \in (0,b) } \mathbbm{1}\big\{ z + r_{k^*}\theta_{k^*} \in \partial\mathcal{G}  \big\}S_{ \bm{j}_{k^*} }(d\theta_{k^*})\nu_{ \alpha_{\bm{j}_{k^*}} }(dr_{k^*}) \label{term 1 lemma boundary set zero mass}
    \\
    + & \int_{ \theta_{k^*} \in \mathbb{S}^{d-1}} \mathbbm{1}\big\{ z + b\theta_{k^*} \in \partial\mathcal{G}  \big\}S_{ \bm{j}_{k^*} }(d\theta_{k^*})\cdot \int_{r_{k^*} > b}\nu_{ \alpha_{\bm{j}_{k^*}} }(dr_{k^*}).\label{term 2 lemma boundary set zero mass}
\end{align}
For term \cref{term 1 lemma boundary set zero mass},
note that it is equal to
$\int\mathbbm{1}\big\{ \textbf{T}^{-1}(r_{k^*},\theta_{k^*}) \in (-z + \partial\mathcal{G}) \cap \partial B(0,b)  \big\}S_{ \bm{j}_{k^*} }(d\theta_{k^*})\times\nu_{ \alpha_{\bm{j}_{k^*}} }(dr_{k^*})$
where $\textbf{T}^{-1}(r,\theta) = r\theta$ is the inverse of the polar coordinate transform.
Furthermore, since the set $(-z + \partial\mathcal{G}) \cap \partial B(0,b)$ is either empty or is a $(d-1)$-dimensional $C^2$ submanifold (w.r.t. $B(0,b)$ when viewed as a $d-$dimensional manifold).
In other words, it has zero mass under $\bm{m}^{d}_\text{Leb}$.
For the measure $\nu^* \delequal \textbf{T}^{-1} \circ ( S_{\bm{j}_{k^*}}\times \nu_{\alpha_{\bm{j}_{k^*}}} )$,
due to $S_j$ being absolutely continuous w.r.t. $\bm{\sigma}$,
it is easy to see that $\nu^*$ is absolutely continuous w.r.t. $\bm{m}^{d}_\text{Leb}$.
Therefore, we must have 
$$\nu^*( (-z + \partial\mathcal{G}) \cap \partial B(0,b) ) = 0,$$
implying that the integral in $\cref{term 1 lemma boundary set zero mass} = 0$.
On the other hand, for term \cref{term 2 lemma boundary set zero mass},
we know that 
$ \int_{r_{k^*} > b}\nu_{ \alpha_{\bm{j}_{k^*}} }(dr_{k^*}) = 1/b^{1 + \alpha_{\bm{j}_{k^*}}} < \infty$.
Besides,
\begin{align*}
    & \int_{ \theta_{k^*} \in \mathbb{S}^{d-1}} \mathbbm{1}\big\{ z + b\theta_{k^*} \in \partial\mathcal{G}  \big\}S_{ \bm{j}_{k^*} }(d\theta_{k^*})
    = \int_{\mathbb{S}^{d-1}}\mathbbm{1}\{ z + b\theta \in \partial \mathcal{G} \cap \partial B(z,b) \}S_{ \bm{j}_{k^*} }(d\theta).
\end{align*}
Then it follows immediately from Lemma \ref{lemma zero mass intersection between boundary set and sphere} and $S_j$ being absolutely continuous w.r.t. the spherical measure $\bm{\sigma}$ that the integral in term \cref{term 2 lemma boundary set zero mass} is equal to $0$.
In summary, we have shown that
\begin{align*}
    & \int_{ \theta_{k^*} \in \mathbb{S}^{d-1}, r_{k^*} > 0 } \mathbbm{1}\big\{ h^*( r_1,\cdots,r_{k^*-1},\theta_1,\cdots,\theta_{k^*-1},t_2,\cdots,t_{k^*}) + \varphi_b( r_{k^*}\theta_{k^*} ) \in \partial\mathcal{G}  \big\}
    \\
    &\ \ \ \ \ \ \ \ \ \ \ \ \ \cdot S_{ \bm{j}_{k^*} }(d\theta_{k^*})\nu_{ \alpha_{\bm{j}_{k^*}} }(dr_{k^*}) = 0
\end{align*}
for any $( r_1,\cdots,r_{k^*-1},\theta_1,\cdots,\theta_{k^*-1},t_2,\cdots,t_{k^*}) $.
Plug this result back into \cref{expression for mu lemma boundary set zero mass} and we conclude the proof.
\end{proof}

\section{Notations}
Table~\ref{tableRefNotations} lists the notations used in Section~\ref{section proof thm 1}.

\renewcommand{\arraystretch}{1.3}
\newpage
\begin{longtable}{l  p{12cm}}
\caption{Summary of notations frequently used in Section \ref{section proof thm 1}}\label{tableRefNotations}\\

    $[k]$ & $\{1,2,\ldots,k\}$ \\
    $\eta$ & Learning rate (gradient descent step size)\\
    $b$ & Truncation threshold of stochastic gradient\\
    $\epsilon$ & An accuracy parameter; typically used to denote an $\epsilon-$neighborhood of $s_i, m_i$\\
    $\delta$ & A threshold parameter used to define \textit{large} noises\\
    $\bar{\epsilon}$ & A constant defined for \cref{assumption multiple jump epsilon 0 constant 1}-\cref{assumption multiple jump epsilon 0 constant 2}. Since $\bar{\epsilon}<\epsilon_0$, in \cref{assumption detailed function f at critical point} the claim holds for $|x-y|<\bar{\epsilon}$. Note that the value of the constant $\bar{\epsilon}$ does not vary with our choice of $\eta,\epsilon,\delta$. \\
    
    $M$ & Upper bound of $|f'|$ and $|f''|$ \hfill \cref{assumption detailed function f at critical point 3}\\
    $L$ & Radius of training domain \hfill \cref{assumption detailed function f at critical point 3}\\
    $\Omega$ & The open interval $(s_-,s_+)$; a simplified notation for $\Omega_i$\\
    
    $\varphi$, $\varphi_c$ & $\varphi_c(w) \triangleq \varphi(w,c) \delequal{} (w\wedge c)\vee(-c)$
    \hfill truncation operator at level $c > 0$
    \\
    $Z^{\leq \delta,\eta}_n$ & $Z_n\mathbbm{1}\{\eta|Z_n|\leq \delta\}$ \hfill``small'' noise \cref{defSmallJump_GradientClipping}
    \\
    $Z^{> \delta,\eta}_n$ & $Z_n\mathbbm{1}\{\eta|Z_n|>\delta\}$ \hfill``large'' noise  \cref{defLargeJump_GradientClipping}
    \\
    $T^\eta_{j}(\delta)$ & $\min\{ n > T^\eta_{j-1}(\delta):\ \eta|Z_n| > \delta  \}$\hfill arrival time of $j$-th large noise \cref{defArrivalTime large jump}
    \\
    $W^\eta_j(\delta)$ & $Z_{T^\eta_j(\delta)}$ \hfill size of $j$-th large noise \cref{defSize large jump}
    \\
    $X^\eta_n(x)$ & $ X^\eta_{n+1}(x) =  \varphi_L\Big(X^{\eta}_n(x) - \varphi_b\big(\eta (f^\prime(X^{\eta}_n(x)) - Z_{n+1})\big)\Big)$,\quad $X^\eta_0(x) = x$\hfill SGD
    \\
    $\textbf{y}^\eta_n(x)$
    & 
    $\textbf{y}_n^\eta(x) = \textbf{y}_{n-1}^\eta(x) - \eta f^\prime ( \textbf{y}_{n-1}^\eta(x))$,\quad $\textbf{y}_0^\eta(x) = x$\hfill GD
    \\
    \\    
    ${Y}^{\eta }_n(x)$
    & 
    $\textbf{y}^\eta_n(x)$ perturbed by large noises $(\mathbf{T}^\eta(\delta), \mathbf{W}^\eta(\delta))$
    \hfill GD + large jump
    \\    
    $\widetilde{\textbf{y}}^\eta_n(x;\textbf{t},\textbf{w})$
    & 
    $\textbf{y}^\eta_n(x)$ perturbed by noise vector $(\textbf{t},\textbf{w})$
    \hfill perturbed GD
    \\    
    $\textbf{x}^\eta(t,x)$ & $d\textbf{x}^\eta(t;x) = -\eta f^\prime\Big( \textbf{x}^\eta(t;x) \Big)dt$,\quad $\textbf{x}^\eta(0;x) = x$ \hfill ODE
    \\
    $\textbf{x}(t,x)$ &     $\textbf{x}^1(t,x)$ \hfill 
    \\
    $\widetilde{\textbf{x}}^\eta(t,x;\textbf{t},\textbf{w})$ & $\textbf{x}^\eta(t,x)$ perturbed by noise vector $(\textbf{t},\textbf{w})$ \hfill perturbed ODE
    \\    
    $A(n,\eta,\epsilon,\delta)$ & $\displaystyle\Big\{ \max_{k \in[n \wedge (T^\eta_1(\delta) - 1)]  }\eta|Z_1 + \cdots + Z_k| \leq \epsilon \Big\}.$\hfill  \cref{def event A small noise large deviation}   
    \\
    $r$ & $r \delequal{}\min\{ - s_{-}, s_+ \}$. Effective radius of the attraction field $\Omega$. \\
    $l^*$ & $l^* \delequal{} \ceil{ r/b }$. The minimum number of jumps required to escape $\Omega$ when starting from its local minimum $m = 0$. \\
     $h(\textbf{w},\textbf{t})$ & A mapping defined as $ h(\textbf{w},\textbf{t})  = \widetilde{\textbf{x}}(t_{l^*},0;\textbf{t},\textbf{w})$. \\ 
    $\bar{t},\ \bar{\delta}$ & Necessary conditions for $h(\textbf{w},\textbf{t})$ to be outside of $\Omega$ \hfill \cref{def bar t}-\cref{def bar delta}\\
    $\hat{t}(\epsilon)$ & $\hat{t}(\epsilon) \delequal{} c_1\log(1/\epsilon).$ The quantity $\hat{t}(\epsilon)/\eta$ provides an upper bound for the time it takes $\textbf{x}^\eta$ to return to $2\epsilon-$neighborhood of local minimum $m = 0$ when starting from somewhere $\epsilon-$away from $s_-,s_+$. See \cref{ineq prior to function hat t}. \\
    $E(\epsilon)$ & 
    $\big\{(\textbf{w},\textbf{t}) \subseteq \mathbb{R}^{l^*}\times \mathbb{R}_{+}^{l^* - 1}: h(\textbf{w},\textbf{t}) \notin [ (s_- -\epsilon)\vee(-L),(s_+ + \epsilon)\wedge L  ] \big\}$\\
    $ p(\epsilon,\delta,\eta)$ & The probability that, for $\textbf{t} = \big(T^\eta_j(\delta) - 1\big)_{j = 1}^{l^*}$ and $\textbf{w} = \big(\eta W^\eta_j(\delta) \big)_{j = 1}^{l^*}$, we have $(\textbf{w},\textbf{t}) \in E(\epsilon)$ conditioning on $\{T^\eta_1(\delta) = 1\}$. Intuitively speaking, it characterizes the probability that the first $l^*$ \textit{large} noises alone can drive the ODE out of the attraction field. Defined in \cref{def overflow conditional probability}.  \\
    $\nu_\alpha$ & The Borel measure on $\mathbb{R}$ with density $$\nu_\alpha(dx) = \mathbbm{1}\{x > 0\}\frac{\alpha p_+}{x^{\alpha + 1}} + \mathbbm{1}\{x <0\}\frac{\alpha p_-}{|x|^{\alpha + 1}}$$
    where $p_-,p_+$ are constants in Assumption 2 in the main paper.\\
    $\mu$ & The product measure $\mu= (\nu_\alpha)^{l^*}\times(\textbf{Leb}_+)^{l^* - 1}.$ \\
    $\sigma(\eta)$ & $\min\{n \geq 0: X^\eta_n \notin \Omega\}.$\hfill first exit time
    \\
    $H(x)$ & $\P(|Z_1| > x) = x^{-\alpha} L(x)$
    \\
    $T_\text{return}(\epsilon, \eta)$ & $\min\{ n \geq 0: X^\eta_n(x) \in [-2\epsilon,2\epsilon] \}$
    \\

\end{longtable}

\pagebreak

\section{Results about tail distributions of noises in our numerical experiments}

\subsection{QQ plots}

QQ plots below clearly show that the tails in noise distribution are always much lighter than the Pareto distributions with alpha = 2 or even 10. In fact, the tail of noise distributions seem to be between that of lognormal and normal distributions, implying that it is lighter than any power-law distribution.

\begin{figure}[h]
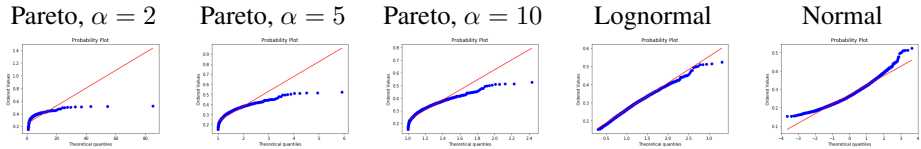

\begin{center}
\begin{tabular}{c c c c c}
Pareto, $\alpha = 2$ & Pareto, $\alpha = 5$ & Pareto, $\alpha = 10$ & Lognormal & Normal \\
\includegraphics[width=0.15\textwidth]{QQplots/LeNet100_pareto2.png}  &
\includegraphics[width=0.15\textwidth]{QQplots/LeNet100_pareto5.png}  &
\includegraphics[width=0.15\textwidth]{QQplots/LeNet100_pareto10.png}  &
\includegraphics[width=0.15\textwidth]{QQplots/LeNet100_lognorm.png}  &
\includegraphics[width=0.15\textwidth]{QQplots/LeNet100_normal.png}
\\
\end{tabular}
\caption{Ablation Study, Corrupted FMNIST \& LeNet: At the beginning}
\end{center}
\end{figure}

\begin{figure}[h]
\begin{center}
\begin{tabular}{c c c c c}
Pareto, $\alpha = 2$ & Pareto, $\alpha = 5$ & Pareto, $\alpha = 10$ & Lognormal & Normal \\
\includegraphics[width=0.15\textwidth]{QQplots/LeNet100_middle_pareto2.png}  &
\includegraphics[width=0.15\textwidth]{QQplots/LeNet100_middle_pareto5.png}  &
\includegraphics[width=0.15\textwidth]{QQplots/LeNet100_middle_pareto10.png}  &
\includegraphics[width=0.15\textwidth]{QQplots/LeNet100_middle_lognorm.png}  &
\includegraphics[width=0.15\textwidth]{QQplots/LeNet100_middle_normal.png}
\\
\end{tabular}
\caption{Ablation Study, Corrupted FMNIST \& LeNet: Half way through the training}
\end{center}
\end{figure}

\begin{figure}[h]
\begin{center}
\begin{tabular}{c c c c c}
Pareto, $\alpha = 2$ & Pareto, $\alpha = 5$ & Pareto, $\alpha = 10$ & Lognormal & Normal \\
\includegraphics[width=0.15\textwidth]{QQplots/LeNet100_end_pareto2.png}  &
\includegraphics[width=0.15\textwidth]{QQplots/LeNet100_end_pareto5.png}  &
\includegraphics[width=0.15\textwidth]{QQplots/LeNet100_end_pareto10.png}  &
\includegraphics[width=0.15\textwidth]{QQplots/LeNet100_end_lognorm.png}  &
\includegraphics[width=0.15\textwidth]{QQplots/LeNet100_end_normal.png}
\\
\end{tabular}
\caption{Ablation Study, Corrupted FMNIST \& LeNet: At the end of training}
\end{center}
\end{figure}


\begin{figure}[!]
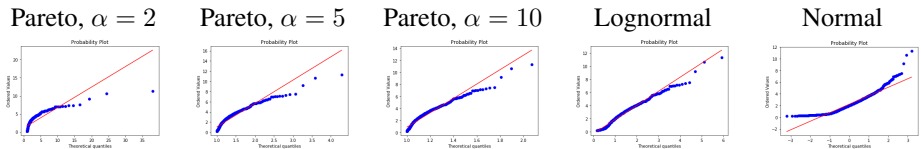

\begin{center}
\begin{tabular}{c c c c c}
Pareto, $\alpha = 2$ & Pareto, $\alpha = 5$ & Pareto, $\alpha = 10$ & Lognormal & Normal \\
\includegraphics[width=0.15\textwidth]{QQplots/SVHN100_pareto2.png}  &
\includegraphics[width=0.15\textwidth]{QQplots/SVHN100_pareto5.png}  &
\includegraphics[width=0.15\textwidth]{QQplots/SVHN100_pareto10.png}  &
\includegraphics[width=0.15\textwidth]{QQplots/SVHN100_lognorm.png}  &
\includegraphics[width=0.15\textwidth]{QQplots/SVHN100_normal.png}
\\
\end{tabular}
\caption{Ablation Study, SVHN \& VGG11: At the beginning}
\end{center}
\end{figure}

\begin{figure}[!]
\begin{center}
\begin{tabular}{c c c c c}
Pareto, $\alpha = 2$ & Pareto, $\alpha = 5$ & Pareto, $\alpha = 10$ & Lognormal & Normal \\
\includegraphics[width=0.15\textwidth]{QQplots/SVHN100_middle_pareto2.png}  &
\includegraphics[width=0.15\textwidth]{QQplots/SVHN100_middle_pareto5.png}  &
\includegraphics[width=0.15\textwidth]{QQplots/SVHN100_middle_pareto10.png}  &
\includegraphics[width=0.15\textwidth]{QQplots/SVHN100_middle_lognorm.png}  &
\includegraphics[width=0.15\textwidth]{QQplots/SVHN100_middle_normal.png}
\\
\end{tabular}
\caption{Ablation Study, SVHN \& VGG11: Half way through the training}
\end{center}
\end{figure}

\begin{figure}[!]
\begin{center}
\begin{tabular}{c c c c c}
Pareto, $\alpha = 2$ & Pareto, $\alpha = 5$ & Pareto, $\alpha = 10$ & Lognormal & Normal \\
\includegraphics[width=0.15\textwidth]{QQplots/SVHN100_end_pareto2.png}  &
\includegraphics[width=0.15\textwidth]{QQplots/SVHN100_end_pareto5.png}  &
\includegraphics[width=0.15\textwidth]{QQplots/SVHN100_end_pareto10.png}  &
\includegraphics[width=0.15\textwidth]{QQplots/SVHN100_end_lognorm.png}  &
\includegraphics[width=0.15\textwidth]{QQplots/SVHN100_end_normal.png}
\\
\end{tabular}
\caption{Ablation Study, SVHN \& VGG11: At the end of training}
\end{center}
\end{figure}


\begin{figure}[!]
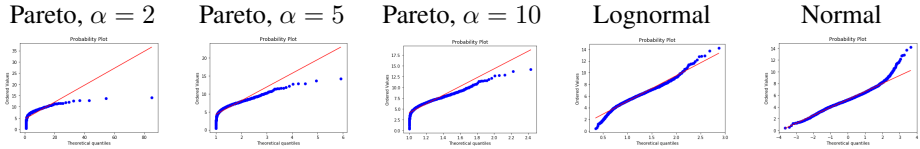

\begin{center}
\begin{tabular}{c c c c c}
Pareto, $\alpha = 2$ & Pareto, $\alpha = 5$ & Pareto, $\alpha = 10$ & Lognormal & Normal \\
\includegraphics[width=0.15\textwidth]{QQplots/CIFAR10_batch100_pareto2.png}  &
\includegraphics[width=0.15\textwidth]{QQplots/CIFAR10_batch100_pareto5.png}  &
\includegraphics[width=0.15\textwidth]{QQplots/CIFAR10_batch100_pareto10.png}  &
\includegraphics[width=0.15\textwidth]{QQplots/CIFAR10_batch100_lognorm.png}  &
\includegraphics[width=0.15\textwidth]{QQplots/CIFAR10_batch100_normal.png}
\\
\end{tabular}
\caption{Ablation Study, CIFAR10 \& VGG11: At the beginning}
\end{center}
\end{figure}

\begin{figure}[!]
\begin{center}
\begin{tabular}{c c c c c}
Pareto, $\alpha = 2$ & Pareto, $\alpha = 5$ & Pareto, $\alpha = 10$ & Lognormal & Normal \\
\includegraphics[width=0.15\textwidth]{QQplots/CIFAR10_batch100_middle_pareto2.png}  &
\includegraphics[width=0.15\textwidth]{QQplots/CIFAR10_batch100_middle_pareto5.png}  &
\includegraphics[width=0.15\textwidth]{QQplots/CIFAR10_batch100_middle_pareto10.png}  &
\includegraphics[width=0.15\textwidth]{QQplots/CIFAR10_batch100_middle_lognorm.png}  &
\includegraphics[width=0.15\textwidth]{QQplots/CIFAR10_batch100_middle_normal.png}
\\
\end{tabular}
\caption{Ablation Study, CIFAR10 \& VGG11: Half way through the training}
\end{center}
\end{figure}

\begin{figure}[!]
\begin{center}
\begin{tabular}{c c c c c}
Pareto, $\alpha = 2$ & Pareto, $\alpha = 5$ & Pareto, $\alpha = 10$ & Lognormal & Normal \\
\includegraphics[width=0.15\textwidth]{QQplots/CIFAR10_batch100_end_pareto2.png}  &
\includegraphics[width=0.15\textwidth]{QQplots/CIFAR10_batch100_end_pareto5.png}  &
\includegraphics[width=0.15\textwidth]{QQplots/CIFAR10_batch100_end_pareto10.png}  &
\includegraphics[width=0.15\textwidth]{QQplots/CIFAR10_batch100_end_lognorm.png}  &
\includegraphics[width=0.15\textwidth]{QQplots/CIFAR10_batch100_end_normal.png}
\\
\end{tabular}
\caption{Ablation Study, CIFAR10 \& VGG11: At the end of training}
\end{center}
\end{figure}


\begin{figure}[!]
\begin{center}
\begin{tabular}{c c c c c}
Pareto, $\alpha = 2$ & Pareto, $\alpha = 5$ & Pareto, $\alpha = 10$ & Lognormal & Normal \\
\includegraphics[width=0.15\textwidth]{QQplots/dataAug_CIFAR10_batch128_pareto2.png}  &
\includegraphics[width=0.15\textwidth]{QQplots/dataAug_CIFAR10_batch128_pareto5.png}  &
\includegraphics[width=0.15\textwidth]{QQplots/dataAug_CIFAR10_batch128_pareto10.png}  &
\includegraphics[width=0.15\textwidth]{QQplots/dataAug_CIFAR10_batch128_lognorm.png}  &
\includegraphics[width=0.15\textwidth]{QQplots/dataAug_CIFAR10_batch128_normal.png}
\\
\end{tabular}
\caption{Data Augmentation, CIFAR10 \& VGG11: At the beginning}
\end{center}
\end{figure}

\begin{figure}[!]
\begin{center}
\begin{tabular}{c c c c c}
Pareto, $\alpha = 2$ & Pareto, $\alpha = 5$ & Pareto, $\alpha = 10$ & Lognormal & Normal \\
\includegraphics[width=0.15\textwidth]{QQplots/dataAug_CIFAR10_batch128_middle_pareto2.png}  &
\includegraphics[width=0.15\textwidth]{QQplots/dataAug_CIFAR10_batch128_middle_pareto5.png}  &
\includegraphics[width=0.15\textwidth]{QQplots/dataAug_CIFAR10_batch128_middle_pareto10.png}  &
\includegraphics[width=0.15\textwidth]{QQplots/dataAug_CIFAR10_batch128_middle_lognorm.png}  &
\includegraphics[width=0.15\textwidth]{QQplots/dataAug_CIFAR10_batch128_middle_normal.png}
\\
\end{tabular}
\caption{Data Augmentation, CIFAR10 \& VGG11: Half way through the training}
\end{center}
\end{figure}

\begin{figure}[!]
\begin{center}
\begin{tabular}{c c c c c}
Pareto, $\alpha = 2$ & Pareto, $\alpha = 5$ & Pareto, $\alpha = 10$ & Lognormal & Normal \\
\includegraphics[width=0.15\textwidth]{QQplots/dataAug_CIFAR10_batch128_end_pareto2.png}  &
\includegraphics[width=0.15\textwidth]{QQplots/dataAug_CIFAR10_batch128_end_pareto5.png}  &
\includegraphics[width=0.15\textwidth]{QQplots/dataAug_CIFAR10_batch128_end_pareto10.png}  &
\includegraphics[width=0.15\textwidth]{QQplots/dataAug_CIFAR10_batch128_end_lognorm.png}  &
\includegraphics[width=0.15\textwidth]{QQplots/dataAug_CIFAR10_batch128_end_normal.png}
\\
\end{tabular}
\caption{Data Augmentation, CIFAR10 \& VGG11: At the end of training}
\end{center}
\end{figure}


\begin{figure}[!]
\begin{center}
\begin{tabular}{c c c c c}
Pareto, $\alpha = 2$ & Pareto, $\alpha = 5$ & Pareto, $\alpha = 10$ & Lognormal & Normal \\
\includegraphics[width=0.15\textwidth]{QQplots/dataAug_CIFAR100_batch128_pareto2.png}  &
\includegraphics[width=0.15\textwidth]{QQplots/dataAug_CIFAR100_batch128_pareto5.png}  &
\includegraphics[width=0.15\textwidth]{QQplots/dataAug_CIFAR100_batch128_pareto10.png}  &
\includegraphics[width=0.15\textwidth]{QQplots/dataAug_CIFAR100_batch128_lognorm.png}  &
\includegraphics[width=0.15\textwidth]{QQplots/dataAug_CIFAR100_batch128_normal.png}
\\
\end{tabular}
\caption{Data Augmentation, CIFAR100 \& VGG16: At the beginning}
\end{center}
\end{figure}

\begin{figure}[!]
\begin{center}
\begin{tabular}{c c c c c}
Pareto, $\alpha = 2$ & Pareto, $\alpha = 5$ & Pareto, $\alpha = 10$ & Lognormal & Normal \\
\includegraphics[width=0.15\textwidth]{QQplots/dataAug_CIFAR100_batch128_middle_pareto2.png}  &
\includegraphics[width=0.15\textwidth]{QQplots/dataAug_CIFAR100_batch128_middle_pareto5.png}  &
\includegraphics[width=0.15\textwidth]{QQplots/dataAug_CIFAR100_batch128_middle_pareto10.png}  &
\includegraphics[width=0.15\textwidth]{QQplots/dataAug_CIFAR100_batch128_middle_lognorm.png}  &
\includegraphics[width=0.15\textwidth]{QQplots/dataAug_CIFAR100_batch128_middle_normal.png}
\\
\end{tabular}
\caption{Data Augmentation, CIFAR100 \& VGG16: Halfway through the training}
\end{center}
\end{figure}

\begin{figure}[!]
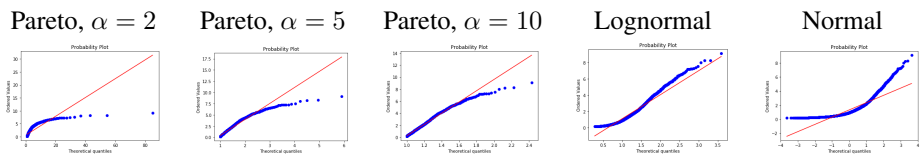

\begin{center}
\begin{tabular}{c c c c c}
Pareto, $\alpha = 2$ & Pareto, $\alpha = 5$ & Pareto, $\alpha = 10$ & Lognormal & Normal \\
\includegraphics[width=0.15\textwidth]{QQplots/dataAug_CIFAR100_batch128_end_pareto2.png}  &
\includegraphics[width=0.15\textwidth]{QQplots/dataAug_CIFAR100_batch128_end_pareto5.png}  &
\includegraphics[width=0.15\textwidth]{QQplots/dataAug_CIFAR100_batch128_end_pareto10.png}  &
\includegraphics[width=0.15\textwidth]{QQplots/dataAug_CIFAR100_batch128_end_lognorm.png}  &
\includegraphics[width=0.15\textwidth]{QQplots/dataAug_CIFAR100_batch128_end_normal.png}
\\
\end{tabular}
\caption{Data Augmentation, CIFAR100 \& VGG16: At the end of training}
\end{center}
\end{figure}


\pagebreak
\subsection{Empirical mean residual life (EMRL) plots}

It is well known that the mean residual life blows up to infinity if and only if the distribution is heavy-tailed (more precisely, long-tailed). However, from the figures below, one can see that none of the EMRL exhibits such a pattern in any case tested in our experiments. Instead, we see clear downward trends, which strongly suggests light tails in all cases tested.

\begin{figure}[h]
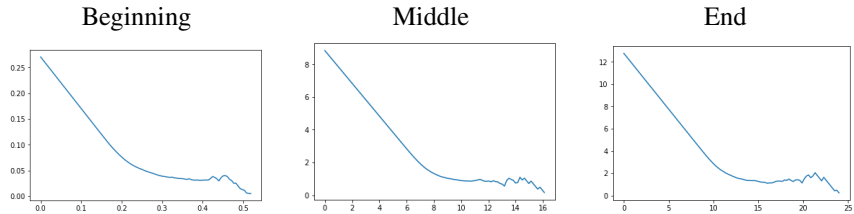

\begin{center}
\begin{tabular}{c c c}
Beginning & Middle & End \\
\includegraphics[width=0.25\textwidth]{EMRL/LeNet100_EMRL.png}  &
\includegraphics[width=0.25\textwidth]{EMRL/LeNet100_middle_EMRL.png}  &
\includegraphics[width=0.25\textwidth]{EMRL/LeNet100_end_EMRL.png}
\\
\end{tabular}
\caption{Plots of empirical mean residual life for noises in FMNIST\&LeNet Task throughout training}
\end{center}
\end{figure}

\begin{figure}[!]
\begin{center}
\begin{tabular}{c c c}
Beginning & Middle & End \\
\includegraphics[width=0.25\textwidth]{EMRL/SVHN100_EMRL.png}  &
\includegraphics[width=0.25\textwidth]{EMRL/SVHN100_middle_EMRL.png}  &
\includegraphics[width=0.25\textwidth]{EMRL/SVHN100_end_EMRL.png}
\\
\end{tabular}
\caption{Plots of empirical mean residual life for noises in SVHN\&VGG 11 Task throughout training}
\end{center}
\end{figure}

\begin{figure}[!]
\begin{center}
\begin{tabular}{c c c}
Beginning & Middle & End \\
\includegraphics[width=0.25\textwidth]{EMRL/CIFAR10_batch100_EMRL.png}  &
\includegraphics[width=0.25\textwidth]{EMRL/CIFAR10_batch100_middle_EMRL.png}  &
\includegraphics[width=0.25\textwidth]{EMRL/CIFAR10_batch100_end_EMRL.png}
\\
\end{tabular}
\caption{Plots of empirical mean residual life for noises in CIFAR 10\&VGG 11 Task throughout training}
\end{center}
\end{figure}

\begin{figure}[!]
\begin{center}
\begin{tabular}{c c c}
Beginning & Middle & End \\
\includegraphics[width=0.25\textwidth]{EMRL/dataAug_CIFAR10_batch128_EMRL.png}  &
\includegraphics[width=0.25\textwidth]{EMRL/dataAug_CIFAR10_batch128_middle_EMRL.png}  &
\includegraphics[width=0.25\textwidth]{EMRL/dataAug_CIFAR10_batch128_end_EMRL.png}
\\
\end{tabular}
\caption{Plots of empirical mean residual life for noises in dataAug, CIFAR 10\&VGG 11 Task throughout training}
\end{center}
\end{figure}

\begin{figure}[!]
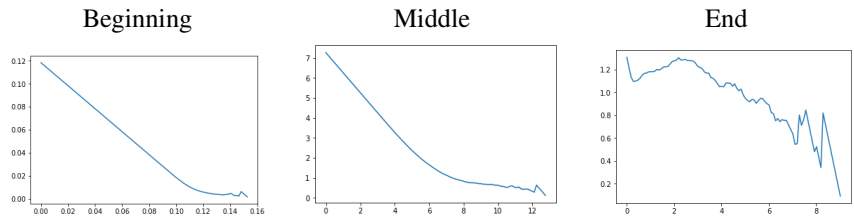

\begin{center}
\begin{tabular}{c c c}
Beginning & Middle & End \\
\includegraphics[width=0.25\textwidth]{EMRL/dataAug_CIFAR100_batch128_EMRL.png}  &
\includegraphics[width=0.25\textwidth]{EMRL/dataAug_CIFAR100_batch128_middle_EMRL.png}  &
\includegraphics[width=0.25\textwidth]{EMRL/dataAug_CIFAR100_batch128_end_EMRL.png}
\\
\end{tabular}
\caption{Plots of empirical mean residual life for noises in dataAug, CIFAR 100\&VGG 11 Task throughout training}
\end{center}
\end{figure}

\subsection{Hill plots}

In the hill plots below, the estimated power-law indices using only the top 1\% of samples (on the left hand side of the dashed lines) stay well above 10 for the most part and almost never drop below 2. This strongly suggests that even if the gradient noises are from a heavy-tailed distribution, it is likely to have a very high power law index (implying relatively lighter tails), and hence, we cannot expect to observe a prominent heavy-tailed behavior from them.

\begin{figure}[h]
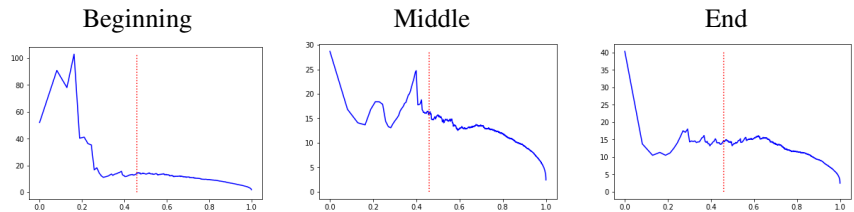

\begin{center}
\begin{tabular}{c c c}
Beginning & Middle & End \\
\includegraphics[width=0.25\textwidth]{Hill Plots/LeNet100_hill.png}  &
\includegraphics[width=0.25\textwidth]{Hill Plots/LeNet100_middle_hill.png}  &
\includegraphics[width=0.25\textwidth]{Hill Plots/LeNet100_end_hill.png}
\\
\end{tabular}
\caption{altHill Plots for noises in FMNIST\&LeNet Task throughout training. Dashed Red Line: Estimation based on the largest $1\%$ data}
\end{center}
\end{figure}

\begin{figure}[h]
\begin{center}
\begin{tabular}{c c c}
Beginning & Middle & End \\
\includegraphics[width=0.25\textwidth]{Hill Plots/SVHN100_hill.png}  &
\includegraphics[width=0.25\textwidth]{Hill Plots/SVHN100_middle_hill.png}  &
\includegraphics[width=0.25\textwidth]{Hill Plots/SVHN100_end_hill.png}
\\
\end{tabular}
\caption{altHill Plots for noises in SVHN\&VGG 11 Task throughout training. Dashed Red Line: Estimation based on the largest $1\%$ data}
\end{center}
\end{figure}

\begin{figure}[h]
\begin{center}
\begin{tabular}{c c c}
Beginning & Middle & End \\
\includegraphics[width=0.25\textwidth]{Hill Plots/CIFAR10_batch100_hill.png}  &
\includegraphics[width=0.25\textwidth]{Hill Plots/CIFAR10_batch100_middle_hill.png}  &
\includegraphics[width=0.25\textwidth]{Hill Plots/CIFAR10_batch100_end_hill.png}
\\
\end{tabular}
\caption{altHill Plots for noises in CIFAR 10\&VGG 11 Task throughout training. Dashed Red Line: Estimation based on the largest $1\%$ data}
\end{center}
\end{figure}

\begin{figure}[h]
\begin{center}
\begin{tabular}{c c c}
Beginning & Middle & End \\
\includegraphics[width=0.25\textwidth]{Hill Plots/dataAug_CIFAR10_batch128_hill.png}  &
\includegraphics[width=0.25\textwidth]{Hill Plots/dataAug_CIFAR10_batch128_middle_hill.png}  &
\includegraphics[width=0.25\textwidth]{Hill Plots/dataAug_CIFAR10_batch128_end_hill.png}
\\
\end{tabular}
\caption{altHill Plots for noises in Data Augmentation, CIFAR 10\&VGG 11 Task throughout training. Dashed Red Line: Estimation based on the largest $1\%$ data}
\end{center}
\end{figure}

\begin{figure}[!]
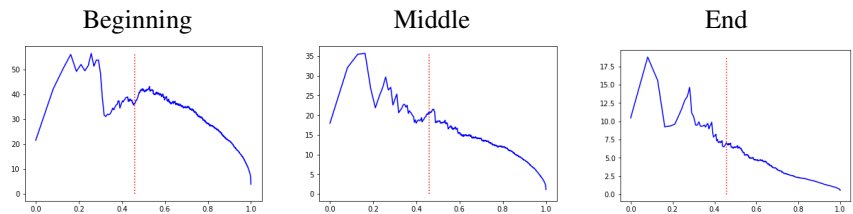

\begin{center}
\begin{tabular}{c c c}
Beginning & Middle & End \\
\includegraphics[width=0.25\textwidth]{Hill Plots/dataAug_CIFAR100_batch128_hill.png}  &
\includegraphics[width=0.25\textwidth]{Hill Plots/dataAug_CIFAR100_batch128_middle_hill.png}  &
\includegraphics[width=0.25\textwidth]{Hill Plots/dataAug_CIFAR100_batch128_end_hill.png}
\\
\end{tabular}
\caption{altHill Plots for noises in Data Augmentation, CIFAR 100\&VGG 16 Task throughout training. Dashed Red Line: Estimation based on the largest $1\%$ data}
\end{center}
\end{figure}


\

\begin{table}
  \caption{Power-law Indices Estimation throughout the Training, using PLFIT. All the estimations are at least 5 for all cases tested in our experiments, and most of the times the estimation is above 10. This means that even under the assumption that the gradient noises were from a heavy-tailed distribution, they should have much lighter tails than any $\alpha$-stable distribution (which requires $\alpha<2$) or the heavy-tailed noises we injected during tail inflation experiments ($\alpha = 1.4$). }
  \centering
  \begin{tabular}{llll}
    \toprule
   Task & Beginning & Middle & End   \\
    \midrule
    FMNIST, LeNet & 14.3 & 14.2 & 16.5 \\
    SVHN, VGG11 & 5.0 & 5.2 & 12.5 \\
    CIFAR10, VGG11 & 9.2 & 6.6 & 7.0 \\
    dataAug, CIFAR10, VGG11 & 16.2 & 16.2 & 8.5 \\
    dataAug, CIFAR100, VGG16 & 35.1 & 14.4 & 5.35 \\
    \bottomrule
  \end{tabular}
\end{table}

\end{document}